\definecolor{rouge1}{RGB}{226,0,38}  
\definecolor{orange1}{RGB}{243,154,38}  
\definecolor{jaune}{RGB}{254,205,27}  
\definecolor{blanc}{RGB}{255,255,255} 
\definecolor{rouge2}{RGB}{230,68,57}  
\definecolor{orange2}{RGB}{236,117,40}  
\definecolor{taupe}{RGB}{134,113,127} 
\definecolor{gris}{RGB}{91,94,111} 
\definecolor{bleu1}{RGB}{38,109,131} 
\definecolor{bleu2}{RGB}{28,50,114} 
\definecolor{vert1}{RGB}{133,146,66} 
\definecolor{vert3}{RGB}{20,200,66} 
\definecolor{vert2}{RGB}{157,193,7} 
\definecolor{darkyellow}{RGB}{233,165,0}  
\definecolor{darkblue}{rgb}{0,0,139}
\definecolor{lightgray}{rgb}{0.9,0.9,0.9}
\definecolor{darkgray}{rgb}{0.6,0.6,0.6}
\definecolor{blue900}{HTML}{0D47A1}
\definecolor{blue800}{HTML}{1565C0}
\definecolor{green1}{RGB}{217,255,250}
\definecolor{orange1}{RGB}{250,245,217}
\definecolor{crimson}{rgb}{0.86, 0.08, 0.24}
\definecolor{myblue}{rgb}{0.0, 0.44, 1.0}
\definecolor{amber2}{rgb}{1.0, 0.49, 0.0}
\colorlet{redp}{red!20} 
\colorlet{greenp}{vert3!20} 
\colorlet{bluep}{blue!12} 
\colorlet{blueg}{blue!70!green!90!white}
\definecolor{yellowp}{rgb}{0.93, 0.93, 0.42}
\colorlet{orangep}{orange!40}
\definecolor{vibrantBlue}{RGB}{0, 119, 187}
\definecolor{vibrantCyan}{RGB}{51, 187, 238}
\definecolor{vibrantTeal}{RGB}{0, 153, 136}
\definecolor{vibrantOrange}{RGB}{238, 119, 51}
\definecolor{vibrantRed}{RGB}{204, 51, 17}
\definecolor{vibrantMagenta}{RGB}{238, 51, 119}
\definecolor{vibrantGrey}{RGB}{187, 187, 187}
\definecolor{delayedcolor}{RGB}{0, 119, 187} 
\definecolor{undelayedcolor}{RGB}{204, 51, 17} 
\pgfplotsset{
    compat=1.15,
    Plotstyle/.style={grid=both,},
    DIDA/.style={vibrantBlue, mark=triangle*, mark options={fill=vibrantBlue!80,draw=vibrantBlue}, thick,},
    DIDAci/.style={vibrantBlue!40, opacity=0.3,},
    BCDIDA/.style={vibrantBlue, mark=pentagon*, mark options={fill=vibrantBlue}, thick,},
    BCDIDAci/.style={vibrantBlue!30, opacity=0.3,},
    MDIDA/.style={vibrantBlue!90, mark=*, mark options={fill=vibrantBlue!70,draw=vibrantBlue!90}, thick,},
    MDIDAci/.style={vibrantBlue!30, opacity=0.3,},
    DTRPO/.style={vibrantOrange, mark=triangle*, mark options={fill=vibrantOrange!80,draw=vibrantOrange}, thick,},
    DTRPOci/.style={vibrantOrange!40, opacity=0.3,},
    LTRPO/.style={vibrantOrange!90, mark=square*, mark options={fill=vibrantOrange!70,draw=vibrantOrange!90}, thick,},
    LTRPOci/.style={vibrantOrange!30, opacity=0.3,},
    ATRPO/.style={vibrantOrange!80, mark=diamond*, mark options={fill=vibrantOrange!60,draw=vibrantOrange!80}, thick,},
    ATRPOci/.style={vibrantOrange!20, opacity=0.3,},
    MTRPO/.style={vibrantOrange!70, mark=*, mark options={fill=vibrantOrange!50,draw=vibrantOrange!70}, thick,},
    MTRPOci/.style={vibrantOrange!10, opacity=0.3,},
    ASAC/.style={vibrantMagenta, mark=diamond*, mark options={fill=vibrantMagenta!80,draw=vibrantMagenta}, thick,},
    ASACci/.style={vibrantMagenta!40, opacity=0.3,},
    MSAC/.style={vibrantMagenta!90, mark=*, mark options={fill=vibrantMagenta!70,draw=vibrantMagenta!90}, thick,},
    MSACci/.style={vibrantMagenta!30, opacity=0.3,},
    SARSA/.style={vibrantGrey, mark=*, mark options={fill=vibrantGrey!80,draw=vibrantGrey}, thick,},
    SARSAci/.style={vibrantGrey!40, opacity=0.3,},
    dSARSA/.style={vibrantGrey!90, mark=pentagon*, mark options={fill=vibrantGrey!70,draw=vibrantGrey!90}, thick,},
    dSARSAci/.style={vibrantGrey!30, opacity=0.3,},
    General/.style={grid=both, grid style={line width=.1pt, draw=gray!10},
        Series/.style={black, mark=|, thick,},
        Histogram/.style={black, mark=|, fill=black!30, area legend, ybar interval, mark=no,},
    },
}
\pgfplotsset{compat=newest}
\tikzstyle{every picture}+=[remember picture]
\tikzstyle{na} = [baseline=-.5ex]
    \gdef\node@@on@layer{%
      \setbox\tikz@tempbox=\hbox\bgroup\pgfonlayer{#1}\unhbox\tikz@tempbox\endpgfonlayer\egroup}
\def\node@on@layer{\aftergroup\node@@on@layer}
\tikzset{
  latentnode/.style  ={draw,minimum width=3em, shape=circle,thick, black, fill=white, anchor=base, text height=2ex, text depth=0.75ex, },
  line/.style={draw, -latex'},
  line2/.style={draw, -latex', dotted},
  jump left/.style={draw=none, inner xsep=0pt},
  jump line/.style={line, shorten <=5pt}
}
\tikzset{fontscale/.style = {font=\relsize{#1}}
    }
\def\checkmark{\tikz\fill[scale=0.4](0,.35) -- (.25,0) -- (1,.7) -- (.25,.15) -- cycle;} 
\newcommand{\tikzmark}[1]{\tikz[remember picture,overlay]\coordinate (#1);}
\newcolumntype{T}[1]{@{\hspace{\tabcolsep}}c@{\hspace{\tabcolsep}\tikzmark{#1}}}
\newcolumntype{L}[1]{@{\hspace{\tabcolsep}}l@{\hspace{\tabcolsep}\tikzmark{#1}}}
\author{Pierre Liotet}
\title{Delays in Reinforcement Learning}
\begin{document}
\raggedbottom
\selectlanguage{english}
\maketitle


\chapter*{Acknowledgement}

Naturally, yet genuinely, I would like to express my heartfelt gratitude to my advisor, Prof. Marcello Restelli for his unwavering dedication and invaluable assistance; 
I admire how attentive and focused he his when listening to others.
Without him, this thesis would not exist. 

I would like to thank Prof. Konstantinos Katsikopoulos and Prof. Eitan Altman for their meticulous review and insightful feedback. 
I can shamelessly say that, thanks to them, I have deepened my understanding of delays until the revision phase of this thesis.

My appreciation also goes to my colleagues at Politecnico di Milano who taught me a lot, be it about research and or italian culture.  
A special thank to Luca, Lorenzo, Antonio and Edoardo with whom I have worked on many stimulating projects. 
A sincere acknowledgment to Mirco, my great office mate before the isolation imposed by Covid-19.
I would also like to thank particularly my co-authors at Politecnico, Alberto, Davide and all the master students with whom I collaborated.

Lastly, but with equal significance, I express my gratitude to all the individuals who, indirectly but with no less importance, contributed significantly to this thesis.
My parents, Marie and Stephane, who have always supported me, even when coming late to a thesis defense; Camille, who moved to Milan to support me during the PhD; Alice, I\~{n}igo and Iphig\'enie, with who I shared great moments in Milan.
A special thanks to Kevin, my closest friend, always available for a quick boost of support.
A very last but essential and heartfelt thank you to Alice, who shared the most memorable moments of these three years with me, who supported me, and who intuitively knew when the right moments were.

\cleardoublepage
\newpage

\pagestyle{fancy}
\setcounter{page}{1}
\pagenumbering{Roman}

\chapter*{Abstract}
 
Delays are inherent to most dynamical systems. 
Besides shifting the process in time, they can significantly affect their performance. 
For this reason, it is usually valuable to study the delay and account for it.
Because they are dynamical systems, it is of no surprise that sequential decision-making problems such as \glspl{mdp} can also be affected by delays.
These processes are the foundational framework of \gls{rl}, a paradigm whose goal is to create artificial agents capable of learning to maximise their utility by interacting with their environment.

\glspl{rl} has achieved strong, sometimes astonishing, empirical results, but delays are seldom explicitly accounted for. 
The understanding of the impact of delay on the \gls{mdp} is limited.
In this dissertation, we propose to study the delay in the agent's observation of the state of the environment or in the execution of the agent's actions.
We will repeatedly change our point of view on the problem to reveal some of its structure and peculiarities.
A wide spectrum of delays will be considered, and potential solutions will be presented.
This dissertation also aims to draw links between celebrated frameworks of the \gls{rl} literature and the one of delays.
We will therefore focus on the following four points of view. 

At first, we consider constant delays. Taking a psychology-inspired approach, we study the impact of predicting the near future in order to estimate the impact of the agent's actions.
We will highlight how this approach relates to models from the \gls{rl} literature. 
It will also be the occasion to formally demonstrate a seemingly evident fact: longer delays result in lower performances. 
The experimental analysis will conclude by showing the validity of the approach.

As a second point of view, we will consider the simple approach of imitating an undelayed expert behaviour in the delayed environment.
The delay will remain constant at first, but we will extend our study to more exotic types of delay, such as stochastic ones.
Although simple, we demonstrate the great theoretical guarantees and empirical results of the approach.

For our third perspective, again on constant delay, we consider adopting a non-stationary memoryless behaviour.
Although it seemingly ignores the delay, the approach treats the delay's effect as an unobserved variable that guides its non-stationarity.
Building on this idea, we provide a theoretically grounded algorithm for learning such behaviour that we test in realistic scenarios. 

Finally, our last point of view will consider a broader model than that of constant delay, which includes constant delays as a special case. 
This model will enable actions to affect multiple future transitions of the environment.
Its theoretical properties will be examined to understand its specificities.
Based on these properties, some \gls{rl} algorithms will be ruled out, while others will be tested in various empirical studies. 
As a more general model for delays, its understanding has implications for the constant delay frameworks of the previous chapters.

\selectlanguage{english}
\selectlanguage{english}

\tableofcontents
\cleardoublepage
\newpage

\setcounter{page}{1}
\pagenumbering{arabic}

\cleardoublepage

\chapter{Introduction}

These are great times for \glsfirst{ml} enthusiasts.
\glspl{ml}---the science of building learning machines---is outspringing algorithms at an ever-greater pace.
\gls{ml} is also gaining ground in our daily life, and even for someone far from the field, it is difficult not to notice it. 

To illustrate the scope and speed of this development, let me tell you the following personal story.
Little before starting the writing of this dissertation, I learnt about the new DALL-E \cite{ramesh2022hierarchical} algorithm, capable of generating high-resolution images from a user's prompt.
This came to me as a shocking realisation of what the \gls{ml} community was capable of, and so it was for most people on Twitter who took on to their best prompts. 
Apart from great memes and jokes, the promises and potential applications---as well as potential risks---of such technologies are beyond my imagination. 
After only a few weeks, I became aware that different research teams had produced similar algorithms, such as Stable Diffusion \cite{rombach2021highresolution} or Imagen \cite{saharia2022photorealistic}.
I thought I had the opportunity to see the birth of this new direction taken by \gls{ml}.
This was my belief until a few days ago when, while writing this dissertation, I came across Make-a-Video \cite{singer2022make}.
This algorithm can generate a high-resolution video from a user's prompt...
After only a few days, Imagen Video \cite{ho2022imagen}, the extension of Imagen to videos was also revealed. 
I have not had time to think about the image-generating algorithms' full potential that video-generating algorithms were born.
I cannot imagine what the next development will be in this field\footnote{This is not to mention ChatGPT, which was released while I corrected the thesis.}. 

The core subject of this dissertation, reinforcement learning (RL) \cite{sutton2018reinforcement}, is a subfield of \gls{ml} that is less known but also has its own remarkable achievements.
Let us now delve into this particularly vibrant sub-field. 

\section{The Reinforcement Learning Paradigm}

The main goal of \gls{rl} is to conceive of an \emph{agent} that can learn to maximise some notion of utility by interacting with its \emph{environment}. 
The interaction is modelled as a sequential decision problem. 
At each step, the agent can apply an \emph{action} in the environment.
As a consequence of this action and external factors, the environment changes to a new \emph{state} and provides the agent with a reinforcement signal.
This signal, called \emph{reward}, gives feedback on the quality of the action. 
The agent's goal is to maximise the cumulative sum of rewards-- or \emph{return}.
The sequential process followed by the environment, involving transition dynamics and rewards, is modelled by a \glsfirst{mdp}. 
To maximise its rewards, the agent must autonomously balance between exploration and exploitation.  
It must explore new areas of the \gls{mdp} to avoid missing some opportunities but should also consider exploiting known profitable parts of the \gls{mdp} to maximise its utility. 
The framework of \gls{rl} is inspired by discoveries from psychology on the human learning process. 
It is general enough to include many problems and in particular the other subfields of \gls{ml} that are \emph{supervised} and \emph{unsupervised learning}. 
It is also specific enough to yield meaningful theoretical results that guide the design of algorithms.

Empirically, the soundness of the framework is supported by the breakthrough that it has allowed. 
A limited sample of these achievements is:
robotic locomotion \cite{haarnoja2018learning};
autonomous driving \cite{kiran2021deep};
playing complex video games at master level \cite{vinyals2019grandmaster}; designing computer chips that perform more and consume less energy \cite{mirhoseini2021graph};
controlling the shape of a plasma in a tokamak \cite{degrave2022magnetic}.

However, most of these applications require that the original \gls{rl} framework be slightly extended to account for partial observability of the state, for example. 
Another limitation of the original framework could be the presence of delay in the dynamical system. 
In the following section, we explain why taking delay into account is the key to achieving great empirical results.
For so fast-paced \gls{ml} and \gls{rl} research can be, we propose to the reader to take the time to delve into the problem of delays.

\section{Delays in Reinforcement Learning}
\label{sec:why_delay_ns}

As we have previously stated, \gls{rl} and its \gls{mdp} model are a great framework for sequential decision-making.
However, as the reader knows, models are a simplification of real processes and have their own limits.
Looking at the literature on real-world applications of \gls{rl} gives hints on where the limit could fall.
Considering a real-world robot, \cite{mahmood2018setting} discovered that from a set of candidates, the two main obstacles to learning were the choice of an action space and the delay in state observation or action execution.
These delays affect the agent-environment interaction by either providing outdated information to the agent or postponing the agent's control on the environment.
This is a great argument to advocate for the study of delays.

We claim that delays are inherent to any sequential problem. 
The most ubiquitous type of delay faced by humans is the time required for the brain to process information coming from the body's sensors. 
For example, the time taken by motion information to communicate from photoreceptors to the higher-level visual area of the brain was estimated at about 100 ms \cite{de1991vernier}. 
In \gls{rl}, this delay is akin to the delay induced by the sensor acquisition time. 
The delay can also come from the time required for the agent to compute its next action given an observation of the environment.
For instance, the delay is exacerbated by using more complex policies such as \emph{deep neural networks}, a phenomenon observed in active flow control \cite{ren2021applying,mao2022active}.
Another example where delay naturally arises is trading. There, the speed at which information is collected, and trades are transmitted is incredibly important. 
In practice, the information takes different routes through the optical fibre, which results in asynchronous arrival times at distinct locations. 
These delays can be exploited to make a profit \cite{lewis2014flash}.
Moreover, by abstracting away the other agents on the market, the return of a single trading agent is negatively impacted by the delay \cite{wilcox1993effect}.
Experiments on foreign exchange (FX) using \gls{rl} agents confirm the performance drop after the introduction of delay \cite{liotet2022delayed}.
Other examples of delay impact include emergency breaking time of a tractor with a semitrailer \cite{bayan2010brake}; information available to computing nodes in parallel computing \cite{hannah2018unbounded}; control of a robotic arm \cite{mahmood2018setting}.
Even when the environment seemingly acts in a synchronous way, such as in the game of go, the delay still hides somewhere. 
In the documentary released on the journey to building AlphaGo \cite{silver2016mastering}, the algorithm that played against the world champion Mr. Lee Sedol, it took more than 30 seconds for AlphaGo to compute its first move. 
The game has an overall limited time, and too long an action computation could be harmful. 
In this case, the delay was not harmful as AlphaGo still had plenty of time left and eventually won against Mr. Lee Sedol.

In the simulation, the delay could be artificially removed, yet it may take away a great element of realism.
In a game such as Atari 2600, the delay drastically impacts the performance of \gls{rl} agents \cite{firoiu2018human}. 
A solution could be to simply remove the delay by pausing the environment while the agent selects its action.
However, \cite{firoiu2018human} argue that since most games show images at a frequency of 60Hz and since studies have found that the reaction time of some human populations can average 250 ms \cite{jain2015comparative}, not modelling the delay for an artificial agent gives them an unfair advantage over humans.

Many related fields also consider the problem of delays.
In \emph{control theory} for example, \cite{chung1995time, dugard1998stability, niculescu2001delay, gu2003survey, richard2003time, fridman2014introduction} where the delay has intricate effects on the stability of the system \cite{kolmanovskii1999delay}.
In particular, this field has considered the problem of congestion control where data packets are sent to a network from many sources \cite{jacobson1988congestion}.
In this problem, delay naturally arises as the packets take different routes from different sources \cite{witsenhausen1971separation, altman1999congestion,ait2003stability}.
In \emph{online learning}, receiving delayed feedback can slow the learning process \cite{joulani2013online,pike2018bandits,lancewicki2021stochastic}.
In \emph{real-time} \gls{rl}, the interaction between the agent and its environment is no longer synchronous--when the environment ``waits'' for the agent to select an action--but instead becomes asynchronous.
Practically, the agent must compute its next action while the environment keeps evolving, and it is not guaranteed that once computed, the action will remain relevant for the new state of the environment.
Interestingly, this subfield has been shown to be equivalent to delayed \gls{rl} \cite[Theorem~1]{ramstedt2019real}.
The interest in this field \cite{hester2013texplore,caarls2015parallel,travnik2018reactive,ramstedt2019real,xiao2020thinking} is therefore an argument for more interest in delay. 

For all these reasons, we believe that delay is a research direction that is central to the practical application of \gls{rl}. Furthermore, theoretical questions raised by the introduction of a delay in the traditional framework are also of interest, as they are related to many subfields of \gls{rl}, such as \glsfirst{pomdp} or real-time \gls{rl}.

\section{Original Contribution}

The dissertation focuses on two types of delay that are peculiar to the \gls{rl} setting: state observation delay and action execution delay. 
Our goal is to broaden the understanding of such delays, as well as to provide efficient solutions. 
A particular focus is placed on constant delays that are ubiquitous in the literature and are a good approximation of the real delay in many cases.
However, we will repeatedly study the theoretical and empirical impact of extending the framework to more original types of delay.
The contributions of this thesis follow four objectives: providing a unifying framework for delays in \gls{rl}; giving a theoretical ground for better understanding delays; designing theoretically grounded and efficient algorithms to tackle delayed environments and providing an extensive empirical evaluation of the proposed algorithms, as well as algorithms from the literature.
In the following, the contributions for each of these objectives will be detailed before referring the reader to the publications that I have co-authored and where these contributions can be found.

\subsubsection{Unifying Framework for Delays}
In this dissertation, we provide an extensive review of the literature on delays, in \gls{rl} and related fields.
The focus is on state observation and action execution delay in \gls{rl} but other types of delay are also discussed.
Notably, we gather algorithms from the literature into three categories.
The first category is the \emph{augmented state} approach. 
The idea is to augment the last observed state with the sequence of actions that have been selected by the agent, but whose outcome has not been observed yet.
Using this new state allows one to cast the delayed problem back to the traditional undelayed \gls{mdp} framework.
The second category is the \emph{memoryless} approach, where the agent ignores the delay and uses only information about the last observed state.
The idea is simple, yet it can be efficient in practice.
Finally, the \emph{model-based} approach learns a model of the environment to predict the state of the environment at which the actions will apply.
To the best of our knowledge, there are no previous works providing an overview of delays in \gls{rl}. 
We believe that providing this information is beneficial for the field, as it is not uncommon to find overlapping results in the literature.

In addition, we propose a unified mathematical framework for the problem of delay.
This proposition follows the same line of thought that the delayed literature would benefit from more structure.
Our framework contains all the most common types of delay as a particular case and naturally interlocks with the \gls{mdp} model.

\subsubsection{Theoretical Understanding of Delays}
First, we analyse the theoretical effect of delayed state observation or action execution on the \gls{mdp}.
Our first result is a formal demonstration of a seemingly evident, yet left unproved, result: the longer the delay, the lower the maximum attainable cumulative reward--or return.
In the same vein, we show how the delay affects the variability of returns that the set of policies can achieve. 
A longer delay reduces the range of return that the agent can get, but also reduces the variance of its return against repeated learning of the same task. 
Then, we extend the constant delay framework to a multiple-action delay one, where a single action can impact several transitions in the future. 
This framework is interesting from a theoretical perspective, as it contains the constantly delayed process as a particular case. 
Moreover, it naturally extends the higher order \glspl{mc} model to \glspl{mdp}.
We show that multiple action-delayed processes can be cast back to an \gls{mdp} by state augmentation, similarly to constantly delayed processes. 
Then we study the properties and peculiarities of this framework with respect to the traditional \gls{mdp}. 
In particular, this allows us to underline the connection between \gls{mdp} of higher order and \gls{mdp} with delay.
Notably, some properties, such as being unichain, are shown to not transfer from the underlying \gls{mdp} to the delayed process built on it, while others do, such as communicating. 

Second, we shift the focus to delayed \gls{rl} algorithms and study their guarantees. 
We analyse the approaches to delays, which are the augmented state, the memoryless, and the model-based approaches.
It is known that the augmented state approach allows one to find the optimal policy for the delayed problem. 
Instead, we show that the model-based approach can be too restrictive and discard the optimal policy.
However, in the context of a smooth \gls{mdp}--concept that will be defined in the dissertation--we demonstrate that model-based approaches provide a nearly optimal policy.
We extend the previous guarantees to non-integer delays. 
Non-integer delays are reminiscent of the asynchronous environment hypothesis of real-time \gls{rl}. 
The agent sees the state at a given frequency, whereas it can only act with a phase change. 
Finally, we derive performance bounds for the case where the model-based policy has been trained on a constant delay while it is tested on stochastic delays.

\subsubsection{Algorithms for Delayed Reinforcement Learning}
On the algorithmic side, we first explore model-based approaches, where we design an algorithm capable of learning a vectorial representation of the probability distribution--or belief--of the current state.
To learn a representation of the belief, a first neural network, a Transformer, processes the augmented state and produces a representation for each future state up to the undelayed one.
This representation is trained to correspond to the belief by another network that fits the distribution, a normalising flow network. 
This representation can be plugged into an \gls{rl} algorithm to increase performance and boost learning speed compared to using the augmented state directly.
The effect is even clearer in stochastic environments.
Moreover, we provide a straightforward approach for utilising this algorithm, which has been pre-trained on a specific delay, to adapt to shorter delays.

Then we explore a different approach to learn a policy by imitating the behaviour of an undelayed expert. 
Practically, the delayed agent is trained to replicate the actions selected by the undelayed agent given the current state, but using delayed information only--the augmented state.
We show how to take advantage of the theoretical guarantees obtained for smooth \glspl{mdp} together with the theoretical results from imitation learning to guide the design of the algorithm.
We show empirically that this approach yields state-of-the-art performance and is versatile enough to apply to constant, non-integer and stochastic delays. 

As a third approach, we explore memoryless approaches.
In this case, the problem becomes inherently non-stationary. 
Therefore, we design a theory on how to adapt to future non-stationarity by optimising a probabilistic lower bound of the estimation of future performance.
This estimation of the future performance is based on multiple importance sampling to account for the variety of past dynamics due to non-stationarity.
Under smooth non-stationary, the bias of this estimator can be bounded.
To further account for the stochasticity of the process, a lower bound of the estimation is obtained using a concentration inequality. 
This involves an upper bound of the variance of the previous estimator. 
Interestingly, controlling for the variance of the estimator indirectly constrains the non-stationarity of the policy.
This prevents overfitting past non-stationary dynamics and better generalisation on future ones.
The algorithm in practice is implemented as a hyper-policy taking time as input and outputting the policy to be queried at each time.
Experiments show how this algorithm learns about the non-stationarity of the process to adapt to it, including when the non-stationarity is caused by delays.

\subsubsection{Extensive Empirical Evaluation}
Our last contribution is an extensive empirical evaluation of the algorithms proposed in this dissertation, as well as algorithms from the literature.
We design a wide range of test tasks, from balancing a pendulum to trading currency rates.
In these tasks, we vary the delay in length and nature to study the robustness of the algorithms. 
We consider constant, non-integer, stochastic, and multiple action delays. 
These empirical evaluations are useful for understanding the peculiarities of the algorithms and their preferred settings.

\subsubsection{Reference to Publications}
The aforementioned contributions can be found in the following papers. 
In \cite{liotet2021learning}, we present the model-based approach and analyse how model-based approaches may be too restrictive. 
For this work, I designed the approach, performed the theoretical analysis, and conducted most of the experimental analysis.
The imitation learning approach and analysis of the delay in smooth \glspl{mdp} can be found in \cite{liotet2022delayed}.
I have done part of the theoretical analysis for constant delays and done the extension to non-integer and stochastic delays. I have also conducted most of the experimental analysis.
The non-stationary memoryless approach comes from \cite{liotet2022lifelong}.
In this work, I conducted most of the theoretical analysis, part of the design of the approach, and all the empirical study. 
Finally, the results concerning the extension to multiple action delay and the analysis of the impact of changing the delay's distribution on the performance of the agent are the ground for a paper to be submitted. 
For this work, I have done all the theoretical and empirical analyses.

\section{Overview}
The first two chapters of this dissertation will focus on introducing and defining general concepts in \gls{rl} as a whole and in the delayed literature in particular.

\begin{itemize}
    \item \Cref{chap:prelim} introduces the main mathematical tools and algorithmic material used in the following chapters. 
    In particular, the \gls{mdp} model is introduced as well as the general framework of  \gls{rl}. 
    \item \Cref{chap:delay} will delve into the problem of delays as an extension of the original \gls{rl} framework described in the first chapter.
    Notations and terminology related to this topic will be introduced here.
    It will be followed by an extensive bestiary of the delays encountered in the literature or in practical applications.
    Finally, the literature on delays in \gls{rl} and related fields will be discussed. 
\end{itemize}

\noindent These introductory chapters are followed by four chapters where the original contribution of this dissertation is exposed. Each chapter will share the same structure, which will be as follows. A first section will introduce the problem formally and detail our proposed solution. The second section provides a theoretical analysis to better grasp the specificities of the problem and the properties of the solution.
Lastly, a third section will support the claims of the first two sections with a thorough empirical analysis. 

\begin{itemize}
    \item \Cref{chap:belief_based} focuses on the constant delay in state observation and action execution. A probabilistic approach called \emph{belief representation network} will be proposed in which the agent predicts its near future to account for the delay. 
    The theoretical analysis will provide a better understanding of the problem posed by this type of delay and its negative impact on performance.
    The content of this chapter can be found in \cite{liotet2021learning}.
    \item \Cref{chap:imitation_undelayed} places itself in the same framework as in the previous chapter. 
    A simple solution--{DIDA}--is proposed where the delayed agent learns to imitate an undelayed expert. 
    Despite its simplicity, we demonstrate great theoretical guarantees for DIDA in smooth environments. 
    The solution is extended to new frameworks, including non-integer and stochastic delays.
    The content of this chapter is inspired by \cite{liotet2022delayed}.
    \item \Cref{chap:lifelong} is again set in the same constant delay framework and proposes yet another approach to it.
    The solution considers a memoryless agent, i.e., blind to the delay.
    From the point of view of the agent, the process becomes non-stationary and we, therefore, propose to design a non-stationary policy to adapt to the evolving dynamics. 
    We do so by optimising an estimate of the agent's future performance and call this approach POLIS.
    This chapter provides theoretical ground for the design of POLIS, by studying in particular the bias and variance of the estimator of the future performance. 
    The content of this chapter is taken from \cite{liotet2022lifelong}.
    \item \Cref{chap:multi_action_delay} considers a delay framework that includes the constant delay one.
    In it, the execution of an action can be spread over multiple future steps, thus creating a multiple execution delay.
    The new setting's properties are thoroughly analysed, and a particular focus is put on understanding the effect of the delay's distribution on the best performance the agent could get. 
    The material in this chapter has not been published in any other location.
\end{itemize}

Finally, \Cref{chap:conclusion} concludes the dissertation by recalling the main achievements and takeaways, as well as limitations.
Importantly, for each result, the reader will be directed to the part of the dissertation where the result is presented. 
This chapter will also be an opportunity to discuss potential research directions.

Further proofs, experiment details and empirical results can be found in the appendix, in \Cref{app:proofs_results}.
\cleardoublepage
\chapter{Preliminaries}
\label{chap:prelim}

\section{Introduction}
In this chapter, we introduce important concepts that will be used throughout the thesis. 
First, the concept of \glsdesc{mc} is introduced in \Cref{sec:markov_chain}. 
It will be useful to understand the motivation of the following concept to be presented, \glsdesc{mdp} (\Cref{sec:mdp}).
Building on this framework, \gls{rl} and its main algorithms will be described in \Cref{sec:rl}.
Finally, further important concepts for the thesis, including importance sampling and neural networks, will be briefly introduced (\Cref{sec:futher_prelim}).

\section{Markov Chains}
\label{sec:markov_chain}
A \glsfirst{mc} \cite{levin2017markov} is a discrete-time stochastic process that describes the evolution of a state inside a space $\augmentedstatespace$ called the state space.
At each step, the probability of transition from a state $x\in\augmentedstatespace$ to a state $y\in\augmentedstatespace$ is given by the transition function $\markovchaintransition$ as $\markovchaintransition(y\vert x)$.

The properties of \glspl{mc} have been extensively studied in the literature, but are beyond the scope of this dissertation. 
We simply provide the following three definitions that will be useful in this work.
In the next definition, we denote by $X_t$ the random variable of the current state.

\begin{defi}[Recurrent state]
    A state $x\in\augmentedstatespace$ is recurrent if
    \begin{align*}
        \probability(\min\{t\ge1: X_t=x\}<\infty\vert X_0=x)=1
    \end{align*}
\end{defi}

\begin{defi}[Transient state]
    A state $x\in\augmentedstatespace$  that is not recurrent is transient.
\end{defi}

\begin{defi}[Unichain \gls{mc}]
\label{def:unichain_mc}
    A finite-state \gls{mc} is said to be unichain if it contains a single
    recurrent class, but can contain any number of transient states.
\end{defi}

The framework of \glspl{mc} encompasses many realistic processes. 
Yet, in some cases, one could wish to express the dependency on a set of older states for the current transitions. 
This is exactly what higher-order Markov chains--or Markov chains with memory--allow. 
In this framework, the transition can depend upon several past states.

\subsection{Higher-Order Markov Chains}
A \gls{mc} of order $l$ defines the transition to a state $x_t$ from the knowledge of the last $l$ states $(x_{t-l},\dots,x_{t-1})$ as,
\begin{align*}
    \probability(X_t = x_t\vert X_{t-l}=x_{t-l},\dots,X_{t-1}=x_{t-1})=P(x_t\vert x_{t-l},\dots,x_{t-1}).
\end{align*}

However, this model introduces many parameters to construct the transition probability.
Indeed, for a state space of size $\cardinal{\augmentedstatespace}=m$, the set $(x_{t-l},\dots,x_{t-1})$ can assume $m^l$ different values, and therefore probability $P$ needs $m^l(m-1)$ independent parameters to be defined \cite{berchtold2002mixture}.
Instead, \cite{raftery1985model} proposes a model with $m(m-1) + (l-1)$ independent parameters that retains a good modelling capacity in practice. 
This model, the \gls{mtd}, is presented below.
 
\begin{defi}[\Glsfirst{mtd} {\cite{raftery1985model}}]
\label{def:mixture_transition_distribution}
    An \gls{mtd} of order $l\in\naturalnumbers$ with state space $\augmentedstatespace=\{1,\dots,m\}$ is a \gls{mc} of order $l$ with the following properties.
    \begin{align*}
        \probability(X_t = x_t\vert X_{t-l}=x_{t-l},\dots,X_{t-1}=x_{t-1})
        &= \sum_{g=1}^l \lambda_g \probability(X_t=x_t\vert X_{t-1}=x_{t-g})
        \\
        &= \sum_{g=1}^l \lambda_g \markovchaintransition(x_t\vert x_{t-g}),
    \end{align*}
    where the probabilities $\markovchaintransition(x_t\vert x_{t-g})$ are defined as the transition of some \gls{mc} of order 1 and where
    \begin{align*}
        \sum_{g=1}^l \lambda_g &=1,
        \\
        \lambda_g &\ge 0.
    \end{align*}
\end{defi}

The model can be extended by introducing extra independent parameters to allow more modelling power.
This new model, with $lm(m-1) + (l-1)$ independent parameters, is given below.

\begin{defi}[\Glsfirst{mtdg} {\cite{raftery1985new,berchtold1996modelisation}}]
\label{def:multimatrix_mixture_transistion_distribution}
    The \gls{mtdg} model is an \gls{mtd} model with a modified transition matrix, 
    \begin{align*}
        \probability(X_t = x_t\vert X_{t-l}=x_{t-l},\dots,X_{t-1}=x_{t-1})
        &= \sum_{g=1}^l \lambda_g \probability(X_t=x_t\vert X_{t-g}=x_{t-g})
        \\
        &= \sum_{g=1}^l \lambda_g \markovchaintransition^{(g)}(x_t\vert x_{t-g}),
    \end{align*}
    where the transition probabilities $(\markovchaintransition^{(g)})_{g\in[\![1,l]\!]}$ come from the transition probabilities of $l$ different \glspl{mc} of order 1.
\end{defi}

However, it has been shown that the \gls{mtdg} model is over-parameterised \cite{lebre2008algorithm}.
Indeed, two different sets of parameters could define the same \gls{mtdg} model.

\section{Markov Decision Processes}
\label{sec:mdp}

The model of \glsfirst{mdp} \cite{puterman1994markov} can be thought of as an \gls{mc} whose transition is affected by the execution of an action.
Typically, an agent is responsible for selecting this action and can therefore interact with the \gls{mdp}.
In addition, a reward function is associated with the transitions in the process and provides feedback to the agent.

Formally, the \gls{mdp} is defined as a 5-tuple $\markovdecisionprocess = (\statespace,\actionspace, \transitionfunction, \rewardfunction, \initialstatedistribution)$: $\statespace $ and $\actionspace$ are measurable sets of states and actions, respectively; $\transitionfunction(s'\vert s,a)$ is the probability of transitioning from a state $s$ to a state $s'$ by performing an action $a$; $R(s,a)$ is a random variable that counts the reward collected by the agent during the transition from state $s$ where action $a$ is applied; $\initialstatedistribution$ is the distribution of the initial state of the agent. 
We note $r(s,a)=\expectedvalue[R(s,a)]$ and make the following assumption throughout this dissertation.

\begin{ass}[Bounded reward]
    The reward is said to be bounded if, $\forall (s,a)\in\statespace\times\actionspace$,
    \begin{align*}
        0\le r(s,a)\le \maximumreward.
    \end{align*}
\end{ass}


\subsection{Policies}
\label{subsec:policies}
The way an agent selects its actions given its history of interaction with the environment at time $t$,  $h_t=(s_0,a_0,\dots,s_{t-1},a_{t-1},s_t)$ is called a \emph{policy}. 
If we note $\mathcal{H}_t$ the set of all histories at time $t$, then we can formally define the agent policy as a mapping from the set of histories to the set of probabilities over the action space,
\begin{align*}
    \pi:\historyspace_t\rightarrow\setofprobability\left(\actionspace\right).
\end{align*}
We follow the notation of \cite{puterman1994markov} and note the set of all policies $\Pi^{\text{HR}}$ where "$R$" stands for randomised and "$H$" for history-dependent.

Different subsets of $\Pi^{\text{HR}}$ are notable for their properties. 
The subset of Markovian policies considers policies that only depend upon the last observed state. 
This is a convenient property as it forbids the input to the policy to grow in dimension with time.
It has also been shown to contain optimal policies for the objective that we will introduce later \cite[Theorem~6.2.10 and Theorem~8.1.2]{puterman1994markov}.
We use the superscript "$M$" to refer to Markovian policies.
When the policy is fixed--but potentially stochastic--over time, we call the policy stationary and use the superscript "$S$".
Lastly, policies can also be deterministic when the probability measure on $\actionspace$ degenerates to concentrate all the mass on a single action. 
This subset is marked with the letter "$D$".
To summarise, these sets are ordered as follows \cite[Section 2.1.5]{puterman1994markov}:
\begin{align*}
    \begin{array}{cccc}
         \Pi^{\text{SD}}&\subset\Pi^{\text{SR}}&\subset\Pi^{\text{MR}}&\subset\Pi^{\text{HR}}  \\
         \Pi^{\text{SD}}&\subset\Pi^{\text{MD}}&\subset\Pi^{\text{MR}}&\subset\Pi^{\text{HR}}\\
         \Pi^{\text{SD}}&\subset\Pi^{\text{MD}}&\subset\Pi^{\text{HD}}&\subset\Pi^{\text{HR}}.
    \end{array}
\end{align*}

\subsection{Objective}
\label{subsec:objective}
Three main objectives can be defined for an \gls{mdp}: the expected total return, the expected discounted return and the average reward \cite{puterman1994markov}.
In this thesis, we will focus on the last two types of returns.
For some \gls{mdp} $\markovdecisionprocess$, let $\probability\left(S_t=s; \pi,\initialstatedistribution\right)$ be the probability that the random variable representing the state at time $t$, $S_t$, will assume value $s$, following policy $\pi$ and given the initial state distribution $\initialstatedistribution$. 
For some policy $\policy$, we can define the expected discounted return with discount factor $\gamma$ as
\begin{align}
\label{eq:discounted_reward}
    \expectedreturn[\gamma] = \expectedvalue_{\substack{s_{t+1}\sim p(\cdot\vert s_t,a_t)\\a_t\sim\pi(\cdot\vert s_t)\\s_0\sim \initialstatedistribution}}\left[\sum_{t=1}^H \gamma^t r(s_t,a_t) \right];
\end{align}
where $H$ is the horizon that can potentially be infinite. 
Instead, the average reward performance reads, \footnote{We keep the notation $H$ instead of the more usual $T$ in order to match the notation of the discounted case and we will reserve $T$ to refer to the total number of steps, summed over episodes.}
\begin{align*}
    \averagereturnfunction = \lim_{H\rightarrow \infty}\frac{1}{H}\expectedvalue_{\substack{s_{t+1}\sim p(\cdot\vert s_t,a_t)\\a_t\sim\pi(\cdot\vert s_t)\\s_0\sim \initialstatedistribution}}\left[\sum_{t=1}^H r(s_t,a_t) \right].
\end{align*}

The expectations are taken with respect to the initial state distribution $d_0$ and the transition distribution induced by the policy.

\subsection{State-Action Occupancy Measure and Distribution}
\label{subsec:def_state_distrib}

Having some measure or distribution over the state visited by an agent is useful from a theoretical point of view. 
In fact, the performance of a policy can be directly deduced from these quantities.
We first define the state-action occupancy distribution for the discounted return criterion, 
\begin{align}
\label{def:discounted_state_distrib}
    \discountedstateoccupancydistribution[\gamma] (s,a)= (1-\gamma)\expectedvalue_{\substack{s_{t+1}\sim p(\cdot\vert s_t,a_t)\\a_t\sim\pi(\cdot\vert s_t)\\s_0\sim \initialstatedistribution}}\left[\sum_{t=1}^H \gamma^t \Ind\left(S_t=s, A_t=a\right)\right].
\end{align} 
Instead, the average reward's state-action occupancy distribution reads,
\begin{align*}
    \averagestateoccupancydistribution (s,a)= \lim_{H\rightarrow \infty}\frac{1}{H}\expectedvalue_{\substack{s_{t+1}\sim p(\cdot\vert s_t,a_t)\\a_t\sim\pi(\cdot\vert s_t)\\s_0\sim \initialstatedistribution}}\left[\sum_{t=1}^H \Ind\left(S_t=s, A_t=a\right)\right].
\end{align*}
From the state-action distributions, the state distributions can be derived as follows,
\begin{align*}
    \discountedstateoccupancydistribution[\gamma] (s)&=\int_\actionspace \discountedstateoccupancydistribution[\gamma] (s,a) \de a,
    \\
    \averagestateoccupancydistribution (s) &= \int_\actionspace\averagestateoccupancydistribution (s,a)\de a.
\end{align*}
Note that we have kept the same notation, but the input to the function avoids confusion. 
To further simplify the notation, we will sometimes drop the subscript ``$\gamma$'' or ``AVG'' when no confusion about the objective can be made.

Finally, we define the state-action occupancy measure \cite{laroche2022non} in the discounted case,
\begin{align*}
    \discountedstateoccupancymeasure[\gamma] (s,a) &= \expectedvalue_{\substack{s_{t+1}\sim p(\cdot\vert s_t,a_t)\\a_t\sim\pi(\cdot\vert s_t)\\s_0\sim \initialstatedistribution}}\left[\sum_{t=1}^H \gamma^t \Ind\left(S_t=s, A_t=a\right)\right]
    \\
    &= \frac{\discountedstateoccupancydistribution[\gamma] (s,a)}{1-\gamma},
\end{align*}
and for the average reward,
\begin{align*}
    \averagestateoccupancymeasure[\pi] (s,a) = \lim_{H\rightarrow \infty}\expectedvalue_{\substack{s_{t+1}\sim p(\cdot\vert s_t,a_t)\\a_t\sim\pi(\cdot\vert s_t)\\s_0\sim \initialstatedistribution}}\left[\sum_{t=1}^H \Ind\left(S_t=s, A_t=a\right)\right].
\end{align*}
The discounted return of a policy can be deduced from the state-action occupancy measure as follows \cite[Lemma~1]{laroche2022non},
\begin{align*}
    \expectedreturn[\gamma] &= \int_{\statespace,\actionspace} r(s,a)d\discountedstateoccupancymeasure[\gamma]  (\de s,\de a).
\end{align*}

\subsection{Value Functions}
An important concept for quantifying the performance of a policy in \gls{rl} is that of \emph{value functions}. 
In the discounted case, the state-action value function is the expected discounted sum of rewards that the agent will collect starting in some state $s$, applying the action $a$ first and following the policy $\pi$ thereafter. 
It reads,
\begin{align}
    Q^\pi(s,a) = \expectedvalue_{\substack{s_{t+1}\sim p(\cdot\vert s_t,a_t)\\a_t\sim\pi(\cdot\vert s_t)}}
    \left[\left.\sum_{t=0}^H \gamma^t r(s_t,a_t)\right\vert s_0=s, a_0=a\right].\label{eq:state_action_value_function}
\end{align}
This function is called \emph{Q-function}.
Naturally, it is also possible to define a state value function or \emph{value function} for short. 
It is the expected discounted sum of reward that the agent will collect from starting in some state $s$ and immediately following the policy $\pi$.
It is defined from the Q-function as
\begin{align*}
    V^\pi(s)=\expectedvalue_{a\sim\pi(\cdot\vert s)}[Q^\pi(s,a)].
\end{align*}

A useful concept to compare the effect of an action is the advantage function $\advantagefunction$.
It quantifies the advantage obtained by deviating from a given policy only for the next step. 
Formally,
\begin{align}
\label{eq:advantage_function}
    \advantagefunction(s,a) = Q^\pi(s,a) - V^\pi(s)
\end{align}

In \gls{rl}, we are interested in the maximum value that the state-action and state-value functions can reach. 
The optimal state and state-action value functions are 
\begin{align*}
    \optimalstatevaluefunction (s)&= \sup_{\pi\in\Pi^{\text{HR}}}V^\pi(s),
    \\
    \optimalstateactionvaluefunction (s,a)&= \sup_{\pi\in\Pi^{\text{HR}}}Q^\pi(s,a).
\end{align*}
The policy $\pi\in\Pi^{\text{HR}}$ that achieves these values is noted $\optimalpolicy$. 
Note that there exists a policy in $\Pi^{\text{SR}}$ which is optimal  \cite[Theorem~6.2.10]{puterman1994markov}. 
Therefore, we can restrict the search for an optimal policy accordingly. 

\subsection{Bellman Equations}
A core concept for finding an optimal value function in \glspl{mdp} is \emph{dynamic programming} \cite{bellman1954theory}. 
Its idea is to break down a problem into smaller problems.
Applied to \gls{rl}, the problem of computing the value function at any state is decomposed into recursively finding the value function at each step, as a function of future value functions. 
This is possible thanks to the \emph{Bellman expectation operators} \cite{bellman1966dynamic}.

\begin{defi}[Bellman Expectation Operators]
    Consider an \gls{mdp} and a policy $\pi\in\Pi^{\text{SR}}$ and let $\mathcal{B}(\statespace)$ be the set of bounded measurable on $\statespace$. 
    The Bellman expectation operator for the state value function $\bellmanoperator_V:\mathcal{B}(\statespace)\mapsto\mathcal{B}(\statespace)$ is defined for $f\in\mathcal{B}$ and $s\in\statespace$ as follows,
    \begin{align}
    \label{eq:bellman_operator_V}
        (\bellmanoperator_V f)(s) = \int_{\actionspace}\pi(\de a\vert s)\left(r(s,a) + \gamma  \int_\statespace  \transitionfunction(\de s'\vert s,a) f(s')\right).
    \end{align}
    Similarly, the Bellman expectation operator for the state-action value function $\bellmanoperator_Q:\mathcal{B}(\statespace\times\actionspace)\mapsto\mathcal{B}(\statespace\times\actionspace)$ is defined for $f\in\mathcal{B}$ and $s\in\statespace$, $a\in\actionspace$ as follows,
    \begin{align}
    \label{eq:bellman_operator_Q}
        (\bellmanoperator_Q f)(s,a) = r(s,a) + \gamma  \int_\statespace  \transitionfunction(\de s'\vert s,a)\int_\actionspace \pi(\de a'\vert s')f(s',a').
    \end{align}
\end{defi} 

These operators are $\gamma$-contractions for the $L_\infty$-norm \cite[Proposition~6.2.5]{puterman1994markov} if $\gamma<1$.
Recall that a $\gamma$-contraction in the $L_\infty$-norm is an operator $T:X\mapsto X$ and $f,g\in X$, $T$ such that,
\begin{align*}
    \lVert Tf - Tg\rVert_\infty\le\gamma\lVert f-g\rVert_\infty.
\end{align*}
These properties of the Bellman expectation operators place them under the conditions of the Banach fixed-point theorem. 
It implies that their recursive application converges to a unique fixed point.
The Q-function is the fixed-point for \Cref{eq:bellman_operator_Q} and the state value function for \Cref{eq:bellman_operator_V}.
From there, the \emph{Bellman expectation equations} are defined:
\begin{align}
    Q^\pi &= r(s,a) + \gamma  \int_\statespace  \transitionfunction(\de s'\vert s,a)V^\pi(s'),
    \label{eq:bellman_equation_Q}\\
    V^\pi &= \int_\actionspace \pi(\de a\vert s) Q^\pi(s,a).
    \label{eq:bellman_equation_V}
\end{align}

Interestingly, the optimal state-action and state value function are also fixed points of similar operators called \emph{Bellman optimality operators} and defined hereafter. 

\begin{defi}[Bellman Optimality Operators]
    Consider an \gls{mdp} and let $\mathcal{B}(\statespace)$ be the set of bounded measurable functions on $\statespace$. 
    The Bellman optimality operator for the state value function $\optimalbellmanoperator_V:\mathcal{B}(\statespace)\mapsto\mathcal{B}(\statespace)$ is defined for $f\in\mathcal{B}$ and $s\in\statespace$ as follows,
    \begin{align}
    \label{eq:bellman_optimal_operator_V}
        (\optimalbellmanoperator_V f)(s) = \sup_{a\in\actionspace}\left\{r(s,a) + \gamma  \int_\statespace  \transitionfunction(\de s'\vert s,a) f(s')\right\}.
    \end{align}
    Similarly, the Bellman expectation operator for the state-action value function $\optimalbellmanoperator_Q:\mathcal{B}(\statespace\times\actionspace)\mapsto\mathcal{B}(\statespace\times\actionspace)$ is defined for $f\in\mathcal{B}$ and $s\in\statespace$, $a\in\actionspace$ as follows,
    \begin{align}
    \label{eq:bellman_optimal_operator_Q}
        (\optimalbellmanoperator_Q f)(s,a) = r(s,a) + \gamma  \int_\statespace  \transitionfunction(\de s'\vert s,a)\sup_{a'\in\actionspace} \left\{f(s',a')\right\}.
    \end{align}
\end{defi} 

These operators are also $\gamma$-contractions \cite[Proposition~6.2.4]{puterman1994markov} and respectively admit the optimal state value function and the optimal state-action value function as fixed-points from the Banach fixed-point theorem \cite[Proposition~6.2.5]{puterman1994markov}. 

Similarly, the optimal value functions satisfy the following \emph{Bellman optimality equations}:
\begin{align}
    Q^\star(s,a) &= r(s,a) + \gamma  \int_\statespace  \transitionfunction(\de s'\vert s,a)V^\star(s'),
    \label{eq:bellman_optimal_equation_Q}\\
    V^\star(s) &= \sup_{a\in\actionspace} Q^\star(s,a).
    \label{eq:bellman_optimal_equation_V}
\end{align}

\subsection{Smooth Markov Decision Process}
\label{subsec:smooth_mdp}

Considering the whole class of \glspl{mdp} implies considering some very intricate problems that can be difficult to solve efficiently.
Making assumptions to obtain a restricted set of \glspl{mdp} can drastically reduce the complexity while maintaining the realism of the model. 
One such assumption that is ubiquitous in \gls{rl} is about the \emph{smoothness} of the \gls{mdp}.
To define what is intended by smoothness, we first recall the concept of Lipschitzness.  

\begin{defi}[Lipschitz continuity]
    Consider $(X,\distance_X)$ and $(Y,\distance_Y)$ two metric spaces and $f$ a function $f:X \rightarrow Y$. Then, $f$ is $L$-Lipschitz continuous ($L$-LC) for $L>0$ if 
    \begin{align*}
        \forall x,x'\in X,\; \distance_Y(f(x),f(x'))\leq L \distance_X(x,x').
    \end{align*}
\end{defi}

Throughout this dissertation, for sets $X$ such that $X\subset \mathbb{R}^n$, we consider the Euclidean distance defined for $x,x'\in X$ as $\distance_{X}(x,x')=\lVert x-x'\rVert_2$.
For probability distributions, we will consider the $L_1$-Wasserstein distance\footnote{By abuse of language and for short, we will refer to the $L_1$-Wasserstein distance as the Wasserstein distance in this thesis.} which we define hereafter.

\begin{defi}[$L_1$-Wasserstein distance, \cite{villani2009optimal}]
    Let $\mu,\nu$ be two probabilities with sample space $\Omega$ is:
    \begin{align*}
        \wassersteindistance[1](\mu\Vert \nu)=\sup_{\left\Vert f\right\Vert_L\leq 1} \left\vert \int_{\Omega} f(\omega)(\mu-\nu)\; (\de \omega) \right\vert,
    \end{align*}
    where $\left\Vert f\right\Vert_L$ is the Lipschitz semi-norm of $f$:
    \begin{align*}
        \left\Vert f\right\Vert_L=\sup_{x,x'\in X, x\neq x'} \frac{\distance_Y(f(x),f(x'))}{\distance_X(x,x')}.
    \end{align*}
\end{defi}

The Lipschitzness can be understood as some notion of smoothness as it limits the speed of growth of a function.
We are now ready to describe the smoothness of an \gls{mdp}.

\begin{defi}[Lipschitz \gls{mdp}, \cite{rachelson2010locality} ]
\label{def:lip_mdp}
An \gls{mdp} is $(L_P,L_r)$-LC if, $\forall(s,a),(s',a')\in\statespace\times\actionspace$
\begin{align*}
    & \wassersteindistance[1](\transitionfunction(\cdot\vert s,a)\Vert \transitionfunction(\cdot\vert s',a'))\leq L_P \left( \distance_{\statespace}(s,s')  + \distance_{\actionspace}(a,a')\right),
    \\
    & \left\vert r(s,a)-r(s',a') \right\vert \leq L_r \left( \distance_{\statespace}(s,s') + \distance_{\actionspace}(a,a')\right).
\end{align*}
\end{defi}

A similar assumption of smoothness can be made for the policy, which is given in the definition.

\begin{defi}[Lipschitz policy]
\label{def:lip_policy}
A stationary Markovian policy $\pi$ is $L_\pi$-LC if, $\forall s,s'\in\statespace$ 
\begin{align*}
    \wassersteindistance[1](\pi(\cdot\vert s)\Vert \pi(\cdot\vert s'))\leq L_\pi \distance_{\statespace}(s,s').
\end{align*}
\end{defi}

These assumptions have great implications. 
For example, the identification of dominating actions is made simpler by guaranteeing that this action dominates over a ball in the state space \cite{rachelson2010locality}. 
The same reference also provides a result on the Lipschitzness of the Q-function, which is recalled below.

\begin{thm}[Lipschitzness of the Q-function, Theorem~1 of \cite{rachelson2010locality}]
\label{th:lip_Q_function}
    Consider an $(L_P,L_r)$-LC \gls{mdp} and an $L_\pi$-LC policy $\policy$. If $\gamma L_P(1+L_\policy)\le 1$, then $Q^\pi$ is $L_Q$-LC with
    \begin{align*}
        L_Q=\frac{L_r}{1-\gamma L_P(1+L_\policy)}.
    \end{align*}
\end{thm}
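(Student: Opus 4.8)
\emph{Proof plan.} The plan is to exploit that the Bellman expectation operator $\bellmanoperator_Q$ of \Cref{eq:bellman_operator_Q} is a $\gamma$-contraction in $\left\Vert\cdot\right\Vert_\infty$ with fixed point $Q^\pi$, and to show that it propagates Lipschitzness in a contractive way, so that the fixed point inherits a finite Lipschitz constant. I would first assume $\gamma L_P(1+L_\pi)<1$; the boundary case $\gamma L_P(1+L_\pi)=1$ makes the claimed $L_Q$ infinite and the statement vacuous. Since $\bellmanoperator_Q$ is a $\gamma$-contraction, starting from $f_0\equiv 0$ the iterates $f_{n+1}=\bellmanoperator_Q f_n$ stay uniformly bounded (by $\maximumreward/(1-\gamma)$, by induction) and converge uniformly to $Q^\pi$.

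\emph{Key step.} The core of the argument is the claim: if $f:\statespace\times\actionspace\to\mathbb{R}$ is bounded and $L$-LC (for the metric $\distance_{\statespace}+\distance_{\actionspace}$), then $\bellmanoperator_Q f$ is $(L_r+\gamma L_P(1+L_\pi)L)$-LC. To prove it I would split $|\bellmanoperator_Q f(s,a)-\bellmanoperator_Q f(s',a')|$ into the reward part, bounded by $L_r(\distance_{\statespace}(s,s')+\distance_{\actionspace}(a,a'))$ by \Cref{def:lip_mdp}, and the transition part $\gamma\left|\int_\statespace g(s'')\,(\transitionfunction(\de s''\vert s,a)-\transitionfunction(\de s''\vert s',a'))\right|$, where $g(s''):=\int_\actionspace \pi(\de a'\vert s'')\,f(s'',a')$. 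By the Kantorovich--Rubinstein duality that defines $\wassersteindistance[1]$, this transition part is at most $\gamma\left\Vert g\right\Vert_L\,\wassersteindistance[1](\transitionfunction(\cdot\vert s,a)\Vert\transitionfunction(\cdot\vert s',a'))\le\gamma\left\Vert g\right\Vert_L\,L_P(\distance_{\statespace}(s,s')+\distance_{\actionspace}(a,a'))$. It then remains to bound $\left\Vert g\right\Vert_L$: writing $g(s_1)-g(s_2)=\int_\actionspace\pi(\de a\vert s_1)(f(s_1,a)-f(s_2,a))+\int_\actionspace f(s_2,a)(\pi(\de a\vert s_1)-\pi(\de a\vert s_2))$, the first term is at most $L\,\distance_{\statespace}(s_1,s_2)$ since $f$ is $L$-LC in its first coordinate, and the second is at most $L\,\wassersteindistance[1](\pi(\cdot\vert s_1)\Vert\pi(\cdot\vert s_2))\le L L_\pi\,\distance_{\statespace}(s_1,s_2)$ by \Cref{def:lip_policy} and since $a\mapsto f(s_2,a)$ is $L$-LC. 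Hence $\left\Vert g\right\Vert_L\le L(1+L_\pi)$, which yields the claim.

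\emph{Conclusion.} Define $L_0=0$ and $L_{n+1}=L_r+\gamma L_P(1+L_\pi)L_n$. By the key step and induction, $f_n$ is $L_n$-LC; since $\gamma L_P(1+L_\pi)<1$, the sequence $(L_n)$ is increasing and converges to the fixed point $L_Q=\frac{L_r}{1-\gamma L_P(1+L_\pi)}$, so every $f_n$ is $L_Q$-LC. Because a uniform (indeed pointwise) limit of $L_Q$-LC functions is $L_Q$-LC, we conclude that $Q^\pi$ is $L_Q$-LC.

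The step I expect to be the main obstacle is making the Kantorovich--Rubinstein duality fully rigorous in this setting: one must verify that $g$ is bounded and Lipschitz (boundedness is inherited from $f$ via the bounded-reward assumption, and measurability is routine) so that it is an admissible test function in the dual formulation of $\wassersteindistance[1]$, and similarly that $a\mapsto f(s_2,a)$ qualifies. A minor additional point is that $f$ is assumed Lipschitz jointly in $(s,a)$ while the argument uses Lipschitzness in each coordinate separately; this follows at once from $\distance_{\statespace}(s,s')\le\distance_{\statespace}(s,s')+\distance_{\actionspace}(a,a')$ and the analogue for $\actionspace$.
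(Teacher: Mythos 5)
Your proof is correct, and it is essentially the standard argument: the paper itself states this theorem as a recalled result from \cite{rachelson2010locality} without reproving it, and the original proof proceeds exactly as you do, by showing that the Bellman expectation operator maps $L$-LC functions to $(L_r+\gamma L_P(1+L_\pi)L)$-LC functions and passing to the fixed point of the resulting recursion on Lipschitz constants. Your observation that the condition should be read as $\gamma L_P(1+L_\pi)<1$ (the boundary case rendering $L_Q$ infinite) is also the right reading of the statement.
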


Finally, a more recent notion of Lipschitzness with respect to time has been introduced in \cite[Assumption 4.1]{metelli2020control}. 
The assumption will be particularly useful in the case of delays, as the smoothness of the trajectories is a key factor for the agent to predict the effect of its actions. 

\begin{defi}[Time-Lipschitz MDP, Assumption~4.1 of \cite{metelli2020control}]
\label{def:time_lip}
An \gls{mdp} is $L_T$-Time Lipschitz Continuous ($L_T$-TLC) if, $\forall (s,a)\in \statespace\times\actionspace$
\begin{align*}
    \wassersteindistance[1](\transitionfunction(\cdot\vert s,a)\Vert \delta_s)\le L_T,
\end{align*}
where $\delta_s$ is defined from the Dirac measure $\delta$ as $\delta_s(x)=\delta(x-s)$.
\end{defi}

It can seem that this definition is not consistent with the traditional definition of Lipschitzness; however, as we will demonstrate, it can be seen as Lipschitzness w.r.t. the number of steps elapsed.
Indeed, the r.h.s. can be read as $L_T\cdot 1$ where $1$ corresponds to the unit of time, the step, of the \gls{mdp}. 
Second, for the l.h.s., we could rewrite $p(\cdot\vert s,a)$ as $p^1(\cdot\vert s,a)$ to emphasise that the transition happens in a step.
Considering the application of more than one action, for instance, the sequence $\boldsymbol a=(a_1,\dots,a_n)$ of $n$ actions, we could introduce the notation $p^n(\cdot\vert s,\boldsymbol a)$.
It would represent the sequential application of the actions contained in $\boldsymbol a$ starting from state $s$.
Finally, we could extend the notation to include $p^0(\cdot\vert s)=\delta_s$ the effect of applying no action at state $s$.
With this new notation, the equation inside the TLC definition becomes:
\begin{align*}
    \wassersteindistance[1](p^1(\cdot\vert s,a)\Vert p^0(\cdot\vert s))\le L_T\cdot (1-0).
\end{align*}
It becomes clearer that the Lipschitzness is with respect to the step of the \gls{mdp}.
We will prove an extension of this result for the distance between $p_{\boldsymbol a}^n(\cdot\vert s)$ and $p^0(\cdot\vert s)$ in \Cref{pp:tlc_delay}.

\subsection{POMDP}
\label{subsec:pomdp}

In this subsection, we briefly introduce the concept of \gls{pomdp} \cite{kaelbling1998planning} which has similarities with the delayed processes, as we will see later.
\gls{pomdp} extend the framework of \gls{mdp} to consider situations where the agent can no longer observe the current state but only has access to partial information on it. 
More formally, a \gls{pomdp} is a tuple $(\statespace,\actionspace, \transitionfunction, \rewardfunction, \initialstatedistribution, \Omega, O)$ where, compared to an \gls{mdp}, two elements are added. 
First, $\Omega$ is the set of observations that the agent may receive instead of the states themselves. 
Second, $O:\statespace\times\actionspace\mapsto \setofprobability(\Omega)$ is called the observation function and defines a probability distribution over $\Omega$. 
Specifically, if an agent arrives at state $s'$ by taking action $a$, then $O(o; s',a)$ is the probability that the agent will receive $o$ as observation.
The reward and transition functions still depend on the current state of the environment, but this state is unknown to the agent. 
The objective remains the one defined in \Cref{subsec:objective}.

For solving \gls{pomdp}, a traditional approach is to keep track of the probability distribution of the current state, called belief, by updating this probability with each new observation collected. 
Doing so, one can cast the problem back to an \gls{mdp} where the state would be the aforementioned belief.
A similar concept is also found in the control theory literature \cite{kumar2015stochastic} under the name of information state, and, as we will see in \Cref{sec:related_works}, it has also been used for the delayed problem. 
For more information on \gls{pomdp}, we refer the reader to \cite{hauskrecht2000value, spaan2012partially}.

\section{Reinforcement Learning}
\label{sec:rl}

\gls{rl} is the branch of \gls{ml} designed to learn an optimal policy in a sequential decision-making problem, modelled by an \gls{mdp}. 
The main difficulty of \gls{rl} is to collect data by interacting with the environment and learning from these data.
The \emph{sample complexity}--the number of samples an agent collects from interacting with its environment--is an important metric for \gls{rl} algorithms. 
Sampling from an environment can be costly in terms of time, computational resources, or money. 
Simulations could be run instead, but this introduces problems when translating the model from simulation to reality\footnote{Notably, the delay is typically ignored in simulation and can pose problems when testing in the real world.}. 
Therefore, it is common to use sample complexity as a tool to measure the efficiency of a \gls{rl} algorithm. 

Although the performance of an agent is still the final goal that the designer would like to optimise, there can be some trade-off between the final goal and the sample complexity, as a fast learning agent may be preferred over an agent guaranteed to reach an optimal behaviour but in a proscriptively long time. 
This trade-off is known as the exploration-exploitation problem, where the agent must trade off between taking actions whose outcomes he knows to be beneficial against taking actions whose outcomes he is not certain about, in order to gain knowledge on its environment. 

Historically, the first methods applied to \gls{mdp} problems were \emph{dynamic programming} algorithms such as \emph{policy iteration} \cite{howard1960dynamic}. 
But these methods are limited because they require knowledge of the transition dynamics as well as maintaining a memory of the value function in each state of the \gls{mdp}.
Moving toward more realistic problems, \gls{rl} has removed the hypothesis of known dynamics and has started considering larger state spaces, eventually continuous ones, which therefore require function approximation. 
We will dive into some of these solutions in the remainder of this section. 
They will be useful in this dissertation, as many algorithms in delayed \gls{rl} are based upon them.
We will first present algorithms designed for the discounted return criteria and finish with an algorithm for the average reward one.

\subsection{Value-Based}
The first approach to solving \gls{rl} problems is to learn the value function $V$ or the action value function $Q$, which is what value-based approaches do. 
Seminal approaches are SARSA \cite{rummery1994line} and Q-learning \cite{watkins1989learning}, which are based on dynamic programming.

\subsubsection{Q-Learning}
Q-learning \cite{watkins1989learning} is a popular algorithm that recursively learns the optimal Q function by interacting with the environment and updating its estimate via the optimal Bellman \cref{eq:bellman_optimal_equation_Q}.
The algorithm samples tuples of the form $(s_t,a_t,r_t,s_{t+1})$ by interacting with the environment and uses them to update the Q-function as follows,
\begin{align}
\label{eq:update_q_learning}
    \hat Q(s_t, a_t)_i = \hat Q(s_t, a_t) + \alpha \delta_{TD,t},
\end{align}
where $\delta_{TD,k}$ is called the \gls{td} error between the current Q-function and its one-step bootstrap. Formally, at step $k$, it reads:
\begin{align}
\label{eq:td_error_q_learning}
    \delta_{TD,t} = r_t+\gamma \max_{a\in\actionspace} \hat Q(s_{t+1}, a) -\hat Q(s_t, a_t).
\end{align}
Q-learning is an \emph{off-policy} algorithm because it uses a different policy to sample from the environment than the one it uses for its update. 
Sampling is usually made with an $\epsilon$-greedy policy with respect to $\hat{Q}$, which means that with probability $1-\epsilon$ this policy takes the action that maximises $\hat{Q}$ and with probability $\epsilon$ selects an action uniformly at random in $\actionspace$. Instead, the policy used in the \gls{td} error term of the update (\Cref{eq:td_error_q_learning}) is greedy with respect to $\hat Q$ as it always selects the maximising action. 
The sampling $\epsilon$-greedy policy (or variations) is useful for exploration purposes. 
It ensures that each state-action pair of the \gls{mdp} will be visited infinitely often, which is a requirement for the proof of the convergence of the algorithm \cite{watkins1992q}.
It allows the agent to assess the effects of different actions, potentially yielding higher rewards.

\subsubsection{SARSA}
The SARSA \cite{rummery1994line} algorithm is very similar to Q-learning. 
It is an \emph{on-policy} algorithm; the agent uses its current policy to sample tuples $(s_t,a_t,r_t,s_{t+1},a_{t+1})$ from the environment.
Using these samples, the \gls{td} error expression now reads:
\begin{align*}
    \delta_{TD,t} = r_t+\gamma\hat Q(s_{t+1}, a_{t+1})-\hat Q(s_t, a_t).
\end{align*}
Note that $a_{t+1}$ replaces the action that maximises $Q$ in the update of \Cref{eq:td_error_q_learning}.

\subsubsection{Deep Q network}
When the state space $\statespace$ grows too large, a solution is to resort to approximation for the Q-function. 
A choice for such approximators is \emph{deep neural network} as in \gls{dqn} \cite{mnih2013playing}.
Note that the action space $\actionspace$ remains discrete, so the maximum of \Cref{eq:td_error_q_learning} can be computed.
Relaxing the assumption of a discrete action space has been proposed by other approaches, such as Deep Deterministic Policy Gradient (DDPG) \cite{lillicrap2015continuous}.

In \gls{dqn}, the Q-function is now parameterized by some parameter $\boldsymbol\theta$ with the goal of finding $Q_{\boldsymbol\theta}(s,a) \sim Q^\star(s,a)$. 
Given the current set of parameters $\boldsymbol\theta_i$, the next set of parameters $\boldsymbol\theta_{i+1}$ is obtained by minimising the following \gls{td} error:
\begin{align}
\label{eq:update_dqn}
    \delta_{TD}(\theta_i,\rho) = \expectedvalue_{(s,a,r,s')\sim\rho}\left[\left(r+\gamma\max_{a\in\actionspace}Q_{\boldsymbol\theta_{i}}(s') - Q_{\boldsymbol\theta_{i+1}}(s,a)\right)^2\right].
\end{align}
where $\rho$ is a distribution over $\statespace\times\actionspace\times[0,\maximumreward]\times\statespace$. 
Usually, at each iteration of the minimisation of \Cref{eq:update_dqn}, new samples $(s,a,r,s')$ are collected with an $\epsilon$-greedy policy \wrt $Q_{\boldsymbol\theta_{i+1}}$. 
\cite{mnih2013playing} observed that keeping not only the last sampled tuples but also the older samples in a replay buffer to minimise \Cref{eq:update_dqn} smooths learning and makes the problem more similar to \gls{sl}. This technique is known as \emph{experience replay}.
The buffer is therefore filled with samples from many different policies, and we call this mixture of policies $\rho$.

\subsection{Policy-Based}
\label{subsec:policy_based}
One main limitation of value-based approaches is that the agent learns extra knowledge about the environment that might not be of direct usefulness. 
In the end, one is only interested in learning optimal behaviour, and learning the Q-function is only an indirect way to reach this goal. 
Instead, policy-based approaches learn the policy directly from interactions with the environment.

In general, policies are considered a parameterised function to shift the problem from searching over a set of functions to searching over a set of parameters.
This framework, called \gls{po} \cite{deisenroth2013survey} considers the policy in a parametric set $\Pi_{\Theta} = \{\pi_{\vectorialform{\theta}} : \vectorialform{\theta} \in \Theta \subseteq \realnumbers^{n_\theta}\}$.

The policy is entirely specified by its parameter $\vectorialform{\theta}$ and, therefore, we simplify the notation for its return as $\expectedreturn[][](\vectorialform{\theta})=\expectedreturn[\gamma][\pi_\vectorialform{\theta}]$.
The new objective of the agent then becomes,
\begin{align}
\label{eq:objective_policy_opt}
	\vectorialform{\theta}^\star \in \argmax_{\vectorialform{\theta} \in \Theta} \expectedreturn[][](\vectorialform{\theta}).
\end{align}

The policy $\pi_{\vectorialform{\theta}}$ needs to be stochastic in order to provide enough exploration. 
If the policy is also differentiable in $\vectorialform{\theta}$, then the following fundamental result of policy optimisation holds.

\begin{thm}[Policy Gradient Theorem]
    Let $\pi_{\vectorialform{\theta}}\in\Pi_{\Theta}$ be a stochastic policy, differentiable in $\vectorialform{\theta}$, then the gradient of its return reads
    \begin{align*}
        \nabla_{\vectorialform{\theta}} \expectedreturn[][](\vectorialform{\theta}) =  \frac{1}{1-\gamma}\expectedvalue_{\substack{s\sim\discountedstateoccupancydistribution[\gamma][\pi_{\vectorialform{\theta}}]\\a\sim\pi_{\vectorialform{\theta}}(\cdot\vert s)}}\left[Q^{\pi_{\vectorialform{\theta}}}(s,a)\nabla_{\vectorialform{\theta}} \log \pi_{\vectorialform{\theta}}(a\vert s )\right]
    \end{align*}
\end{thm}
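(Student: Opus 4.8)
The plan is to differentiate the Bellman expectation equation for the state-value function of the parametrised policy. Combining \Cref{eq:bellman_equation_V} and \Cref{eq:bellman_equation_Q},
\[
    V^{\pi_{\vectorialform{\theta}}}(s)=\int_\actionspace \pi_{\vectorialform{\theta}}(\de a\vert s)\left(r(s,a)+\gamma\int_\statespace \transitionfunction(\de s'\vert s,a)\,V^{\pi_{\vectorialform{\theta}}}(s')\right).
\]
Differentiating in $\vectorialform{\theta}$ and using that neither $r$ nor $\transitionfunction$ depends on $\vectorialform{\theta}$, the product rule produces one term where $\nabla_{\vectorialform{\theta}}$ hits $\pi_{\vectorialform{\theta}}(\de a\vert s)$ and one where it hits $V^{\pi_{\vectorialform{\theta}}}(s')$; folding $r(s,a)+\gamma\int_\statespace \transitionfunction(\de s'\vert s,a)V^{\pi_{\vectorialform{\theta}}}(s')$ back into $Q^{\pi_{\vectorialform{\theta}}}(s,a)$ and setting $g(s):=\int_\actionspace \nabla_{\vectorialform{\theta}}\pi_{\vectorialform{\theta}}(\de a\vert s)\,Q^{\pi_{\vectorialform{\theta}}}(s,a)$ for the ``one-step'' contribution, this gives the recursion
\[
    \nabla_{\vectorialform{\theta}} V^{\pi_{\vectorialform{\theta}}}(s)=g(s)+\gamma\int_\actionspace\pi_{\vectorialform{\theta}}(\de a\vert s)\int_\statespace \transitionfunction(\de s'\vert s,a)\,\nabla_{\vectorialform{\theta}} V^{\pi_{\vectorialform{\theta}}}(s').
\]

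Next I would unroll this fixed-point relation. Writing $p^{t}_{\pi_{\vectorialform{\theta}}}(\cdot\vert s)$ for the law of the state $t$ steps after $s$ under $\pi_{\vectorialform{\theta}}$ (with $p^{0}_{\pi_{\vectorialform{\theta}}}(\cdot\vert s)=\delta_s$), iterating yields $\nabla_{\vectorialform{\theta}} V^{\pi_{\vectorialform{\theta}}}(s)=\sum_{t\ge 0}\gamma^{t}\int_\statespace p^{t}_{\pi_{\vectorialform{\theta}}}(\de s'\vert s)\,g(s')$. Taking $J(\vectorialform{\theta})=\expectedvalue_{s_0\sim\initialstatedistribution}[V^{\pi_{\vectorialform{\theta}}}(s_0)]$ (consistent with the paper's conventions up to a uniform discount offset) and swapping the sum with the state integrals, the weighted mixture $\sum_{t\ge0}\gamma^{t}\,\probability(S_t=s\,;\pi_{\vectorialform{\theta}},\initialstatedistribution)$, viewed as a function of $s$, is exactly the discounted state occupancy \emph{measure} of \Cref{subsec:def_state_distrib}, equal to $\frac{1}{1-\gamma}$ times the occupancy \emph{distribution} $\discountedstateoccupancydistribution[\gamma][\pi_{\vectorialform{\theta}}]$. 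Hence $\nabla_{\vectorialform{\theta}}\expectedreturn[][](\vectorialform{\theta})=\frac{1}{1-\gamma}\int_\statespace \discountedstateoccupancydistribution[\gamma][\pi_{\vectorialform{\theta}}](\de s)\,g(s)$. The last step is cosmetic: the log-derivative identity $\nabla_{\vectorialform{\theta}}\pi_{\vectorialform{\theta}}(a\vert s)=\pi_{\vectorialform{\theta}}(a\vert s)\,\nabla_{\vectorialform{\theta}}\log\pi_{\vectorialform{\theta}}(a\vert s)$, valid wherever $\pi_{\vectorialform{\theta}}(a\vert s)>0$ by differentiability, turns $g(s)$ into $\expectedvalue_{a\sim\pi_{\vectorialform{\theta}}(\cdot\vert s)}\!\left[Q^{\pi_{\vectorialform{\theta}}}(s,a)\nabla_{\vectorialform{\theta}}\log\pi_{\vectorialform{\theta}}(a\vert s)\right]$, which is the claimed formula.

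I expect the main obstacle to be analytic rather than algebraic: one has to justify three exchanges of limiting operations — differentiating under the $a$-integral defining $V^{\pi_{\vectorialform{\theta}}}$, commuting $\nabla_{\vectorialform{\theta}}$ with the infinite sum over $t$ while unrolling, and commuting that sum with the $\statespace$-integral. All three follow from dominated convergence once one observes that $\gamma<1$ together with the bounded-reward assumption gives $\lVert Q^{\pi_{\vectorialform{\theta}}}\rVert_\infty\le \maximumreward/(1-\gamma)$, so the $t$-series is geometrically dominated, and that $\pi_{\vectorialform{\theta}}$ is differentiable in $\vectorialform{\theta}$ with a $\vectorialform{\theta}$-locally integrable dominating function for $\nabla_{\vectorialform{\theta}}\pi_{\vectorialform{\theta}}$ over $\actionspace$; in the finite-horizon case every sum is finite and these exchanges are immediate. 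A secondary, purely notational point is to line up the starting-index conventions in the definitions of $\expectedreturn[\gamma]$, $Q^{\pi_{\vectorialform{\theta}}}$ and $\discountedstateoccupancydistribution[\gamma]$ (an offset in the discount exponent), which does not affect the structure of the argument.
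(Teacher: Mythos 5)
Your proof is correct: differentiating the Bellman expectation equation, unrolling the resulting recursion into the discounted occupancy measure, and applying the log-derivative identity is exactly the classical derivation of the policy gradient theorem that this statement rests on, and your remarks about the interchange-of-limits justifications and the paper's index-offset conventions are the right caveats. The paper itself states this result as background without giving a proof, so there is nothing further to compare against; your argument fills that role in essentially the standard way.
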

Note that in many works, including state-of-the-art methods, an approximation of the gradient is used, dropping the discounting included in the state distribution but retaining it in the Q-function. 
This quantity may no longer be the gradient of any function \cite{nota2020policy}, yet it provides good empirical results. 

The policy gradient does not require access to the \gls{mdp} model and can be estimated from samples. 
We present two of the main estimators in the following paragraphs. 
Their formulations assume access to a dataset of trajectories $(\tau_i)_{1\le i\le n}$ where each trajectory is of length $H$, specifically $\tau_i=(s_0^i,a_0^i,r_0^i\dots,s_{H}^i)$.

\subsubsection{REINFORCE} 
Introduced by \cite{williams1992simple}, the REINFORCE estimator reads,
\begin{align*}
    \widehat{\nabla_{\vectorialform{\theta}} \expectedreturn[][](\vectorialform{\theta})} = \frac{1}{n}\sum_{i=1}^{n}\sum_{t=1}^{H-1} \nabla_{\vectorialform{\theta}} \log \pi_{\vectorialform{\theta}}(a_t^i\vert s_t^i)\left(\sum_{t'=0}^{H-1}\gamma^{t'} r_{t'}^i\right).
\end{align*}
This estimator usually has a prohibitively high variance.
However, a constant baseline $b$ can be subtracted from the rightmost term without added bias while allowing control of the variance 
\cite{williams1992simple}.

\subsubsection{Policy Gradient Theorem} 
Another approach to reducing the variance of the previous estimator is obtained by noting that future actions do not influence past rewards. 
Indeed, for any $i>j$, note $\discountedstateoccupancydistribution[\gamma,j:i][\pi]$ the distribution of the chunk of trajectory $(s_j,a_j,\dots,s_i,a_{i})$ under policy $\pi$, then one has
\begin{align*}
    &\expectedvalue_{(s_j,a_j,\dots,s_i,a_{i})\sim\discountedstateoccupancydistribution[\gamma,j:i][\pi_{\vectorialform{\theta}}]}\left[\nabla_{\vectorialform{\theta}} \log \pi_{\vectorialform{\theta}}(a_i\vert s_i)r(s_{j},a_{j})\right] 
    \\&\quad= \expectedvalue_{\substack{(s_j,a_j,\dots,s_{i-1},a_{i-1})\sim\discountedstateoccupancydistribution[\gamma,j:i-1][\pi_{\vectorialform{\theta}}]\\s_i\sim p(\cdot\vert s_{i-1},a_{i-1})}}\left[ r(s_{j},a_{j})\int_{\actionspace}\pi_{\vectorialform{\theta}}(a_i) \nabla_{\vectorialform{\theta}} \log \pi_{\vectorialform{\theta}}(a_i\vert s_i)\;\de a_i\;\right]
    \\&\quad= \expectedvalue_{\substack{(s_j,a_j,\dots,s_{i-1},a_{i-1})\sim\discountedstateoccupancydistribution[\gamma,j:i-1][\pi_{\vectorialform{\theta}}]\\s_i\sim p(\cdot\vert s_{i-1},a_{i-1})}}\left[r(s_{j},a_{j})\int_{\actionspace}\nabla_{\vectorialform{\theta}}\pi_{\vectorialform{\theta}}(a_i) \;\de a_i\; \right]
    \\&\quad= \expectedvalue_{\substack{(s_j,a_j,\dots,s_{i-1},a_{i-1})\sim\discountedstateoccupancydistribution[\gamma,j:i-1][\pi_{\vectorialform{\theta}}]\\s_i\sim p(\cdot\vert s_{i-1},a_{i-1})}}\left[r(s_{j},a_{j})\nabla_{\vectorialform{\theta}}1\right]
    \\&\quad= 0
\end{align*}
This yields the Policy Gradient Theorem (PGT) \cite{sutton1999policy},
\begin{align}
\label{eq:pgt}
    \widehat{\nabla_{\vectorialform{\theta}} \expectedreturn[][](\vectorialform{\theta})} = \frac{1}{n}\sum_{i=1}^{n}\sum_{t=1}^{H-1} \nabla_{\vectorialform{\theta}} \log \pi_{\vectorialform{\theta}}(a_t^i\vert s_t^i)\left(\sum_{t'=t}^{H-1}\gamma^{t'} r_{t'}^i-b(s_t)\right).
\end{align}
Note how the sum in the rightmost term now starts at $t$. 
Here, also, a baseline $b$ has been introduced, with the aim of further reducing the variance, as in REINFORCE. 
In PGT, the baseline might depend on the current state $s_t$ without added bias \cite{peters2008reinforcement}.
Smart baseline choices are discussed in \cite{peters2008reinforcement}.

\subsubsection{Trust Region Policy Optimisation}
\gls{trpo}\cite{schulman2015trust} is a policy-based algorithm whose idea is to provide safe improvement steps, i.e. updates of the policy that guarantee improvement in performance.
The theoretical results upon which it is based rely on the famous performance difference lemma, which we recall here.
\begin{lemma}[Performance Difference Lemma, Lemma 6.1 of \cite{kakade2002approximately}]
    Let $\pi,\pi'\in\Pi^{SR}$ with respective expected discounted return $\expectedreturn[\gamma][\pi]$ and $\expectedreturn[\gamma][\pi']$ then,
    \begin{align*}
        \expectedreturn[][\pi'] - \expectedreturn[][\pi] = \frac{1}{1-\gamma}\expectedvalue_{(s,a)\sim\discountedstateoccupancydistribution[\gamma][\pi']}\left[\advantagefunction^{\pi}(s,a)\right].
    \end{align*}
\end{lemma}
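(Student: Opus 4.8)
The plan is to start from the right-hand side, unfold it as a discounted sum along trajectories generated by $\pi'$, and exploit a telescoping cancellation among the value-function terms. First I would use the definition of the discounted state-action occupancy distribution (\Cref{def:discounted_state_distrib}) to write
\begin{align*}
    \frac{1}{1-\gamma}\expectedvalue_{(s,a)\sim\discountedstateoccupancydistribution[\gamma][\pi']}\left[\advantagefunction^{\pi}(s,a)\right] = \expectedvalue_{\tau\sim\pi'}\left[\sum_{t} \gamma^{t}\,\advantagefunction^{\pi}(s_t,a_t)\right],
\end{align*}
where $\tau=(s_0,a_0,s_1,a_1,\dots)$ has $s_0\sim\initialstatedistribution$, $a_t\sim\pi'(\cdot\vert s_t)$ and $s_{t+1}\sim\transitionfunction(\cdot\vert s_t,a_t)$; the factor $1-\gamma$ in the occupancy distribution cancels the $\tfrac{1}{1-\gamma}$ prefactor. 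The bounded-reward assumption together with $\gamma<1$ (hence $\lVert V^\pi\rVert_\infty\le\maximumreward/(1-\gamma)$) justifies interchanging expectation and sum by dominated convergence.

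Next I would expand $\advantagefunction^{\pi}(s_t,a_t)=Q^\pi(s_t,a_t)-V^\pi(s_t)$ and rewrite $Q^\pi(s_t,a_t)$ through the Bellman expectation equation \Cref{eq:bellman_equation_Q} as $r(s_t,a_t)+\gamma\int_{\statespace}\transitionfunction(\de s'\vert s_t,a_t)V^\pi(s')$. Because, inside the trajectory expectation, $s_{t+1}$ is drawn exactly from $\transitionfunction(\cdot\vert s_t,a_t)$, the tower rule lets me replace $Q^\pi(s_t,a_t)$ by $r(s_t,a_t)+\gamma V^\pi(s_{t+1})$, so the summand becomes $\gamma^{t}\bigl(r(s_t,a_t)+\gamma V^\pi(s_{t+1})-V^\pi(s_t)\bigr)$.

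Finally I would split the expectation into its reward part, which is $\expectedreturn[\gamma][\pi']$ by \Cref{eq:discounted_reward}, and its value part $\sum_{t}\bigl(\gamma^{t+1}V^\pi(s_{t+1})-\gamma^{t}V^\pi(s_t)\bigr)$. The latter telescopes to $-V^\pi(s_0)$, the boundary term $\gamma^{H+1}V^\pi(s_{H+1})$ vanishing as $H\to\infty$ since $V^\pi$ is bounded and $\gamma<1$ (or being absent when $H$ is finite and the horizon-$H$ value function is used consistently). Taking the remaining expectation over $s_0\sim\initialstatedistribution$ identifies $\expectedvalue[V^\pi(s_0)]$ with $\expectedreturn[\gamma][\pi]$ by \Cref{eq:state_action_value_function}, so the right-hand side equals $\expectedreturn[\gamma][\pi']-\expectedreturn[\gamma][\pi]$, as claimed.

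I expect the only real difficulty to be bookkeeping: aligning the summation-index conventions of the occupancy distribution, the return, and the value function so the telescoping cancellation lands exactly, and checking that the boundary term of the telescoped sum is negligible. None of these steps is deep — the content is entirely the Bellman identity plus a telescoping sum — but the discount powers must be tracked carefully through the $Q$-expansion, the reward sum, and the $(1-\gamma)$ normalisation of the occupancy measure.
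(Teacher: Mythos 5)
Your proposal is correct: it is the standard telescoping proof of the performance difference lemma, and I see no gap in it — the occupancy-measure unfolding, the Bellman expansion of $Q^{\pi}$ via the tower rule, and the telescoping of the value terms (with the boundary term killed by boundedness of $V^{\pi}$ and $\gamma<1$) are exactly the needed ingredients, and you correctly flag the index/normalisation bookkeeping as the only delicate point.

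Note, however, that the paper does not prove this statement at all: it is recalled verbatim from Lemma~6.1 of \cite{kakade2002approximately}, so there is no in-paper proof to match. The closest comparison is the paper's proof of its delayed generalisation (\Cref{lem:perf_diff_lem}), which runs the same argument in a slightly different guise: instead of expanding the occupancy distribution into a trajectory sum and telescoping, it adds and subtracts $\mathbb{E}[r(s,a)+\gamma\,\mathbb{E}_{s'}V^{\pi}(s')]$ to the performance gap $I(x)$, obtains the one-step recursion $I(x)=A(x)+\gamma\,\mathbb{E}[I(x')]$, and then iterates before recognising the discounted occupancy distribution. The two routes are equivalent — your trajectory-level version makes the dominated-convergence and boundary-term issues explicit up front, while the recursive version hides them in the iteration step — so either would serve as a proof of the lemma as stated.
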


Based on this lemma, \cite{schulman2015trust} build a surrogate objective whose maximisation guarantees an improvement of $\expectedreturn[\gamma][\pi']$.

Letting $\textstyle C=\frac{4\varepsilon\gamma}{(1-\gamma)^2}$ and $\varepsilon=\max_{s,a}\advantagefunction^\pi(s,a)$ \cite{schulman2015trust} show that 
\begin{align*}
    \expectedreturn[\gamma][\pi'] \ge \underbrace{\expectedreturn[\gamma][\pi] + \frac{1}{1-\gamma}\expectedvalue_{(s,a)\sim\discountedstateoccupancydistribution[\gamma]}\left[\advantagefunction^{\pi'}(s,a)\right]}_{\coloneq L^\pi(\pi')} - C \max_s \kullbackleiblerdivergence^{\max} (\pi(\cdot\vert s),\pi'(\cdot\vert s)).
\end{align*}

To simplify the optimisation, in their final objective, \cite{schulman2015trust} substitute the average KL for the maximum. 
The former is indeed simpler to compute in practice.
Furthermore, the penalisation term in $C$ is replaced by a constraint in order to allow for larger yet robust steps.
The maximisation becomes,
\begin{align*}
\begin{array}{cl}
     \underset{\pi'}{\text{maximise}} &  L^\pi(\pi')\\
     \text{subject to} & \expectedvalue_{s\sim \discountedstateoccupancydistribution[\gamma][\pi]}\left[ \kullbackleiblerdivergence (\pi(\cdot\vert s),\pi'(\cdot\vert s))\right]\le \delta.
\end{array}
\end{align*}
\gls{trpo} has proved to be an empirically efficient algorithm.

\subsubsection{Proximal Policy Optimisation}
Despite its empirical success, \gls{trpo} has a high computational cost because it involves the computation of the Hessian of the Kullback-Leibler divergence and its inverse.
Instead, \gls{ppo} \cite{schulman2017proximal} propose to cast the problem back to a first-order optimisation scheme using a clipping operation to maintain the policy in a safe region. 
Empirical results suggest that \gls{ppo} is as efficient as \gls{trpo} if not better, particularly for its simpler computations.

\subsubsection{Parameter-based Optimisation}
\label{subsec:param_based_optim}
From the parameterised \gls{po}, it is possible to add a layer of abstraction and consider optimising for \emph{hyper-policies} instead of policies. 
Basically, a hyper-policy defines a rule for selecting the policy that the agent will follow.
In this setting, the parameters $\vectorialform{\theta}$ of the policy are sampled by a hyper-policy $\nu_{\vectorialform{\rho}}$ which is itself parameterised by a vector $\vectorialform{\rho}$ in $\mathcal{V}_{\mathcal{P}} = \{\nu_{\vectorialform{\rho}} : \vectorialform{\rho} \in \mathcal{P} \subseteq \realnumbers^{n_\rho}\}$~\cite{sehnke2008parameter}.
Moving the stochasticity to the hyper-policy, the policy need not be stochastic anymore. 
This is a great advantage for reducing the variance of the estimators. 
Indeed, a trajectory can be collected by sampling only a single $\vectorialform{\theta}$. 
The stochasticity of the trajectory then only results from the environment.
The objective in this framework becomes,
\begin{align}
\label{eq:objective_hyperpolicy_opt}
	\vectorialform{\rho}^\star \in \argmax_{\vectorialform{\rho} \in \mathcal{P}} \expectedreturn[][](\vectorialform{\rho})=\argmax_{\vectorialform{\rho}\in \mathcal{P}} \expectedvalue_{\boldsymbol\theta\sim \nu_{\vectorialform{\rho}}}\expectedreturn[][](\vectorialform{\theta}).
\end{align}
We refer to this setting as parameter-based \gls{po} and to the original setting as action-based \gls{po}.

\subsection{Actor-Critic}
In the formulation of \Cref{eq:pgt}, the term $\textstyle\sum\nolimits_{t'=t}^{H-1}\gamma^{t'} r_{t'}$ is no less than a Monte-Carlo estimate of the Q-function $Q(s_t,a_t)$. 
\emph{Actor-critic} methods propose to change this estimate to a \gls{td} estimate such as TD$(0)$ which we note:
\begin{align*}
    G_{t}^0 = r_{t} + \gamma V_{\boldsymbol\phi}(s_{t+1}).
\end{align*}
Here, $V_{\boldsymbol\phi}$ is the current approximation of the state value function using the parameters $\boldsymbol\phi$.
Choosing $V_{\boldsymbol\phi}(s_{t})$ as the baseline in \Cref{eq:pgt} is an interesting choice made in the Advantage Actor-Critic (A2C) algorithm \cite{mnih2016asynchronous}. 
It transforms the rightmost parentheses in \Cref{eq:pgt} into:
\begin{align}
\label{eq:a2c}
    \widehat{\nabla_{\vectorialform{\theta}} \expectedreturn[][](\vectorialform{\theta})} = \frac{1}{n}\sum_{i=1}^{n}\sum_{t=1}^{H-1} \nabla_{\vectorialform{\theta}} \log \pi_{\vectorialform{\theta}}(a_t^i\vert s_t^i)
    \left(\vphantom{\sum_{i=1}^{n}\sum_{t=1}^{H-1}} r_{t} + \gamma V_{\boldsymbol\phi}(s_{t+1}) - V_{\boldsymbol\phi}(s_{t})\right).
\end{align}
Note that the new term corresponds to $r_{t} + \gamma V_{\boldsymbol\phi}(s_{t+1}) - V_{\boldsymbol\phi}(s_{t})=\advantagefunction_{\boldsymbol\phi}(s_t,a_t)$ which is an estimate of the advantage function (see~\Cref{eq:advantage_function}).
The name actor-critic comes from the fact that the method learns both a policy $\pi_{\vectorialform{\theta}}$, the \emph{actor}, and a state value function $V_{\boldsymbol\phi}$, the \emph{critic}. 
The algorithm alternatively updates the policy with the gradient estimate of \Cref{eq:a2c} and the value function to minimise the expected squared \gls{td} error (see \Cref{eq:update_dqn} for comparison):
\begin{align}
\label{eq:update_a2c}
    \delta_{TD}(\boldsymbol\phi_i,\rho) = \expectedvalue_{(s,a,r,s')\sim\rho}\left[\left(r+\gamma V_{\boldsymbol\phi_i}(s') - V_{\boldsymbol\phi_{i}}(s)\right)^2\right].
\end{align}
Again, $\rho$ is the distribution in some dataset of transitions $\mathcal D$.
One major advantage of actor-critic over PGT and REINFORCE is that it enables off-policy learning.

\subsubsection{Soft Actor Critic}
An example of an actor-critic algorithm we will use in this thesis is the \gls{sac} \cite{haarnoja2018soft}. 
Its main feature is to add an entropy regularisation to the update of the regular A2C described above.
Entropy is defined for some random variable $X$ with probability $p(x)=\probability(X=x)$ as
\begin{align*}
    \entropy(X) = \expectedvalue_{x\sim p}[-\log p(x)].
\end{align*}
Entropy-regularised \gls{rl} adds a bonus to the entropy of the policy to the original reward function. 
This implies that the return now reads,
\begin{align}
\label{eq:discounted_reward_entropy}
    \expectedreturn[\gamma] = (1-\gamma)\expectedvalue\left[\sum_{t=1}^H \gamma^t  \probability\left(S_t=s; \pi,\mu\right)\pi(a\vert s) \left(r(s,a)v + \alpha \entropy(\pi(\cdot\vert s))\right)\right],
\end{align}
with $\alpha>0$ a parameter that controls the regularisation.

From this new return, a new state value function and state-action value function can be proposed. The \gls{td} error changes from \Cref{eq:update_a2c} as well. 
Therefore, in \gls{sac}, the state-action value function is learnt by minimising 
\begin{align}
\label{eq:update_sac}
    \delta_{TD}(\boldsymbol\phi_i,\rho) &= \quad\expectedvalue_{(s,a,r,s')\sim\rho}\left[\left(y(r,s') - Q_{\boldsymbol\phi_{i}}(s,a)\right)^2\right],
\end{align}
where
\begin{align*}
    y(r,s') = \expectedvalue_{a'\sim\pi_{\boldsymbol\theta_{i}}}\left[r+\gamma \min_{j\in\{1,2\}}Q_{\boldsymbol\phi_{\text{target,j}}}(s',a')-\alpha \log\pi_{\boldsymbol\theta_{i}}(a'\vert s') \right].
\end{align*}
The target Q-functions $(Q_{\boldsymbol\phi_{\text{target,j}}})_{j\in\{1,2\}}$ are Polyak averages of the parameters of the Q-functions over the last iterations. They are used instead of the current Q-function $Q_{\boldsymbol\phi_i}$ to avoid the instability that is empirically observed otherwise.
The $\textstyle\min_{j\in\{1,2\}}$ is another empirical trick to avoid overestimation. 
We refer to \cite{haarnoja2018soft} for the full algorithm.

\subsection{Imitation Learning}
\label{subsec:imitation_learning}
\emph{Imitation learning} is a paradigm that comes from the observation that it is usually easier to learn a skill from demonstrations than learning it from scratch. 
In fact, learning to imitate the policy of an expert on a fixed set of observations falls under the \emph{\glsfirst{sl}} framework, which is usually simpler than \gls{rl}.
Imitation learning, therefore, aims to close the gap between \gls{rl} and \gls{sl}.
To define the framework more formally, let $\pi_E$ be an expert policy; most imitation learning approaches aim to find the policy of the learner, $\pi_I$, which minimises $\textstyle\expectedvalue_{s\sim d_\mu^{\pi_E}}[l(s,\pi_I)]$ \cite{ross2011reduction}. 
The function $l(s,\pi)$ is a loss designed to make $\pi_I$ more similar to $\pi_E$.
Notably, this objective is defined under the state distribution induced by $\pi_E$.
This can be problematic because, whenever the learner makes an error, it may end up in a state where its knowledge of the expert's behaviour is poor and make further errors.
Thus, errors can propagate as the squared effective horizon \cite[Theorem~2.1]{ross2010efficient}\cite[Theorem~1]{xu2020error}.

Other imitation learning approaches are designed to solve this problem, but are usually not practical, as explained by \cite{ross2011reduction}. 
For instance, \cite{ross2010efficient} propose the use of a non-stationary policy, which requires training a policy for each time step, which is computationally expensive for long horizons. 
Another approach by \cite{ross2010efficient}--SMILe--considers a mixture of policies augmented by one policy at each iteration. 
As stated by \cite{ross2011reduction}, this is problematic in practice since the mixture contains policies of different qualities. 
This could clearly create instability, as weaker policies are likely to make errors, while stronger policies must compensate for these errors.

A rather successful solution to the aforementioned problem is called Dataset Aggregation (\textsc{DAgger})~\cite{ross2011reduction}. 
This imitation algorithm computes its loss under the learner's state distribution \cite{osa2018algorithmic} and is, therefore, able to account for the shift in distribution induced by the learner's policy-making potential errors. 
As we shall later see, this is a sought-after property for an imitation algorithm in the case of delays since the delay can exacerbate the difference in state distribution between a delayed and an undelayed policy.
Practically, the idea of \textsc{DAgger} is simple; new samples are collected according to a policy similar to that of the learner, and the expert is only queried on those samples afterwards to understand what it would have done instead.
This has the great advantage of providing samples that match the learner's state distribution.  
The sampling policy is $\pi_i = \beta_i \pi_E + (1-\beta_i) \hat{\pi}_i$, where $\beta_i$ is the weight of a mixture of the expert policy and the current policy of the learner $\hat{\pi}_i$. 
A dataset $\mathcal{D}$ for the \gls{sl} step is built by adding the sampled states $s$ with the action $\pi_E(s)$ selected by the expert. 
Then, a new imitated policy $\hat{\pi}_{i+1}$ is trained on $\mathcal{D}$. 
The sequence $(\beta_i)_{i\in[\![1,N]\!]}$ is such that $\beta_1=1$, so as to sample initially only from $\pi_E$ and $\beta_N=0$ to sample only from the imitated policy in the end.

\subsection{Theoretical Reinforcement Learning}
\label{subsec:theoretical_rl}

In this last subsection, we will introduce approaches that are designed for the average reward criteria.
These approaches are usually more theoretical.
They are interested in balancing exploration of the \gls{mdp} and exploitation of acquired knowledge.
The agent must learn to trade-off between these strategies so as not to lose too much with respect to an expert.
Losing can be intended in terms of missed opportunities for lack of exploration or missed rewards for lack of exploitation.
This setting belongs to \emph{online learning}.
To better understand the setting, let us present a subfield of online learning, that is \emph{prediction of individual sequences} \cite{cesa2006prediction}.
It considers an agent who has to choose repeatedly between $K$ actions, called arms. 
Each arm provides the agent with a reward, which can either be generated in a stochastic or adversarial manner. 
The environment has no state and the agent can always choose between the $K$ arms at any time. 
It is possible to represent this framework as an \gls{mdp} with a unique state, \ie $\cardinal{\statespace}=1$. 
When the agent can only observe the reward of the arm it has chosen and not the reward of the $K-1$ other arms, the setting is called a \emph{multi-armed bandit} \cite{lattimore2020bandit}. When the reward is stochastic, the bandit is called \emph{stochastic bandit} while if it is adversarial, it is referred to as \emph{adversarial bandit}.
In this field, an important concept is regret, that is, the expected difference between the reward collected by the learning agent and by the best policy in hindsight. 
The goal is to get a sub-linear regret, indicating that the agent is getting closer and closer to the expert's policy.
A common approach to this problem is to use \emph{optimism} in the face of uncertainty. 
An upper-confidence bound is computed on the estimated quantities, the reward of the arms, and the agent selects the arm with the best upper bound. We shall see in \Cref{subsec:reward_delay_related} how the delay is considered in the bandit literature.

The idea of upper-confidence bounds has also been applied to \gls{rl}. 
A precursor was the UCRL algorithm \cite{auer2006logarithmic} where confidence bounds are maintained on the reward and transition functions.
The regret in \gls{rl}, for some \gls{mdp} $\markovdecisionprocess$ is defined as:
\begin{align*}
    \mathcal{R}(T,s) \coloneq T\cdot\max_{\pi\in\Pi^{\text{MR}}} \averagereturnfunction(s) - \sum_{t=1}^T \expectedrewardfunction(s_t,a_t)
\end{align*}
where $\averagereturnfunction(s)$ is the average reward performance of the best Markovian policy with initial state distribution $\delta_s$ and $\textstyle \sum_{t=1}^T \expectedrewardfunction(s_t,a_t)$ is the reward collected by the learning algorithm during the first $T$ steps.
UCRL uses information on the mixing time of the \gls{mdp} and assumes the ergodicity of the process in order to achieve sub-linear regret. 

As we will see in the following example, later approaches have enhanced the results of UCRL in terms of smaller regret and weaker assumptions.
From these approaches, we focus on UCRL2~\cite{auer2008near}, which applies to \emph{communicating} \glspl{mdp} \cite[Section~8.3.1]{puterman1994markov}.  
We recall the definition of communicating \glspl{mdp} below.

\begin{defi}[Communicating \gls{mdp}]
\label{def:communicating_mdp}
    An \gls{mdp} $\markovdecisionprocess$ is said to be communicating if, for any states $s$ and $s'$, there exists a deterministic and stationary policy that has a non-zero probability of reaching $s'$ starting from $s$.
\end{defi}

Note that in \emph{communicating} \glspl{mdp}, the average reward $\averagereturnfunction(s)$ no longer depends on $s$.
The results of UCRL2 depend on the notion of \emph{diameter} of the \gls{mdp} which is intrinsically related to the property of being communicating. 
In fact, finite diameter and communicating \glspl{mdp} are equivalent \cite{auer2008near}. We give its definition in the following.

\begin{defi}[Diameter of an \gls{mdp}\cite{auer2008near}]
\label{def:diameter}
    Let $\markovdecisionprocess$ be an \gls{mdp} and let $\policy$ be a stationary policy on $\markovdecisionprocess$. 
    Let $T(s'\vert s;\policy)$ be the random variable that measures the time it takes for the policy $\pi$ to first reach $s'$ starting from $s$. 
    Then the diameter of $\markovdecisionprocess$ is,
    \begin{align*}
        D(\markovdecisionprocess) \coloneq \max_{s\neq s'\in\statespace}\min_{\pi\in\Pi^{\text{S}}} \expectedvalue\left[T(s'\vert s;\policy)\right].
    \end{align*}
\end{defi}

Leveraging the knowledge of the diameter, \cite{auer2008near} propose an analysis of their optimism-based algorithm that shows sub-linear regret. 

\section{Further Preliminaries}
\label{sec:futher_prelim}
In this section, we present additional statistical and \gls{ml} concepts that we will encounter throughout the thesis.

\subsection{Importance Sampling}
\label{subsec:importance_sampling}

\Glspl{is}~\cite{mcbook} is a statistical tool to estimate the expectation $\mu = \expectedvalue_{x \sim P}[f(x)]$ of some function $f:X\mapsto \realnumbers$ under a \emph{target} distribution $P$ while having access only to samples collected with another distribution $Q$ --the \emph{behavioural} distribution. 
We note $p$ and $q$ the density functions of $P$ and $Q$, respectively.
If $P$ is absolutely continuous \wrt $Q$, which we note $P \ll Q$, then importance sampling builds an unbiased estimator of $\mu$ in the following way,
\begin{align*}
    \widehat{\mu} = \frac{1}{N}\sum_{i=1}^{N} \frac{p(x_{i})}{q(x_{i})}f(x_{i})
\end{align*}
where $\{x_{i}\}_{i=1}^{N} \sim Q$. 

To go further, {\glspl{mis}} considers a set of behavioural distributions $(Q_j)_{j\in[\![1,J]\!]}$ instead of the unique distribution $Q$. 
We note $(q_j)_{j\in[\![1,J]\!]}$ the density functions of $(Q_j)_{j\in[\![1,J]\!]}$.
We note $N$ be the total number of samples and $N_j$ the ones sampled by $Q_j$, which therefore satisfy $\textstyle N=\sum\nolimits_{j=1}^{J} N_j$. 
If $P \ll Q_j$ for all $j\in[\![1,J]\!]$, then the unbiased \gls{mis} estimator reads,
\begin{align*}
	\widehat{\mu} = \sum_{j=1}^J \frac{1}{N_j}\sum_{i=1}^{N_j} \beta_j(x_{ij})\frac{p(x_{ij})}{q_j(x_{ij})}f(x_{ij}),
\end{align*}
where $\{x_{ij}\}_{i=1}^{N_j} \sim Q_j$ and $(\beta_j(x))_{j\in[\![1,J]\!]}$
are a partition of the unit for every $x \in X$.
The choice of this partition is free, but a common choice is the \emph{balance heuristic} (BH)~\cite{veach1995monte}, which sets
\begin{align*}
    \beta_j(x)=\frac{N_j q_j(x)}{\sum_{k=1}^J N_k q_k(x)}.
\end{align*}
BH can be seen as casting the estimator back to the classic importance sampling estimator, where the samples can be regarded as coming from the mixture:
\begin{align*}
    \Phi = \sum_{k=1}^J\frac{N_k}{N}Q_k.
\end{align*}

\subsection{Divergences}
The divergence in statistics defines a notion of distance between probability distributions. 
We define two such divergences, which we use in this work.

\noindent\textbf{Kullback-Leibler divergence~\cite[Section~2.3]{cover1991information}.}\indent 
For two probability distributions $P$ and $Q$ on the same probability space and such that $P \ll Q$, the Kullback-Leibler divergence between them is defined as,
\begin{align*}
    \kullbackleiblerdivergence(P\Vert Q)=\int_{\Omega} p(\omega) \log \frac{p(\omega)}{q(\omega)} \de \omega.
\end{align*}

\noindent\textbf{\Renyi divergence~\cite{renyi1961measures}.}\indent The $\alpha$-\Renyi divergence between two probability distributions $P$ and $Q$ such that $P \ll Q$ is defined for $\alpha \in [0,\infty]$ as:
\begin{align}
\label{eq:renyi_div}
	\renyidivergence[\alpha](P\Vert Q) = \frac{1}{\alpha-1}\log \int_{\Omega} p(\omega)^\alpha q(\omega)^{1-\alpha} \de \omega.
\end{align}
We note $\exponentialrenyidivergence[\alpha](P\Vert Q) = \exp\{\renyidivergence[\alpha](P\Vert Q)\}$ the exponential $\alpha$-\Renyi divergence.
The \Renyi divergence is interesting as it is related to the $\alpha$-moment of the importance weight in the following way,
\begin{align*}
    \expectedvalue_{x\sim Q}\left[\left(\frac{p(x)}{q(x)}\right)^{\alpha}\right] = \exponentialrenyidivergence[\alpha](P\Vert Q)^{\alpha-1}.
\end{align*}
Note that the 1-\Renyi divergence, which is not defined by \Cref{eq:renyi_div} but obtained by continuity \cite{van2014renyi}, corresponds to the Kullback-Leibler divergence.
In this work, we will always consider the 2-\Renyi divergence, and for simplicity, we will drop the exponent in the notation, $\renyidivergence[2]=\renyidivergence$.

\subsection{Neural Networks}
A \gls{nn} is a parameterised function $f$ that results from the stacking of affine mappings combined with point-wise non-linear functions. 
The latter plays a central role in the ability of a neural network to approximate complex functions. They are called \emph{activation functions}. 
Common activation functions are the sigmoid or rectified linear unit (ReLU). 
Numerous architectures for neural networks have been proposed over the years, from simpler \emph{fully-connected} networks (known as well as dense, linear or feedforward networks) to more complex deep neural networks \cite{goodfellow2016deep} with more intricate structure and/or more layers and parameters.
The fully connected network consists of a matrix of parameters $W\in\realnumbers^{n\times m}$, also called \emph{weights}, and an activation function $\sigma$.
When applied to some input $x\in\realnumbers^m$, the network outputs $y=\sigma(Wx)\in\realnumbers^n$.
If only a single combination of linear and non-linear functions is applied, the network is actually called a \emph{layer}. 
Another network can be obtained by stacking several such layers. 
A second ubiquitous type of layer is \emph{convolution}. 
A convolution scans through the input--in one, two, or even more dimensions--by recursively applying a kernel to only a subset of this input. 
The size of the kernel is referred to as the \emph{receptive field}. 
Because the kernel is fixed, this operation is akin to the mathematical definition of the convolution, hence its name. 
The great advantages of convolutions over fully-connected layers are the reduced number of weights and the structured mapping of the input. 
This explains the great empirical success of convolutions \cite{krizhevsky2017imagenet}. 
When applied to \emph{time series}--sequences indexed by time--convolution can proceed in a way that preserves causality and is referred to as \emph{temporal convolutions}~\cite{oord2016wavenet, liotet2020deep}.
We now delve into two more complex networks that will be used later in this dissertation.

\subsubsection{Transformers}
\label{subsubsec:transformer}
The \emph{Transformer} \cite{vaswani2017attention} is a network that has been introduced for sequence-to-sequence tasks, that is, encoding a sequence to then decode it into another sequence. 
The Transformer uses the principle of attention \cite{bahdanau2014neural} coupled with feed-forward layers to encode or decode a sequence.
On top of its better performance, one main advantage of the Transformer compared to previous approaches for sequence-to-sequence tasks is its computational cost. 
Previous approaches relied on \emph{recurrent neural networks} that process the input sequentially and perform \emph{back-propagation through time}, which greatly slows the process.
Another useful property of Transformers is that they can be implemented in a way that preserves the causality of the input sequence.  
Before concluding this short presentation, we focus on one specific layer inside the Transformer, the \emph{positional encoding}.
It will be particularly useful later on.
Positional encoding is an early layer of the Transformer that allows one to add information on the position of an element inside the sequence.
It computes a vectorial representation of the index of this element as a set of sinusoidal transformations applied to this index. 
An interesting property of this representation is that, however large the value of the index can be, the representation's output is sinusoidal and is therefore bounded.
Moreover, the size of the vectorial representation can be controlled by the network designer.

\subsubsection{Flow-Based Models}
\label{subsubsec:maf}

Flow-based models are a type of \emph{generative model}, such as Generative Adversarial Networks (GAN) \cite{goodfellow2014generative}. Generative models are \glspl{nn} trained on a dataset $\mathcal{D}$ to generate new data points with the same statistics as $\mathcal{D}$.
However, contrarily to GANs and most generative models, flow-based ones not only learn this generative model, but also learn the original probability $p$ used to generate $\mathcal{D}$. 
This property, together with the great empirical results of flow-based approaches \cite{oord2016wavenet,kingma2018glow}, is why we will use them in this dissertation.

In particular, we will present the \gls{maf} \cite{papamakarios2017masked} network. It combines two ideas,  \emph{normalizing flow}~\cite{rezende2015variational} and \emph{autoregressive density estimation}~\cite{uria2016neural}.
The idea of autoregressive density estimation methods such as Masked Autoencoder Distribution Estimator (MADE)~\cite{germain2015made} on which MAF is based is to generate a sample recursively, dimension after dimension. 
Practically, to sample a vector $\vectorialform{x}\in\realnumbers^n$ from a probability distribution $p$, autoregressive density estimation rewrites $\textstyle p(x)=\prod\nolimits_{i=1}^{n}p(x_i\vert x_{1:i-1})$ and each conditional $p(x_i\vert x_{1:i-1})$ can be approximated by a \gls{nn}.
Normalising flows instead learn to generate a sample $\vectorialform{x}\sim p$ by applying some invertible function $f$ to a vector $\vectorialform{u}\in\realnumbers^m$ sampled from a base distribution $\rho$. 
This base distribution is usually simple, such as a normal distribution.
The probability obtained can be expressed as
\begin{align}
\label{eq:p_inverted_maf}
    p(\vectorialform{x}) = \rho(f^{-1}(\vectorialform{x})) \left\lvert \det\left(\frac{\partial f^{-1}}{\partial \vectorialform{x}}\right)\right\rvert.
\end{align}
MAF combines these approaches and generates a sample $\vectorialform{x}$ dimension-wise as,
\begin{align}
\label{eq:sampling_maf}
    x_i = u_i \exp(\alpha_i) + \mu_i 
\end{align}
where $\mu_i=f_{\mu_i}(x_{1:i-1})$ and $\alpha_i=f_{\alpha_i}(x_{1:i-1})$ are modelled by neural networks and $u_i\sim\mathcal{N}(0,1)$.

As anticipated, MAF learns the density of a data point $\vectorialform{x}$, and it is possible to access it using \Cref{eq:p_inverted_maf}. 
Due to the recursive structure of the generating process, the inverse function in this equation has a simple expression: $\textstyle\exp\left(\sum\nolimits_{i=1}^n\alpha_i\right)$\cite{papamakarios2017masked}.
Normalising flow layers can be stacked to approximate more complex probabilities; the same idea can be applied to MAF.

Calling $p_{\vectorialform{\theta}}$ the probability it has learnt, the MAF network is trained to minimise the Kullback-Leibler divergence $\kullbackleiblerdivergence(p,p_{\vectorialform{\theta}})$ which is estimated using samples from the training set $\mathcal{D}$. 
In practice, the network is thus trained on the following objective,
\begin{align*}
    \underset{\vectorialform{\theta}}{\text{maximise}}   \sum_{\vectorialform{x}\in\mathcal{D}}\log p_{\vectorialform{\theta}} (\vectorialform{x}).
\end{align*}

\cleardoublepage
\chapter{Delayed Reinforcement Learning}
\label{chap:delay}

\section{Introduction}
\label{sec:delay_intro}

Despite their generality, \glspl{mdp} cannot grasp the complexity of many sequential decision-making problems. 
This is, for example, the case of \gls{pomdp}.
In this chapter, we are interested in extending the original framework to include a notion of delay. 
This delay can arise in the observation of the state, in the execution of the action, or in the collection of rewards.
A state observation delay means that the agent sees a state which is not its current state but an older state of the environment.
Nonetheless, it must select an action that will be applied to the current unobserved state. 
In the case of action execution delays\footnote{We borrow the denomination from \cite{derman2021acting}.}, the agent selects an action that will be applied to a state in the future. 
Instead, the action that applies to the current state will be an action the agent had selected in the past. 
As we will see later, these two first types of delay are similar. However, the last type of delay, reward collection delay, is slightly different. 
Reward collection delay means that the agent gets the reward for a given transition only some steps after this transition has occurred. 
It is especially important to take it into account to assign a reward to the correct transition.
This type of delay can therefore involve a credit assignment problem and is mostly studied in online approaches where it is important to learn efficiently as the rewards are collected. 

Delays are ubiquitous in applications of \gls{rl} and are generally not accounted for, which can lead to sub-optimal behaviour. 
Notably, training on simulations for testing on real-life problems is a typical setting where delays can be harmful. 
It is common to overlook delays introduced by physical sensors or actuators in simulation. 
Yet, they may be a source of performance drop when going from simulation to reality. 

In this chapter, we first propose a notation to unify the problem of delay in \gls{rl}.  
We define delayed \glspl{mdp}, an extension of the \gls{mdp} framework that introduces delays in the sequential decision process. 
Then, we present a collection of the main types of delays encountered in the literature and in practice.
Finally, we discuss the literature in delayed \gls{rl} and related areas.
This overview of the literature is lacking, and overlapping results are common in previous work.

\section{Notations}
\label{subsec:delay_notation}
We now provide a general definition of delayed \gls{mdp} which includes most of the types of delay encountered in the literature as particular cases. 
We note \gls{dmdp} a delayed \gls{mdp}\footnote{The notation DMDP may seem more relevant, but it has been used extensively already in the literature to refer to, for example, deterministic \glspl{mdp} or discounted \glspl{mdp}. 
The acronym is also justified by the notation of the delay used in this work.}.
A \gls{dmdp} stems from an \gls{mdp} endowed with three sequences of variables: action execution delays $(\delay_t^a)_{t\in\naturalnumbers}$; state observation delays $(\delay_t^s)_{t\in\naturalnumbers}$; reward collection delays $(\delay_t^r)_{t\in\naturalnumbers}$. 
Note how this notation highlights the potential time dependence of the delay.
In the literature, the delay is usually assumed to be a Markovian process, that is, $\delay_t \sim P(\cdot\vert \delay_{t-1}, s_{t-1}, a_{t-1})$. 
Note that this definition
includes state-dependent delays when $\delay_t \sim P(\cdot\vert s_{t-1})$,
Markov chain delays when $\delay_t \sim P(\cdot\vert\delay_{t-1})$ and stochastic
delays when $(\delay_t)_{t\in\naturalnumbers}$ are i.i.d.

\section{Nature of Delays}
\label{sec:nature_delays}

Many sub-cases can be studied from the general definition of a \gls{dmdp}. 
This section aims to give a bestiary of delays encountered in the literature or in practical \gls{rl} applications. We first present the different properties that can define a delay.
This section will be concluded with a presentation of the assumptions about the delay that will be made throughout this dissertation.

\begin{figure}[t]
    \centering
    \includegraphics[bylabel=undelayedmdp]{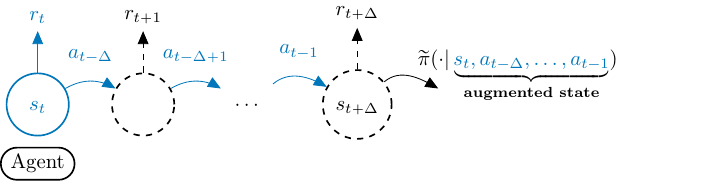}
    \caption{Representation of an undelayed \gls{mdp}.}
    \label{fig:undelayed_mdp}
\end{figure}

\subsection{Constant and Stochastic Delays}
\label{subsec:constant_stoch_delays}
As explained in the introduction, the delay is a random variable, but it is common to assume constant delays in the literature. 
This simplified model can be realistic enough for many applications, particularly if the potential stochasticity of the delay is negligible compared to the process' step size.
For example, \cite{ramstedt2019real} consider a 1-step constant delay for autonomous driving simulation and continuous robotic locomotion control; \cite{hester2013texplore} consider a 2-step delay for driving an autonomous vehicle; \cite{walsh2009learning} consider various delays between 1 and 20 on simple simulated tasks, including a grid world.
The delay can even go up to 63 steps for active flow control \cite{mao2022active}.
However, for some applications, the stochasticity of the delay must be modelled. 
In \cite{bouteiller2020reinforcement}, a stochastic delay is caused by the transmission of the policy via WiFi to a flying robot and the sending of the observation back.
\cite{campbell2016multiple} study the effect on Q-learning of Poisson distributed delay. 
In general, even though the real delay is not generally constant, the assumption of constant delay is made to simplify the problem, provided its variations in delay are not critical to the model's performance.

\subsection{State Observation and Action Execution Delays}
\label{subsec:state_action_delays}
\emph{State observation delay} manifest itself when the agent is no longer aware of the current state but has access to the past states of the \gls{mdp}. 
This contrasts with \gls{pomdp}, where past states are usually never disclosed to the agent, but instead, some partial information about the current state is revealed. \Cref{fig:state_obs_delayed_mdp} shows an example of state observation delay as opposed to an undelayed \gls{mdp} represented in \Cref{fig:undelayed_mdp}. 
In the figure, the reader may also notice the mention of \emph{augmented state}, which is the concatenation of the last observed state, $s_{t-\delay}$, and all the actions that the agent has selected in the past but whose outcome the agent has not to observed yet, $a_{t-\delay},\dots,a_{t-1}$. 
This notion is central to delayed reinforcement learning. 
Indeed, by considering the augmented state, the agent gathers all the possible information about the delayed process. 
Considering older states or actions than $s_{t-\delay}$ and $a_{t-\delay}$ would be redundant under the Markovianity property.

\emph{Action execution delay} occurs when the agent, albeit observing the current state, selects an action that will be executed only a few steps away from now in the future.
\Cref{fig:action_exec_delayed_mdp} shows an example of an action execution delay.
Note also the definition of the augmented state in this case.
Similarly to state observation delays, the augmented state exhaustively describes the delayed environment.
Another interesting point is about the time index. 
For action execution delay, the agent chooses at time $t$ an action labelled $a_t$. 
Due to the delay, this action will be applied at time $t+\delay$, to state $s_{t+\delay}$.
This is in contrast to the state observation delay, where the agent selects an action $a_t$ at time $t$ that will apply to the currently unobserved state $s_t$.
For constant delay, one could shift the action index by $\delay$ to ensure that action $a_t$ applies to $s_t$ as in an undelayed \gls{mdp}. 
However, this is only possible for constantly delayed \gls{mdp}. 
If the delay happened to be stochastic, then, the label of the action selected by the agent at time $t$ would be stochastic.
Moreover, if two actions applied at the same time due to the random delay, they would be given the same index. 
This doesn't seem to be a great nomenclature.
Therefore, in any case, we use the time when the agent selected an action for the index of this action.
Another advantage of this notation is the fact that the augmented state is the same in the case of state observation delay and action execution delay as evidentiated by \Cref{fig:state_obs_delayed_mdp} and \Cref{fig:action_exec_delayed_mdp}.

\begin{figure}[t]
    \centering
    \includegraphics[bylabel=stateobsdelayedmdp]{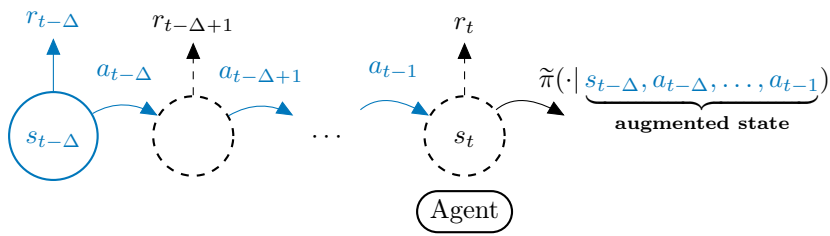}
    \caption{Representation of a \gls{dmdp} with state observation delay.}
    \label{fig:state_obs_delayed_mdp}
\end{figure}

\begin{figure}[t]
    \centering
    \includegraphics[bylabel=actionexecdelayedmdp]{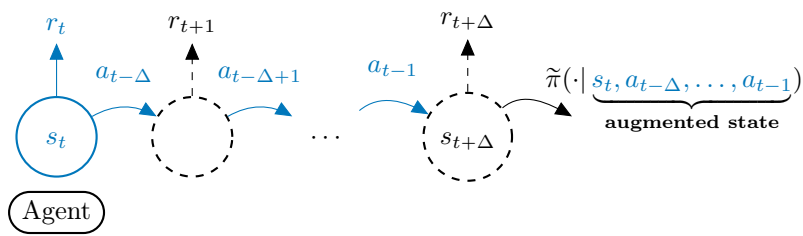}
    \caption{Representation of a \gls{dmdp} with action execution delay.}
    \label{fig:action_exec_delayed_mdp}
\end{figure}

As we shall see later, the action execution and the state observation delays are equivalent, and their effect adds up (\Cref{subsubsec:equiv_action_state_delays}).
That is why we have intentionally represented the two types of delay in an analogous way in the figures presented above.
Their augmented state follows a similar construction and contains the same number of terms for the same amount of delay.
However, as we shall discuss now, there are some minor differences that do not impact theoretical analysis if confusion is avoided.
As noted by \cite{chen2021delay}, a little complication could arise regarding the action execution delay; the delay can be divided into two parts: action selection and action actuation. 
The first amounts to the time it takes for the agent to compute its action after observing the last state.
The second is the time it takes for the agent to effectively apply this action in the environment. 
The latter source of delay is what is traditionally called action execution delay, whereas the former source of delay could be problematic in practice.
Indeed, if the environment provides a new state to the agent while it is still computing its previous action--which occurs when the action selection delay is longer than the step of the environment--then the agent does not yet know its previous action.
This is a problem for augmented approaches that we have introduced previously and which we will discuss in more detail in \Cref{sec:related_works}.
These approaches base their action selection scheme on the previous actions.
Therefore, here, an action is missing from the set of previous actions.
The agent could wait before computing the next action, but then the delay will accumulate, and the number of missing actions will increase.
To avoid such a scenario, we assume that the action selection delay is smaller than the step of the \gls{mdp} in this dissertation\footnote{If this is not the case in practice, then the action could be persisted for more steps to artificially create an \gls{mdp} with a longer step duration as in \cite{metelli2020control}.} as done in \cite{chen2021delay}.
Another difference between state and action delays is the effect of reward collection delays on state or action delays. 
As noted by \cite{katsikopoulos2003markov}, in the case of state observation delays, having no reward delay implies that the agent observes the outcome of a random variable correlated with the unobserved current state.
In fact, the current reward is a function of the unobserved current state.
Said alternatively, this setting provides partial information on the current unobserved state. 
Conversely, in the case of action execution delay, having no reward delay does not provide information to the agent since the most up-to-date information, the current state, is already known.

\subsection{Reward Collection Delays}
\label{subsec:reward_delay}
A disambiguation might be needed. 
In the \gls{rl} literature, two different, although related, problems might be referred to as having delayed reward. 
In many real-life problems, it can be difficult to define a per-step reward, and instead, it can be easier to concentrate all the rewards in a single step \cite{han2022off}. 
For instance, it is hard to design a reward for each move in a chess match, while a reward of 0 or 1, depending on the result of the game, is much simpler.
This is the first type of delayed reward and can also be called \emph{sparse reward}. 
It usually involves a problem of exploration of the \gls{mdp}.
Instead, the second type of reward delay, which we call \emph{reward collection delays} appears when the reward associated with a transition--here, there is no problem in defining the per-step reward--is collected only after some steps have passed. 
As we will see, this type of delay is usually not relevant to \gls{rl} algorithms' final performance. 
Instead, it has an impact on learning speed.
The main difference between these two settings, therefore, lies in how the reward is designed in the first place. 
However, the first setting can be seen as an instance of the second, where all rewards are delayed and collected simultaneously in the final step.

\subsection{Anonymous and Non-Anonymous Delays}
\label{subsec:anonymous_delays}
A distinction can be made between \emph{anonymous} and \emph{non-anonymous} delays. 
When the delay is anonymous, the agent does not have access to the amount of the delay; the timestamp associated with the observed state may not be disclosed to the agent.
Therefore, the latter is no longer aware of which state is the most recent from all the states it observes; the action that has effectively been executed at the current time is unknown; the transition which generated the collected reward is not accessible. 
Therefore, anonymity raises credit assignment problems.
As we will see later (see \Cref{sec:related_works}) and to the best of our knowledge, there are no works in the literature that consider anonymity for state observation or action execution delays in \gls{rl}. 
Anonymous delays are considered in the case of reward collection delays, however.

\subsection{Non-Integer Delays}
\label{subsec:non_int_delays}
In this subsection, we describe how the delay could amount to a non-integer number of steps of the environment.
For simplicity, we assume constant delays.
For clarity of the exposition, we assume $\delay\in(0,1)$ but the general case of $\delay\in\realnumbers[\ge0]$ is a simple extension of this framework.
We consider an action execution delay, but, as we will later see, the state observation delay is equivalent and can be similarly defined. 

To build a process with non-integer delay, one can start from a continuous-time stationary \gls{mdp}. 
As in \cite{sutton1999between}, if the agent can only observe the environment every unit of time, that is, at times $0,1,\dots,t,t+1,\dots$, then the discrete process that arises is a discrete-time \gls{mdp}. 
We call it $\markovdecisionprocess$. 
If, instead, the agent observes at the same frequency but with a phase shift of $\delay$, the time indices become $\delay,1+\delay,\dots,t+\delay,t+1+\delay,\dots$. 
Similarly, this defines a discrete time \gls{mdp} that we call $\markovdecisionprocess_\delay$. 
By the stationarity of the underlying continuous-time \gls{mdp}, $\markovdecisionprocess$ and $\markovdecisionprocess_\delay$ have the same transition and reward distribution. 
We now show how to build a $\delay$-delayed process using these two interleaved \glspl{mdp}.

Due to the action execution delay, the agent observes a state $s_t$ from $\mathcal{M}$ but its action $a_t$ will apply to a state $s_{t+\delay}$ of $\mathcal{M}_\delay$ that yields a reward of $r(s_{t+\delay},a_t)$. 
Transition probabilities are, of course, also affected. 
Anticipating the notation of \Cref{subsubsec:theory_augmented_space}, we define $\augmentedbelief(s_{t+\delay}\vert s_t,a)$ to be the probability of the transition from $s_t$ to $s_{t+\delay}$ after applying the action $a$ for $\delay$ units of time in the continuous time \gls{mdp}.

Because the continuous time underlying \gls{mdp} is stationary, one has $\forall(s,a,s')\in\statespace\times\actionspace\times\statespace$,
\begin{align}
    p(s'\vert s,a)=\int_{\statespace}   \augmentedbelief[1-\delay](s'\vert z,a)\augmentedbelief(z|s,a)\de z.\label{eq:def_p_non_int}
\end{align}
A visual representation of non-integer delays is given in \Cref{fig:non_int_delay_def}.

\begin{figure}[t]
    \centering
    \includegraphics[labelimitation=nonintdelayedmdp]{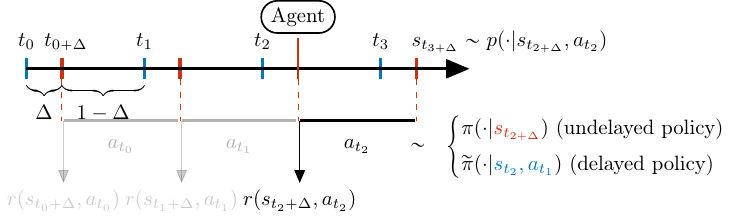}
    \caption{Representation of a \gls{dmdp} with non-integer action execution delay.}
    \label{fig:non_int_delay_def}
\end{figure}

\subsection{State-Dependent and Action-Dependent Delays}
In this setting, the delay is no longer independent of the agent's behaviour. 
Either directly through the action or indirectly through the state, the agent might change the value of the delay.
To the best of our knowledge, no works in the \gls{rl} literature deal with state-dependent or action-dependent delays, yet practical applications inspire this problem. 
For example, using more complex policies to select actions might provide a better return if the computational time of such policies is reasonable. When this computational time grows too large, using a lighter policy to select actions may be reasonable. 
As another example, taking the setting of WiFi transmission to a flying robot of \cite{bouteiller2020reinforcement}, the delay could well depend on the distance from the robot to the policy computer.

\subsection{On the Initialisation of Delayed Process}
\label{subsec:delay_init}
The reader may wonder how a delayed process is initialised.
For instance, in a state observation delay, how is the first set of states defined so that the agent effectively observes a delayed state from the beginning? 

This problem may seem superficial, but it can have important practical effects, as we will see later. 
If only the collection of the reward is delayed, then keeping the initialisation of the underlying \gls{mdp} does not pose a problem. 
Instead, for action execution and state observation delays, initialisation implies that the initial $\delay$ actions have to be sampled in some way before the agent is allowed to select any action. 
An arbitrary but reasonable sampling scheme for discrete or bounded action sets is to sample uniformly from this set. 
This is the approach we have followed in all the experiments of this dissertation. 

Note that depending on how the initial actions are sampled and whether the rewards collected in this initial phase are counted or not, the final return of the agent may change substantially.
As an illustration, consider the following simple example of an \gls{mdp} where the reward is equal to the speed of the agent and where there is only one action: ``accelerating'' of a fixed quantity. 
In the \gls{dmdp}, the initialisation implies that the action ``accelerating'' is applied several times before the control is left to the agent.
This means that in its current unobserved state, the agent has already accumulated speed compared to the undelayed agent, whose initialisation sets him at speed 0.
Even if the rewards in the initialisation are not counted, the delayed agent starts with more speed and therefore collects higher rewards than the undelayed one.
Besides, if the rewards are counted during the initialisation, then the delayed agent accumulates even more reward.
Therefore, the way the rewards are counted can drastically change the return of the delayed agent with respect to the undelayed one.
We call this effect the \emph{delay initialisation shift}.

\section{Related Works}
\label{sec:related_works}

The following related works provide theoretical and practical solutions to the discounted return case in the presence of various types of delay with the exception of reward collection delays~(\Cref{subsec:reward_delay_related}) which mainly focuses on average rewards. 
We start by describing the three main approaches from the literature to tackle constant delays in state observation or action execution. 
These are the augmented state approach (\Cref{subsec:augmented_related}), the memoryless approach (\Cref{subsec:memoryless}) and the model-based approach (\Cref{subsec:model_based_related_works}).
Then, we extend our overview to include a wider spectrum of delays, still concerning action execution and state observation; research on stochastic delays is presented in \Cref{subsec:stochastic_related} and on non-integer delays in \Cref{subsec:non_int_related}.
Finally, we discuss the reward collection delays in \Cref{subsec:reward_delay_related}.

\subsection{Augmented State Approaches for Constant Delays}
\label{subsec:augmented_related}

\subsubsection{Definition of an Augmented State}
\label{subsubsec:theory_augmented_space}
This section introduced an important concept for \emph{constant} state observation and action execution delays.  
Respectively, \cite{bertsekas1987dynamic} and \cite{altman1992closed} showed that in the case of action execution and state observation delays, the \gls{dmdp} can be reduced to an \gls{mdp} by augmenting the state space as follows. Consider an action execution or state observation delay $\delay$; let $(a_1,\dots,a_{\delay})$
be the last $\delay$ actions that the agent has already selected but whose outcomes it has not yet observed, because of the said delay; let $s$ be the last state the agent has observed. 
Then \emph{augmented state} $x=(s,a_1,\dots,a_\delay)$ contains all the information that the agent can possibly gather at that time. 
Any further information would be redundant under the Markovian property of the underlying \gls{mdp}.
The result of \cite{bertsekas1987dynamic,altman1992closed} is that this new process, obtained by augmenting the state, is indeed an \gls{mdp}.
Its new state now lives in $\augmentedstatespace=\statespace\times\actionspace^\delay$. 
Note the exponential dependency on the delay. 
We note $\delayedpolicyspace$ the set of policies which depends on the augmented state or on the history of augmented states. 
Said alternatively, $\delayedpolicyspace$ contains any policy from $\policyspace$ that does not depend on the information that is more up-to-date than the information contained in the augmented state.
The augmented state and how the policy is based on it is represented in \Cref{fig:state_obs_delayed_mdp} and \Cref{fig:action_exec_delayed_mdp}.
Let us now characterise this process. For some augmented states $x,x'\in\augmentedstatespace$, such that $x=(s,a_1,\dots,a_{\delay})$ and $x'=(s',a_1',\dots,a_{\delay}')$ and for some action $a\in\actionspace$, the  
new transition function reads\footnote{The notation "~~$\widetilde{}$~~" will be used to refer to delayed quantities.}
\begin{align*}
    \augmentedtransitionfunction(x'\vert x,a) = p(s'\vert s,a_1)\delta_a(a_{\delay}')\prod_{i=1}^{\delay-1}\delta_{a_{i+1}}(a_{i}'),
\end{align*}
where the product of Dirac functions ensures that the sequence of actions is correctly passed from an augmented state to the next. 
The reward function reads,
\begin{align}
\label{eq:delayed_reward_deterministic}
    \augmentedexpectedrewardfunction(x,a) = r(s,a_1).
\end{align}
As shown by \cite[Lemma~1]{katsikopoulos2003markov}, an equivalent formulation could be
\begin{align}
\label{eq:delayed_reward_expected}
    \augmentedexpectedrewardfunction(x,a) = \expectedvalue_{z\sim \augmentedbelief(\cdot\vert x)} [r(z,a)],
\end{align}
where $\augmentedbelief$ is the probability of the current unknown state $z$ knowing the augmented state $x$. We refer to this probability as \emph{belief}, for its similarity to the concept of belief in the \gls{pomdp} literature. 
Intuitively, since the current unobserved state distribution is entirely known given the sequence of actions and the last observed state, the first process will eventually collect the same reward in expectation as the second, after $\delay$ steps.
Therefore, the optimal policies in both processes correspond. 
However, note that due to the delay initialisation shift discussed in \Cref{subsec:delay_init}, the two processes can have different returns. 

For completeness, we provide the expression of the belief of the unobserved current state $s_{\delay+1}$ knowing the augmented state $x=(s_1,a_1,\dots.a_\delay)$,
\begin{align}
\label{eq:belief_def}
    \augmentedbelief(s_{\delay+1}\vert x)=\int_{\statespace^{\delay-1}}p(s_{\delay+1}\vert s_\delay,a_\delay)\prod_{i=2}^\delay p(s_i|s_{i-1},a_{i-1})\de s_{2:\delay}.
\end{align}

\subsubsection{Equivalence of Action Execution and State Observation Delays}
\label{subsubsec:equiv_action_state_delays}
Another important result of constant action execution and state observation delays is due to \cite{katsikopoulos2003markov}. 
It states that these two types of delay are two faces of the same coin. 

\begin{prop}[Equivalence of state observation and action execution delays, Result~1 of \cite{katsikopoulos2003markov}]
    A \gls{dmdp} with constant action execution delay $\delay^a$ and constant state observation delay $\delay^s$ can be reduced to an \gls{mdp} by augmenting the state with the last $\delay^s+\delay^a$ actions.
\end{prop}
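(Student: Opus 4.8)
The plan is to exhibit, exactly as in the single-delay reductions of \cite{bertsekas1987dynamic} and \cite{altman1992closed}, the information that is genuinely available to the agent at a decision epoch, to see that it amounts to ``a state plus $\delay^s+\delay^a$ committed actions'', and to check that this augmented state evolves Markovianly. At the instant the agent picks a new action, it has observed $s_{t-\delay^s}$ (state observation delay) and the action it now selects will be actuated only on $s_{t+\delay^a}$ (action execution delay); between the last observed state and the last state the agent can still influence there are exactly $\delay^s+\delay^a$ transitions, each driven by one of the actions $a_{t-\delay^s-\delay^a},\dots,a_{t-1}$ already chosen but whose effect is not yet ``resolved''. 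This singles out the candidate
\begin{align*}
    x_t\;=\;\bigl(s_{t-\delay^s},\,a_{t-\delay^s-\delay^a},\,\dots,\,a_{t-1}\bigr),\qquad x_t\in\augmentedstatespace\coloneq\statespace\times\actionspace^{\delay^s+\delay^a}.
\end{align*}

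\textbf{Direct verification.} Writing $x=(s,a_1,\dots,a_{\delay^s+\delay^a})$ and $x'=(s',a_1',\dots,a_{\delay^s+\delay^a}')$, I would check that the joint law of the next augmented state and the reward, conditioned on the full interaction history, depends on the past only through $(x_t,a_t)$, the transition kernel being the shift-and-append kernel of the single-delay case with $\delay$ replaced by $\delay^s+\delay^a$,
\begin{align*}
    \augmentedtransitionfunction(x'\vert x,a)\;=\;p(s'\vert s,a_1)\,\delta_a(a_{\delay^s+\delay^a}')\prod_{i=1}^{\delay^s+\delay^a-1}\delta_{a_{i+1}}(a_{i}').
\end{align*}
Indeed the newly observed state $s_{t-\delay^s+1}$ is the result of applying the action actuated at time $t-\delay^s$, namely $a_{t-\delay^s-\delay^a}$, to $s_{t-\delay^s}$, so $s_{t-\delay^s+1}\sim p(\cdot\vert s_{t-\delay^s},a_{t-\delay^s-\delay^a})$ with both arguments components of $x_t$; by the Markov property of the underlying \gls{mdp} this is independent of the rest of the history given $x_t$. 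The remaining committed actions merely shift one slot and $a_t$ is appended. For the reward one sets $\augmentedexpectedrewardfunction(x,a)=r(s,a_1)$, equivalently the expectation form $\expectedvalue_{z\sim\augmentedbelief(\cdot\vert x)}[r(z,a)]$ of \Cref{eq:delayed_reward_expected}; either way this is a well-defined measurable reward on $\augmentedstatespace\times\actionspace$. Hence $(\augmentedstatespace,\actionspace,\augmentedtransitionfunction,\augmentedexpectedrewardfunction,\widetilde{\initialstatedistribution})$ is an \gls{mdp}, and the policies in $\delayedpolicyspace$ are exactly the policies on it, the correspondence preserving returns up to the delay initialisation shift of \Cref{subsec:delay_init}, which only fixes how the first $\delay^s+\delay^a$ actions are drawn.

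\textbf{Alternative via composition.} A shorter argument I would also mention composes the two known reductions. By \cite{bertsekas1987dynamic}, the action execution delay $\delay^a$ alone turns $\markovdecisionprocess$ into an \gls{mdp} $\markovdecisionprocess_1$ with state $y_t=(s_t,a_{t-\delay^a},\dots,a_{t-1})$ and unchanged action set $\actionspace$ (the ``action at time $t$'' of $\markovdecisionprocess_1$ being $a_t$, which enters $y_{t+1}$ with no further delay). The real agent knows not $y_t$ but only $y_{t-\delay^s}$ -- it has seen $s_{t-\delay^s}$ and recorded all the actions it has played -- so it faces $\markovdecisionprocess_1$ with a \emph{pure} state observation delay $\delay^s$; by \cite{altman1992closed} this becomes an \gls{mdp} whose state augments $y$ with the last $\delay^s$ $\markovdecisionprocess_1$-actions, i.e. $(y_{t-\delay^s},a_{t-\delay^s},\dots,a_{t-1})=(s_{t-\delay^s},a_{t-\delay^s-\delay^a},\dots,a_{t-1})$, which is precisely the $\delay^s+\delay^a$-action augmentation above.

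\textbf{Main obstacle.} The delicate parts are (i) the index bookkeeping in the direct route -- pinning down that exactly $\delay^s+\delay^a$ actions are ``committed but unresolved'' and that the kernel closes up, where an off-by-one is easy to make; and (ii) in the composition route, justifying that $\markovdecisionprocess_1$ -- itself already an augmented \gls{mdp} -- may legitimately be treated as the underlying \gls{mdp} of a fresh state-observation-delayed process, which requires noting that its actions are the original $\actionspace$, that its transitions satisfy the Markov property needed by \cite{altman1992closed}, and that the agent's information at time $t$ coincides with $\markovdecisionprocess_1$'s state $\delay^s$ steps in the past together with the actions played since. A point worth flagging, following \cite{katsikopoulos2003markov}, is that when $\delay^s>0$ and rewards are not delayed the observed reward carries partial information about the unobserved state; because the augmented reward is defined in expectation this does not break the Markov reduction, only the interpretation of the returns.
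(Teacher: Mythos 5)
Your proposal is correct, but note that the paper itself does not prove this proposition: it is recalled as Result~1 of \cite{katsikopoulos2003markov} and used as a citation, so there is no in-paper proof to match against. Both of your routes are sound. The composition argument (first absorb the action delay $\delay^a$ via \cite{bertsekas1987dynamic} into an \gls{mdp} with state $(s_t,a_{t-\delay^a},\dots,a_{t-1})$, then treat the remaining $\delay^s$ as a pure observation delay via \cite{altman1992closed}) is essentially the reduction strategy of the original reference, and your key justification — that the agent can reconstruct the intermediate augmented state $y_{t-\delay^s}$ from the observed $s_{t-\delay^s}$ and its own action record — is exactly the point that makes the second reduction legitimate. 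The direct verification is consistent with the paper's single-delay construction in \Cref{subsubsec:theory_augmented_space}: your indexing correctly identifies $a_{t-\delay^s-\delay^a}$ as the action driving the transition out of $s_{t-\delay^s}$, so exactly $\delay^s+\delay^a$ actions are committed but unresolved and the shift-and-append kernel closes. One small caution: the two reward formulations $r(s,a_1)$ and $\expectedvalue_{z\sim\augmentedbelief(\cdot\vert x)}[r(z,a)]$ are not pointwise equal but equivalent in the sense of \cite[Lemma~1]{katsikopoulos2003markov} (they credit the same rewards shifted in time, hence yield the same optimal policies, modulo the delay initialisation shift you already flag), so it is better to state the equivalence in that sense rather than as interchangeable definitions.
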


\subsubsection{Practical Applications}
A relatively straightforward line of research has thus been to consider the direct application of \gls{rl} algorithms to the augmented state \gls{mdp}.
This approach dates back to \cite{brooks1972markov}.
The theory also supports this approach as a way to cast the problem back to an undelayed \gls{mdp}, where the agent can achieve an optimal return for the original \gls{dmdp}. 
However, there is a hindrance; the growth of the augmented state space is exponential in delay since $\lvert\augmentedstatespace\rvert= \lvert\statespace\rvert\lvert\actionspace\rvert^\delay$ \cite{walsh2009learning}. 
This can drastically affect learning speed. 
Nonetheless, more recent works propose revisiting the approach to accelerate learning. 
It is possible, for instance, to modify the actions inside the action buffer of the augmented state in order to simulate the effect of applying a different policy without requiring additional sampling in the environment. 
This ingenious technique, proposed by \cite{bouteiller2020reinforcement}, provides much better sample efficiency. 
Their algorithm, Delay-Correcting Actor-Critic (DCAC) builds on \gls{sac} \cite{haarnoja2018soft}  to which it adds the above idea. 
The experimental results demonstrate very clearly the efficiency of the approach.
The \gls{sac} algorithm seems particularly adapted to the problem of delays, and its core concepts have also been included in other approaches such as RTAC \cite{ramstedt2019real}, an actor-critic algorithm that uses the augmented state as input. 

The augmented state approach is also used in real-time \gls{rl}.
In \cite{xiao2020thinking}, the authors derive theoretical results for continuous-time \gls{rl} where the time required for action selection is taken into account. 
In this setting, too, augmenting the state brings back the properties of a continuous-time undelayed \glspl{mdp}.
Interestingly, the empirical study suggests that extra information--not required in the augmented state and thus in theory--provides higher returns.
Said alternatively, additional information, although theoretically redundant, might make the problem simpler from a learning point of view.

\subsection{Memoryless Approaches for Constant Delays}
\label{subsec:memoryless}
Inspired by the \gls{pomdp} literature, a second line of search is interested in applying \gls{rl} algorithm to the memoryless process. We refer to this approach as the \emph{memoryless} one.
This process considers only the last observed state and discards the possible information contained in the sequence of actions contained in the augmented state. 
However, note that the approach can still take into account the delay in some way, as dSARSA \cite{schuitema2010control}. 
Interestingly, the authors have been inspired to apply this variation of SARSA \cite{rummery1994line} to a memoryless state by the good results that SARSA has obtained on \glspl{pomdp} problems \cite{loch1998using}. 
The dSARSA algorithm takes into account the delay during the TD update. Let $t$ be the current time.
While the traditional version of SARSA would assign the reward collected at time $t$ to the last selected action $a_t$ and the last observed state $s_{t-\delay}$, dSARSA instead assigns this reward to the same last observed state, but associated with the oldest action in the augmented state $s_{t-\delay}$. 
In fact, this is the action applied at $s_{t-\delay}$. 
This modification has the effect of recovering the definition of the reward of \Cref{eq:delayed_reward_expected}.
The experiments provided by \cite{schuitema2010control} show that 
dSARSA performs well in practice.

In the real-time \gls{rl} literature, \cite{hester2013texplore} consider Monte Carlo Tree Search planning to simulate trajectories given the last observed state. 
Note that the algorithm is specifically designed to reduce the delay in action selection by explicitly separating action selection from the model learning phase.
In fact, the real-time literature \gls{rl} has an important difference from delayed \gls{rl} in that modifying the online algorithm used to learn the policy can reduce the delay resulting from it \cite{hester2012rtmba,caarls2015parallel,ramstedt2019real}. 
Instead, in this dissertation, we consider that the delay is a feature of the environment and that the agent has no impact on it. 

\subsection{Model-Based Approaches for Constant Delays}
\label{subsec:model_based_related_works}
The last approach encountered in the literature is what we will refer to as \emph{model-based}.
The idea is to address the computational cost induced by the exponential dependence of the state on delay $\delay$. To alleviate this cost, these approaches compute, from the augmented state, some substitute for the current unobserved state and use it as input to the policy. 
This way of anticipating the future, or rather the unobserved present, is relatively natural. 
As noted by \cite{firoiu2018human} who cite experimental psychology results, the brain accounts for the potential delay between observation and actuation by extrapolating the future position of moving objects \cite{nijhawan1994motion}.
The aforementioned state substitute typically has a much smaller dimension than the augmented state.
It can be any statistics of the current state, such as its expected value or mode. 
The name model-based stems from the fact that these approaches learn a model of the environment's dynamics in order to recursively simulate the effect of the actions contained in the buffer and obtain some prediction over the current state.

Related works have evaluated learning the most probable current state \cite{walsh2009learning} to use it as input to the policy. In the case of deterministic \gls{mdp}, assuming that the model is perfect, the agent could compute the exact current state and apply the optimal undelayed policy. 
Therefore, the delayed policy thus obtained would match the performance of the undelayed one. 
It is a well-known fact in the delay literature that deterministic \glspl{mdp} induce \glspl{dmdp} that have the same optimal return. 
The stochasticity of the underlying \gls{mdp} is responsible for weaker optimal performance in the delayed case. 
The conjoined effect of stochasticity and delay is well illustrated in \cite[Remark 3.1]{derman2021acting}.
In \cite{walsh2009learning}, the authors extend their algorithm to \emph{mildly stochastic} \glspl{mdp}--\glspl{mdp} where there exists $0\le\epsilon<1$ such that $\forall (s,a)\in\statespace\times\actionspace, \exists s', p(s'\vert s,a)\geq 1-\epsilon$. 
Under this assumption, the expected discounted value function of the delayed policy, $\widetilde{V}^{\pi}$, has the following guarantee \wrt the undelayed policy, $V^{\pi}$, it is based upon: 
\begin{align*}
    \lVert \widetilde{V}^{\pi}-V^{\pi}\rVert_{\infty}\leq \frac{\gamma\epsilon \maximumreward}{(1-\gamma)^2},
\end{align*}
where $\maximumreward$ is the maximum absolute reward.
We note that the assumption of mildly stochastic \gls{mdp} is quite strong and that the bound grows quadratically in the effective horizon $\textstyle\frac{1}{1-\gamma}$.

More recent approaches try to learn the current state using \glspl{nn}, with linear \cite{derman2021acting} or recurrent layers \cite{firoiu2018human}. 
This latter work trains the model of the environment to predict the current state component-wise, using $L2$ loss for continuous elements and the cross-entropy for discrete ones.
The model, therefore, learns to predict the expected value of continuous elements. 
The prediction is then fed to the undelayed \gls{rl} algorithm IMPALA \cite{espeholt2018impala}.
In \cite{derman2021acting}, the predicted state is used as input for Q-learning \cite{watkins1989learning}.  

Using the Q-function as well, the approach of \cite{agarwal2021blind} differs from the previous one in an interesting manner. 
The authors propose to select an action that maximises the expected value of some undelayed Q-function over the distribution of the current unobserved state.
This distribution is given by the belief, which is obtained from the augmented state as in \Cref{eq:belief_def}.
The algorithm, called Expectation Maximization Q-Learning (EMQL), obviously suffers from the fact that the Q-function is computed for an undelayed policy and it is not said that the obtained delayed policy may gather the same reward. 
However, the authors provide theoretical results showing that such a delayed policy $\delayedpolicy$ has the following lower bound guarantee on its value function $\delayedvaluefunction$ taken in some augmented state $x\in\augmentedstatespace$,
\begin{align*}
    \delayedvaluefunction^\delayedpolicy(x) \ge \expectedvalue_{s\sim \augmentedbelief(\cdot\vert x)}\left[V^{*}(s)\right] - \frac{\maximumreward}{(1-\gamma)^2}\left(1-\frac{1}{\cardinal{\actionspace}}\right),
\end{align*}
where $V^{*}$ is the value function of the optimal undelayed policy.

Going further, \cite{firoiu2018human} note that the learnt model could be better used, for instance, to perform planning beyond the current unobserved state.
This is precisely what \cite{chen2021delay} have implemented. 
In their approach, Delay-Aware Trajectory Sampling (DATS),
the model of the \gls{mdp} is learnt as an ensemble
of Gaussian distributions represented by probabilistic neural networks. 
The model is then queried to sample the current unobserved state but also beyond, in order to plan for an action sequence several steps after this current state.  
The approach is based on the probabilistic ensemble
with trajectory sampling (PETS) \cite{chua2018deep}, a model-based approach that has achieved great performances compared to model-free ones such as \gls{ppo}. 
The advantage of PETS in the delayed setting is even clearer.

\subsection{Stochastic Delays}
\label{subsec:stochastic_related}
While most of the aforementioned works assume a constant delay, some also tackle stochastic delays.
\cite{agarwal2021blind} consider delays sampled following a geometric distribution. 
This implies that newer observations may anticipate older ones. 
Consequently, older observations might be collected when fresher information is already available. 
This may be problematic, as the agent loses count of the most up-to-date observation and uses older information instead. 
This phenomenon would lead to the problem of anonymous delay.
To avoid it, \cite{agarwal2021blind} provides the agent with the time stamps of the observations in order to recover the non-anonymity.
Stochastic delays are also studied in \cite{bouteiller2020reinforcement}, where state and action delays follow a bounded discrete distribution. 
As a solution, the authors propose to augment the state to cast the problem back to an \gls{mdp}, similarly to the constant delay case.
However, not only the state is augmented with the last $\delay^a+\delay^s$ actions--to account for the total equivalent delay--but it is also augmented by the current value of both the state delay and the action delay. 
Although this information is not useful in the case of constant delay, here it can be used by the agent to learn about the delay process.
Their DCAC algorithm obtains great empirical results when tested against realistic stochastic delays due to WiFi communication.

\subsection{Non-Integer Delays}
\label{subsec:non_int_related}
To the best of our knowledge, only one work has considered the non-integer delay problem in the \gls{rl} literature. 
In \cite{schuitema2010control}, a paper that has already been discussed in memoryless approaches, the authors show how to adapt their algorithm to this problem.
An important assumption for their results is that the transition of the continuous-time process can be linearised at any state $s$ and for any action $a$, that is, there exist some matrices $B$ and $C$ such that,
\begin{align*}
    \dot{s}(t)=Bs(t) + Ca(t).
\end{align*}
For a delay $\delay\in(0,1)$, at time $t$, the action $a_{t-1}$ is still being applied, and the action that the agent chooses at $t$ will start to be applied at $t+\delay$. 
Then, under the linearisation assumption, the effective action between $t$ and $t+1$ will be $\hat{a}_t = \delay a_t + (1-\delay) a_{t+1}$. 
Therefore, \cite{schuitema2010control} propose to update the Q-function of the pair $(s_t,\hat{a}_t)$ in this case. 
Experimental results show that this modification helps dSARSA learn to reach the goal state faster than SARSA, even if the latter is provided with the augmented state.

\subsection{Reward Collection Delays}
\label{subsec:reward_delay_related}
As we have anticipated in \Cref{subsec:delay_in_this_thesis}, the problem of delayed reward collection is beyond the scope of this thesis. However, we provide some references for the interested reader. 
A delayed reward has an impact on performance mainly if one is interested in the performance during the learning phase. 
Instead, if one is only interested in the final performance of the agent, having a delayed reward is not highly important as long as it is non-anonymous. 
The reward will eventually be credited to the correct transition. 
All practical offline \gls{rl} algorithms will not be affected by this problem. 

Therefore, it is natural to observe that the delay in reward collection has been mainly discussed in the \emph{online learning} literature.
This field considers an agent who has to choose repeatedly between $K$ arms. 
Each arm provides a reward to the agent which can be either generated in a stochastic or adversarial manner. 
The environment has no state and the agent can always choose from the $K$ arms at any time. 
It is possible to represent this framework as an \gls{mdp} with a unique state, \ie $\cardinal{\statespace}=1$. 
When the agent can only observe the reward of the arm it has chosen and not the reward of the other $K-1$ arms, the setting is called a \emph{multi-armed bandit} \cite{lattimore2020bandit}. 
When the reward is stochastic, the bandit is called \emph{stochastic bandit} while if it is adversarial, it is referred to as \emph{adversarial bandit}.
In this field, an important concept is \emph{regret}, that is, the expected difference between the reward collected by the learning agent and the best policy in hindsight. 

In the bandit literature, \cite{joulani2013online} have shown that, provided that its expected value is finite, the delay has an additive effect on regret in the stochastic bandits setting and a multiplicative effect in the adversarial bandits one.
Their solution is based on the famous concept of Upper Confidence Bounds (UCB) \cite{auer2002finite} and extends it simply by omitting rewards that have not yet been collected.
Many interesting directions have been taken from there. 
For example, \cite{lancewicki2021stochastic} study the effect of a delay that depends on the current reward in stochastic bandits. 
The authors study the case of possibly unbounded support and expectation distributions and derive an algorithm with an additive penalty depending on the quantile of the delay in the regret. 
This bound is also close to the lower bound proved by the authors. 
In their solution, \cite{lancewicki2021stochastic} use \emph{successive elimination}, an algorithm that eliminates arms once the confidence that they are sub-optimal is high enough. Interestingly, they show that this algorithm has greater guarantees than UCB-type algorithms when applied to delays. 
In \cite{romano2022multi}, a new delay setting is proposed.
The reward from pulling an arm can be spread--or delayed--through multiple future steps. 
This property is named \emph{temporally-partitioned rewards} by the authors.
Therefore, the agent observes several partial rewards--or per-round rewards--at each step, coming from multiple past pulls. 
Note that \cite{romano2022multi} consider the case in which the agent is aware of which arm generated each of these partial rewards. 
Said alternatively, the setting is that of non-anonymous delays.
To deal with temporally partitioned rewards, the authors assume a maximum delay and consider that the reward spreads smoothly across the steps until the maximum delay. 
The solution they propose builds an upper confidence bound on the estimated arm's expected reward, where the estimator assumes a 0 per-round reward for unobserved outcomes.

The branch of \gls{rl} that is interested in the regret of the algorithm during the learning phase, theoretical \gls{rl}, has been interested in the problem of delayed reward collection more recently. 
\cite{howson2021delayed} show results for a wide range of classic algorithms such as UCRL \cite{auer2008near} in stochastic \glspl{mdp}. 
Interestingly, as in \cite{joulani2013online}, they show an additive regret proportional to the delay in most cases. 
In \cite{campbell2016multiple}, the authors use Q-learning to address the problem of a Poisson-distributed reward. 
The idea is to learn multiple Q-function, one for each potential value of the mean delay of the distribution, $\lambda$. 
In this way, each time a new reward is collected, all Q values can be updated as if the reward had been collected from a distribution with the corresponding mean. 
These Q-functions can be seen as a single Q-function where the input is augmented with the value of $\lambda$. 
In this way, the agent can select its next action taking the maximum over all $\lambda$ before selecting the action that maximises this particular Q value. 
The authors prove the convergence of this algorithm.

\subsection{Anonymous Delays}
\label{subsec:anonymous_delay_related}

First, we discuss anonymous reward collection delays.
In the bandit literature, \cite{pike2018bandits} study the challenging setting of anonymous delays in the stochastic bandit setting. 
If the expectation of the delay is known, the authors propose an algorithm that has an additive term in the delay that also depends logarithmically on $H$, the horizon. 
Interestingly, if the delay is bounded, \cite{pike2018bandits} prove a bound that matches that of \cite{joulani2013online}. 
Their solution repeatedly selects the same arm over a period of time, so that the probability that new rewards come from this arm increases. 
By carefully selecting the length of such a period, \cite{pike2018bandits} are able to derive confidence intervals on the expected reward of an arm. 
This enables the removal of suboptimal arms.

As we have seen in \Cref{subsec:reward_delay}, delayed reward in the literature may refer to credit assignment problems, where all rewards are assigned to a single transition to simplify the design of the environment. 
As an illustration, we report the example provided in \cite{gong2019decentralized} of a traffic congestion reduction problem. 
In this problem, defining a per-step reward for the impact on traffic congestion of a single traffic light is complex. 
However, it is easier to use the average routing time as a reward, but it is only accessible in the final step.
These credit assignment problems can be seen as instances of anonymous delays since the single-step rewards are not clear, and only an aggregated reward is provided in the end.
Many works in the literature deal with this problem, including, for instance, \cite{arjona2019rudder} or \cite{han2022off}. 
Yet, to the best of our knowledge, no work in the \gls{rl} literature considers the problem of well-defined per-step rewards with anonymous delays.

Finally, with respect to the state observation or action execution delays in \gls{rl}, we are not aware of any work.
This could indeed constitute an interesting research direction.

\subsection{Overview of Delays in Reinforcement Learning}
We give an overview of the literature on state observation and action execution delays in \Cref{tab:delay_bestiary}.
In \Cref{tab:delay_related} we provide a similar table but focus on the approaches that have been considered and the delay to which they have been applied. 
This allows one to see where results are missing.

\begin{figure}[t]
    \centering
    \begin{tikzpicture}[remember picture]
    \node[inner xsep=-\pgflinewidth,inner ysep=-\pgflinewidth] at (0,0) (mytable){%
    \begin{tabular}{L{c00}|T{c0}T{c1}T{c2}T{c3}T{c4}T{c5}T{c6}T{c7}T{c8}||T{c9}T{c10}}
        \cline{1-11}\\
        \cite{brooks1972markov} & \checkmark &-& \checkmark &-& $1$ & \checkmark &-&-&\checkmark&A\\
        \cite{walsh2009learning} & \checkmark &-& \checkmark &-& $1-20$ & \checkmark &-&-&\checkmark&Mo\\
        \cite{schuitema2010control} & \checkmark &-& \checkmark &-& $0-2$ & \checkmark &\checkmark&-&\checkmark&Me\\
        \cite{hester2013texplore} & \checkmark &-& \checkmark &-& $2$ & \checkmark &-&-&\checkmark&Me\\
        \cite{firoiu2018human}& \checkmark &-& \checkmark &-& $1-7$ & \checkmark &-&-&\checkmark&Mo\\
        \cite{ramstedt2019real} & \checkmark &-& \checkmark &-& $1$ & \checkmark &-&-&\checkmark&A\\
        \cite{bouteiller2020reinforcement} & \checkmark &-& \checkmark&\checkmark& $1-5$ & \checkmark &-&-&\checkmark&A\\
        \cite{agarwal2021blind} & \checkmark &-& \checkmark &\checkmark& $2-4$ & \checkmark &-&-&\checkmark&Mo\\
        \cite{derman2021acting} & \checkmark &-& \checkmark &-& $1-25$ & \checkmark &-&-&\checkmark&Mo\\
        \cite{chen2021delay} & \checkmark &-& \checkmark &-& $1-16$ & \checkmark &-&-&\checkmark&Mo\\
    \end{tabular}
    };
    
    \coordinate (A) at ($(c8)+(1mm,0)$);
    \draw (mytable.north-|c1) --++ (60:3cm);
    \draw (mytable.north-|c3) --++ (60:3cm);
    \draw (mytable.north-|c4) --++ (60:3cm);
    \draw (mytable.north-|c6) --++ (60:3cm);
    \draw (mytable.north-|c8) --++ (60:3cm);
    \draw (mytable.north-|A) --++ (60:3cm);
    
    \node[rotate=60,anchor=west] at ($(mytable.north-|c00)!0.5!(mytable.north-|c0)$) {State/Action};
    \node[rotate=60,anchor=west] at ($(mytable.north-|c0)!0.5!(mytable.north-|c1)$) {Reward};
    \node[rotate=60,anchor=west] at ($(mytable.north-|c1)!0.5!(mytable.north-|c2)$) {Constant};
    \node[rotate=60,anchor=west] at ($(mytable.north-|c2)!0.5!(mytable.north-|c3)$) {Stochastic};
    
    \node[rotate=60,anchor=west] at ($(mytable.north-|c3)!0.5!(mytable.north-|c4)$) {Amount of Delay};
    \node[rotate=60,anchor=west] at ($(mytable.north-|c4)!0.5!(mytable.north-|c5)$) {Integer};
    \node[rotate=60,anchor=west] at ($(mytable.north-|c5)!0.5!(mytable.north-|c6)$) {Non-integer};
    \node[rotate=60,anchor=west] at ($(mytable.north-|c6)!0.5!(mytable.north-|c7)$) {Anonymous};
    \node[rotate=60,anchor=west] at ($(mytable.north-|c7)!0.5!(mytable.north-|c8)$) {Non-Anonymous};
    \node[rotate=60,anchor=west] at ($(mytable.north-|c8)!0.5!(mytable.north-|c9)$) {Approach};
    
    \end{tikzpicture}
    \captionof{table}{Related works and the type of delay that they consider. 
    The ordering is with respect to the publication date.
    The amount of delay corresponds to the range of delay that has been taken into account in the experiments. 
    For stochastic delays, we only give the mean delay as the delays can potentially be infinite.
    The different approaches are labelled $\textbf{A}$ for the augmented approach, $\textbf{Me}$ for the memoryless one and  $\textbf{Mo}$ for the model-based one.}
    \label{tab:delay_bestiary}
\end{figure}

\begin{table}[t]
\begin{tabular}{lll||>{\raggedright\arraybackslash}p{2.1cm}|>{\raggedright\arraybackslash}p{2.1cm}|>{\raggedright\arraybackslash}p{2.1cm}}

 \multicolumn{3}{c||}{Type of Delay}&  Augmented & Memoryless & Model-based \\
 \hline\hline
Constant & Integer & Non-anonymous&  {\footnotesize\cite{brooks1972markov}\cite{ramstedt2019real}\cite{bouteiller2020reinforcement}} &  {\footnotesize\cite{schuitema2010control}\cite{hester2013texplore}} & {\footnotesize\cite{walsh2009learning}\cite{firoiu2018human}\cite{agarwal2021blind}\cite{derman2021acting} \cite{chen2021delay}}\\
 \hline
Stochastic & Integer & Non-anonymous& {\footnotesize\cite{bouteiller2020reinforcement}} &  & {\footnotesize\cite{agarwal2021blind}}  \\
&&&&&\\
 &&&&&\\
\hline
Constant & Non-integer & Non-anonymous&  & {\footnotesize\cite{schuitema2010control}} &  \\
&&&&&\\
&&&&&\\
 \hline
Constant & Non-integer & Non-anonymous&  &  &  \\
&&&&&\\
&&&&&\\
\end{tabular}
\caption{Overview of how the literature has addressed different types of state observation and action execution delays.}
\label{tab:delay_related}
\end{table}

\subsection{Delays in Control Theory}
\label{subsec:control_theory}
We finish this section with what we believe to be an important extension of the review of the literature. 
Control theory shares much in common with \gls{rl}, both frameworks consider a decision-making process, and control theory also considers the problem of delays, as we have anticipated in \Cref{sec:why_delay_ns}.
Therefore, it could be useful to leverage the results and ideas of this framework to apply them in \gls{rl}.
This is even more true since control theory has a long history and the delay that it has considered is typically much more diverse. 
For example, the case of a delay that, although constant, is different for each dimension of the state of the environment \cite{alevisakis1973extension, altman1999congestion}. 
Interestingly, in this case, as in \gls{rl}, the state observation and action execution delays have been shown to be equivalent.
Another type of delay is distributed delay, which appears when the system depends on a continuous interval of past states \cite{gu2003survey}. 
It is more common to work with continuous-time systems in control theory than in \gls{rl}.
For classical constant delays, as for \gls{rl}, the state can be augmented to cast the delayed system back to an undelayed one \cite{kwon2003simple}.
Another approach for constant delays is the Smith Predictor model \cite{smith1957closer}.
It proposes to learn a model of the undelayed process and leverage this model to cast the problem back to the undelayed problem. 
A link can be drawn between this approach and the aforementioned model-based approaches to constant delays in \gls{rl}.

Another layer of complexity can be added to the problem by considering the delay in a multi-agent framework, that is, when several agents can act simultaneously--or not--in the environment.
The problem then becomes \emph{decentralized}, and since each agent is potentially independent, each can have a different delay.
A model in the control literature for such a system is \emph{delayed sharing information pattern}.
As defined by \cite{witsenhausen1971separation}, the problem considers a set of $K$ agents, each with a state $s_t^k$ that composes the state $\boldsymbol s_t$ of the environment at time $t$.
Each agent selects an action $a_t^k$ that composes the global action $\boldsymbol a_t$ applied to the environment.
As in \gls{pomdp} (\Cref{subsec:pomdp}), the agents do not directly observe the state of the environment but only a partial observation $\boldsymbol o_t = (o_t^k)_{1\le k \le K}$ of it. 
Each of these agents is $n$-delayed in the sense that it receives information from the other agents with a delay $n$ while it has undelayed access to its own state.
Therefore, at time $t$, agent $k$ observes $o_{h}^k$ for $h\le t$ but observes only $o_{h}^j$ for $j\ne k$ and $h\le t-n$.
Two key concepts of this framework are the information shared by all agents at time $t$, which reads,
\begin{align*}
    A_t = \left\{ (\boldsymbol o_h, \boldsymbol a_h)\vert h\le t-n \right\},
\end{align*}
and the additional information available to an agent $k$ at time $t$ that reads,
\begin{align*}
    B_t^k = \left\{ ( o_h^k,  a_h^k)\vert t-n<h\le t-1 \right\} \cup \{ o_t^k\}.
\end{align*}
Note that at time $t$ the agent does not have access to the action $ a_t^k$, hence the last term of the previous equation.
\cite{witsenhausen1971separation} conjectured that, instead of considering all policies based on the information in $(A_t,B_t^1,\dots,B_t^k)$, there existed an optimal policy where the belief of the current state $\boldsymbol s_t$ given $A_t$ would be substituted for the term $A_t$ in the previous information structure. 
This is a similar idea to what is proposed by model-based delayed \gls{rl} algorithms. 
In control theory, for delays greater than 1, the conjecture has been shown false by \cite{varaiya1978delayed}.
In \gls{rl}, a similar result holds, as we will see in \Cref{sec:th_analysis_belief}.
Another similarity with delayed \gls{rl} is the following.
When $n=0$--i.e. the state itself is observable--the problem can be formulated as an \gls{mdp} and, as for the augmented state approach, it has been shown that the problem of 1-delayed sharing information pattern could be cast back to an \gls{mdp} by augmenting the state with the delayed action \cite{hsu1982decentralized}.
However, contrary to delayed \gls{rl}, some results show that not all history is necessary to design optimal policies, and reduced information structures consisting only of some statistics that do not grow with time can be used instead \cite{altman2009stochastic,nayyar2010optimal}.

Despite the similarities between control theory and \gls{rl}, the delay problems considered in the control theory literature are usually more complex and therefore more involved from a theoretical perspective. 
Instead, the \gls{rl} framework and its \gls{mdp} process offer a simpler framework where structural results can be more easily derived.  

\section{The Delay in this Dissertation}
\label{subsec:delay_in_this_thesis}

To conclude this chapter, we describe the type of delay that will be considered in this dissertation.
The delay will occur in the observation of a state or in the execution of an action. 
The choice has been made to not consider reward collection delays, as it mainly creates credit assignment problems and has been extensively addressed in the bandit literature. 
The state and action delays are peculiar to \gls{rl} because they have a direct impact on the transition dynamics. 
However, note that from the observations made in \Cref{subsec:state_action_delays}, when considering the state observation delay, we assume that the reward collection delay is equal to the state observation delay to avoid collecting partial information from the reward.
In the following, we, therefore, consider that the reward is ``undelayed'' in the sense that the reward always comes with the state whenever the latter is observed. 

We will mostly consider non-anonymous, constant, and integer delays, but we will repeatedly extend the setting to non-integer, anonymous, and stochastic delays.

In relation to what has been said in \Cref{subsec:control_theory}, we believe that it is important to clarify the information structure of the problem. 
In the case of a constant delay $\delay$ in the observation of the state or execution of the action, we consider that, at time $t$, an agent can have access to the history $h_t = (h^s_t,h^r_t,h^a_t)$, where $h^s_t = (s_i)_{0\le i\le t-\delay}$ is the history of states, $h^r_t = (r_i)_{0\le i\le t-\delay}$ the one of rewards, and $h^a_t = (r_i)_{0\le i\le t}$ the one of actions. 
Depending on the approach followed by the agent, only a subset of the history might be used.
For example, an agent based on the augmented state could use the entire history $h_t$ as input, but there exists an optimal augmented state policy that uses solely the augmented state itself $x_t=(s_{t-\delay},a_{t-\delay},\dots,a_t)$.
Similarly, a model-based policy typically computes some statistic $f(x_t)$ from the augmented state, which it uses as input. 
Lastly, memoryless approaches use only the last observed state $s_{t-\delay}$ as input. 
Apart from theoretical considerations, we will never design an agent that uses $h_t$ as an input to avoid having an input whose dimension increases with time.

\cleardoublepage
\chapter{Belief-Based Approach}
\label{chap:belief_based}

\section{Introduction}
In this chapter, we will consider a constant state observation or action execution delay. 
Falling within the model-based approach, we propose to compute a representation of the belief of the current state and use it as input to the policy.
Previous model-based approaches compute different statistics of the belief of the current state, such as its expected value \cite{walsh2009learning,firoiu2018human,derman2021acting} or perform planning by sampling from a probabilistic model \cite{chen2021delay}.
Instead, we consider learning a representation of the belief that encodes the distribution totally. 
Clearly, having access to the belief gives more information to the policy than having access to only some statistics of it, as these statistics can be computed from the belief.
Provided that learning the optimal belief-based policy can be done efficiently, this approach could therefore yield higher returns. 

To learn the belief representation, we design a \gls{nn} that can be trained to encode the belief--potentially of infinite dimension--as a vector of controllable finite dimension (\Cref{subsec:belief_net}).
This network can be plugged to any \gls{rl} algorithm as a pre-processing of its input.
In the experiments, we plug it to \gls{trpo}~\cite{schulman2015trust}.
This yields Delayed-TRPO (D-TRPO).
It obtains satisfactory results in both deterministic and stochastic environments (\Cref{sec:exp_belief}).
We propose a simple ``trick'' to leverage the policy learnt by D-TRPO for a given delay to apply it on tasks with smaller delay (\Cref{subsec:multi_delay_once_belief}).
This trick is successfully tested on several environments.
In addition to these algorithmic contributions, we provide a theoretical analysis of the setting and explore how model-based approaches can be related to \gls{pomdp} (\Cref{subsec:delay_complexity}).
Unfortunately, we show that model-based approaches are doomed to be sub-optimal w.r.t.~the best-delayed policy in some environments (\Cref{subsec:model_based_sub_optimal}).
Finally, we provide formal proof of a simple fact that has never been provided in the literature: the greater the delay, the lower the return of the optimal policy (\Cref{subsec:effect_delay_belief}).

\section{Learning a Belief Representation}

In this section, we present our approach to learning a belief representation for constant delays. 
Once learnt, this representation can be used alongside any \gls{rl} algorithm as a substitute for the state.
This approach is then extended to learning more delays simultaneously via a simple ``trick''. 

\subsection{Belief Representation Network}
\label{subsec:belief_net}

To learn the belief $\augmentedbelief(\cdot\vert x)$ for some augmented state $x=(s,a_1,\dots,a_{\delay})\in\augmentedstate$, one should anticipate the effect of the actions contained in the augmented state.
One approach is to approximate the transition dynamics with a recurrent function.
Recurrent \gls{nn} could be considered an approximator, but these networks are slow to train and evaluate.
Instead, we process the entire input directly with a \gls{nn} $f_{\vectorialform{\phi}}$ with parameters $\vectorialform{\phi}$ and ask the network to output the belief representations for all the unobserved states at once.
The network could process the augmented state in many ways.
We choose to form a sequence $(s,a_{i})_{i \in [\![1,\delay]\!]}$ from the augmented state.
This sequence is then fed to the encoder network of a causal Transformer~\cite{vaswani2017attention} (see~\Cref{subsubsec:transformer}) which we refer to as $f_{\vectorialform{\phi}}$. 
The choice of a causal Transformer has been made for mainly threes reasons; the Transformers have achieved great empirical results on sequence-to-sequence tasks \cite{devlin2018bert}; it can be used with a mask to preserve causality, this is useful since past beliefs do not depend on future beliefs; the positional encoding inside the Transformer allows one to add information on the position of the action in the sequence, the same action can therefore have different effects in different positions inside the augmented state. 

To train $f_{\vectorialform{\phi}}$ to effectively encode a belief, we use a \gls{maf}~\cite{papamakarios2017masked} network (see~\Cref{subsubsec:maf}) .
The choice has been based on its ability to approximate probability distributions.
This network, which we note $f_{\vectorialform{\theta}}$, will take the output of $f_{\vectorialform{\phi}}$ as a conditional on its probability. 
Then, it will be trained to approximate the probability of unobserved states only from the condition given by $f_{\vectorialform{\phi}}$. 
In this way, all information about the belief will be encoded in $f_{\vectorialform{\phi}}$.
Note that the loss gradient of a \gls{maf} network can flow through its conditional and therefore can be used to train $f_{\vectorialform{\phi}}$ at the same time.

Let us now delve more precisely into what happens inside the belief network.
Let $x=(s,a_1,\dots,a_{\delay})$ be the current augmented state. First, the Transformer outputs a vector $f_{\vectorialform{\phi}}(x)_i\in\realnumbers^{n_b}$ for each element $(s,a_{i})$ of $(s,a_{i})_{i \in [\![1,\delay]\!]}$. 
We note $n_b$ as the dimension of this representation of the belief. 
Let $(s_1,\dots,s_{\delay})$ be the $\delay$ unobserved states for which the belief will be approximated and $\augmentedbelief[i](\cdot\vert x)$ the belief of $s_i$. 
As explained in \Cref{subsubsec:maf}, \gls{maf} recursively applies parameterized functions to a base density $p_u$ to transform it into the density $p_{\theta}$.  
For the base density $p_u$, we use the normal distribution.
The difference from the traditional setting is that here, compared to \Cref{eq:p_inverted_maf}, we add a conditioning on $f_{\vectorialform{\phi}}({x})_i$.
\begin{align*}
    p_{\vectorialform{\theta}}(s_i\vert f_{\vectorialform{\phi}}({x})_i) = p_u(f_\theta^{-1}(s_i,f_{\vectorialform{\phi}}({x})_i)) \left\lvert \det\left(\frac{\partial f_\theta^{-1}}{\partial s_i}\right)\right\rvert.
\end{align*}
Note that the building block of MAF, the MADE layer (see~\Cref{subsubsec:maf}), requires a single pass to compute a density, while it requires as many passes as the dimension of the output for sampling. 
This is not limiting here, as we are not interested in sampling states from the belief but only learning an approximation of it. 

Then, \gls{maf} is trained to minimise the Kullback-Leibler divergence between its density and the density of the training set. 
Here, the loss reads,
\begin{align*}
    \min_{\vectorialform{\theta},\vectorialform{\phi}} \sum_{i=1}^\delay \kullbackleiblerdivergence( \augmentedbelief[i](s_i \vert  x) \Vert p_{\vectorialform{\theta}}(s_i\vert f_{\vectorialform{\phi}}({x})_i)).
\end{align*}
This quantity is differentiable by design and can be minimised by gradient-based optimisation.
For a sample of $N$ augmented states $(x^k)_{1\le k\le N}$ where $x^k=(s^k,a^k_1,\dots,a^k_{\delay})$, the loss can be approximated as,
\begin{align}
\label{eq:loss_maf}
    \mathcal{L}_p = \sum_{k=1}^N \sum_{i=1}^\delay \log p_{\vectorialform{\theta}}(s^{k}_{i}\vert f_{\vectorialform{\phi}}(x^k)_i). 
\end{align}
A graphical representation of the belief representation network is given in \Cref{fig:dtrpo_network} and its training is described in \Cref{algo:belief_training}.
It is common to use a replay buffer in \gls{rl} to compensate for a sampling policy that evolves over time due to training.
This also brings the framework closer to supervised learning. 
We use the same idea here by saving the augmented states in a replay buffer. 

The belief representation network can be used along any \gls{rl} algorithm as a pre-processing of the state. 
In the experimental section, we plug this network to \gls{trpo} \cite{schulman2015trust} and call the approach Delayed-TRPO (D-TRPO).

\begin{figure}[t]
    \centering
    \resizebox{10cm}{!}{%
    \includegraphics[labelbelief=dtrponetwork]{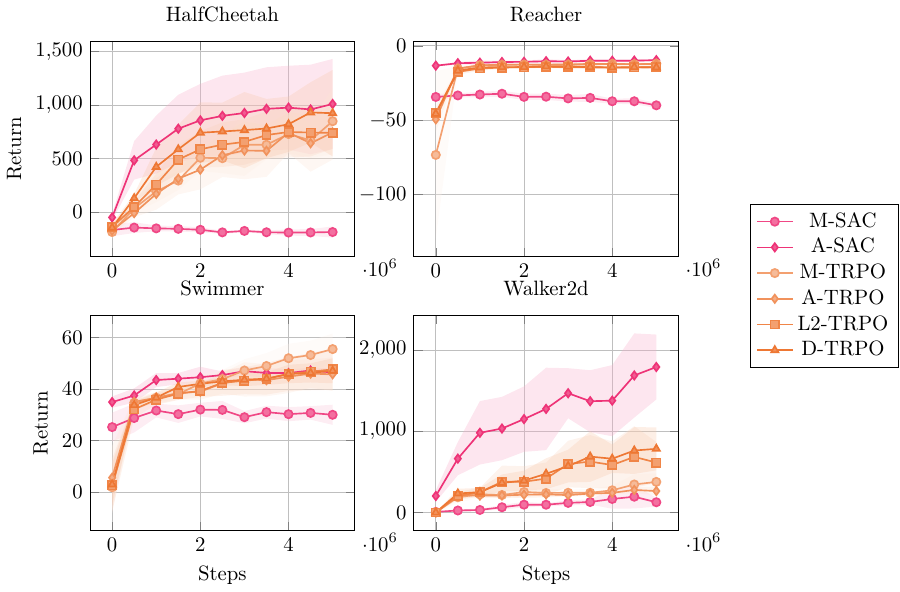}
    }
    \caption{The D-TRPO network. From bottom to top; the augmented state is split in a series $(s,a_{i})_{i \in [\![1,\delay]\!]}$; the series is passed through a Transformer encoder layer to return belief encodings $f_{\vectorialform{\phi}}(x)_i\in\realnumbers^{n_b}$; a MAF layer learns the belief of the next $\delay$ states conditioned on the previous encoded vector. Finally, the last belief encoding $f_{\vectorialform{\phi}}(x)_\delay$ is used as input to the policy.}
    \label{fig:dtrpo_network}
\end{figure}

\begin{algorithm}[t] 
\caption{Belief representation network training}\label{algo:belief_training}
\textbf{Inputs}:
belief representation network parameters $\vectorialform{\phi}$ and $\vectorialform{\theta}$, number of epochs $K$, batch size $N$,number of samples $M$, empty replay buffer $\mathcal{D}$.\\
\textbf{Outputs}: Trained parameters $\vectorialform{\phi}$ and $\vectorialform{\theta}$.
\begin{algorithmic}[1]
    \FOR{$k=1,\dots,K$}
        \STATE Collect $M$ samples under some policy $\pi$ 
        \STATE Add couples of augmented states $x=(s,a_1,\dots,a_{\delay})$ and corresponding sequence states to be predicted $(s_1,\dots,s_\delay)$ to $\mathcal{D}$
        \STATE Sample $N$ such couples from $\mathcal{D}$
        \STATE Update parameters $\vectorialform{\phi}$ and $\vectorialform{\theta}$ by descending the gradient $\nabla_{\vectorialform{\phi},\vectorialform{\theta}}\mathcal{L}_p$ (See \Cref{eq:loss_maf})
    \ENDFOR
\end{algorithmic}
\end{algorithm} 

\subsection{Learning Multiple Delays at Once}
\label{subsec:multi_delay_once_belief}
We propose a simple idea to take advantage of the policy learnt by D-TRPO for a given integer delay $\delay>0$.
Despite its simplicity, the previous literature has not considered this idea.
We propose to apply D-TRPO to smaller integer delays $0<\delay'<\delay$ by simulating a delay of $\delay$.
To do so, in addition to the augmented state $x_t=(s_{t-\delay'},a_{t-\delay'},\dots,a_{t-1})$ considered for a delay $\delay'$, we consider a buffer,
\begin{align*}
    (s_{t-\delay},a_{t-\delay},\dots,s_{t-\delay'-1},a_{t-\delay'-1}),
\end{align*} 
of the last $\delay-\delay'$ states and actions in the history before $t-\delay'$.
In this way, the agent can construct a synthetic augmented state $x_t'=(s_{t-\delay},a_{t-\delay},\dots,a_{t-1})$.
This is sufficient to apply the policy learnt by D-TRPO on delay $\delay$. 
Obviously, this algorithm has the same performance guarantee as D-TRPO for delay $\delay$.
We provide experiments in \Cref{sec:exp_belief} to illustrate the effectiveness of the idea. 

For some approaches from the literature, one could do better than the above simple idea.
For example, similarly to \cite{firoiu2018human}, instead of learning a representation of the belief, we could learn the expected future state for all the $\delay$ states to come. 
We design an approach to do so, using the same network structure as for D-TRPO but removing the MAF layer. 
We train the network to minimise the L2 loss between its outputs $(f_\phi(\textbf{x})_i)_{i\in[\![1, d]\!]}$ and the true states $(s_1,\dots,s_\delay)$.
Plugged to \gls{trpo}, we call this approach L2-TRPO.
In this case, we can learn multiple delays at once in another way. 
Indeed, an interesting property of the causal Transformer network is that it can be sliced to the desired length and can therefore be applied to the augmented state of a shorter delay $0<\delay'<\delay$.
By design, L2-TRPO has also learnt the expected state $\delay'$ in the future.
Therefore, the output of the sliced Transformer can be used as input to \gls{trpo}.
Notably, this cannot be done with D-TRPO. Although it learns a representation of the belief for each of the $\delay$ states to come, their respective encoding might follow different rules. 
Therefore, \gls{trpo} may not be able to understand the encoding of a previous state.

The simple idea presented here resonates with the argument of \cite{chen2021delay} that model-based methods are better for delays because their model can be reused.

\section{Theoretical Analysis}
\label{sec:th_analysis_belief}

In this section, we provide the reader with several results to better grasp the problem of constantly delayed \gls{dmdp}. 
First, we formally prove a fact that is usually taken for granted in the literature: longer delays imply smaller optimal returns.
In the second part, the complexity of constantly delayed \gls{dmdp} will be studied. 
Finally, providing an example, we demonstrate that model-based policy can have sub-optimal performance in some problems.

\subsection{Effect of the Delay on the Performance}
\label{subsec:effect_delay_belief}
For average reward or expected discounted return, it seems reasonable that, all things being equal, the higher the delay of a \gls{dmdp}, the smaller the performance of its optimal policy.
Although natural, this result has not been demonstrated yet. 
We provide a demonstration below. This result requires the following assumption, which ensures that the two processes with different delays are initialised in the same way.

\begin{defi}[Consistent \glspl{dmdp}]
\label{def:consistency_dmdps}
     Let $\delayedmarkovdecisionprocess$ and $\delayedmarkovdecisionprocess'$ be two \glspl{dmdp} with respective delay $\delay$ and $\delay'$ and initial augmented state distributions $\delayeddiscountedstateoccupancydistribution[0][]$ and $\delayeddiscountedstateoccupancydistribution[0][\prime]$.
     If $\delay\le \delay'$, we say that $\delayedmarkovdecisionprocess$ and $\delayedmarkovdecisionprocess'$ are consistent if:
    \begin{align*}
        \probability(S_0=s_0;\delayeddiscountedstateoccupancydistribution[0][])= \probability(S_0=s_0;\delayeddiscountedstateoccupancydistribution[0][\prime]),
    \end{align*}
    and $\forall t\le\delay$, 
    \begin{align*}
        \probability(A_t=a_t;\delayeddiscountedstateoccupancydistribution[0][])= \probability(A_t=a_t;\delayeddiscountedstateoccupancydistribution[0][\prime]).
    \end{align*}
\end{defi}

\begin{thm}
\label{th:perf_delay_vector_all_weight}
    Let $\delayedmarkovdecisionprocess_{1}$ and $\delayedmarkovdecisionprocess_{2}$ be two consistent \glspl{dmdp} with respective delay $\delay_1$ and $\delay_2$.
    Consider $\expectedreturn[1][\star]$ and $\expectedreturn[2][\star]$, their respective optimal expected discounted returns or average reward.
    If $\delay_1<\delay_2$, then one has:
    \begin{align*}
        \expectedreturn[2][\star] \le \expectedreturn[1][\star].
    \end{align*}
\end{thm}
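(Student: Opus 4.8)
The plan is to prove the inequality by a \emph{simulation} (coupling) argument: every policy that is available in the more delayed process $\delayedmarkovdecisionprocess_2$ can be reproduced inside the less delayed process $\delayedmarkovdecisionprocess_1$, so that the set of achievable returns of $\delayedmarkovdecisionprocess_2$ is contained in that of $\delayedmarkovdecisionprocess_1$, whence the supremum defining each of the two optimal returns is no smaller for $\delayedmarkovdecisionprocess_1$. The starting point is that both \glspl{dmdp} are built on the \emph{same} underlying \gls{mdp}, so a delayed policy, together with the forced initialisation, induces a joint law on the underlying trajectory $(s_t,a_t)_{t\ge 0}$, and both the expected discounted return and the average reward are functionals of this law alone (with the discounting starting at the first underlying transition in both cases). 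Hence it suffices to show: for every policy $\tilde\pi_2\in\delayedpolicyspace$ feasible in $\delayedmarkovdecisionprocess_2$ there is a policy $\tilde\pi_1\in\delayedpolicyspace$ feasible in $\delayedmarkovdecisionprocess_1$ inducing the very same trajectory law.

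The construction of $\tilde\pi_1$ rests on the observation that the information carried by the history of $\delay_1$-augmented states dominates that carried by a $\delay_2$-augmented state: at time $t$ the former contains all observed states $s_0,\dots,s_{t-\delay_1}$ and all selected actions $a_0,\dots,a_{t-1}$, whereas $x^{(2)}_t=(s_{t-\delay_2},a_{t-\delay_2},\dots,a_{t-1})$ is, for $t\ge\delay_2$, a deterministic function of it (it only refers to an \emph{older} state observation together with a window of already-chosen actions). I therefore let $\tilde\pi_1$ behave as follows: (i) during the start-up phase it reproduces the part of $\delayedmarkovdecisionprocess_2$'s forced initialisation that is not already forced in $\delayedmarkovdecisionprocess_1$ --- namely it draws the ``extra'' buffered actions $a_{\delay_1},\dots,a_{\delay_2-1}$, which are genuine decisions in $\delayedmarkovdecisionprocess_1$, from exactly the distribution prescribed by $\delayedmarkovdecisionprocess_2$'s initial augmented-state distribution; (ii) for $t\ge\delay_2$ it reconstructs $x^{(2)}_t$ from its history and plays $\tilde\pi_2(\cdot\mid x^{(2)}_t)$. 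Since $\tilde\pi_1$ is measurable with respect to the history of $\delay_1$-augmented states, it indeed belongs to $\delayedpolicyspace$ as a policy on $\delayedmarkovdecisionprocess_1$.

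It then remains to verify, by induction on $t$, that $\delayedmarkovdecisionprocess_1$ run under $\tilde\pi_1$ and $\delayedmarkovdecisionprocess_2$ run under $\tilde\pi_2$ generate the same finite-dimensional marginals of $(s_t,a_t)_t$. The transition and reward kernels of the underlying \gls{mdp} are common to both; the laws of $s_0$ and of the first $\delay_1$ buffered actions coincide by the consistency hypothesis (\Cref{def:consistency_dmdps}); the extra buffered actions $a_{\delay_1},\dots,a_{\delay_2-1}$ are matched by step~(i) of the construction; and from time $\delay_2$ on, both processes apply the same kernel $\tilde\pi_2$ to the same statistic $x^{(2)}_t$. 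Consequently $\tilde\pi_1$ and $\tilde\pi_2$ induce the same trajectory law, so $\expectedreturn[1][\tilde\pi_1]=\expectedreturn[2][\tilde\pi_2]$ for both the discounted and the average-reward objective, and taking the supremum over $\tilde\pi_2$ gives $\expectedreturn[2][\star]\le\expectedreturn[1][\star]$.

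The step I expect to be the main obstacle is the careful bookkeeping of the two initialisations: because the forced action buffers have different lengths ($\delay_1$ versus $\delay_2$), one must split the initialisation into a part that is forced in both processes --- which is precisely what \Cref{def:consistency_dmdps} controls, and the reason the consistency hypothesis is needed so that the ``delay initialisation shift'' of \Cref{subsec:delay_init} cannot break monotonicity --- and a part that is forced only in $\delayedmarkovdecisionprocess_2$ and must be re-enacted as a deliberate randomised choice in $\delayedmarkovdecisionprocess_1$; getting the index arithmetic of the delayed actions right here (and in the inductive coupling) is the only delicate point, the rest being routine. As an alternative to this trajectory-level bookkeeping, one can argue at the level of state-action occupancy measures --- showing that the set of occupancy measures achievable in $\delayedmarkovdecisionprocess_2$ embeds into the one achievable in $\delayedmarkovdecisionprocess_1$ --- and then conclude by linearity of the return in the occupancy measure, as recalled in \Cref{subsec:def_state_distrib}.
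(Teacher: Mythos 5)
Your proposal is correct and takes essentially the same route as the paper: the paper likewise builds a history-based policy in $\delayedmarkovdecisionprocess_{1}$ that, for $\delay_1\le t<\delay_2$, replays the action distribution of $\delayedmarkovdecisionprocess_{2}$'s initialisation and, for $t\ge\delay_2$, reconstructs the $\delay_2$-augmented state from its richer history and applies the $\delay_2$-delayed policy to it, then shows by induction on $t$ that the induced (augmented-)state distributions coincide, so returns match and the supremum gives the inequality. The only cosmetic difference is that the paper applies the construction directly to an optimal policy of $\delayedmarkovdecisionprocess_{2}$, whereas you simulate an arbitrary policy and take the supremum at the end, which is the same argument.
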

\begin{proof}
    To prove this result, we first show that there is a non-stationary history-based policy in $\delayedmarkovdecisionprocess_{1}$ which has the same augmented state distribution as the optimal Markovian policy in $\delayedmarkovdecisionprocess_{2}$, $\optimalpolicy_2$.
    To define this policy, we divide the trajectory into two periods.
    In the first period, when $t\ge\delay_2$, we note $(s_{t-{\delay_2}},a_{\delay_2},\dots,s_{t-1},a_{t-1},s_t)$ the history of the last $\delay_2$ steps. 
    The agent in $\delayedmarkovdecisionprocess_{2}$ observes the augmented state,
    \begin{align*}
        x_t'=(s_{t-{\delay_2}},a_{t-{\delay_2}},\dots,a_{t-1}).
    \end{align*}
    Since $\delay_1<\delay_2$, the agent in $\delayedmarkovdecisionprocess_{1}$ can observe the following quantity,
    \begin{align*}
        \overline x_t=(s_{t-{\delay_2}},a_{t-{\delay_2}},\dots,s_{t-{\delay_1}},a_{t-{\delay_1}}\dots,a_{t-1}),
    \end{align*}
    which is in $\statespace^{\delay_1-\delay_2}\times\actionspace^{\delay_2}$. 
    Note that this ``state'' includes a history of $\delay_1-\delay_2$ states and actions that would not be used by a Markovian policy in $\delayedmarkovdecisionprocess_{1}$.
    We define our candidate policy $(\pi_t)_{t\ge0}$ for  $t\ge\delay_2$ as follows,
    \begin{align*}
        \pi_t(\cdot\vert\overline x_t) = \optimalpolicy_2(\cdot\vert x_t').
    \end{align*}
    When $\delay_1\le t<\delay_2$, then the agent in $\delayedmarkovdecisionprocess_{1}$ cannot yet observe a $\delay_2$-steps old state.
    However, knowing the initial state distribution $\delayeddiscountedstateoccupancydistribution[0,2][]$ of $\delayedmarkovdecisionprocess_{2}$, one can define the policy as 
    \begin{align*}
        \pi_t(a_t) = \probability(A_t=a_t;\delayeddiscountedstateoccupancydistribution[0,2][]).
    \end{align*}
    Basically, $\pi_t$ ignores the augmented state and selects its action with the same probability as the initial action distribution in $\delayedmarkovdecisionprocess_{\delay_2}$.
    
    We shall now prove that this policy induces the same state distribution in $\delayedmarkovdecisionprocess_{2}$. 
    We note $\delayeddiscountedstateoccupancydistribution[t][\pi]$ the undiscounted state distribution of $(\pi_t)_{t\ge0}$ at time $t$ and $\delayeddiscountedstateoccupancydistribution[t][\optimalpolicy_2]$ that of $\optimalpolicy_2$.
    For $t=\delay_2$, note that,
    \begin{align}
        &\delayeddiscountedstateoccupancydistribution[t][\optimalpolicy_2](x_t')
        \nonumber\\
        &= \delayeddiscountedstateoccupancydistribution[0,2][](s_{t-{\delay_2}},a_{t-{\delay_2}},\dots,a_{t-1})
        \nonumber\\
        &= \delayeddiscountedstateoccupancydistribution[0,1][](s_{t-{\delay_2}},a_{t-{\delay_2}},\dots,a_{t+\delay_1-\delay_2-1})\prod_{i=1}^{\delay_2-\delay_1}\probability(A_{t-i}=a_{t-i};\delayeddiscountedstateoccupancydistribution[0,2][])
        \label{eq:consistency_dmdps}\\
        &=\delayeddiscountedstateoccupancydistribution[0,1][](s_{t-{\delay_2}},a_{t-{\delay_1}},\dots,a_{t+\delay_1-\delay_2-1})\prod_{i=1}^{\delay_2-\delay_1}\pi_{t-i}(a_{t-i})
        \label{eq:def_policy_less_delay}\\
        &\coloneq \delayeddiscountedstateoccupancydistribution[t][\pi](x_t'),\nonumber
    \end{align}
    where \Cref{eq:consistency_dmdps} holds by the assumption of consistency and \Cref{eq:def_policy_less_delay} by the definition of $(\pi_t)_{t\ge0}$ on the first $\delay_2$ steps.
    Then, assuming that $\delayeddiscountedstateoccupancydistribution[t][\optimalpolicy_2]=\delayeddiscountedstateoccupancydistribution[t][\pi]$ for some $t\ge\delay_2$ we shall prove the equality for $t+1$.
    We require some notation to simplify the exposition. First, for $\overline x_t,\overline x_{t+1}\in\statespace^{\delay_1-\delay_2}\times\actionspace^{\delay_2}$, and $a_t\in\actionspace$ we note $\bar{\transitionfunction}(\overline x_{t+1}\vert \overline x_t, a_t) = \probability(X_{t+1}=\overline x_{t+1}\vert X_t=\overline x_t,A_t=a_t)$ the transition probability from pair $(\overline x_t,a_t)$ to $\overline x_{t+1}$. We also define the following measure, for $x_t'\in\statespace\times\actionspace^{\delay_2}$
    \begin{align*}
        \delta_{x_{t}'}(\overline x_{t}) = \delta_{s_{t-\delay_2}'}(s_{t-\delay_2})\prod_{i=1}^{\delay_2}\delta_{a_{t-i}'}(a_{t-i}),
    \end{align*}
    which ensures that the states and action in $x_{t}'$ are contained in $\overline x_{t}$.
    Using this notation, we have,
    \begin{align}
        \delayeddiscountedstateoccupancydistribution[t][\pi](x_{t+1}') 
        &= \int_{\statespace^{\delay_1-\delay_2}\times\actionspace^{\delay_2}} \delayeddiscountedstateoccupancydistribution[t][\pi](\overline x_{t}) \bar{\transitionfunction}(\overline x_{t+1}\vert \overline x_t,a_t) \delta_{x_{t+1}'}(\overline x_{t+1})\pi_t(a_t\vert \overline x_t)\;\de \overline x_t\;\de s_{t-\delay_1+1}
        \nonumber\\
        &= \int_{\statespace^{\delay_1-\delay_2}\times\actionspace^{\delay_2}} \delayeddiscountedstateoccupancydistribution[t][\pi](\overline x_{t}) \bar{\transitionfunction}(\overline x_{t+1}\vert \overline x_t,a_t)
        \delta_{s_{t-\delay_2+1}'}(s_{t-\delay_2+1})\optimalpolicy_2(a_t\vert x_t')
        \nonumber\\
        &\qquad \prod_{i=0}^{\delay_2-1}\delta_{a_{t-i}'}(a_{t-i})\;\de\overline x_t\;\de s_{t-\delay_1+1}
        \nonumber\\
        &= \int_{\statespace^{\delay_1-\delay_2-1}\times\actionspace^{\delay_2+1}} \delayeddiscountedstateoccupancydistribution[t][\pi](\overline x_{t})  \delta_{s_{t-\delay_2+1}'}(s_{t-\delay_2+1})
        \optimalpolicy_2(a_t\vert x_t')
        \prod_{i=0}^{\delay_2-1}\delta_{a_{t-i}'}(a_{t-i})\;\de\overline x_t 
        \label{eq:indpt_delta}
        \\
        &= \int_{\statespace\times\actionspace^{\delay_2}} \delayeddiscountedstateoccupancydistribution[t][\pi](x_{t}') p(s_{t-\delay_2+1}\vert s_{t-\delay_2},a_{t-\delay_2}) \delta_{s_{t-\delay_2+1}'}(s_{t-\delay_2+1})
       \nonumber
       \\
        &\qquad \cdot\optimalpolicy_2(a_t\vert x_t')\prod_{i=0}^{\delay_2-1}\delta_{a_{t-i}'}(a_{t-i})\;\de x_t'
        \label{eq:indpt_delta2}
        \\
        &= \int_{\statespace\times\actionspace^{\delay_2}} \delayeddiscountedstateoccupancydistribution[t][\optimalpolicy_2](x_{t}') \augmentedtransitionfunction(x_{t+1}'\vert x_t',a_t)\optimalpolicy_2(a_t\vert x_t')\;\de x_t'
       \nonumber
       \\
       &= \delayeddiscountedstateoccupancydistribution[t][\optimalpolicy_2](x_{t+1}'),\nonumber
    \end{align}
    where \Cref{eq:indpt_delta} holds since all the terms involved in the Dirac measures $\delta$ are already fixed by $\overline x_t$ and $x_{t+1}'$ and \Cref{eq:indpt_delta2} holds since no term but  $\delayeddiscountedstateoccupancydistribution[t][\pi](\overline x_{t})$ depend on $s_{t-\delay_2+2},\dots,s_{t-\delay_1}$ and we can therefore integrate those terms out.
    
    Summarising, we have found a non-stationary history-based policy in  $\delayedmarkovdecisionprocess_{1}$ which has the same augmented state distribution at any time as the optimal policy in $\delayedmarkovdecisionprocess_{2}$. 
    By design, it selects the same action as $\optimalpolicy_2$ in any state and therefore has the same reward at any time.
    This shows that $(\pi_t)$ and $\optimalpolicy_2$ have the same discounted expected return or average reward.
    Since $(\pi_t)$ is a particular policy for $\delayedmarkovdecisionprocess_{1}$, the optimal policy in $\delayedmarkovdecisionprocess_{1}$ may yield an even higher return or average reward. This concludes the proof.
\end{proof}

\begin{remark}
    From \cite[Theorem~6.2.10 and Theorem~8.1.2]{puterman1994markov} we even know that there is an optimal Markovian (in the sense that it uses only the augmented state and not more history) policy in $\delayedmarkovdecisionprocess_{1}$  that yields $\expectedreturn[1][\star]$.
\end{remark}

We extend the previous result with a corollary, which says that although the policy that we have designed in the proof is history-based, there exists a Markovian policy in $\delayedmarkovdecisionprocess_{1}$ (also in the sense that it uses only the augmented state and not more history) with the same state distribution as the optimal policy in $\delayedmarkovdecisionprocess_{1}$.

\begin{prop}
\label{corol:markov_optimal_delayed_equiv}
    Let $\delayedmarkovdecisionprocess_{1}$ and $\delayedmarkovdecisionprocess_{2}$ be two consistent \glspl{dmdp} with respective delay vectors $\delay_1$ and $\delay_2$ such that $\delay_1<\delay_2$. 
    Consider a Markovian policy $\delayedpolicy_2$ in $\delayedmarkovdecisionprocess_{2}$ (that is, with respect to its augmented state $\augmentedstatespace_2=\statespace\times\actionspace^{\delay_2}$) with $\sigma$-finite occupancy measure.
    Then there exists a Markovian policy in $\delayedmarkovdecisionprocess_{1}$ (with respect to its augmented state $\augmentedstatespace_1=\statespace\times\actionspace^{\delay_1}$) that has the same state-action distribution induced on $\augmentedstatespace_1$ as the optimal policy $\delayedpolicy_2$.
\end{prop}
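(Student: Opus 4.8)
The plan is ``lift, then collapse''. First I would reuse the construction from the proof of \Cref{th:perf_delay_vector_all_weight} to obtain a (generally non-stationary, history-dependent) policy $\pi$ in $\delayedmarkovdecisionprocess_{1}$ that reproduces the behaviour of $\delayedpolicy_2$, and then I would reduce $\pi$ to a Markovian policy on $\augmentedstatespace_1$ with the same state--action occupancy, using the classical fact that in an \gls{mdp} history-dependent policies are matched, in occupancy, by Markovian ones.

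For the lifting step, note that the policy $(\pi_t)_{t\ge 0}$ built in the proof of \Cref{th:perf_delay_vector_all_weight} uses the reference policy in $\delayedmarkovdecisionprocess_{2}$ only through its Markovianity --- i.e.\ the fact that it depends on the history solely through $x_t'=(s_{t-\delay_2},a_{t-\delay_2},\dots,a_{t-1})\in\augmentedstatespace_2$ --- and never through its optimality. Hence, substituting the given Markovian $\delayedpolicy_2$ for $\optimalpolicy_2$ throughout that argument, and using that $\delayedmarkovdecisionprocess_{1}$ and $\delayedmarkovdecisionprocess_{2}$ are consistent, produces a policy $\pi$ in $\delayedmarkovdecisionprocess_{1}$ --- legitimate since $\delay_1<\delay_2$ means its observations form a superset of $x_t'$ --- such that at every time $t$ the joint law of $(x_t',a_t)$ under $\pi$ equals the joint law of $(x_t',a_t)$ under $\delayedpolicy_2$ in $\delayedmarkovdecisionprocess_{2}$ (the former read off the shared underlying \gls{mdp} trajectory). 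Because the state evolves by the common transition kernel $p$ in both processes, this equality of laws propagates from $x_t'$ to the $\augmentedstatespace_1$-augmented state $x_t=(s_{t-\delay_1},a_{t-\delay_1},\dots,a_{t-1})$: the law of $(x_t,a_t)$ under $\pi$ in $\delayedmarkovdecisionprocess_{1}$ equals the law of $(x_t,a_t)$ induced by $\delayedpolicy_2$ in $\delayedmarkovdecisionprocess_{2}$, for every $t$. Summing with weights $\gamma^t$ (discounted case) or averaging over the horizon (average-reward case) gives equality of the corresponding state--action occupancy measures and distributions on $\augmentedstatespace_1\times\actionspace$; in particular the occupancy measure of $\pi$ is $\sigma$-finite, since that of $\delayedpolicy_2$ is assumed to be and is automatically so in the discounted case.

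For the collapsing step, recall that by \cite{bertsekas1987dynamic,altman1992closed} the augmented-state process of $\delayedmarkovdecisionprocess_{1}$ is itself an \gls{mdp} with state space $\augmentedstatespace_1$, whose Markovian policies are exactly the $\augmentedstatespace_1$-based ones. Within this \gls{mdp}, $\pi$ is a history-dependent policy, so by the classical reduction \cite[Theorem~5.5.1]{puterman1994markov} there is a Markovian policy $\delayedpolicy_1$ on $\augmentedstatespace_1$ with $\probability(X_t=x,A_t=a;\delayedpolicy_1)=\probability(X_t=x,A_t=a;\pi)$ for all $t$, $x$, $a$; passing to the $\gamma$-weighted sum or to the time average then yields equality of the state--action occupancy of $\delayedpolicy_1$ and of $\pi$, hence of $\delayedpolicy_1$ and $\delayedpolicy_2$ on $\augmentedstatespace_1\times\actionspace$. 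If a \emph{stationary} witness is desired, one instead normalises the ($\sigma$-finite) occupancy measure, $\delayedpolicy_1(\de a\vert x)=d^{\pi}(x,\de a)/d^{\pi}(x,\actionspace)$ where the denominator is positive and arbitrary elsewhere, which is the standard occupancy-measure-to-policy correspondence (cf.\ \cite{laroche2022non}) and again realises the target occupancy. Either way this gives the claimed Markovian policy in $\delayedmarkovdecisionprocess_{1}$.

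The high-level idea is immediate; the delicate points are all measure-theoretic. The main obstacle is the first step: making rigorous that the lifted policy reproduces $\delayedpolicy_2$ not merely as a marginal on $\augmentedstatespace_2$ but in a way that, pushed onto the finer coordinate $x_t$, still matches --- essentially a trajectory-coupling argument on the common underlying \gls{mdp}, where consistency of the two \glspl{dmdp} and the initial-phase bookkeeping enter --- together with verifying that $\sigma$-finiteness, needed to invoke the occupancy-to-policy correspondence on a possibly continuous $\augmentedstatespace_1$, is preserved along the way.
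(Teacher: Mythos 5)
Your proposal is correct and follows essentially the same route as the paper: reuse the construction from \Cref{th:perf_delay_vector_all_weight} (noting that only the Markovianity of $\delayedpolicy_2$, not its optimality, is used) to obtain a history-based policy in $\delayedmarkovdecisionprocess_{1}$ matching $\delayedpolicy_2$'s occupancy, then collapse it to a Markovian policy on $\augmentedstatespace_1$ via a history-to-Markovian occupancy-preserving reduction (the paper invokes \cite[Theorem~3]{laroche2022non} where you cite Puterman's classical result / the occupancy-to-policy correspondence, a cosmetic difference). Your added care about propagating the equality of laws from $\augmentedstatespace_2$ to the finer $\augmentedstatespace_1$ coordinates and about $\sigma$-finiteness only makes explicit what the paper's terser proof leaves implicit.
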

\begin{proof}
    Let $\delayedpolicy_2$ be a policy with $\sigma$-finite occupancy measure in $\delayedmarkovdecisionprocess_{2}$.
    From the proof of \Cref{th:perf_delay_vector_all_weight}, one can build a history-based policy $\delayedpolicy_1$ in $\delayedmarkovdecisionprocess_{1}$ with the same state-action distribution on $\augmentedstatespace_2$.
    Then, from \cite[Theorem~3]{laroche2022non} one has that there is a Markovian policy $\delayedpolicy$ in $\delayedmarkovdecisionprocess_{1}$ with the same state distribution over $\augmentedstatespace_1$ as this $\delayedpolicy_1$. 
    This concludes the proof.
\end{proof}

\subsection{On the Complexity of Constant Delays}
\label{subsec:delay_complexity}

The following proposition is a direct implication from \cite{bertsekas1987dynamic,altman1992closed} which states that a \gls{dmdp} with state observation and/or action execution delays can be cast back into an \gls{mdp} by using the augmented state.
\begin{prop}
\label{pp:dmdp_pompd}
Model-based and memoryless approaches to state observation or action execution delays cast the problem from a \gls{dmdp} to a \gls{pomdp}. 
In particular, the framework of a policy using belief representations as input can be reduced to POMDPs.
\end{prop}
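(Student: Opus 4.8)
The plan is to build the \gls{pomdp} explicitly on top of the augmented \gls{mdp}. Recall from \Cref{subsubsec:theory_augmented_space} (following \cite{bertsekas1987dynamic,altman1992closed}) that a constantly delayed process is equivalent to the \gls{mdp} on the augmented state space $\augmentedstatespace=\statespace\times\actionspace^{\delay}$, with transition $\augmentedtransitionfunction$, reward $\augmentedexpectedrewardfunction$, and an initial augmented-state distribution. On this fully observed process the three approaches differ only in what is handed to the policy: the model-based approach feeds a fixed statistic $f(x_t)$ of the current augmented state $x_t$; the memoryless approach feeds the projection $g(x_t)=s_{t-\delay}$ onto the last observed state; and the belief-representation approach of \Cref{subsec:belief_net} feeds the vector $f_{\vectorialform{\phi}}(x_t)_{\delay}\in\realnumbers^{n_b}$. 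In every case the agent receives the image of $x_t$ under a fixed deterministic map.

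I would then define a \gls{pomdp} in the sense of \Cref{subsec:pomdp} by keeping the underlying state space $\augmentedstatespace$, dynamics $\augmentedtransitionfunction$, reward $\augmentedexpectedrewardfunction$ and initial distribution of the augmented \gls{mdp}, taking the observation set $\Omega$ to be the image of $\augmentedstatespace$ under the relevant map ($f(\augmentedstatespace)$, or $\statespace$, or $f_{\vectorialform{\phi}}(\augmentedstatespace)_{\delay}\subseteq\realnumbers^{n_b}$), and setting the observation function to the deterministic push-forward of the arriving state, $O(\cdot\,;x',a)=\delta_{f(x')}$ (respectively $\delta_{g(x')}$ or $\delta_{f_{\vectorialform{\phi}}(x')_{\delay}}$), which is a legitimate element of $\setofprobability(\Omega)$ and does not even depend on $a$. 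Because the belief-representation map is just a particular choice of statistic $f$, the final sentence of the proposition is the corresponding instance of the model-based case. The remaining step is to match the decision problems: a policy of this \gls{pomdp}, i.e. a map from sequences of past observations and actions to $\setofprobability(\actionspace)$, has exactly the inputs available to the model-based (resp. memoryless) agent, and restricting to memoryless \gls{pomdp} policies recovers the reactive policies on $f(x_t)$ (resp. on $s_{t-\delay}$); since $\augmentedtransitionfunction$ and $\augmentedexpectedrewardfunction$ still depend on the hidden augmented state, this is genuinely a partial-observation model rather than an \gls{mdp}.

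I expect the only delicate points to be conventions, not mathematics: one must check that the timing of observations in \Cref{subsec:pomdp} is respected (the observation of $x_t$ is emitted before the action at step $t$, which may also require specifying an initial observation law), and one must state clearly that ``reduction'' here is an embedding of the delayed control problem into the \gls{pomdp} class that preserves the behaviours attainable by the restricted policy class---not the claim that optimally solving this \gls{pomdp} solves the original \gls{dmdp}, which fails precisely because $f$ discards information, as shown in \Cref{subsec:model_based_sub_optimal}.
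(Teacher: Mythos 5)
Your construction is correct and follows essentially the same route as the paper: invoke the augmented-state equivalence of \cite{bertsekas1987dynamic,altman1992closed} to obtain an \gls{mdp} on $\augmentedstatespace$, then observe that the statistic fed to a model-based, memoryless, or belief-representation policy is a (here deterministic) partial observation of that augmented state, which is precisely the \gls{pomdp} framework. Your version merely makes the observation space and the Dirac observation kernel explicit, and your closing caveat about what ``reduction'' does and does not claim is consistent with the sub-optimality example of \Cref{subsec:model_based_sub_optimal}.
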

\begin{proof} 
    From \cite{bertsekas1987dynamic,altman1992closed}, by using the augmented state, a \gls{dmdp} is cast to an \gls{mdp}. 
    Then, since model-based approaches use only some statistic of the current state computed from the augmented state and since the memoryless uses only part of the latter, the observation used by the agent for its policy can be seen as a partial observation of the true augmented state.
    This is exactly the framework of a \gls{pomdp}.
\end{proof}

This result is a negative result as the \gls{dmdp} can be cast into an \gls{mdp} by augmenting the state, but we are now dealing with only a partial observation of it and it is known that \gls{pomdp} can be much harder to solve than \gls{mdp} \cite[Theorem 6]{papadimitriou1987complexity}. 
It is confirmed by \cite[Theorem~2]{walsh2009learning} who show that the problem of planning in a \gls{dmdp} using the augmented state is already NP-Hard. 

\begin{figure}[t]
    \centering
    \includegraphics[labelbelief=mdppomdpsets]{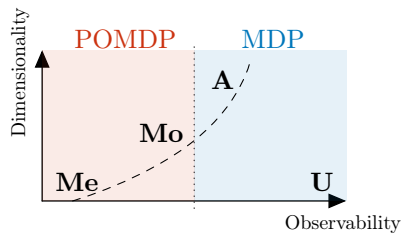}
    \caption{Dimensionality of the problem with respect to the quantity of information that is available to the agent. 
    The trade off between these quantities is symbolised by the black dashed line.
    The undelayed \gls{mdp} ($\textbf{U}$) has complete information and low dimensionality while the augmented state approach ($\textbf{A}$), which is an MDP~\cite{bertsekas1987dynamic,altman1992closed}, has higher dimensionality. 
    Memoryless approach ($\textbf{Me}$) has low dimensionality but is a \glspl{pomdp} as shown in \Cref{pp:dmdp_pompd}.
    Model-based approach ($\textbf{Mo}$) also yields a \glspl{pomdp}~\Cref{pp:dmdp_pompd}. 
    This last method allows to trade off dimensionality and observability.
    } 
\label{fig:relations_dmdp_mdp_pomdp}
\end{figure}

\subsection{Counter Example for Model-Based Approaches}
\label{subsec:model_based_sub_optimal}

Despite the negative result exposed in the previous section, one could hope that the special structure of \gls{dmdp} allows model-based approaches to achieve optimal performance. 
Sadly, we prove in the following proposition that the optimal return over the set of model-based policies can be sub-optimal with respect to the best augmented state-based policy.

\begin{prop}
\label{pp:counter_example_belief_based}
    Let $\expectedreturn[b][*]$ be the optimal return over the space of belief-based policies and $\expectedreturn[a][*]$ the one over the space of augmented state-based policies for the same \gls{dmdp}. Then, there exists \glspl{dmdp} where
    \begin{align*}
        \expectedreturn[b][*]<\expectedreturn[a][*].
    \end{align*}
\end{prop}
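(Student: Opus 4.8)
The plan is to construct an explicit small \gls{dmdp} in which the belief of the current state loses information that an augmented-state policy could exploit. The intuition is that two different augmented states can induce the \emph{same} belief over the current state, while differing in which future states they will lead to; an augmented-state policy can distinguish them and act differently, whereas a belief-based policy is forced to act identically on both. Since the action chosen now is executed in the (unobserved) present but its consequences propagate into the future, committing to a single action on two augmented states that have the same present-belief but different future dynamics is strictly suboptimal. So the key is to engineer a transition structure where the belief map $x \mapsto \augmentedbelief(\cdot\vert x)$ is not injective, yet the collapsed augmented states genuinely require different actions.

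Concretely, I would take a delay $\delay = 2$ (the smallest delay for which the augmented state carries a two-action buffer and beliefs can collapse nontrivially), a small finite state space, and a small action space. First I would pick two augmented states $x_1 = (s, a_1, b)$ and $x_2 = (s', a_1', b)$ — same most recent buffered action $b$ but different last-observed states and first buffered actions — chosen so that after applying $a_1$ from $s$ and after applying $a_1'$ from $s'$, the resulting distributions over the ``one-step-ahead'' state coincide. Then by \Cref{eq:belief_def} the beliefs $\augmentedbelief(\cdot\vert x_1)$ and $\augmentedbelief(\cdot\vert x_2)$ are equal (the buffered action $b$ that still has to be applied is shared, so the full belief over $s_{\delay+1}$ matches). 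Next I would arrange the reward function and the dynamics from the one-step-ahead state so that, \emph{conditioned on the history that produced $x_1$}, the optimal action to select now is, say, $\alpha$, while conditioned on the history that produced $x_2$ it is $\beta \neq \alpha$ — this requires the one-step-ahead \emph{state} (not just the belief, which is the same) to differ in distribution in a way that the action executed now interacts with. The cleanest way to do this is to make the dynamics deterministic \emph{from the one-step-ahead state onward} but stochastic in the single transition $s \to$ (one-step-ahead), tuned so the mixture is the same for $x_1$ and $x_2$ but the supports, paired with the action, give different returns.

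I would then compute $\expectedreturn[a][*]$ by exhibiting the augmented-state policy that plays $\alpha$ on $x_1$ and $\beta$ on $x_2$ (and optimally elsewhere), and compute $\expectedreturn[b][*]$ by observing that any belief-based policy must play a fixed distribution over actions on the common belief $\augmentedbelief(\cdot\vert x_1) = \augmentedbelief(\cdot\vert x_2)$; a short case analysis (or a convexity argument, since the belief-based return is an average of the two augmented-state situations under a common action distribution) shows the best such fixed choice is strictly worse than handling each case optimally. This gap argument — essentially Jensen/strict-concavity of ``max over actions'' versus ``single action averaged over the two branches'' — gives $\expectedreturn[b][*] < \expectedreturn[a][*]$.

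The main obstacle I anticipate is the belief-collapse construction itself: I need the two augmented states to be genuinely \emph{reachable} (so they carry positive occupancy under some policy and the optimal belief-based policy really is constrained on both), to have \emph{identical} belief over the current state, and yet to demand different current actions. Getting all three simultaneously in a minimal example requires care — in particular the ``different optimal action'' part forces the future (beyond the current unobserved state) to depend on information that survives in the augmented state but is erased by the belief, which is exactly the $\delay \ge 2$ phenomenon already flagged in \Cref{subsec:control_theory} via \cite{varaiya1978delayed}. I would likely borrow the spirit of the delayed-sharing-information counterexample there, or the construction hinted at in \cite[Remark 3.1]{derman2021acting}, adapting it to the single-agent belief-vs-augmented-state comparison, and then verify the two returns by direct computation on the (small, explicit) \gls{dmdp}.
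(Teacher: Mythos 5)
Your overall strategy is the same as the paper's (an explicit small \gls{dmdp} with delay $2$, two reachable augmented states whose beliefs over the current state coincide, and a Jensen-type argument that a single forced action is worse than adapting to each case), but the concrete collapse mechanism you propose undercuts itself. If you force the distributions of the intermediate (``one-step-ahead'') state after applying $a_1$ from $s$ and $a_1'$ from $s'$ to coincide, and the remaining buffered action $b$ is shared, then the joint law of everything that follows from $x_1$ and from $x_2$ --- the next observation, the current unobserved state, and all future states and rewards under any fixed continuation policy --- is identical; the two augmented states are stochastically equivalent, so no action can be strictly better at one than at the other and no gap can be engineered. Your later requirement that the one-step-ahead state should ``differ in distribution in a way that the action executed now interacts with'' contradicts the coincidence you imposed to make the beliefs equal via \Cref{eq:belief_def}, and it also misplaces where an advantage could come from: the action chosen now applies to the current unobserved state, whose belief you have made identical, so the expected immediate reward and the successor distribution under any now-action agree across $x_1$ and $x_2$; any difference in optimal behaviour must come from how informative the \emph{upcoming observations} will be when \emph{later} actions are chosen.

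That differential informativeness is precisely the ingredient the paper engineers and your recipe never secures. In \Cref{fig:counter_example_dmdp_pomdp} both histories induce the belief $(1/2,1/2)$ over $\{s,s'\}$, but along the blue path the stochastic branching has already occurred, so the next observation reveals the branch in time to adapt the second action, whereas along the red path the branching happens later and nothing useful is revealed before the second action must be committed. The two intermediate-state distributions thus differ in informativeness --- only their pushforward through the shared dynamics to the current state coincides --- and the augmented state, through the initialising action buffer, tells the agent which regime it is in while the belief does not; the direct computation ($13.75$ versus $12.5$) then yields the strict inequality of \Cref{pp:counter_example_belief_based}. To repair your plan you would need to replace ``same one-step-ahead distribution, deterministic afterwards'' by ``different timing of the stochastic transition relative to the observation lag, with equal current-state marginals,'' which essentially reconstructs the paper's example (or the delayed-sharing counterexample of \cite{varaiya1978delayed} that you cite).
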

\begin{proof}
    We prove this result by giving an example of such a \gls{dmdp}.
    The example is represented in \Cref{fig:counter_example_dmdp_pomdp}. 
    In this example, the delay is set to two steps. The agent starts at the lowermost state, and, to initialise the process, the first two actions are selected uniformly at random. In any state, two actions are available, $a$ and $b$. 
    Note that, except for the uppermost states, no transition yields a reward. 
    The former act as absorbing states and provide a discounted return or average reward that is indicated above the node.
    Therefore, the reward collected by the agent will only depend on the uppermost state that it reaches.
    All the transitions are deterministic, except for the transitions indicated with dashed lines. 
    In the latter case, the probability of following either route is $1/2$.
    The peculiarity of this \gls{mdp} is the following. 
    Due to the 2-step delay, after the initialisation, the agent is either in state $s$ or $s'$.
    This means that, given the structure of the \gls{mdp}, the belief of the agent about its current state is $1/2$ for $s$ and $1/2$ for $s'$.
    Notably, a belief-based policy is blind to the path the agent has taken to reach $s$ or $s'$.
    The first path, in red, goes to the state directly above the initial state deterministically and then transitions to either $s$ or $s'$. 
    The second path, in blue, goes left or right at the first step before transitioning deterministically to either $s$ or $s'$. 
    The interesting property of the blue path is that from the second step, the agent will know whether it is going to reach $s$ or $s'$. 
    However, on the red path, the agent will have to wait for an extra step before knowing whether it actually was in $s$ or $s'$.
    A belief-based agent sees the same state whatever the path that he will follow, while an augmented state-based agent knows whether it will follow the blue or red path from the first action contained in the augmented state.
    The belief-based agent must therefore select an action which optimises the return regardless of the path, while the augmented state-based policy might leverage this additional information to select its actions more carefully. 
    
    The agent has to select the next two actions before achieving the final reward. 
    For the red path, the agent will select the two actions before obtaining any knowledge about whether he will follow the left or right branch. 
    The reward that the agent collects for each combination of actions is given in \Cref{tab:reward_red_path}. 
    Instead, for the blue path, the agent may adapt its second action according to the branch of the \gls{mdp} it is following. 
    Similarly, we provide the rewards given the first action and the observed state in \Cref{tab:reward_blue_path}. 
    Note that given this state and action, the optimal action to choose next is fixed. 
    
    We design the initialisation so that the agent has the same probability of following either path. 
    For the red path, the agent should play $b$ as a first action followed by $a$ to collect a reward of $12.5$. 
    For the blue path, it should instead select first action $a$ and then adapt its second action based on the observation to get an average reward of $15$ compared to an average reward of $12.5$ if it played $b$ as a first action.
    
    Therefore, the augmented state-based policy will follow the above reasoning to collect a reward of $\expectedreturn[a][*]=13.75$ on average. 
    The belief-based policy is instead constrained to select the same action first for both paths as it is unaware of the path it will follow.
    When selecting $a$ first, the agent will get a reward of $10$ in the red path case and $15$ in the blue path case. 
    By selecting $b$ first, it will collect a reward of $6.25$ in the red path case and $12.5$ in the other. 
    This means that, whatever its choice of the first action, it will always collect a smaller reward in expectation. 
    Its best return is obtained by playing $a$ first and amounts to $\expectedreturn[b][*]=12.5$. 
    This concludes the proof.
\end{proof}

\begin{figure}[t]
    \centering
    \includegraphics[labelbelief=counterexampledmdppomdp]{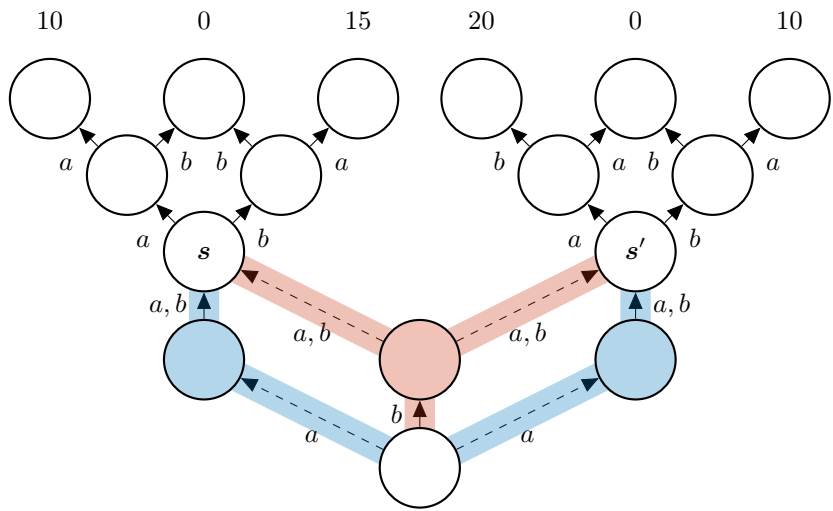}
    \caption{Example of a \gls{dmdp}, where, for a 2-step delay, the belief-based policy is sub-optimal with respect to the best augmented state-based policy.
    In any state, the agent can select either one of two actions, $a$ and $b$.  
    None of the transitions but the uppermost ones provide a reward. For the latter, a recurrent state provides the discounted return or average reward that is indicated above. 
    All transitions are deterministic except for the dashed ones, where the agent selecting the given action has a probability of $1/2$ to follow either one of the edges. 
    The agent starts at the lowermost state. 
    The underlying \gls{mdp} is designed such that it has two paths, which give rise to the same belief over the states $s$ and $s'$.}
    \label{fig:counter_example_dmdp_pomdp}
\end{figure}

\begin{table}[t]
\centering
\begin{tabular}{l|rr} 
  & \multicolumn{1}{c}{$a$} & \multicolumn{1}{c}{$b$} \\ 
\hline
$a$ & $5$ & $10$ \\ 
$b$ & $12.5$ & $0$ \\
\end{tabular}
\caption{\label{tab:reward_red_path}Rewards for all couples of first (row) and second (column) actions, initially following the red path.}
\end{table}

\begin{table}[t]
\centering
\begin{tabular}{l|rr} 
  & \multicolumn{1}{c}{$s$} & \multicolumn{1}{c}{$s'$} \\ 
\hline
$a$ & $a\rightarrow10$ & $b\rightarrow20$ \\ 
$b$ & $a\rightarrow15$ & $a\rightarrow10$ \\
\end{tabular}
\caption{\label{tab:reward_blue_path}Rewards for all couples of first action (row) and state observed at the next step (row), initially following the blue path. The second action is selected optimally and indicated inside the table.}
\end{table}

However discouraging this counter-example can be, we provide in \Cref{sec:th_analysis_imitation} an analysis of a special type of belief-based policy and show that, under smoothness conditions, the return of this policy is close to the optimal delayed one--up to a constant. 

\section{Experimental Evaluation}
\label{sec:exp_belief}

In this section, we evaluate the empirical performance of our belief representation network for different tasks, in deterministic or stochastic environments. 
In \Cref{subsec:setting_exp_belief}, we describe the setting of each task, of the baselines, and of D-TRPO. 
In \Cref{subsec:results_belief}, we provide and discuss the results of the experiments.

\subsection{Setting}
\label{subsec:setting_exp_belief}

Note that all the results for all the baselines, for our approach, and for all the tasks are averaged over 10 seeds.

\subsubsection{Tasks}
\label{subsubsec:tasks}
\noindent\textbf{Pendulum.}\indent In this task, the agent must learn to rotate a pendulum upward. 
This task is classic in delayed \gls{rl} because the performance of traditional \gls{rl} algorithms drops rapidly as the delay increases. 
This is in part due to the unstable equilibrium in the upward position, which is difficult to maintain when delayed. 
In the implementation, we use the version of the \texttt{gym} library~\cite{brockman2016gym}.
This environment is deterministic. 
Given the relatively low computational cost of running experiments on the Pendulum, we repeat the analysis for constant delays in the set $\{3, 5, 10, 15, 20\}$.
For this task, we performed 500 epochs of 5,000 steps each, for a total of 2.5 million steps.

\noindent\textbf{Stochastic Pendulum.}\indent This task is simply the same as the previous one, but stochasticity is artificially added to the process. 
To do so, we add an i.i.d. noise of the form $\epsilon = \text{scale}( \eta+ \text{shift})$ to the action selected by the agent, where $\eta$ is some probability distribution.
The six noises considered are given in \Cref{tab:noises}. 
An additional noise is considered, where the environment follows the action indicated by the agent with a probability of 0.9 but otherwise samples an action uniformly at random. 
We call this process the \emph{uniform} noise.
For these tasks, we consider only a delay of 5, we perform 1000 epochs of 5,000 steps each, for a total of 5 million steps.

\begin{table}[t]
\centering
    \begin{tabular}{|l||l|l|l|l|}
        \hline
        Noise & Distribution  $\eta$ & Shift & Scale & Group \\ \hline
        \textit{Beta (8,2)}& $\beta (8,2)$ & $0.5$ & $2$ &  1\\ 
        \textit{Beta (2,2)} & $\beta (2,2)$ & $0.5$ & $2$ & 1\\ 
        \textit{U-Shaped} & $\beta (0.5,0.5)$ & $0.5$ & $2$ & 1\\ 
        \textit{Triangular} & $\text{Triangular} (-2,1,2)$ & $0$ & $1$ & 2\\ 
        \textit{Lognormal (1)} & $\text{Lognormal} (0,1)$ & $-1$ & $1$ & 3\\ 
        \textit{Lognormal (0.1)} & $\text{Lognormal} (0,0.1)$ & $-1$ & $1$ & 3\\ 
        \hline
    \end{tabular}
    \caption{Distributions for the noise added to the action in the stochastic Pendulum.}
    \label{tab:noises}
\end{table}

\noindent\textbf{Mujoco.}\indent This is a continuous robotic locomotion control task where an advanced physics simulator is used, provided by the library \texttt{mujoco}~\cite{todorov2012mujoco}. 
The complexity of these tasks lies in their intricate dynamics and in the large state and action spaces. 
From all Mujoco environments, we consider Walker2d, HalfCheetah, Reacher, and Swimmer which, after early testing, were shown to be the most affected by the delay.
Similarly to the Pendulum, this can be explained by the presence of an unstable equilibrium in some cases. 
These environments are deterministic. 
For these tasks, we consider only a delay of 5, and perform 1,000 epochs of 5,000 steps each, for a total of 5 million steps.

\subsubsection{Baselines}
\label{subsubsec:baselines}
As baselines, we include \gls{trpo} with the augmented state (A-TRPO) and memoryless \gls{trpo} (M-TRPO) to have a spectrum of approaches based on \gls{trpo}. 
In addition, we consider \gls{sac} for the great empirical results that it has obtained when applied to delay\cite{ramstedt2019real,bouteiller2020reinforcement}. 
We also include \gls{sac} with an augmented state (A-SAC) and \gls{sac} with a memoryless state (M-SAC) in our baselines.
A hyperparameter for \gls{sac} is the frequency at which it is updated.
Although setting this parameter to retrain at each step increases sample efficiency, it drastically increases the computational time and memory usage. 
For this reason, we trained \gls{sac} at every step for Pendulum only, while restricting the training to every 50 steps on Mujoco to speed up the procedure.
We considered adding DCAC~\cite{bouteiller2020reinforcement} but its implementation happened to be computationally expensive, and we decided not to include it. 
Early experimental results showed that its run time was more than 10 times that of the other algorithms. 
Furthermore, we consider SARSA and dSARSA with $\lambda=0.9$ but only for the Pendulum environment. Since SARSA is a batch \gls{rl} algorithm, it requires careful discretisation of the state and action spaces, which has a direct impact on the final performance. 
Therefore, it requires an additional step of validation of the discretisation. 
For the Pendulum, we used a tuned 15x15 grid for the state space and a total of five discrete actions.
Lastly, we include L2-TRPO as a baseline in our tests, as it learns the expected value of the state using a similar approach to \cite{firoiu2018human}.

\subsubsection{Setting for D-TRPO}
In the experiments presented here, we test D-TRPO which results from plugging our belief representation network into \gls{trpo}~\cite{schulman2015trust}.
An important remark is that, after early experiments with D-TRPO, we have noted that the belief representation can change significantly as the belief module is being optimised.
In turn, this implies an unstable optimisation of the policy of \gls{trpo}, since its input distribution is constantly changing. 
To alleviate this problem, we propose the early stopping of the training of the belief representation module after 200 epochs.

\subsection{Results}
\label{subsec:results_belief}

\begin{figure}[t]
    \centering
    \includegraphics[labelbelief=pendulumvaryingdelay]{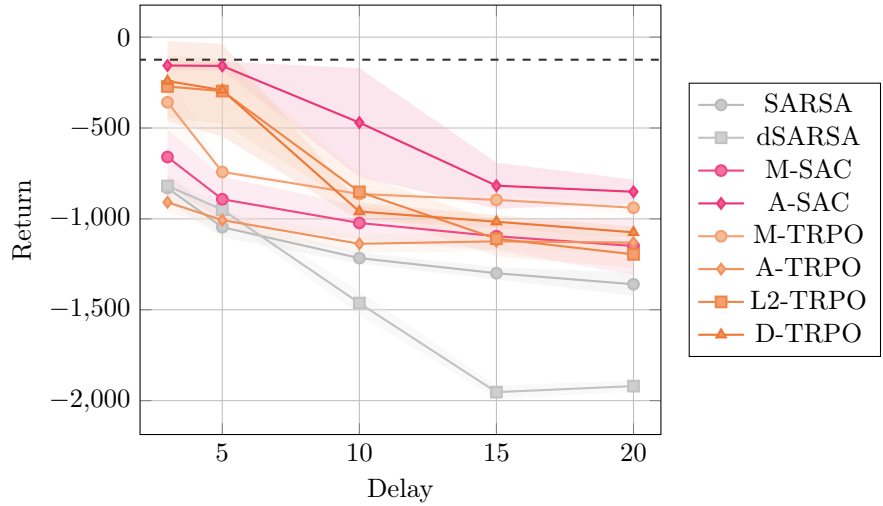}
    \caption{Mean return of D-TRPO and baselines for the Pendulum environment for different values of the delay. 
    The horizontal dashed line indicates undelayed \gls{sac}'s performance. 
    Shaded areas represent one standard deviation and the results are obtained for 10 seeds.}
    \label{fig:pendulum_varying_delay_belief}
\end{figure}

\noindent\textbf{Pendulum.}\indent The returns for the different approaches and for different values of the delay are provided in \Cref{fig:pendulum_varying_delay_imitation}. 
The horizontal dashed line indicates the return obtained by an undelayed version of \gls{sac}. 
For delays of up to five, D-TRPO and L2-TRPO have a return comparable to that of the undelayed \gls{sac}.
Note also the great performance of A-SAC in this case.
However, as the delay increases, the performance of D-TRPO and L2-TRPO drops more significantly than that of A-SAC.
In \Cref{fig:delay_5_pendulum_belief}, we focus on the performance of D-TRPO and the baselines for a delay of 5 during the learning phase. 
In the figure, we also report the result for undelayed \gls{trpo}. Naturally, the latter is faster than any delayed approach with \gls{trpo} but interestingly, it is slower than A-SAC. 
This suggests that SAC is particularly efficient in addressing this problem, even in the presence of delays.
Apart from A-SAC, D-TRPO and L2-TRPO have the fastest convergence rates and require around 1 million extra steps compared to undelayed TRPO to reach a similar policy.

\begin{figure}[t]
    \centering
    \includegraphics[labelbelief=delay5pendulumbelief]{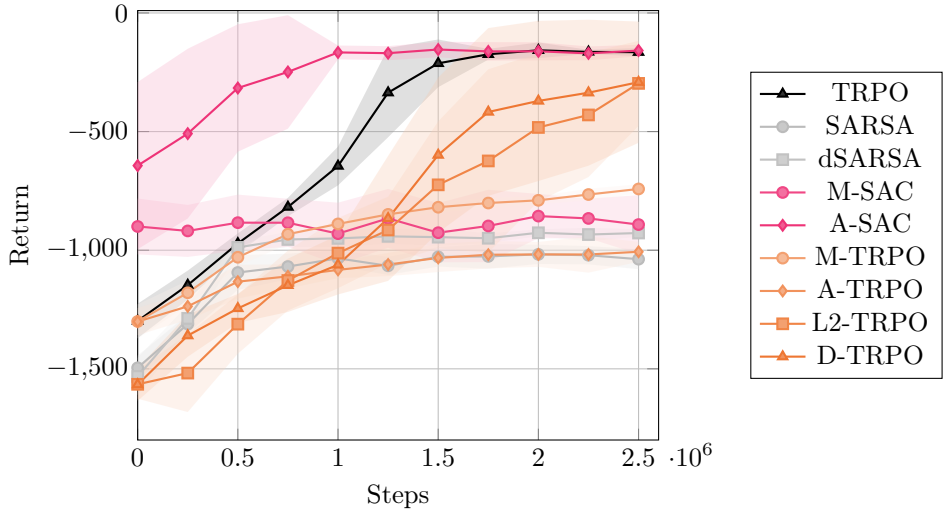}
    \caption{Return obtained by D-TRPO and L2-TRPO compared to other baselines for the Pendulum task with delay 5. Undelayed \gls{trpo}'s performance, referred to as ``TRPO'', is included for comparison.
    Shaded areas represent one standard deviation and the results are obtained for 10 seeds.}
    \label{fig:delay_5_pendulum_belief}
\end{figure}

\noindent\textbf{Mujoco.}\indent In \Cref{fig:delay_5_mujoco_belief}, we report the results for the mujoco environments. 
Here as well, A-SAC is particularly efficient, not coming as first for only the Swimmer environment. 
Surprisingly, it is M-TRPO that performs best in this environment after 5 million steps. 
D-TRPO and L2-TRPO perform well in all environments, showing significant underperformance for Walker2d only.

\begin{figure}[t]
    \centering
    \includegraphics[labelbelief=delayfivemujocobelief]{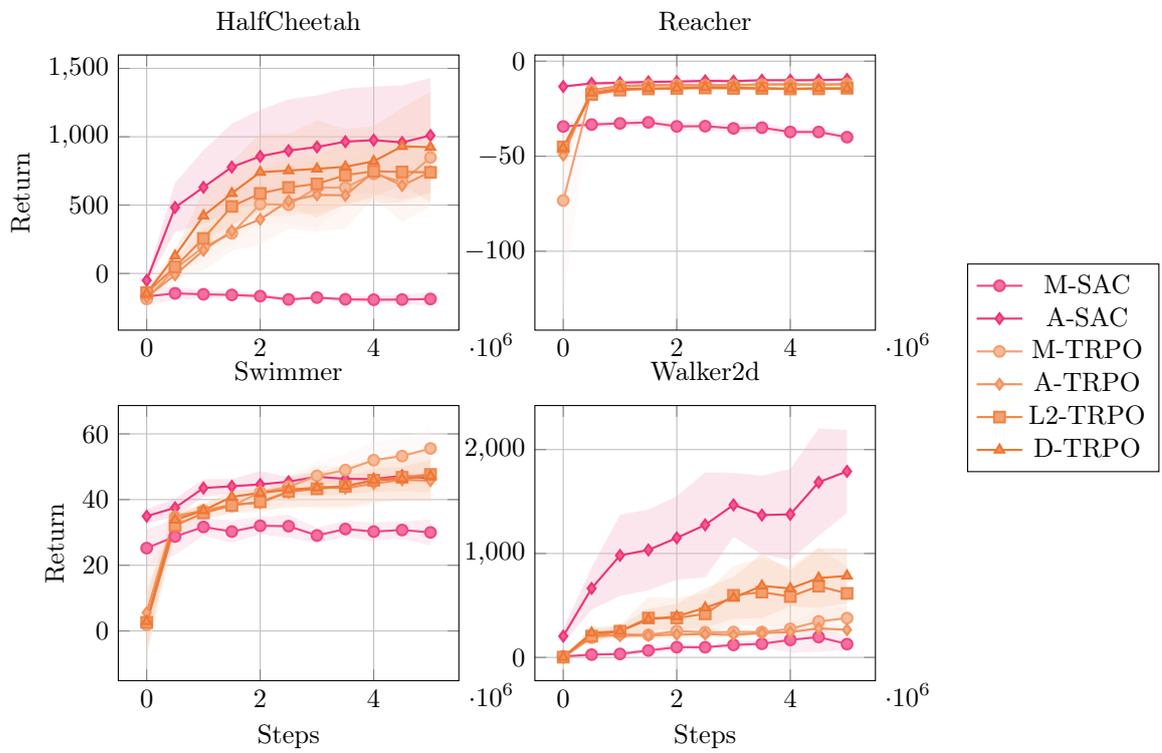}
    \caption{Return obtained by D-TRPO and L2-TRPO compared to baselines for various Mujoco environments.
    Shaded area indicate one standard deviation and the results are obtained for 10 seeds.}
    \label{fig:delay_5_mujoco_belief}
\end{figure}

\noindent\textbf{Stochastic Pendulum.}\indent On this task, we solely compare D-TRPO with L2-TRPO, to evaluate the advantage of learning a representation of the belief of a future state over its mean. 
We report the results in \Cref{fig:delay_5_pendulum_noise_belief}. 
For the readability of the results, we have grouped the noise into 3 groups.
One can observe that D-TRPO is never outperformed by L2-TRPO. 
For some of the noises, they obtain similar performances, such as for Uniform and Triangular noises.
Yet, for other types of noise, the difference in performance is clearer, as for the Quadratic and LogNormal (0.1) cases. 
This suggests that the belief representation is at worst unnecessary, but at best it provides higher performance. 

\begin{figure}[t]
    \centering
    \includegraphics[labelbelief=pendulumnoisebelief]{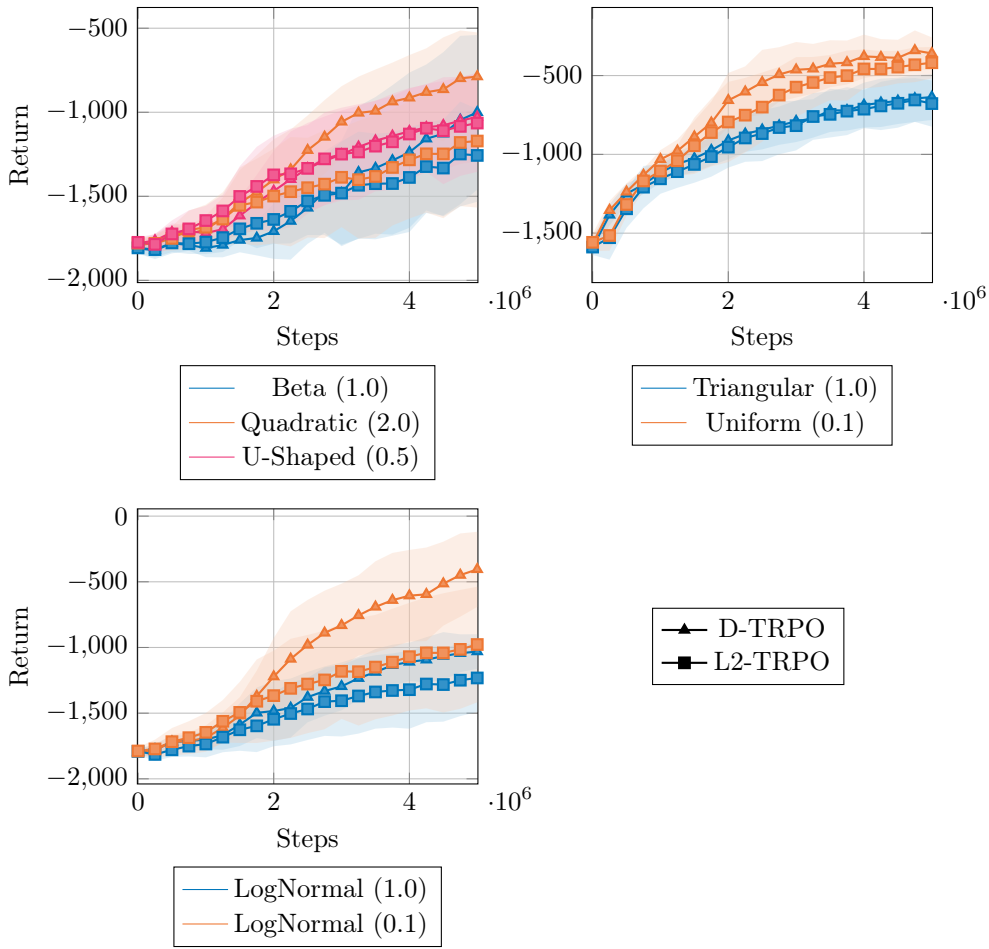}
    \caption{Return obtained by D-TRPO and L2-TRPO compared to baselines for the Pendulum environment to which are added different noises as explained in \Cref{tab:noises}.
    The colour indicates noise and the marker indicates the algorithm.
    The shaded area indicates one standard deviation and the results are obtained for 10 seeds.}
    \label{fig:delay_5_pendulum_noise_belief}
\end{figure}

\noindent\textbf{Learning Multiple Delays at Once.}\indent 
\label{subsec:learning_multi_delay_once_belief}
In this experiment, we test the idea presented in \Cref{subsec:multi_delay_once_belief} to apply the policy learnt by D-TRPO and L2-TRPO for some delay $\delay>0$ to another delay $0<\delay'<\delay$. 
In \Cref{fig:multi_delay_pendulum_dtrpo}, we provide the results obtained for the Pendulum task for D-TRPO and L2-TRPO trained with $\delay=5$.
In \Cref{fig:multi_delay_mujoco_dtrpo} and \Cref{fig:multi_delay_mujoco_ltrpo}, we provide the results obtained for Mujoco tasks for D-TRPO and L2-TRPO trained with $\delay=5$.
Clearly and as expected, the performance obtained for smaller delays is comparable to the original performance.

\begin{figure}[t]
    \centering    \includegraphics[labelbelief=multipledelayspendulumdtrpo]{img/thesis_plots_belief.pdf}
    \caption{Return obtained by a D-TRPO trained with delay 10 and tested on smaller delays on Pendulum.}
    \label{fig:multi_delay_pendulum_dtrpo}
\end{figure}

\begin{figure}[t]
    \centering
    \includegraphics[labelbelief=multipledelaysmujocodtrpo]{img/thesis_plots_belief.pdf}
    \caption{Return obtained by a D-TRPO trained with delay 5 and tested on smaller delays on Mujoco environments.}
    \label{fig:multi_delay_mujoco_dtrpo}
\end{figure}

\begin{figure}[t]
    \centering
    \includegraphics[labelbelief=multipledelaysmujocoltwotrpo]{img/thesis_plots_belief.pdf}
    \caption{Return obtained by an L2-TRPO trained with delay 5 and tested on smaller delays on Mujoco environments.}
    \label{fig:multi_delay_mujoco_ltrpo}
\end{figure}
\section{Conclusion}

In this chapter, we have proposed a model-based approach to the constant delay problem.
The main idea is to learn a vectorial representation of the belief, casting an infinite-dimensional quantity into finite dimensions. 
This has been possible thanks to a careful choice of \gls{nn}.
This belief representation can be plugged into any \gls{rl} algorithm as a pre-processing of the state, as we did with \gls{trpo} to create D-TRPO.
Using a simple ``trick'', we have shown how to leverage the policy learnt by D-TRPO on some delays to be used readily on smaller delays.
This trick drastically reduces the cost of learning to adapt to the delay as several delays can therefore be learnt at once.
The experimental evaluation confirms the predictable results, and the policy achieves similar performances on any smaller delay.

We have then proposed an analysis of the constantly delayed problem.
The fact that longer delays imply lower optimal performances has been formally demonstrated, and the proof can provide useful tools for future analysis.
In addition, results on the complexity of the model-based approach have been exposed and show the limitation of this approach.  
In the same line of thought, a counter-example has been presented to demonstrate that model-based policies might yield sub-optimal expected return or average reward in some \glspl{mdp}, compared to the best-delayed policy.

To conclude the chapter, an experimental analysis has been provided to evaluate the abilities of D-TRPO. 
It has been observed that, although A-SAC was the most efficient method overall, D-TRPO is able to adapt to a wide range of scenarios, particularly stochastic \glspl{mdp} but even deterministic ones.
Therefore, the belief representation network is a versatile method and can be used as a drop-in pre-processing for any \gls{rl} algorithm. 
Notably, it allows to control the size of the belief representation and therefore the dimensionality of the input to the \gls{rl} algorithm.

A valuable future direction would be to leverage the knowledge of the model in order to include it inside the \gls{rl} algorithm, for instance, to enhance the critic update in actor-critic methods.
\cleardoublepage
\chapter{Imitation of Undelayed Policies}
\label{chap:imitation_undelayed}

\section{Introduction}
In this chapter, we initially consider a constant state observation or action execution delay. 
As we have seen in \Cref{sec:related_works}, the literature on constant delays focuses on three main directions: the augmented, the memoryless, and the model-based approaches. 
In the previous chapter, we explored a model-based approach.
In this chapter, we explore a new direction which results from a simple yet--as we will see later--efficient approach.
The idea is to learn a delayed policy by imitating the behaviour of an undelayed policy. 
Throughout this chapter, in accordance with the \emph{imitation learning} literature, we name \emph{learner} the policy that imitates and \emph{expert} the policy to be imitated.
It is clear that the delayed policy--the learner--may not be able to imitate exactly the undelayed one--the expert--
since it only has access to the augmented state, while the undelayed policy has access to the current state. 
However, one's hope is that the imitated policy will be similar enough so that its performance remains close to the expert one.
We propose an algorithm, Delayed Imitation with Dataset Aggregation (DIDA) which builds upon the {imitation learning} algorithm \textsc{DAgger}~\cite{ross2011reduction} (see also \Cref{subsec:dida_algo}).
Because the delay exacerbates the shift in state distribution between the learner and the expert, \textsc{DAgger} is particularly suited as it expresses the imitation loss under the learner's own distribution~\cite{osa2018algorithmic}.
After presenting the approach in more detail, we then demonstrate that DIDA yields great theoretical and empirical results. 
Theoretically, we provide tight performance guarantees for the policy learnt by DIDA w.r.t. the undelayed expert (\Cref{sec:th_analysis_imitation}).
Empirically, we test DIDA in a wide range of tasks, against numerous baselines, and show its superiority in terms of final performance as well as sample efficiency (\Cref{sec:exp_imitation}).

Finally, beyond constant integer delays, we propose three extensions to DIDA.
By making slight changes to the original implementation, we apply DIDA to non-integer delays and extend the theoretical guarantees (\Cref{subsec:dida_non_int}). 
Next, we provide theoretical bounds for the case in which the policy learnt by DIDA is tested in a stochastic delay task, yielding the first bound for anonymous delays in state observation or action execution delay (\Cref{subsec:stoch_delay_dida}).
Later, we detail a simple way in which the policy learnt by DIDA for a given delay can be leveraged to apply to smaller delays (\Cref{subsubsec:multi_delay_once_imitation}).
All these extensions are tested empirically showing promising results.

\section{Imitation Learning of An Undelayed Policy}

The idea, as represented in \Cref{fig:imitation_delay_explanation}, is to imitate the policy that an undelayed expert would apply to the current unobserved state.
Since the current state is unknown, only a belief over it can be computed given the augmented state.
Therefore, the delayed policy learns to replicate the action of the undelayed policy under the belief distribution.
We use \textsc{DAgger}~\cite{ross2011reduction} as the imitation learning algorithm, since it is able to account for the shift in distribution between the expert and the learner policies, as explained in \Cref{subsec:imitation_learning}.
This is an important property as the shift in state distribution is being exacerbated by the delay. 
This shift will be discussed in greater detail in \Cref{subsec:analysis_ismmdp}.

\begin{figure}[t]
    \centering
    \includegraphics[labelimitation=dualitytrajectory]{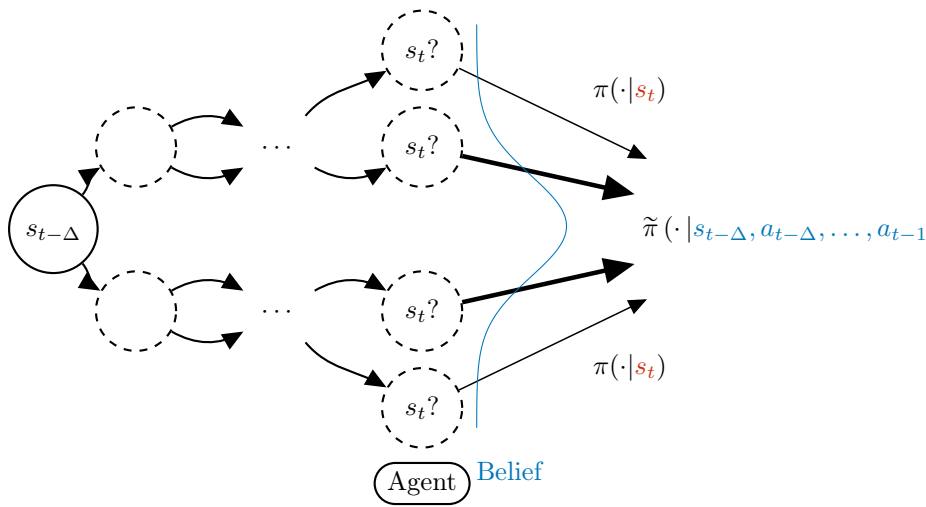}
    \caption{Representation of the imitation of an undelayed policy $\pi$ (expert) by a delayed policy $\delayedpolicy$ (learner).
    The latter tries to replicate the actions of the former seeing only the augmented state and is blind to the current state $s_t$.}
    \label{fig:imitation_delay_explanation}
\end{figure}

\subsection{Duality of Trajectories}
An important prerequisite for the application of \textsc{DAgger} is the ability of both the expert and the learner to sample from the environment.
This is possible for constantly delayed \gls{dmdp} thanks to what we call the \emph{duality of trajectories}.
On the one hand, in an \gls{mdp}, a \emph{synthetic} augmented state can be created by ignoring the most recent state information and providing the agent with a past state and the sequence of action taken since then. 
In this way, it is possible to sample from a delayed policy in an undelayed environment.
On the other hand, an agent in a delayed environment will eventually observe its current state\footnote{This does not generally occur in a POMDP.}. 
Thanks to this property, the probability of any trajectory sampled with a delayed policy can be computed under an undelayed policy.
All things considered, once sampled, a trajectory can be studied both from the point of view of a delayed or an undelayed policy, regardless of how this trajectory was sampled.
Therefore, the constantly delayed \gls{dmdp} satisfies the prerequisite to apply \textsc{DAgger} to our imitation problem.

\subsection{Delayed Imitation with
Dataset Aggregation (DIDA)}
\label{subsec:dida_algo}

\subsubsection{General Algorithm}
Following the framework of \textsc{DAgger} (see~\Cref{subsec:imitation_learning}), we propose Delayed Imitation with Dataset Aggregation (DIDA). 
As \textsc{DAgger}, DIDA samples from the environment by following either the expert or learner policy.
If the expert is chosen, then our undelayed policy $\pi$ is queried on the current state of the environment. Instead, if the learner policy is chosen, an augmented state is synthetically built from the history of the trajectory and fed to the delayed policy $\delayedpolicy$. 
In practice, this means that a buffer of the recent history must be maintained. 
For the imitation step, DIDA builds a dataset $\mathcal{D}=(x^{(i)},a_E^{(i)})_{0\le i\le N}$ of tuples of augmented states $x^{(i)}$ and actions $a_E^{(i)}$ selected by the undelayed policy at the current unobserved state. 
The delayed policy is then trained on $\mathcal{D}$ to replicate the expert's actions given an augmented state.
A practical remark is that the storage of the augmented states can be made efficiently. 
In fact, most actions in two consecutive augmented states are the same. 
It, therefore, suffices to store state-action histories and build the augmented states only at training time. 
We provide the algorithm for DIDA in \Cref{algo:dida}.

\subsubsection{DIDA Without Undelayed Environment}
\label{subsubsec:dida_no_undelayed_env}
Should one not have access to an undelayed environment where DIDA can be applied straightforwardly, a little modification to the algorithm allows it to be used anyway. 
When the undelayed policy should have been queried on the unobserved current state, it is instead possible to query the undelayed policy on the last observed state, that is, in a memoryless fashion (see~\Cref{subsec:memoryless}). 
This is even more relevant when the $\beta$-routine of \textsc{DAgger} satisfies $\beta_1=1, \beta_{i\ge 2}=0$ (see~\Cref{subsec:dida_algo}). 
That is, only the first iteration is made with the undelayed policy (when the delayed policy has not been trained yet), and successive ones query only the delayed policy.
Two modifications should be made to \Cref{algo:dida}.
First, in Line 5, we substitute $a_E\sim\pi(\cdot\vert s_{1})$ for $a_E\sim\pi(\cdot\vert s_{\delay+1})$.
Second, the population of the dataset $\mathcal{D}$ should be changed. 
Indeed, we are not interested in the actions selected in a memoryless fashion by the expert. 
Instead, the actions of the expert stored in $\mathcal{D}$ should correspond to the action selected by the expert in the unobserved current state. 
This implies that, in practice, observing an augmented state $x$, the agent has to wait for $\delay$ steps to observe the current unobserved.
Only then can the memoryless expert provide the action $a_E$ to be added together with $x$ to the dataset $\mathcal{D}$. 
Therefore, line 10 of \Cref{algo:dida} must be delayed accordingly.
We refer to this variant of DIDA as memoryless-DIDA (M-DIDA).

\subsubsection{Policy Learnt by DIDA}
In stochastic \glspl{mdp}, the support of the belief for the unobserved current state may expand over many states. DIDA will learn to replicate the expert's actions under this distribution. 
Formally, DIDA's output policy under perfect imitation is as follows: 
\begin{align}
    \delayedpolicy(a\vert x) = \int_{\statespace} \augmentedbelief(s\vert x) \pi(a\vert s)\de s.\label{eq:belief_pol}
\end{align}
Notably, this policy can be seen as a model-based one and  \Cref{pp:counter_example_belief_based} applies to it. Therefore there exists \glspl{mdp} where DIDA is sub-optimal compared to the best-undelayed policy. We will see in the theoretical analysis in \Cref{sec:th_analysis_imitation} that this policy is however efficient in smooth \glspl{mdp}.

The policy in \Cref{eq:belief_pol} is obtained under perfect imitation, in practice however, depending on the loss function, this policy might be different. We give two examples below.

\noindent\textbf{Mean squared error loss.}\indent For $x$ sampled from $\mathcal{D}$, if the imitation step of DIDA satisfies,
\begin{align*}
    \argmin_{\vectorialform{\theta}\in\Theta} \int_{\statespace}\int_{\actionspace} (a-\delayedpolicy_{\vectorialform{\theta}}(x))^2 \pi(a\vert s) \augmentedbelief(s\vert x)\de a\de s,
\end{align*}
then, the policy it will output is,
\begin{align*}
    \delayedpolicy_{\vectorialform{\theta}^\star}(x) = \expectedvalue_{s\sim \augmentedbelief(\cdot\vert x)}\left[ \expectedvalue_{a \sim\pi(\cdot \vert s)}[a]\right].
\end{align*}
That is, the policy returns the mean value of the expert policy over the belief. 

\noindent\textbf{Kullback-Leibler loss.}\indent For $x$ sampled from $\mathcal{D}$, the policy learnt by DIDA will satisfy:
\begin{align}
    &\argmin_{\vectorialform{\theta}\in\Theta} \int_{\statespace} \kullbackleiblerdivergence(\pi(\cdot \vert s) \Vert \delayedpolicy_{\vectorialform{\theta}}(\cdot \vert x)) \augmentedbelief(s\vert x)\de s\nonumber
    \\
    &= \argmin_{\vectorialform{\theta}\in\Theta} - \int_{\statespace} \int_{\actionspace} \pi(a \vert s) \log \delayedpolicy_{\vectorialform{\theta}}(a \vert x)) \augmentedbelief(s\vert x)\de a \de s\label{eq:indpt_theta}
    \\
    &= \argmin_{\vectorialform{\theta}\in\Theta}  - \int_{\actionspace}\int_{\statespace}  \pi(a \vert s) \log \delayedpolicy_{\vectorialform{\theta}}(a \vert x)) \augmentedbelief(s\vert x)\de s \de a\label{eq:fubini}
    \\
    &= \argmin_{\vectorialform{\theta}\in\Theta}  \int_{\actionspace} \int_{\statespace}  \pi(a \vert s) \augmentedbelief(s\vert x)\log\left(\augmentedbelief(s\vert x) \pi(a \vert x))\right)\de s\de a\nonumber
    \\
    &\qquad- \int_{\actionspace}\int_{\statespace}  \pi(a \vert s) \log \delayedpolicy_{\vectorialform{\theta}}(a \vert x)) \augmentedbelief(s\vert x)\de s\de a\label{eq:add_pas_depend_theta}
    \\
    &= \argmin_{\vectorialform{\theta}\in\Theta}  D_{KL}\left(\left.\int_{\statespace}\pi(\cdot \vert s)b(s\vert x)\de s \right\Vert \delayedpolicy_{\vectorialform{\theta}}(\cdot \vert x)\right) ,\nonumber
\end{align}
where \Cref{eq:indpt_theta} holds by expanding the Kullback-Leibler distance and noticing that one term does not depend on $\vectorialform{\theta}$; \Cref{eq:fubini} holds by Fubini's theorem since the functions inside the integral are always negative; \Cref{eq:add_pas_depend_theta} is obtained by adding a term which does not depend on $\vectorialform{\theta}$.

\begin{algorithm}[t] 
\caption{Delayed Imitation with \textsc{DAgger} (DIDA)}\label{algo:dida}
\textbf{Inputs}
(un)delayed \gls{mdp} $\markovdecisionprocess$, undelayed expert $\pi$, $\beta$-routine, number of steps $N$, empty dataset $\mathcal{D}$.\\
\textbf{Outputs}: delayed policy $\delayedpolicy$
\begin{algorithmic}[1]
    \FOR{$\beta_i$ in $\beta$-routine}
        \FOR{$j$ in $\{1,\dots,N\}$}
            \IF{New episode}
                \STATE Initialize state buffer $( s_{1},s_{2}, \dots, s_{\delay+1})$ and action buffer $( a_{1},a_{2} \dots, a_{\delay})$
            \ENDIF
            \STATE Sample $a_E\sim\pi(\cdot\vert {{s}}_{\delay+1})$, set $a=a_E$ \label{ope:undelayed_sample}
            \IF{Random $u \sim U([0,1])\geq \beta_i$}
                \STATE Overwrite ${ a \sim\pi_I(\cdot\vert [{s}_{1},a_{1}, \dots, a_{\delay}])}$ 
            \ENDIF
            \STATE Aggregate dataset:
            $$\qquad \mathcal{D}\gets \mathcal{D} \cup ([{s}_{1}, a_{1}, \dots, a_{\delay}],a_E)$$
            \STATE Apply $a$ in $\markovdecisionprocess$ and get new state $s$ 
            \STATE Update buffers:
            $$\qquad ({s}_1,\dots,{s}_{\delay+1}) \gets ({s}_2,\dots,{s}_{\delay+1},s)$$
            $$\qquad  ({a}_1,\dots,{a}_{\delay}) \gets ({a}_2,\dots,{a}_{\delay},a)$$
            \ENDFOR
        \STATE Train $\delayedpolicy$ on $\mathcal{D}$ 
    \ENDFOR
\end{algorithmic}
\end{algorithm}

\subsection{Non-integer Delays}
\label{subsec:dida_non_int}
We will now extend the previous algorithm to the non-integer delay case. 
However, we first need to derive some theoretical results that allow us to extend the framework.

\subsubsection{Theory of Non-integer Delays}
Observe that, even for a delay smaller than one step $\delay\in (0,1)$, an augmented state $x_t = (s_t,a_{t-1})\in \statespace \times \actionspace \eqqcolon \augmentedstatespace$ is necessary.
This follows from the observation that the action applied at some time $t+\epsilon$, where $\epsilon\le\delay$ is still the action $a_{t-1}$ selected at the previous step. 
In the remainder of this thesis, we use the notation "~$\integerpart{\ }$~" for the integer part of a real number, "~$\fractionalpart{\ }$~"  for its fractional part and "~$\lceil\ \rceil$~" for the smallest integer greater than it. 
For the construction of the non-integer delayed environment from two interleaved \glspl{mdp}, refer to \Cref{subsec:non_int_delays}. 
When $\delay\in (0,1)$, those interleaved \glspl{mdp} are $\markovdecisionprocess$ and $\markovdecisionprocess_\delay$. 
Instead, when $\delay>1$, we only need to shift the second \gls{mdp} of the fractional part of the delay. The two \glspl{mdp} therefore are $\markovdecisionprocess$ and $\markovdecisionprocess_{\fractionalpart\delay}$. 
Our first result is that the problem of non-integer delay can be cast back to an \gls{mdp}. 

\begin{prop}
\label{pp:equivalent_mdp_non_int}
    Let $\delay\in\realnumbers[\ge0]$ be a (non-)integer delay, and consider a \gls{dmdp} $\delayedmarkovdecisionprocess$, constantly delayed by $\delay$ in its action execution or state observation.  
    The problem can then be cast back into an \gls{mdp} by augmenting the state with the last $\lceil\delay\rceil$ actions.
\end{prop}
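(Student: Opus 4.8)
The plan is to lift the augmented-state reduction of \cite{bertsekas1987dynamic,altman1992closed} recalled in \Cref{subsubsec:theory_augmented_space} to the non-integer regime, using the interleaved-\gls{mdp} description of \Cref{subsec:non_int_delays}. Write $\delay=\integerpart{\delay}+\fractionalpart{\delay}$ and set $n\coloneq\lceil\delay\rceil$. I would dispose of the case $\fractionalpart{\delay}=0$ first --- there $\delay$ is an integer and the statement is exactly the classical reduction with $n=\integerpart{\delay}$ buffered actions --- and then assume $\fractionalpart{\delay}\in(0,1)$, so $n=\integerpart{\delay}+1$. The argument is done for an action-execution delay; the state-observation case is analogous and, by \Cref{subsubsec:equiv_action_state_delays} and the remark in \Cref{subsec:non_int_delays}, equivalent. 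The candidate augmented state at time $t$ is $x_t=(s_t,a_{t-n},\dots,a_{t-1})\in\statespace\times\actionspace^{n}\eqqcolon\augmentedstatespace$ --- the last observed state together with the $n$ most recently selected actions whose effect has not yet fully unrolled --- and the goal is to show that, driven by the decisions $a_t$, the process $(x_t)_t$ obeys \gls{mdp} dynamics.

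The key preliminary is to identify which actions drive the underlying continuous-time process over a discrete step $[t,t+1)$. Since the decision $a_\tau$ is physically applied on the continuous interval $[\tau+\delay,\tau+1+\delay)$, exactly two decisions overlap $[t,t+1)$: $a_{t-n}$ is active on $[t,\,t+\fractionalpart{\delay})$ and $a_{t-n+1}$ on $[t+\fractionalpart{\delay},\,t+1)$; afterwards $a_{t-n}$ has completed while $a_{t-n+1},\dots,a_{t-1}$ --- together with the fresh decision $a_t$ --- remain pending. This is exactly why the buffer must carry $n=\lceil\delay\rceil$ actions. Using the kernels $\augmentedbelief[\epsilon](\cdot\vert s,a)$ of \Cref{subsec:non_int_delays} (``apply $a$ for $\epsilon$ continuous units of time from $s$'') I would then write the transition on $\augmentedstatespace$ in the style of \Cref{subsubsec:theory_augmented_space}: for $x=(s,a_1,\dots,a_n)$, $x'=(s',a_1',\dots,a_n')$ and chosen action $a$,
\[
\augmentedtransitionfunction(x'\vert x,a)=\Big(\int_{\statespace}\augmentedbelief[1-\fractionalpart{\delay}](s'\vert z,a_2)\,\augmentedbelief[\fractionalpart{\delay}](z\vert s,a_1)\,\de z\Big)\,\delta_a(a_n')\prod_{i=1}^{n-1}\delta_{a_{i+1}}(a_i') ,
\]
where $a_2$ is to be read as the chosen action $a$ when $n=1$ (i.e.\ $\integerpart{\delay}=0$), the bracket applies $a_1$ for $\fractionalpart{\delay}$ continuous units and then $a_2$ for the remaining $1-\fractionalpart{\delay}$ in the manner of \Cref{eq:def_p_non_int}, and the Dirac factors shift the action buffer and append $a$. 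For the reward I would take the expected reward accumulated by the continuous-time process over the step under the active actions started from $s$ --- for instance $\fractionalpart{\delay}\,r(s,a_1)+(1-\fractionalpart{\delay})\,\expectedvalue_{z\sim\augmentedbelief[\fractionalpart{\delay}](\cdot\vert s,a_1)}[r(z,a_2)]$ for a piecewise-constant continuous-time reward (the per-step reward $r(s,a_1)$ of \Cref{eq:delayed_reward_deterministic} for integer $\delay$) --- which is a deterministic function of $(x,a)$ and hence an admissible \gls{mdp} reward. Combined with the initial augmented-state distribution fixed by the initialisation convention of \Cref{subsec:delay_init}, this exhibits an \gls{mdp} on $\augmentedstatespace=\statespace\times\actionspace^{\lceil\delay\rceil}$ whose dynamics reproduce those of $\delayedmarkovdecisionprocess$, so policies and returns transfer between the two; this proves the claim.

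I expect the main obstacle to be combinatorial bookkeeping rather than substance: pinning down the index shifts so that the buffer carries exactly $\lceil\delay\rceil$ actions and that the boundary case $\fractionalpart{\delay}=0$ degrades cleanly to \cite{bertsekas1987dynamic,altman1992closed}, and checking that the reward depends only on $(x,a)$ --- in particular that it sees the unobserved intermediate state $s_{t+\fractionalpart{\delay}}$ only through an expectation. Should one prefer not to treat general $\delay$ in one go, the statement can be assembled compositionally: first apply the $\fractionalpart{\delay}\in(0,1)$ construction above to obtain an \gls{mdp}, then apply the standard integer-delay augmentation with delay $\integerpart{\delay}$ on top, the two steps composing to a buffer of $\lceil\delay\rceil$ actions.
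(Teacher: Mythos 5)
Your construction is correct and follows essentially the same route as the paper: build the equivalent \gls{mdp} explicitly on $\augmentedstatespace=\statespace\times\actionspace^{\lceil\delay\rceil}$ (transition kernel, reward, initialisation) and then match the induced observed-state/action histories so that every delayed policy obtains the same return in both processes. Two detail-level differences deserve comment. First, your transition kernel makes the mid-step handover explicit: on $[t,t+1)$ the oldest buffered action acts for $\fractionalpart{\delay}$ units and the next one (the freshly chosen action when $\lceil\delay\rceil=1$) for the remaining $1-\fractionalpart{\delay}$, i.e. $\int_\statespace \augmentedbelief[1-\fractionalpart{\delay}](s'\vert z,a_2)\,\augmentedbelief[\fractionalpart{\delay}](z\vert s,a_1)\,\de z$; the paper's proof writes this more tersely as $p(s'\vert s,a_1)$ through \Cref{eq:def_p_non_int}, but the two-action decomposition you use is exactly the one appearing in the paper's own non-integer computation inside \Cref{lem:perf_diff_lem}, so your bookkeeping is, if anything, the more careful statement of the same kernel. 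Second, the reward: the \gls{dmdp} of \Cref{subsec:non_int_delays} pays $r(s_{t+\delay},a_t)$ at each decision, so the equivalent \gls{mdp} should take $\augmentedexpectedrewardfunction(x,a)=\expectedvalue_{z\sim\augmentedbelief(\cdot\vert x)}[r(z,a)]$ (the non-integer analogue of \Cref{eq:delayed_reward_expected}), which is what the paper does; your within-step accumulated reward $\fractionalpart{\delay}\,r(s,a_1)+(1-\fractionalpart{\delay})\,\expectedvalue_{z\sim\augmentedbelief[\fractionalpart{\delay}](\cdot\vert s,a_1)}[r(z,a_2)]$ is an admissible \gls{mdp} reward but Markovises a differently rewarded process, so to prove the statement as the paper defines the delayed problem you should simply replace it with the belief-averaged delayed reward --- a one-line change that leaves the rest of your argument untouched. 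Finally, the closing claim that the dynamics reproduce and hence returns transfer is precisely the induction over histories that the paper spells out (histories matching at $t$ imply they match at $t+1$, plus identical initialisation); writing those two lines would make the sketch complete.
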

\begin{proof}
    We prove this result for the action execution delay but the proof for state observation follows parallel considerations. 
    We construct an \gls{mdp} $\markovdecisionprocess$ using the augmented state space $\augmentedstatespace=\statespace\times\actionspace^{\lceil\delay\rceil}$ as its state space such that, for any delayed policy, its return in $\markovdecisionprocess$ is equal to that in $\delayedmarkovdecisionprocess$.
    For augmented states $x=(s,a_1,\dots,a_{\lceil\delay\rceil})$ and $x'=(s',a_1',\dots,a_{\lceil\delay\rceil}')$ in $\augmentedstatespace$, the new transition distribution reads
    \begin{align*}
        \augmentedtransitionfunction(x'\vert x, a)\coloneqq 
        p(s'\vert s,a_1)\delta_a(a_{\lceil\delay\rceil}')\prod_{i=1}^{\lceil\delay\rceil-1}\delta_{a_{i+1}}(a_{i}').
    \end{align*}
    Note that, in the case $\delay\notin \naturalnumbers$, $p$ is defined as in \Cref{eq:def_p_non_int}:
    \begin{align}
        p(s'\vert s,a)=\int_{\statespace}   \augmentedbelief[1-\fractionalpart{\delay}](s'\vert z,a)\augmentedbelief[\fractionalpart{\delay}](z|s,a)\de z.
    \end{align}
    Now, for $x=(s,a_1,\dots,a_{\lceil\delay\rceil})$, one defines the expected reward as,
    \begin{align*}
        \augmentedexpectedrewardfunction(x,a) = \expectedvalue_{z\sim \augmentedbelief(z\vert x)}[\expectedrewardfunction(z,a)].
    \end{align*}
    To conclude the proof, let $\delayedpolicy$ be any history-dependent policy for $\delayedmarkovdecisionprocess$. Now, assume that the histories of observed state and action $(s_0,a_0,\dots,s_{t},a_t,\dots,a_{t+\lceil\delay\rceil-1})\in\statespace^t\times\actionspace^{t+\lceil\delay\rceil}$\footnote{The history on $\augmentedstatespace^t\times\actionspace^t$ clearly defines a history on $\statespace^t\times\actionspace^{t+\lceil\delay\rceil}$.} is the same in $\delayedmarkovdecisionprocess$ and $\markovdecisionprocess$ at time $t$. 
    Any agent whose policy is based on the history then selects the next action $a_{t+\lceil\delay\rceil}$ with the same probabilities in $\markovdecisionprocess$ and $\delayedmarkovdecisionprocess$.
    In $\delayedmarkovdecisionprocess$, given its current observed state $s_t$ and the action $a_{t}$ whose effect he has not yet seen, the agent observes a new state $s_{t+1}$ with probability $p(s_{t+1}\vert s_t,a_{t})$. 
    In $\markovdecisionprocess$, the state $s_{t+1}$ contained in the new augmented state is also sampled from $p(s_{t+1}\vert s_t,a_{t})$ by design. 
    Therefore, the histories also match at $t+1$. 
    By recurrence, this holds for any $t'>t$.
    It suffices to initialise the processes in the same manner to have the same state distribution over $\statespace$.
    Having the same histories, the reward collected in 
    $\delayedmarkovdecisionprocess$ and $\markovdecisionprocess$ are also equal by design.
    This means that any delayed policy achieves the same return in both processes.
\end{proof}

Note that to prove the previous proposition for state observation delays, the two \glspl{mdp} $\markovdecisionprocess$ and $\markovdecisionprocess_{\fractionalpart\delay}$ are switched.
In fact, consider $\delay\in(0,1)$; for an action execution delay, the agent sees the current state $s_t$ but its action $a_t$ is executed at $t+\delay$, in state $s_{t+\delay}$. 
In the case of state observation delay, if the agent's current unobserved state was also $s_t$, then the last observed state would be $s_{t-\delay}$, which does not belong to $\markovdecisionprocess_{\delay}$ if $\textstyle\delay\neq \frac{1}{2}$.
Instead, placing the current unobserved state at $s_{t+\delay}$, then the delayed state always belongs to $\markovdecisionprocess$.
To summarize, the difference between the two processes is when the agent selects an action. 
For action execution delays, the agent selects an action while its current state is in $\markovdecisionprocess$ whereas, for state observation delays, the agent selects an action while its current state is in $\markovdecisionprocess_{\delay}$
This shift in control steps can be visually understood in \Cref{fig:non_int_delay_def}.

We now provide a result which shows that, as in the integer case, the action execution and state observation delays are equivalent in the non-integer one. 
\begin{prop}
    Let $\delay\in\realnumbers[\ge0]$ be a (non-)integer delay.
    Let $\markovdecisionprocess$ and $\markovdecisionprocess_{\fractionalpart{\delay}}$ be two interleaved \glspl{mdp} over which we define two constantly-delayed \glspl{dmdp}: $\delayedmarkovdecisionprocess$ is $\delay$-delayed in the action execution and $\delayedmarkovdecisionprocess'$ is $\delay$-delayed in the state observation\footnote{Fixing first $\markovdecisionprocess$ and $\markovdecisionprocess_{\fractionalpart{\delay}}$ ensures that the observed state and the state on which the action is executed correspond in the two \glspl{dmdp}. 
    Otherwise, one could build them such that the action-delayed agent observes $s_t$ and acts on $s_{t+\delay}$ while the state-delayed agent observes $s_{t-\delay}$ and acts on $s_{t}$. There would be a mismatch in this case.}. 
    Then, $\delayedmarkovdecisionprocess$ and $\delayedmarkovdecisionprocess'$ are equivalent.
\end{prop}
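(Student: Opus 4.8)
The plan is to run exactly the argument used for integer delays: cast both \glspl{dmdp} to \glspl{mdp} over a common augmented state space and check that the two reductions produce the \emph{same} \gls{mdp}, so that the identity map on policies is a return-preserving correspondence. Concretely, applying \Cref{pp:equivalent_mdp_non_int} to the delay $\delay$, the action-execution \gls{dmdp} $\delayedmarkovdecisionprocess$ reduces to an \gls{mdp} $\overline{\markovdecisionprocess}$ on $\augmentedstatespace=\statespace\times\actionspace^{\lceil\delay\rceil}$ with the transition and expected-reward functions written in that proof, and $\delayedmarkovdecisionprocess'$ reduces to an \gls{mdp} $\overline{\markovdecisionprocess}'$ on the same $\augmentedstatespace$. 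It then suffices to establish $\overline{\markovdecisionprocess}=\overline{\markovdecisionprocess}'$ and to transfer this equality back.

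First I would match the transition kernels. In both reductions the augmented transition has the Dirac-shift form $\augmentedtransitionfunction(x'\vert x,a)=p(s'\vert s,a_1)\,\delta_a(a_{\lceil\delay\rceil}')\prod_{i=1}^{\lceil\delay\rceil-1}\delta_{a_{i+1}}(a_{i}')$, so everything reduces to checking that the one-unit head kernel $p$ is the same function in the two cases. By hypothesis both \glspl{dmdp} are built over the \emph{same} ordered pair of interleaved \glspl{mdp} $\markovdecisionprocess,\markovdecisionprocess_{\fractionalpart{\delay}}$ --- this is precisely what fixing the pair first (the footnote in the statement) guarantees. Since the underlying continuous-time \gls{mdp} is stationary, $\markovdecisionprocess$ and $\markovdecisionprocess_{\fractionalpart{\delay}}$ share a single one-unit transition law and a single family of belief maps $\augmentedbelief[\tau]$; in particular $p(s'\vert s,a)=\int_{\statespace}\augmentedbelief[1-\fractionalpart{\delay}](s'\vert z,a)\augmentedbelief[\fractionalpart{\delay}](z\vert s,a)\de z$ as in \Cref{eq:def_p_non_int}, the same in both reductions. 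Hence $\augmentedtransitionfunction=\augmentedtransitionfunction'$.

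Next I would match the reward functions. In $\delayedmarkovdecisionprocess$, choosing $a$ at augmented state $x$ results, $\delay$ time units later, in $a$ being executed on a state distributed as the belief $\augmentedbelief(\cdot\vert x)$, so the reward attributed to $(x,a)$ is $\expectedvalue_{z\sim\augmentedbelief(\cdot\vert x)}[\expectedrewardfunction(z,a)]$ --- the quantity used in \Cref{pp:equivalent_mdp_non_int}. In $\delayedmarkovdecisionprocess'$, choosing $a$ at $x$ means applying $a$ to the current unobserved state, whose law given $x$ is again $\augmentedbelief(\cdot\vert x)$ (the belief depends only on $x$ and on the shared underlying dynamics, not on whether the delay is labelled as observation or execution), so the expected reward is once more $\expectedvalue_{z\sim\augmentedbelief(\cdot\vert x)}[\expectedrewardfunction(z,a)]$. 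Therefore $\augmentedexpectedrewardfunction=\augmentedexpectedrewardfunction'$, and combined with the previous step, $\overline{\markovdecisionprocess}=\overline{\markovdecisionprocess}'$.

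Finally I would transfer the conclusion: by \Cref{pp:equivalent_mdp_non_int}, every history-dependent policy achieves in $\delayedmarkovdecisionprocess$ the same return as the corresponding policy in $\overline{\markovdecisionprocess}$, and likewise for $\delayedmarkovdecisionprocess'$ and $\overline{\markovdecisionprocess}'=\overline{\markovdecisionprocess}$; chaining the two identifications gives a return-preserving bijection between the policies of $\delayedmarkovdecisionprocess$ and those of $\delayedmarkovdecisionprocess'$. In particular their optimal returns coincide and optimal policies transfer, i.e., the two \glspl{dmdp} are equivalent. The step I expect to be the main obstacle is the time-index bookkeeping behind the second and third paragraphs: one must check carefully that, once $(\markovdecisionprocess,\markovdecisionprocess_{\fractionalpart{\delay}})$ is fixed, the ``control-step shift'' distinguishing action execution from state observation is a pure relabeling of the agent's present clock and does not alter the joint law of (observed state, action buffer, executed-on state), so that the belief $\augmentedbelief(\cdot\vert x)$ --- and hence both $\augmentedtransitionfunction$ and $\augmentedexpectedrewardfunction$ --- is genuinely the same function of the augmented state $x$ in both settings. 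Everything else (the Dirac-shift algebra and the semigroup identity \Cref{eq:def_p_non_int}) is routine.
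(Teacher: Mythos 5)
Your proposal is correct and follows essentially the same route as the paper: the paper's own proof is precisely the observation that the equivalent \glspl{mdp} constructed in \Cref{pp:equivalent_mdp_non_int} for the two \glspl{dmdp} coincide, which you simply spell out by matching the augmented transition kernels (via \Cref{eq:def_p_non_int}) and the belief-averaged rewards over the common pair $(\markovdecisionprocess,\markovdecisionprocess_{\fractionalpart{\delay}})$. The bookkeeping point you flag is exactly what the footnote and the paper's discussion of switching the interleaved \glspl{mdp} for state observation delays are meant to settle, so no gap remains.
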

\begin{proof}
    The proof follows easily by observing that the equivalent \glspl{mdp} constructed in \Cref{pp:equivalent_mdp_non_int} are the same.
\end{proof}

\subsubsection{DIDA for Non-integer Delays}
\label{subsubsec:dida_non_int}

We are now ready to extend DIDA to the non-integer case. We will use the notations of \Cref{subsec:non_int_delays} for the two interleaved \glspl{mdp} $\markovdecisionprocess$ and $\markovdecisionprocess_\delay$ defining the non-integer delay. 
As in DIDA, the first step is to learn an undelayed policy. Here, the undelayed policy is learnt in $\markovdecisionprocess_\delay$.
The state observed by the delayed agent will instead be that of $\markovdecisionprocess$.
We provide the modified algorithm in \Cref{algo:dida_non_int}.
The main difference from \Cref{algo:dida} is that DIDA must keep in memory the $\lceil\delay\rceil$ last actions. 
Moreover, DIDA must also keep a buffer of the states both from  $\markovdecisionprocess$ and $\markovdecisionprocess_\delay$ as the former will be used for the augmented state and the latter for computing the undelayed expert's actions.
Note that the policy learnt by DIDA for non-integer delays still satisfies \Cref{eq:belief_pol}.

\begin{remark}
    A similar idea could be used for the algorithm presented in the previous section.
    D-TRPO could be adapted to learn a representation of the belief in $\markovdecisionprocess_\delay$ while observing a state in $\markovdecisionprocess$.
\end{remark}

\subsubsection{Time-Lipschitzness for Non-integer Delays}
Finally, in order to expand the theory of the next section, we include non-integer delays in the definition of the $L_T$-TLC~(\Cref{def:time_lip}). Given $\delay\in(0,1)$, a \gls{dmdp} is $L_T$-TLC if $\forall s,a\in \statespace\times\actionspace$,
\begin{align}
\label{eq:time_lip_non_int}
    \wassersteindistance(b_\delay(\cdot\vert s,a), \delta_s)\le \delay L_T.
\end{align}

\begin{algorithm}[t] 
\caption{DIDA for non-integer delays ($\delay\in\realnumbers[\ge0]$)}\label{algo:dida_non_int}
\textbf{Inputs}:
$\{\delay\}\coloneqq\delay-\lfloor\delay\rfloor$, \glspl{mdp} $\markovdecisionprocess$ and $\markovdecisionprocess_{\{\delay\}}$ obtained from continuous-time \gls{mdp}, undelayed expert $\pi$ trained on $\markovdecisionprocess_{\delay}$, $\beta$-routine, number of steps $N$, empty dataset $\mathcal{D}$.\\
\textbf{Outputs}: delayed policy $\pi_I$
\begin{algorithmic}[1]
    \FOR{$\beta_i$ in $\beta$-routine}
        \FOR{$j$ in $\{1,\dots,N\}$}
            \IF{New episode}
                \STATE Initialize state buffer $( s_{1},s_{1+\fractionalpart{\delay}}, \dots, s_{\lceil\delay\rceil},s_{\delay+1})$ and action buffer $( a_{0},a_{1} \dots, a_{\lfloor\delay\rfloor})$
            \ENDIF
            \STATE Sample $a_E\sim\pi(\cdot\vert {{s}}_{\delay+1})$, set $a=a_E$
            \IF{Random $u \sim U([0,1])\geq \beta_i$}
                \STATE Overwrite ${ a \sim\delayedpolicy(\cdot\vert [{s}_{1},a_{0}, \dots, a_{\lfloor\delay\rfloor}])}$ 
            \ENDIF
            \STATE Aggregate dataset:
            $$\mathcal{D}\leftarrow \mathcal{D} \cup ([{s}_{1}, a_{0},a_2, \dots, a_{\lfloor\delay\rfloor}],a_E)$$
            \STATE Apply $a$ at $s_{\delay+1}$ and get new states $(s_{\lceil\delay\rceil+1},s_{\delay+2})$
            \STATE Update buffers:
            $$(s_{1}, s_{1+\fractionalpart{\delay}}, \dots, s_{\lceil\delay\rceil},s_{\delay+1}) \leftarrow ( s_{2},s_{2+\fractionalpart{\delay}}, \dots, s_{\lceil\delay\rceil+1},s_{\delay+2})$$
            $$({a}_0,\dots, a_{\lfloor\delay\rfloor}) \leftarrow ({a}_1,\dots,a_{\lfloor\delay\rfloor},a)$$
            \ENDFOR
        \STATE Train $\delayedpolicy$ on $\mathcal{D}$ 
    \ENDFOR
\end{algorithmic}
\end{algorithm}

\subsection{DIDA for Learning Multiple Delays at Once}
\label{subsubsec:multi_delay_once_imitation}

The same idea as in \Cref{subsec:multi_delay_once_belief} can be applied to DIDA.
A policy learnt for a delay $\delay>0$ can be leveraged to act in an environment with smaller delays $0<\delay'<\delay$ by simulating a delay of $\delay$.
We provide experiments in \Cref{sec:exp_imitation} to illustrate the effectiveness of the idea. 
\section{Theoretical Analysis of DIDA}
\label{sec:th_analysis_imitation}

The aim of this theoretical analysis is twofold. 
First, it provides bounds on the performance difference between that of the delayed policy and that of the undelayed policy. 
These bounds will depend on the assumption of the smoothness of the environment (see~\cref{{subsec:smooth_mdp}}). 
Second, studying these bounds will give us some insight on which undelayed expert might be more suited to be imitated in a delayed environment.

\subsection{On the Comparison of Delayed and Undelayed Performance}
\label{subsec:compare_delayed_undelayed}

Before going straight to the theoretical results, it is useful to clarify one point. 
We would like to compare the value function of delayed and undelayed policies, yet, they do not share the same input space. 
Indeed, a delayed policy $\delayedpolicy$ has input space $\augmentedstatespace=\statespace\times\actionspace^\delay$ while the undelayed policy has input space $\statespace$. 
In the following, we will detail two ways of comparing these numbers that have been proposed in the literature.

\subsubsection{Comparison in Mildly Stochastic Markov Decision Processes}
One possibility to compare a delayed policy and an undelayed policy is proposed by \cite{walsh2009learning}. 
Their idea comes from the observation that, if a \gls{dmdp} derives from a deterministic \gls{mdp}, then they have identical optimal performance (up to the delay initialization shift of \Cref{subsec:delay_init}). 
This is easily understandable as, using the deterministic model of the environment to compute the current state, the agent can select the action that the undelayed policy recommends in this state. Obviously, the difficulty lies in learning the model of the \gls{mdp}.
To extend the comparison, \cite{walsh2009learning} consider what they refer to as a \textit{mildly stochastic} \gls{mdp}, that is, an \gls{mdp} where 
\begin{align*}
    \exists \epsilon \text{ s.t. } \forall (s,a)\in\statespace\times\actionspace, \exists s', p(s'\vert s,a)\geq 1-\epsilon.
\end{align*}
Since these \glspl{mdp} are only mildly stochastic, one could learn the optimal delayed policy for the deterministic approximation of the \gls{mdp} as described above. 
Doing so, and noting $\widehat{V}^{\pi}$ the value function of the undelayed policy in the deterministic approximation of the \gls{mdp}, one gets \cite[Theorem~3]{walsh2009learning}:
\begin{align*}
    \lVert \widehat{V}^{\pi}-{V}^{\pi}\rVert_{\infty}\leq \frac{\gamma\epsilon \maximumreward}{(1-\gamma)^2},
\end{align*}
where ${V}^{\pi}$ is the value function of the same policy $\pi$ but in the undelayed \gls{mdp}.
Note that these two value functions have $\statespace$ as input space. 
Another notable observation is the quadratic dependence on the effective time horizon $\textstyle\frac{1}{1-\gamma}$.
The assumption of mild stochasticity is quite strong and we will provide results for weaker assumptions.

\subsubsection{Comparison Under Belief Distribution}
Another approach that has the advantage of being applicable to any \gls{mdp} is that of \cite{agarwal2021blind}. 
For a given augmented state $x\in\augmentedstatespace$, the authors compare the value function of the delayed policy $V^{\delayedpolicy}(x)$ to the expected value function of an undelayed policy under the belief given $x$, $\textstyle\expectedvalue_{s\sim \augmentedbelief(\cdot\vert x)}[V^{\pi}(s)]$.
The authors propose learning a delayed policy that selects an action that maximises the expected undelayed policy's Q-function under the belief distribution. 
Provided that the action space is of finite cardinal $\cardinal{\actionspace}$, the delayed policy $\delayedpolicy$ has the following guarantee with respect to the optimal policy $\optimalpolicy$ \cite[Theorem~1]{agarwal2021blind}:
\begin{align*}
    V^\delayedpolicy(x) \ge \expectedvalue_{s\sim \augmentedbelief(\cdot\vert x)}[V^{\optimalpolicy}(s)] - \frac{\maximumreward}{(1-\gamma)^2}\left(1-\frac{1}{\cardinal{\actionspace}}\right)
\end{align*}
Note again the quadratic dependence on the effective time horizon.
We will compare the same quantities in our analysis while adding an assumption on the smoothness of the underlying \gls{mdp}.

\subsection{Upper Bound on the Performance Loss  for Constant Delays}
\label{subsec:imitation_upper_bound}
In the remainder of this theoretical section, we assume that the underlying \gls{mdp} is smooth. To be more specific, we will need that the \gls{mdp} is  $(L_P,L_r)$-LC (see \Cref{def:lip_mdp}) and that the expert's undelayed policy is $L_\pi$-LC (see \Cref{def:lip_policy}). 
Furthermore, we will need its Q-function to be $L_Q$-LC\footnote{In some cases, this can be implied by the first two assumptions, see \Cref{th:lip_Q_function}.}.
These smoothness guarantees have the advantage of setting aside pathological examples such as the one of \Cref{pp:counter_example_belief_based}. 
However, the setting remains realistic, as many physical systems are Lipschitz. 

To provide our main result on an upper bound, we first derive an adaptation of the performance difference lemma~\cite[Lemma~6.1]{kakade2002approximately} to the delayed case. 
This result alone may be useful in the delayed literature. 
The result had been independently derived for integer delays in \cite[Equation~32]{agarwal2021blind} but we provide a more general version that applies to delay $\delay\in\realnumbers[\ge0]$.
In the following, we note $\discountedstateoccupancydistribution[x][\delayedpolicy]$ the discounted state occupancy distribution over the augmented state space $\augmentedstatespace$ for some delayed policy $\delayedpolicy$ starting from the augmented state $x$.

\begin{lemma}
[Delayed Performance Difference Lemma]
\label{lem:perf_diff_lem}
    For some $\delay\in\realnumbers[\ge0]$, let $\pi$ be an undelayed policy and $\delayedpolicy$ be a $\delay$-delayed policy on the same underlying \gls{mdp} $\markovdecisionprocess$. 
    Then, $\forall x\in\mathcal{X}$,
    \begin{align*}
        \expectedvalue_{s\sim \augmentedbelief(\cdot\vert x)}[V^{\pi}(s)] - V^{\delayedpolicy}(x) 
        = \frac{1}{1-\gamma}
        \quad\expectedvalue_{x'\sim \delayeddiscountedstateoccupancydistribution[x][\delayedpolicy]} \left[ \expectedvalue_{s\sim \augmentedbelief(\cdot\vert x')}[V^{\pi}(s)] -  \expectedvalue_{\substack{s\sim \augmentedbelief(\cdot\vert x')\\a\sim \delayedpolicy(\cdot\vert x')}}[Q^{\pi}(s,a)] \right].
    \end{align*}
\end{lemma}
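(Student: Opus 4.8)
The plan is to carry out the classical telescoping argument behind the performance difference lemma \cite[Lemma~6.1]{kakade2002approximately}, but inside the augmented \gls{mdp} associated with the delay (\Cref{subsubsec:theory_augmented_space} for integer delays, \Cref{pp:equivalent_mdp_non_int} for non-integer ones), using the function $g(x)\coloneqq\expectedvalue_{s\sim\augmentedbelief(\cdot\vert x)}[V^{\pi}(s)]$ in place of the value function one usually telescopes against. First I would expand $V^{\delayedpolicy}(x)=\expectedvalue\big[\sum_{t\ge0}\gamma^{t}\augmentedexpectedrewardfunction(X_{t},A_{t})\mid X_{0}=x\big]$ along the trajectory $(X_{t},A_{t})_{t\ge0}$ produced by $\delayedpolicy$ in the augmented process, and use the elementary identity $g(x)=\expectedvalue\big[\sum_{t\ge0}\gamma^{t}\big(g(X_{t})-\gamma g(X_{t+1})\big)\mid X_{0}=x\big]$, which holds because $g$ is bounded by $\maximumreward/(1-\gamma)$ (bounded reward assumption) so that the remainder $\gamma^{T+1}g(X_{T+1})$ vanishes as $T\to\infty$ when $\gamma<1$. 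Subtracting the two expressions gives $g(x)-V^{\delayedpolicy}(x)=\sum_{t\ge0}\gamma^{t}\,\expectedvalue\big[g(X_{t})-\gamma g(X_{t+1})-\augmentedexpectedrewardfunction(X_{t},A_{t})\mid X_{0}=x\big]$, and conditioning each summand on $X_{t}$ reduces the whole statement to a one-step identity.

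The one-step identity is: for every augmented state $x'$, $\expectedvalue_{a\sim\delayedpolicy(\cdot\vert x')}\big[\augmentedexpectedrewardfunction(x',a)+\gamma\,\expectedvalue_{x''\sim\augmentedtransitionfunction(\cdot\vert x',a)}[g(x'')]\big]=\expectedvalue_{s\sim\augmentedbelief(\cdot\vert x'),\,a\sim\delayedpolicy(\cdot\vert x')}[Q^{\pi}(s,a)]$. To prove it I would rewrite the delayed reward via $\augmentedexpectedrewardfunction(x',a)=\expectedvalue_{s\sim\augmentedbelief(\cdot\vert x')}[r(s,a)]$ (\Cref{eq:delayed_reward_expected}) and the undelayed $Q$-function via the Bellman equation $Q^{\pi}(s,a)=r(s,a)+\gamma\int_{\statespace}p(\de s''\vert s,a)V^{\pi}(s'')$ (\Cref{eq:bellman_equation_Q}); what then remains is the belief-consistency claim $\expectedvalue_{s\sim\augmentedbelief(\cdot\vert x')}\big[\int_{\statespace}p(\de s''\vert s,a)V^{\pi}(s'')\big]=\expectedvalue_{x''\sim\augmentedtransitionfunction(\cdot\vert x',a)}\big[\expectedvalue_{s\sim\augmentedbelief(\cdot\vert x'')}[V^{\pi}(s)]\big]$. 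I would establish this by substituting the explicit forms of $\augmentedbelief$ (\Cref{eq:belief_def}) and of $\augmentedtransitionfunction$ (the product of $p(\cdot\vert s_{1},a_{1})$ with the Dirac factors that shift the action buffer) and noticing that both sides are the expectation of $V^{\pi}$ under the law of the state obtained by applying, from the last observed state of $x'$, the entire action buffer of $x'$ and then $a$; the inner integrals over intermediate states form the same composition of transition kernels on both sides, so after renaming the dummy integration variables they coincide. For a non-integer delay the argument is unchanged, with $p$ given by \Cref{eq:def_p_non_int} and the interleaved \glspl{mdp} $\markovdecisionprocess,\markovdecisionprocess_{\fractionalpart{\delay}}$, since integrality of $\delay$ is never used.

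To finish, I would plug the one-step identity into each summand, turning it into $\expectedvalue_{X_{t}\mid X_{0}=x}\big[g(X_{t})-\expectedvalue_{s\sim\augmentedbelief(\cdot\vert X_{t}),\,a\sim\delayedpolicy(\cdot\vert X_{t})}[Q^{\pi}(s,a)]\big]$, and then sum using $\sum_{t\ge0}\gamma^{t}\probability(X_{t}=\cdot\mid X_{0}=x)=\frac{1}{1-\gamma}\delayeddiscountedstateoccupancydistribution[x][\delayedpolicy]$, which yields exactly the stated equality. The step I expect to be the main obstacle is the belief-consistency claim: it is intuitively clear — the belief attached to the next augmented state is precisely the pushforward of the current belief through one step of the dynamics — but writing it cleanly demands careful bookkeeping of how the action buffer inside the augmented state is re-indexed after a transition, and even more so in the non-integer case where the first step factors through $\markovdecisionprocess_{\fractionalpart{\delay}}$; a minor additional point is the justification of the limit exchange in the telescoping identity, handled by the bounded-reward assumption together with $\gamma<1$.
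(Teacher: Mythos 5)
Your proposal is correct and follows essentially the same route as the paper's proof: the heart in both cases is the belief-consistency identity $\int_{\statespace}\augmentedbelief(s\vert x')p(s''\vert s,a)\de s=\int_{\augmentedstatespace}\augmentedtransitionfunction(x''\vert x',a)\augmentedbelief(s''\vert x'')\de x''$ combined with the Bellman equation for $Q^{\pi}$ and the delayed-reward form $\augmentedexpectedrewardfunction(x',a)=\expectedvalue_{s\sim\augmentedbelief(\cdot\vert x')}[r(s,a)]$, then an accounting via the discounted occupancy distribution; the paper also singles out exactly this belief-consistency step as the only part needing extra care for non-integer delays. The only difference is presentational: the paper adds and subtracts the one-step Bellman backup to get the recursion $I(x)=A+\gamma\expectedvalue[I(x')]$ and iterates, whereas you telescope explicitly against $g(x)=\expectedvalue_{s\sim\augmentedbelief(\cdot\vert x)}[V^{\pi}(s)]$, which is the same argument in a different order.
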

\begin{proof}
    Let us first demonstrate the result for an integer delay.
    
    \noindent\textbf{Integer delay.}\indent  Consider $\delay\in\naturalnumbers$, $x\in\augmentedstatespace$ and let
    \begin{align*}
        I(x)=\expectedvalue_{s\sim \augmentedbelief(\cdot\vert x)}[V^{\pi}(s)] - V^{\delayedpolicy}(x).
    \end{align*}
    By adding and subtracting the same quantity to $I$, we get:
    \begin{align*}
        I(x)
        &= \underbrace{\expectedvalue_{s\sim \augmentedbelief(\cdot\vert x)}[V^{\pi}(s)] - \expectedvalue_{\substack{s\sim \augmentedbelief(\cdot\vert x)\\a\sim \delayedpolicy(\cdot\vert x)}}\left[r(s,a) + \gamma \expectedvalue_{s'\sim p(\cdot\vert s,a)}[V^{\pi}(s')]\right]}_{\eqcolon A}
        \\
        &\quad + \underbrace{\expectedvalue_{\substack{s\sim \augmentedbelief(\cdot\vert x)\\a\sim \delayedpolicy(\cdot\vert x)}}\left[r(s,a) + \gamma \expectedvalue_{s'\sim p(\cdot\vert s,a)}[V^{\pi}(s')]\right] - V^{\delayedpolicy}(x)}_{\eqcolon B}.
    \end{align*}
    We then analyse each term of the above equation. First,
    \begin{align*}
        A = \expectedvalue_{s\sim \augmentedbelief(\cdot\vert x)}[V^{\pi}(s)] - \expectedvalue_{\substack{s\sim \augmentedbelief(\cdot\vert x)\\a\sim \delayedpolicy(\cdot\vert x)}}\left[Q^{\pi}(s,a)\right].
    \end{align*}
    Second, since $V^{\delayedpolicy}(x) = \expectedvalue_{\substack{s\sim \augmentedbelief(\cdot\vert x)\\a\sim \delayedpolicy(\cdot\vert x)}}\left[r(s,a)\right]+ \gamma \expectedvalue_{\substack{x'\sim \augmentedtransitionfunction(\cdot\vert x,a)\\a\sim \delayedpolicy(\cdot\vert x)}}[V^{\delayedpolicy}(x')]$, then,
    \begin{align*}
        B &= \gamma\expectedvalue_{\substack{s\sim \augmentedbelief(\cdot\vert x)\\a\sim \delayedpolicy(\cdot\vert x)}}\left[ \expectedvalue_{s'\sim p(\cdot\vert s,a)}[V^{\pi}(s')]\right] - \gamma \expectedvalue_{\substack{x'\sim \augmentedtransitionfunction(\cdot\vert x,a)\\a\sim \delayedpolicy(\cdot\vert x)}}[V^{\delayedpolicy}(x')].
    \end{align*}
    Note that,
    \begin{align}
    \label{eq:perf_diff_lem_equiv_belief}
        \int_{\statespace} \augmentedbelief(s\vert x)p(s'\vert s,a)\de s = \int_{\augmentedstatespace}\augmentedtransitionfunction(x'\vert x,a)\augmentedbelief(s'\vert x')\de x'.
    \end{align} 
    This means that, given the current augmented state $x$ and some action $a$, the probability that the next unobserved current state is $s'$ can be obtained in two ways. Either conditioning first on the current unobserved state $s$ or on the next augmented state $x'$.
    This yields,
    \begin{align*}
        B &= \gamma \expectedvalue_{\substack{x'\sim \augmentedtransitionfunction(\cdot\vert x,a)\\a\sim \delayedpolicy(\cdot\vert x)}}\left[\expectedvalue_{s\sim \augmentedbelief(\cdot\vert x')}[V^{\pi}(s')] - V^{\delayedpolicy}(x') \right]
        \\
        &= \gamma \expectedvalue_{\substack{x'\sim \augmentedtransitionfunction(\cdot\vert x,a)\\a\sim \delayedpolicy(\cdot\vert x)}}\left[I(x')\right],
    \end{align*}
    by recognising the quantity $I$. We now proceed by iterating this result as in the original lemma:
    \begin{align}
        I(x) &= \sum_{t=0}^\infty \gamma^t \expectedvalue_{\substack{x_{t+1}\sim \augmentedtransitionfunction(\cdot\vert x_t,a_t)\\a_t\sim \delayedpolicy(\cdot\vert x_t)}}\left[\left. \expectedvalue_{s\sim \augmentedbelief(\cdot\vert x_t)}[V^{\pi}(s)] - \expectedvalue_{\substack{s\sim \augmentedbelief(\cdot\vert x_t)\\a\sim \delayedpolicy(\cdot\vert x_t)}}[Q^{\pi}(s,a)] \right\vert x_0=x \right]\nonumber
        \\
        &= \frac{1}{1-\gamma}\expectedvalue_{x'\sim \delayeddiscountedstateoccupancydistribution[x][\delayedpolicy]}\left[ \expectedvalue_{s\sim \augmentedbelief(\cdot\vert x')}[V^{\pi}(s)] - \expectedvalue_{\substack{s\sim \augmentedbelief(\cdot\vert x')\\a\sim \delayedpolicy(\cdot\vert x')}}[Q^{\pi}(s,a)] \right],\label{eq:state_distrib_reco}
    \end{align}
    where \Cref{eq:state_distrib_reco} follows by the definition of the discounted augmented state occupancy distribution for policy $\delayedpolicy$ (see \Cref{def:discounted_state_distrib}). 
    Hence, the result for $\delay\in\naturalnumbers$.
    
    \noindent\textbf{Non-integer delay.}\indent For the sake of clarity of the proof, we assume $\delay\in(0,1)$ but the general result for $\delay\in\realnumbers[\ge0]$ easily derives from it.
    Note now that the belief is defined as in \Cref{subsec:non_int_delays}.
    The previous proof for integer delays can be applied without complications to non-integer delays. 
    The only step which might not easily follow is \Cref{eq:perf_diff_lem_equiv_belief}.
    Therefore, we detail this step below. 
    For $x_t=(s_t,a_{t-1}), x_{t+1}=(s_{t+1},a_t)\in\augmentedstatespace$,
    \begin{align}
        &\int_{\statespace} \augmentedbelief(s_{t+\delay}\vert x_t)p(s_{t+1+\delay}\vert s_{t+\delay},a)\de s_{t+\delay}\nonumber
        \\
        &\qquad=\int_{\statespace} \augmentedbelief(s_{t+\delay}\vert x_t) \int_{\statespace}  \augmentedbelief(s_{t+1+\delay}\vert s_{t+1},a) b_{1-\delay}(s_{t+1}\vert s_{t+\delay},a)\de s_{t+1}\de s_{t+\delay} \label{eq:use_def_p_non_int}
        \\
        &\qquad= \int_{\statespace} \augmentedbelief(s_{t+1+\delay}\vert s_{t+1},a)\int_{\statespace} \augmentedbelief(s_{t+\delay}\vert x_t)b_{1-\delay}(s_{t+1}\vert s_{t+\delay},a)\de s_{t+\delay}\de s_{t+1} \nonumber
        \\
        &\qquad= \int_{\statespace} \augmentedbelief(s_{t+1+\delay}\vert s_{t+1},a)\int_{\statespace} \augmentedbelief(s_{t+\delay}\vert s_t,a_{t-1})b_{1-\delay}(s_{t+1}\vert s_{t+\delay},a)\de s_{t+\delay}\de s_{t+1} \nonumber
        \\
        &\qquad= \int_{\actionspace} \int_{\statespace} \augmentedbelief(s_{t+1+\delay}\vert s_{t+1},a_{t}) \delta_{a}(a_{t})\nonumber
        \\
        &\qquad\quad\int_{\statespace} \augmentedbelief(s_{t+\delay}\vert s_t,a_{t-1}) b_{1-\delay}(s_{t+1}\vert s_{t+\delay},a)\de s_{t+\delay}\de s_{t+1}\de a_{t}\nonumber
        \\
        &\qquad= \int_\augmentedstatespace \augmentedbelief(s_{t+1+\delay}\vert x_{t+1})\tilde p(x_{t+1}\vert x_t, a)\de x_{t+1}\label{eq:def_aug_mdp_p_non_int}.
    \end{align}
    \Cref{eq:use_def_p_non_int} follows from the definition of $p$ in \Cref{eq:def_p_non_int} and \Cref{eq:def_aug_mdp_p_non_int} by recognising the transition in the augmented \gls{mdp}.
\end{proof}

We are now ready to state the main result of this chapter. 
This result is valuable not only for DIDA but for any policy that satisfies \Cref{eq:belief_pol}.
The smoothness of the \glspl{dmdp} is a key ingredient of the proof.

\begin{thm}
\label{th:perf_diff_bound}
    Let $\markovdecisionprocess$ be an $(L_P,L_r)$-LC \gls{mdp} and $\pi$ a $L_\pi$-LC undelayed policy. Assume also that $Q^{\pi}$ is $L_Q$-L.C.\footnote{The proof only requires the Lipschitzness of $Q^{\pi}$ in the second argument.}.
    Let $\delay\in\realnumbers[\ge0]$ and consider $\delayedpolicy$, a $\delay$-delayed policy that satisfies \Cref{eq:belief_pol}.
    Then, $\forall x\in\mathcal{X}$,
    \begin{align*}
        \expectedvalue_{s\sim \augmentedbelief(\cdot\vert x)}&[V^{\pi}(s)] - V^{\delayedpolicy}(x) \leq \frac{L_Q L_\pi}{1-\gamma} \sigma_b^x,
    \end{align*}
    where 
    \begin{align*}
        \sigma_b^x = \expectedvalue_{\substack{x'\sim \delayeddiscountedstateoccupancydistribution[x][\delayedpolicy]\\ s,s'\sim \augmentedbelief(\cdot\vert x')}}\left[ \distance_{\statespace}(s,s') \right].
    \end{align*}
\end{thm}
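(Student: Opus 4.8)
The plan is to start from the Delayed Performance Difference Lemma (\Cref{lem:perf_diff_lem}), which already expresses the quantity $\expectedvalue_{s\sim \augmentedbelief(\cdot\vert x)}[V^{\pi}(s)] - V^{\delayedpolicy}(x)$ as
\begin{align*}
    \frac{1}{1-\gamma}\expectedvalue_{x'\sim \delayeddiscountedstateoccupancydistribution[x][\delayedpolicy]}\left[ \expectedvalue_{s\sim \augmentedbelief(\cdot\vert x')}[V^{\pi}(s)] - \expectedvalue_{\substack{s\sim \augmentedbelief(\cdot\vert x')\\a\sim \delayedpolicy(\cdot\vert x')}}[Q^{\pi}(s,a)] \right].
\end{align*}
So the whole problem reduces to bounding the integrand, i.e.\ showing that for every augmented state $x'$,
\begin{align*}
    \expectedvalue_{s\sim \augmentedbelief(\cdot\vert x')}[V^{\pi}(s)] - \expectedvalue_{\substack{s\sim \augmentedbelief(\cdot\vert x')\\a\sim \delayedpolicy(\cdot\vert x')}}[Q^{\pi}(s,a)] \leq L_Q L_\pi\, \expectedvalue_{s,s'\sim \augmentedbelief(\cdot\vert x')}\left[ \distance_{\statespace}(s,s') \right],
\end{align*}
after which the result follows by plugging this bound back into the lemma and recognising the definition of $\sigma_b^x$.

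To bound the integrand, first I would rewrite $V^{\pi}(s) = \expectedvalue_{a\sim\pi(\cdot\vert s)}[Q^{\pi}(s,a)]$, so the left-hand side becomes
\begin{align*}
    \expectedvalue_{s\sim \augmentedbelief(\cdot\vert x')}\left[\expectedvalue_{a\sim\pi(\cdot\vert s)}[Q^{\pi}(s,a)] - \expectedvalue_{a\sim \delayedpolicy(\cdot\vert x')}[Q^{\pi}(s,a)]\right].
\end{align*}
Now substitute the explicit form of $\delayedpolicy$ from \Cref{eq:belief_pol}, namely $\delayedpolicy(a\vert x') = \int_{\statespace} \augmentedbelief(s'\vert x')\pi(a\vert s')\de s'$. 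This turns the second expectation into a double integral over $s'\sim\augmentedbelief(\cdot\vert x')$ and $a\sim\pi(\cdot\vert s')$. Writing the first expectation also as a (trivial) double integral $\expectedvalue_{s,s'\sim\augmentedbelief(\cdot\vert x')}\expectedvalue_{a\sim\pi(\cdot\vert s)}$, the difference becomes
\begin{align*}
    \expectedvalue_{s,s'\sim \augmentedbelief(\cdot\vert x')}\left[\expectedvalue_{a\sim\pi(\cdot\vert s)}[Q^{\pi}(s,a)] - \expectedvalue_{a\sim\pi(\cdot\vert s')}[Q^{\pi}(s,a)]\right].
\end{align*}
The inner difference is exactly $\int_\actionspace Q^{\pi}(s,a)\left(\pi(\de a\vert s) - \pi(\de a\vert s')\right)$. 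Since $Q^{\pi}(s,\cdot)$ is $L_Q$-LC in its second argument, $\frac{1}{L_Q}Q^{\pi}(s,\cdot)$ has Lipschitz semi-norm at most one, and by the dual (Kantorovich--Rubinstein) characterisation of the Wasserstein distance this integral is bounded by $L_Q\, \wassersteindistance[1](\pi(\cdot\vert s)\Vert\pi(\cdot\vert s'))$. Then the $L_\pi$-Lipschitzness of the policy (\Cref{def:lip_policy}) gives $\wassersteindistance[1](\pi(\cdot\vert s)\Vert\pi(\cdot\vert s'))\le L_\pi\, \distance_{\statespace}(s,s')$. Combining, the inner difference is at most $L_Q L_\pi\, \distance_\statespace(s,s')$, so taking the outer expectation over $s,s'\sim\augmentedbelief(\cdot\vert x')$ yields the claimed bound on the integrand.

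I expect the main subtlety — rather than a genuine obstacle — to be the bookkeeping in the step where the two single integrals are merged into a symmetric double integral over two independent copies of the belief; one has to be careful that the variable playing the role of ``the state the Q-function is evaluated at'' ($s$) and the variable playing the role of ``the state the expert is conditioned on'' ($s'$) are correctly decoupled, and that introducing the extra (dummy) integration over $s'$ in the first term and over $s$ in the second is legitimate because in each case the integrand does not depend on the added variable and $\augmentedbelief(\cdot\vert x')$ is a probability measure. Once that symmetrisation is set up cleanly, the rest is the standard Wasserstein-duality plus Lipschitz-policy argument. Finally, I would remark that, as the footnote anticipates, only Lipschitzness of $Q^{\pi}$ in the action argument is used, and that for integer delays one could alternatively invoke \Cref{th:lip_Q_function} to guarantee $L_Q$-Lipschitzness of $Q^\pi$ from the $(L_P,L_r)$-LC and $L_\pi$-LC assumptions when $\gamma L_P(1+L_\pi)\le 1$.
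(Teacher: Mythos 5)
Your proposal is correct and follows essentially the same route as the paper: the delayed performance difference lemma (\Cref{lem:perf_diff_lem}) followed by the belief-mixture form of $\delayedpolicy$ from \Cref{eq:belief_pol}, Kantorovich--Rubinstein duality with $L_Q$-Lipschitzness of $Q^\pi$ in the action, and $L_\pi$-Lipschitzness of $\pi$. The only cosmetic difference is the order of steps: the paper first bounds the gap by $L_Q\,\wassersteindistance(\pi(\cdot\vert s)\Vert\delayedpolicy(\cdot\vert x'))$ and then expands the delayed policy as a belief mixture, whereas you symmetrise over two independent belief samples first and bound $\wassersteindistance(\pi(\cdot\vert s)\Vert\pi(\cdot\vert s'))$ pairwise, which yields the identical final bound.
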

\begin{proof}
    For this proof, there is no need to separate integer and non-integer delays. 
    Let $\delay\in\realnumbers[\ge0]$. From \Cref{lem:perf_diff_lem}, for some $x \in \augmentedstatespace$,
    \begin{align*}
        \expectedvalue_{s\sim \augmentedbelief(\cdot\vert x)}&[V^{\pi}(s)] - V^{\delayedpolicy}(x) 
        = \frac{1}{1-\gamma} \expectedvalue_{x'\sim \delayeddiscountedstateoccupancydistribution[x][\delayedpolicy]} \bigg[\underbrace{ \expectedvalue_{s\sim \augmentedbelief(\cdot\vert x')}[V^{\pi}(s)] -  \expectedvalue_{\substack{s\sim \augmentedbelief(\cdot\vert x')\\a\sim \delayedpolicy(\cdot\vert x')}}[Q^{\pi}(s,a)]}_{A(x')} \bigg].
    \end{align*}
    Let us focus on the term $A(x')$. We have,
    \begin{align}
        A(x')
        &= \expectedvalue_{s\sim \augmentedbelief(\cdot\vert x')}\left[\expectedvalue_{\substack{a_1\sim\pi(\cdot\vert s) \\a_2\sim \delayedpolicy(\cdot\vert x')}}\left[ Q^{\pi}(s,a_1) -  Q^{\pi}(s,a_2) \right]\right]\nonumber
        \\
        &\leq L_Q \expectedvalue_{s\sim \augmentedbelief(\cdot\vert x')} \left[ \wassersteindistance(\pi(\cdot\vert s)
        \Vert \delayedpolicy(\cdot\vert x') ) \right],\label{eq:partial_res_v}
    \end{align}
    where the result follows from the application of \Cref{pp:expected_q_bound}.
    The last step is to upper bound 
    $\expectedvalue_{s\sim \augmentedbelief(\cdot\vert x')} \left[ \wassersteindistance(\pi(\cdot\vert s) \Vert \delayedpolicy(\cdot\vert x') ) \right]$ using
    $\sigma_b^x$. Thus,
    \begin{align}
         \wassersteindistance(\pi(\cdot\vert s) \Vert \delayedpolicy(\cdot\vert x') ) 
        &= \sup_{\left\Vert f\right\Vert_L\leq 1}\left\vert\int_{\actionspace} f(a)(\pi(a\vert s)-\delayedpolicy(a\vert x')) \de a\right\vert \nonumber
        \\
        &= \sup_{\left\Vert f\right\Vert_L\leq 1}\left\vert\int_{\actionspace} f(a)(\pi(a\vert s)-\int_{\statespace}\pi(a\vert s')\augmentedbelief(s'\vert x')\de s') \de a\right\vert \label{eq:def_pidel}
        \\
        &\leq  \int_{\statespace} \augmentedbelief(s'\vert x') \sup_{\left\Vert f\right\Vert_L\leq 1}\left\vert\int_{\actionspace} f(a)(\pi(a\vert s)-\pi(a\vert s')) \de a \right\vert \de s' \label{eq:fub_ton_sigma}
        \\
        &\leq \int_{\statespace} \augmentedbelief(s'\vert x') \wassersteindistance(\pi(\cdot\vert s) \Vert \pi(\cdot\vert s') ) \de s'\nonumber
        \\
        &\leq L_\pi \int_{\statespace} \augmentedbelief(s'\vert x')  \distance_{\statespace}(s,s') \de s'
        \label{eq:lip_expert_pol}
        ,
    \end{align}
    where \Cref{eq:def_pidel} follows from \Cref{eq:belief_pol}, \Cref{eq:fub_ton_sigma} by the Fubini-Tonelli theorem  and \Cref{eq:lip_expert_pol} by Lipschitzness of $\pi$.
    It now suffices to reinject this result into \Cref{eq:partial_res_v} to conclude.
\end{proof}

To allow more practical interpretations of this result, we provide two ways to further bound the uncommon term $\sigma_b^x$, each introducing a new assumption.  
The first result assumes the time-Lipschitzness of the \gls{mdp} (see \Cref{def:time_lip}).

\begin{coroll}
\label{cor:perf_diff_bound_tlc}
    If the assumptions of \Cref{th:perf_diff_bound} hold and, in addition, the \gls{mdp} is $L_T$-TLC, then, $\forall x\in\mathcal{X}$,
    \begin{align*}
        \expectedvalue_{s\sim \augmentedbelief(\cdot\vert x)}&[V^{\pi}(s)] - V^{\delayedpolicy}(x) \leq \frac{2 \delay L_T L_Q L_\pi}{1-\gamma}.
    \end{align*}
\end{coroll}
\begin{proof}
    Applying \Cref{lem:bound_sigma_tlc} to \Cref{th:perf_diff_bound} provides the result.
\end{proof}

This result clearly shows that the bound on the performance difference depends linearly on the delay. 
One critical limitation to this bound is that it does not go to 0 as the \gls{mdp} becomes deterministic. In fact, as we have seen in \Cref{subsec:compare_delayed_undelayed}, the optimal delayed return should match the undelayed one in this case.
The second corollary to \Cref{th:perf_diff_bound} that we are now presenting does not have this limitation. 

\begin{coroll}
\label{cor:perf_diff_bound_eucl}
    If the assumptions of \Cref{th:perf_diff_bound} hold and, in addition, $\statespace\subseteq\realnumbers^n$ is equipped with the Euclidean norm, then, $\forall x\in\mathcal{X}$,
    \begin{align*}
        \expectedvalue_{s\sim \augmentedbelief(\cdot\vert x)}[V^{\pi}(s)] - V^{\delayedpolicy}(x) \leq
        \frac{\sqrt{2} L_Q L_\pi}{1-\gamma}
        \expectedvalue_{x'\sim \delayeddiscountedstateoccupancydistribution[x][\delayedpolicy]}\left[\sqrt{ \variance_{s\sim \augmentedbelief(\cdot\vert x')}(s\vert x')}\right].
    \end{align*}
\end{coroll}
\begin{proof}
    Applying \Cref{lem:bound_sigma_eucl} to \Cref{th:perf_diff_bound} provides the result.
\end{proof}

To appreciate the quality of these bounds, we now provide a result on a lower bound of the performance difference.

\subsection{Lower Bound on the Performance Loss for Constant Delays}
In this subsection, we add significance to \Cref{cor:perf_diff_bound_eucl} by showing that it matches a lower bound up to a constant term when the expert's undelayed policy is optimal and under the same smoothness assumptions.

\begin{thm}
\label{th:lb_tight}
Let $L_\pi>0$, $L_{Q}>0$, and $\delay\in\realnumbers[\ge0]$. 
Then there exists an \gls{mdp} with an optimal $L_\pi$-LC policy $\optimalpolicy$ such that its value function is $L_{Q}$-LC in the second argument, but for any $\delay$-delayed policy $\delayedpolicy$, and $x\in\augmentedstatespace$
    \begin{align*}
        \expectedvalue_{s\sim \augmentedbelief(\cdot\vert x)}[\optimalstatevaluefunction(s)] - V^{\delayedpolicy}(x) &\geq
        \frac{\sqrt 2}{\sqrt \pi}\frac{ L_{Q}L_\pi}{1-\gamma}
        \expectedvalue_{x'\sim \delayeddiscountedstateoccupancydistribution[x][\delayedpolicy]}\left[\sqrt{ \variance_{s\sim \augmentedbelief(\cdot|x')}(s|x')}\right],
    \end{align*}
    where $\optimalstatevaluefunction$ is the value function of the optimal undelayed policy.
\end{thm}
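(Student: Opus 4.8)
The plan is to build a single, fully explicit \gls{mdp} in which, by design, \emph{every} $\delay$-delayed policy must pay at each augmented state it visits a loss equal to $L_Q$ times the mean absolute deviation of the belief, and then to choose that belief Gaussian so that the mean absolute deviation equals $\sqrt{2/\pi}$ times its standard deviation. This way the constant $\tfrac{\sqrt 2}{\sqrt\pi}$ is produced exactly.

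Concretely, I would take $\statespace\subseteq\realnumbers$, $\actionspace=[-M',M']$ with $M'$ large, and a transition kernel that \emph{ignores the action}: from any $(s,a)$ the next state is an independent draw from a fixed symmetric law $\rho$ fixed below. The reward is $r(s,a)=c-L_Q\lvert a-L_\pi s\rvert$, with $c$ large enough that $r\ge 0$ on the whole (bounded) space. Since the kernel is constant, every stationary policy has a constant value function; in particular $\optimalstatevaluefunction(s)=\tfrac{c}{1-\gamma}$ and
\[
  Q^\star(s,a)=\tfrac{c}{1-\gamma}-L_Q\lvert a-L_\pi s\rvert ,
\]
which is $L_Q$-LC in $a$. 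As $r(s,a)$ is maximised at $a=L_\pi s$, the deterministic map $\optimalpolicy(s)=L_\pi s$ is optimal, and it is $L_\pi$-LC in the sense of \Cref{def:lip_policy} because $\wassersteindistance(\delta_{L_\pi s}\Vert\delta_{L_\pi s'})=L_\pi\lvert s-s'\rvert$. Crucially, after a single transition the current state is distributed as $\rho$ regardless of the action buffer, so $\augmentedbelief(\cdot\vert x')=\rho$ for every augmented state $x'$; taking $\rho=\mathcal N(0,\sigma^2)$ gives $\variance_{s\sim\augmentedbelief(\cdot\vert x')}(s\vert x')=\sigma^2$ for all $x'$. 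The non-integer case is handled identically by letting the underlying continuous-time \gls{mdp} have $\rho$ as its transition over \emph{every} positive duration, so that the interleaved-\gls{mdp} belief of \Cref{subsec:non_int_delays} is again $\rho$.

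Next I would apply the delayed performance difference lemma (\Cref{lem:perf_diff_lem}) with undelayed expert $\optimalpolicy$ (whose value and action-value functions are $\optimalstatevaluefunction$ and $Q^\star$): for any $\delay$-delayed $\delayedpolicy$ and any $x$,
\[
  \expectedvalue_{s\sim\augmentedbelief(\cdot\vert x)}[\optimalstatevaluefunction(s)]-V^{\delayedpolicy}(x)
  =\frac{1}{1-\gamma}\,\expectedvalue_{x'\sim\delayeddiscountedstateoccupancydistribution[x][\delayedpolicy]}\bigl[B(x')\bigr],\quad
  B(x')\coloneqq\expectedvalue_{s\sim\augmentedbelief(\cdot\vert x')}[\optimalstatevaluefunction(s)]-\expectedvalue_{\substack{s\sim\augmentedbelief(\cdot\vert x')\\ a\sim\delayedpolicy(\cdot\vert x')}}[Q^\star(s,a)].
\]
Substituting the explicit forms cancels the constant $\tfrac{c}{1-\gamma}$ and leaves $B(x')=L_Q\,\expectedvalue_{a\sim\delayedpolicy(\cdot\vert x')}\expectedvalue_{s\sim\rho}\lvert a-L_\pi s\rvert$. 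Since this is affine in $\delayedpolicy(\cdot\vert x')$, it is at least $L_Q\min_{a\in\actionspace}\expectedvalue_{s\sim\rho}\lvert a-L_\pi s\rvert$; the map $a\mapsto\expectedvalue_{s\sim\rho}\lvert a-L_\pi s\rvert$ is convex and even (by symmetry of $\rho$), hence minimised at $a=0\in\actionspace$, with value $\expectedvalue_{s\sim\rho}\lvert L_\pi s\rvert=L_\pi\sigma\sqrt{2/\pi}$. Thus $B(x')\ge \sqrt{2/\pi}\,L_QL_\pi\sqrt{\variance_{s\sim\augmentedbelief(\cdot\vert x')}(s\vert x')}$, and plugging this into the displayed identity gives exactly the claimed inequality.

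The main obstacle is the bounded-reward assumption: with $\statespace=\realnumbers$ and $\rho$ \emph{exactly} Gaussian, $r(s,a)=c-L_Q\lvert a-L_\pi s\rvert$ is not bounded below. The fix is to work on $\statespace=[-M,M]$ with $\rho$ a symmetrically truncated Gaussian: a short Taylor expansion of the mean absolute deviation and of the variance of such a law shows that their ratio stays at least $\sqrt{2/\pi}$ once $M/\sigma$ is large enough, so the lower bound on $B(x')$ survives; optimality of $\optimalpolicy$ and $L_Q$-Lipschitzness of $Q^\star$ are untouched since the reward and its argmax are unchanged inside $[-M,M]$. (If one only needs the bound up to a constant, taking $\rho=\tfrac12(\delta_{-\sigma}+\delta_{\sigma})$ avoids the expansion, with ratio $1\ge\sqrt{2/\pi}$.) A secondary, routine point is that it suffices to argue for augmented-state–Markovian delayed policies, since a general history-dependent one contributes the same $B(x')$ at each visited augmented state.
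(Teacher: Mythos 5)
Your proposal is correct, and it exploits the same core mechanism as the paper's proof --- a tracking reward of the form $-L_Q\lvert a-L_\pi s\rvert$, whose optimal undelayed policy is exactly $L_\pi$-LC with $Q^\star$ exactly $L_Q$-LC in the action, combined with the fact that the mean absolute deviation of a centred Gaussian is $\sqrt{2/\pi}$ times its standard deviation --- but via a different hard instance and a different concluding device. The paper's instance is a controlled random walk, $s'=s+a/L_\pi+\varepsilon$ with $\varepsilon\sim\mathcal N(0,\sigma^2)$ and reward $-L_QL_\pi\lvert s+a/L_\pi\rvert$, so that $\optimalstatevaluefunction\equiv 0$ and the delayed agent's belief has variance $\delay\sigma^2$; it then bounds $V^{\delayedpolicy}(x)$ directly, reward-by-reward, by the half-normal mean, never invoking \Cref{lem:perf_diff_lem}. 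You instead make the dynamics ignore the action (i.i.d.\ resets to a symmetric $\rho$), so the belief equals $\rho$ at every augmented state independently of $\delay$, and you conclude through \Cref{lem:perf_diff_lem}; since your bound on $B(x')$ is uniform over actions, this is equivalent to the paper's per-step argument, and your remark about history-dependent delayed policies is exactly that observation. What your route buys is compliance with the standing bounded-nonnegative-reward assumption (shift by $c$ and truncate), which the paper's own example quietly violates; what it loses is the paper's interpretive content, where the belief variance, and hence the lower bound, grows as $\sqrt{\delay}$. Two minor remarks: the aside that ``every stationary policy has a constant value function'' is not literally true (the immediate-reward term depends on $s$), but only the constancy of $\optimalstatevaluefunction$ and the form of $Q^\star$ are used, and those are right; and your two-point alternative $\rho=\tfrac12(\delta_{-\sigma}+\delta_{\sigma})$ actually delivers the stated constant (MAD/SD ratio $1\ge\sqrt{2/\pi}$), so the truncated-Gaussian expansion is only needed if you insist on a Gaussian belief.
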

\begin{proof}
    For given values of $L_\pi>0$, $L_{Q}>0$ and $\delay\in\realnumbers[\ge0]$, we design an \gls{mdp} $\markovdecisionprocess=(\statespace,\actionspace, \transitionfunction, \rewardfunction, \initialstatedistribution)$ such that: $\statespace=\realnumbers$; $\actionspace=\realnumbers$; its transition function is 
    \begin{align*}
        \transitionfunction(s'\vert s,a)=\mathcal N\left(s';\; s+\frac{a}{L_\pi},\sigma^2\right),
    \end{align*} 
    that is, $s'=s+\frac{a}{L_\pi}+\varepsilon$, where $\varepsilon \sim \mathcal N (0,\sigma^2)$;
    its reward function is
    \begin{align*}
        \expectedrewardfunction(s,a)=\expectedvalue[\rewardfunction(s,a)] = -L_r\left\vert s+\frac{a}{L_\pi}\right\vert,
    \end{align*}
    where $L_r:=L_Q L_\pi$.
    We have used $L_Q$ in the definition of $\expectedrewardfunction$ since we will show later that this is exactly the Lipschitz constant of $\optimalstateactionvaluefunction$.
    Note that $\forall (s,a)\in\statespace\times\actionspace,\;r(s,a)\le0$, but the policy $\optimalpolicy(a\vert s)=\delta_{-L_\pi s}(a)$ gets a reward of $0$ for any pair $(s,a)$. 
    This implies that $\optimalstatevaluefunction=0$ everywhere.
    
    We now prove that the Q-function is Lipschitz,
    \begin{align*}
        \optimalstateactionvaluefunction(s,a)&=-L_r\left\vert s+\frac{a}{L_\pi}\right\vert + \gamma \int_{\realnumbers} \optimalstatevaluefunction(s')\ p(s'\vert s,a)\de s'
        \\
        &= -L_r\left\vert s+\frac{a}{L_\pi}\right\vert
        \\
        &=-L_{Q}\left\vert L_\pi s+a\right\vert.
    \end{align*}
    Hence, $\optimalstateactionvaluefunction$ is $L_Q$-LC in the second argument.
    
    We now study the case of a $\delay$-delayed policy $\delayedpolicy$.
    For a given time step $t$, the unobserved current state is
    \begin{align*}
        s_t=\underbrace{s_{t-\delay}+\sum_{\tau=t-\delay}^{t-1} \frac{a_\tau}{L_\pi}}_{\eqqcolon \phi(x_t)}+\underbrace{\sum_{\tau=t-\delay}^{t-1} \varepsilon_\tau}_{\eqqcolon\epsilon}.
    \end{align*}
    We have defined two quantities above. The first, $\phi(x_t)$, is a deterministic function of an augmented state $x_t\in\augmentedstatespace$. 
    The second, $\epsilon$, is a random variable with distribution $\mathcal N(0,\delay\sigma^2)$.
    Using these quantities, we can define the augmented reward as,
    \begin{align*}
        \augmentedexpectedrewardfunction (x_t,a)
        &= -L_r\expectedvalue\left[\Big\vert s_t+\frac{a}{L_\pi}\Big\vert\right]
        \\
        &=-L_r\expectedvalue\left[\Big\vert\phi(x_t)+\frac{a}{L_\pi}+\epsilon\Big\vert\right].
    \end{align*}
    The reward is therefore proportional to the function $f:a\mapsto\expectedvalue\left[\left\vert \mathcal N(g(a),\delay\sigma)\right\vert\right]$ for $\textstyle g(a)=\phi(x_t)+\frac{a}{L_\pi}$. The minimum of $f$ is attained for $g(a)=0$ and corresponds to the mean of a half-normal distribution, that is $\textstyle\min_{a\in\realnumbers}f(a)=\frac{\sigma\sqrt{2\delay}}{\sqrt{\pi}}$ and therefore,
    \begin{align*}
        \augmentedexpectedrewardfunction (x_t,a)
        &\le -{L_r}\frac{\sqrt {2\delay}}{\sqrt \pi}\sigma.
    \end{align*}
    Thus,
    \begin{align*}
        V^{\delayedpolicy}(x)
        &= \expectedvalue_{\substack{x_{t+1}\sim\augmentedtransitionfunction(\cdot\vert x_t,a_t)\\a_t\sim\delayedpolicy(\cdot\vert s_t)}} \left[\left.\sum_{t=0}^\infty \gamma^t \augmentedexpectedrewardfunction(x_t,a_t)\right\vert x_0=x\right]
        \\
        &\leq-\frac{L_r}{1-\gamma}\frac{\sqrt {2\delay}}{\sqrt \pi}\sigma
        \\
        &= -\frac{L_Q L_\pi}{1-\gamma}\frac{\sqrt {2}}{\sqrt \pi}\expectedvalue_{x'\sim \delayeddiscountedstateoccupancydistribution[x][\delayedpolicy]}\left[\sqrt{ \variance_{s\sim \augmentedbelief(\cdot|x')}(s|x')}\right],
    \end{align*}
    where, in the first inequality, we note that the variance is the same for any unobserved current state given the definition of $\markovdecisionprocess$. 
    In the last inequality,  we have replaced $L_r$ by $L_Q L_\pi$ and $\delay\sigma^2$ by the variance over the belief 
    $\mathbb Var_{s\sim b(\cdot\vert x)}(s\vert x)$.
    This inequality and the fact that $\optimalstatevaluefunction=0$ at any state conclude the proof.
\end{proof}

This result highlights that the coarser the expert policy, the weaker the guarantees. 

\subsection{Implications for Constant Delays}
We will now discuss the implications of the previous theoretical results. 
These bounds hold if the policy $\delayedpolicy$ has perfectly imitated the undelayed policy and satisfies \Cref{eq:belief_pol}. 
In practice, this might not be the case and is a first source of performance loss.
A second source of performance loss is that the undelayed policy itself might not be optimal and, although this does not impact the bounds, it lowers the final performance of the agent.

In particular, we point out two key trade-offs. 
First, a smoother expert might be sub-optimal in the undelayed environment, but it provides a much simpler imitation problem to the delayed policy according to \Cref{th:perf_diff_bound}. 
Second, a noisier policy might also be sub-optimal in the undelayed environment but, by providing examples of bad decisions and how to recover from them, it can simplify the imitation task~\cite{laskey2017dart}.

\subsection{Bounds on the Performance Loss for Stochastic Delays}
\label{subsec:stoch_delay_dida}
In this sub-section, we study the guarantees of DIDA when applied to stochastic integer delays. 
Action execution and state observation delays may not be equivalent in this case, depending on the stochasticity of the delay.
The equivalence has been shown only for special types of stochastic delays \cite[Result~2]{katsikopoulos2003markov} and in \cite{bouteiller2020reinforcement}.
In the latter result, the definition of the stochastic delay is similar to the definition that we will introduce below.
We consider a delayed state observation whose delay sequence $(\delay_t)_{t\ge0}$ is i.i.d. with discrete distribution $\zeta$, of support $[\![0,\delaymax]\!]$.
As an example, $\probability(\delay_t=k)= \zeta(k)$ is the probability that state $s_t$ is observed at time $s_{t+k}$. 
We note $\zeta(k\vert i)$ the probability that $\delay_t=k$ knowing that $\delay_t>i$.
We also assume that the delay is non-anonymous, that is, whenever a new observation is collected by the agent, the observation's original time step is known. 
This avoids credit assignment problems.

Within this framework, one can consider the following augmented state $x\in\widecheck{\augmentedstatespace}\coloneq\statespace\times\actionspace^{\delaymax}\times\naturalnumbers$:
\begin{align*}
    x = (s,a_1,\dots,a_\delaymax,n),
\end{align*}
where $(a_1,\dots,a_\delaymax)$ is the usual action buffer, $s$ is the most recent observed state and $n$ indicates the time difference between the time at which $s$ has been observed and the current time. 
Said alternatively, $n$ is the effective current delay. 
We assume that the augmented state is available to the agent, the delay is therefore contained in the information structure of the agent (see \Cref{subsec:control_theory}).
In the following, we will see that the fact that the delay is non-anonymous is key to the definition of this problem. 
Clearly, for $n<\delaymax$, some actions from the augmented state are useless, as a more recent state has been observed. 
This framework is reminiscent of the stochastic delay framework of \cite{bouteiller2020reinforcement}.

Next, we define the transition probability for this new process, which we note $\augmentedtransitionfunction$. 
For two augmented states $x = (s,a_1,\dots,a_\delaymax,n)$ and $x' = (s',a_1',\dots,a_\delaymax',n')$
\begin{align}
\label{stochastic_delay_transition}
    \augmentedtransitionfunction(x'\vert x,a) = \probability(O=n+1-n')  p^{n+1-n'}(s'\vert x,a)
\delta_a(a_{\delaymax}')\prod_{i=1}^{\delaymax-1}\delta_{a_{i+1}}(a_{i}').
\end{align}
Some new terms have been introduced in this equation.
First, the random variable $O$. 
It is the difference in the number of steps between the previous last observed state $s$ and the new last observed state $s'$.
One can compute its probability as:
\begin{align}
    \probability(O=k) &= \zeta(n-k+1\vert n-k)
    \nonumber\\
    &\qquad\cdot\left(1-\zeta(0)\right)\prod_{i=1}^{n-k} \left(1-\zeta(n-k+1-i\vert n-k-i)\right),\label{eq:observed}
\end{align}
where $\zeta(n-k+1\vert n-k)$ is the probability that the state $k$ steps after $s$ is observed, and the following terms in $(1-\zeta(\cdot))$ make sure that the more recent states are not observed. 
Note that $n'=n-k+1$.
Note also that the observation of states in between $s$ and $s'$ is irrelevant for the transition probability.
From this observation, $O$ can be considered as the number of observations at a given step.
A visual representation on how to interpret the random variable $O$ is given in \Cref{fig:random_delay}.
Second, the term $p^{n+1-n'}(s'\vert x,a)$ in \Cref{stochastic_delay_transition} is the $n+1-n'$-step transition in the underlying \gls{mdp} starting from $s$ and following the sequence of actions in $x$ and then $a$, for as many steps as needed.

\begin{figure}[t]
    \centering
    \includegraphics[bylabel=randomdelay]{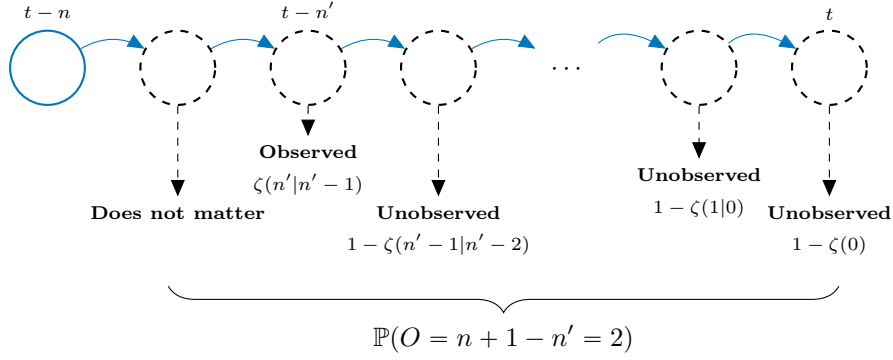}
    \caption{Example of a stochastic delay where
    at time $t$ the delay was $n$ and at time $t+1$ the delay will be $n'=n-1$.
    This implies that the agent has observed 2 new states.
    The random variable for the number of observations between steps, $O$, has a probability defined as the product of the probabilities as in \Cref{eq:observed}.
    Note that the probability to observe or not the state with index $t-n+1$ is irrelevant because a more recent one has been observed.}
    \label{fig:random_delay}
\end{figure}

One can define the reward of this process as follows,
\begin{align*}
    \augmentedexpectedrewardfunction(x,a) = \sum_{k=0}^{n}\probability(O=k)\sum_{i=0}^{k-1} \gamma^i\expectedvalue_{s_i\sim p^i(s_i\vert x,a)}[r(s_i,a_i)].
\end{align*}
This reward basically sums up all the rewards for the newly observed steps. 
This definition assumes that whenever a state is observed, all the intermediate states that had not been observed yet will be observed contemporaneously.

Clearly, the process involving the augmented state defines an \gls{mdp} and proves the following proposition.

\begin{prop}
\label{pp:stoch_dmdp_is_mdp}
    The \gls{dmdp} with i.i.d. delays $(\delay_t)_{t\ge0}$ with discrete distribution $\zeta$ as defined above can be cast into an \gls{mdp} by augmenting the state with a buffer of the last $\delaymax$ actions and the effective current delay.
\end{prop}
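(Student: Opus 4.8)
The plan is to exhibit explicitly the MDP whose state space is $\widecheck{\augmentedstatespace}=\statespace\times\actionspace^{\delaymax}\times\naturalnumbers$, whose transition kernel is the $\augmentedtransitionfunction$ of \Cref{stochastic_delay_transition}, and whose reward is the $\augmentedexpectedrewardfunction$ displayed just above it, and then to argue, as in the proof of \Cref{pp:equivalent_mdp_non_int}, that (i) these data define a genuine MDP and (ii) every stochastically-delayed policy earns the same return in $\delayedmarkovdecisionprocess$ and in this MDP.

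First I would check that $\augmentedtransitionfunction(\cdot\vert x,a)$ is a probability measure on $\widecheck{\augmentedstatespace}$. The product of Dirac masses shifts the action buffer deterministically and $n'$ is pinned down by $O$ through $n'=n+1-O$, so it remains to verify $\sum_{k}\probability(O=k)\int_{\statespace}p^{\,k}(s'\vert x,a)\,\de s'=1$, i.e. $\sum_k\probability(O=k)=1$. This is a telescoping computation over the index of the \emph{freshest} newly revealed state, using that $\zeta$ is supported on $[\![0,\delaymax]\!]$ and that the current state was observed at some delay $n\le\delaymax$, so that the state $n$ steps older than the current time has delay $\delaymax$ at the next step and is revealed with conditional probability one; in particular $n$ stays in $[\![0,\delaymax]\!]$ and the sum is finite. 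Boundedness of $\augmentedexpectedrewardfunction$ (by $\maximumreward/(1-\gamma)$) follows from $0\le r\le\maximumreward$ and the geometric series, so the bounded-reward assumption carries over.

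The crux is the Markov property: conditioning on the whole interaction history up to time $t$, the law of the next augmented state $x_{t+1}$ must depend on that history only through $(x_t,a_t)$. Here the non-anonymity of the delay is essential, since the component $n$ of $x_t$ records exactly how many environment steps separate the last observed state $s$ from the current time, hence exactly which underlying states are still pending. Because $(\delay_\tau)_{\tau\ge0}$ are i.i.d., whether any pending state becomes observable at step $t+1$ is independent of the past given this bookkeeping, so the number $O$ of freshly revealed states has the law \Cref{eq:observed}, a function of $n$ alone; and conditionally on $O=k$, the new observed state $s'$ is the $k$-step descendant of $s$ under the buffered actions followed by $a_t$, namely $s'\sim p^{\,k}(\cdot\vert x_t,a_t)$, again a function of $(x_t,a_t)$. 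The action-buffer update being deterministic, $x_{t+1}\sim\augmentedtransitionfunction(\cdot\vert x_t,a_t)$, which is the Markov property. The delicate point — and the step I expect to be the main obstacle — is that several underlying states between $s$ and $s'$ may be revealed at once: one must check that these intermediate indices are irrelevant to the next observed state (only the freshest matters) yet contribute additively to the reward, which is exactly what the definitions of $\augmentedtransitionfunction$ and $\augmentedexpectedrewardfunction$ encode and what \Cref{fig:random_delay} illustrates.

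Finally, to match returns I would reuse the induction of \Cref{pp:equivalent_mdp_non_int}: initialise the two processes identically (the first $\delaymax$ actions and the initial effective delay drawn from the same laws), assume the joint histories of observed states, revealed rewards, and chosen actions coincide up to time $t$, and show they coincide at $t+1$ since a history-based policy then draws $a_t$ with the same law in both processes, the freshest new state is drawn from the same kernel $p^{\,k}$ in both, and $\augmentedexpectedrewardfunction$ reproduces in expectation the rewards collected over the newly revealed transitions in $\delayedmarkovdecisionprocess$. Hence every delayed policy attains the same expected discounted (or average) return in the two processes, which is the claim; the only real work is the verification of the Markov property together with the $O=k$ bookkeeping, the rest being routine.
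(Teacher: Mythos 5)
Your proposal follows the paper's own route: the paper constructs exactly this augmented process with state space $\statespace\times\actionspace^{\delaymax}\times\naturalnumbers$, the transition kernel of \Cref{stochastic_delay_transition} and the displayed reward, and then simply asserts that this process "clearly" defines an \gls{mdp}, which is the entirety of its argument. Your additional verifications (normalisation of the law of $O$, the Markov property via the i.i.d. and non-anonymity assumptions, and return equivalence by the induction of \Cref{pp:equivalent_mdp_non_int}) are correct and merely spell out the details the paper leaves implicit.
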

This proposition has also been found in \cite{bouteiller2020reinforcement} for a more general setting of both state observation and action execution delay.  
We note that including action execution delay poses the problem of potentially having no action executed at a given time step or multiple actions executed at a given time step. 
In these cases, the behaviour of the environment must be defined.

We will now consider applying a policy $\delayedpolicy$ learnt by DIDA in this framework to study the impact of wrongly assuming a constant delay when the true environment is stochastically delayed. 
Note that the constant $\delay$-delayed \gls{dmdp} where DIDA has been trained can be seen as an example of the above process where the whole weight of the $\zeta$ distribution has been placed at $\delay$.
In this case, for $x = (s,a_1,\dots,a_\delaymax,n)$, we consider that $\delayedpolicy(\cdot\vert x) = \delayedpolicy(\cdot\vert s,a_{\delaymax-\delay+1},\dots,a_\delaymax)$. 
In fact, DIDA considers the state $s$ as being $\delay$-delayed and is blind to $n$.
We compare DIDA with a policy $\widecheck{\pi}$ with the following definition:
\begin{align}
    \widecheck{\pi}(a\vert x) = \int_{\statespace} \augmentedbelief[n](s\vert x) \pi(a\vert s) \de s.\label{eq:belief_pol_stoch}
\end{align}
This policy has a construction similar to that of DIDA but adapts to the value of $n$. 
In the following, we note $\widebar\delay$ the mean delay of the stochastically delayed process. It is defined as follows,
\begin{align*}
    \widebar\delay = \expectedvalue[n] =  \sum_{k=0}^{\delaymax}k\prod_{i=0}^{k-1}\probability(\delay_i>i)\probability(\delay_k=k)
\end{align*}

\begin{thm}
\label{thm:stoch_mdp_dida_bound}
    Consider a \gls{dmdp} with stochastic delay as previously defined and assume that the underlying \gls{mdp} is $L_T$-TLC.
    Consider the policy $\delayedpolicy$ learnt by DIDA for constant delay (\Cref{eq:belief_pol}) and the policy $\widecheck\pi$ (\Cref{eq:belief_pol_stoch}).
    Assume that the Q-function $Q^\delayedpolicy$ of $\delayedpolicy$ is $L_{\widetilde Q}$-LC in the second argument\footnote{This holds for example if $\gamma\max(L_P,1)\cdot(1+2L_\pi L_P)<1$. This result is obtained by application of \cite[Theorem~1]{rachelson2010locality} to the Lipschitz constant of a constantly delayed \gls{mdp} (\Cref{pp:lip_dmdp}) and the policy of DIDA (\Cref{pp:lip_dida_pol}).}. 
    Then the difference in performance between $\delayedpolicy$ and the policy $\widecheck\pi$ can be bounded as,
    \begin{align*}
        \lvert V^{\widecheck{\pi}}(x)- V^{\delayedpolicy}(x)\rvert \le \frac{L_{\widetilde Q} L_\pi L_T }{1-\gamma}(\widebar\delay+\delay).
    \end{align*}
\end{thm}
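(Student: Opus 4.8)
The plan is to regard $\widecheck\pi$ and $\delayedpolicy$ as two stationary Markovian policies on the \gls{mdp} produced by \Cref{pp:stoch_dmdp_is_mdp}, apply the performance difference lemma between them, and then control the resulting per-state term by combining the belief structure of the two policies with the time-Lipschitzness of the underlying \gls{mdp}.

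First I would apply the performance difference lemma (\cite[Lemma~6.1]{kakade2002approximately}) with initial state distribution $\delta_x$, which gives, for every augmented state $x$,
\begin{align*}
    V^{\widecheck\pi}(x) - V^{\delayedpolicy}(x) = \frac{1}{1-\gamma}\,\expectedvalue_{x'\sim\delayeddiscountedstateoccupancydistribution[x][\widecheck\pi]}\left[\expectedvalue_{a\sim\widecheck\pi(\cdot\vert x')}\left[Q^{\delayedpolicy}(x',a)\right] - \expectedvalue_{a\sim\delayedpolicy(\cdot\vert x')}\left[Q^{\delayedpolicy}(x',a)\right]\right].
\end{align*}
Taking absolute values and using that $Q^{\delayedpolicy}$ is $L_{\widetilde Q}$-LC in its second argument (via \Cref{pp:expected_q_bound}, exactly as in the step leading to \Cref{eq:partial_res_v}), this yields $\lvert V^{\widecheck\pi}(x) - V^{\delayedpolicy}(x)\rvert \le \frac{1}{1-\gamma}\expectedvalue_{x'\sim\delayeddiscountedstateoccupancydistribution[x][\widecheck\pi]}\left[L_{\widetilde Q}\,\wassersteindistance(\widecheck\pi(\cdot\vert x')\Vert\delayedpolicy(\cdot\vert x'))\right]$. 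A point worth stressing is that putting the absolute value on the right-hand side already delivers both directions of the inequality, so only $Q^{\delayedpolicy}$ — and not $Q^{\widecheck\pi}$ — needs to be Lipschitz.

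Next I would bound $\wassersteindistance(\widecheck\pi(\cdot\vert x')\Vert\delayedpolicy(\cdot\vert x'))$ at a fixed $x'$. By \Cref{eq:belief_pol_stoch} and \Cref{eq:belief_pol}, $\widecheck\pi(\cdot\vert x')$ and $\delayedpolicy(\cdot\vert x')$ are the push-forwards of the beliefs $\augmentedbelief[n](\cdot\vert x')$ and $\augmentedbelief(\cdot\vert x')$ through the $L_\pi$-LC kernel $\pi$; taking a $\wassersteindistance$-optimal coupling of the two beliefs and using the Lipschitzness of $\pi$ (the same manipulation as in \Cref{eq:def_pidel}--\Cref{eq:lip_expert_pol}) gives $\wassersteindistance(\widecheck\pi(\cdot\vert x')\Vert\delayedpolicy(\cdot\vert x'))\le L_\pi\,\wassersteindistance(\augmentedbelief[n](\cdot\vert x')\Vert\augmentedbelief(\cdot\vert x'))$. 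Writing $s$ for the last observed state and $n$ for the effective delay carried by $x'$, both beliefs are $n$-step, resp.\ $\delay$-step, push-forwards of $\delta_s$ under the relevant action sequence in the underlying \gls{mdp}, so the triangle inequality for $\wassersteindistance$ together with the time-Lipschitz bound \Cref{pp:tlc_delay} give $\wassersteindistance(\augmentedbelief[n](\cdot\vert x')\Vert\augmentedbelief(\cdot\vert x')) \le \wassersteindistance(\augmentedbelief[n](\cdot\vert x')\Vert\delta_s) + \wassersteindistance(\delta_s\Vert\augmentedbelief(\cdot\vert x')) \le (n+\delay)L_T$. Note that the possibly ``stale'' actions that $\delayedpolicy$ feeds into its $\delay$-step roll-out do not matter: whatever the action sequence, that roll-out is still a $\delay$-step push-forward of $\delta_s$, to which \Cref{pp:tlc_delay} applies.

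Putting the two estimates together gives $\lvert V^{\widecheck\pi}(x) - V^{\delayedpolicy}(x)\rvert \le \frac{L_{\widetilde Q}L_\pi L_T}{1-\gamma}\expectedvalue_{x'\sim\delayeddiscountedstateoccupancydistribution[x][\widecheck\pi]}[\,n(x')+\delay\,]$, and it remains to replace $\expectedvalue_{x'\sim\delayeddiscountedstateoccupancydistribution[x][\widecheck\pi]}[n(x')]$ by $\widebar\delay$. This is the only delicate point: since the delay sequence $(\delay_t)_{t\ge0}$ is i.i.d.\ and independent of the policy, the marginal law of the effective-delay component $n$ along a trajectory does not depend on $\widecheck\pi$, and once it has reached its stationary regime its expectation is exactly the quantity defined as $\widebar\delay$; then $\expectedvalue_{x'\sim\delayeddiscountedstateoccupancydistribution[x][\widecheck\pi]}[n(x')] = (1-\gamma)\sum_{t}\gamma^t\expectedvalue[n_t] = \widebar\delay$, and substituting concludes. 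The Lipschitz-continuity manipulations are routine once the earlier results of the chapter are in hand; the care needed is in the bookkeeping of the effective delay and of its mean $\widebar\delay$.
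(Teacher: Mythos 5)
Your proposal is correct and follows essentially the same route as the paper's proof: the performance difference lemma on the augmented-state MDP of \Cref{pp:stoch_dmdp_is_mdp}, the $L_{\widetilde Q}$-Lipschitz bound via \Cref{pp:expected_q_bound}, the $L_\pi$-Lipschitz reduction to the Wasserstein distance between the beliefs $\augmentedbelief[n]$ and $\augmentedbelief$, the triangle inequality with \Cref{pp:tlc_delay} giving $(n+\delay)L_T$, and finally replacing the expected effective delay by $\widebar\delay$. Your closing step identifying $\expectedvalue_{x'\sim \delayeddiscountedstateoccupancydistribution[x][\widecheck\pi]}[n']$ with $\widebar\delay$ is handled at the same level of detail as in the paper (which simply integrates out the other components of the augmented state), so nothing essential is missing.
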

\begin{proof}
    From \Cref{pp:stoch_dmdp_is_mdp}, we know that the stochastically delayed \gls{dmdp} is an \gls{mdp}. Therefore, the regular performance difference lemma can be applied and yields, for $x\in\widecheck{\augmentedstatespace}$\footnote{Note that in this proof, we overwrite the notation $\delayeddiscountedstateoccupancydistribution[x][\delayedpolicy]$ that was defined for constantly delayed \gls{dmdp} over the space $\augmentedstatespace$ and use the space $\widecheck\augmentedstatespace$ as its domain instead.},
    \begin{align*}
        V^{\widecheck{\pi}}(x)- V^{\delayedpolicy}(x) 
        = \frac{1}{1-\gamma} \expectedvalue_{x'\sim \delayeddiscountedstateoccupancydistribution[x][\widecheck\pi]} \left[\underbrace{V^{\delayedpolicy}(x) -  \expectedvalue_{a\sim \widecheck\pi(\cdot\vert x')}[Q^{\delayedpolicy}(x,a)]}_{A(x')}\right].
    \end{align*}
    As in \Cref{th:perf_diff_bound}, we can derive the following bound by application of \Cref{pp:expected_q_bound}, where we note $x'=(s',a_1',\dots,a_\delaymax',n')$, 
    \begin{align}
        A(x') 
        &\leq L_{\widetilde Q} \wassersteindistance(\widecheck\pi(\cdot\vert x')
        \Vert \delayedpolicy(\cdot\vert x'))
        \nonumber\\
        &= L_{\widetilde Q} \sup_{\left\Vert f\right\Vert_L\leq 1}\left\vert\int_{\actionspace} f(a)(\widecheck\pi(a\vert x')-\delayedpolicy(a\vert x') \de a\right\vert 
        \nonumber\\
        &= L_{\widetilde Q} \sup_{\left\Vert f\right\Vert_L\leq 1}\left\vert\int_{\actionspace}\int_{\statespace} f(a)\pi(a\vert s)(\augmentedbelief[n'](s\vert x')-\augmentedbelief(s\vert x'))\de s  \de a\right\vert
        \nonumber\\
        &\leq L_{\widetilde Q} L_\pi \wassersteindistance(\augmentedbelief[n'](\cdot\vert x')
        \Vert \augmentedbelief(\cdot\vert x'))
        \nonumber\\
        &\leq L_{\widetilde Q} L_\pi L_T (n'+\delay),\label{eq:l_t_lip}
    \end{align}
    where we have set $\augmentedbelief(s\vert x')=\augmentedbelief(s\vert s',a_1',\dots,a_\delaymax')$. We have used the fact that the undelayed policy $\pi$ is $L_{\pi}$-LC and \Cref{eq:l_t_lip} follows easily by replicating the proof of \Cref{lem:bound_sigma_tlc}. 
    Therefore,
    \begin{align*}
        V^{\widecheck{\pi}}(x)- V^{\delayedpolicy}(x) 
        &\leq \frac{L_Q L_\pi L_T }{1-\gamma} \left(\delay +\expectedvalue_{x'\sim \delayeddiscountedstateoccupancydistribution[x][\widecheck\pi]}[n']\right)
        \\
        &= \frac{L_Q L_\pi L_T }{1-\gamma} \left(\delay +\widebar\delay\right),
    \end{align*}
    where the last equality holds by integrating out the terms of the augmented state but $n'$.
\end{proof}

Note the importance of the time Lipschitzness in the proof. 
Without this assumption, the dependence on $x'$ of $\wassersteindistance(\augmentedbelief[n'](\cdot\vert x')\Vert \augmentedbelief(\cdot\vert x'))$ remains, and it is not clear what its expectation under $\delayeddiscountedstateoccupancydistribution[x][\widecheck\pi]$ is.
There is indeed a non-obvious dependence between $n$ and the rest of the element in the augmented state. 
For example, it is possible that some states of $\statespace$ are visited more often when the delay is smaller. 
Note also that the assumption of \Cref{th:perf_diff_bound} of $\statespace\subseteq\realnumbers^n$ equipped with the Euclidean norm is of little help here since $\wassersteindistance(\augmentedbelief[n'](\cdot\vert x')\Vert \augmentedbelief(\cdot\vert x'))$  potentially compares trajectories of different length ($n'\neq\delay$).
The fact that DIDA is blind to the true delay in this framework constitutes a case of anonymous delay (see \Cref{subsec:anonymous_delay_related}). 
To the best of our knowledge, this is the first result on anonymous state observation delay in \gls{rl}.

\section{Experimental Evaluation}
\label{sec:exp_imitation}
In this section, we provide experiments to evaluate DIDA in a wide range of tasks.
We first describe the setting of the experiments in \Cref{subsec:setting_exp_imitation} before presenting and discussing the results in \Cref{subsec:results_imitation}.

\subsection{Setting}
\label{subsec:setting_exp_imitation}

For all the tasks except for Trading, we run and average the results for DIDA and the baselines for 10 seeds.
For Trading, we give the details below.

\subsubsection{Tasks}
\label{subsubsec:tasks_imitation}

\noindent\textbf{Pendulum.}\indent As for the previous chapter, we consider the Pendulum environment for its sensitivity to the delay. 
We consider constant delays in the set $\{3, 5, 10, 15, 20\}$.
For this task, we performed 500 epochs of 5,000 steps each, for a total of 2.5 million steps.

\noindent\textbf{Stochastic Pendulum.}\indent We include the stochastic versions of the Pendulum defined in the previous chapter in order to evaluate DIDA on stochastic environments.
We also consider a constant delay of 5 and we perform 500 epochs of 5,000 steps each, for a total of 2.5 million steps.

\noindent\textbf{Non-integer Delayed Pendulum.}\indent In order to test DIDA on non-integer delays, we propose the following setting. 
First, we learn an undelayed expert with persistence 2, where the persistence represents the number of times the agent repeats an action before being allowed to select a new action \cite{metelli2020control}. 
Basically, with persistence 2, the control frequency of the agent is halved. 
DIDA will then act in states with time steps in the set $(2t)_{t\in\naturalnumbers}$--as does the expert--while the state that DIDA will observe has time steps in $(2t+1)_{t\in\naturalnumbers}$.

\noindent\textbf{Mujoco.}\indent Similarly, we also include the four mujoco environments that are Walker2d, HalfCheetah, Reacher, and Swimmer.
For these tasks, we consider a constant delay of 5 and perform 1,000 epochs of 5,000 steps each, for a total of 5 million steps.

\noindent\textbf{Trading.}\indent In this task, the agent trades the EUR-USD (\texteuro/\$) currency pair on the \gls{fx} at a control frequency of 10 minutes for the period 2016-2019. 
Following the framework of \cite{bisi2020foreign} and \cite{riva2021learning}, it can either \emph{buy}, \emph{sell} or stay \emph{flat} with respect to a fixed amount of USD.
We do not consider trading fees; yet, we take into account the bid-ask spread, which, in practice, has the same effect on the reward as a fee. 
In this framework, we consider a constant action execution delay of 50 seconds.
This results in a non-integer delay.
In this task, we use the years 2016-2017 for the training of the different approaches, 2018 for the validation of the hyperparameters and 2019 for the test. 
A validation set is necessary because the dataset consists of historical data;
the task is a \emph{batch \gls{rl}} or \emph{offline  \gls{rl}} task,  and the approach could overfit the training set. 

\noindent\textbf{Learning Multiple Delays at Once.}\indent For the Pendulum and mujoco environments, we also evaluate the performance of DIDA when trained on higher values of the delay and tested on smaller ones. 
We consider the case of training DIDA for a delay of 10 and testing on smaller delays in $[\![1,9]\!]$ for the Pendulum and similarly for a training delay of 5 for mujoco tasks.

\subsubsection{Baselines}
We consider the baselines used in \Cref{subsubsec:baselines} with the same hyperparameters. 
Namely, these baselines are memoryless \gls{trpo} (M-TRPO), the augmented state \gls{trpo} (A-TRPO), augmented state \gls{sac} (A-SAC),  memoryless \gls{sac} (M-SAC), SARSA and dSARSA with $\lambda=0.9$.
We include the previous approaches L2-TRPO and D-TRPO as well.

We also consider M-DIDA (see~\Cref{subsubsec:dida_no_undelayed_env}) as a baseline to evaluate the necessity to have access to an undelayed environment.

For the Trading task, the undelayed expert used to train DIDA has been selected by validation of its hyperparameters on 2018.
It is also possible to do a validation of its hyperparameters on the delayed dataset for 2018 to select an expert who can better generalise to delays, albeit trained on undelayed data.
We include this baseline in our experiments under the name of ``delayed expert''.

For the Pendulum, we also include BC-DIDA which corresponds to DIDA but where behavioural cloning \cite{bain1995framework} is substituted for \textsc{DAgger} as the imitation algorithm. 
This baseline will be used to study the impact of the choice of an imitation algorithm.

\subsubsection{Setting for DIDA}
\label{subsubsection:our_approach_dida}
In all the experiments, for a fair sample efficiency comparison with other baselines, we include the number of steps that it takes to train the undelayed expert and shift the starting number of samples of DIDA accordingly. 
It is indicated by a vertical dotted line in the figures.
Concerning the undelayed expert itself, we have seen that a smoother expert is beneficial to the performance bound of the imitated delayed policy in \Cref{th:perf_diff_bound}. 
Thus, we consider \gls{sac} as the expert in all the experiments but for the Trading one.  
Indeed, studies on smooth policies \cite{mysore2021regularizing} suggest that the entropy regularised framework of \gls{sac} has a practical effect of learning smoother policies without explicitly optimising for the smoothness of the policy.
For DIDA's policy itself,  we use a simple feed-forward neural network.

For the Trading environment, we leverage the knowledge of an expert trained in years 2016-2017 by Fitted Q-Iteration (FQI)~\cite{ernst2005tree} using XGBoost~\cite{chen2016xgboost} as a regressor for the $Q$ function as in \cite{riva2022addressing}.
In this task, to better match the tree-based approach of the expert, we use Extra Trees~\cite{geurts2006extremely} as a policy for DIDA.
This environment is highly stochastic, and the expert, trained with different seeds, can have very different performances. 
We, therefore, consider 4 experts trained with a different seed but with the same configuration. 
For each expert, we repeat the training of DIDA with 5 seeds.
This sums to a total of 20 trials of DIDA on which the mean and standard deviation of the results are computed.

For the $\beta$-routine, as suggested by \cite{ross2011reduction}, we set $\beta_1=1, \beta_{i\ge 2}=0$. 
This means that the undelayed expert is used to sample from the environment only at the first iteration of DIDA.

As it can be seen in \Cref{algo:dida}, at each iteration, the buffer of examples for DIDA's training is augmented with the last samples. To prevent the memory from growing infinitely, we use a maximum buffer size of 10 iterations. 
This implies that newer samples overwrite the oldest ones when the buffer is full, in a first-in first-out way.

\subsection{Results}
\label{subsec:results_imitation}

\noindent\textbf{Pendulum.}\indent The returns for the different approaches and for different values of the delay are provided in \Cref{fig:pendulum_varying_delay_imitation}. 
For small delays,  DIDA, BC-DIDA, and M-DIDA achieve comparable returns to A-SAC and slightly better than D-TRPO and L2-TRPO. 
An interesting effect is observed when the delay increases.
Clearly, DIDA and M-DIDA are more robust to an increase in the delay compared with all the other baselines and compared to BC-DIDA whose performance drops sharply.
It is interesting to note the similar performance between DIDA and M-DIDA, which suggests that sampling from an undelayed environment is not a critical element of the algorithm. 
This is confirmed in \Cref{fig:pendulum_imitation} where the result focuses on delay 5, showing the performance as a function of the number of samples. 
DIDA and M-DIDA have very similar learning speeds in terms of sample needed, while BC-DIDA is slightly slower, and so is A-SAC.
The difference in terms of performance between the expert policy and DIDA results from the delay initialization shift problem mentioned in \Cref{subsec:delay_init}. 
In the first steps, in order to initialise the delayed environment in practice, actions are sampled uniformly at random from the environment and can force the agent to start acting while in an unadvantageous state compared to the true initial state.

\begin{figure}[t]
    \centering
    \includegraphics[labelimitation=pendulumvaryingdelay]{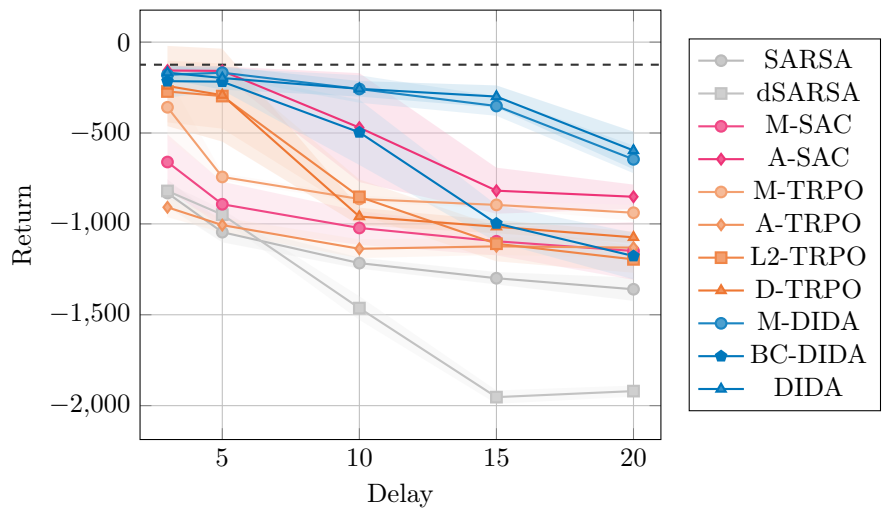}
    \caption{Mean return of DIDA and baselines for the Pendulum environment for different values of the delay. 
    The horizontal dashed line corresponds to the undelayed expert's (\gls{sac}) performance. 
    Shaded areas represent one standard deviation and the results are obtained for 10 seeds.}
    \label{fig:pendulum_varying_delay_imitation}
\end{figure}

\begin{figure}[t]
    \centering
    \includegraphics[labelimitation=delay5pendulumimitation]{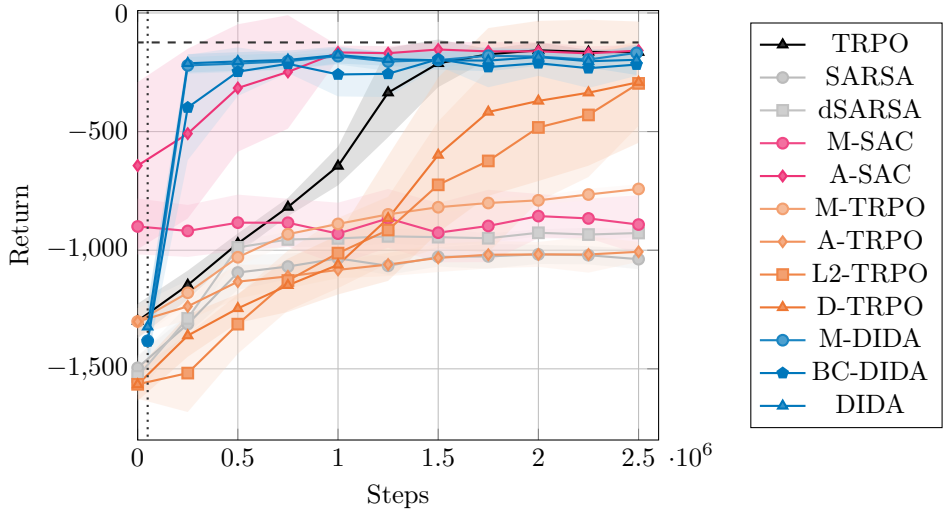}
    \caption{Return obtained by DIDA and benchmarks on the Pendulum environment for a delay of 5. 
    The horizontal dashed line corresponds to the expert's return and the vertical dotted line indicates the number of steps used for training the expert. 
    Shaded areas represent one standard deviation and the results are obtained for 10 seeds.}
    \label{fig:pendulum_imitation}
\end{figure}

\noindent\textbf{Stochastic Pendulum.}\indent We provide the results for these tasks in \Cref{fig:noise_pendulum_imitation}. 
As in the previous chapter, the noises are divided into groups following \Cref{tab:noises}.
For the figure on the top left, all the noises are based on beta distributions. 
On this task, DIDA achieves a much better final performance than the baselines but also converges much faster to these performances.
For the second group in the top right figure, the baselines achieve performance closer to DIDA yet DIDA remains significantly above. 
Finally, for the third group in the bottom left figure, DIDA again achieves the best performance. 
For the LogNormal(1.0) noise the performance significantly drops even for DIDA, this is expected as the noise is strongly asymmetric, which makes the belief distribution more challenging for the algorithms. 

\begin{figure}[t]
    \centering
    \includegraphics[labelimitation=pendulumnoiseimitation]{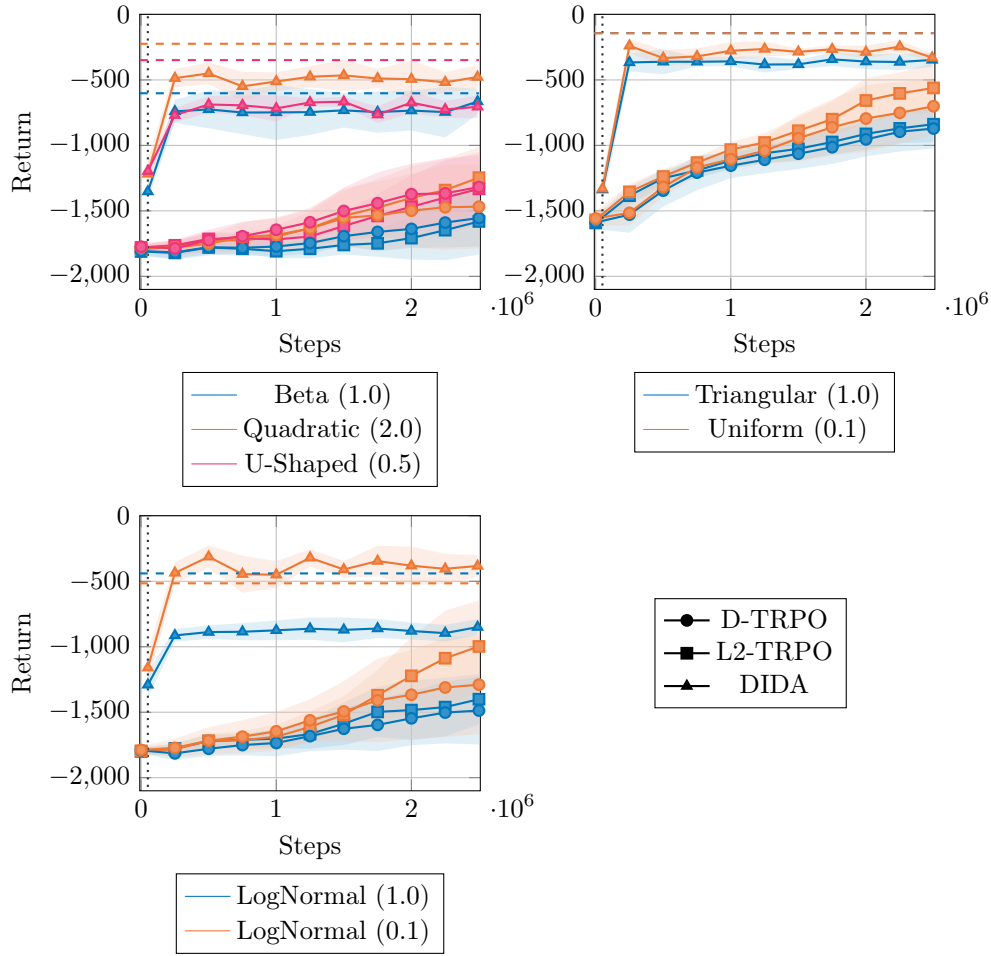}
    \caption{Return obtained by DIDA and baselines for the Pendulum environment to which are added different noises as explained in \Cref{tab:noises}. 
    The colour indicates noise and the marker indicates the algorithm.
    The horizontal dashed lines indicate the experts' returns for each type of noise and the vertical dotted line indicated the number of steps used for training the expert.
    In the top right figure, the experts' performances are confounded.
    The shaded area indicates one standard deviation and the results are obtained for 10 seeds.}
    \label{fig:noise_pendulum_imitation}
\end{figure}

\noindent\textbf{Non-integer Delayed Pendulum.}\indent The results are provided in \Cref{fig:pendulum_non_int_imitation}.
On this task, DIDA's performance is slightly lower than its performance on integer delays reported in \Cref{fig:pendulum_varying_delay_imitation}. 
There are two explanations for it. 
First, the expert is trained with persistence 2 and has a slightly lower return.
Second, here DIDA is persisting its actions for 2 steps and the delay is therefore doubling in value as well. 
A delay of one in \Cref{fig:pendulum_non_int_imitation} corresponds to a delay of two in \Cref{fig:pendulum_varying_delay_imitation}. 
Anyhow, the performances of DIDA are still satisfactory and show that it can efficiently adapt to non-integer delays.

\begin{figure}[t]
    \centering
    \includegraphics[labelimitation=pendulumnonint]{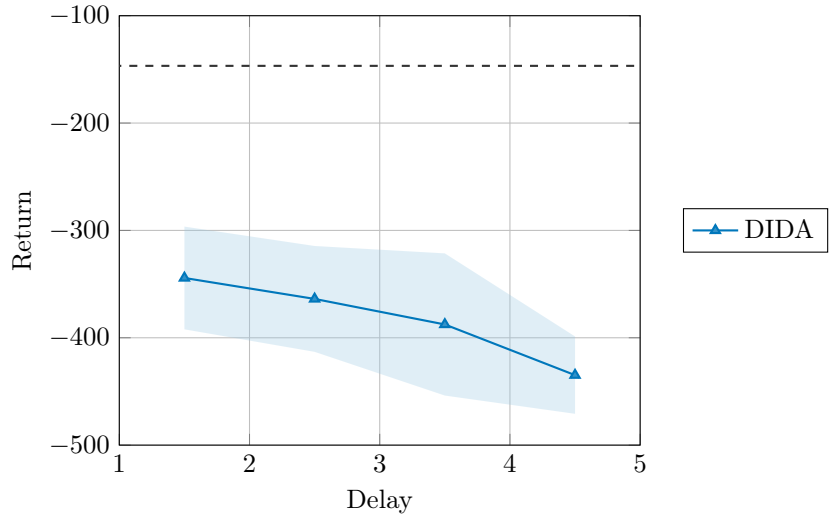}
    \caption{Mean return of DIDA for the Pendulum environment for different values of a non-integer delay. 
    The horizontal dashed line indicates the undelayed expert's (\gls{sac}) performance. 
    Shaded areas represent one standard deviation and the results are obtained for 10 seeds.}
    \label{fig:pendulum_non_int_imitation}
\end{figure}

\noindent\textbf{Mujoco.}\indent The results are presented in \Cref{fig:mujoco_imitation}. 
Here again, DIDA and M-DIDA achieve similar performances while outperforming all the baselines.  
Partly due to the delay's initialization shift, in HalfCheetah and Reacher DIDA performs much worse than the undelayed expert.
More surprisingly, in Swimmer, it performs slightly better.
This is an unexpected implication of the delay's initialization shift.
In fact, in some environments, the initial random sequence of actions could place the agent in a favourable state. 
In HalfCheetah, the random actions sometimes place the agent head-down and the latter thus has to first get back on its feet before moving. In Swimmer, however, the random actions give some initial speed to the agent.

\begin{figure}[t]
    \centering
    \includegraphics[labelimitation=delay5mujocoimitation]{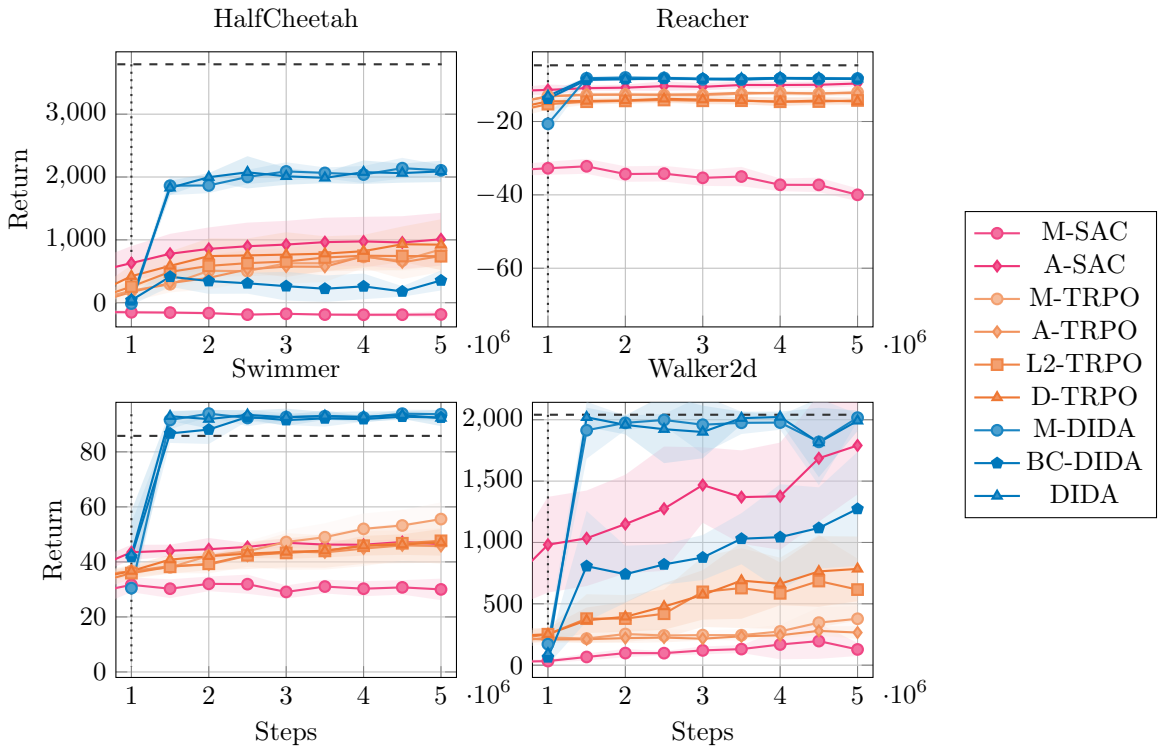}
    \caption{Return obtained by DIDA and benchmarks for various Mujoco environments. 
    The horizontal dashed line corresponds to the expert's return and the vertical dotted line indicates the number of steps used for training the expert. 
    Shaded areas represent one standard deviation and the results are obtained for 10 seeds.}
    \label{fig:mujoco_imitation}
\end{figure}

\noindent\textbf{Trading.}\indent For this task, we report the results obtained in the training set (2019) in \Cref{fig:trading_dida}.
Given the presence of the bid-ask spread, which has a decisive impact on the return, obtaining only a positive return is an achievement. 
DIDA not only achieves a positive return but also clearly outperforms the delayed expert.
A surprising point is that DIDA initially gets a better return than the expert itself. 
The policy is in fact slightly different.
In \Cref{fig:allocation}, we analyse the policy learnt by DIDA by representing the patterns of its actions compared to that of the expert.  
Specifically, this figure illustrates the difficulty of the Batch RL setting. 
The policy learnt by DIDA at the 10$^{\text{th}}$ iteration is very similar to the expert on the training set but starts shifting away from the testing one.

\begin{figure}[t]
    \centering
    \includegraphics[labelimitationtrading=didatradingeurusd]{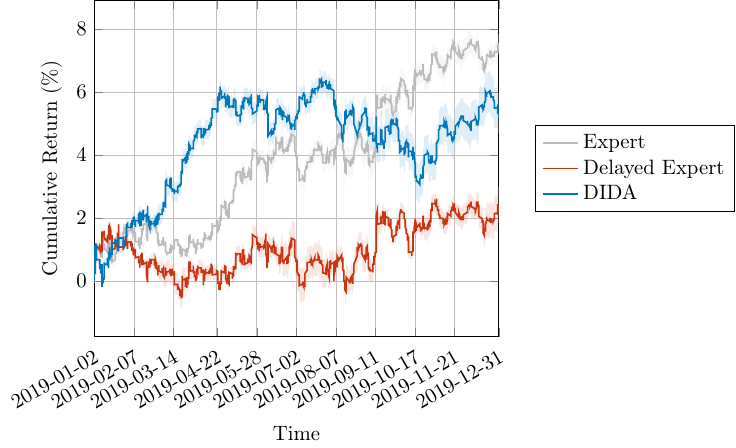}
    \caption{Cumulative return, as a percentage of the invested amount, for the trading of the EUR-USD pair in 2019. 
    DIDA is compared against the expert on which it has been trained and to an undelayed expert whose hyperparameters have been selected on a delayed dataset (called ``delayed expert'').}
    \label{fig:trading_dida}
\end{figure}

\begin{figure*}[t]
\centering
    \begin{subfigure}{0.33\textwidth}
        \centering
        \includegraphics[width=\linewidth]{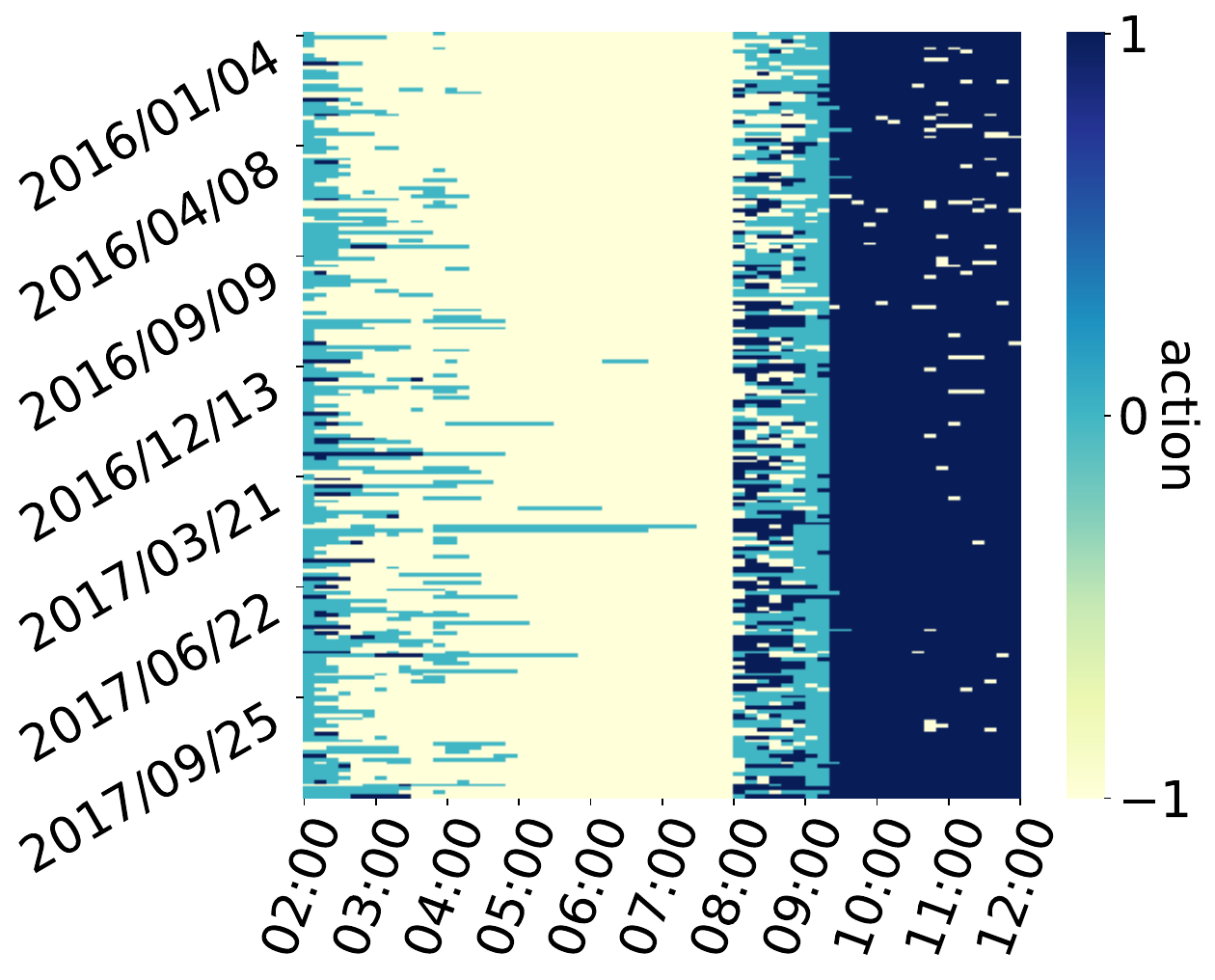}
        \caption{Expert, 2016-2017.}
        \label{fig:test_expert_fx_allocation}
    \end{subfigure}%
\hfill
    \begin{subfigure}{0.33\textwidth}
        \centering
        \includegraphics[width=\linewidth]{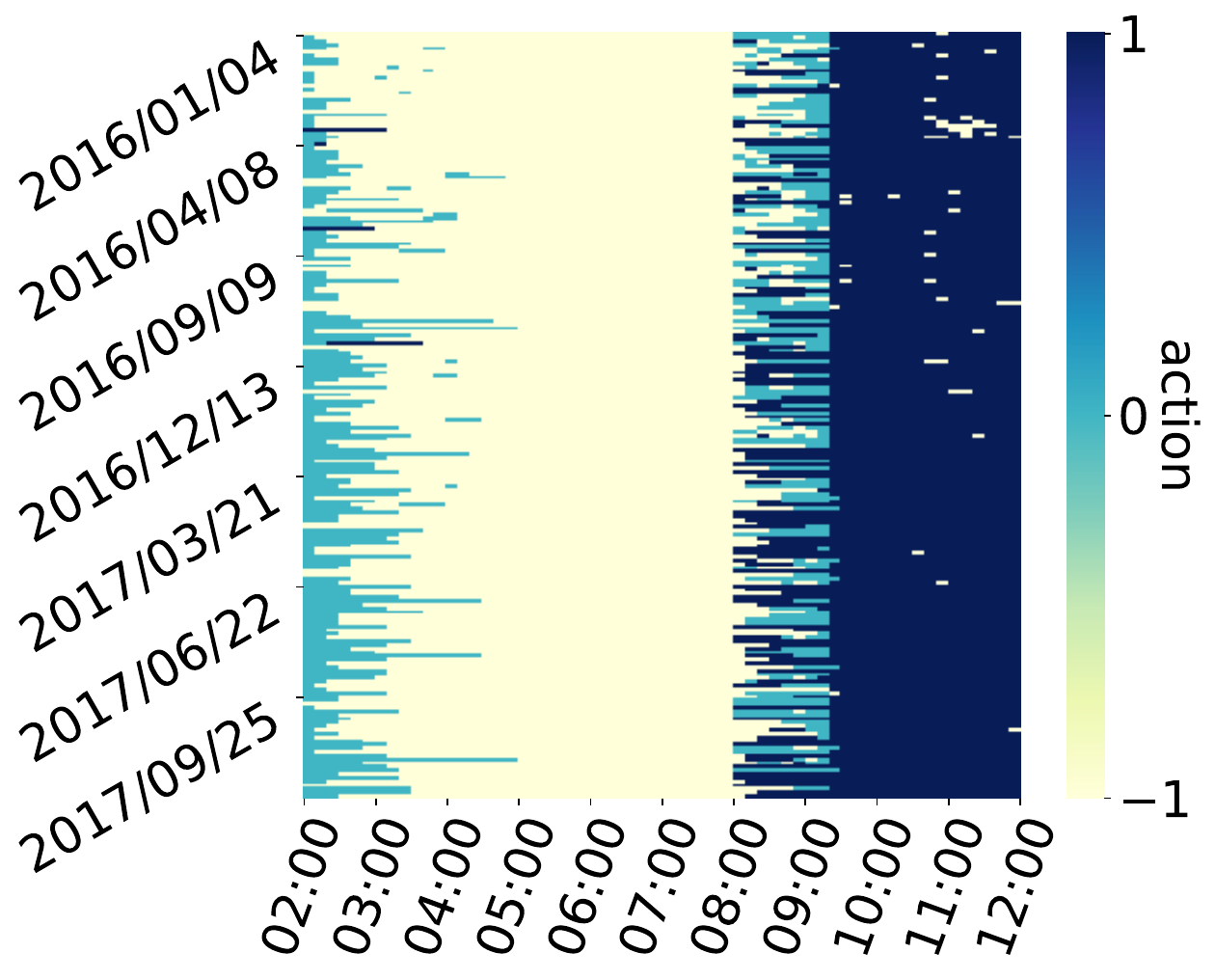}
        \caption{DIDA, 2016-2017.}
        \label{fig:train_dida_fx_allocation}
    \end{subfigure}%
    \begin{subfigure}{0.33\textwidth}
        \centering
        \includegraphics[width=\linewidth]{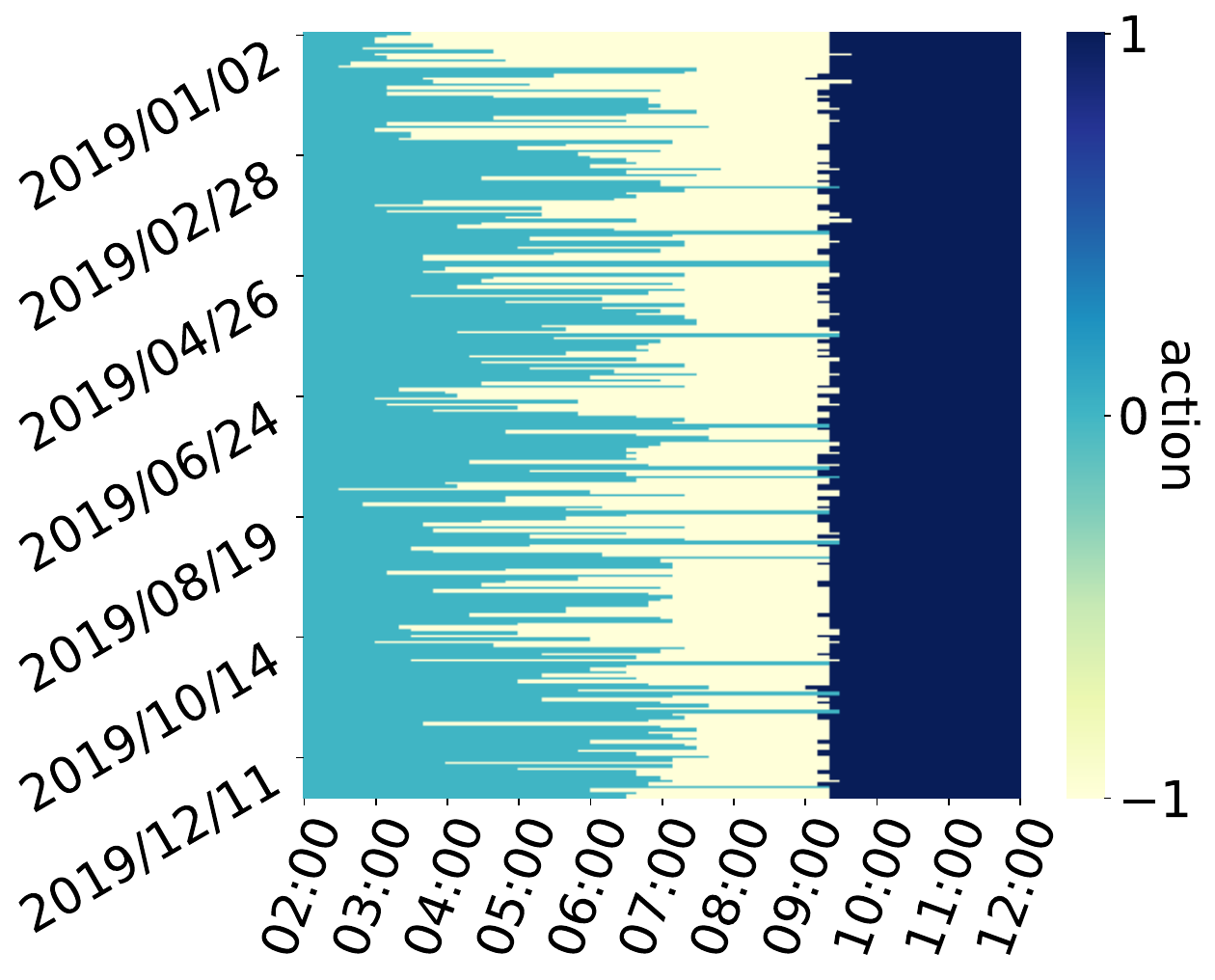}
        \caption{DIDA, 2019.}
        \label{fig:test_dida_fx_allocation}
    \end{subfigure}%
\caption{Heatmaps representing the patterns in the actions selected by the expert and DIDA on the Trading task.
The expert action patterns (left) are compared to the one of DIDA at the 10th iteration, either on the training set (2016-2017, middle) or on the testing set (2019, right).
The heatmaps show the action (colour) selected for a given minute (column) of a given day (row).}
\label{fig:allocation}
\end{figure*}

\noindent\textbf{Learning Multiple Delays at Once.}\indent Here, we explore the idea presented in \Cref{subsubsec:multi_delay_once_imitation} to apply the policy learnt by DIDA on some delay $\delay>0$ to a smaller delay $0<\delay'<\delay$. 
In \Cref{fig:pendulum_multi_delay_imitation}, we provide the results obtained for the Pendulum task where DIDA's policy is trained on delay $\delay=10$.
In \Cref{fig:mujoco_multi_delay_imitation}, we provide the results obtained for the mujoco tasks where DIDA's policy is trained on delay $\delay=5$.
Clearly and as expected, the performance obtained for smaller delays is similar to the performance reached by DIDA for the delay used during training.

\begin{figure}[t]
    \centering
    \includegraphics[labelimitation=multipledelayspendulumimitation]{img/thesis_plots_imitation.pdf}
    \caption{Return obtained by a DIDA trained with delay 10 and tested on smaller delays on Pendulum.}
    \label{fig:pendulum_multi_delay_imitation}
\end{figure}

\begin{figure}[t]
    \centering
    \includegraphics[labelimitation=multipledelaysmujocoimitation]{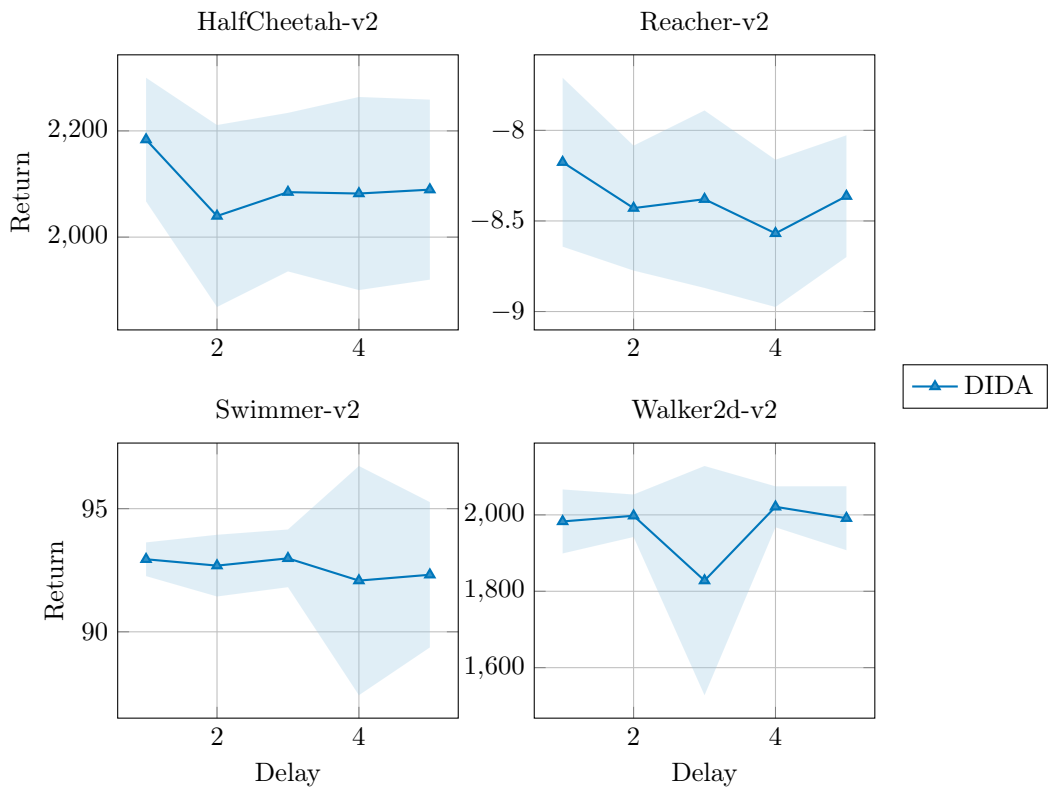}
    \caption{Return obtained by a DIDA trained with delay 5 and tested on smaller delays for various Mujoco environments.}
    \label{fig:mujoco_multi_delay_imitation}
\end{figure}
\section{Conclusion}
In this chapter, we have designed a simple yet efficient solution to the problem of constant delay in the observation of the state or in the execution of the action.
This solution consists in first learning a policy in the undelayed environment before imitating this policy in the delayed environment.
We proved theoretical bounds on the performance of such an imitated policy with respect to the undelayed expert when the environment demonstrates smoothness. 
Using these theoretical insights, we designed DIDA, an algorithm that follows the above principle, using \textsc{DAgger} as its imitation algorithm. 
The experimental results demonstrate the efficiency of DIDA compared to many baselines.
DIDA almost always achieves superior returns while requiring fewer samples.
Interestingly, we have shown how this algorithm can be modified to be used in three more scenarios.
First, when the delay is constant but not an integer, we provide a simple adjustment to DIDA that shows good empirical results. 
Second, while DIDA is trained on constant delays, we have provided theoretical guarantees when tested on a stochastic delay instead.
Third, we have shown that the simple ``trick'' of \Cref{chap:belief_based} can be applied to DIDA to leverage a policy learnt on a certain delay and efficiently apply it on smaller delays. 
As a future research direction, the current algorithm could be modified to be trained on stochastic delays and learn the policy of \Cref{eq:belief_pol_stoch}. 
In this way, it would obtain better theoretical guarantees than the one provided in \Cref{thm:stoch_mdp_dida_bound}.
Another direction would be to tackle DIDA's main drawback, the necessity of the undelayed expert. 
A potential idea could be to learn an undelayed expert offline from a dataset collected by a delayed policy. 
This would imply that no sample is ever needed from an undelayed environment.
\cleardoublepage
\chapter{Non-stationary Reinforcement Learning for Delays}
\label{chap:lifelong}

\section{Introduction}
In this chapter, we explore a memoryless approach to the problem of constant delay in state observation or action execution.
Compared to the augmented approach, in the memoryless one, the process loses its Markovian property; the dynamics depend on an unobserved variable, the sequence of actions. 
This is akin to processes that are non-stationary from the agent's point of view, due to partial observability, while the underlying process itself is stationary \cite[Section~2.1]{khetarpal2020towards}. 
Therefore, we take a step in this direction and design a non-stationary policy which can account for the unobserved part of the state and adapt its behaviour throughout the episode.
This approach is also motivated by the fact that there exists \glspl{dmdp} where the best stationary memoryless policy is sub-optimal \cite[Proposition~5.2]{derman2021acting} compared to the best memoryless policy.
Moreover, the optimal memoryless policy is guaranteed to be in the set of non-stationary policies using only the last observed state as input and not its preceding history \cite[Theorem~5.1]{derman2021acting}. 
This set of policies is referred to as ``Markovian'' in \cite{derman2021acting} even though they are not aware of the current unobserved state but only use the last observed state.

Regarding the non-stationarity itself, dynamics that vary through time are common in many \gls{rl} settings such as lifelong \gls{rl}, continual \gls{rl}, non-stationary \gls{rl} and transfer learning. 
In fact, these settings are often confused together. 
To better grasp their differences, we will quickly present them following the taxonomy of \cite{khetarpal2020towards}. 
\\
\noindent\textbf{Non-stationary \gls{rl} \cite{bowerman1974nonstationary}.}~\indent It can be thought of as an umbrella name for many sub-problems as it is not clear from the term what exact type of non-stationarity is considered.
The following frameworks can be thought of as sub-fields of non-stationary \gls{rl}.
\\
\noindent\textbf{Transfer learning \cite{taylor2009transfer}.}~\indent In transfer learning, an agent is trained on a set of tasks and is then placed in a new set of tasks where it should learn as fast as possible, leveraging previous knowledge.
Non-stationarity usually arises between episodes as the task changes from one episode to the next.
\\ 
\noindent\textbf{Lifelong learning \cite{chen2018lifelong}.}~\indent Used interchangeably with \emph{continual learning}, it considers the interaction between an agent and its environment in a never-ending trajectory. 
The agent cannot restart the environment, nor is it certain that it will experience some parts of the environment ever again. 
These peculiarities come with many problems.
First, the agent must learn in an online manner, using the information gathered so far to optimise some measure of the performance in the future. 
Second, because the agent is learning and its policy changes, the distribution of visited states will be non-stationary. 
On top of that, the environment itself might change. 
These are, therefore, two sources of non-stationarity.
Third, the agent should use its memory of the past intelligently. 
Discarding useless information for computational and memory purposes but remembering useful policies if a similar situation happens. 
This is a phenomenon called \emph{catastrophic forgetting}~\cite{mccloskey1989catastrophic,french1999catastrophic}.

Finally, \cite{khetarpal2020towards} mention other related areas such as domain adaptation, multi-task \gls{rl} or meta-learning, but they are not strictly non-stationary processes and are thus outside the scope of this chapter. 
We give a more in-depth review of the types of non-stationarity in \Cref{subsec:nature_ns} and related works in \Cref{subsec:related_worls_lifelong}.

In the case of delay, the non-stationary arises inside an episode, and the same non-stationarity is repeated across episodes.
Moreover, the non-stationarity is likely to be smooth as the unobserved sequence of actions slowly evolves at each step.
We will therefore focus on the setting of lifelong learning with \emph{smooth} non-stationarity.
These assumptions are not limiting as this setting is very common. 
Consider financial trading where the non-stationarity is clear, and the agent cannot, of course, restart the environment.
Moreover, as we have already said, delays are ubiquitous in finance, and, as we will see in the experiments, 
trading is an interesting benchmark for delays.

To address the problem of lifelong learning, we take a parameter-based approach (\Cref{sec:lifelong_param_po});
a hyper-policy selects the policy to be queried at time $t$. 
This allows us to divide the problem into first learning the dynamics of the non-stationarity at the hyper-policy level and second learning a rule for the action selection at the policy level.
This scheme is interesting for an application to delays, as the policy depends only on the last observed state and therefore belongs to the set of Markovian policies (as intended by \cite[Theorem~5.1]{derman2021acting}) where the optimal policy can be found.
We train this hyper-policy to optimise for the future return, which is estimated through \glsfirst{mis} based on past data (\Cref{subsec:mis_lifelong}).
An analysis of the bias of this estimator is provided in \Cref{subsec:bias_analysis}.
As we shall see later, directly optimising this objective might be harmful.
Therefore, it is augmented with two additional terms.
First, an estimate of past performance is added.
Although optimising for past performance is clearly not the final intention, this term ensures that the hyper-policy remembers behaviours learnt on past samples, thus mitigating catastrophic forgetting. 
Finally, in order to prevent overfitting, a penalty on the hyper-policy for excessive non-stationarity is added. 
A highly non-stationary hyper-policy would likely not generalise well in the future.
Therefore, as more non-stationary hyper-policy would increase the variance of the \gls{mis} estimators, we add a penalisation for this variance.
To avoid estimating the variance, which would imply adding more uncertainty to the objective, we derive a differentiable upper bound of it (\Cref{subsec:variance_analysis}).
This bound involves a divergence between past and future hyper-policies.
We propose a policy gradient optimisation of this objective, which we name POLIS, for Policy Optimisation in Lifelong learning through Importance Sampling \Cref{subsec:surrogate_objective}.

After the derivation of POLIS in the general case of lifelong learning, we focus on the delay and explain how it applies to \gls{dmdp} in \Cref{subsec:extension_delay_polis}.

To conclude the chapter, we provide an experimental evaluation of POLIS for both undelayed and delayed tasks in \Cref{sec:experiments_lifelong}.
These experiments show the ability of POLIS to learn about the underlying non-stationary process and leverage it for higher returns in the future, including when this non-stationarity is due to a delay.

\section{Non-stationary Reinforcement Learning}

In this section, we formalise the notion of non-stationarity. 
We then present related approaches to dealing with it, particularly in the context of lifelong learning.

\subsection{The Non-stationarity in Reinforcement Learning}
\label{subsec:nature_ns}
\subsubsection{Nature of Non-stationarity}
In \gls{rl}, 4 elements of the \gls{mdp} out of the five defining the process can induce non-stationarity. 
Obviously, the reward and transition functions might depend on time. 
More occasionally, the state space or the action space itself can vary over time. 
A further difference can be made between processes whose non-stationarity is present in between episodes and those where it can happen inside an episode. 
We refer to the first as \emph{inter-episode} non-stationarity. 
It can be thought of as a set of tasks from which a task is selected at each episode. 
We refer to the second type as \emph{intra-episode} non-stationarity. 
In this case, the notion of task is less clear as the dynamics of the \gls{mdp}  can change at each decision step. 
Therefore, the estimation and analysis of past data must be even more careful.
Lastly, it is usually considered that the non-stationarity can be either \emph{smooth} when the dynamics evolve with some regularity through time or \emph{abrupt} when the dynamics typically change less often but with a clearer shift in behaviour. 

Concerning the peculiar case of delay-induced non-stationarity, due to the previously mentioned partial observability induced by the memoryless approach, the transition dynamics and the reward will be the
source of non-stationarity. 
Of course, the state and action spaces will remain stationary. 
The non-stationarity is by nature intra-episode, as the dynamics will depend on the unobserved buffer of actions inside the augmented state.
Finally, the non-stationarity will be smooth, as the augmented state's action buffer shares many actions with the following augmented state.
Note that, as the process is divided into discrete time steps, even smooth non-stationarity can be seen as abrupt in between time steps. 
We will explain more formally how we define the smoothness of a discrete process in \Cref{subsec:bias_analysis}.

\subsubsection{Drivers of Non-stationarity}
The non-stationarity mentioned in the previous section might have different causes. We present three of them, following the taxonomy of \cite{khetarpal2020towards}.\\
\noindent\textbf{Multi-task Setting.}\indent Arguably the most common setting in the non-stationary \gls{rl} literature, this setting considers a set $(\markovdecisionprocess_i)_{i\in\naturalnumbers}$ of stationary \glspl{mdp}--or tasks--where the environment might regularly switch from one task to another during the learning and testing phases.
\\ 
\noindent\textbf{Passive Non-stationarity.}\indent This setting considers a type non-stationarity that occurs irrespective of the agent's actions.
\\
\noindent\textbf{Active Non-stationarity.}\indent Unlike in the previous setting, here, the agent may affect the non-stationarity of the environment.

The delay induces an active non-stationarity since it is the agent that chooses the actions in the replay buffer, and it is the partial knowledge induced by ignoring these actions that cause the non-stationarity.

\subsubsection{Formalisation of Lifelong Reinforcement Learning}
In this sub-section, we formalise the framework of lifelong \gls{rl} that we will be interested in. 
It is based on a non-stationary \gls{mdp} $\markovdecisionprocess$ with transition functions $(\transitionfunction_t)_{t \in \naturalnumbers}$ and reward functions 
$(\rewardfunction_t)_{t \in \naturalnumbers}$ of expected values $(\expectedrewardfunction_t)_{t \in \naturalnumbers}$.
These functions define the transition and the reward at each decision step $t \in \naturalnumbers$. 
We assume that $\lVert r_t\rVert_{\infty} \le \maximumreward < \infty $ for any $t\in\naturalnumbers$ and consider a discount factor $\gamma$ in $[0,1]$.
We then define a non-stationary policy $\pi = (\pi_t)_{t \in \naturalnumbers}$ that defines the policy that the agent follows at each decision step $t\in\naturalnumbers$. 

The goal in lifelong \gls{rl} differs from the one in traditional \gls{rl} since the underlying process cannot be reinitialised.
The environment is initialised once and for all, and the agent cannot collect more than a single trajectory of experience. 
For example, consider the problem of trading, where it is clear that the exact same market conditions will never be met twice. 
Without an accurate market simulator, the agent cannot go back in time and try another strategy. 
On the contrary, an example of a traditional \gls{rl} setting is chess. 
The agent can play several games, always starting from the same initial state distribution, and it can therefore gather more information about similar environment conditions. 
In lifelong \gls{rl}, the expected future return, therefore, depends on the agent's current state. 
This leads to the following objective called \emph{$\beta$-steps ahead expected return}.
Let $T \in \naturalnumbers$ be the current time and let $\beta \in \naturalnumbers$, the {$\beta$-steps ahead expected return} is defined as:
\begin{align}\label{eq:beta_steps_ahead_return}
	\expectedreturn[T,\beta][](\pi) =\sum_{t = T+1}^{T+\beta} \widehat{\gamma}^t \expectedvalue_t^\pi [\expectedrewardfunction_{t}(s_t,a_t)],
\end{align}
where $\widehat{\gamma}^t = \gamma^{t-T-1}$ and $\mathbb{E}_t^\pi$ is the expectation under the state-action distribution induced by the non-stationary policy $\pi$ after $t$ steps.
For ease of notation, in the following, we remove the dependence on the pair $(s_t,a_t)$ of the reward from the notation. 

Finally, as for classic \glspl{mdp}, we say that a policy $\optimalpolicy$ is $\beta$-step ahead optimal at time $T$ if 
\begin{align*}
    \optimalpolicy\in \argmax_{\pi \in \Pi^{\text{HR}}} \expectedreturn[T,\beta][](\pi),
\end{align*}
where $\Pi^{\text{HR}}$ is the set of non-stationary policies as defined in \Cref{subsec:policies}.

We could draw a parallel and say that the goal of classical \gls{rl} is to maximise $\expectedreturn[0,H][]$ where $H$ is the (possibly infinite) horizon. 
However, in this case, the agent might restart the process multiple times to collect different episodes of this same process.
Instead, a lifelong learning agent is interested in maximising 
$\expectedreturn[T,\infty][]$ after having experienced exactly $T$ transitions of a single episode.

\subsection{Related Works in Reinforcement Learning for Non-stationarity}
\label{subsec:related_worls_lifelong}

Adapting \gls{rl} to non-stationary \glspl{mdp} has been widely studied in the literature~\cite{garcia2000solving, ghate2013linear, lesner2015nonstationary}. 
In order to adapt to new tasks, the agent must learn about the structure of the overall problem. 
For instance, the agent can decompose the problem into smaller sub-problems using function composition \cite{griffiths2019subtasks} or learn a low-dimensional abstract representation of the environments \cite{zhang2018decoupling, Francois-Lavet2019abstract}.
Another related approach is to focus not on task-specific dynamics but rather on task-agnostic underlying dynamics of the problem in order to learn an efficient general policy. 
This can be achieved by building auxiliary tasks such as reward prediction \cite{jaderberg2017rew-pred} or by inverse dynamics prediction \cite{Shelhamer2017dyn-pred}.

Although stationary by nature, meta \gls{rl} can be adapted to non-stationary processes. 
Solutions by meta \gls{rl} can be interesting, as they use previous experience to learn new skills efficiently.
Essentially, adapting these algorithms to non-stationary processes consists of casting back the problem to a stationary one.  
For instance, the sequence of tasks can be modelled as a Markov chain \cite{al-shedivat2018continuous}, an experience replay buffer of tasks can be used to make the process ``more stationary'' \cite{riemer2018learning} as is done usually in \gls{rl} \cite{mnih2013playing} or a latent model of the succession of tasks can be learnt \cite{poiani2021meta}.

The previous works usually apply to inter-episode non-stationarity. 
We will now study the problem of intra-episode non-stationarity, where dynamics can change inside an episode, such as in lifelong \gls{rl}. 
In finite-horizon settings, with some modifications, theoretical \gls{rl} solutions can provide guarantees even in the face of a changing environment \cite{hallak2015contextual,ortner2020uclr2}.
In the infinite-horizon setting, \ie lifelong learning, different statistical tools borrowed from the stochastic processes and time series literature can be used.
Change detection can be used under the assumption of abrupt changes in dynamics \cite{daSilva2006context, hadoux2014sequential}. 
Hidden-state models can be used to describe the sequence of tasks when it is assumed that the environment might switch between a known number of stationary dynamics \cite{choi2000hidden}. 
If the number of stationary dynamics is unknown, a solution is to learn a factored representation of the policy \cite{mendez2020lifelong}. 
In the proposed solution, the first factor of the policy is  
shared on all tasks and is therefore trained to perform well on the distribution of tasks observed thus far. The second factor is task-specific and adapts the policy to the current task. 
Thanks to this structure, the algorithm, LPG-FTW, can adapt quickly to new tasks while avoiding catastrophic forgetting.
More in-depth reviews can be found in  \cite{padakandla2021survey} for non-stationary in \gls{rl} and in \cite{khetarpal2020towards} for lifelong/continual \gls{rl}.

We finish this overview by studying more deeply an approach that is more similar to our POLIS algorithm, highlighting the similarities and disparities between them. 
\cite{chandak2020optimizing} take the rather direct but efficient approach of directly optimising for the future predicted performance. 
For a given set of policy parameters $\boldsymbol\theta\in\Theta$, the past performance of a policy $\pi_{\boldsymbol\theta}$ is estimated by importance sampling.
These past performances are then used inside a regression algorithm, which can be leveraged to predict future performances.
If all these steps are differentiable, then optimising the future predicted performance over $\Theta$ by gradient ascent is possible. 
The authors propose two practical algorithms, Pro-OLS and Pro-WLS, which forecast performance using ordinary least-squares regression and weighted least-squares, respectively. 
The latter builds on the link between weighted least-squares regression and weighted importance sampling \cite{hachiya2012importance,mahmood2014weighted} in order to reduce the variance of the estimates due to importance sampling at the expense of adding some bias.
We will now delve into the differences with respect to our approach.
\\
\noindent\textbf{Setting.}\indent A major difference is that \cite{chandak2020optimizing} designed a solution for episodic RL with inter-episode non-stationarity. We consider instead a truly lifelong framework with intra-episode non-stationarity.
\\
\noindent\textbf{Objective.}\indent We also use importance sampling to estimate future performance, although our objectives differ greatly. 
We add a new discounting parameter to control the bias due to non-stationarity, and we add another two terms to the objective, an estimator of the past performance and a variance regularisation term.
\\
\noindent\textbf{Anytime.}\indent Our approach can be retrained at any time during the lifelong episode. Should a significant shift in dynamics be detected, our policy optimisation can be applied immediately in order to adapt to these new dynamics. 
The algorithm can also be updated less often when the non-stationarity is highly foreseeable from the past. This makes our algorithm an anytime algorithm. In contrast, \cite{chandak2020optimizing}'s approach can train the policy in between episodes but keeps it constant inside an episode.
\\
\noindent\textbf{Parameter-based.}\indent We consider the parameter-based policy optimisation setting compared to \cite{chandak2020optimizing}'s action-based policy optimisation setting.\\

\section{Parameter-Based Non-Stationary Policy Optimisation for Delays}

\subsection{Lifelong Parameter-Based Policy Optimisation}
\label{sec:lifelong_param_po}

In this section, we build on the parameter-based \gls{po} and consider a hyper-policy in $\mathcal{V}_{\mathcal{P}} = \{\nu_{\vectorialform{\rho}} : \vectorialform{\rho} \in \mathcal{P} \subseteq \realnumbers^{n_\rho}\}$ and a policy in $\Pi_{\Theta} = \{\pi_{\vectorialform{\theta}} : \vectorialform{\theta} \in \Theta \subseteq \realnumbers^{n_\theta}\}$.
In order to optimise for the $\beta$-step ahead expected return of \Cref{eq:beta_steps_ahead_return} the policy must be non-stationary.
This can be done in two ways using the parameter-based \gls{po} framework.  
In the first, the state $s$ is augmented with the current time $t$ yielding policies $a_t\sim\pi_{\boldsymbol\theta}(\cdot\vert(s,t))$.
This formulation clearly shows the dependence of the action on time. 
In the second, the dependence on time is moved to the hyper-policy level, and therefore the parameters of the policy depend on time $\boldsymbol\theta_t \sim \nu_{\boldsymbol\rho}(\cdot\vert t)$.
This arguably provides a better understanding on how the agent adapts to the non-stationarity.
A graphical representation of the two frameworks is given in \Cref{fig:graph_models}.
We call this setting the \emph{lifelong parameter-based PO}.

\begin{figure}[t]
    \centering
    \includegraphics[labellifelong=graphparamactionbasedns]{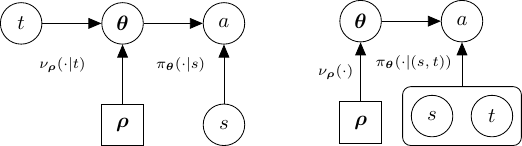}
    \caption{Graphical models representing the two frameworks of parameter-based \gls{po} applied to a non-stationarity of the environment. Left, non-stationarity is handled at the policy parameter level; right, non-stationarity is handled at the action level.}
    \label{fig:graph_models}
\end{figure}

The $\beta$-step ahead expected return can be reformulated for parameter-based \gls{po} as:
\begin{align}\label{eq:opt}
	\boldsymbol\rho^\star_{T,\beta} \in \argmax_{\boldsymbol\rho \in \mathcal{P}} \expectedreturn[T,\beta][](\boldsymbol\rho) = \sum_{t=T+1}^{T+\beta} \widehat{\gamma}^t \expectedvalue_t^{\boldsymbol\rho}[r_t],
\end{align} 
where $\mathbb{E}_t^{\boldsymbol\rho}\coloneq \expectedvalue_{\boldsymbol\theta \sim \nu_{\boldsymbol\rho}(\cdot\vert t)}[\mathbb{E}_t^{\pi_{\boldsymbol\theta}}[\cdot]]$.

\subsection{Lifelong Parameter-Based PO via Multiple Importance Sampling}
\label{subsec:mis_lifelong}

We now propose an approach to optimise for the $\beta$-step ahead expected return. 
In \Cref{subsec:beta_step_estim}, we derive an estimator for this quantity.
In \Cref{subsec:bias_analysis}, we analyse the bias of the estimator and in \Cref{subsec:variance_analysis} its variance.
Finally, in \Cref{subsec:surrogate_objective}, we propose a surrogate objective that accounts for the uncertainty of the estimation to optimise a lower bound of $\expectedreturn[T,\beta][](\boldsymbol\rho)$.

\subsubsection{$\beta$-Step Ahead Expected Return Estimator}
\label{subsec:beta_step_estim}

Obviously, the challenge here is to estimate a quantity which depends on future unknown dynamics having access only to past samples when dynamics were potentially different. 
However, assuming smoothness in the non-stationarity--which we do assume here--makes it reasonable to use past dynamics in order to predict future ones. 
More formally, let $\mathcal{H}_{T,\alpha} = (\boldsymbol\theta_t,r_t)_{t \in [\![T-\alpha+1,T]\!]}$ be the last $\alpha$ samples of the lifelong interaction with the environment.
Using these samples and \gls{mis}, one can build a (biased) estimator of the $s$-step ahead expected reward $\mathbb{E}_s^{\boldsymbol\rho}[r_s]$ for $s \in [\![T+1,T+\beta]\!]$:
\begin{align*}
    \widehat{r}_{s} = \sum_{t=T-\alpha+1}^{T} \omega^{T-t} \frac{\nu_{\boldsymbol\rho}(\boldsymbol\theta_t\vert s)}{ \sum_{k=T-\alpha+1}^T \omega^{T-k} \nu_{\boldsymbol\rho}(\boldsymbol\theta_t\vert k)} r_{t},
\end{align*}
where $\omega \in [0,1]$ is an exponential discounting parameter.
Note that here, the importance sampling weights $\textstyle\frac{\nu_{\boldsymbol\rho}(\boldsymbol\theta_t\vert s)}{ \sum_{k=T-\alpha+1}^T \omega^{T-k} \nu_{\boldsymbol\rho}(\boldsymbol\theta_k\vert k)}$ account for the discrepancy between target hyper-policies in the future $\nu_{\boldsymbol\rho}(\cdot\vert s)$ and the behavioural hyper-policies from the past $\nu_{\boldsymbol\rho}(\cdot\vert k)$. 
Due to the $\omega$ discounting, the \gls{mis} estimator does not exactly use the coefficient suggested by BH (see~\Cref{subsec:importance_sampling}). 
Instead, it embodies our knowledge that the environment changes smoothly by giving more weight to recent samples. BH instead would weigh each past sample equally. 
The exponential discounting of samples as they get older is not novel in non-stationary environments \cite{jagerman2019people}.

One can now leverage the estimator $\widehat{r}_{s}$ to estimate the $\beta$-step ahead expected return,
\begin{align*}
    \widehat{J}_{T,\alpha,\beta}({\boldsymbol\rho}) &= \sum_{s=T+1}^{T+\beta} \widehat{\gamma}^{s} \widehat{r}_{s} \\
    & = \sum_{t=T-\alpha+1}^{T} \omega^{T-t} \frac{\sum_{s=T+1}^{T+\beta} \widehat{\gamma}^s \nu_{\boldsymbol\rho}(\boldsymbol\theta_t\vert s)}{\sum_{k=T-\alpha+1}^T \omega^{T-k} \nu_{\boldsymbol\rho}(\boldsymbol\theta_t\vert k)} r_{t}.
\end{align*}
This quantity could be maximised directly; however, this would suffer from a common shortcoming of \gls{is} estimators, which we explain hereafter.
To increase $\widehat{J}_{T,\alpha,\beta}({\boldsymbol\rho})$, one can either increase the probability of selecting good policies in the future $\nu_{\boldsymbol\rho}(\boldsymbol\theta_t\vert s)$ (the numerator of $\widehat{J}_{T,\alpha,\beta}({\boldsymbol\rho})$) or decrease the probability of the same good policies in the past $\nu_{\boldsymbol\rho}(\boldsymbol\theta_t\vert k)$ (the denominator of $\widehat{J}_{T,\alpha,\beta}({\boldsymbol\rho})$). 
While the first effect is desirable, the second clearly is not and would amount to catastrophic forgetting.
However, a simple adaptation can be made to prevent this phenomenon. 
By adding the last $\alpha$ rewards to the objective function, the agent will refrain from lowering the probabilities of good policies in the past. 
This added quantity, that we call \emph{$\alpha$-step behind expected return}, reads:
\begin{align*}
    \widecheck{J}_{T,\alpha}({\boldsymbol\rho}) = \frac{1}{C_\omega(\alpha)} \sum_{t=T-\alpha+1}^{T} \omega^{T-t}  \widecheck{\gamma}^t r_{t},
\end{align*}
where $\widecheck{\gamma}^t = \widecheck{\gamma}^{t-T+\alpha-1}$ and where we have defined for $\xi \ge 1$ and $\tau\in[0,1]$
\begin{align}
\label{eq:def_function_C}
C_\tau(\xi)=\left\{
\begin{array}{cc}
    \frac{1-\tau^\xi}{1-\tau} & \text{ if } \tau<1 \\
    \xi & \text{ otherwise.}
\end{array}\right.
\end{align}
The new objective thus becomes,
\begin{align}
\label{eq:non_yet_objective}
    \overline{J}_{T,\alpha,\beta}(\boldsymbol\rho) = \widehat{J}_{T,\alpha,\beta}({\boldsymbol\rho})  +  \widecheck{J}_{T,\alpha}({\boldsymbol\rho}).
\end{align}

\subsubsection{Bias Analysis}
\label{subsec:bias_analysis}
Because of the non-stationarity, the estimator $\widehat{J}_{T,\alpha,\beta}(\boldsymbol\rho)$ will likely be biased. 
However, we have designed it to apply to smoothly evolving dynamics. 
In the following, we formalise what is intended by ``smoothly'' and analyse the bias of the estimator under these conditions. 

\begin{ass}[Smoothly Non-stationary Environment]
\label{ass:sce}
For every $t,t' \in \naturalnumbers$, and for every policy $\pi$ it holds for some Lipschitz constant $0 \le L_{\markovdecisionprocess} < \infty$:
\begin{align*}
    \left\vert  \left(\mathbb{E}_{t}^{\pi}-\mathbb{E}_{t'}^{\pi} \right)\left[ r \right] \right\vert  \le L_{\mathcal{M}} \lvert t - t'\rvert.
\end{align*}
\end{ass}
\begin{ass}[Smoothly Non-stationary Hyper-policy]
\label{ass:sch}
For every $t,t' \in \naturalnumbers$, and for every time-dependent hyper-policy $\boldsymbol\rho \in \mathcal{P}$ it holds for some Lipschitz constant $0 \le L_{\nu} < \infty$:
\begin{align*}
    \left\Vert  \nu_{\boldsymbol\rho}(\cdot\vert t) - \nu_{\boldsymbol\rho}(\cdot\vert t') \right\Vert _1 \le L_{\nu} \lvert t - t'\rvert.
\end{align*}
\end{ass}

\Cref{ass:sce} ensures that the expected reward collected by executing the same policy at two different times is bounded proportionally to the difference in time.
Similarly, \Cref{ass:sch} ensures that the probability distribution of the hyper-policy at two different times is bounded proportionally to the difference in time in total variation distance.
Assumptions on the smoothness of non-stationary dynamics to simplify the analysis of worst-case scenarios are not new to the literature \cite{lecarpentier2019non}.
Leveraging these assumptions, one can derive the following bound on the bias.
For simplicity, we note $\mathbb{E}_{T,\alpha}^{\boldsymbol\rho}$ the expectation under the joint probability distribution induced by the hyper-policies from time $T-\alpha+1$ to time $T$, that is,
\begin{align}
\label{eq:joint_hyper_pol}
    \prod_{t=T-\alpha+1}^{T} \nu_{\boldsymbol\rho}(\cdot\vert t).
\end{align}

\begin{lemma}
\label{lem:bias_bound}
Under \Cref{ass:sce} and \Cref{ass:sch} and for $\omega<1$, the bias of the estimator $\widehat{J}_{T,\alpha,\beta}(\boldsymbol\rho)$ can be bounded as:
\begin{align*}
    \left\vert J_{T,\beta}(\boldsymbol\rho) - \mathbb{E}^{\boldsymbol\rho}_{T,\alpha}[\widehat{J}_{T,\alpha,\beta}] \right\vert  
    \le (L_{\mathcal{M}} + 2\maximumreward L_\nu)  C_\gamma(\beta)\left(\frac{\omega}{1-\omega} +\frac{1}{1-\gamma}\right),
\end{align*}
where $C_\gamma(\beta)$ is defined as in \Cref{eq:def_function_C}.
\end{lemma}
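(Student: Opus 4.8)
The plan is to bound the bias by splitting it into two distinct sources of error and controlling each one separately: (i) the mismatch between the true future expected rewards $\mathbb{E}_s^{\boldsymbol\rho}[r_s]$ that appear in $J_{T,\beta}(\boldsymbol\rho)$ and the past expected rewards $\mathbb{E}_t^{\boldsymbol\rho}[r_t]$ that the \gls{mis} estimator effectively averages, and (ii) the mismatch between the \emph{target} hyper-policies $\nu_{\boldsymbol\rho}(\cdot\vert s)$ used in the numerator of $\widehat r_s$ and the \emph{behavioural} hyper-policies $\nu_{\boldsymbol\rho}(\cdot\vert k)$ that actually generated the samples $\boldsymbol\theta_t$. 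Source (i) is controlled by \Cref{ass:sce}, which gives $\lvert \mathbb{E}_s^\pi[r] - \mathbb{E}_t^\pi[r]\rvert \le L_{\mathcal{M}}\lvert s-t\rvert$ for every fixed policy $\pi$, hence (after integrating over $\boldsymbol\theta\sim\nu_{\boldsymbol\rho}$) the same bound holds for $\mathbb{E}_s^{\boldsymbol\rho}$ versus $\mathbb{E}_t^{\boldsymbol\rho}$ when the hyper-policy is held fixed. Source (ii) is controlled by \Cref{ass:sch}: replacing $\nu_{\boldsymbol\rho}(\cdot\vert k)$ by $\nu_{\boldsymbol\rho}(\cdot\vert s)$ inside the normalised importance weights changes the expectation by at most $2\maximumreward$ times a total-variation distance $\lVert \nu_{\boldsymbol\rho}(\cdot\vert s) - \nu_{\boldsymbol\rho}(\cdot\vert k)\rVert_1 \le L_\nu\lvert s-k\rvert$ (the factor $2\maximumreward$ comes from $\lVert r_t\rVert_\infty\le\maximumreward$ and the fact that a change in the sampling distribution of a bounded random variable of range $2\maximumreward$ is bounded by the TV distance times the range, or equivalently a self-normalised \gls{is} argument).

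Concretely, first I would write $J_{T,\beta}(\boldsymbol\rho) = \sum_{s=T+1}^{T+\beta}\widehat\gamma^s\,\mathbb{E}_s^{\boldsymbol\rho}[r_s]$ and $\mathbb{E}^{\boldsymbol\rho}_{T,\alpha}[\widehat J_{T,\alpha,\beta}] = \sum_{s=T+1}^{T+\beta}\widehat\gamma^s\,\mathbb{E}^{\boldsymbol\rho}_{T,\alpha}[\widehat r_s]$, so that the bias telescopes as $\sum_{s=T+1}^{T+\beta}\widehat\gamma^s\bigl(\mathbb{E}_s^{\boldsymbol\rho}[r_s] - \mathbb{E}^{\boldsymbol\rho}_{T,\alpha}[\widehat r_s]\bigr)$ and it suffices to bound each summand by $(L_{\mathcal{M}}+2\maximumreward L_\nu)\bigl(\tfrac{\omega}{1-\omega}+\tfrac{1}{1-\gamma}\bigr)$; then summing $\widehat\gamma^s = \gamma^{s-T-1}$ over $s\in[\![T+1,T+\beta]\!]$ produces exactly the factor $C_\gamma(\beta)=\tfrac{1-\gamma^\beta}{1-\gamma}$ (or $\beta$ when $\gamma=1$) from \Cref{eq:def_function_C}. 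For a fixed $s$, I would expand $\mathbb{E}^{\boldsymbol\rho}_{T,\alpha}[\widehat r_s]$ as a convex combination $\sum_{t=T-\alpha+1}^T c_t(\boldsymbol\rho)\,\mathbb{E}_t^{\cdot}[r_t]$ where the random convex weights $c_t$ sum to one, then insert and remove the ``idealised'' quantity in which all hyper-policies are aligned to time $s$: this isolates the $L_\nu$ term. Bounding $\bigl\lvert \mathbb{E}_s^{\boldsymbol\rho}[r] - \mathbb{E}_t^{\boldsymbol\rho}[r]\bigr\rvert \le L_{\mathcal{M}}\lvert s-t\rvert$ for the aligned term gives the $L_{\mathcal{M}}$ contribution. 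The remaining purely algebraic step is to bound the discounted average of $\lvert s-t\rvert$ over $t\in[\![T-\alpha+1,T]\!]$ with weights proportional to $\omega^{T-t}$: since $\lvert s-t\rvert = (s-T) + (T-t) \le 1 + (T-t)$ for $s\ge T+1$ it is natural to split, but the cleaner route is to observe $\lvert s - t \rvert \le (s - T) + (T-t)$ and use $\sum_{j\ge0}\omega^j j/\sum_{j\ge0}\omega^j = \tfrac{\omega}{1-\omega}$ together with the $\widehat\gamma$-discounted sum over $s$ already accounted for; this is where the $\tfrac{\omega}{1-\omega}$ and $\tfrac{1}{1-\gamma}$ terms arise.

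The main obstacle I anticipate is handling the \emph{self-normalised} nature of the \gls{mis} weights cleanly: because the denominator $\sum_k \omega^{T-k}\nu_{\boldsymbol\rho}(\boldsymbol\theta_t\vert k)$ is itself random and depends on all the sampled $\boldsymbol\theta$'s, $\mathbb{E}^{\boldsymbol\rho}_{T,\alpha}[\widehat r_s]$ is not simply a reweighting of $\mathbb{E}_t^{\boldsymbol\rho}[r_t]$, and one cannot naively pull expectations through. The standard fix is to use the balance-heuristic identity: under the joint law \Cref{eq:joint_hyper_pol}, the samples can be regarded as drawn from the mixture $\Phi_{T,\alpha}=\tfrac{1}{C_\omega(\alpha)}\sum_{k}\omega^{T-k}\nu_{\boldsymbol\rho}(\cdot\vert k)$, against which the self-normalised estimator becomes an ordinary importance-sampling estimator with a deterministic mixture denominator, making the expectation tractable and exactly of the convex-combination form above. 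I would state this reduction carefully (it is essentially the remark after the balance heuristic in \Cref{subsec:importance_sampling}) and then the two Lipschitz bounds apply termwise. A secondary subtlety is that \Cref{ass:sce} is stated for a fixed policy $\pi$ while we need it at the hyper-policy level; this is resolved by Jensen / linearity of expectation, integrating the pointwise bound over $\boldsymbol\theta\sim\nu_{\boldsymbol\rho}(\cdot\vert\cdot)$, but one must be mindful that the hyper-policy argument differs between the two expectations, which is precisely why the $L_\nu$ term is needed as a separate correction.
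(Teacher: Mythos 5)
Your plan is correct and follows essentially the same route as the paper, which establishes the tighter \Cref{lem:bias_general} by swapping the sums over $s$ and $t$, adding and subtracting the behavioural mixture density, bounding the two resulting terms with \Cref{ass:sce} and \Cref{ass:sch} (yielding the $L_{\mathcal{M}}$ and $2\maximumreward L_\nu$ contributions), and then performing exactly your $\lvert s-t\rvert\le(s-T)+(T-t)$ algebra, before relaxing to the stated bound via $\frac{1-\alpha\omega^{\alpha-1}+(\alpha-1)\omega^\alpha}{(1-\omega)(1-\omega^\alpha)}\le\frac{1}{1-\omega}$. Two minor corrections: the uniform per-summand bound $(L_{\mathcal{M}}+2\maximumreward L_\nu)\bigl(\tfrac{\omega}{1-\omega}+\tfrac{1}{1-\gamma}\bigr)$ you first propose does not hold for an individual $s$ (the $(s-T)$ part can exceed $\tfrac{1}{1-\gamma}$ and must be kept inside the $\widehat{\gamma}^s$-weighted sum, as your ``cleaner route'' indeed does), and there is no self-normalisation issue to fix in the first place, since the denominator $\sum_{k}\omega^{T-k}\nu_{\boldsymbol\rho}(\boldsymbol\theta_t\vert k)$ depends only on the single sample $\boldsymbol\theta_t$, so the expectation of each term is directly tractable without any balance-heuristic reduction.
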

\begin{proof}
    \Cref{lem:bias_general} provides a tighter bias bound, which holds also in the case $\omega=1$, yet it is more intricate.
    From this lemma, we know that,
    \begin{align*}
        &\left\vert J_{T,\beta}(\boldsymbol\rho) - \mathbb{E}^{\boldsymbol\rho}_{T,\alpha}[\widehat{J}_{T,\alpha,\beta}] \right\vert
        \\
        &\qquad \le\left(L_{\mathcal{M}} + 2\maximumreward L_\nu\right) C_{\gamma}(\beta) \left(\omega 
        \frac{1-\alpha\omega^{\alpha-1} + (\alpha-1) \omega^\alpha}{(1-\omega)(1-\omega^\alpha)} + \frac{1}{1-\gamma}\right).
    \end{align*}
    Using this bound, the result follows from observing that  
    \begin{align*}
        \frac{1-\alpha\omega^{\alpha-1} + (\alpha-1) \omega^\alpha}{(1-\omega)(1-\omega^\alpha)} 
        & \leq \frac{1}{1-\omega} .
    \end{align*}  
\end{proof}

Two observations can be made regarding this result. 
First, one can see how $\omega$ allows control of the bias: a smaller $\omega$ yields a smaller bias. 
Second, the bound shrinks to zero when the Lipschitz constant goes to zero. 
It means that when the environment and the hyper-policy are stationary--that is, when $L_{\mathcal{M}}=L_{\nu}=0$)--the estimator is unbiased.

\subsubsection{Variance Analysis}
\label{subsec:variance_analysis}
Instead of using \Cref{eq:non_yet_objective} as an objective, we want to consider a statistical lower bound for this quantity. 
The idea behind it is to prevent overfitting past dynamics using a very non-stationary hyper-policy. 
Therefore, with this idea in mind, we derive a bound on the variance of $\overline{J}_{T,\alpha,\beta}(\boldsymbol\rho)$.
We note $\mathbb{V}\mathrm{ar}^{\boldsymbol\rho}_{T,\alpha}$ the variance under the joint probability distribution induced by the hyper-policies from time $T-\alpha+1$ to $T$ given in \Cref{eq:joint_hyper_pol}.

\begin{lemma}
\label{lem:variance_bound}
The variance $\overline{J}_{T,\alpha,\beta}$ can be bounded as follows,
\begin{align*}
    &\mathbb{V}\mathrm{ar}^{\boldsymbol\rho}_{T,\alpha}\left[\overline{J}_{T,\alpha,\beta}(\boldsymbol\rho)\right]
    \leq
    2 R_{\max}^{2} \Bigg(C_\gamma(\alpha)^2 \\
    &\qquad + C_\gamma(\beta)^2 \exponentialrenyidivergence \left(  \sum_{s=T+1}^{T+\beta} \frac{\widehat{\gamma}^s}{C_\gamma(\beta)} \nu_{\boldsymbol\rho}(\cdot\vert s)  \Bigg\Vert     \sum_{t=T-\alpha+1}^{T} \frac{\omega^{T-t} }{C_\omega(\alpha)}\nu_{\boldsymbol\rho}(\cdot\vert t)  \Bigg) \right).
\end{align*}
\end{lemma}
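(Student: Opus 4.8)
\textbf{Proof plan for Lemma~\ref{lem:variance_bound}.}

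The plan is to start from the decomposition $\overline{J}_{T,\alpha,\beta}(\boldsymbol\rho) = \widehat{J}_{T,\alpha,\beta}(\boldsymbol\rho) + \widecheck{J}_{T,\alpha}(\boldsymbol\rho)$ and apply the elementary inequality $\mathbb{V}\mathrm{ar}[X+Y] \le 2\mathbb{V}\mathrm{ar}[X] + 2\mathbb{V}\mathrm{ar}[Y]$, so that it suffices to bound each piece separately. Since both estimators are built from the same $\alpha$ past samples $(\boldsymbol\theta_t, r_t)$, whose rewards are bounded by $\maximumreward$ in absolute value, I expect the two contributions to generate the two terms $C_\gamma(\alpha)^2$ and $C_\gamma(\beta)^2 \exponentialrenyidivergence(\cdots)$ respectively, up to the common factor $2\maximumreward^2$.

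For the $\alpha$-step behind term, I would use $\mathbb{V}\mathrm{ar}[\widecheck{J}_{T,\alpha}] \le \mathbb{E}[\widecheck{J}_{T,\alpha}^2]$. Writing $\widecheck{J}_{T,\alpha}(\boldsymbol\rho) = \frac{1}{C_\omega(\alpha)}\sum_{t=T-\alpha+1}^T \omega^{T-t}\widecheck\gamma^t r_t$ and using $|r_t|\le\maximumreward$ together with $\sum_{t}\omega^{T-t}\widecheck\gamma^t \le \sum_t \widecheck\gamma^t = C_\gamma(\alpha)$ (bounding $\omega^{T-t}\le 1$) and $C_\omega(\alpha)\ge 1$, one gets a bound of the form $\maximumreward^2 C_\gamma(\alpha)^2$; I need to be a little careful about which normalisation constants cancel, but this should go through cleanly. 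For the $\beta$-step ahead term, the key move is to recognise $\widehat{J}_{T,\alpha,\beta}(\boldsymbol\rho)$ as an importance-sampling (balance-heuristic-style) estimator of a mixture target against a mixture behavioural distribution: the samples $\boldsymbol\theta_t$ can be regarded as drawn (with discounted weights $\omega^{T-t}/C_\omega(\alpha)$) from the mixture $\Phi_{\text{beh}} = \sum_{t=T-\alpha+1}^T \frac{\omega^{T-t}}{C_\omega(\alpha)}\nu_{\boldsymbol\rho}(\cdot\vert t)$, and the target against which we average $r$ is the mixture $\Phi_{\text{tar}} = \sum_{s=T+1}^{T+\beta}\frac{\widehat\gamma^s}{C_\gamma(\beta)}\nu_{\boldsymbol\rho}(\cdot\vert s)$, with total target mass $C_\gamma(\beta)$. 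Then I would bound the second moment of an IS estimator by $\maximumreward^2$ times $C_\gamma(\beta)^2$ times the second moment of the importance weight $\frac{d\Phi_{\text{tar}}}{d\Phi_{\text{beh}}}$, and invoke the identity relating the $\alpha$-moment of the importance weight to the exponential \Renyi divergence stated in \Cref{subsec:importance_sampling} (for $\alpha=2$, $\mathbb{E}_{x\sim Q}[(p(x)/q(x))^2] = \exponentialrenyidivergence(P\Vert Q)$). This produces exactly the $C_\gamma(\beta)^2 \exponentialrenyidivergence(\Phi_{\text{tar}}\Vert\Phi_{\text{beh}})$ factor.

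The main obstacle I anticipate is making the balance-heuristic reinterpretation fully rigorous in the presence of the exponential discounting $\omega^{T-t}$, which breaks the exact partition-of-unity structure of standard \gls{mis}: one has to verify that the per-sample weights $\beta_t(\boldsymbol\theta) = \frac{\omega^{T-t}\nu_{\boldsymbol\rho}(\boldsymbol\theta\vert t)}{\sum_k \omega^{T-k}\nu_{\boldsymbol\rho}(\boldsymbol\theta\vert k)}$ still sum to one (they do, by construction) and that, after absorbing $C_\omega(\alpha)$ into the mixture density, the estimator's second moment is controlled by a single $\exponentialrenyidivergence$ between the two normalised mixtures. A secondary delicate point is bookkeeping the discount constants so that the cross terms and normalisations collapse to precisely $C_\gamma(\alpha)^2$ and $C_\gamma(\beta)^2$ rather than something looser; I would handle this by systematically using $\omega^{T-t}\le 1$, $\widehat\gamma^s \le 1$ only where it does not cost the stated constants, and keeping the $C_\gamma$ and $C_\omega$ factors symbolic until the final line. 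Once both pieces are in place, combining them with the factor $2$ from the variance-of-sum inequality yields the claimed bound.
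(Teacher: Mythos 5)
Your proposal is correct and follows essentially the same route as the paper's proof: the $\mathbb{V}\mathrm{ar}[X+Y]\le 2\mathbb{V}\mathrm{ar}[X]+2\mathbb{V}\mathrm{ar}[Y]$ split, a crude second-moment bound giving $\maximumreward^2 C_\gamma(\alpha)^2$ for the behind term, and the balance-heuristic reading of $\widehat{J}_{T,\alpha,\beta}$ whose second moment is the exponential $2$-\Renyi divergence between the normalised target and behavioural mixtures. The only step you leave implicit—collapsing the sum over the $\alpha$ past samples into a single expectation under the behavioural mixture—is exactly what the paper does via Cauchy--Schwarz/Jensen with the convex weights $\omega^{T-t}/C_\omega(\alpha)$ together with the independence of the $\boldsymbol\theta_t$, which is the resolution you sketch, so no genuine gap remains.
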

\begin{proof}
    Recall the notation $\widehat{\gamma}^t=\gamma^{t-T-1}$, $\widecheck{\gamma}^t= \gamma^{t-T+\alpha-1}$. 
    First, by exploiting the fact that $\variance[X+Y] \le 2\variance[X]+2\variance[Y]$ between arbitrary random variables $X$ and $Y$, one gets,
\begin{align}
    \mathbb{V}\mathrm{ar}^{\boldsymbol\rho}_{T,\alpha} \left[\overline{J}_{T,\alpha,\beta}\right] 
    &\leq 2\underbrace{\mathbb{V}\mathrm{ar}^{\boldsymbol\rho}_{T,\alpha} \left[\widecheck{J}_{T,\alpha}\right]}_{\text{A}} + 2\underbrace{\mathbb{V}\mathrm{ar}^{\boldsymbol\rho}_{T,\alpha} \left[\widehat{J}_{T,\alpha,\beta}\right]}_{\text{B}} \nonumber,
\end{align}
Then, we upper bound the term $A$:
\begin{align*}
    A 
    &= \mathbb{V}\mathrm{ar}^{\boldsymbol\rho}_{T,\alpha} \left[ \frac{1}{C_\omega(\alpha)} \sum_{t=T-\alpha+1}^{T} \omega^{T-t} \widecheck{\gamma}^t r_t \right] \\
    & \le \mathbb{E}^{\boldsymbol\rho}_{T,\alpha} \left[ \left(\frac{1}{C_\omega(\alpha)} \sum_{t=T-\alpha+1}^{T} \omega^{T-t} \widecheck{\gamma}^t r_t\right)^2 \right] \\
    & \le \maximumreward  \left(\sum_{t=T-\alpha+1}^{T} \frac{\omega^{T-t}}{C_\omega(\alpha)} \widecheck{\gamma}^t\right)^2 \\
    &\le  \maximumreward  \left(\sum_{t=T-\alpha+1}^{T}  \widecheck{\gamma}^t \right)^2 
    \\
    &= \maximumreward \left( \underbrace{\frac{1-\gamma^\alpha}{1-\gamma}}_{C_\gamma(\alpha)}\right)^2,
\end{align*}
where we have exploited that $\frac{\omega^{T-t}}{C_\omega(\alpha)} \le 1$.
Moving to the second term $B$,
\begin{align}
    B 
    & = \mathbb{V}\mathrm{ar}^{\boldsymbol\rho}_{T,\alpha} \left[
        \sum_{t=T-\alpha+1}^{T} \omega^{T-t}  r_t  \frac{\sum_{s=T+1}^{T+\beta} \widehat{\gamma}^s\nu_{\boldsymbol\rho}(\boldsymbol\theta_t\vert s)}{\sum_{k=T-\alpha+1}^T \omega^{T-t}\nu_{\boldsymbol\rho}(\boldsymbol\theta_t \vert k)} 
    \right]\notag \\
    & \le \maximumreward  \mathbb{E}^{\boldsymbol\rho}_{T,\alpha} \left[\left(
        \sum_{t=T-\alpha+1}^{T} \frac{\omega^{T-t}}{C_\omega(\alpha)} \frac{\sum_{s=T+1}^{T+\beta} \widehat{\gamma}^s\nu_{\boldsymbol\rho}(\boldsymbol\theta_t\vert s)}{\frac{1}{C_\omega(\alpha)}\sum_{k=T-\alpha+1}^T \omega^{T-t}\nu_{\boldsymbol\rho}(\boldsymbol\theta_t \vert k)}\right)^2 
    \right] \nonumber\\
    & \le \maximumreward  \mathbb{E}^{\boldsymbol\rho}_{T,\alpha} \left[
        \sum_{t=T-\alpha+1}^{T} \frac{\omega^{T-t}}{C_\omega(\alpha)} \left( \frac{\sum_{s=T+1}^{T+\beta} \widehat{\gamma}^s\nu_{\boldsymbol\rho}(\boldsymbol\theta_t\vert s)}{\frac{1}{C_\omega(\alpha)} \sum_{k=T-\alpha+1}^T \omega^{T-t}\nu_{\boldsymbol\rho}(\boldsymbol\theta_t \vert k)}\right)^2 
    \right] \label{p:0001} \\
    & = C_\gamma(\beta)^2 \maximumreward \nonumber
    \\
    &\qquad\cdot \mathbb{E}^{\boldsymbol\rho}_{T,\alpha} \left[
        \sum_{t=T-\alpha+1}^{T} \frac{\omega^{T-t}}{C_\omega(\alpha)} \left( \underbrace{\frac{\frac{1}{C_\gamma(\beta)} \sum_{s=T+1}^{T+\beta} \widehat{\gamma}^s\nu_{\boldsymbol\rho}(\boldsymbol\theta_t\vert s)}{\frac{1}{C_\omega(\alpha)} \sum_{k=T-\alpha+1}^T \omega^{T-t}\nu_{\boldsymbol\rho}(\boldsymbol\theta_t \vert k)}}_{\ell(\boldsymbol\theta_t)}\right)^2 \nonumber
    \right],
\end{align}
where in line~\eqref{p:0001} we applied the Cauchy-Schwarz inequality to the summation. 
The above expectation $\mathbb{E}^{\boldsymbol\rho}_{T,\alpha}$ is computed under the distribution:
\begin{align*}
    D_0(s_0) \prod_{l=1}^{T} P_t(s_{l+1}\vert s_l, \pi_{\boldsymbol\theta_l}(s_l)) \nu_{\boldsymbol\rho}(\boldsymbol\theta_l\vert l).
\end{align*}
Therefore, 
\begin{align*}
   \mathbb{E}^{\boldsymbol\rho}_{T,\alpha} \left[\ell(\boldsymbol\theta_t)^2 \right] &= \int_{s_0}\int_{\boldsymbol\theta_0} \dots \int_{s_T} \int_{\boldsymbol\theta_T}  D_0(s_0) 
   \\
   &\qquad \cdot\prod_{l=1}^{T} P_t(s_{l+1}\vert s_l, \pi_{\boldsymbol\theta_l}(s_l)) \nu_{\boldsymbol\rho}(\boldsymbol\theta_l\vert l) \ell(\boldsymbol\theta_t)^2 \de s_0 \de \boldsymbol\theta_0 \dots \de s_T \de \boldsymbol\theta_T  \\
   & =  \int_{\boldsymbol\theta_t} \nu_{\boldsymbol\rho}(\boldsymbol\theta_t\vert t) \ell(\boldsymbol\theta_t)^2 \de \boldsymbol\theta_t, 
\end{align*}
because $\boldsymbol\theta_t$ is independent of the other parameters $(\boldsymbol\theta_l)_{0\le l< t}$. 
Thus, one has:
\begin{align}
    \mathbb{E}^{\boldsymbol\rho}_{T,\alpha} \left[
        \sum_{t=T-\alpha+1}^{T} \frac{\omega^{T-t}}{C_\omega(\alpha)} \ell(\boldsymbol\theta_t)^2
    \right] \nonumber
    &= \sum_{t=T-\alpha+1}^{T} \frac{\omega^{T-t}}{C_\omega(\alpha)}  \int_{\boldsymbol\theta_t} \nu_{\boldsymbol\rho}(\boldsymbol\theta_t\vert t) \ell(\boldsymbol\theta_t)^2 \de \boldsymbol\theta_t \notag 
    \\
    &=    \int_{\boldsymbol\theta} \sum_{t=T-\alpha+1}^{T} \frac{\omega^{T-t}}{C_\omega(\alpha)} \nu_{\boldsymbol\rho}(\boldsymbol\theta\vert t) \ell(\boldsymbol\theta)^2 \de \boldsymbol\theta \label{p:0002} 
    \\
    &= \int_{\boldsymbol\theta} \sum_{t=T-\alpha+1}^{T} \frac{\omega^{T-t}}{C_\omega(\alpha)} \nu_{\boldsymbol\rho}(\boldsymbol\theta\vert t) 
    \\
    &\qquad\cdot\left( \frac{ \frac{1}{C_\gamma(\beta)} \sum_{s=T+1}^{T+\beta} \widehat{\gamma}^s\nu_{\boldsymbol\rho}(\boldsymbol\theta\vert s)}{\frac{1}{C_\omega(\alpha)} \sum_{k=T-\alpha+1}^T \omega^{T-t}\nu_{\boldsymbol\rho}(\boldsymbol\theta \vert k)}\right)^2 \de \boldsymbol\theta,\nonumber
\end{align}
where \Cref{p:0002} exploits the fact that $\boldsymbol\theta_t$ is a dummy variable. 
The result follows by observing that the last expression corresponds to an exponential 2-\Renyi divergence and by gathering the terms $A$ and $B$.
\end{proof}

The bound is reminiscent of variance bounds in the context of off-policy estimation and learning~\cite{metelli2018policy, papini2019optimistic, metelli2020importance}.
The quantity related to the first term of the sum inside parentheses accounts for the variance of $\widecheck{J}_{T,\alpha}({\boldsymbol\rho})$. 
The second term of the sum is a bound on the variance of $\widehat{J}_{T,\alpha,\beta}({\boldsymbol\rho})$. 
Because this estimator involves importance sampling, the exponential $2$-\Renyi divergence naturally appears.
The downside of this bound is that even for convenient distributions for the hyper-policies--such as normal distributions--there is no closed form for the \Renyi divergence between mixtures of those distributions \cite{papini2019optimistic}.
However, we discuss several upper bounds for the \Renyi divergence between mixtures of distributions in \Cref{app:var_bound}. 
We leverage one of these bounds in the following result.

\begin{restatable}{lemma}{renyidivergencebound}
\label{pp:divergence_bound}
    The divergence between mixtures of \Cref{lem:variance_bound} can be bounded as:
\begin{align*}
    \exponentialrenyidivergence & \left( \sum_{s=T+1}^{T+\beta} \frac{\widehat{\gamma}^s}{C_\gamma(\beta)} \nu_{\boldsymbol\rho}(\cdot\vert s) \left\Vert  \sum_{t=T-\alpha+1}^{T} \frac{\omega^{T-t} }{C_\omega(\alpha)}\nu_{\boldsymbol\rho}(\cdot\vert t)  \right.\right) 
    \\
    & \qquad \le \frac{C_\omega(\alpha)}{C_\gamma(\beta)^2} \underbrace{\left(\sum_{s=T+1}^{T+\beta} \frac{\widehat{\gamma}^{s}}{\left(\sum\limits_{ t=T-\alpha+1}^{ T}\frac{\omega^{T-t}}{\exponentialrenyidivergence(\nu_{\boldsymbol\rho}(\cdot\vert s)\left\Vert \nu_{\boldsymbol\rho}(\cdot\vert t)\right)}\right)^{\frac{1}{2}}}\right)^2}_{{B}_{T,\alpha,\beta}(\boldsymbol\rho)}.
\end{align*}
\end{restatable}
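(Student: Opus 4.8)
The plan is to reduce the exponential $2$-\Renyi divergence between the two mixtures appearing in \Cref{lem:variance_bound} to an $L^{2}$-norm computation, to peel off the target (numerator) mixture via Minkowski's inequality, and then to control each residual divergence $\exponentialrenyidivergence(\nu_{\boldsymbol\rho}(\cdot\vert s)\Vert\,\cdot\,)$ against the behavioural mixture using the concavity of the weighted harmonic mean. Throughout, set $w_{s}\coloneq \widehat{\gamma}^{s}/C_\gamma(\beta)$ and $v_{t}\coloneq \omega^{T-t}/C_\omega(\alpha)$; by the definition of $C_\gamma$ and $C_\omega$ in \Cref{eq:def_function_C} one has $\sum_{s=T+1}^{T+\beta}w_{s}=\sum_{t=T-\alpha+1}^{T}v_{t}=1$, so $P\coloneq \sum_{s}w_{s}\,\nu_{\boldsymbol\rho}(\cdot\vert s)$ and $Q\coloneq \sum_{t}v_{t}\,\nu_{\boldsymbol\rho}(\cdot\vert t)$ are genuine probability distributions and the left-hand side of the claim equals $\exponentialrenyidivergence(P\Vert Q)$. \textbf{Step 1:} since for the $2$-\Renyi divergence $\exponentialrenyidivergence(\mu\Vert\eta)=\int\mu(\omega)^{2}/\eta(\omega)\,\de\omega=\left\Vert \mu/\sqrt{\eta}\right\Vert_{2}^{2}$, I would first write, with $q$ the density of $Q$, $\exponentialrenyidivergence(P\Vert Q)=\left\Vert \sum_{s}w_{s}\,\nu_{\boldsymbol\rho}(\cdot\vert s)/\sqrt{q}\,\right\Vert_{2}^{2}$. \textbf{Step 2:} the triangle inequality in $L^{2}$ gives $\left\Vert \sum_{s}w_{s}\,\nu_{\boldsymbol\rho}(\cdot\vert s)/\sqrt{q}\,\right\Vert_{2}\le \sum_{s}w_{s}\left\Vert \nu_{\boldsymbol\rho}(\cdot\vert s)/\sqrt{q}\right\Vert_{2}=\sum_{s}w_{s}\sqrt{\exponentialrenyidivergence(\nu_{\boldsymbol\rho}(\cdot\vert s)\Vert Q)}$, hence
\[
\exponentialrenyidivergence(P\Vert Q)\le \Big(\,\sum_{s=T+1}^{T+\beta} w_{s}\sqrt{\exponentialrenyidivergence(\nu_{\boldsymbol\rho}(\cdot\vert s)\Vert Q)}\,\Big)^{2}.
\]

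\textbf{Step 3 (harmonic mean and Jensen).} It then remains to bound $\exponentialrenyidivergence(\nu_{\boldsymbol\rho}(\cdot\vert s)\Vert Q)$ for each fixed $s$. Taking $\nu_{\boldsymbol\rho}(\cdot\vert s)$ as the reference measure and writing $X_{t}\coloneq \nu_{\boldsymbol\rho}(\cdot\vert t)/\nu_{\boldsymbol\rho}(\cdot\vert s)$, a direct computation gives $\expectedvalue_{\nu_{\boldsymbol\rho}(\cdot\vert s)}[1/X_{t}]=\exponentialrenyidivergence(\nu_{\boldsymbol\rho}(\cdot\vert s)\Vert \nu_{\boldsymbol\rho}(\cdot\vert t))$ and $\exponentialrenyidivergence(\nu_{\boldsymbol\rho}(\cdot\vert s)\Vert Q)=\expectedvalue_{\nu_{\boldsymbol\rho}(\cdot\vert s)}[(\sum_{t}v_{t}X_{t})^{-1}]$. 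Setting $Y_{t}\coloneq 1/X_{t}$, the integrand $(\sum_{t}v_{t}/Y_{t})^{-1}$ is the weighted harmonic mean of $(Y_{t})_{t}$, which is a concave function of its arguments on the positive orthant of $\realnumbers^{\alpha}$ (I would justify this by a short Hessian computation, or by the classical concavity of power means, and record it in \Cref{app:var_bound}). Jensen's inequality then yields
\[
\exponentialrenyidivergence(\nu_{\boldsymbol\rho}(\cdot\vert s)\Vert Q)=\expectedvalue_{\nu_{\boldsymbol\rho}(\cdot\vert s)}\Big[\big(\textstyle\sum_{t}v_{t}/Y_{t}\big)^{-1}\Big]\le \Big(\textstyle\sum_{t}v_{t}\big/\expectedvalue_{\nu_{\boldsymbol\rho}(\cdot\vert s)}[Y_{t}]\Big)^{-1}=\Big(\textstyle\sum_{t=T-\alpha+1}^{T}\tfrac{v_{t}}{\exponentialrenyidivergence(\nu_{\boldsymbol\rho}(\cdot\vert s)\Vert \nu_{\boldsymbol\rho}(\cdot\vert t))}\Big)^{-1}.
\]

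\textbf{Step 4 (collecting the constants).} Plugging this into Step 2, substituting $w_{s}=\widehat{\gamma}^{s}/C_\gamma(\beta)$ and $v_{t}=\omega^{T-t}/C_\omega(\alpha)$, and pulling the factor $C_\omega(\alpha)^{1/2}/C_\gamma(\beta)$ out of the double summation, I obtain
\[
\exponentialrenyidivergence(P\Vert Q)\le \bigg(\frac{C_\omega(\alpha)^{1/2}}{C_\gamma(\beta)}\sum_{s=T+1}^{T+\beta}\frac{\widehat{\gamma}^{s}}{\big(\sum_{t=T-\alpha+1}^{T}\omega^{T-t}/\exponentialrenyidivergence(\nu_{\boldsymbol\rho}(\cdot\vert s)\Vert\nu_{\boldsymbol\rho}(\cdot\vert t))\big)^{1/2}}\bigg)^{2}=\frac{C_\omega(\alpha)}{C_\gamma(\beta)^{2}}\,B_{T,\alpha,\beta}(\boldsymbol\rho),
\]
which is exactly the claimed inequality.

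The crux is \textbf{Step 3}: orienting Jensen's inequality in the right direction relies entirely on the concavity of the weighted harmonic mean, so that is the step I would prove carefully (and relegate to \Cref{app:var_bound}). The whole chain is moreover meaningful only when $\nu_{\boldsymbol\rho}(\cdot\vert s)\ll\nu_{\boldsymbol\rho}(\cdot\vert t)$ for the relevant $s,t$ — otherwise both sides are $+\infty$ and the statement holds vacuously — a condition automatically met for the Gaussian hyper-policies used in practice, where all the $\nu_{\boldsymbol\rho}(\cdot\vert t)$ share the same support. I expect Steps 1, 2 and 4 to be entirely routine.
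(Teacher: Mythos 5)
Your proof is correct, and it arrives at exactly the same two-step decomposition that the paper uses (the ``two steps $\psi$ first'' route of \Cref{pp:var_bound_2_steps_psi_first}): first bound $\exponentialrenyidivergence(\Psi\Vert\Phi)$ through the components of the \emph{target} mixture against the full behavioural mixture, then bound each $\exponentialrenyidivergence(\nu_{\boldsymbol\rho}(\cdot\vert s)\Vert\Phi)$ by the weighted harmonic mean of the pairwise divergences. Where you differ is in how the two ingredients are established. The paper obtains the first inequality by optimising the variational parameters of \Cref{pp:papini_lemma} with Lagrange multipliers (\Cref{pp:mix_psi}) and imports the second from \cite{papini2019optimistic} (\Cref{th:mix_phi}); you instead get the first from the identity $\exponentialrenyidivergence(P\Vert Q)=\lVert p/\sqrt{q}\rVert_2^2$ plus Minkowski's inequality, and the second from Jensen's inequality together with the concavity of the weighted harmonic mean on the positive orthant (which is indeed classical; it also follows from the representation of the harmonic mean as an infimum of linear functions, which is essentially the dual of the variational-parameter trick the paper relies on). Your route is more elementary and fully self-contained, at the price of covering only $\alpha=2$ — which is all this lemma needs — whereas the paper's machinery is stated for general $\alpha\ge1$ and is reused in the appendix to derive and compare the alternative bounds (uniform $\psi$, uniform $\phi$, two steps $\phi$ first). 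Your closing caveat about absolute continuity is fine: when some pairwise divergences are infinite the corresponding terms simply drop out of the harmonic sum, and the Jensen step survives a truncation/monotone-convergence argument, so there is no gap.
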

\begin{proof} 
    In \Cref{pp:var_bound_2_steps_psi_first}, we show that, for two mixtures of distributions $\textstyle\Psi = \sum_{i=1}^L \zeta_i P_i$ and $\textstyle\Phi = \sum_{j=1}^K \mu_j Q_j$ where $\forall i \in [\![1,L]\!]$, $j \in [\![1,K]\!]$, $\ 0<\zeta_i$, $\mu_j <1$, $\textstyle\sum_{i=1}^L \zeta_i = \sum_{j=1}^K \mu_j= 1$ and $\alpha\ge1$,
    \begin{align*}
        \exponentialrenyidivergence[\alpha](\Psi\left\Vert \Phi\right.) 
        &\leq 
        \left(\sum_{i=1}^L \zeta_i \exponentialrenyidivergence[\alpha](P_i\Vert \Psi)^{\frac{\alpha-1}{\alpha}}\right)^{\frac{\alpha}{\alpha-1}}\\
        &\leq 
        \left(\sum_{i=1}^L \zeta_i \frac{1}{\left(\sum_{j=1}^K\frac{\mu_j}{\exponentialrenyidivergence[\alpha]( P_i \Vert Q_j)}\right)^{\frac{\alpha-1}{\alpha}}}\right)^{\frac{\alpha}{\alpha-1}}.
    \end{align*}
    Therefore, the current result follows directly by applying \Cref{pp:var_bound_2_steps_psi_first} to our bound with $\textstyle\zeta_i=\frac{\widehat{\gamma}^i}{C_\gamma(\beta)}$, $\textstyle\mu_j = \frac{\omega^{T-j}}{C_\omega(\alpha)}$, $\alpha=2$ and changing the summation from $i\in[\![1,\dots,L]\!]$ to $i\in[\![T+1,\dots,T+\beta]\!]$ and from $j\in[\![1,\dots,K]\!]$ to $j\in[\![T-\alpha+1,\dots,T]\!]$.
\end{proof}

For convenience, we have defined the quantity ${B}_{T,\alpha,\beta}(\boldsymbol\rho)$ in the previous lemma.
This upper bound, which can be differentiable in practice gives an efficient way to control the variance of our estimator while avoiding its estimation.
We are now ready to derive our surrogate objective.  

\subsubsection{Surrogate Objective}
\label{subsec:surrogate_objective}

We use Cantelli's inequality and \Cref{lem:variance_bound} to yield the following probabilistic lower bound of the quantity of interest. This lower bound will then become our surrogate objective following the \emph{uncertainty-averse} approach of \cite{metelli2018policy}.

\begin{thm}
\label{th:bound_surrogate_objective}
    For $\delta\in (0,1)$, with probability at least $1-\delta$, one has,
\begin{align*}
    \mathbb{E}_{T,\alpha}^{\boldsymbol\rho}\left[\overline{J}_{T,\alpha,\beta}(\boldsymbol\rho) \right] 
    \geq 
    \overline{J}_{T,\alpha,\beta}(\boldsymbol\rho)  - \sqrt{\frac{1-\delta}{\delta} 2R_{\max}^2 \left( C_\gamma(\alpha)^2 + C_\omega(\alpha) {B}_{T,\alpha,\beta}(\boldsymbol\rho)\right)}.
\end{align*}
\end{thm}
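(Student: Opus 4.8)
The plan is to derive the stated high-probability lower bound as a direct consequence of Cantelli's (one-sided Chebyshev) inequality applied to the random variable $\overline{J}_{T,\alpha,\beta}(\boldsymbol\rho)$, with the variance controlled by \Cref{lem:variance_bound} and then \Cref{pp:divergence_bound}. Recall that Cantelli's inequality states that for a random variable $X$ with finite variance and any $\lambda>0$,
\begin{align*}
    \probability\left(X \le \expectedvalue[X] - \lambda\right) \le \frac{\mathbb{V}\mathrm{ar}[X]}{\mathbb{V}\mathrm{ar}[X]+\lambda^2}.
\end{align*}
First I would apply this with $X=\overline{J}_{T,\alpha,\beta}(\boldsymbol\rho)$ and expectation/variance taken under $\mathbb{E}^{\boldsymbol\rho}_{T,\alpha}$, i.e. the joint distribution \eqref{eq:joint_hyper_pol}. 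Setting the right-hand side equal to $\delta$ and solving for $\lambda$ gives $\lambda = \sqrt{\frac{1-\delta}{\delta}\,\mathbb{V}\mathrm{ar}^{\boldsymbol\rho}_{T,\alpha}[\overline{J}_{T,\alpha,\beta}(\boldsymbol\rho)]}$, so that with probability at least $1-\delta$,
\begin{align*}
    \mathbb{E}_{T,\alpha}^{\boldsymbol\rho}\left[\overline{J}_{T,\alpha,\beta}(\boldsymbol\rho)\right] \ge \overline{J}_{T,\alpha,\beta}(\boldsymbol\rho) - \sqrt{\frac{1-\delta}{\delta}\,\mathbb{V}\mathrm{ar}^{\boldsymbol\rho}_{T,\alpha}\left[\overline{J}_{T,\alpha,\beta}(\boldsymbol\rho)\right]}.
\end{align*}

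Next I would substitute the variance bound. From \Cref{lem:variance_bound},
\begin{align*}
    \mathbb{V}\mathrm{ar}^{\boldsymbol\rho}_{T,\alpha}\left[\overline{J}_{T,\alpha,\beta}(\boldsymbol\rho)\right] \le 2R_{\max}^2\left(C_\gamma(\alpha)^2 + C_\gamma(\beta)^2\, \exponentialrenyidivergence\!\left(\textstyle\sum_{s=T+1}^{T+\beta}\frac{\widehat{\gamma}^s}{C_\gamma(\beta)}\nu_{\boldsymbol\rho}(\cdot\vert s)\,\middle\Vert\, \sum_{t=T-\alpha+1}^{T}\frac{\omega^{T-t}}{C_\omega(\alpha)}\nu_{\boldsymbol\rho}(\cdot\vert t)\right)\right),
\end{align*}
and then plugging in \Cref{pp:divergence_bound}, the divergence term is at most $\frac{C_\omega(\alpha)}{C_\gamma(\beta)^2}\,{B}_{T,\alpha,\beta}(\boldsymbol\rho)$, so the $C_\gamma(\beta)^2$ factors cancel and the variance is bounded by $2R_{\max}^2\bigl(C_\gamma(\alpha)^2 + C_\omega(\alpha)\,{B}_{T,\alpha,\beta}(\boldsymbol\rho)\bigr)$. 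Since $x\mapsto\sqrt{x}$ is monotone increasing, I can substitute this upper bound inside the square root, obtaining exactly the claimed inequality. The chaining of these three ingredients is essentially mechanical once they are in place.

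The only genuine subtlety I anticipate is a regularity/measurability check: Cantelli's inequality requires that $\overline{J}_{T,\alpha,\beta}(\boldsymbol\rho)$ have a finite second moment under $\mathbb{E}^{\boldsymbol\rho}_{T,\alpha}$, which is guaranteed here because the rewards are uniformly bounded by $R_{\max}$ and the importance weights appearing in $\widehat{J}_{T,\alpha,\beta}$ are bounded (the denominators are normalised self-weighting sums that stay positive), so \Cref{lem:variance_bound} indeed yields a finite bound. One should also note that the bound is vacuous unless $\exponentialrenyidivergence$ is finite, i.e. the past hyper-policies must share support with the future ones — this is the implicit absolute-continuity condition inherited from the multiple importance sampling construction, and under \Cref{ass:sch} (smooth non-stationarity of the hyper-policy) this holds for the smooth parametric families used in practice. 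Beyond these remarks, the proof is a short composition: Cantelli $\to$ \Cref{lem:variance_bound} $\to$ \Cref{pp:divergence_bound} $\to$ monotonicity of $\sqrt{\cdot}$.
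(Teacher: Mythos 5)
Your proposal is correct and follows essentially the same route as the paper's proof: apply Cantelli's inequality to $\overline{J}_{T,\alpha,\beta}(\boldsymbol\rho)$, solve for $\lambda$ to get the $\sqrt{\tfrac{1-\delta}{\delta}\,\mathbb{V}\mathrm{ar}}$ term, then substitute the variance bound of \Cref{lem:variance_bound} followed by the divergence bound of \Cref{pp:divergence_bound}, with the $C_\gamma(\beta)^2$ factors cancelling. Your added remarks on finite second moments and the implicit support condition are sensible but not part of the paper's argument.
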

\begin{proof}
    Similarly to \cite{metelli2018policy}, we apply Cantelli's inequality to the random variable $\overline{J}_{T,\alpha,\beta}(\boldsymbol\rho)$,
    \begin{align*}
        \probability\left( \overline{J}_{T,\alpha,\beta}(\boldsymbol\rho) - \expectedvalue[\overline{J}_{T,\alpha,\beta}(\boldsymbol\rho)]\geq \lambda\right) &\leq \frac{1}{1+\frac{\lambda^2}{\variance[\overline{J}_{T,\alpha,\beta}(\boldsymbol\rho)]}}.
    \end{align*}  
    Calling $\delta \coloneq \frac{1}{1+\frac{\lambda^2}{\variance[\overline{J}_{T,\alpha,\beta}(\boldsymbol\rho)]}}$ and considering the complementary event, yields that with probability at least $1-\delta$,
    \begin{align*}
        \expectedvalue[\overline{J}_{T,\alpha,\beta}(\boldsymbol\rho)]\geq  \overline{J}_{T,\alpha,\beta}(\boldsymbol\rho) - 
        \sqrt{
            \frac{1-\delta}{\delta}  \mathbb{V}\mathrm{ar}^{\boldsymbol\rho}_T \left[\overline{J}_{T,\alpha,\beta}(\boldsymbol\rho)\right] 
        }.
    \end{align*}
    Next, we replace the variance by its bound in \Cref{lem:variance_bound} and the \Renyi divergence between the mixture in the latter bound by its variational upper-bound from \Cref{pp:divergence_bound}.
    This yields the result.
\end{proof}

Justified by the previous results, we define our surrogate objective, where we set $\textstyle\lambda = \sqrt{\frac{1-\delta}{\delta} 2 R_{\max}^2 }$ as a new hyperparameter:
\begin{align}
    \mathcal{L}_{\lambda}(\boldsymbol\rho) = \overline{J}_{T,\alpha,\beta}(\boldsymbol\rho) - \lambda  \sqrt{C_\gamma(\alpha)^2+C_\omega(\alpha) B_{T,\alpha,\beta}(\boldsymbol\rho)}.
    \label{eq:objective}
\end{align}

This objective can be optimised by a policy-gradient approach yielding an algorithm that we name POLIS for Policy Optimisation in Lifelong learning through Importance Sampling. We provide its pseudo-code in \cref{alg:cap}. 
For completeness, we derive the gradient of the $\beta$-step ahead expected return using similar derivations as in PGT \cite{williams1992simple} (see also~\Cref{subsec:policy_based}):
\begin{align*}
    \nabla_{\boldsymbol\rho} \widehat{J}_{T,\alpha,\beta}(\nu_{\boldsymbol\rho}) 
    &= \nabla_{\boldsymbol\rho} \mathbb{E}^{\boldsymbol\rho}_{T,\alpha} \left[ \sum_{t=T-\alpha+1}^{T} r_t \omega^{T-t} \frac{\sum_{s=T+1}^{T+\beta} \nu_{\boldsymbol\rho}(\boldsymbol\theta\vert s)}{\sum_{k=T-\alpha+1}^{T}\omega^{T-k} \nu_{\boldsymbol\rho} (\boldsymbol\theta_t \vert k)}\right] 
    \\
    &= \sum_{t=T-\alpha+1}^{T} \nabla_{\boldsymbol\rho} \int_{\Theta}  \mathbb{E}_{t}^{\pi_{\boldsymbol\theta}}[r] \omega^{T-t} \frac{\sum_{s=T+1}^{T+\beta} \nu_{\boldsymbol\rho}(\boldsymbol\theta\vert s)}{\sum_{k=T-\alpha+1}^{T}\omega^{T-k}\nu_{\boldsymbol\rho} (\boldsymbol\theta \vert k)}  \nu_{\boldsymbol\rho} (\boldsymbol\theta \vert t)\de\boldsymbol\theta
    \\
    &=  \sum_{t=T-\alpha+1}^{T} \int_{\Theta}   \mathbb{E}_{t}^{\pi_{\boldsymbol\theta}}[r] \omega^{T-t} \frac{\sum_{s=T+1}^{T+\beta} \nu_{\boldsymbol\rho}(\boldsymbol\theta\vert s)}{\sum_{k=T-\alpha+1}^{T}\nu_{\boldsymbol\rho} (\boldsymbol\theta \vert k)}  \nu_{\boldsymbol\rho} (\boldsymbol\theta \vert t)
    \\ 
    &\qquad \nabla_{\boldsymbol\rho}  \log\left( \frac{\sum_{s=T+1}^{T+\beta} \nu_{\boldsymbol\rho}(\boldsymbol\theta_t\vert s)}{\sum_{k=T-\alpha+1}^{T}\omega^{T-k}\nu_{\boldsymbol\rho} (\boldsymbol\theta \vert k)}  \nu_{\boldsymbol\rho} (\boldsymbol\theta \vert t) \right) \de\boldsymbol\theta
    \\ 
    &= \mathbb{E}^{\boldsymbol\rho}_{T,\alpha}\left[  \sum_{t=T-\alpha+1}^{T} r_t \omega^{T-t} \frac{\sum_{s=T+1}^{T+\beta} \nu_{\boldsymbol\rho}(\boldsymbol\theta_t\vert s)}{\sum_{k=T-\alpha+1}^{T}\omega^{T-k}\nu_{\boldsymbol\rho} (\boldsymbol\theta_t \vert k)} \right. 
    \\
    &\qquad \left.\nabla_{\boldsymbol\rho} \log \left( \frac{\sum_{s=T+1}^{T+\beta} \nu_{\boldsymbol\rho}(\boldsymbol\theta_t\vert s)}{\sum_{k=T-\alpha+1}^{T}\omega^{T-k}\nu_{\boldsymbol\rho} (\boldsymbol\theta_t \vert k)}  \nu_{\boldsymbol\rho} (\boldsymbol\theta_t \vert t) \right)\right]
\end{align*}

\begin{algorithm}[t]
\caption{Lifelong learning with POLIS}
\label{alg:cap}
\textbf{Inputs}: steps behind $\alpha$, steps ahead $\beta$, regularization $\lambda$, discount factor $\omega$, training period $T_{\text{train}}$, training epochs $N$,
\begin{algorithmic}[1]
\STATE Initialize $\nu_{\boldsymbol{\rho}}$, $t \gets 0$
\WHILE{True} 
    \STATE Sample $\boldsymbol{\theta}_t\sim\nu_{\rho}(t)$
    \STATE Collect new state $s_t$ and reward $r_t$ using $\pi_{\boldsymbol{\theta}_t}$
    \IF{$t$ mod $h = 0$ and $t>1$}
        \FOR{\texttt{$i\in \{1,\dots,N\}$}} 
            \STATE Compute $\widecheck{J}_{t,\alpha}({\boldsymbol{\rho}})$,  $\overline{J}_{t,\alpha,\beta}({\boldsymbol{\rho}})$ and  $B_{t,\alpha,\beta}(\boldsymbol{\rho})$ 
            \STATE $\boldsymbol{\rho} \gets \arg\max_{\boldsymbol{\rho}} \mathcal{L}_{\lambda}(\boldsymbol{\rho})$
         \ENDFOR
    \ENDIF
    \STATE $t \gets t+1$
\ENDWHILE
\end{algorithmic}
\end{algorithm}

\subsection{Adaptation to delays}
\label{subsec:extension_delay_polis}
When the environment is constantly delayed by $\delay$ steps, in state observation or action execution, POLIS can be slightly adapted to account for the shift it induces.
In the original setting, the policy sampled by $\nu_{\vectorialform{\rho}}$ at time $t$ will select an action $a_t$ that will affect state $s_{t+\delay}$.
We, therefore, propose to modify the estimator so as to shift back the reward to parameters of the policy that have actually generated this reward.
This is similar to the idea of dSARSA \cite{schuitema2010control}.
Implementing this shift, the $\beta$-step ahead expected return becomes,
\begin{align*}
    \widehat{J}_{T,\alpha,\beta}({\boldsymbol\rho}) 
    & = \sum_{t=T-\alpha+1+\delay}^{T} \omega^{T-t} \frac{\sum_{s=T+1}^{T+\beta} \widehat{\gamma}^s \nu_{\boldsymbol\rho}(\boldsymbol\theta_{t-\delay}\vert s)}{\sum_{k=T-\alpha+1+\delay}^T \omega^{T-k} \nu_{\boldsymbol\rho}(\boldsymbol\theta_{t-\delay}\vert k)} r_{t}.
\end{align*}
As we can see, for a fixed $\alpha$ the delay reduces the number of data that can be used for the \gls{mis} estimator.
The $\alpha$-step behind expected return can also be adapted accordingly,
\begin{align*}
    \widecheck{J}_{T,\alpha}({\boldsymbol\rho}) = \frac{1}{C_\omega(\alpha-\delay)} \sum_{t=T-\alpha+1-\delay}^{T} \omega^{T-t}  \widecheck{\gamma}^t r_{t}.
\end{align*}
Using these definitions and substituting $\alpha-\delay$ for $\alpha$ in the surrogate objective yields an objective that applies to $\delay$-delayed \gls{dmdp}. 

\section{Experimental Evaluation}
\label{sec:experiments_lifelong}

In this section, we will test POLIS against different baselines in a set of lifelong \gls{rl} tasks.
The first set of experiments will consider an undelayed non-stationary environment to evaluate the ability of the algorithms to face changing dynamics. 
Then, a second set of experiments will consider adding some delay to previous tasks to assess the robustness of POLIS against delay.
Throughout this section, we will make an assumption on the particular cases of \glspl{mdp} that we will be interested in for lifelong \gls{rl}. 
The assumption is that the agent cannot influence part of the state by its actions and that only this part of the state can be non-stationary. 
This is, for instance, a common assumption in financial mathematics to simplify the analysis. 
It is usually assumed that trading only a small investment size does not influence the market dynamics.
This greatly simplifies the experimental setup and allows one to evaluate more clearly the abilities of our algorithm against an observable non-stationarity. 
The assumption is formally displayed below.
\begin{ass}
	\label{ass:factorable_state}
The transition model $\markovchaintransition= (\markovchaintransition_t)_{t \in \naturalnumbers}$ factorises as follows, for every $x = (x^c,x^u) \in \statespace$, $a \in \actionspace$, and $t \in \naturalnumbers$:
\begin{equation}
	\markovchaintransition_t(x'\vert x,a) = \markovchaintransition^c({x'}^c\vert x^c,x^u,a) \markovchaintransition_t^u({x'}^u\vert x^u) .
\end{equation}
\end{ass}

In addition to this assumption, in the task that we consider below, $\markovchaintransition^c$ is deterministic. 
This allows getting the value of the $\alpha$-step behind the expected return by sampling the controllable part of the state with a new policy while the non-stationary part of the state $x^u$ remains fixed. 
Therefore, we have access to the exact value of the gradient of the $\alpha$-step behind the expected return without requiring importance sampling. 

Next, we describe the setting of the experiments in \Cref{subsec:setting_exp_lifelong} before providing the results in \Cref{subsec:results_lifelong}.

\subsection{Setting}
\label{subsec:setting_exp_lifelong}

\subsubsection{Tasks}
First, we describe the general context of lifelong learning. 
The schedule of a lifelong interaction with an environment can be divided into two periods. 
In the first period, which we call the \emph{behavioural period}, a behavioural hyper-policy samples data from the environment. 
This is necessary to collect enough data to compute the first {$\alpha$-step behind expected return} used in our surrogate objective.
Then, in the second period, the agent continues to interact with the environment from the point where the behavioural hyper-policy left it. However, it can now retrain its hyper-policy periodically.
In practice, we retrain every 50 steps in all tested environments.
Each time, 100 steps of the gradient are made.
We refer to this period as the \emph{target period}.
For all tasks, we set $\gamma=\omega=1$.

\noindent\textbf{Undelayed EUR-USD Trading.}\indent The first task is similar to the Trading task in \Cref{subsubsec:tasks_imitation}. 
It also considers the trading of the EUR-USD (\texteuro/\$) currency pair on Forex, but this time on a daily basis.
The agent can choose a continuous action in $[-1,1]$; $1$ and $-1$ correspond to buy or sell with the maximum order size of 100k\$ USD, while action $0$ corresponds to staying flat.
Placing ourselves under \Cref{ass:factorable_state}, we assume that the maximum order size is small compared to the available market liquidity and therefore has no impact on market dynamics.
The state observed by the agent is composed of its current portfolio ($x_{t}^{c}$), that is, exactly the value of its previous action and the current exchange rate of the currency ($x_{t}^{u}$).
We divide historical data into three datasets, 2009-2012, 2013-2016, and 2017-2020; each period has a little more than $1000$ data points.
Historical rates for the period are given in \Cref{subsubsec:datasets_lifelong}.
The reward is defined as $r_{t}=a_t(x_{t+1}^{u}-x_{t}^{u})-f\lvert a_t - x_{t}^{c}\rvert$ where $f$ is a fee which amounts to $1\mathrm{e}{-3}\;\%$ of the investment size.
We set $\alpha$ at 500 and also consider a target period of 500 steps.
It is important to note that, since there are no clear distinctions between training and testing in the lifelong framework, selecting hyperparameters (referring to POLIS parameters, not the hyper-policy) can be complex.
Selecting the best hyperparameters by evaluating the performance on the target period is dangerous as it would obviously overfit and generalise badly if, for example, the target period was extended.
To study this problem, we compare two hyperparameter selection schemes for this task.
For the first, we select hyperparameters on the target period for the dataset 2009-2012 and evaluate them on the other two datasets.
In the second approach, we both select the hyperparameters and evaluate on the target period of the last two datasets.

\noindent\textbf{Undelayed Vasicek Trading.}\indent Because the EUR-USD currency pair is a highly complex asset, we consider the trading of a synthetic rate with smoother non-stationary in order to provide a better signal-to-noise ratio to the agents.
We preserve the overall trading framework and only modify the exchange rate $(x_{t}^{c})_{t\ge0}$ where we substitute a Vasicek process for the historical EUR-USD rates. 
The rest of the framework remains the same. 
The considered Vasicek process is $x_{t+1}^{c} = 0.9 x_{t}^{c} + u_t$, where $u_t\sim\mathcal{N}(0,1)$. 

\noindent\textbf{Undelayed Dam.}\indent This third task considers a water resource management problem, in which a dam is used to control the level of a lake.
The lake gets water from some inflows (e.g. rainwater) while the dam controls its outflows. 
The goal is to satisfy a certain demand (e.g. a town's water supply) with outflowing water, even in drought periods. 
This involves saving rainwater in anticipation while avoiding flooding.
We use the environment model from \cite{castelletti2010tree, pmlr-v80-tirinzoni18a}.
Three different stochastic yearly inflows are considered\footnote{Their means are given in \Cref{subsubsec:datasets_lifelong}}.
The agent has no impact on them, satisfying \Cref{ass:factorable_state}.
As a state, the agent observes the level of the lake for the current day.
The only modification to the original setting of \cite{castelletti2010tree, pmlr-v80-tirinzoni18a} is that the agent does not observe the day of the year to ensure non-stationarity.
Otherwise, the agent can learn to map the day of the year to its expected inflow, and this casts the problem back into stationarity.
The action space is continuous. 
We consider a flooding level of $F=300$ and a daily demand for water of $D=10$.
For some lake level $x$ and some selected out-flow--or action--$a$, a penalty of $c_D = (\max(a-D,0))^2$ is collected for not meeting the demand while an extra penalty of $c_F = (\max(x-F,0))^2$ is added for flooding.
The total cost combines these penalties depending on the inflow profile by weighting the two costs as detailed in \Cref{subsubsec:datasets_lifelong}. 
To better grasp the process's dynamics, we set $\alpha$ to 1000 so that enough years of past data are used inside the estimator. 
We set the target period's length to 500 steps. 
Concerning hyperparameters, it appears that for this environment the results are less sensitive to the choice of hyperparameters, we, therefore, select them on the first inflow profile only.

\noindent\textbf{Delayed Vasicek Trading.}\indent This task is the same as the previous Vasicek trading one with the addition of a delay.
We consider delays $\delay$ in the range $[\![1,10]\!]$ to study the impact of increasing delay on return. 
As explained throughout this chapter, here the agent is memoryless. 
Seeing a historical rate $x_{t-\delay}$ and its portfolio at time $t-\delay$, the agent selects a trade that will be applied at rate $x_t$ and considering its current portfolio at time $t$. 
Therefore, the policy learnt by POLIS must keep track of its portfolio as well as the non-stationarity of the process in order to select its next trades.

\noindent\textbf{Delayed Dam.}\indent In this environment, we consider the water resource management task described above. 
As in the delayed Vasicek trading, we test POLIS against an increasing delay to assess its robustness in this environment. 

\subsubsection{Baselines}
A first obvious baseline is a stationary policy. 
It can be seen as a special case of POLIS where the hyper-policy is constant over time $\nu_{\boldsymbol\rho}(\cdot\vert t)=\nu_{\boldsymbol\rho}(\cdot)$. 
In order to highlight the effect of POLIS' approach, we consider such a stationary hyper-policy $\nu_{\boldsymbol\rho}$ with the exact same structure and optimisation as for POLIS, except for the penalty on the variance and the dependence on time.
Note, however, that, although stationary in between re-training steps, the stationary policy's parameters are also retrained every 50 steps, as for POLIS and other baselines. 
On top of this baseline, we also consider several approaches from the literature, including Pro-OLS and Pro-WLS~\cite{chandak2020optimizing};+, LPG-FTW~\cite{mendez2020lifelong} and ONPG, a baseline mentioned by \cite{chandak2020optimizing} in their experiments as a replica of the idea of \cite{al-shedivat2018continuous}.

\subsubsection{Setting for POLIS}
In this sub-section, we describe more precisely the policy and hyper-policy used inside POLIS. 
Our hyper-policy is composed of two modules. 
The first module is \emph{positional encoding} introduced in \cite{vaswani2017attention} (see \Cref{subsubsec:transformer}). 
It is used here for two reasons. First, its output dimension is a hyper-parameter and can therefore be used to control the input dimension for the next module. 
Second, while the time index can grow infinitely large, the positional encoding's output is bounded. 
This is particularly useful when feeding this value to a \glsfirst{nn}.
The second module, \emph{temporal convolutions}~\cite{oord2016wavenet} is used to scan its input over time. 
One main advantage of convolutions is that they generally excel in finding patterns in series \cite{liotet2020deep}.
Another advantage is their versatility, they accept inputs of different lengths and can be efficiently parallelised. 
Due to their \emph{receptive field}, that is, the size of the support of the convolution, the positional encoding layer should be supplied more than the last time $t$.
This can be done by adding older time steps together with $t$ as input sequence.
This modification does not invalidate the point of view that the hyper-policy depends only on the current time $t$.
The length of the receptive field is $b=2^{l-1}(k-1)$, where $l$ and $k$ are, respectively, the number of layers and the kernel size of the temporal convolution.
As output for time $t$, the temporal convolution returns the mean $\boldsymbol{\mu}_t$ of a normal distribution from which the policy parameter $\boldsymbol\theta_t$ is sampled. 
We chose to restrict the standard deviation of these normal distributions to not depend upon time but can be either re-trained every 50 steps or fixed during the lifelong interaction. 
A graphical representation of the hyper-policy is shown in \Cref{fig:hyper_policy_polis}.
As anticipated above, a great advantage of temporal convolutions is that one could sample the future $N$ policy parameters $(\boldsymbol\theta_i)_{i\in[\![t,t+N]\!]}$ in parallel, for any $N>0$. 
It suffices to feed the time $[\![t-b,t+N]\!]$ to the hyper-policy to obtain these parameters. 

Finally, at the policy level, we consider a simple affine policy with bounded outputs. 
The same applies to the stationary baseline.

\begin{figure}[t]
    \centering
    \includegraphics[labellifelongtrading=hyperpolicypolis]{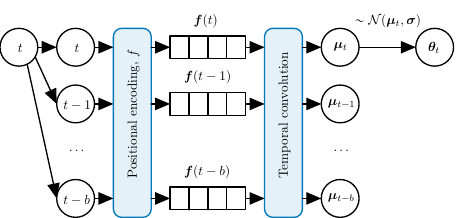}
    \caption{Graphical representation of the hyper-policy when queried at time $t$. 
    First, because the temporal convolution has a receptive field of length $b$, we append the last $b$ times to $t$ in the input fed to a positional encoding.
    The latter outputs a vector $\boldsymbol{f}(t)$ for each time. 
    These vectors are then fed to the temporal convolution which returns the mean $\boldsymbol{\mu}_k$ of each policy parameter $\boldsymbol\theta_k$ at time $k$.
    The last of these is the current policy parameter $\boldsymbol\theta_t$.}
    \label{fig:hyper_policy_polis}
\end{figure}

\subsection{Results}
\label{subsec:results_lifelong}

\noindent\textbf{Undelayed EUR-USD Trading.}\indent We report the results showing the cumulative return over time in \Cref{fig:trading_eurusd}.
In the upper figures, the hyperparameters are selected in the training set (2009-2012).  
Interestingly, for the period 2013-2016, the stationary policy achieves the best performance while POLIS has a very similar performance.
Recall that the stationary policy is only stationary in between re-training steps, however. 
The period 2017-2020 seems more complex as no approach yields a positive return. 
POLIS underperforms most of the baselines, yet not the stationary one.
In the lower figures, the hyperparameters are selected in the testing set (2013-2016 and 2017-2020 combined).  
In these tests, POLIS performs more similarly to other baselines, and no approach clearly outperforms the others. 

\begin{figure}[t]
    \centering
    \includegraphics[labellifelongtradingtwo=polistradingeurusd]{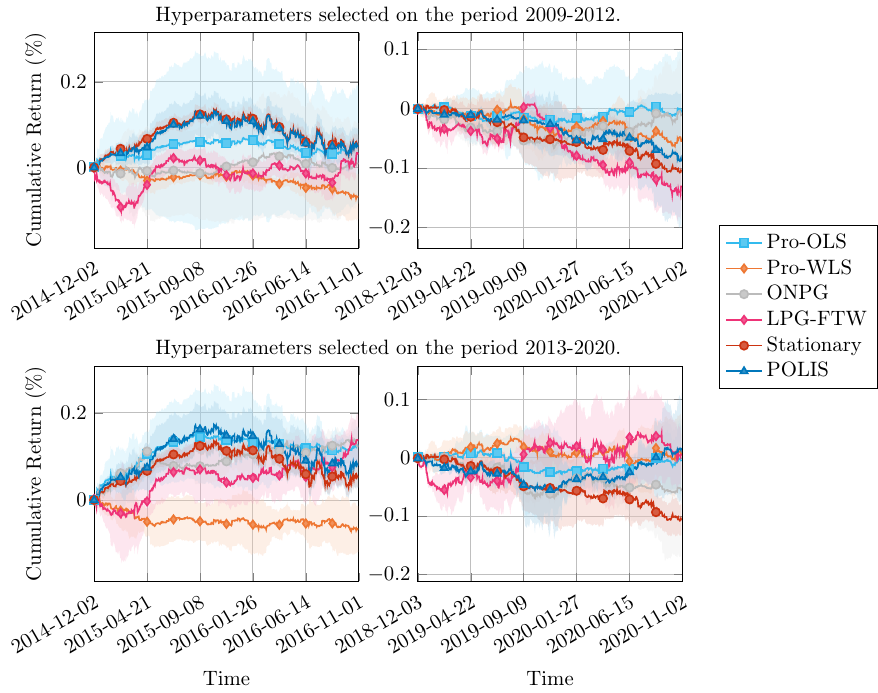}
    \caption{Cumulative return in percentage for the lifelong trading on the EUR-USD currency pair for two test datasets, 2013-2016 (left) and 2017-2020 (right). Note that only the target period is represented for each dataset.
    Hyper-parameters are selected for the period 2009-2012 (above) or 2013-2020 (below).
    The mean and one standard deviation shaded area are computed over 3 seeds.
    Markers correspond to the time of the re-train step.}
    \label{fig:trading_eurusd}
\end{figure}

\noindent\textbf{Undelayed Vasicek Trading.}\indent On this task, we have tested the set of hyperparameters selected for EUR-USD trading. 
The cumulative returns for the Vasicek trading problem are reported in \Cref{fig:trading_vasicek}. 
POLIS seems to be exploiting the clearer signals given by the synthetic asset more efficiently than the baselines. It achieves a higher final return and a smaller variance than other baselines. 
In particular, it clearly outperforms the stationary policy, which collects a negative return, most likely due to the fees.

As an extra experiment in this environment, we study the effect of POLIS' hyper-parameters $\lambda$ for the surrogate penalisation on the variance and $\beta$ for the number of steps ahead considered for the optimisation. 
For consider $\lambda\in[\![1, 100]\!]$ and of $\beta\in[\![2, 100]\!]$ \footnote{For $\beta=1$, the left term inside the \Renyi divergence in \Cref{lem:variance_bound} does not involve a mixture of distributions anymore and can be handled as in \cite{papini2019optimistic}.}. 
We are interested in studying how $\lambda$ and $\beta$ allow trading between the return and the standard deviation of the rewards.
The results, reported \Cref{fig:heatmap_vasicek}, suggest that a smaller $\beta$ generally yields a higher return, but at the cost of a
higher standard deviation. 
The effect of $\lambda$ on this trade-off is less clear.
However, as designed, this parameter allows one to control the standard deviation of the rewards.

\begin{figure}[t]
    \centering
    \includegraphics[labellifelongtrading=polisvasicektrading]{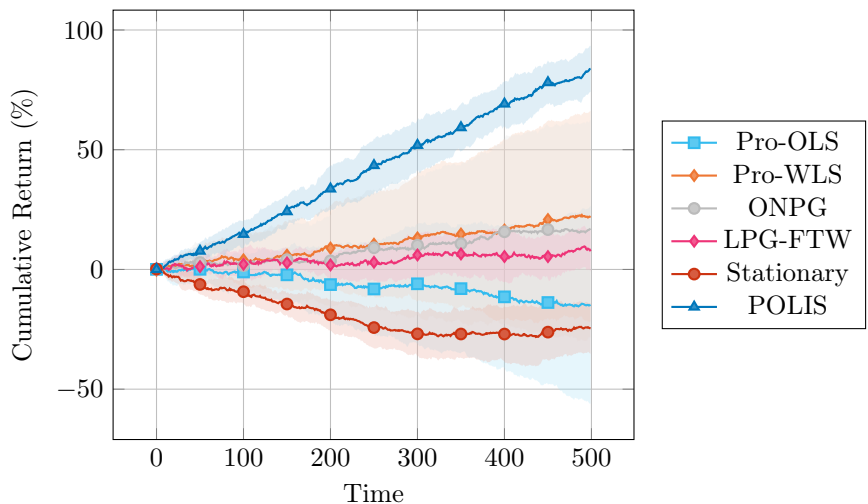}
    \caption{Cumulative return in percentage for the lifelong trading on the Vasicek process. 
    The mean and one standard deviation shaded area are computed over 10 seeds.
    Markers correspond to the time of the re-train step.}
    \label{fig:trading_vasicek}
\end{figure}

\begin{figure*}[t]
\centering
\begin{subfigure}{.45\textwidth}
  \centering
  \includegraphics[width=\linewidth]{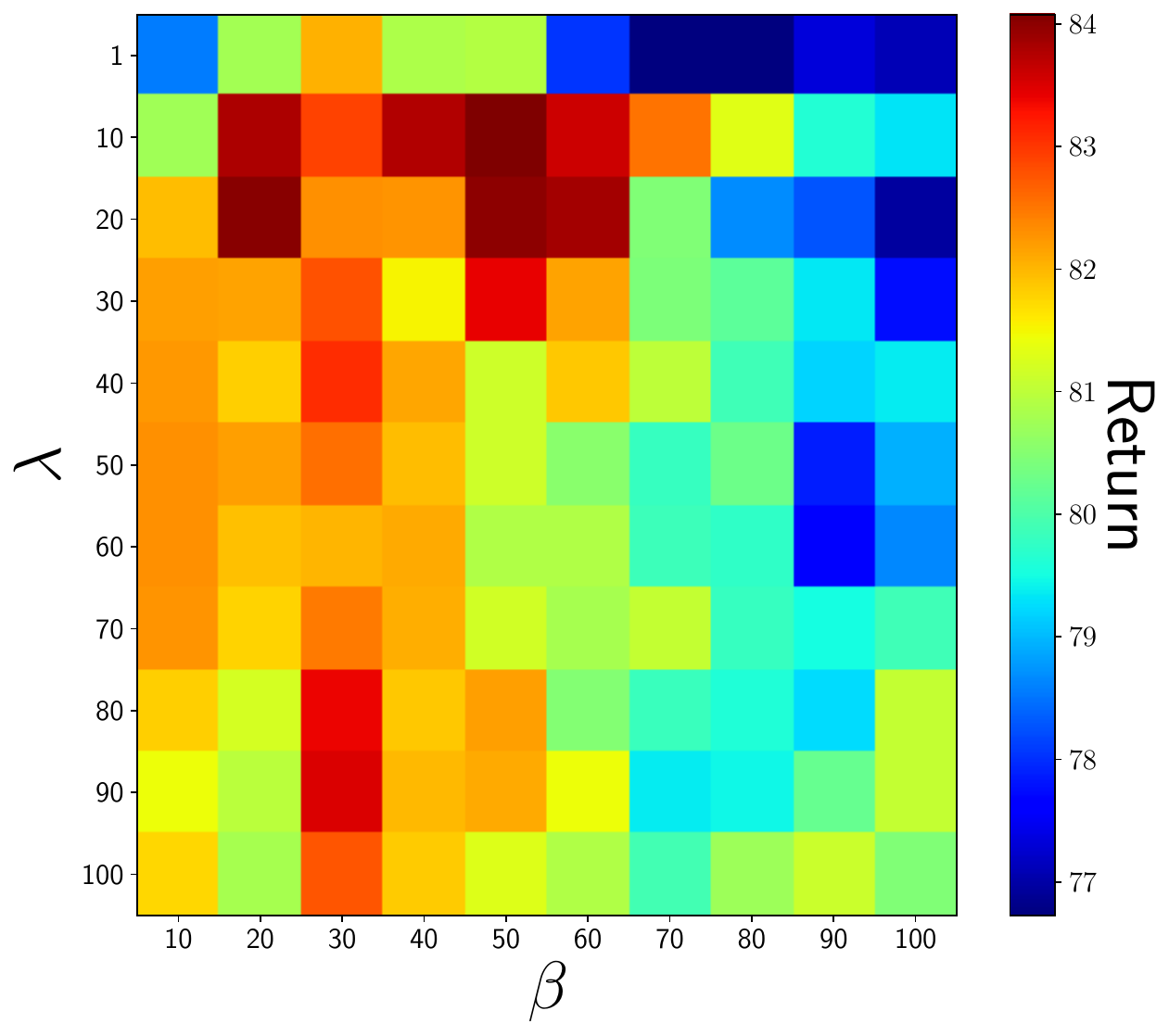}
\end{subfigure}%
\hfill
\begin{subfigure}{.45\textwidth}
  \centering
  \includegraphics[width=\linewidth]{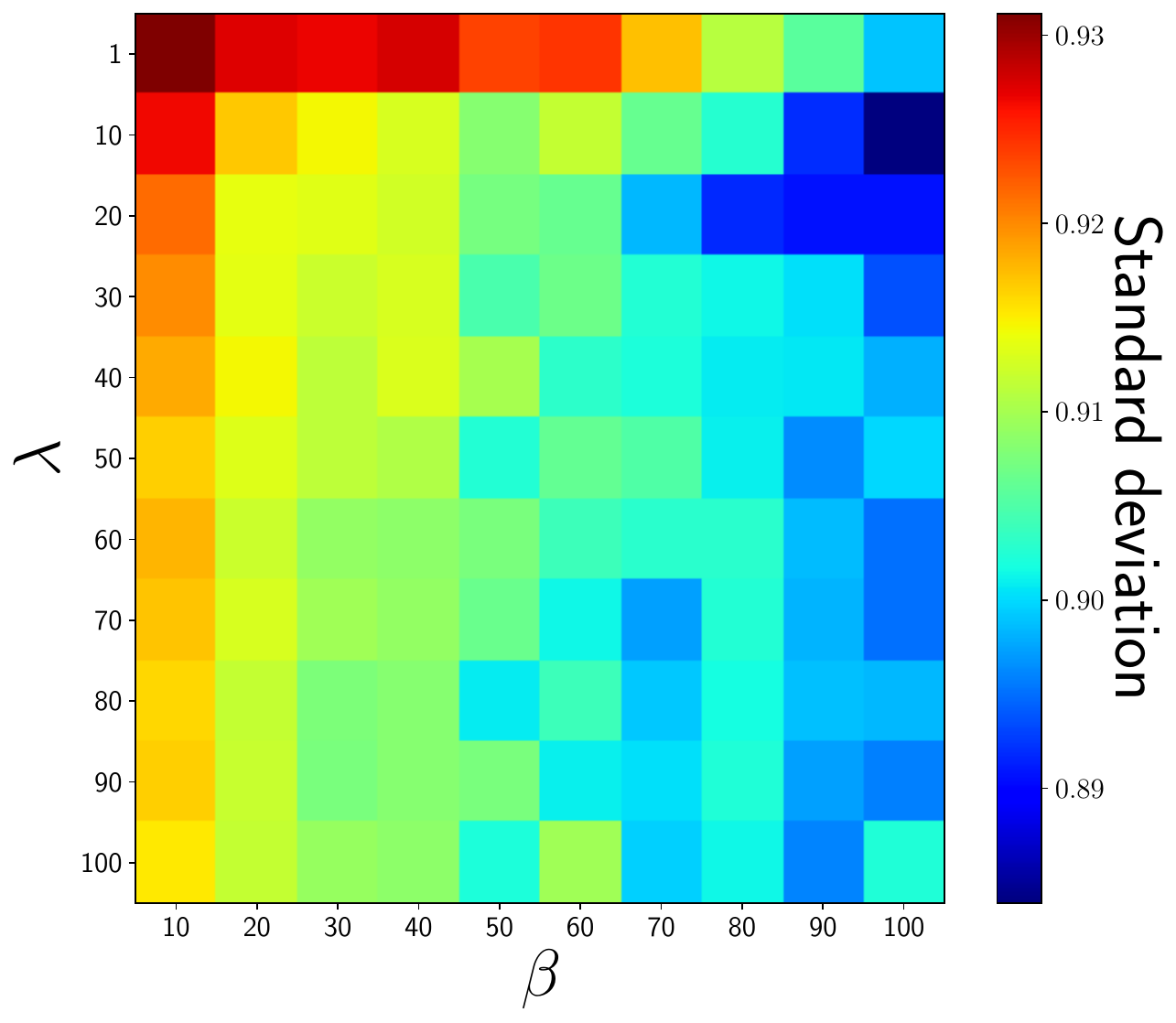}
\end{subfigure}%
\hfill
\captionof{figure}{Return (left) and standard deviation of the rewards (right) of POLIS on the Vasicek process as a function of $\lambda$
and $\beta$.}\label{fig:heatmap_vasicek}
\end{figure*}

\noindent\textbf{Undelayed Dam.}\indent We report the results of the experiment in \Cref{tab:exp_dam}. 
Surprisingly, out of all the baselines, the stationary hyper-policy obtains the best performance over the 3 inflows. 
Baselines, other than the stationary one, have lower returns and exhibit a tendency to have a higher standard deviation.
POLIS achieves much better returns than those baselines, including in terms of variance, and marches the stationary policy. 
This suggests that our approach is able to avoid extra non-stationarity in tasks where it is not needed. 

\begin{table}[t]
    \centering
    \begin{tabular}{l|lll}
          & \multicolumn{1}{c}{Inflow 1}       & \multicolumn{1}{c}{Inflow 2}        & \multicolumn{1}{c}{Inflow 3}       
        \\ \hline\hline
        Pro-OLS&$-2.6 \pm 0.4$&$-5.2 \pm 5.1$&$-3.8 \pm 0.7$\\
        Pro-WLS&$-5.5 \pm 4.3$&$-8.5 \pm 9.7$&$-8.4 \pm 4.2$\\
        ONPG&$-5.1 \pm 3.2$&$-1.4 \pm 0.2$&$-4.1 \pm 0.5$\\
        LPG-FTW&$-2.3 \pm 0.3$&$-11.9 \pm 8.6$&$-4.7 \pm 1.9$\\
        Stationary&$-2.2 \pm 0.1$&$-1.5 \pm 0.0$&$-3.2 \pm 0.2$\\
        POLIS&$-2.2 \pm 0.2$&$-1.5 \pm 0.0$&$-3.2 \pm 0.2$\\
    \end{tabular}
    \caption{Lifelong learning on the Dam environment for each of 3 inflow profiles. Mean return on the target period and standard deviation over 3 seeds. Reported results are divided by an order of $1e3$ for aesthetic.}\label{tab:exp_dam}
\end{table}


\noindent\textbf{Delayed Vasicek Trading.}\indent The results are provided in \Cref{fig:delay_vasicek_trading}. 
As expected, as the delay grows, the performance of POLIS tarnishes. 
However, POLIS is quite robust to delay and, even for larger delays of 10 steps, obtains similar performances to the best-undelayed baselines of \Cref{fig:trading_vasicek}.

\begin{figure}[t]
    \centering
    \includegraphics[labellifelongvasicek=polisdelayvasicek]{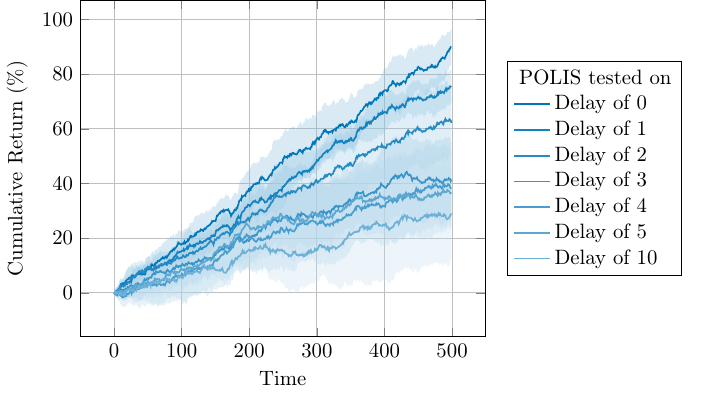}
    \caption{Return obtained by running POLIS with an increasing delay on the Vasicek trading environment.}
    \label{fig:delay_vasicek_trading}
\end{figure}

\noindent\textbf{Delayed Dam.}\indent The results on the delayed test for the Dam are reported in \Cref{tab:exp_dam_delay}. 
The delay has a clear negative impact on performance. 
Surprisingly, the performance does not seem to drop further when going from a delay of 1 to a delay of 10 steps.  
We also note that the delay does not seem to have much impact on the variance, except for the second inflow profile.
The performance of POLIS, even for larger delays, is on par with the performance of the undelayed experts.

\begin{table}[t]
    \centering
    \begin{tabular}{l|llll}
          POLIS & \multicolumn{1}{c}{Undelayed} &
          \multicolumn{1}{c}{Delay of 1}       & \multicolumn{1}{c}{Delay of 5}        & \multicolumn{1}{c}{Delay of 10}       
        \\ \hline\hline
        Infow 1&$-2.2 \pm 0.2$&$-3.1 \pm 0.3$&$-3.2 \pm 0.2$&$-3.2 \pm 0.2$\\
        Infow 2&$-1.5 \pm 0.0$&$-1.4 \pm 0.2$&$-1.5 \pm 0.2$&$-1.5 \pm 0.3$\\
        Infow 3&$-3.2 \pm 0.2$&$-4.1 \pm 0.3$&$-4.1 \pm 0.2$&$-4.1 \pm 0.2$\\
    \end{tabular}
    \caption{Returns obtained by POLIS on the delayed Dam environment.
    Mean return on the target period and standard deviation over 3 seeds. 
    Reported results are divided by an order of $1e3$ for aesthetic.}\label{tab:exp_dam_delay}
\end{table}
\section{Conclusion}
In this chapter, we have explored the possibility of addressing constant delays in the state observation or in the action execution by using a non-stationary memoryless policy. 
Indeed, memoryless policies naturally induce non-stationarity of the dynamics due to the partial observation of the augmented state.
Notably, there exists an optimal memoryless policy in the space of non-stationary and Markovian policies\cite[Theorem~5.1]{derman2021acting}.

Based on this result, we design an approach able to adapt to the intra-episode non-stationarity of the process. 
Inspired by the literature on lifelong learning that considers such scenarios, we propose to learn a hyper-policy that, given time as input, samples the parameters of a policy to be queried at that time.
In order to optimise for future performance, we design an estimator for it whose foundation relies on the assumption of smooth non-stationarity.
Indeed, this estimator reuses past samples to estimate the future and exponentially discounts them as they get older.
Under smoothness conditions, we demonstrate that the bias of the estimator is bounded and that the exponential discounting can control it to some extent. 
Notably, the bias vanishes when the environment and the hyper-policy are stationary.
Two terms are added to the objective on top of the future performance estimator.
First, the estimation of the past performance for the new set of hyperparameters is added. 
This term fights catastrophic forgetting by forcing the hyper-policy to perform well on past samples. 
Second, a penalisation on the variance is added. 
It forbids excessive non-stationarity of the hyper-policy that would be symptomatic of an overfitting of the past non-stationarity of the environment and a poor generalisation in the future.
Optimising this objective by gradient optimisation yields our algorithm, POLIS.

Leveraging this algorithm, we propose a simple modification to take into account the delay. 
As in dSARSA \cite{schuitema2010control}, we shift the rewards backward in time during the estimation of the performance to realign selected policies with their outcomes.

An empirical evaluation of POLIS in different lifelong scenarios, with and without delay, has then been presented. 
The algorithm has demonstrated that it can efficiently learn the structure of the non-stationarity to adapt for the future.
POLIS also avoids extra non-stationarity where it is not needed, thanks notably to the penalisation term of the objective.
Tested on delayed tasks, POLIS demonstrated robust behaviour as the delay increases.
Yet, POLIS sometimes has a high variance in its returns on the Dam environment, compared to the stationary hyper-policy.

Future research could consider stochastic delays. 
In this case, the biggest advantage of a memoryless policy is that its input is a state in $\statespace$ and does not depend on the delay as augmented approaches would. 
Therefore, memoryless policies have an input of small and fixed dimensions $\cardinal{\statespace}$. 
\cleardoublepage
\chapter{Multiple Action Delays}
\label{chap:multi_action_delay}

\section{Introduction}

In this chapter, we explore a different delay paradigm that can have useful practical applications. 
We would like to model the possibility for an action to see its effect spread over several time steps.
A single action, selected at the instant $t$ would affect the reward and transitions of several transitions in the future.
This property is akin to having multiple delays for a single action.
We name this new delay setting as \emph{multiple action delay}.

To motivate the setting, recall that in the discussion on related works of \Cref{subsec:augmented_related}, we have seen that \cite{xiao2020thinking} obtained better theoretical results by including some extra quantity in the augmented state. 
This quantity was defined as the ``vector-to-go'', i.e. the amount of an action that still has to apply to the environment. 
This concept has similarities to our multiple action delay, where the action could be spread with different probabilities to future time steps. 
The possibility that multiple past actions influence current dynamics is also studied in the control literature under the name of distributed delay or multiple delays \cite{gu2003survey,richard2003time}.

This setting, in turn, implies that several actions from the past may influence the current transition and reward.
It is reminiscent of higher-order Markov chains. 
However, as explained in \Cref{sec:markov_chain}, these models usually consider an expensively high number of parameters.
This motivates our choice of modelling multiple action delay by an \gls{mtd} model (see~\Cref{def:mixture_transition_distribution}). 
We call this model of the environment the \gls{mtdmdp}. 
In \Cref{subsec:formalisation_mtdmdp} we formally define four different ways in which the \gls{mtd} model can be applied to \glspl{mdp} to yield \glspl{mtdmdp}.
Then, in \Cref{sec:th_analysis_multiple_a_delay}, we analyse this new framework theoretically. 
We first prove that \gls{mtdmdp} share similarities with the constant delay that we have seen before in this dissertation. 
Notably, they can be cast back to \glspl{mdp} by augmentation of the state.
In \Cref{subsec:analysis_psmmdp_pmmmdp}, we show that an optimal policy for the average reward criteria would also be optimal for two of the four aforementioned \glspl{mtdmdp} models.
They are therefore not ``affected'' by the delay.
Lastly, in \Cref{subsec:analysis_ismmdp}, we analyse more in detail a more interesting \gls{mtdmdp}. 
The latter is particularly interesting, as it contains constant action execution delays as a special case.
In particular, we will be interested in studying how the return or average reward of an agent evolves as the delay's distribution changes.
We will also study the structure of this process to discover which theoretical \gls{rl} algorithm can be readily applied to it. 

The insights from the theoretical results are then empirically explored by studying the behaviour of some \gls{rl} algorithms when confronted with multiple action delays.

\section{Formalisation of the Problem}
\label{subsec:formalisation_mtdmdp}

\subsection{Definitions}

In the preliminaries, we have seen two models for mixtures of transition probabilities, the \gls{mtd} of \Cref{def:mixture_transition_distribution} and the \gls{mtdg} of \Cref{def:multimatrix_mixture_transistion_distribution}.
From these models, we will build delayed processes where the transition is defined as a mixture of transition probabilities. 
There can be various ways to define such a model, and we will explore four of them in the following.
We use the term \gls{mtdmdp} to refer to these processes as a whole.
Let us now introduce the first model.

\begin{defi}[\Gls{ismmdp}]
    From an \gls{mdp} $\markovdecisionprocess$ with transition $p$ and a delay $\delay$, the \gls{ismmdp} transition process is defined as follows,
    \begin{align*}
        \probability(S_{t+1}=s_{t+1}\vert& S_{t}=s_{t},A_{t}=a_{t},\dots,A_{t-\delaymax}=a_{t-\delaymax})
        \\
        &=\sum_{g=0}^\delaymax \lambda_g \probability(S_{t+1}=s_{t+1}\vert S_{t}=s_{t},A_{t}=a_{t-g})
        \\
        &= \sum_{g=0}^\delaymax \lambda_g p(s_{t+1}\vert s_{t},a_{t-g}).
    \end{align*}
    Its reward reads,
    \begin{align*}
        R(s_{t},a_{t},\dots,a_{t-\delaymax}) = \sum_{g=0}^\delaymax \lambda_g \rewardfunction(s_{t},a_{t-g}).
    \end{align*}
\end{defi}

\begin{remark}
    The classic constant $\delay$-delayed \gls{dmdp} falls in this category, by setting $\lambda_0=\dots=\lambda_{\delay-1}=0$ and $\lambda_\delay=1$.
\end{remark}

This first definition is based on the assumption that whatever delayed action is, it will be applied to the current state. 
Therefore, the action is applied to the \emph{instantaneous} state instead of the \emph{past} state.
This corresponds to the letter "I" in the name.
The second important letter is "S" because the model is built on the regular \gls{mtd} model, which considers a \emph{single} transition matrix.
We now consider another model based on the \gls{mtd} but using past states for the application of the actions, hence the "P" in its name.

\begin{defi}[\Gls{psmmdp}]
    From an \gls{mdp} $\markovdecisionprocess$ with transition $p$ and a delay $\delay$, the \gls{psmmdp} transition process is defined as follows,
    \begin{align*}
        \probability(S_{t+1}=s_{t+1}\vert& S_{t}=s_{t},A_{t}=a_{t},\dots,S_{t-\delaymax}=s_{t-\delaymax},A_{t-\delay}=a_{t-\delaymax})
        \\
        &=\sum_{g=0}^\delaymax \lambda_g \probability(S_{t+1}=s_t\vert S_{t}=s_{t-g},A_{t}=a_{t-g})
        \\
        &= \sum_{g=0}^\delaymax \lambda_g \transitionfunction(s_t\vert s_{t-g},a_{t-g}).
    \end{align*}
    Its reward reads,
    \begin{align*}
        R(s_{t},a_{t},\dots,s_{t-\delaymax},a_{t-\delaymax}) = \sum_{g=0}^\delaymax \lambda_g \rewardfunction(s_{t-g},a_{t-g}).
    \end{align*}
\end{defi}

Note how the transitions and rewards are conditioned on a past state $s_{t-g}$. 
We now define the last two models, using the \gls{mtdg} model, which adds more degrees of freedom. 
Since they depend on \emph{multiple} transition matrices, we use the letter "M" in their name.

\begin{defi}[\Gls{immmdp}]
    For a delay $\delaymax$ and \glspl{mdp} $(\markovdecisionprocess_g)_{g\in[\![1,\delaymax]\!]}$ with respective transitions $(p_g)_{g\in[\![1,\delaymax]\!]}$, the \gls{immmdp} transition process is defined as follows,
    \begin{align*}
        \probability(S_{t+1}=s_{t+1}\vert& S_{t}=s_{t},A_{t}=a_{t},\dots,A_{t-\delaymax}=a_{t-\delaymax})
        \\
        &=\sum_{g=0}^\delaymax \lambda_g \probability(S_t=s_t\vert S_{t}=s_{t},A_{t-g}=a_{t-g})
        \\
        &= \sum_{g=0}^\delaymax \lambda_g \transitionfunction_g(s_{t+1}\vert s_{t},a_{t-g}).
    \end{align*}
    Its reward reads,
    \begin{align*}
        R(s_{t},a_{t},\dots,a_{t-\delaymax}) = \sum_{g=0}^\delaymax \lambda_g \rewardfunction(s_{t},a_{t-g}).
    \end{align*}
\end{defi}

\newpage
\begin{defi}[\Gls{pmmmdp}]
\label{def:pmmmdp}
    For a delay $\delaymax$ and \glspl{mdp} $(\markovdecisionprocess_g)_{g\in[\![1,\delaymax]\!]}$ with respective transitions $(p_g)_{g\in[\![1,\delaymax]\!]}$, the \gls{pmmmdp} transition process is defined as follows,
    \begin{align*}
        \probability(S_{t+1}=s_{t+1}\vert& S_{t}=s_{t},A_{t}=a_{t},\dots,S_{t-\delaymax}=s_{t-\delaymax},A_{t-\delay}=a_{t-\delaymax})
        \\
        &=\sum_{g=0}^\delaymax \lambda_g \probability(S_{t+1}=s_{t+1}\vert S_{t-g}=s_{t-g},A_{t-g}=a_{t-g})
        \\
        &= \sum_{g=0}^\delaymax \lambda_g \transitionfunction_g(s_{t+1}\vert s_{t-g},a_{t-g}).
    \end{align*}
    Its reward reads,
    \begin{align*}
        R(s_{t},a_{t},\dots,s_{t-\delaymax},a_{t-\delaymax}) = \sum_{g=0}^\delaymax \lambda_g \rewardfunction(s_{t-g},a_{t-g}).
    \end{align*}
\end{defi}

Note that the initial state distribution has not been defined. 
One possibility--which we adopt in this chapter--is to initialise the process as a constantly $\delaymax$-delayed \gls{dmdp}: the first $\delaymax$ actions are sampled uniformly at random in the action space. 
\Cref{tab:mtdmdp_properties} summarises the properties of each process.

\begin{table}[t]
\centering 
    \begin{tabular}{cc|c|c|}
        \cline{3-4}
        &&\multicolumn{2}{c|}{\textbf{State used  for the transition}}\\
        \cline{3-4}
        & & Past & Instantaneous  \\
        \hline
        \multicolumn{1}{|c|}{\multirow{2}{*}{\makecell{\textbf{Transition}\\\textbf{matrices}}}}&Single&\gls{psmmdp}&\gls{ismmdp}\\
        \cline{2-4}
        \multicolumn{1}{|c|}{}&Multiple&\gls{pmmmdp}&\gls{immmdp}\\
        \hline
    \end{tabular}
    \caption{The main property of each of the four processes derived from the \gls{mtd} and \gls{mtdg} models applied to \glspl{mdp}.}
    \label{tab:mtdmdp_properties}
 \end{table}

\subsection{Objective and Assumptions}
Note that we do not assume that the agent is aware of which action has been applied to the environment at each step, placing ourselves in the anonymous framework. 
Concerning the objective within the framework of \gls{mtdmdp}, we will study the expected discounted return with infinite horizon or the average reward criterion.
In particular, we will be interested in the effect of the choice of a delay vector $\boldsymbol\lambda=(\lambda_0,\dots,\lambda_\delaymax)$ on the performance of an agent. 
Concerning the information structure (see \Cref{subsec:control_theory}), as for classic constant delay $\delay$, we consider that an agent has access to the history $h_t = (h^s_t,h^r_t,h^a_t)$ at time $t$, composed of $h^s_t = (s_i)_{0\le i\le t-\delay}$ the history of states, $h^r_t = (r_i)_{0\le i\le t-\delay}$ the one of rewards and $h^a_t = (r_i)_{0\le i\le t}$ the one of actions. 

In the theoretical analysis, we will assume finite state and action spaces to simplify the problem and leverage the results from theoretical \gls{rl}.

\subsection{Notations}
We will consider the state distribution as defined in \Cref{subsec:def_state_distrib} for the expected discounted return and the average reward objectives. 
Because some results can apply to both with similar computations, we may drop the "AVG" or "$\gamma$" under-script in these cases for simplicity. 
Moreover, we will use the notation $\augmentedstatespace=\statespace\times\actionspace^{\delaymax}$ for the augmented state space of \gls{ismmdp} and \gls{immmdp}, similar to that of constantly delayed \gls{dmdp}. 
For \gls{psmmdp} and \gls{pmmmdp}, we define the augmented state space as $\orderdstatespace=\statespace^{\delaymax}\times\actionspace^{\delaymax}$. 
Due to their similarity to constant delay \gls{dmdp} one can easily show that a policy in \gls{ismmdp} and \gls{immmdp} belongs to the set $\delayedpolicyspace$~(see \Cref{subsubsec:theory_augmented_space} for reference). 
\gls{psmmdp} and \gls{pmmmdp} consider the past state in their transition model, therefore they do not add more recent information to their augmented state than what is already available in the augmented state of \gls{ismmdp} and \gls{immmdp}. 
This means that the policies over the former are history-dependent from the point of view of the latter and also belong to $\delayedpolicyspace$. 
We note their policy set $\higherorderpolicyspace \subset \delayedpolicyspace$
To prevent confusion, we will note $\orderdpolicy$ a policy for \gls{psmmdp} and \gls{pmmmdp}.
Lastly, we will now define the notation for the state and action distributions in a \gls{mtdmdp}. 
Consider $s\in\statespace$, $a\in\actionspace$, $x\in\augmentedstatespace$, $\widebar x\in\orderdstatespace$, $\pi\in\Pi$ and $\delayedpolicy,\orderdpolicy\in\delayedpolicyspace$.
Then, the distributions for the undelayed policy $\pi$ are written,
\begin{enumerate}
    \item $\discountedstateoccupancydistribution[][\pi](s)$ for the state distribution in an \gls{mdp} under policy $\pi$;
    \item $\discountedstateoccupancydistribution[][\pi](s,a)$ is the state-action distribution in an \gls{mdp} under policy $\pi$;
\end{enumerate} 
for \gls{ismmdp} and \gls{immmdp}, the distributions read,\footnote{As one can see, we have made the choice to not indicate explicitly the input space in the notation in order to keep them readable. The input space is clear from the quantity at which the distribution is evaluated.}
\begin{enumerate}
\setcounter{enumi}{2}
    \item $\delayeddiscountedstateoccupancydistribution[][\delayedpolicy](x)$ is the distribution over $\augmentedstatespace$ in \gls{ismmdp} and \gls{immmdp} under policy $\delayedpolicy$;
    \item $\delayeddiscountedstateoccupancydistribution[][\delayedpolicy](x,a)$ is the distribution over $\augmentedstatespace\times\actionspace$ in \gls{ismmdp} and \gls{immmdp} under policy $\delayedpolicy$.
    \item $\delayeddiscountedstateoccupancydistribution[][\delayedpolicy](s)$ is the distribution over $\statespace$ in \gls{ismmdp} and \gls{immmdp} under policy $\delayedpolicy$;
    \item $\delayeddiscountedstateoccupancydistribution[][\delayedpolicy](s,a)$ is the distribution over $\statespace\times\actionspace$ in \gls{ismmdp} and \gls{immmdp} under policy $\delayedpolicy$.
\end{enumerate} 
and similarly, for \gls{psmmdp} and \gls{pmmmdp}, the distributions read
\begin{enumerate}
\setcounter{enumi}{6}
    \item $\delayeddiscountedstateoccupancydistribution[][\orderdpolicy](\widebar x)$ is the distribution over $\orderdstatespace$ in \gls{psmmdp} and \gls{pmmmdp} under policy $\orderdpolicy$;
    \item $\delayeddiscountedstateoccupancydistribution[][\orderdpolicy](\widebar x,a)$ is the distribution over $\orderdstatespace\times\actionspace$ in \gls{psmmdp} and \gls{pmmmdp} under policy $\orderdpolicy$.
    \item $\delayeddiscountedstateoccupancydistribution[][\orderdpolicy](s)$ is the distribution over $\statespace$ in \gls{psmmdp} and \gls{pmmmdp} under policy $\orderdpolicy$;
    \item $\delayeddiscountedstateoccupancydistribution[][\orderdpolicy](s,a)$ is the distribution over $\statespace\times\actionspace$ in \gls{psmmdp} and \gls{pmmmdp} under policy $\orderdpolicy$.
\end{enumerate} 
\section{Theoretical Analysis}
\label{sec:th_analysis_multiple_a_delay}

In this section, we analyse the properties of the different models.
First of all, we analyse how the \glspl{mtdmdp} is related to \glspl{mdp} in \Cref{subsec:from_mtdmdp_mdp}.
Then, focusing on the average reward criteria, we demonstrate in \Cref{subsec:analysis_psmmdp_pmmmdp} that the problem of learning in a \gls{psmmdp} or \gls{pmmmdp} is equivalent to learning in the underlying undelayed \gls{mdp}.
Finally, in \Cref{subsec:analysis_ismmdp}, we consider the task of learning a policy in an \gls{ismmdp}--which is more involved.
We first illustrate some of its peculiarities before concluding on the \gls{rl} algorithms that can or cannot be applied to this case.
The reader may have noticed that we have set aside \gls{immmdp}. 
They are indeed a more complex setting and would require a future analysis of their own.

\subsection{From MTD-MDPs back to MDPs}
\label{subsec:from_mtdmdp_mdp}

Our first result highlights the different relations between the aforementioned distributions, given a fixed delayed policy.
We first show the result for \gls{ismmdp} and \gls{immmdp}.  

\begin{prop}[Relations between distributions on $\statespace$, $\augmentedstatespace$, $\statespace\times\actionspace$ and $\augmentedstatespace\times\actionspace$]
Let $\delayedpolicy$ be a policy on an \gls{ismmdp} or an \gls{immmdp} that induces a distribution $\delayeddiscountedstateoccupancydistribution[][\delayedpolicy]$ over the augmented state space. Then, the previous distributions 3 to 6 are related to $\delayeddiscountedstateoccupancydistribution[][\delayedpolicy]$ in the following way. 
Let $s\in\statespace$, $x\in\augmentedstatespace$ and $a\in\actionspace$,
\begin{align}
    &\discountedstateoccupancydistribution[][\delayedpolicy](x,a) = \discountedstateoccupancydistribution[][\delayedpolicy](x) \delayedpolicy(a\vert x),
    \label{eq:x_a_from_distrib_x}\\
    &\discountedstateoccupancydistribution[][\delayedpolicy](s) = \int_{\augmentedstatespace}  \delta_{s} (e_0^\top x) \delayeddiscountedstateoccupancydistribution[][\delayedpolicy](x)\;\de x
    \label{eq:s_from_distrib_x}\\
    &\discountedstateoccupancydistribution[][\delayedpolicy](s,a) = \int_{\augmentedstatespace}  \delta_{s} (e_1^\top x) \delta_{a} (e_1^\top x) \delayeddiscountedstateoccupancydistribution[][\delayedpolicy](x)\;\de x,
    \label{eq:s_a_from_distrib_x}
\end{align}
where we use $(e_i)_{[\![0,\delaymax]\!]}$ as a basis on $\augmentedstatespace\times\actionspace$. For $x=(s,a_1,\dots,a_\delaymax)$, we set $e_0^\top x=s$ and $e_i^\top x=a_i$.
\end{prop}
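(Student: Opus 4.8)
The plan is to establish each of the three identities in \Cref{eq:x_a_from_distrib_x}, \Cref{eq:s_from_distrib_x}, and \Cref{eq:s_a_from_distrib_x} by going back to the definitions of the state-action occupancy distributions given in \Cref{subsec:def_state_distrib}, specialized to the augmented \gls{mdp} that an \gls{ismmdp} or \gls{immmdp} reduces to. The key preliminary observation, which I would state first, is that by the state-augmentation result (analogous to \cite{bertsekas1987dynamic,altman1992closed} and proved later in this chapter for \glspl{mtdmdp}), both an \gls{ismmdp} and an \gls{immmdp} are genuine \glspl{mdp} on the augmented state space $\augmentedstatespace=\statespace\times\actionspace^{\delaymax}$, and $\delayedpolicy$ is a Markovian policy on that space. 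Hence $\delayeddiscountedstateoccupancydistribution[][\delayedpolicy]$ is an ordinary occupancy distribution of an \gls{mdp}, and all the standard manipulations apply.

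For \Cref{eq:x_a_from_distrib_x}, I would simply note that this is the defining factorization of a state-action occupancy distribution into the state occupancy distribution times the (Markovian) policy, exactly as in the passage from $\discountedstateoccupancydistribution[\gamma][\pi](s,a)$ to $\discountedstateoccupancydistribution[\gamma][\pi](s)$ in \Cref{subsec:def_state_distrib}; one writes
\begin{align*}
    \discountedstateoccupancydistribution[][\delayedpolicy](x,a) = (1-\gamma)\expectedvalue\left[\sum_{t=1}^H \gamma^t \Ind(X_t=x,A_t=a)\right]
\end{align*}
and conditions on $X_t=x$, using that $A_t\sim\delayedpolicy(\cdot\vert X_t)$ is independent of the remaining randomness given $X_t$. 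For \Cref{eq:s_from_distrib_x}, the point is that the ``true'' current state $s$ is precisely the first coordinate $e_0^\top x$ of the augmented state $x=(s,a_1,\dots,a_\delaymax)$; so the event $\{S_t=s\}$ equals the event $\{e_0^\top X_t = s\}$, and marginalizing $\delayeddiscountedstateoccupancydistribution[][\delayedpolicy]$ over all augmented states whose first coordinate is $s$ — which is what the Dirac factor $\delta_s(e_0^\top x)$ inside the integral encodes — recovers $\discountedstateoccupancydistribution[][\delayedpolicy](s)$. \Cref{eq:s_a_from_distrib_x} is the same argument carried out jointly for the state and the executed action; here I would flag the one point that needs care, namely which coordinate of $x$ (equivalently, which $e_i$) corresponds to the action actually applied at the current transition. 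In the \gls{ismmdp} / \gls{immmdp} convention the action buffer is $(a_1,\dots,a_\delaymax)$ with $a_1$ the oldest, so the transition and reward at the current step are driven (for the constant-delay component) by $a_1 = e_1^\top x$; I would make this indexing convention explicit and then the identity follows by marginalizing $\delayeddiscountedstateoccupancydistribution[][\delayedpolicy]$ against $\delta_s(e_1^\top x)\,\delta_a(e_1^\top x)$ — wait, this should pair the state coordinate $e_0^\top x$ with the action coordinate $e_1^\top x$, so I would double-check the statement's indexing against the chapter's augmented-state convention and reconcile it, since the displayed equation writes $e_1^\top x$ for both.

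The main obstacle, then, is not analytical depth — all three identities are essentially bookkeeping with Dirac marginals over the occupancy distribution of a known \gls{mdp} — but rather getting the coordinate/basis conventions exactly right: one must be consistent about whether $e_0$ picks out the state or the first action, and about which buffer slot holds the action that governs the current transition (which in turn depends on whether one reads the buffer oldest-first or newest-first, and on the Dirac-product structure of $\augmentedtransitionfunction$ defined in \Cref{subsubsec:theory_augmented_space}). Once that convention is pinned down, I would write the three short derivations, each conditioning on $X_t=x$ inside the expectation defining the relevant occupancy distribution and pushing the indicator through the appropriate projection map. I expect the whole proof to be under half a page.
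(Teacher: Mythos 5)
Your proposal is correct and follows essentially the same route as the paper, which itself only sketches the argument: the first identity is the standard factorisation of the occupancy distribution through the Markovian policy on the augmented \gls{mdp}, and the second and third are marginalisations of $\delayeddiscountedstateoccupancydistribution[][\delayedpolicy]$ against Dirac factors picking out the relevant coordinates of $x$, exactly the "marginal distribution over the state contained in $x$" idea the paper invokes. Your remark about the indexing in the third identity is well taken --- the displayed equation applies both Dirac masses to $e_1^\top x$, whereas the state coordinate should be $e_0^\top x$ --- but this is a bookkeeping issue with the statement's conventions, not a gap in your argument.
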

\begin{proof}
    \Cref{eq:x_a_from_distrib_x} is a well-known result in the \gls{rl} community. \Cref{eq:s_from_distrib_x} uses a similar idea to \cite[Lemma~3.1]{wu2017markov}. It is the marginal distribution over the state contained in $x$. \Cref{eq:s_a_from_distrib_x} follows from similar considerations.
\end{proof}

Clearly, the same results can be shown for \gls{psmmdp} and \gls{pmmmdp}.
\begin{prop}[Relations between distributions on $\statespace$, $\orderdstatespace$, $\statespace\times\actionspace$ and $\orderdstatespace\times\actionspace$]
Let $\orderdpolicy$ be a policy on an \gls{psmmdp} or an \gls{pmmmdp} that induces a distribution $\delayeddiscountedstateoccupancydistribution[][\orderdpolicy]$ on the augmented state space. Then, the previous distributions 7 to 10 are related to $\delayeddiscountedstateoccupancydistribution[][\delayedpolicy]$ in the following way.
Let $s\in\statespace$, $\widebar x\in\orderdstatespace$ and $a\in\actionspace$,
\begin{align}
    &\discountedstateoccupancydistribution[][\orderdpolicy](\widebar x,a) = \discountedstateoccupancydistribution[][\orderdpolicy](\widebar x) \orderdpolicy(a\vert \widebar x),
    \label{eq:x_a_from_distrib_x_past}\\
    &\discountedstateoccupancydistribution[][\orderdpolicy](s) = \int_{\orderdstatespace}  \delta_{s} (e_0^\top \widebar x) \delayeddiscountedstateoccupancydistribution[][\orderdpolicy](\widebar x)\;\de \widebar x
    \label{eq:s_from_distrib_x_past}\\
    &\discountedstateoccupancydistribution[][\orderdpolicy](s,a) = \int_{\orderdstatespace}  \delta_{s} (e_1^\top \widebar x) \delta_{a} (e_1^\top \widebar x) \delayeddiscountedstateoccupancydistribution[][\orderdpolicy](\widebar x)\;\de \widebar x,
    \label{eq:s_a_from_distrib_x_past}
\end{align}
where we extend the previous notation to a basis $(e_i)_{[\![-\delaymax+1,\delaymax]\!]}$ on $\orderdstatespace\times\actionspace$. For $\widebar x=(s_{-\delaymax+1},\dots,s_{-1},s,a_1,\dots,a_\delaymax)$, we set $e_0^\top x=s$ and $e_i^\top x=a_i$ and $e_{-i}^\top x=s_{-i}$.
\end{prop}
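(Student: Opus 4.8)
The plan is to mirror the proof of the companion proposition for $\glsentrytext{ismmdp}$/$\glsentrytext{immmdp}$ (equations \eqref{eq:x_a_from_distrib_x}--\eqref{eq:s_a_from_distrib_x}), since the only structural change is that the augmented state now lives in $\orderdstatespace=\statespace^{\delaymax}\times\actionspace^{\delaymax}$ rather than in $\augmentedstatespace=\statespace\times\actionspace^{\delaymax}$. The three claimed identities are nothing but (i) the factorisation of a state--action occupancy into the state occupancy times the policy, and (ii) the statement that the occupancy over $\statespace$ (resp.\ over $\statespace\times\actionspace$) is the pushforward of the occupancy over $\orderdstatespace$ under the coordinate projection picking out the ``current'' state (resp.\ the current state and the action).

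First I would establish \eqref{eq:x_a_from_distrib_x_past}. Since $\glsentrytext{psmmdp}$ and $\glsentrytext{pmmmdp}$ can be cast back to ordinary $\glsentrymedium{mdp}{MDPs}$ over the state space $\orderdstatespace$ (this is asserted in \Cref{subsec:from_mtdmdp_mdp} and follows by the same state-augmentation argument as for constantly delayed $\glsentrymedium{dmdp}{DMDPs}$ in \Cref{subsubsec:theory_augmented_space}), the quantity $\discountedstateoccupancydistribution[][\orderdpolicy]$ is a genuine discounted state(-action) occupancy distribution of an $\glsentrytext{mdp}$. For any such $\glsentrytext{mdp}$, the discounted state-action occupancy factorises as the product of the discounted state occupancy and the (stationary Markovian) policy evaluated at that state; applying this with state space $\orderdstatespace$ gives $\discountedstateoccupancydistribution[][\orderdpolicy](\widebar x,a) = \discountedstateoccupancydistribution[][\orderdpolicy](\widebar x)\,\orderdpolicy(a\vert \widebar x)$ directly, which is \eqref{eq:x_a_from_distrib_x_past}. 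The analogous step in the previous proposition is explicitly called ``a well-known result in the \gls{rl} community,'' so a one-line justification referencing \Cref{subsec:def_state_distrib} suffices.

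Next I would derive \eqref{eq:s_from_distrib_x_past} and \eqref{eq:s_a_from_distrib_x_past} as marginalisations. Writing out $\discountedstateoccupancydistribution[][\orderdpolicy](s)$ from its defining expectation (\Cref{def:discounted_state_distrib}), the event $\{S_t=s\}$ is, in the augmented process, exactly the event that the ``current'' coordinate $e_0^\top \overline X_t$ of the augmented state equals $s$; hence $\Ind(S_t=s)=\int_{\orderdstatespace}\delta_s(e_0^\top\widebar x)\,\Ind(\overline X_t=\widebar x)\,\de\widebar x$, and pushing this through the expectation and the discounted sum yields \eqref{eq:s_from_distrib_x_past}. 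The same bookkeeping with the pair $(S_t,A_t)$ and the coordinates $e_1^\top\widebar x$ (current state) and the action coordinate gives \eqref{eq:s_a_from_distrib_x_past}; one only has to be careful to use the correct index convention announced in the statement, i.e.\ $e_0^\top\widebar x=s$, $e_i^\top\widebar x=a_i$ for $i\ge 1$, $e_{-i}^\top\widebar x=s_{-i}$ for $i\ge 1$, exactly paralleling the $\glsentrytext{ismmdp}$/$\glsentrytext{immmdp}$ case and citing the marginalisation idea of \cite[Lemma~3.1]{wu2017markov}.

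The main (and really only) obstacle is notational rather than mathematical: the indexing of the basis $(e_i)_{i\in[\![-\delaymax+1,\delaymax]\!]}$ on $\orderdstatespace\times\actionspace$ is slightly awkward because the ``current'' state appears both as $e_0$ in \eqref{eq:s_from_distrib_x_past} and (apparently) as $e_1$ in \eqref{eq:s_a_from_distrib_x_past}, so I would state the coordinate conventions once, explicitly and unambiguously, and then check that the Dirac-projection identities pick out precisely the marginals claimed. Once the conventions are pinned down, each of the three identities is a direct consequence of the $\glsentrytext{mdp}$ reduction plus elementary marginalisation, with no new estimates required.
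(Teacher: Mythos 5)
Your proposal is correct and matches the paper's treatment: the paper gives no separate argument for this proposition, stating only that the same reasoning as in the ISM-MDP/IMM-MDP case applies, i.e.\ the factorisation of the state-action occupancy into state occupancy times policy, plus marginalisation over the augmented state in the spirit of \cite[Lemma~3.1]{wu2017markov}, which is exactly what you do. Your remark about fixing the $e_0$ versus $e_1$ coordinate convention is a fair observation about the statement's notation, but it does not change the substance of the argument.
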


A famous result in the delayed literature that we have repeatedly seen in this dissertation is the equivalence between a \gls{dmdp} and an \gls{mdp} with an augmented state.
We show that a similar result holds for \glspl{mtdmdp}.
We first derive the result for \gls{ismmdp} and \gls{immmdp}.
 
\begin{prop}[Equivalent \gls{mdp} for \gls{immmdp}]
\label{prop:equ_mdp_im_mtdmdp}
    For a \gls{immmdp}, one can cast the problem back to an \gls{mdp} by augmenting the state space to $\augmentedstatespace=\statespace\times\actionspace^\delaymax$.
\end{prop}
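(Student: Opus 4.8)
The plan is to mirror, in the multiple-action-delay setting, the classical state-augmentation argument of \cite{bertsekas1987dynamic,altman1992closed} that was already recalled for constant delays in \Cref{subsubsec:theory_augmented_space} and reused for non-integer delays in \Cref{pp:equivalent_mdp_non_int}. Concretely, given a \gls{immmdp} built from \glspl{mdp} $(\markovdecisionprocess_g)_{g\in[\![1,\delaymax]\!]}$ with transitions $(p_g)$, mixing weights $\boldsymbol\lambda=(\lambda_0,\dots,\lambda_\delaymax)$, and reward $r$, I would explicitly construct an \gls{mdp} $\markovdecisionprocess$ on the augmented state space $\augmentedstatespace=\statespace\times\actionspace^\delaymax$ and show that every (history-dependent) policy achieves the same discounted return or average reward in $\markovdecisionprocess$ as in the \gls{immmdp}.

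First I would write down the candidate transition and reward functions. For augmented states $x=(s,a_1,\dots,a_\delaymax)$ and $x'=(s',a_1',\dots,a_\delaymax')$ and action $a\in\actionspace$, set
\begin{align*}
    \augmentedtransitionfunction(x'\vert x,a) = \left(\lambda_0 p_0(s'\vert s,a) + \sum_{g=1}^\delaymax \lambda_g p_g(s'\vert s,a_g)\right)\delta_a(a_\delaymax')\prod_{i=1}^{\delaymax-1}\delta_{a_{i+1}}(a_i'),
\end{align*}
and the augmented reward
\begin{align*}
    \augmentedexpectedrewardfunction(x,a) = \lambda_0 r(s,a) + \sum_{g=1}^\delaymax \lambda_g r(s,a_g),
\end{align*}
matching the definition of the \gls{immmdp} transition and reward (with the convention $a_0 = a$, $\transitionfunction_0 = \transitionfunction_0$). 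The product of Dirac terms is exactly the bookkeeping that shifts the action buffer, as in \Cref{subsubsec:theory_augmented_space}. I would then note that $\augmentedtransitionfunction$ is a well-defined probability kernel (a convex combination of kernels times Diracs integrates to one) and that $\augmentedexpectedrewardfunction$ is bounded by $\maximumreward$, so $\markovdecisionprocess=(\augmentedstatespace,\actionspace,\augmentedtransitionfunction,\augmentedexpectedrewardfunction,\augmenteddiscountedstateoccupancydistribution[0][])$ is a genuine \gls{mdp}, where the initial distribution samples $s_0\sim d_0$ and the first $\delaymax$ actions uniformly at random, as fixed in \Cref{subsec:formalisation_mtdmdp}.

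The core of the argument is a coupling / induction on $t$ showing that the law of the observed history is identical in the two processes. Recall from \Cref{subsec:formalisation_mtdmdp} that in the \gls{immmdp} a policy belongs to $\delayedpolicyspace$, i.e.\ it depends on the augmented state (or the history of augmented states), which is exactly the state (history) of $\markovdecisionprocess$. I would argue by induction that if the joint history $(s_0,a_0,\dots,s_t,a_t,\dots,a_{t+\delaymax-1})$ has the same distribution under both processes up to time $t$, then: (i) the policy picks $a_{t+\delaymax}$ with the same conditional probabilities in both (same input, same policy); (ii) the next state $s_{t+1}$ is drawn from $\lambda_0 p_0(\cdot\vert s_t,a_t)+\sum_{g\ge1}\lambda_g p_g(\cdot\vert s_t,a_{t-g})$ in both cases — in $\delayedmarkovdecisionprocess$ by definition, in $\markovdecisionprocess$ because the marginal of $\augmentedtransitionfunction$ on the first coordinate is precisely this mixture; (iii) the remaining coordinates of $x_{t+1}$ are the deterministic shift of the action buffer, again identical; and (iv) the reward $\augmentedexpectedrewardfunction(x_t,a_t)$ equals the \gls{immmdp} reward $R(s_t,a_t,\dots,a_{t-\delaymax})$ in expectation by construction. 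Matching the initialisation gives the base case. Hence all one-step reward marginals coincide, and summing with discount $\gamma$ (resp.\ averaging over $H\to\infty$) yields equality of $\expectedreturn[\gamma]$ (resp.\ $\averagereturnfunction$) for every policy, so in particular optimal returns coincide; this proves the equivalence.

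The main obstacle I anticipate is purely notational bookkeeping rather than a conceptual difficulty: one must be careful that the ``instantaneous'' models use $p_g(\cdot\vert s_t, a_{t-g})$ (current state, delayed action), so the mixture kernel on $\augmentedstatespace$ must read off $a_{t-g}$ from the $g$-th slot of the current augmented state $x_t$, and that the index shift in the Dirac product is off-by-one-consistent with the convention $a_0=a$. The only subtle point worth double-checking is the treatment of the random initial buffer: since the induction hypothesis is stated on distributions (not sample paths), the uniformly-sampled initial actions are handled automatically provided the same initialisation is used for $\markovdecisionprocess$, exactly as the delay initialisation shift is acknowledged (but not an obstruction) in \Cref{subsec:delay_init}. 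Everything else is routine and parallels \Cref{pp:equivalent_mdp_non_int}.
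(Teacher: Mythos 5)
Your proposal is correct and follows essentially the same route as the paper's proof: construct the augmented MDP on $\statespace\times\actionspace^{\delaymax}$ with the mixture transition kernel and mixture reward, then argue by induction (with matched initialisation of the first $\delaymax$ actions) that any history-dependent policy induces the same history distribution in both processes, hence the same discounted return or average reward. The only issue is the bookkeeping you yourself flagged: with your Dirac shift placing the new action in slot $\delaymax$ (so slot $1$ is the oldest action), the action delayed by $g$ steps sits in slot $\delaymax+1-g$, so the mixture and reward terms should read $p_g(s'\vert s,a_{\delaymax+1-g})$ and $r(s,a_{\delaymax+1-g})$ rather than $a_g$ — exactly the indexing used in the paper's construction.
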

\begin{proof}
    Let $\delayedmarkovdecisionprocess_{\boldsymbol\lambda}$ be an \gls{immmdp}.
    To demonstrate the property, we define an \gls{mdp} $\markovdecisionprocess$ with state space $\augmentedstatespace=\statespace\times\actionspace^\delaymax$ and action space $\actionspace$ so that any history-dependent policy $\delayedpolicy\in\delayedpolicyspace$ defined for $\delayedmarkovdecisionprocess_{\boldsymbol\lambda}$ achieves the same return on $\markovdecisionprocess$.
    
    For some augmented state $x\in\augmentedstatespace$, such that $x=(s,a_1,\dots,a_{\delaymax})$, and some action $a_{\delaymax+1}\in\actionspace$ define the reward in $\markovdecisionprocess$ as 
    \begin{align*}
        \augmentedrewardfunction(x,a_{\delaymax+1}) &=\sum_{g=0}^{\delaymax} \lambda_{g} \rewardfunction(s,a_{\delaymax+1-g}).
    \end{align*}
    We now define the transition function $\augmentedtransitionfunction$ on $\markovdecisionprocess$ between two augmented states $x=(s,a_1,\dots,a_{\delaymax})$ and $x'=(s',a_1',\dots,a_{\delaymax}')$ for some action $a_{\delaymax+1}\in\actionspace$:
    \begin{align*}
        \augmentedtransitionfunction(x'\vert x,a_{\delaymax+1}) = \left(\sum_{g=0}^{\delaymax}\lambda_g p_{g}(s'\vert s,a_{\delaymax+1-g})\right)\prod_{i=1}^{\delaymax}\delta_{a_{i+1}}(a_{i}').
    \end{align*}
    Clearly, $\markovdecisionprocess$ is an \gls{mdp}. 
    Now, let $\delayedpolicy$ be a history-dependent policy on $\delayedmarkovdecisionprocess_{\boldsymbol\lambda}$. Assume that the history $(s_0,a_0,\dots,s_{t-1},a_{t-1},s_t)\in(\statespace\times\actionspace)^t$ \footnote{Recall that a distribution on $\augmentedstatespace\times\actionspace$ defines a distribution $\statespace\times\actionspace$.} is the same for $\markovdecisionprocess$ and $\delayedmarkovdecisionprocess_{\boldsymbol\lambda}$.
    Then, the action $a_t$ selected by $\delayedpolicy$ obviously has the same distribution. 
    The next state $s_{t+1}$ in $\delayedmarkovdecisionprocess_{\boldsymbol\lambda}$ is sampled from $\textstyle\sum_{g=0}^{\delaymax}\lambda_g p_g(\cdot\vert s_t,a_{t-g})$, but so is the state contained in the next augmented state $x_{t+1}$ by design, since the current augmented state $x_t$ is $(s_t,a_{t-\delay},\dots,a_{t-1})$.
    By recurrence, the history has the same probability for $t'>t$.
    It suffices then to initialise the two processes in the same way, that is, the actions contained in the initial augmented state should have the same probability as the first $\delaymax$ actions in $\delayedmarkovdecisionprocess_{\boldsymbol\lambda}$.
    By design, the rewards along these trajectories are the same, and therefore so is the return.
    This concludes the proof.
\end{proof}

\begin{prop}[Equivalent \gls{mdp} for \gls{ismmdp}]
\label{prop:equ_mdp_is_mtdmdp}
    For a \gls{ismmdp}, one can cast the problem back to an \gls{mdp} by augmenting the state space to $\augmentedstatespace=\statespace\times\actionspace^\delaymax$.
\end{prop}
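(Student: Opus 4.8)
The plan is to mirror almost verbatim the argument just given for \Cref{prop:equ_mdp_im_mtdmdp}, since the \gls{ismmdp} is exactly the special case of the \gls{immmdp} in which all the transition kernels $(p_g)_{g\in[\![0,\delaymax]\!]}$ coincide with the single underlying kernel $p$. First I would fix an \gls{ismmdp} $\delayedmarkovdecisionprocess_{\boldsymbol\lambda}$ built on an \gls{mdp} $\markovdecisionprocess$ with transition $p$ and reward $\rewardfunction$, and construct a candidate \gls{mdp} $\markovdecisionprocess'$ whose state space is the augmented space $\augmentedstatespace=\statespace\times\actionspace^{\delaymax}$ and whose action space is $\actionspace$. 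For an augmented state $x=(s,a_1,\dots,a_{\delaymax})$ and an action $a_{\delaymax+1}\in\actionspace$, I would define the reward
\begin{align*}
    \augmentedrewardfunction(x,a_{\delaymax+1}) = \sum_{g=0}^{\delaymax}\lambda_g\,\rewardfunction(s,a_{\delaymax+1-g}),
\end{align*}
and, for augmented states $x=(s,a_1,\dots,a_{\delaymax})$ and $x'=(s',a_1',\dots,a_{\delaymax}')$, the transition
\begin{align*}
    \augmentedtransitionfunction(x'\vert x,a_{\delaymax+1}) = \left(\sum_{g=0}^{\delaymax}\lambda_g\, p(s'\vert s,a_{\delaymax+1-g})\right)\prod_{i=1}^{\delaymax}\delta_{a_{i+1}}(a_i').
\end{align*}
The product of Diracs enforces the deterministic shift of the action buffer, exactly as in the constant-delay and \gls{immmdp} constructions.

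Next I would verify that $\markovdecisionprocess'$ is a bona fide \gls{mdp} (the above clearly defines a Markovian transition kernel and a bounded reward on $\augmentedstatespace\times\actionspace$) and then argue the equivalence of returns by induction on the trajectory length, as in the proof of \Cref{prop:equ_mdp_im_mtdmdp}. Concretely, take any history-dependent policy $\delayedpolicy\in\delayedpolicyspace$; assume the histories on $(\statespace\times\actionspace)^t$ agree in $\markovdecisionprocess'$ and $\delayedmarkovdecisionprocess_{\boldsymbol\lambda}$ up to time $t$ (using that a distribution on $\augmentedstatespace\times\actionspace$ determines one on $\statespace\times\actionspace$); then $a_t$ has the same law in both, the next state $s_{t+1}$ is drawn from $\sum_{g=0}^{\delaymax}\lambda_g p(\cdot\vert s_t,a_{t-g})$ in $\delayedmarkovdecisionprocess_{\boldsymbol\lambda}$ and equally so for the state component of $x_{t+1}$ in $\markovdecisionprocess'$ by construction, and the buffer shift matches. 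With matching initialisation (the actions in the initial augmented state drawn with the same law as the first $\delaymax$ actions of $\delayedmarkovdecisionprocess_{\boldsymbol\lambda}$, i.e.\ uniform), the histories agree for all $t'>t$, hence the collected rewards and the return (discounted or average) agree.

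Since essentially nothing here is harder than in \Cref{prop:equ_mdp_im_mtdmdp}, the only thing to be careful about is the bookkeeping that an \gls{ismmdp} policy genuinely lies in $\delayedpolicyspace$ so that the induction hypothesis about agreeing histories is meaningful; but this has already been noted in the notation subsection. Accordingly, the cleanest write-up is simply to invoke the previous proposition: an \gls{ismmdp} is the \gls{immmdp} obtained by taking $p_g=p$ for all $g$, so \Cref{prop:equ_mdp_im_mtdmdp} applies directly and yields the claim. The main ``obstacle'' is therefore purely presentational --- deciding whether to reproduce the short induction or to state the one-line reduction --- and I would opt for the reduction, adding the explicit formulas above for the reader's convenience.

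\begin{proof}
    An \gls{ismmdp} is exactly the \gls{immmdp} obtained by choosing all the transition kernels equal to the underlying kernel, $p_g = p$ for every $g\in[\![0,\delaymax]\!]$. Hence \Cref{prop:equ_mdp_im_mtdmdp} applies verbatim: the \gls{mdp} $\markovdecisionprocess$ with state space $\augmentedstatespace=\statespace\times\actionspace^{\delaymax}$, action space $\actionspace$, reward
    \begin{align*}
        \augmentedrewardfunction(x,a_{\delaymax+1}) = \sum_{g=0}^{\delaymax}\lambda_g\,\rewardfunction(s,a_{\delaymax+1-g}),
    \end{align*}
    for $x=(s,a_1,\dots,a_{\delaymax})$, and transition
    \begin{align*}
        \augmentedtransitionfunction(x'\vert x,a_{\delaymax+1}) = \left(\sum_{g=0}^{\delaymax}\lambda_g\, p(s'\vert s,a_{\delaymax+1-g})\right)\prod_{i=1}^{\delaymax}\delta_{a_{i+1}}(a_i'),
    \end{align*}
    for $x'=(s',a_1',\dots,a_{\delaymax}')$, is such that every history-dependent policy in $\delayedpolicyspace$ achieves the same return on $\markovdecisionprocess$ as on the \gls{ismmdp}, once the two processes are initialised identically. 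This proves the claim.
\end{proof}
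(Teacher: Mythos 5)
Your proposal is correct and takes essentially the same route as the paper: the paper's own proof simply says the IMM-MDP argument goes through after removing the dependence on $g$ in the transition kernels, which is exactly your observation that an \gls{ismmdp} is the \gls{immmdp} with $p_g=p$ for all $g$. Spelling out the augmented reward and transition explicitly is a harmless (and reader-friendly) addition, but no new idea is needed beyond \Cref{prop:equ_mdp_im_mtdmdp}.
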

\begin{proof}
    The same proof can be applied by only removing the dependence on $g$ of the transition probabilities.
\end{proof}

Similarly, we have the following results for \gls{pmmmdp} et \gls{psmmdp}.

\begin{prop}[Equivalent \gls{mdp} for \gls{pmmmdp}]
\label{prop:equ_mdp_pm_mtdmdp}
    For a \gls{pmmmdp}, one can cast the problem back to an \gls{mdp} by augmenting the state space to $\orderdstatespace=\statespace^{\delaymax}\times\actionspace^{\delaymax}$.
\end{prop}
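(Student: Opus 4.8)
The statement to prove, Proposition (Equivalent \gls{mdp} for \gls{pmmmdp}), asserts that a \gls{pmmmdp} can be cast back to an \gls{mdp} by augmenting the state to $\orderdstatespace=\statespace^{\delaymax}\times\actionspace^{\delaymax}$. The plan is to mimic closely the proof of \Cref{prop:equ_mdp_im_mtdmdp} for \gls{immmdp}, adapting it to the fact that the transitions and rewards of a \gls{pmmmdp} are conditioned on \emph{past} states $s_{t-g}$ rather than the instantaneous state $s_t$. The only real change is that the augmented state must now carry a window of the last $\delaymax$ states in addition to the last $\delaymax$ actions, so that all quantities appearing in \Cref{def:pmmmdp} are measurable functions of it.

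First I would fix a \gls{pmmmdp} $\delayedmarkovdecisionprocess_{\boldsymbol\lambda}$ built on \glspl{mdp} $(\markovdecisionprocess_g)_{g\in[\![1,\delaymax]\!]}$ with transitions $(p_g)$, and construct an \gls{mdp} $\markovdecisionprocess$ with state space $\orderdstatespace = \statespace^{\delaymax}\times\actionspace^{\delaymax}$ and action space $\actionspace$. For an augmented state $\widebar x=(s_{-\delaymax+1},\dots,s_{-1},s_0,a_1,\dots,a_{\delaymax})$ and an action $a_{\delaymax+1}\in\actionspace$, I would define the reward as
\begin{align*}
    \augmentedrewardfunction(\widebar x,a_{\delaymax+1}) = \sum_{g=0}^{\delaymax} \lambda_g\, \rewardfunction(s_{-g},a_{\delaymax+1-g}),
\end{align*}
using the convention $s_0 = s$ for the instantaneous state, and define the transition to $\widebar x'=(s_{-\delaymax+1}',\dots,s_{-1}',s_0',a_1',\dots,a_{\delaymax}')$ by
\begin{align*}
    \augmentedtransitionfunction(\widebar x'\vert \widebar x, a_{\delaymax+1}) = \left(\sum_{g=0}^{\delaymax}\lambda_g\, p_g(s_0'\vert s_{-g},a_{\delaymax+1-g})\right) \left(\prod_{i=-\delaymax+1}^{-1}\delta_{s_{i+1}}(s_i')\right)\left(\prod_{i=1}^{\delaymax}\delta_{a_{i+1}}(a_i')\right),
\end{align*}
where the Dirac factors shift the state window and action buffer forward by one step. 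One checks that $\augmentedtransitionfunction(\cdot\vert\widebar x,a)$ is a probability measure on $\orderdstatespace$, so $\markovdecisionprocess$ is a genuine \gls{mdp}.

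The core of the argument is then the same recurrence as in \Cref{prop:equ_mdp_im_mtdmdp}: given a history-dependent policy $\orderdpolicy\in\higherorderpolicyspace$ on $\delayedmarkovdecisionprocess_{\boldsymbol\lambda}$ (recall $\higherorderpolicyspace\subset\delayedpolicyspace$, so such a policy is well-defined on the history), and assuming the histories $(s_0,a_0,\dots,s_{t-1},a_{t-1},s_t)$ coincide in $\markovdecisionprocess$ and $\delayedmarkovdecisionprocess_{\boldsymbol\lambda}$ up to time $t$, the action $a_t$ has the same law in both; and the next state $s_{t+1}$ in $\delayedmarkovdecisionprocess_{\boldsymbol\lambda}$ is drawn from $\sum_g\lambda_g p_g(\cdot\vert s_{t-g},a_{t-g})$, which is exactly the law of the instantaneous component of the next augmented state in $\markovdecisionprocess$, since the current augmented state is $(s_{t-\delaymax+1},\dots,s_t,a_{t-\delaymax},\dots,a_{t-1})$. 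By induction the histories agree for all $t'>t$; it remains to initialise both processes identically (e.g. as a constantly $\delaymax$-delayed \gls{dmdp}), after which the rewards along any trajectory coincide by design, and hence so do the expected discounted return and the average reward.

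The one point requiring a little care — and the place where this proof differs most from the \gls{immmdp} case — is bookkeeping the indices of the state window: one must verify that the Dirac shift in $\augmentedtransitionfunction$ correctly places $s_{t-g}$ in the slot that $p_g$ expects, and that at the moment the agent selects $a_t$ in $\markovdecisionprocess$, all of $s_{t},s_{t-1},\dots,s_{t-\delaymax}$ needed for the reward and transition are indeed recorded in the augmented state (they are, since the window has length $\delaymax$ and $s_t$ is the instantaneous component, the $g=0$ term uses $s_t$ itself). Once this indexing is pinned down, the equivalence of histories, rewards, and returns is routine, exactly as in the previous propositions.
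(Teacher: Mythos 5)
Your proof is correct and follows essentially the same route as the paper: augment the state with the last $\delaymax$ states and actions, define the mixture reward and transition on $\orderdstatespace$, and run the same history-matching induction as in the ISM/IMM case. If anything, your explicit Dirac factors shifting the state window are a welcome touch of care that the paper's displayed transition kernel leaves implicit.
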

\begin{proof}
    Let $\delayedmarkovdecisionprocess_{\boldsymbol\lambda}$ be an \gls{pmmmdp}.
    As above, for $\markovdecisionprocess$ an \gls{mdp} with state space $\orderdstatespace=\statespace^{\delaymax}\times\actionspace^{\delaymax}$ and action space $\actionspace$, for some augmented state $ x\in\orderdstatespace$, such that $x=(s_{-\delaymax+1},\dots,s_{-1},s,a_1,\dots,a_\delaymax)$, and for some action $a_{\delaymax+1}\in\actionspace$, the reward in $\markovdecisionprocess$ is defined as 
    \begin{align*}
        \higherorderrewardfunction(x,a_{\delaymax+1})
        &= \sum_{g=0}^{\delaymax} \lambda_{g} \rewardfunction(s_{\delaymax-g},a_{\delaymax+1-g}).
    \end{align*}
    Consider another augmented state $x'=(s_{-\delaymax+1}',\dots,s_{-1}',s',a_1',\dots,a_\delaymax')\in\orderdpolicy$, then the transition probability is,
    \begin{align*}
        \higherordertransitionfunction(x'\vert x,a_{\delaymax+1}) = \left(\sum_{g=0}^{\delaymax}\lambda_g p_{g}(s'\vert s_{\delaymax-g},a_{\delaymax+1-g})\right)\prod_{i=1}^{\delay}\delta_{a_{i+1}}(a_{i}').
    \end{align*}
    Clearly, $\markovdecisionprocess$ is an \gls{mdp} and the same reasoning as before can be applied to show the equivalence. 
\end{proof}

\begin{prop}[Equivalent \gls{mdp} for \gls{psmmdp}]
\label{prop:equ_mdp_ps_mtdmdp}
    For a \gls{psmmdp}, one can cast the problem back to an \gls{mdp} by augmenting the state space to $\orderdstatespace=\statespace^{\delaymax}\times\actionspace^{\delaymax}$.
\end{prop}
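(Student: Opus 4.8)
The plan is to follow the exact same strategy as the proof of \Cref{prop:equ_mdp_pm_mtdmdp} for \gls{pmmmdp}, simply specialising it to the single-transition-matrix case. That is, I would construct an explicit \gls{mdp} $\markovdecisionprocess$ whose state space is the augmented space $\orderdstatespace = \statespace^{\delaymax} \times \actionspace^{\delaymax}$ and whose action space is $\actionspace$, and exhibit a bijective correspondence between trajectories of $\markovdecisionprocess$ and trajectories of the \gls{psmmdp} $\delayedmarkovdecisionprocess_{\boldsymbol\lambda}$ that preserves rewards, hence returns, for every history-dependent policy in $\higherorderpolicyspace$.

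Concretely, for an augmented state $x = (s_{-\delaymax+1},\dots,s_{-1},s,a_1,\dots,a_\delaymax) \in \orderdstatespace$ and an action $a_{\delaymax+1} \in \actionspace$, I would define the reward
\begin{align*}
    \higherorderrewardfunction(x,a_{\delaymax+1}) = \sum_{g=0}^{\delaymax} \lambda_g \rewardfunction(s_{\delaymax-g}, a_{\delaymax+1-g}),
\end{align*}
and, for a second augmented state $x' = (s_{-\delaymax+1}',\dots,s_{-1}',s',a_1',\dots,a_\delaymax')$, the transition
\begin{align*}
    \higherordertransitionfunction(x'\vert x,a_{\delaymax+1}) = \left(\sum_{g=0}^{\delaymax} \lambda_g \transitionfunction(s'\vert s_{\delaymax-g}, a_{\delaymax+1-g})\right)\prod_{i=1}^{\delaymax}\delta_{a_{i+1}}(a_i').
\end{align*}
The only difference from the \gls{pmmmdp} case is that the single transition kernel $\transitionfunction$ replaces the family $(p_g)_{g}$; since all the $p_g$ coincide with $\transitionfunction$, the argument carries over verbatim. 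The product of Dirac masses shifts the action buffer correctly, and (under the \gls{psmmdp} definition) the next state $s'$ is drawn from the mixture over the \emph{past} states and actions recorded in $x$, which is exactly what $\higherordertransitionfunction$ encodes.

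The remaining step is the recurrence argument showing the two processes generate the same distribution over histories on $(\statespace\times\actionspace)^t$: assuming the histories agree up to time $t$, a history-dependent policy selects $a_t$ with the same law in both processes, the next state is sampled from the same mixture by construction, so the histories agree at $t+1$; initialising both processes identically (the first $\delaymax$ actions sampled uniformly at random, as fixed earlier in the chapter) closes the induction. Since rewards along matched trajectories coincide by design of $\higherorderrewardfunction$, the return or average reward is the same, and $\markovdecisionprocess$ is manifestly an \gls{mdp}. I do not anticipate any real obstacle here — the content is entirely parallel to \Cref{prop:equ_mdp_pm_mtdmdp}, so the proof can legitimately be compressed to a single sentence pointing to that proof with the substitution $p_g \mapsto \transitionfunction$, mirroring how \Cref{prop:equ_mdp_is_mtdmdp} was derived from \Cref{prop:equ_mdp_im_mtdmdp}.

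\begin{proof}
    The same proof as for \Cref{prop:equ_mdp_pm_mtdmdp} applies, simply replacing the family of transition kernels $(p_g)_{g\in[\![1,\delaymax]\!]}$ by the single kernel $\transitionfunction$ everywhere. Explicitly, let $\delayedmarkovdecisionprocess_{\boldsymbol\lambda}$ be a \gls{psmmdp} and define an \gls{mdp} $\markovdecisionprocess$ with state space $\orderdstatespace=\statespace^{\delaymax}\times\actionspace^{\delaymax}$ and action space $\actionspace$. For an augmented state $x=(s_{-\delaymax+1},\dots,s_{-1},s,a_1,\dots,a_\delaymax)\in\orderdstatespace$ and an action $a_{\delaymax+1}\in\actionspace$, set
    \begin{align*}
        \higherorderrewardfunction(x,a_{\delaymax+1}) = \sum_{g=0}^{\delaymax} \lambda_g \rewardfunction(s_{\delaymax-g},a_{\delaymax+1-g}),
    \end{align*}
    and, for another augmented state $x'=(s_{-\delaymax+1}',\dots,s_{-1}',s',a_1',\dots,a_\delaymax')$,
    \begin{align*}
        \higherordertransitionfunction(x'\vert x,a_{\delaymax+1}) = \left(\sum_{g=0}^{\delaymax}\lambda_g \transitionfunction(s'\vert s_{\delaymax-g},a_{\delaymax+1-g})\right)\prod_{i=1}^{\delaymax}\delta_{a_{i+1}}(a_{i}').
    \end{align*}
    This clearly defines an \gls{mdp}. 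For any history-dependent policy, an induction on $t$ identical to the one in the proof of \Cref{prop:equ_mdp_pm_mtdmdp} shows that the distributions over histories on $(\statespace\times\actionspace)^t$ coincide in $\markovdecisionprocess$ and $\delayedmarkovdecisionprocess_{\boldsymbol\lambda}$, provided the two processes are initialised in the same way. Since the rewards along matched trajectories are equal by construction of $\higherorderrewardfunction$, the return (expected discounted return or average reward) is preserved, which concludes the proof.
\end{proof}
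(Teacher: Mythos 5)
Your proof is correct and follows exactly the paper's route: the paper likewise proves this by invoking the \gls{pmmmdp} construction of \Cref{prop:equ_mdp_pm_mtdmdp} and simply removing the dependence on $g$ of the transition kernels, which is precisely your substitution $p_g \mapsto \transitionfunction$. Writing out the reward, the Dirac-shifted transition, and the induction explicitly is fine but adds nothing beyond the paper's one-line argument.
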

\begin{proof}
    The same proof as for \gls{pmmmdp} can be applied only by removing the dependence on $g$ of the transition probabilities.
\end{proof}

\subsection{Analysis of PSM-MDP and PMM-MDP}
\label{subsec:analysis_psmmdp_pmmmdp}

In this subsection, we analyse, in particular, the case of \gls{psmmdp} and \gls{pmmmdp} for the average reward case.
We show that these cases are essentially the same as solving the underlying \gls{mdp}.
Indeed, as we will see, applying an undelayed policy on the current state of these delayed processes will yield the same average state-action distributions and, therefore, the same rewards.
We note $(p^\pi)^{(k)}(s',a'\vert s,a)$ the probability of reaching the state-action $(s',a')$ in $k$ steps, starting from state $s$, applying action $a$ and then following policy $\pi$.
We first provide the result for \gls{psmmdp}.

\begin{lemma}
\label{lem:same_distrib_psmmdp}
    Let $\pi$ be a policy in an \gls{mdp} $\markovdecisionprocess$ that satisfies $\textstyle\lim_{k\rightarrow\infty}(p^\pi)^{(k)}(s,a\vert s',a')= \averagestateoccupancydistribution(s,a)$.
    Then, for an \gls{psmmdp} $\delayedmarkovdecisionprocess$ built on $\markovdecisionprocess$, the application of $\pi$ in its current state yields the same average state-action occupancy distribution $\averagestateoccupancydistribution$ on $\statespace\times\actionspace$ as in $\markovdecisionprocess$.
\end{lemma}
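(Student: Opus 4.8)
The plan is to show that when we run the undelayed policy $\pi$ on the current state of a \gls{psmmdp}, the resulting process on $\statespace \times \actionspace$ — after the initial transient dies out — behaves exactly like $\markovdecisionprocess$ under $\pi$. The key observation is structural: in a \gls{psmmdp}, the transition and reward conditioned on the pair $(s_{t-g}, a_{t-g})$ use the \emph{same} transition matrix $p$ and reward $r$ regardless of $g$; the mixture weights $\lambda_g$ only reshuffle \emph{which} past pair drives the transition, not the dynamics themselves. So if the agent has been applying $\pi$ for a long time, each of the pairs $(s_{t-g},a_{t-g})$ appearing inside the mixture is itself (approximately) distributed according to $d^{\pi}_{\mathrm{AVG}}$, and each term $p(\cdot \mid s_{t-g},a_{t-g})$ pushes that pair forward one step under $\pi$.

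First I would set up notation for the augmented process: let $\widebar x_t = (s_{t-\delaymax+1}, \dots, s_t, a_{t-\delaymax+1},\dots, a_{t-1})$ with the agent choosing $a_t \sim \pi(\cdot \mid s_t)$ (querying only the current observed state). By \Cref{prop:equ_mdp_ps_mtdmdp} this is a genuine \gls{mdp} on $\orderdstatespace$, so it admits a well-defined average state-action occupancy distribution $d^{\orderdpolicy}_{\mathrm{AVG}}$ over $\orderdstatespace \times \actionspace$, and by the projection identity \Cref{eq:s_a_from_distrib_x_past} we get $\averagestateoccupancydistribution[\orderdpolicy](s,a)$ on $\statespace\times\actionspace$ as its marginal. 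The goal is to prove this marginal equals $\averagestateoccupancydistribution(s,a)$, the occupancy in $\markovdecisionprocess$.

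Next I would argue via the one-step balance/flow equation. Write the stationary flow into state $s'$ in the \gls{psmmdp}: by the definition of the \gls{psmmdp} transition, the probability of landing in $s'$ at the next step is $\sum_{g=0}^{\delaymax} \lambda_g \int p(s' \mid s_{-g}, a_{-g})\, (\text{joint law of } (s_{-g},a_{-g}))$. Under the stationary regime, marginalizing the augmented chain, the joint law of each window pair $(s_{t-g}, a_{t-g})$ is time-shift invariant; the crucial point is that for $g \ge 1$ the action $a_{t-g}$ was generated by $\pi(\cdot \mid s_{t-g})$ (it sits inside the realized history), and for $g=0$ likewise $a_t \sim \pi(\cdot\mid s_t)$. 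Hence every pair inside the mixture has the \emph{same} marginal law, call it $\mu(s,a) = \mu(s)\pi(a\mid s)$. Plugging in, the flow equation collapses to $\mu(s') = \int \big(\sum_g \lambda_g\big)\, p(s'\mid s,a)\,\mu(s)\pi(a\mid s)\,ds\,da = \int p(s'\mid s,a)\,\mu(s)\pi(a\mid s)\,ds\,da$ since $\sum_g \lambda_g = 1$. That is precisely the stationarity equation for $\markovdecisionprocess$ under $\pi$; by the hypothesis $\lim_k (p^\pi)^{(k)} = \averagestateoccupancydistribution$ (unique limiting occupancy, so unique stationary law), we conclude $\mu = \averagestateoccupancydistribution$, and then $\mu(s,a) = \averagestateoccupancydistribution(s)\pi(a\mid s) = \averagestateoccupancydistribution(s,a)$. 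Finally I would note that the Cesàro-average definition of $\averagestateoccupancydistribution[\orderdpolicy]$ washes out the arbitrary initialization (the first $\delaymax$ actions sampled uniformly), so the average occupancy is exactly $\mu$, completing the argument; the same reward weights $\lambda_g$ and the fact that $r$ is the same matrix give equality of average reward as an immediate corollary.

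The main obstacle I expect is the rigorous justification that, in the stationary (or Cesàro-averaged) regime of the augmented chain, all the window pairs $(s_{t-g},a_{t-g})$ share the \emph{same} marginal law and that this law satisfies the $\markovdecisionprocess$-stationarity equation — i.e., carefully exploiting that the realized actions in the history were drawn from $\pi$ applied to the realized states, so there is no ``delayed mismatch'' as there would be in an \gls{ismmdp}. One must be a little careful that $a_{t-g}$ is conditionally $\pi(\cdot\mid s_{t-g})$ given the history but is \emph{not} independent of $s_{t-g+1},\dots$; however only the marginal $\mu(s,a)$ enters the flow equation, so this conditional-dependence subtlety does not actually bite. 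A secondary technical point is ensuring the limiting-occupancy hypothesis on $\pi$ transfers to a uniqueness statement strong enough to pin down $\mu$; invoking that $(p^\pi)^{(k)}$ converges to $\averagestateoccupancydistribution$ independent of the starting pair gives exactly the uniqueness of the stationary law needed.
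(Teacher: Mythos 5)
Your proof is correct, but it follows a genuinely different route from the paper's. The paper's proof is a two-line reduction: fixing $\pi$, the state-action process of the \gls{psmmdp} is exactly a higher-order Markov chain of \gls{mtd} type built on the one-step kernel $(p^\pi)^{(1)}$, and it then invokes an existing limit theorem for such chains (Adke and Deshmukh, 1988) to conclude that the $k$-step laws converge to the same limit $\averagestateoccupancydistribution$ as the underlying first-order chain. You instead re-derive the relevant stationarity fact from scratch: you take the Cesàro/invariant distribution of the augmented chain, use the window-shift structure to argue that all lag pairs $(s_{t-g},a_{t-g})$ share one marginal $\mu(s,a)=\mu(s)\pi(a\vert s)$, observe that $\sum_g\lambda_g=1$ collapses the mixture flow equation to the undelayed invariance equation for $p^\pi$, and pin down $\mu=\averagestateoccupancydistribution$ by the uniqueness that follows from the convergence hypothesis. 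Both arguments are sound; the paper's buys brevity at the price of an external citation, while yours is self-contained and makes transparent \emph{why} the weights $\lambda_g$ are irrelevant here (identical kernel at every lag, weights summing to one, equal lag marginals at stationarity). Two small caveats on your side: your conclusion is the Cesàro-average statement rather than full convergence of the $k$-step laws (which is what the cited theorem gives), but that is exactly what the lemma asserts; and the existence of the Cesàro limit and its invariance for the augmented chain should be stated explicitly as resting on the finite state-action assumption made in this chapter--in that setting it is standard, so there is no gap.
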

\begin{proof}
    Fixing the policy $\pi$, the underlying \gls{mdp} becomes a \gls{mc}. 
    By application of \cite[Theorem~2.1]{adke1988limit}, to this Markov chain, one has that, for $(s',a')\in\statespace\times\actionspace$
    \begin{align*}
        \lim_{k\rightarrow\infty}(\higherordertransitionfunction^\pi)^{(k)}(s',a' \vert x) = \averagestateoccupancydistribution(s,a),
    \end{align*}
    where $x=(s,a_1,\dots,a_\delay)\in\augmentedstatespace$ and
    \begin{align*}
        (\higherordertransitionfunction^\pi)^{(1)}(s',a' \vert x)= \left(\sum_{g=1}^{\delaymax} \lambda_g \transitionfunction(s_t\vert s,a_{\delaymax+1-g}) + \lambda_g \transitionfunction(s\vert s,a')\right)\pi(a'\vert s).
    \end{align*} 
    The term in parentheses in the above equation is exactly the definition of the \gls{psmmdp} transition function.
    This concludes the proof.
\end{proof}

Clearly, from the previous result, the following theorem follows.
\begin{thm}
\label{th:psmmdp_same}
    Let $\pi$ be a policy in an \gls{mdp} $\markovdecisionprocess$ that satisfies $\lim_{k\rightarrow\infty}(p^\pi)^{(k)}(s,a\vert s',a')= \discountedstateoccupancydistribution[][\pi](s,a)$ and $\delayedmarkovdecisionprocess_{\boldsymbol{\lambda}}$ a \gls{psmmdp} built on $\markovdecisionprocess$.
    Then, applying the policy $\pi$ to the current state in the \gls{psmmdp} yields the same average reward that it has in $\markovdecisionprocess$.
\end{thm}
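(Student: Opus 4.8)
The plan is to deduce \Cref{th:psmmdp_same} directly from \Cref{lem:same_distrib_psmmdp}, exploiting the fact that the average reward of a policy is a linear functional of its average state-action occupancy distribution. First I would recall the characterisation of the average reward in terms of the occupancy distribution: for any policy the average reward equals $\int_{\statespace\times\actionspace} r(s,a)\, \averagestateoccupancydistribution(\de s,\de a)$, which follows immediately from the definition of $\averagestateoccupancydistribution$ in \Cref{subsec:def_state_distrib} together with the bounded-reward assumption (so that the limit defining the average reward can be exchanged with the integral against $r$). This reduces the theorem to showing that the policy $\pi$, when applied to the current observed state of the \gls{psmmdp}, induces the \emph{same} average state-action occupancy distribution on $\statespace\times\actionspace$ as it does in the underlying \gls{mdp} $\markovdecisionprocess$.

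Next I would invoke \Cref{lem:same_distrib_psmmdp} with the hypothesis that $\lim_{k\to\infty}(p^\pi)^{(k)}(s,a\vert s',a') = \averagestateoccupancydistribution(s,a)$ --- note the statement of the theorem phrases this with $\discountedstateoccupancydistribution[][\pi]$ rather than $\averagestateoccupancydistribution$, and I would reconcile this by simply using the average-reward occupancy distribution $\averagestateoccupancydistribution$ throughout (the $\gamma$-subscript/AVG-subscript being dropped for readability as announced in \Cref{subsec:def_state_distrib}, so the two notations refer to the same object in the average-reward context). The lemma then gives that the Cesàro limit of the $k$-step transition probabilities of the \gls{psmmdp}, under policy $\pi$ applied to the current state, converges to $\averagestateoccupancydistribution(s,a)$ regardless of the initial augmented state $x=(s_{-\delaymax+1},\dots,s_{-1},s,a_1,\dots,a_\delaymax)$. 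Since the average state-action occupancy distribution in the \gls{psmmdp} is, by definition, the Cesàro average of these $k$-step marginals over $\statespace\times\actionspace$ (using \Cref{eq:s_a_from_distrib_x_past} to project the augmented-state distribution down to $\statespace\times\actionspace$), it coincides with $\averagestateoccupancydistribution$.

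Finally I would combine the two observations: the average reward in the \gls{psmmdp} equals $\int_{\statespace\times\actionspace} r(s,a)\, \averagestateoccupancydistribution(\de s,\de a)$, which is precisely the average reward of $\pi$ in $\markovdecisionprocess$. One subtlety worth flagging explicitly in the write-up is that the \gls{psmmdp} reward is defined as $\sum_{g=0}^{\delaymax}\lambda_g r(s_{t-g},a_{t-g})$, a mixture of past per-step rewards rather than $r(s_t,a_t)$ itself; but because $\sum_g \lambda_g = 1$ and, in stationarity, each shifted pair $(s_{t-g},a_{t-g})$ has the same marginal distribution $\averagestateoccupancydistribution$, the expected instantaneous reward under stationarity is still $\int r\, \de\averagestateoccupancydistribution$, so the Cesàro average of the rewards is unaffected.

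The main obstacle I anticipate is this last point --- making rigorous that the time-average of the \emph{mixed} reward $\sum_g \lambda_g r(s_{t-g},a_{t-g})$ equals $\int r\,\de\averagestateoccupancydistribution$. This requires knowing not just that the one-step marginal of $(s_t,a_t)$ converges to $\averagestateoccupancydistribution$, but that each lagged marginal does too, and that the Cesàro averaging commutes with the finite sum over $g$ (which it does trivially since the sum is finite and $\sum_g\lambda_g=1$). I would handle it by noting that the lagged pair $(s_{t-g},a_{t-g})$ is itself the state-action of the augmented chain $\orderdstatespace\times\actionspace$ read off coordinates $e_{-g}$ and (a component of) the action buffer, so \Cref{lem:same_distrib_psmmdp} applied to the augmented chain already delivers convergence of every such lagged marginal to $\averagestateoccupancydistribution$; then linearity of the limit over the $\delaymax+1$ terms finishes it. Everything else is a routine assembly of the linear-functional characterisation of average reward with the convergence statement of the lemma.
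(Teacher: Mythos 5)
Your proposal is correct and follows essentially the same route as the paper: both reduce the theorem to \Cref{lem:same_distrib_psmmdp} and then conclude from the definition of the \gls{psmmdp} reward as a convex mixture $\sum_g\lambda_g r(s_{t-g},a_{t-g})$ with weights summing to one. You merely spell out the details (the linear-functional form of the average reward and the convergence of the lagged marginals) that the paper compresses into the phrase ``by the definition of the delay in a \gls{psmmdp}''.
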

\begin{proof}
    By \Cref{lem:same_distrib_psmmdp}, $\pi$ has the same distribution in $\statespace\times\actionspace$ in $\delayedmarkovdecisionprocess_{\boldsymbol{\lambda}}$ and $\markovdecisionprocess$. 
    By the definition of the delay in a \gls{psmmdp}, $\pi$ also has the same average reward in both processes.
\end{proof}

We now turn our attention to \gls{pmmmdp}. 
We suppose that the transition functions of the \gls{mtdmdp} model are homogeneous, that is, for $s,s'\in\statespace$ and $a\in\actionspace$:
\begin{align*}
    \transitionfunction_g(s'\vert s,a)=\transitionfunction^{(g+1)}(s'\vert s,a).
\end{align*}
In this way, the transition of a \gls{pmmmdp} as defined in \Cref{def:pmmmdp} becomes,
\begin{align*}
    \probability(S_{t+1}=s_{t+1}\vert& S_{t}=s_{t},A_{t}=a_{t},\dots,S_{t-\delaymax}=s_{t-\delaymax},A_{t-\delay}=a_{t-\delaymax})
    \\
    &= \sum_{g=0}^\delaymax \lambda_g \transitionfunction^{(g+1)}(s_{t+1}\vert s_{t-g},a_{t-g}).
\end{align*}

Under this assumption, one can prove the following result.
\begin{lemma}
\label{lem:same_distrib_pmmmdp}
    Let $\pi$ be a policy in an \gls{mdp} $\markovdecisionprocess$ that satisfies $\textstyle\lim_{k\rightarrow\infty}(p^\pi)^{(k)}(s,a\vert s',a')= \averagestateoccupancydistribution(s,a)$.
    Then, for a homogeneous \gls{pmmmdp} $\delayedmarkovdecisionprocess$ built upon $\markovdecisionprocess$, the application of $\pi$ on its current state yields the same average state-action occupancy distribution $\averagestateoccupancydistribution$ on $\statespace\times\actionspace$ as in $\markovdecisionprocess$.
\end{lemma}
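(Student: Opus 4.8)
The plan is to mimic exactly the proof of \Cref{lem:same_distrib_psmmdp}, adapting it to the homogeneous \gls{pmmmdp} transition structure. First I would fix the policy $\pi$ and observe that, once the policy is fixed, the augmented process over $\orderdstatespace=\statespace^{\delaymax}\times\actionspace^{\delaymax}$ becomes a Markov chain; this is legitimate because, by \Cref{prop:equ_mdp_pm_mtdmdp}, the homogeneous \gls{pmmmdp} is an \gls{mdp} on the augmented state space, so closing it with $\pi$ yields a bona fide \gls{mc}. I would then invoke \cite[Theorem~2.1]{adke1988limit} on this Markov chain exactly as in the \gls{psmmdp} case, to get that, for $(s',a')\in\statespace\times\actionspace$ and $\widebar x=(s_{-\delaymax+1},\dots,s_{-1},s,a_1,\dots,a_{\delaymax})\in\orderdstatespace$,
\begin{align*}
    \lim_{k\rightarrow\infty}(\higherordertransitionfunction^\pi)^{(k)}(s',a' \vert \widebar x) = \averagestateoccupancydistribution(s,a).
\end{align*}

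The substantive point — and the place where the homogeneity assumption is used — is identifying the one-step kernel $(\higherordertransitionfunction^\pi)^{(1)}$ and checking that its marginal over $\statespace\times\actionspace$ coincides with the one-step kernel $p^\pi$ of the underlying \gls{mdp} after the delay weights are summed. Concretely, I would write
\begin{align*}
    (\higherordertransitionfunction^\pi)^{(1)}(s',a'\vert \widebar x) = \left(\lambda_0 \transitionfunction^{(1)}(s'\vert s,a') + \sum_{g=1}^{\delaymax} \lambda_g \transitionfunction^{(g+1)}(s'\vert s_{\delaymax-g},a_{\delaymax+1-g})\right)\pi(a'\vert s'),
\end{align*}
and note that the term inside the parentheses is precisely the transition function of the homogeneous \gls{pmmmdp} under the assumption $\transitionfunction_g=\transitionfunction^{(g+1)}$; the factor $\pi(a'\vert s')$ accounts for the action selected at the new state. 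The condition $\lim_k (p^\pi)^{(k)}(s,a\vert s',a')=\averagestateoccupancydistribution(s,a)$ guarantees the limiting distribution exists and equals $\averagestateoccupancydistribution$ regardless of the (arbitrary, uniformly sampled) initial actions in the augmented state, which is exactly what \cite[Theorem~2.1]{adke1988limit} delivers. Marginalising the limit over $\orderdstatespace$ using the analogue of \Cref{eq:s_a_from_distrib_x_past} gives the stated average state-action occupancy distribution on $\statespace\times\actionspace$.

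I expect the main obstacle to be purely bookkeeping: making sure the index shifts in the homogeneous transition $\sum_{g=0}^{\delaymax}\lambda_g \transitionfunction^{(g+1)}(s_{t+1}\vert s_{t-g},a_{t-g})$ line up with the coordinates of the augmented state $\widebar x$, and confirming that \cite[Theorem~2.1]{adke1988limit} applies verbatim here — i.e. that the ergodicity-type hypothesis it needs is implied by the assumed convergence $\lim_k (p^\pi)^{(k)} = \averagestateoccupancydistribution$ lifted to the augmented chain. Since the augmented chain is a deterministic-shift-plus-mixture construction over finitely many coordinates, this lifting is routine, so no genuinely new idea beyond the \gls{psmmdp} argument is required; the theorem then follows for the average reward exactly as \Cref{th:psmmdp_same} followed from \Cref{lem:same_distrib_psmmdp}.
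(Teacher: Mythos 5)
There is a genuine gap, and it sits exactly at the point you flagged and then waved off. Your argument invokes \cite[Theorem~2.1]{adke1988limit} ``exactly as in the \gls{psmmdp} case'', but that result concerns the \gls{mtd} setting in which the \emph{same} one-step kernel appears at every lag, i.e.\ mixtures of the form $\sum_g \lambda_g \transitionfunction(\cdot\vert s_{t-g},a_{t-g})$. The homogeneous \gls{pmmmdp} kernel you yourself write down mixes \emph{different} kernels at different lags, namely the powers $\transitionfunction^{(g+1)}$ applied to the state--action pair $g$ steps in the past; this is a multi-matrix (\gls{mtdg}-type) mixture, which is outside the scope of the Adke--Deshmukh theorem. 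The paper's proof uses a different key result for precisely this reason: it applies \cite[Proposition~4]{berchtold1996modelisation}, which establishes the limit distribution of the homogeneous \gls{mtdg} model where the lag-$g$ matrix is the $g$-step power of a single matrix, and identifies that limit with the stationary distribution of the base chain.

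The remaining claim in your proposal --- that because the augmented process is ``a deterministic-shift-plus-mixture construction over finitely many coordinates, this lifting is routine'' --- is not a substitute for that result. Fixing $\pi$ does give a Markov chain on the augmented space, and ergodicity-type arguments may give existence of \emph{some} limit, but the content of the lemma is that the marginal limit on $\statespace\times\actionspace$ coincides with $\averagestateoccupancydistribution$ of the underlying undelayed chain; for mixture chains this identification is exactly what the \gls{mtd}/\gls{mtdg} limit theorems prove, and it does not follow automatically from $\lim_k (p^\pi)^{(k)}=\averagestateoccupancydistribution$. To close the gap you should either cite the Berchtold result for the homogeneous multi-matrix case, as the paper does, or prove directly that the power-mixture chain admits $\averagestateoccupancydistribution$ as its limiting state--action distribution (e.g.\ by checking it is invariant for the mixture kernel and arguing convergence), which is a genuine extra step rather than bookkeeping. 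Your identification of the one-step kernel and the use of the homogeneity assumption are otherwise in line with the paper's argument.
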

\begin{proof}
    As before, fixing the policy $\pi$, the underlying \gls{mdp} becomes a \gls{mc}. 
    By application of \cite[Proposition~4]{berchtold1996modelisation}, to this Markov chain, one has that, for $(s',a')\in\statespace\times\actionspace$
    \begin{align*}
        \lim_{k\rightarrow\infty}(\higherordertransitionfunction^\pi)^{(k)}(s',a' \vert x) = \averagestateoccupancydistribution(s,a),
    \end{align*}
    where $x=(s,a_1,\dots,a_\delay)\in\augmentedstatespace$ and
    \begin{align*}
        (\higherordertransitionfunction^\pi)^{(1)}(s',a' \vert x)= \left(\sum_{g=1}^{\delaymax} \lambda_g \transitionfunction^{(g)}(s_t\vert s,a_{\delaymax+1-g}) + \lambda_g \transitionfunction(s\vert s,a')\right)\pi(a'\vert s).
    \end{align*} 
    The term in parentheses in the above equation is exactly the definition of a homogenous \gls{pmmmdp} transition function.
    This concludes the proof.
\end{proof}
Then, with the same proof as for \gls{psmmdp}, the following result is obtained.
\begin{thm}
\label{th:pmmmdp_same}
    Let $\pi$ be a policy in an \gls{mdp} $\markovdecisionprocess$ that satisfies $\lim_{k\rightarrow\infty}(p^\pi)^{(k)}(s,a\vert s',a')= \discountedstateoccupancydistribution[][\pi](s,a)$ and $\delayedmarkovdecisionprocess_{\boldsymbol{\lambda}}$ be an homogenous \gls{pmmmdp} built upon $\markovdecisionprocess$. 
    Then, applying the policy $\pi$ to the current state in the \gls{pmmmdp} yields the same average reward as in $\markovdecisionprocess$.
\end{thm}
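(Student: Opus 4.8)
\textbf{Proof plan for Theorem (\gls{pmmmdp} preserves average reward).}

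The plan is to mirror exactly the two-step argument already used for the \gls{psmmdp} case (\Cref{lem:same_distrib_psmmdp} and \Cref{th:psmmdp_same}), replacing the single-matrix limit result by the multi-matrix one. Concretely, the statement follows immediately by combining \Cref{lem:same_distrib_pmmmdp} with the structure of the reward in a \gls{pmmmdp}. So the only thing to do is to invoke \Cref{lem:same_distrib_pmmmdp}, which already does the heavy lifting: it says that, fixing the undelayed policy $\pi$ and applying it to the current state of a homogeneous \gls{pmmmdp} $\delayedmarkovdecisionprocess_{\boldsymbol{\lambda}}$, the induced process has the same limiting state-action occupancy distribution $\averagestateoccupancydistribution$ on $\statespace\times\actionspace$ as the Markov chain obtained by fixing $\pi$ in the underlying \gls{mdp} $\markovdecisionprocess$.

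First I would note that the hypothesis $\lim_{k\rightarrow\infty}(p^\pi)^{(k)}(s,a\vert s',a')=\discountedstateoccupancydistribution[][\pi](s,a)$ (with the convention that, for the average-reward analysis of this subsection, $\discountedstateoccupancydistribution[][\pi]=\averagestateoccupancydistribution$ when the discount subscript is dropped, as stated in \Cref{subsec:def_state_distrib}) is precisely the convergence assumption needed to apply \Cref{lem:same_distrib_pmmmdp}. Second, I would apply that lemma to conclude that $\pi$, applied to the current state of $\delayedmarkovdecisionprocess_{\boldsymbol{\lambda}}$, induces the same average state-action occupancy distribution $\averagestateoccupancydistribution$ on $\statespace\times\actionspace$ as $\pi$ does in $\markovdecisionprocess$. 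Third, I would observe that by the very definition of the reward of a \gls{pmmmdp},
\begin{align*}
    R(s_{t},a_{t},\dots,s_{t-\delaymax},a_{t-\delaymax}) = \sum_{g=0}^\delaymax \lambda_g \rewardfunction(s_{t-g},a_{t-g}),
\end{align*}
so that, taking expectations under the stationary (state-action) distribution and using $\sum_{g=0}^{\delaymax}\lambda_g=1$ together with the fact that each marginal $(s_{t-g},a_{t-g})$ converges to $\averagestateoccupancydistribution$, the long-run average reward in $\delayedmarkovdecisionprocess_{\boldsymbol{\lambda}}$ equals $\int_{\statespace\times\actionspace} r(s,a)\,\averagestateoccupancydistribution(\de s,\de a)$, which is exactly the average reward of $\pi$ in $\markovdecisionprocess$. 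This chain of equalities yields the claim.

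The only mildly delicate point — and the one I would spell out carefully rather than treat as routine — is the passage from ``each lagged pair $(s_{t-g},a_{t-g})$ has limiting marginal $\averagestateoccupancydistribution$'' to ``the expected per-step reward $\expectedvalue[\sum_{g} \lambda_g r(s_{t-g},a_{t-g})]$ converges to $\int r\,\de\averagestateoccupancydistribution$''. This uses linearity of expectation across the $\delaymax+1$ terms and boundedness of $r$ (Assumption on bounded reward), so each term $\lambda_g\expectedvalue[r(s_{t-g},a_{t-g})]\to\lambda_g\int r\,\de\averagestateoccupancydistribution$ and the finite sum converges; since $\sum_g\lambda_g=1$ the limit is $\int r\,\de\averagestateoccupancydistribution$. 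Everything else is a verbatim transcription of the \gls{psmmdp} argument, and indeed the proof can be stated in one line: ``By \Cref{lem:same_distrib_pmmmdp}, $\pi$ has the same state-action distribution on $\statespace\times\actionspace$ in $\delayedmarkovdecisionprocess_{\boldsymbol{\lambda}}$ and $\markovdecisionprocess$; by the definition of the reward in a homogeneous \gls{pmmmdp} and $\sum_g\lambda_g=1$, it also has the same average reward in both processes.'' I expect no real obstacle here — the work was already done in \Cref{lem:same_distrib_pmmmdp}, which in turn rests on \cite[Proposition~4]{berchtold1996modelisation}.
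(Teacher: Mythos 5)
Your proposal is correct and matches the paper's own argument: the paper proves \Cref{th:pmmmdp_same} exactly by invoking \Cref{lem:same_distrib_pmmmdp} and then concluding from the definition of the \gls{pmmmdp} reward, just as it did for the \gls{psmmdp} case. Your extra care in spelling out the passage from equal state-action distributions to equal average reward (linearity over the $\delaymax+1$ lagged terms, bounded reward, $\sum_g\lambda_g=1$) is a harmless elaboration of a step the paper treats as immediate.
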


\subsection{Analysis of ISM-MDP}
\label{subsec:analysis_ismmdp}
We will now analyse more in detail the case of \gls{ismmdp} which is perhaps more interesting from the point of view of delayed \gls{rl}.
Indeed, as we have already said, \glspl{ismmdp} contain the constant delay studied in previous chapters as a special case.
Since we wish to study the effect of the delay vector $\boldsymbol\lambda$, we will note $\expectedreturn[\gamma][\delayedpolicy](\boldsymbol\lambda)$ the expected $\gamma$ discounted return of a delayed policy $\delayedpolicy$ and $\averagereturnfunction[\delayedpolicy](\boldsymbol\lambda)$ its average reward in a $\boldsymbol\lambda$-delayed \gls{ismmdp}.

As seen in \Cref{th:perf_delay_vector_all_weight}, it seems reasonable that the more weight is assigned to higher delays, the lower the performance.
Recall that, since \glspl{ismmdp} contain constant \glspl{dmdp} as a special case, \Cref{th:perf_delay_vector_all_weight} can be applied to a delay vector where the whole weight is concentrated in a single index. 
We are now interested in studying a more general result in which the weight of a delay vector $\boldsymbol\lambda$ is spread along different values.
We first of all show a potentially counter-intuitive result. 

It could seem reasonable to conjecture that the higher the mean of the delay, the lower the performance. However, we prove that it is wrong using the counterexample of \Cref{fig:counter_example_mtmdp}. In this \gls{mdp}, consider two agents. The first, agent 1, has value delay weights $\lambda_0=\epsilon, \lambda_1=0,\lambda_2=1-\epsilon$ while the other, agent 2, has all weight on $\lambda_1$, that is, $\lambda_0=\lambda_2=0$. For $\epsilon<0.5$, agent 1 suffers from more delay than agent 2. However, the actions of the agents have no effect on the transition in this \gls{mdp} and therefore agent 1 can optimise its one-step reward, while agent 2 cannot. 
The optimal policy for agent 1 is $\pi(s_1)=b$ and $\pi(s_0)=a$, which yields an expected one-step reward of $ 1 \cdot\epsilon+ 0.5\cdot(1-\epsilon)=0.5( 1+\epsilon)$. On the contrary, agent 2 has an expected one-step reward of $0.5$. 
\begin{figure}[t]
    \centering
    \includegraphics[bylabel=counterexamplemtmdp]{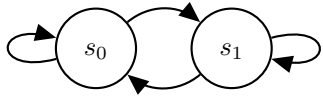}
    \caption{Counter example for the conjecture that the higher the mean delay, the lower the performance. 
    The agent has two actions, $a$ and $b$, in each of the two states. The agent has, whatever the action, whatever the state, $1/2$ probability to transition to the other state and $1/2$ to remain in the same state. The reward is $r(s_1,b)=r(s_0,a)=1$ and $r(s_0,b)=r(s_1,a)=0$. The reward and transition function are specially defined such two agents, one with a smaller expected delay but less mass on the undelayed weight $\lambda_0$ compared to the other, will perform worse.}
    \label{fig:counter_example_mtmdp}
\end{figure}

Therefore, we need to find another way to compare two delay vectors. A way to consider the distribution of $\boldsymbol\lambda$ as a whole would be to consider its cumulative distribution. 
It is the subject of the following conjecture.

\begin{conj}
\label{conj:delay_cumulativ_distrib}
    Let $\boldsymbol\lambda$ and $\boldsymbol\lambda'$, be two delay vectors. 
    Denote $F_{\boldsymbol\lambda}$ and $F_{\boldsymbol\lambda}'$ their respective cumulative distribution as represented in \Cref{fig:hist_delay}.
    If, for any $i\in\naturalnumbers$:
    \begin{align*}
        F_{\boldsymbol\lambda}(i) \ge F_{\boldsymbol\lambda}'(i),
    \end{align*}
    then their optimal expected discounted return and average reward satisfy,
    \begin{align*}
        &\expectedreturn[\gamma][\star](\boldsymbol\lambda) \ge        \expectedreturn[\gamma][\star](\boldsymbol\lambda') 
        \\
        &\averagereturnfunction[\star](\boldsymbol\lambda)\ge\averagereturnfunction[\star](\boldsymbol\lambda').
    \end{align*}
\end{conj}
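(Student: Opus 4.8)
The plan is to mimic the construction used in the proof of \Cref{th:perf_delay_vector_all_weight}: given an optimal (Markovian, history-based) policy for the $\boldsymbol\lambda'$-delayed \gls{ismmdp}, I would exhibit a (possibly history-dependent, non-stationary) policy for the $\boldsymbol\lambda$-delayed \gls{ismmdp} that induces the same state-action occupancy over $\statespace\times\actionspace$ at every time step, hence the same per-step expected reward, hence the same discounted return or average reward. Since the optimal policy in the $\boldsymbol\lambda$-process can only do better, this yields the two inequalities. The key structural fact that makes the hypothesis $F_{\boldsymbol\lambda}(i)\ge F_{\boldsymbol\lambda'}(i)$ natural here is that first-order stochastic dominance of the delay distribution is equivalent to the existence of a coupling: there is a random variable pair $(G,G')$ with marginals $\boldsymbol\lambda,\boldsymbol\lambda'$ and $G\le G'$ almost surely. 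The idea is then that an agent subject to delay $\boldsymbol\lambda'$ ``sees further into the past'', so an agent with the dominated delay $\boldsymbol\lambda$ can always degrade its own information to simulate the worse one.

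\textbf{Key steps, in order.} First I would set up the equivalent augmented \glspl{mdp} from \Cref{prop:equ_mdp_is_mtdmdp}, so that both processes are genuine \glspl{mdp} over $\augmentedstatespace=\statespace\times\actionspace^{\delaymax}$, and fix $\optimalpolicy'$, an optimal policy for the $\boldsymbol\lambda'$-process, which by \cite[Theorem~6.2.10 and Theorem~8.1.2]{puterman1994markov} may be taken Markovian on the augmented state. Second, I would build the coupling of $\boldsymbol\lambda$ and $\boldsymbol\lambda'$ witnessing the stochastic dominance. Third — and this is the crux — I would define a policy $\delayedpolicy$ in the $\boldsymbol\lambda$-process whose \emph{transition dynamics} match those of $\optimalpolicy'$ in the $\boldsymbol\lambda'$-process. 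The delicate point is that in an \gls{ismmdp} the delayed index $g$ is resampled \emph{independently at each step} from $\boldsymbol\lambda$ (resp.\ $\boldsymbol\lambda'$), and it determines both which past action drives the transition kernel $p(\cdot\mid s_t,a_{t-g})$ and which reward term $r(s_t,a_{t-g})$ is collected; so to reproduce the $\boldsymbol\lambda'$-dynamics I need, at each step, the action buffer of the $\boldsymbol\lambda$-agent to agree with that of the $\boldsymbol\lambda'$-agent on all indices up to $G'$, and then select the same current action. Because $G\le G'$ and because the $\boldsymbol\lambda$-agent's buffer (length $\delaymax$) contains at least as recent information, it can always replay the choice $\optimalpolicy'$ would have made; the extra randomness needed to ``pretend'' the delay is the larger $G'$ is harmless since it is simulated internally by the agent. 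Fourth, by induction on $t$ — exactly as in the proof of \Cref{th:perf_delay_vector_all_weight}, using the Dirac-product bookkeeping for the action buffers and the consistency of initialisation (both processes start with the first $\delaymax$ actions sampled identically, which I would record as an assumption analogous to \Cref{def:consistency_dmdps}) — I would show that the state-action occupancy $\discountedstateoccupancydistribution[t][\delayedpolicy](s,a)$ on $\statespace\times\actionspace$ equals $\discountedstateoccupancydistribution[t][\optimalpolicy']$ for all $t$. Fifth, integrate against $r$ to conclude $\expectedreturn[\gamma][\delayedpolicy](\boldsymbol\lambda)=\expectedreturn[\gamma][\optimalpolicy'](\boldsymbol\lambda')$ (and likewise for the average reward), and finish with $\expectedreturn[\gamma][\star](\boldsymbol\lambda)\ge\expectedreturn[\gamma][\delayedpolicy](\boldsymbol\lambda)$.

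\textbf{Main obstacle.} The hard part will be step three: making precise the sense in which the $\boldsymbol\lambda$-agent ``simulates'' a larger delay. Unlike the constant-delay case of \Cref{th:perf_delay_vector_all_weight}, where the simulated delay $\delay_2$ is deterministic and one merely pads the augmented state with $\delay_2-\delay_1$ extra state-action pairs, here the delay realised at each step is random and the simulated policy must react to it. I expect the clean way is to enlarge the state of the simulating agent with an auxiliary coupling variable drawn from the conditional law of $G'$ given $G$, apply $\optimalpolicy'$ to the buffer truncated at that simulated index, and then verify that the marginal transition kernel obtained after averaging over this auxiliary randomness is exactly $\sum_{g}\lambda'_g\,p(\cdot\mid s_t,a_{t-g})$ — which is where the coupling's marginal constraint $G'\sim\boldsymbol\lambda'$ is used, and where the inequality $G\le G'$ guarantees the truncated buffer index is always available (i.e.\ $\le\delaymax$). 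One subtlety to check is that the reward terms line up under the same auxiliary randomness, which they do because in an \gls{ismmdp} the same index $g$ governs transition and reward. If any of these couplings turns out to require the buffer length $\delaymax$ to be at least the support of $\boldsymbol\lambda'$, that is already built into the model, so it is not a real restriction. I would therefore expect the conjecture to be provable by this route, the only genuine work being the careful measure-theoretic verification that the averaged dynamics match.
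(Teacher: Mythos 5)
The first thing to note is that the paper does not prove this statement at all: it is presented as a conjecture, supported only by an informal argument in the special case where $\boldsymbol\lambda$ is a point mass at some $\delay$, and the chapter explicitly records an unresolved obstacle (the initialisation of the two processes). So there is no paper proof to match; the question is whether your proposal closes the gap, and it does not. The crux (your step three) misreads who controls the delay in an ISM-MDP. The lag $g$ of the applied action is drawn by the environment, afresh and independently at every step, and anonymously --- the agent never observes which buffered action was applied. The agent's only lever is which actions it commits to the buffer, and a committed action may be consumed at several future steps (or at none). Consequently the agent cannot ``internally simulate'' the larger delay by drawing from the conditional law of $G'$ given $G$: it does not know $G$, and even if it did, the buffered action had to be committed before the environment drew $G$, and that single committed action is shared by every future step at which the environment may apply it. The only mechanism actually available is the one in the paper's sketch: randomise each buffered action over the virtual buffer of $\optimalpolicy'$ so that the environment's $\boldsymbol\lambda$-mixing composed with the agent's own randomisation reproduces the $\boldsymbol\lambda'$-mixing. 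When $\boldsymbol\lambda=\delta_{\delay}$ each slot is applied at exactly one deterministic lag and this composition is immediate; for a general dominated pair it amounts to asking for an ``extra-lag'' distribution $\mu$ with $\boldsymbol\lambda'=\boldsymbol\lambda * \mu$ (or a time-inhomogeneous analogue), and first-order stochastic dominance does not supply such a non-negative deconvolution: the coupling $G\le G'$ says nothing about $G'-G$ being independent of $G$. So your construction at best recovers the special case already sketched in the chapter and does not establish the conjecture in the generality stated.

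Even in that special case, your step four glosses over the very obstacle the paper names. Both processes are initialised with the same uniformly random buffer, so during the first $\delaymax$ steps the transitions are mixtures of exogenous actions weighted by $\boldsymbol\lambda$ versus $\boldsymbol\lambda'$; the state distributions at the moment the agent takes control therefore already differ, and the simulating agent has no way to correct for this --- assuming consistent initialisation, as you propose, is precisely what creates the mismatch, not what removes it. For the discounted criterion this transient cannot be discarded, so the induction ``occupancies agree at every $t$'' does not even start. A minor further inaccuracy: the ISM-MDP reward is the $\boldsymbol\lambda$-weighted sum over all lags, not the reward at a sampled lag, so ``the same index $g$ governs transition and reward'' is not how the model is defined (harmless in expectation, but the measure-theoretic verification you defer would have to be phrased accordingly).
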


\begin{figure}[t]
    \centering
    \includegraphics[labelmtd=histogramdelayvector]{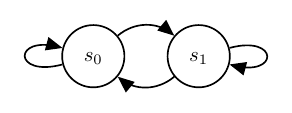}
    \caption{Example of delay vector $\boldsymbol\lambda$ with associated cumulative distribution function. The weight $\lambda_i$ is associated to delay $i$.}
    \label{fig:hist_delay}
\end{figure}

As an argument for the conjecture, we provide the following idea. 
Let $\delayedmarkovdecisionprocess$ and $\delayedmarkovdecisionprocess'$ be two \glspl{ismmdp} with respective delay vectors $\boldsymbol\lambda$ and $\boldsymbol\lambda'$.
Assume that $\forall i\in\naturalnumbers, F_{\boldsymbol\lambda}(i) \ge F_{\boldsymbol\lambda}'(i)$ and that $\boldsymbol\lambda$ has all its weight in $\lambda_\delay$ for $\delay\in[\![0,\delaymax]\!]$. 
For a policy $\delayedpolicy'$, an augmented state $x_t=(s_t,a_{t-\delaymax},\dots,a_{t-1})$ and an action $a_{t}\sim\delayedpolicy'$, a new state of the underlying \gls{mdp} is sampled in $\delayedmarkovdecisionprocess'$ under the distribution,
\begin{align}
    s_{t+1}\sim p(\cdot\vert x_t,a_{t})= \sum_{g=\delay}^{\delaymax'}\lambda_g' \transitionfunction(\cdot\vert s_t,a_{t-g}).\label{eq:proba_trans_plus_delayed}
\end{align}
Note that the summation starts at $\delay$ since $\forall i\in\naturalnumbers, F_{\boldsymbol\lambda}(i) \ge F_{\boldsymbol\lambda}'(i)$ and $\boldsymbol\lambda$ has all its weight on $\lambda_\delay$.
In $\delayedmarkovdecisionprocess$, the action selected by the agent at time $t$ will be applied exactly at time $t+\delay$.
One could therefore build a policy $\delayedpolicy$ in $\delayedmarkovdecisionprocess$ such the action it had selected at time $t-\delay$ exactly reproduces the probability of \Cref{eq:proba_trans_plus_delayed}. 
Formally, this would be the case if,
\begin{align*}
    \delayedpolicy(a\vert x_{t-\delay}) = \sum_{g=\delay}^{\delaymax'}\lambda_g'  \delta_{a_{t-g}}(a).
\end{align*}
Therefore, this agent would populate its augmented state with actions sampled from a different distribution than $\delayedpolicy'$.
However, if $\delayedpolicy$ is history-based, it can keep a memory of the actions that $\delayedpolicy'$  would have in its buffer. 
Yet, a technical complexity lies in the initialisation of the processes. 
If $\delayedmarkovdecisionprocess$ and $\delayedmarkovdecisionprocess'$ start with the same sequence of actions in their buffer, due to their different delay vectors, the first states in $\statespace$ will be sampled from different distributions, while the agent will have no control on it.

In the following, we continue to study the impact of the delay by evaluating its effect on the variance.
We show that not only does the performance decrease as the delay increases, but also the variance of the expected return or average reward shrinks. 
This result applies to constantly delayed \gls{dmdp}. 

\begin{prop}
\label{pp:var_ismmdp}
    Let $\delayedmarkovdecisionprocess_{1}$ and $\delayedmarkovdecisionprocess_{2}$ be two consistent \glspl{dmdp} with respective constant delay $\delay_1$ and $\delay_2$ and augmented state spaces $\augmentedstatespace_1$ and $\augmentedstatespace_2$. 
    Assume $\delay_1<\delay_2$.
    Consider $\delayedpolicy_2$ a policy for $\delayedmarkovdecisionprocess_{2}$ with $\sigma$-finite occupancy measure.
    Then, there exists a policy $\delayedpolicy_1$ for $\delayedmarkovdecisionprocess_{1}$ such that, $\forall x_1\in\augmentedstatespace_1$,
    $\mathbb{E}^{\delayedpolicy_1}[Y\vert
    x_1]=\mathbb{E}^{\delayedpolicy_2}[Y\vert x_1]$ and,
    \begin{align*}
         \mathbb{V}\mathrm{ar}^{\delayedpolicy_2}\left[\mathbb{E}^{\delayedpolicy_2}[Y\vert a,x_2]\right]\le\mathbb{V}\mathrm{ar}^{\delayedpolicy_1}\left[\mathbb{E}^{\delayedpolicy_1}[Y\vert a,x_1]\right],
    \end{align*}
    where $x_2\in\augmentedstatespace_2$ is the augmented state built from $x_1$ and $Y$ is a random variable that denotes expected discounted returns with either infinite horizon or average reward. 
    The term $\mathbb{E}^{\pi}[Y\vert a,x]$ is the conditional expectation of $Y$ under policy $\pi$ starting from the augmented state $x$ and selecting action $a$ first while $\mathbb{V}\mathrm{ar}^{\pi}$ denotes the variance under the state-action distribution induced by policy $\pi$.
\end{prop}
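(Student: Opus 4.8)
The plan is to reuse the coupling behind \Cref{th:perf_delay_vector_all_weight}. Given the policy $\delayedpolicy_2$ on $\augmentedstatespace_2$, I would first build, exactly as in that proof, a policy $\delayedpolicy_1$ on $\delayedmarkovdecisionprocess_{1}$ that reproduces the augmented-state distribution of $\delayedpolicy_2$ over $\augmentedstatespace_2$ and selects the same actions. Since $\delay_1<\delay_2$, the natural state $x_1$ observed in $\delayedmarkovdecisionprocess_{1}$ is strictly \emph{finer} than $x_2$ (it carries the more recent observation $s_{t-\delay_1}$), whereas $x_2$ retains older actions and an older observation. Because $\delayedpolicy_1$ generates trajectories with the same law as $\delayedpolicy_2$, the two processes share the same return distribution and the same value at each augmented state, which immediately yields the first claim $\mathbb{E}^{\delayedpolicy_1}[Y\vert x_1]=\mathbb{E}^{\delayedpolicy_2}[Y\vert x_1]$.

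For the variance inequality I would work in the coupled probability space, in which the full-information state $\overline{x}$ (as in the proof of \Cref{th:perf_delay_vector_all_weight}) simultaneously determines $x_1$ and $x_2$. The core step is to express the coarser Q-function as a belief average of the finer one, $Q^{\delayedpolicy_2}(x_2,a)=\mathbb{E}^{\delayedpolicy_1}\!\left[Q^{\delayedpolicy_1}(x_1,a)\,\middle|\,x_2,a\right]$. This should follow from the belief-consistency identity used in \Cref{eq:perf_diff_lem_equiv_belief}: conditionally on the coarse observation $x_2$, the fresher state inside $x_1$ is the sufficient statistic refining the belief $\augmentedbelief(\cdot\vert x_2)$, and averaging the finer returns over $\augmentedbelief(\cdot\vert x_1)$ against the conditional law of $x_1$ given $x_2$ reconstructs the coarse return. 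Given this identity, the law of total variance (equivalently, conditional Jensen) gives $\mathbb{V}\mathrm{ar}^{\delayedpolicy_2}\!\left[Q^{\delayedpolicy_2}(x_2,a)\right]\le\mathbb{V}\mathrm{ar}^{\delayedpolicy_1}\!\left[Q^{\delayedpolicy_1}(x_1,a)\right]$, which is the desired bound.

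To make the two variances refer to the genuine occupancy measures rather than to the coupling, I would use marginalisation relations of the type in \Cref{eq:s_a_from_distrib_x} together with the reproduction property: the coupled law of $(x_2,a)$ coincides with the occupancy of $\delayedpolicy_2$ on $\augmentedstatespace_2$, while that of $(x_1,a)$ coincides with the occupancy of $\delayedpolicy_1$ on $\augmentedstatespace_1$. To guarantee that $Q^{\delayedpolicy_1}(x_1,a)$ is genuinely a function of $x_1$ alone, so that its variance over the $\augmentedstatespace_1$-occupancy is well defined, I would invoke \Cref{corol:markov_optimal_delayed_equiv}: it yields a \emph{Markovian} policy on $\augmentedstatespace_1$ with the same state--action occupancy, for which the future is self-contained given $(x_1,a)$. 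The $\sigma$-finiteness assumption on the occupancy measure is exactly what is required to apply that result.

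The hard part will be the belief-average identity $Q^{\delayedpolicy_2}(x_2,a)=\mathbb{E}^{\delayedpolicy_1}[Q^{\delayedpolicy_1}(x_1,a)\vert x_2,a]$, because $x_1$ and $x_2$ are not literally nested: $x_2$ carries older actions and an older observation absent from $x_1$, while $x_1$ carries a fresher observation absent from $x_2$, so one cannot simply invoke monotonicity of conditional variance under a sub-$\sigma$-algebra. The reconciliation must exploit the Markov property to discard the stale components of $x_2$ as irrelevant for the future once $x_1$ is known, and must choose the policy construction (history-based for the reproduction step, then its Markovian occupancy-equivalent via \Cref{corol:markov_optimal_delayed_equiv}) so that reproduction of $\delayedpolicy_2$ and conditional independence of $Y$ from the stale components hold simultaneously. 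Care with the initialisation of the two consistent \glspl{dmdp} (\Cref{def:consistency_dmdps}) will also be needed so that the coupling is valid during the first $\delay_2$ steps.
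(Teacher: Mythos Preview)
Your proposal is correct and follows essentially the same route as the paper: build $\delayedpolicy_1$ via the coupling of \Cref{th:perf_delay_vector_all_weight}, pass to a Markovian version through \Cref{corol:markov_optimal_delayed_equiv}, and exploit the law of total variance together with Markovianity to compare the two conditional-expectation variances under the shared occupancy.

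The one place where the paper is slicker than your plan is the ``hard part'' you flag. Rather than proving the belief-average identity $Q^{\delayedpolicy_2}(x_2,a)=\mathbb{E}^{\delayedpolicy_1}[Q^{\delayedpolicy_1}(x_1,a)\mid x_2,a]$ directly (which, as you note, is awkward because $x_1$ and $x_2$ are not nested $\sigma$-algebras), the paper separates it into two trivial steps: first use $\mathbb{V}\mathrm{ar}\bigl[\mathbb{E}[Y\mid a,x_2]\bigr]\le\mathbb{V}\mathrm{ar}\bigl[\mathbb{E}[Y\mid a,x_2,x_1]\bigr]$, which holds unconditionally by monotonicity of conditional variance; then drop $x_2$ from the enlarged conditioning, $\mathbb{E}[Y\mid a,x_2,x_1]=\mathbb{E}[Y\mid a,x_1]$, by Markovianity of the underlying process (since $x_1$ carries the more recent state). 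This two-step maneuver is exactly the ``discard the stale components'' idea you mention, but it bypasses any need to argue a tower identity between non-nested conditionings.
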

\begin{proof}
    Recall the law of total variance; for some random variables $X$ and $Y$ such that $Y$ has finite variance, one has
    \begin{align*}
        \variance[Y] = \expectedvalue\left[\variance[Y\vert X]\right] + \variance\left[\expectedvalue[Y\vert X]\right].
    \end{align*}
    We will apply this result to $\textstyle Y=\sum_{t=0}^\infty \gamma^{t} r(s_{t},a_{t})$ in the case of expected return or $\textstyle Y=\frac{1}{H}\sum_{t=0}^\infty r(s_{t},a_{t})$ in the case of average reward\footnote{Since the proof is similar in both cases, we do not make a difference in the notation.}.
    Note that, by the assumption of a bounded reward ($0\le r(s,a)\le \maximumreward$), $Y$ has a finite variance. 
    Now, let $\delayedpolicy_2$ be a policy in $\delayedmarkovdecisionprocess_{2}$ with $\sigma$-finite occupancy measure.
    By \Cref{corol:markov_optimal_delayed_equiv}, there exists a Markovian policy in $\delayedmarkovdecisionprocess_{1}$ with the same state occupancy measure in $\augmentedstatespace_1\times\actionspace$ as  $\delayedpolicy_2$.
    By \cite[Lemma~1]{laroche2022non} they have the same expected return or average reward.
    This shows that $\mathbb{E}^{\delayedpolicy_1}[Y]=\mathbb{E}^{\delayedpolicy_2}[Y]$ and, in particular, for an action $a\in\actionspace$ and an augmented state $x_2\in\augmentedstatespace_2$, $\mathbb{E}^{\delayedpolicy_1}[Y\vert a,x_2]=\mathbb{E}^{\delayedpolicy_2}[Y\vert a,x_2]$.
    Now, recall also that, given two random variables $X_1$ and $X_2$, one has,
    \begin{align*}
        \variance\left[\expectedvalue[Y\vert X_1]\right]\le \variance\left[\expectedvalue[Y\vert X_1,X_2]\right].
    \end{align*}
    Therefore, if we also note $x_1\in\augmentedstatespace_1$ the current augmented state in $\delayedmarkovdecisionprocess_1$, one has,
    \begin{align*}
        \mathbb{V}\mathrm{ar}^{\delayedpolicy_2}\left[\mathbb{E}^{\delayedpolicy_2}[Y\vert a,x_2]\right] &\le \mathbb{V}\mathrm{ar}^{\delayedpolicy_2}\left[\mathbb{E}^{\delayedpolicy_2}[Y\vert a,x_2,x_1]\right]
        \\
        &=\mathbb{V}\mathrm{ar}^{\delayedpolicy_2}\left[\mathbb{E}^{\delayedpolicy_2}[Y\vert a,x_1]\right],
    \end{align*}
    where we note that $Y$ depends only on future rewards and $x_1$ contains more recent information than $x_2$ since $\delay_1<\delay_2$. 
    The last equation holds by the Markovianity of the underlying process. 
    Next, we express the last term of the above equation under the distribution induced by $\delayedpolicy_1$,
    \begin{align}
        \mathbb{V}\mathrm{ar}^{\delayedpolicy_2}\left[\mathbb{E}^{\delayedpolicy_2}[Y\vert a,x_1]\right] 
        &= \mathbb{V}\mathrm{ar}^{\delayedpolicy_2}\left[\mathbb{E}^{\delayedpolicy_1}[Y\vert a,x_1]\right]\label{eq:same_return_f}
        \\
        &= \mathbb{V}\mathrm{ar}^{\delayedpolicy_1}\left[\mathbb{E}^{\delayedpolicy_1}[Y\vert a,x_1]\right]\label{eq:same_distrib_f},
    \end{align}
    where \Cref{eq:same_return_f} holds since the policies have the same return and \Cref{eq:same_distrib_f} since the policies have the same state occupancy measure on $\augmentedstatespace_1\times\actionspace$. 
    Regrouping the two equations yields the result.
\end{proof}

This result shows that the higher the delay, the less control the agent has and the lower the variance of its return is. 
The range of returns that a delayed policy can get shrinks as the delay increases.
In the limit, when the delay grows to infinity, the agent no longer has control over the sequence of actions, and the expected return or average reward follows the stochastic of the \gls{mdp} only. 

We can also easily extend \Cref{th:perf_delay_vector_all_weight} to show that the worst delayed policy is better than the worst undelayed policy.

\begin{coroll}[Corollary of \Cref{th:perf_delay_vector_all_weight}]
\label{th:inverse_perf_delay_vector_all_weight}
    Let $\delayedmarkovdecisionprocess_{1}$ and $\delayedmarkovdecisionprocess_{2}$ be two consistent \glspl{dmdp} with respective constant delay $\delay_1$ and $\delay_2$.
    Consider $\expectedreturn[1][W]$ and $\expectedreturn[2][W]$, the worst expected discounted returns with infinite horizon or average reward obtained over the set of delayed policies.
    If $\delay_1<\delay_2$, then one has:
    \begin{align*}
        \expectedreturn[1][W] \le \expectedreturn[2][W].
    \end{align*}
\end{coroll}
\begin{proof}
    It suffices to apply \Cref{th:perf_delay_vector_all_weight} to the process in which the reward function is replaced by its additive inverse.
\end{proof}

Therefore, the range of returns that are attainable by the set of delayed policies shrinks as the delay increases.
In the limit, when the delay grows to infinity, all policies yield the same expected return or average reward.

The last two results combined give us more understanding of the problem of delayed \gls{rl}. 
The longer the delay, the less the environment is controllable, and the smallest the range of possible returns. 
This phenomenon is represented in \Cref{fig:evolution_delay}.

\begin{figure}[t]
    \centering
    \includegraphics[labelmtd=policyreturn]{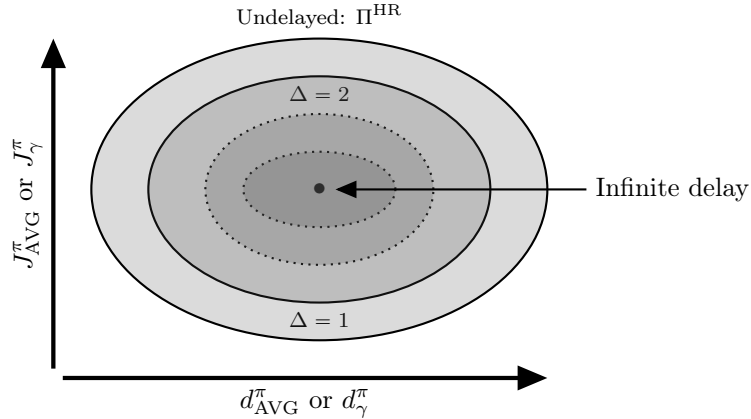}
    \caption{A representation of the theoretical insights for the performance of delayed policies. 
    The x-axis is an abstract quantity representing the state distribution attained by a given policy while the y-axis shows the performance of that policy.
    The outer ellipse represents the set of all random history-based and undelayed policies. 
    The optimal undelayed policy would be the uppermost point of this ellipse.
    As the delays $\delay$ grows, the set of attainable returns and state distributions contracts until the delay becomes infinite, at which point the agent loses all control.}
    \label{fig:evolution_delay}
\end{figure}

\subsection{Learning in an ISM-MDP}

\glspl{ismmdp}, when seen as \glspl{mdp} by augmentation of the state, have a particular structure that we will study in this section, in order to understand which algorithms from the literature may apply to them. 

First, a problematic result is that \glspl{ismmdp} are generally not \emph{unichain}, even though the underlying \gls{mdp} is. An \gls{mdp} is unichain when the transition matrix of any stationary and deterministic policy is unichain \cite[Section~8.3.1]{puterman1994markov} (see also \Cref{def:unichain_mc}). 
In this section, when referring to \glspl{ismmdp}, we intend for the \gls{mdp} to which they can be cast following \Cref{prop:equ_mdp_is_mtdmdp}.

\begin{prop}
\label{pp:unichain_ismmdp}
    There exist \glspl{ismmdp} that are not unichain even though the underlying \gls{mdp} is.
\end{prop}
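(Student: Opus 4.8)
The plan is to exhibit an explicit counterexample: the cleanest route for a "there exist" statement. First I would pick a small underlying \gls{mdp} $\markovdecisionprocess$ that is unichain (indeed, whose single recurrent class is all of $\statespace$ for every stationary deterministic policy) and a delay vector $\boldsymbol\lambda$ with mass on at least two indices, say $\lambda_0>0$ and $\lambda_1>0$ with $\lambda_0+\lambda_1=1$, and $\delaymax=1$. The associated augmented state space from \Cref{prop:equ_mdp_is_mtdmdp} is $\augmentedstatespace=\statespace\times\actionspace$, so an augmented state is a pair $(s,a_1)$ where $a_1$ is the one pending action. The transition in the equivalent \gls{mdp} for action $a$ is $\augmentedtransitionfunction((s',a')\vert (s,a_1),a)=\left(\lambda_0 p(s'\vert s,a)+\lambda_1 p(s'\vert s,a_1)\right)\delta_a(a')$, i.e. the new pending action is deterministically the action just chosen.

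Second, I would engineer $\markovdecisionprocess$ so that some actions act as "absorbing traps'' in the action-buffer coordinate. Take $\actionspace=\{a,b\}$ and design $p$ so that whatever the state, if the pending action is $a$ the dynamics keep the reachable states inside a subset $\statespace_a$, and if the pending action is $b$ they keep states inside a disjoint subset $\statespace_b$. For instance, let $\statespace=\{s_a,s_b\}$, with $p(\cdot\vert\cdot,a)=\delta_{s_a}$ and $p(\cdot\vert\cdot,b)=\delta_{s_b}$: this \gls{mdp} is unichain (in fact both states communicate under the stationary policy that alternates — but we need it unichain for stationary deterministic policies, and under the constant-$a$ policy the chain is $\{s_a\}$ recurrent plus $s_b$ transient, likewise for constant-$b$, so it is unichain). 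Now consider the stationary deterministic policy $\delayedpolicy$ on the augmented \gls{mdp} that plays $a$ whenever the buffered action is $a$ and $b$ whenever the buffered action is $b$, i.e. $\delayedpolicy(s,a_1)=a_1$. Then starting from $(s_a,a)$ the process stays in $\{(s_a,a)\}$ forever, and starting from $(s_b,b)$ it stays in $\{(s_b,b)\}$ forever; moreover neither class can be left, so the augmented chain under $\delayedpolicy$ has two disjoint recurrent classes. Hence the augmented \gls{mdp} is not unichain, even though the underlying one is.

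Third, I would write this up carefully: verify the underlying \gls{mdp} is unichain by checking the two constant policies (the only stationary deterministic ones here) each give a unichain Markov chain, recall \Cref{def:unichain_mc} and the definition of a unichain \gls{mdp}, then exhibit $\delayedpolicy$ above and show its transition matrix on $\augmentedstatespace$ has the two closed communicating classes $\{(s_a,a)\}$ and $\{(s_b,b)\}$ — two recurrent classes, so not unichain. A small figure analogous to \Cref{fig:counter_example_mtmdp} would make the trap structure transparent. The only mild subtlety (and the main thing to get right) is the initialization: the process is initialized as a constantly $\delaymax$-delayed \gls{dmdp} with the first action sampled uniformly, so the augmented chain's initial distribution puts mass on both $(s_a,a)$-type and $(s_b,b)$-type states — but unichain-ness of an \gls{mdp} is a property of the transition structure of deterministic stationary policies, not of the initial distribution, so this does not affect the argument; I would just note it explicitly to preempt confusion. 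I expect no real obstacle here beyond choosing the example so that the ``trap'' is genuinely absorbing for some stationary deterministic policy while the base \gls{mdp} stays unichain; the construction above already achieves this.
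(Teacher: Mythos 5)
There is a genuine gap: your base \gls{mdp} is not unichain. With $\statespace=\{s_a,s_b\}$, $p(\cdot\vert\cdot,a)=\delta_{s_a}$ and $p(\cdot\vert\cdot,b)=\delta_{s_b}$, the stationary deterministic policies are \emph{all} maps $\statespace\to\actionspace$ (four of them), not just the two constant ones you check. The policy $\pi(s_a)=a$, $\pi(s_b)=b$ makes both $s_a$ and $s_b$ absorbing, so the induced chain has two recurrent classes and the underlying \gls{mdp} already violates unichain-ness; the hypothesis of the proposition fails for your example. The problem is structural, not just an oversight in this instance: your plan is to put the ``trap'' in the state space (pending action $a$ confines the state to $\statespace_a$, pending action $b$ to a disjoint $\statespace_b$), but any such confinement is inherited by the base policy that plays $a$ on $\statespace_a$ and $b$ on $\statespace_b$, which then already has at least two recurrent classes in the undelayed \gls{mdp}. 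So the construction cannot be patched while keeping that design.

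The paper's proof avoids this by letting the separation come from the action buffer alone rather than from the states: for \emph{any} unichain base \gls{mdp} with $\cardinal{\actionspace}\ge 2$ (e.g.\ one whose transitions do not depend on the action at all, as in \Cref{fig:counter_example_mtmdp}), the stationary deterministic policy $\pi(x)=a_{\delaymax}$ on $\augmentedstatespace$ keeps repeating the most recent action, so for each $a\in\actionspace$ the set of augmented states whose buffer is filled with $a$ is closed, and it contains a recurrent class because the base \gls{mdp} under the constant-$a$ policy does; these $\cardinal{\actionspace}$ recurrent classes are distinct because their buffer components differ. Your choice of delayed policy $\delayedpolicy(s,a_1)=a_1$ is exactly this policy for $\delaymax=1$, so the fix is simply to replace your base \gls{mdp} by one that is genuinely unichain under every stationary deterministic policy (action-independent transitions being the easiest choice) and then observe that the two (or $\cardinal{\actionspace}$) recurrent classes in $\augmentedstatespace$ are distinguished by the buffer content, not by which states are reachable.
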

\begin{proof}
    In an \gls{ismmdp} built on top of a unichain \gls{mdp}, consider some deterministic stationary policy $\pi$  which given an augmented state $x=(s,a_1,\dots,a_\delay)$ reads:
    \begin{align*}
        \pi(x) = a_\delay.
    \end{align*}
    Then, this policy induces exactly $\cardinal{\actionspace}$ chains on $\augmentedstatespace$.
\end{proof}

\begin{remark}
    The policy defined in the previous proof visits all states $s\in\statespace$ since the underlying \gls{mdp} is unichain, and the policy $\pi(s) = a$ is deterministic and stationary. 
    However, it does not visit all the augmented states in $\augmentedstatespace$.
\end{remark}

The previous result means that some algorithms, such as UCRL~\cite{auer2006logarithmic} cannot be applied directly to \glspl{ismmdp} with augmented states.
A more promising road is the one of UCRL2~\cite{auer2008near} which applies to \emph{communicating} \glspl{mdp} \cite[Section~8.3.1]{puterman1994markov} (see also \Cref{def:communicating_mdp}).

\begin{prop}[Communicating ISM-MDP]
\label{pp:communicating_ismmdp}
    Consider a communicating \gls{mdp} $\markovdecisionprocess$ and an \gls{ismmdp} $\delayedmarkovdecisionprocess_{\boldsymbol\lambda}$ built upon it. 
    Then, the \gls{ismmdp} is communicating as well.
\end{prop}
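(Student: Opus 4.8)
The plan is to show directly that the $\boldsymbol\lambda$-delayed \gls{ismmdp}, viewed as the \gls{mdp} $\markovdecisionprocess$ on the augmented state space $\augmentedstatespace=\statespace\times\actionspace^\delaymax$ given by \Cref{prop:equ_mdp_is_mtdmdp}, satisfies \Cref{def:communicating_mdp}: for any two augmented states $x=(s,a_1,\dots,a_\delaymax)$ and $x'=(s',a_1',\dots,a_\delaymax')$, I must exhibit a deterministic stationary policy $\delayedpolicy$ on $\augmentedstatespace$ that reaches $x'$ from $x$ with positive probability. The key observation is that an augmented state is determined by its last observed state together with the action buffer, and the buffer is completely overwritten after $\delaymax$ steps by whatever actions the policy selects. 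So the target $x'$ is reachable as soon as (i) we can steer the underlying state to $s'$, and (ii) during the final $\delaymax$ steps we feed exactly the actions $a_1',\dots,a_\delaymax'$ into the buffer.

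First I would make precise which underlying state must be reached. In $\markovdecisionprocess$, a transition from $x=(s,a_1,\dots,a_\delaymax)$ under an applied action $a$ produces an augmented state whose underlying-state component is sampled from $p(\cdot\vert s,a_1)=\sum_{g=0}^\delaymax\lambda_g p(\cdot\vert s,a_{1-g})$ — i.e. the ISM mixture evaluated at $x$ — and whose buffer shifts to $(a_2,\dots,a_\delaymax,a)$. After $\delaymax$ such steps the buffer equals the sequence of the last $\delaymax$ applied actions. Hence, to hit $x'=(s',a_1',\dots,a_\delaymax')$ it suffices to arrive, $\delaymax$ steps before the end, at some augmented state whose underlying state is a state $\bar s$ from which the length-$\delaymax$ action sequence $(a_1',\dots,a_\delaymax')$ has positive probability of producing underlying state $s'$ at the end; then play exactly $a_1',\dots,a_\delaymax'$. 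Because each ISM-mixture transition puts positive weight on $\lambda_0 p(\cdot\vert\cdot,a)$ whenever $\lambda_0>0$ — and in any case on at least one of the $p(\cdot\vert\cdot,a_{1-g})$ terms — I would argue that reachability in the underlying communicating \gls{mdp} transfers: since $\markovdecisionprocess$ is communicating there is a deterministic stationary policy and a finite path of positive probability from $s$ to any desired underlying state, and by persistence of actions one can realize the corresponding action sequence in the augmented process.

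Concretely, I would construct $\delayedpolicy$ in two phases, defined purely as a function of the augmented state (so it is stationary and deterministic). In the first phase, whenever the current augmented state $x$ is not ``on the home stretch'', $\delayedpolicy$ applies an action chosen to drive the underlying state toward the region from which $(a_1',\dots,a_\delaymax')$ leads to $s'$ — using the underlying communicating property and, if needed, the fact that repeating a fixed action sufficiently long flushes the buffer so that from that point on the effective one-step kernel at the underlying state is a fixed mixture of the $p(\cdot\vert\cdot,a)$. In the second phase, once the underlying state lies in that target region, $\delayedpolicy$ applies $a_1',a_2',\dots,a_\delaymax'$ in order; this is encodable statelessly because after the first of these is played the buffer records it, after the second it records two of them, etc., so the policy can read off from the buffer how far along the stretch it is. The composition of a positive-probability path in phase one with the deterministic phase-two sequence gives a positive-probability path from $x$ to $x'$.

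The main obstacle I expect is making the ``target region / home-stretch'' bookkeeping rigorous: I need the set of augmented states triggering phase two to be recognizable from the state alone (so $\delayedpolicy$ stays stationary), and I need to be sure that after flushing the buffer with a fixed action the resulting effective kernel still lets me reach the required underlying state — which requires checking that the ISM mixture $\sum_g\lambda_g p(\cdot\vert\cdot,a_{1-g})$ does not destroy communication (it cannot decrease the support too much, since it is a convex combination of the original kernels, at least one of whose components is active). A clean way around the subtlety is to assume, or note it costs no generality to assume, $\lambda_0>0$: then each augmented step has positive probability of behaving exactly like the underlying \gls{mdp} transition $p(\cdot\vert s,a)$ with $a$ the most-recently-applied action, so reachability in the underlying \gls{mdp} lifts verbatim, and only the final $\delaymax$ buffer-filling steps need separate (trivial) treatment. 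If the statement is meant for general $\boldsymbol\lambda$, I would instead handle the degenerate case where some prefix of the $\lambda_g$ vanishes by the same buffer-flushing argument as in the discussion following \Cref{conj:delay_cumulativ_distrib}.
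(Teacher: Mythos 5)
Your overall plan (build a deterministic stationary policy on $\augmentedstatespace$, use the communicating property of the underlying \gls{mdp} to steer the state component, and fill the buffer with $a_1',\dots,a_\delaymax'$ during the last $\delaymax$ steps) is the right shape, and it is close in spirit to the paper's argument. But the crucial final-stretch step is wrong as stated. You require reaching a state $\bar s$ from which the \emph{target buffer sequence} $(a_1',\dots,a_\delaymax')$ has positive probability of producing $s'$; this is exactly what your ``$\lambda_0>0$'' shortcut forces, since in the undelayed-behaviour realisation the actions applied during the buffer-filling steps are precisely $a_1',\dots,a_\delaymax'$. Such a $\bar s$ need not exist: take $\statespace=\{u,v\}$, $\actionspace=\{a,b\}$, with $a$ sending every state to $u$ and $b$ sending every state to $v$; this \gls{mdp} is communicating, yet no state reaches $u$ by playing $b$, so the augmented state $(u,b)$ is unreachable along your route, although it is reachable in the \gls{ismmdp} (select $b$ while the buffered action $a$ is the one applied). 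The paper avoids this entirely by exploiting the opposite realisation: since $\lambda_{\delaymax}>0$, with positive probability every step applies the \emph{oldest} buffered action, so the process behaves like a constant-$\delaymax$-delayed \gls{dmdp}; then the actions selected in the last $\delaymax$ steps are never applied before arrival, the underlying path into $s'$ is driven by earlier selections, and the target buffer only has to be \emph{selected}, not to produce anything. Relatedly, assuming $\lambda_0>0$ is not without loss of generality—it excludes the constant-delay case, which is the main special case the proposition is meant to cover—and the fallback you gesture at for general $\boldsymbol\lambda$ is not worked out.

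The second gap is the one you yourself flag but do not close: well-definedness of the deterministic stationary policy. Declaring that the policy ``reads off from the buffer how far along the stretch it is'' does not resolve the possibility that the same augmented state occurs at two points of your plan with conflicting prescribed actions (e.g.\ when the target buffer actions coincide with actions used in phase one, or when the phase-one path revisits a configuration). The paper handles this explicitly: it fixes a \emph{shortest} path ($k$ minimal) from the state reachable after flushing the buffer to $s'$, shows by a shortcut/contradiction argument that a repeated augmented state with conflicting actions would contradict minimality, and treats the uncontrolled first $\delaymax$ steps separately by keeping the prescription of the later index. Without an argument of this kind, your two-phase construction does not yet define a policy, so the proof is incomplete even in the $\lambda_0>0$ regime.
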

\begin{proof}
    We first prove the result for constantly delayed \gls{dmdp}, before proving the generalisation to \gls{ismmdp}.
    
    For a delayed policy $\delayedpolicy$, we note $(p^\delayedpolicy)^{(k)}(s'\vert x)$ the probability that the state of the environment is $s'$ after applying the $k$ oldest actions of the augmented state starting from the state $s$ in $x$.
    If $k>\delaymax$, then the successive actions are sampled by $\delayedpolicy$. 
    Similarly, for an undelayed policy $\pi$,  we note $(p^\pi)^{(k)}(s'\vert s)$ the probability that the state of the environment is $s'$ after following policy $\pi$ for $k$ steps, starting from $s$.
    
    Let $x=(s,a_1,\dots,a_\delay)$ and $x'=(s',a_1',\dots,a_\delay')$ be two augmented states for $\delayedmarkovdecisionprocess_{\boldsymbol\lambda}$.
    Let $s''\in\statespace$ such that $(p^\delayedpolicy)^{(\delaymax)}(s''\cdot x)>0$. 
    Since the underlying \gls{mdp} is communicating, there exists an undelayed policy $\pi\in\Pi^{\text{SD}}$ and $k\in\naturalnumbers$ such that $(p^\pi)^{(k)}(s'\vert s'')>0$. 
    Let $k$ be the smallest such number.  
    Let us now note $\tau$ a trajectory in the undelayed \gls{mdp} that goes from $s$ to $s''$ in $\delaymax$ steps first, then from $s''$ to $s'$ in $k$ steps with probability $p^\pi(\tau)>0$. 
    The aforementioned properties guarantee that such a trajectory exists. 
    Specifically, we note,
    \begin{align*}
        \tau=(s,a_0^\star,s_1^\star,a_1^\star,\dots,a_{\delaymax-1}^\star,s'',a_{\delaymax}^\star,\dots,s_{k+\delaymax-1}^\star,a_{k+\delaymax-1}^\star,s').
    \end{align*}
    We can rewrite $s$ as $s_0^\star$ and $s''$ as $s_{\delaymax}^\star$ and for the action contained in $x'$, we note $a_i'$ as $a_{k+\delaymax+i-1}^\star$. 
    We are now ready to define a candidate stationary deterministic delayed policy $\delayedpolicy$ to take us from $s$ to $s'$ with non-zero probability. 
    By defining $x_i^\star=(s_i^\star,a_i^\star,\dots,a_{i+\delaymax-1}^\star)$, the policy is defined as follows,
    \begin{align*}
        &\delayedpolicy(x_i^\star) = a_{i+\delaymax}^\star.
    \end{align*}
    The definition for other augmented states is irrelevant and can be made freely.
    This policy has a non-zero probability to go from $x$ to $x'$ but it is not clear whether it actually defines a policy.
    Indeed, the agent could be faced twice with the same augmented state while the above policy would indicate two different actions. 
    Said alternatively, $\delayedpolicy$ may not be a correctly defined function. 
    We shall now demonstrate that this is not the case, or when it is, a simple modification can be applied. 
    
    Let $i,j\in\naturalnumbers$ such that $\delaymax<i<j$ and  $x_i^\star=x_j^\star$ but $\delayedpolicy(x_i^\star)\neq \delayedpolicy(x_j^\star)$.
    Then, the trajectory that goes from $s_j^\star$ to $s$ is long $\delaymax+k-j<\delaymax+k-i$ but $s_j^\star=s_i^\star$ so the undelayed policy as defined after time $j$ could be applied at time $i$ to yield a shorter trajectory with strictly positive probability. This contradicts the assumption on $\tau$.
    There remains a little complexity; as the reader noticed, we assumed $\delaymax<i<j$.
    Indeed, since we do not control for the first $\delaymax$ actions, an augmented state present after the $\delaymax$ step may already be found in the first $\delaymax$ steps.
    However, this acts as if the agent were already in a more advanced part of the trajectory $\tau$, therefore, one can keep the action defined for the augmented state with the higher index as the value for $\delayedpolicy$.
    This ensures that $\delayedpolicy$ is a properly defined stationary deterministic policy that verifies $(p^\delayedpolicy)^{(\delaymax+k)}(x'\vert x)>0$. 
    The \gls{dmdp} is therefore communicating.
    
    The result for \gls{ismmdp} is obtained as follows.
    At any step, there is a non-zero probability that the action at the $\delaymax$ position is executed; therefore, there is a non-zero probability that a trajectory is sampled as if the whole process were a $\delaymax$-constantly delayed \gls{dmdp}.
    From the above, because the latter is communicating, there is therefore a non-zero probability of reaching $x'$ starting from $x$ in the original \gls{ismmdp}.
\end{proof}

\begin{remark}
    This result, as shown in the proof and because \glspl{ismmdp} includes constantly delayed \glspl{dmdp} as a special case, demonstrates that a constantly delayed \gls{dmdp} is communicating if the underlying \gls{mdp} is.
\end{remark}

With this property, one can therefore apply UCRL2~\cite{auer2008near} to learn a policy in an \gls{ismmdp}.
As said in \Cref{subsec:theoretical_rl}, the regret of UCRL2 depends on the notion of the diameter of the \gls{mdp}, defined in \Cref{def:diameter}.
The diameter in an \gls{ismmdp} is obviously lower bounded by the diameter of the underlying \gls{mdp}.
One can give a better lower bound on the diameter by applying a result by \cite{auer2008near} to this special case. 

\begin{prop}
    Let $\delayedmarkovdecisionprocess_{\boldsymbol\lambda}$ be an \gls{ismmdp} with maximum delay $\delaymax$ and such that its action space consists of two actions or more, then
    \begin{align*}
        D(\delayedmarkovdecisionprocess_{\boldsymbol\lambda})\geq \delaymax-3 + \log_{\cardinal{\actionspace}}\cardinal{\statespace}
    \end{align*}
\end{prop}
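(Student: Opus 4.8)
The plan is to reduce the statement to a classical lower bound on the diameter of a finite \gls{mdp} in terms of the sizes of its state and action spaces, applied to the \gls{mdp} equivalent to the \gls{ismmdp}.

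First I would invoke \Cref{prop:equ_mdp_is_mtdmdp}: the \gls{ismmdp} $\delayedmarkovdecisionprocess_{\boldsymbol\lambda}$ is equivalent to an \gls{mdp} $\markovdecisionprocess$ whose state space is the augmented state space $\augmentedstatespace=\statespace\times\actionspace^{\delaymax}$ and whose action space is $\actionspace$; since the diameter in the delayed setting is, as already in \Cref{pp:communicating_ismmdp}, understood with respect to augmented states, we have $D(\delayedmarkovdecisionprocess_{\boldsymbol\lambda})=D(\markovdecisionprocess)$. If $\markovdecisionprocess$ is not communicating, its diameter is infinite and the inequality holds trivially, so I may assume it is communicating (which is guaranteed, e.g., by \Cref{pp:communicating_ismmdp} whenever the underlying \gls{mdp} is). Then $\markovdecisionprocess$ is a finite communicating \gls{mdp} with $\cardinal{\augmentedstatespace}=\cardinal{\statespace}\cdot\cardinal{\actionspace}^{\delaymax}$ states and $\cardinal{\actionspace}\ge 2$ actions.

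The second ingredient is the diameter lower bound of \cite{auer2008near}: any communicating \gls{mdp} with $n$ states and $m\ge 2$ actions satisfies $D\ge \log_m n - 3$. The intuition is that, from a given state, at most $m$ distinct successors are available per step (cleanly so in the deterministic case, with only an extra additive constant in the stochastic one), so about $\log_m n$ steps are needed before every one of the $n$ states can be reached. Applying this to $\markovdecisionprocess$ with $n=\cardinal{\statespace}\cdot\cardinal{\actionspace}^{\delaymax}$ and $m=\cardinal{\actionspace}$ yields
\begin{align*}
    D(\delayedmarkovdecisionprocess_{\boldsymbol\lambda})=D(\markovdecisionprocess)\ge \log_{\cardinal{\actionspace}}\!\big(\cardinal{\statespace}\cdot\cardinal{\actionspace}^{\delaymax}\big)-3 = \delaymax - 3 + \log_{\cardinal{\actionspace}}\cardinal{\statespace},
\end{align*}
using $\log_{\cardinal{\actionspace}}\big(\cardinal{\actionspace}^{\delaymax}\big)=\delaymax$ and additivity of the logarithm; this is exactly the claim.

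The only point that needs care is checking the hypotheses of the cited bound: one must verify that $\markovdecisionprocess$ is a bona fide finite communicating \gls{mdp} with precisely $\cardinal{\statespace}\cdot\cardinal{\actionspace}^{\delaymax}$ augmented states and at least two actions, and dispatch the non-communicating case separately; the remaining manipulation is the elementary logarithm identity above.
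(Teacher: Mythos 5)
Your proposal is correct and follows essentially the same route as the paper: the paper's proof likewise applies the diameter lower bound of \cite{auer2008near} (their Corollary~15, $D \ge \log_{\cardinal{\actionspace}}\cardinal{\statespace} - 3$) to the augmented \gls{mdp} with state space $\statespace\times\actionspace^{\delaymax}$ obtained from the \gls{ismmdp}, and concludes by the same logarithm computation. Your additional remarks (dispatching the non-communicating case and verifying the count $\cardinal{\statespace}\cdot\cardinal{\actionspace}^{\delaymax}$) are sensible hygiene but do not change the argument.
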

\begin{proof}
    From \cite[Corollary~15]{auer2008near}, we know that, for some \gls{mdp} $\markovdecisionprocess$ with state space $\statespace$ and action space $\actionspace$,
    \begin{align*}
        D(\markovdecisionprocess)\ge\log_{\cardinal{\actionspace}}\cardinal{\statespace}-3.
    \end{align*}
    Applying this result to the augmented \gls{mdp} obtained from \gls{ismmdp} with state space $\statespace\times\actionspace^{\delaymax}$ concludes the proof. 
\end{proof}

We see here the additive impact of the delay on the diameter. Note that, even if the term $-3 + \log_{\cardinal{\actionspace}}\cardinal{\statespace}$ happened to be negative, the diameter can never be inferior to $\delaymax$. To show this, it suffices to consider two augmented states whose actions do not match, it obviously takes more than $\delaymax$ actions to take from one to the other. 
\section{Experimental Evaluation}
In this section, an analysis of \gls{ismmdp} and \gls{immmdp} is provided.
First, we describe the settings of the experiments in \Cref{subsec:setting_exp_mtd} before presenting and discussing the results in \Cref{subsec:results_mtd}.

\subsection{Setting}
\label{subsec:setting_exp_mtd}
For all the tasks discussed in this section, we run and average the results over 10 seeds.

\subsubsection{Tasks}
\label{subsubsec:tasks_mtd}

\noindent\textbf{Pendulum for \gls{immmdp}.}\indent In this experiment, we consider the Pendulum task, already been extensively studied in this dissertation. 
We wish to empirically observe the consequences of \Cref{lem:same_distrib_psmmdp}.
Therefore, we will study the state distribution of the same policy $\pi$ trained with \gls{sac} in the undelayed \gls{mdp} and tested in a \gls{immmdp}. 
These state distributions will be compared to the state distribution of a random policy in the undelayed environment.

\noindent\textbf{Pendulum for \gls{ismmdp}.}\indent Then, we focus the experiments on the \gls{ismmdp}. 
In this first task, we explore the setting of discounted expected returns in the Pendulum environment.
We study the effect of changing the delay distribution as defined by the delay vector $\boldsymbol\lambda$ on the return of our agent. 
The policy of the agent will be learnt with A-SAC for 50.000 steps sampled from the environment. 

\noindent\textbf{Maze.}\indent As a benchmark for tests in the average reward setting, we consider a 3x3 grid world\footnote{The environment is available here:\href{https://github.com/MattChanTK/gym-maze}{link}.} where the agent must learn its way from a starting state in the upper-left corner to a goal state in the lower-right one. 
The walls present inside the grid make the task slightly more difficult. 
The agent can go in any of the four cardinal directions.
When the agent hits a wall or the limits of the environment, it remains in the same cell. 
In this environment, we compare several values of the delay vector $\boldsymbol\lambda$. 
More precisely, for $\boldsymbol\lambda=(\lambda_0,\lambda_1)$, we consider the values for $\lambda_0,\lambda_1$ in the set $\{0,0.1,\dots,0.9\}$.
The particular case $\boldsymbol\lambda=(1,0)$ is the undelayed one and $\boldsymbol\lambda=(0,1)$ corresponds to a constant 1-step delay.
since the goal is the average reward, we will consider the UCRL2 algorithm and study the regret of our agent (see \Cref{subsec:theoretical_rl}).
Note that the computation of the optimal policy even for a simple environment is more intricate in the case of \gls{ismmdp}. 
First, even if the underlying \gls{mdp} is deterministic, the initialisation of the process introduces stochasticity in the initial state.
Second, when the weights of the delay vector are not concentrated at a single element, stochasticity is further injected in the transition itself.

\subsection{Results}
\label{subsec:results_mtd}

\noindent\textbf{Pendulum for \gls{immmdp}.}\indent We report the state distributions for the three processes in \Cref{fig:state_distrib_pendulum_mtd}. 
These distributions are obtained by running several episodes of the Pendulum environment. 
Clearly, the undelayed \gls{sac} policy produces a similar state distribution on the undelayed \gls{mdp} and the \gls{psmmdp}.
This distribution is very different from that of a random policy.
One could observe an innermost parabola where only states from the undelayed process are observed.   
States with low velocity and small angle are typical in the initial steps of the process, and the fact that the delayed process does not observe such a state can be due to the delay initialisation shift (see \Cref{subsec:delay_init}). 
The first randomly sampled action from the environment gives an initial speed or angle that allows the agent to start in a different position.
This does not invalidate our theory, as these first states are transient for \gls{sac}'s close to the optimal policy. 

\begin{figure}[t]
    \centering
    \includegraphics[labelmtdstates=statesdistribmtdpendulum]{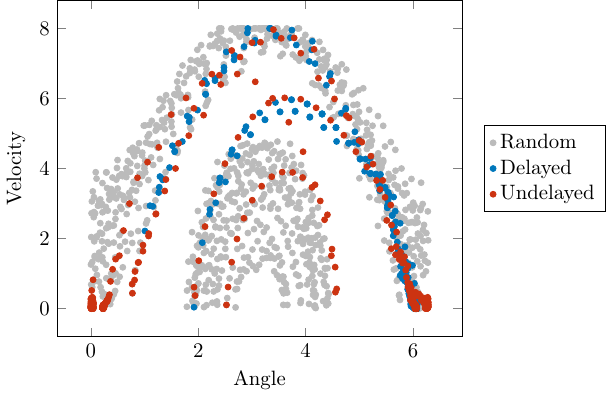}
    \caption{Comparison of the state distribution for a policy obtained with \gls{sac}, when tested in an undelayed Pendulum environment or its delayed \gls{psmmdp} counterpart. 
    The state is composed of two elements, the velocity and the angle of the pole.
    The state distribution of a random policy in the undelayed environment is added for comparison. }
    \label{fig:state_distrib_pendulum_mtd}
\end{figure}

\noindent\textbf{Pendulum for \gls{ismmdp}.}\indent The returns obtained for different values of the delay vector are shown in \Cref{fig:pendulum_mtd}. 
As the delay distribution starts to shift to longer delays, the return starts to decrease, as expected in \Cref{conj:delay_cumulativ_distrib}. 
Looking more closely at the results, it seems that when the delay vector is distributed over two consecutive values of $\lambda$, $(\lambda_i=0.5,\lambda_{i+1}=0.5)$, the performance is generally below the one of $(\lambda_{i+1}=1)$, although not significantly.
This could be against \Cref{conj:delay_cumulativ_distrib} but may only be due to a learning problem. 
In fact, the number of samples for training A-SAC is fixed for all delays, but delays of the type $(\lambda_i=0.5,\lambda_{i+1}=0.5)$ inject more stochasticity into the transition probabilities, likely making the learning of an optimal policy harder. 

\begin{figure}[t]
    \centering
    \includegraphics[labelmtd=pendulummtd]{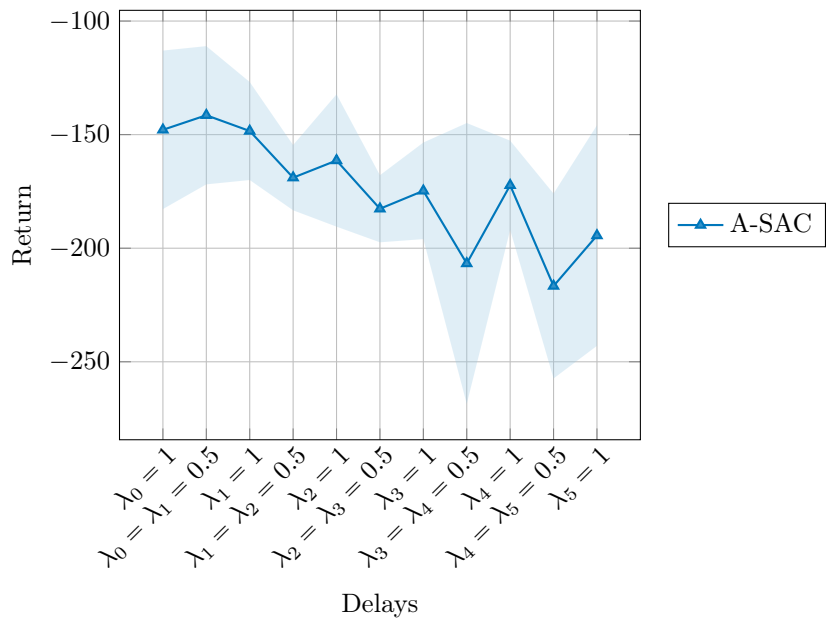}
    \caption{Evolution of the return for A-SAC tested on different values of the delay vector $\boldsymbol\lambda$ on the Pendulum environment. 
    Note the particular values for the undelayed ($\lambda_0=1$) and constantly $\delay$-delayed environments ($\lambda_{\delay}=1$).}
    \label{fig:pendulum_mtd}
\end{figure}

\noindent\textbf{Maze.}\indent The results are reported in \Cref{fig:grid_world_ucrl}. 
Here again, the effect of a shifting delay distribution can be observed more clearly. 
As the distribution of the delay places more weight on longer delays, the regret increases. 
Interestingly, there appears to be a gap between the constant delay cases ($\boldsymbol\lambda_0=1$ and $\boldsymbol\lambda_1=1$) and the other delay vectors.
Indeed, although arranged by cumulative distribution, the regrets for stochastic delays are clustered and significantly away from constant delays regrets.

\begin{figure}[t]
    \centering
    \includegraphics[labelmtd=gridworld]{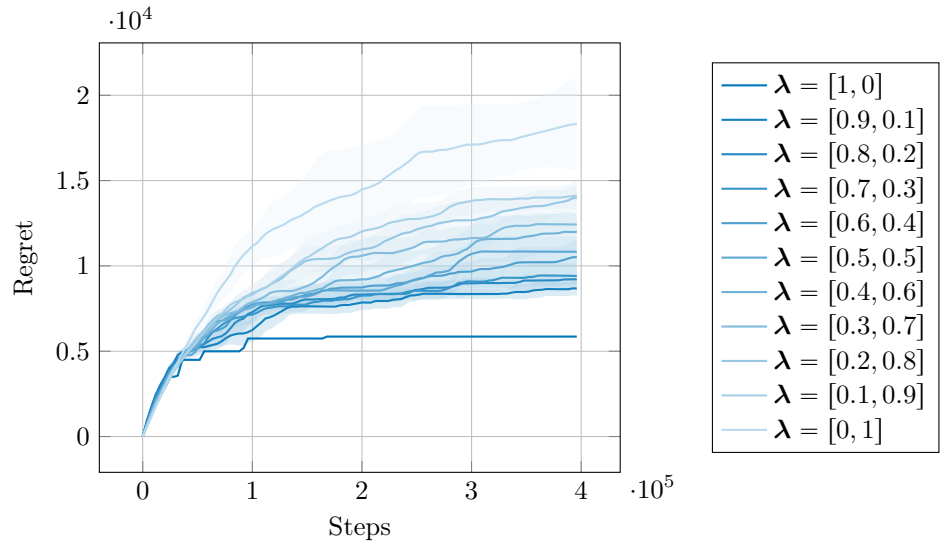}
    \caption{
    Evolution of the regret for UCRL2 tested on different values of the delay vector $\boldsymbol\lambda$ on the Maze environment. 
    Note the particular cases of the undelayed ($\boldsymbol\lambda=(1,0)$) and constant 1-step delayed ($\boldsymbol\lambda=(0,1)$) processes.}
    \label{fig:grid_world_ucrl}
\end{figure}

\section{Conclusion}

In this chapter, we have proposed to extend the framework of constant delay in action execution.
More precisely, we have considered the possibility that an action's effect may spread over several future steps.
Leveraging previous literature on \gls{mc} of higher order, we have proposed adapting the \gls{mtd} and \gls{mtdg} models to \glspl{mdp} in order to obtain \glspl{mtdmdp}.
This choice was made for the great modelling capacity of \glspl{mtd} and their low parameter dimensionality. 
This new model of higher-order \glspl{mdp} offers exactly the possibility of observing the effect of an action over several future timesteps. 
Four variations of the process have been presented, depending on whether it is based upon \gls{mtd} or \gls{mtdg} and depending on the state in which the actions are applied. 
The processes are \glspl{ismmdp}, \glspl{immmdp}, \glspl{psmmdp} and \glspl{pmmmdp}.

We then conducted a theoretical analysis of their properties.
First, we have shown that the \glspl{mtdmdp} can be cast back to \glspl{mdp} by augmentation of the state, as in the constantly delayed case.
We have then focused on learning optimal policies for these models. 
Considering \glspl{psmmdp} and \glspl{pmmmdp}, we showed that an undelayed policy yields the same average reward in these processes as in the underlying undelayed \gls{mdp}.
Then, we have studied the more interesting model of \gls{ismmdp} which contains constant delays as a particular case.
First, the peculiarities of the model have been studied, highlighting how the choice of a delay vector $\boldsymbol\lambda$ acts on the average reward and state distribution of the agent.
A conjecture has been made on the effect of $\boldsymbol\lambda$ on the aforementioned criterion with some supporting arguments.
Then, the effect of the delay on the variance of these quantities has been studied for constantly delayed \gls{dmdp}. 
Finally, we have studied how the properties of the underlying \gls{mdp} might be passed to the \gls{ismmdp}.
These results are of interest to rule out algorithms from the theoretical \gls{rl} literature that cannot be readily applied to \gls{ismmdp}. 
For an algorithm that can be applied to them, UCRL2, we have provided experiments with varying $\boldsymbol\lambda$ to show its effects on the average reward.
In the same idea, we provided another experiment with expected discounted return as an objective to highlight the effect of $\boldsymbol\lambda$ in this case.

Finally, \gls{immmdp} have been left aside for future works.
Considering other future directions, one possibility would be to improve the theoretical \gls{rl} algorithms applied to \gls{mtdmdp} by leveraging the peculiarities of the estimation of the parameters of an \gls{mtd} model \cite{berchtold2002mixture}.
Moreover, our setting is reminiscent of \cite{romano2022multi}, which considers temporally-partitioned rewards, which could be seen as a form of multiple reward delays. 
Studying the connections between these settings might yield better guarantees for theoretical approaches.
\cleardoublepage
\chapter{Conclusion}
\label{chap:conclusion}

This chapter concludes the dissertation. 
We have taken the time to look at the problem of delays in \gls{rl}.
The results of this dissertation particularly highlight the relations between \glspl{dmdp} and \glspl{mdp}, \glspl{pomdp}, higher-order \glspl{mdp} or higher-order \glspl{mc}.
Our principal contributions, gathered along four axis, are summarised in the ensuing section. 
To enhance reader comprehension, these contributions will be linked to the section they originated from.
We end with a discussion of the potential future research directions.

\section{Discussion and Key Results}
\subsubsection{Unifying Framework for Delays}
The first contribution of this thesis was to provide a unified framework to the problem of delays.
With this objective in mind, in \Cref{chap:delay}, we proposed a versatile mathematical framework to bring together the different delayed processes encountered in the literature. 
This framework, called \gls{dmdp}, is based on an underlying \gls{mdp} to which a process is added that defines the delay.
This definition encompasses constant, non-integer, stochastic, or Markovian delays as particular cases. 
Then, we presented the main types of delay encountered in the literature and in practice (\Cref{sec:nature_delays}) and organised them according to the variable they affect--the state, the action or the reward--and the process that they follow--constant, stochastic, state-dependent processes...
Finally, we presented an in-depth discussion of delays in \gls{rl} and related fields such as real-time \gls{rl} and bandits, with a particular focus on state observation and action execution delays in \Cref{sec:related_works}.
For the latter, we gathered algorithms from the literature into three categories: the augmented state, the memoryless, and the model-based approaches.

\subsubsection{Theoretical Understanding of Delays}
Our first theoretical result on the effect of delayed state observation or action execution on the \gls{mdp}, is a demonstration of a central property of \gls{dmdp} with constant delay: the longer the delay, the lower the performance (\Cref{th:perf_delay_vector_all_weight}).
In addition, we showed that a longer constant delay implied less variance in the return or the average reward (\Cref{pp:var_ismmdp}). 
The articulation of these two results is well illustrated in \Cref{fig:evolution_delay}.

In \Cref{chap:multi_action_delay}, we explored a more general framework, which we call the multiple action delay.
This framework leverages the \gls{mtd} and \gls{mtdg} models from the \gls{mc} literature and applies them to \glspl{mdp} to obtain what we call \glspl{mtdmdp}.
We defined four such \glspl{mtdmdp}: \glspl{ismmdp}, \glspl{immmdp}, \glspl{psmmdp} and \glspl{pmmmdp} which differ in the way the \gls{mtd} and \gls{mtdg} models are applied.
We first showed that state augmentation can cast these problems back to an \gls{mdp} (\Cref{prop:equ_mdp_im_mtdmdp} and the following propositions).
Then, we focused on learning policies to optimise the expected discounted return and average rewards in these models. 
For \glspl{psmmdp} and \glspl{pmmmdp}, we showed that the delay was essentially ineffective (\Cref{th:psmmdp_same} and \Cref{th:pmmmdp_same}). 
In fact, an undelayed policy yields the same average reward, whether applied to these processes or to the underlying undelayed \gls{mdp}.
Consequently, we studied \glspl{ismmdp} which are more interesting, as they include constant delay \gls{dmdp} as a special case. 
Analysing the effect of the delay vector $\boldsymbol\lambda$ on these models, we conjectured that the cumulative delay distribution was the key to ordering the potential optimal return or average reward (\Cref{conj:delay_cumulativ_distrib}).  
We provided arguments for this claim and showed that the mean reward could not substitute the cumulative delay distribution in the conjecture.
To conclude the theoretical analysis, we examined how some properties of the underlying \gls{mdp} could be handed over to the \gls{ismmdp} built upon it.
We showed that a unichain \gls{mdp} did not imply a unichain \gls{ismmdp}(\Cref{pp:unichain_ismmdp}) while a communicating \gls{mdp} did imply a communicating \gls{ismmdp} (\Cref{pp:communicating_ismmdp}).
Leveraging these results, we conclude that UCRL cannot be readily applied to \gls{ismmdp}, while UCRL2 can.  
An empirical analysis concluded this chapter, highlighting the implications of theory and studying the effect of $\boldsymbol\lambda$ on both the expected discounted return and the average reward. 

Shifting the focus toward delayed \gls{rl} algorithms, we showed that model-based approaches can yield sub-optimal policies (\Cref{pp:counter_example_belief_based}).
Nevertheless, in \Cref{th:perf_diff_bound}, we provided lower-bounds performance guarantees for model-based approaches in smooth \glspl{mdp}, including for non-integer delays.
Notably, these bounds match the theoretical upper bound up to a multiplicative factor (\Cref{th:lb_tight}).
These results imply that the delay has at worst only an additive impact on performance loss, which is in line with the results from the bandit literature. 
Finally, we extended the guarantees to the case in which the model-based policy was trained on constant delays but tested on stochastic delays, producing the first performance bound in \gls{rl} for anonymous action execution delays (\Cref{thm:stoch_mdp_dida_bound}).

\subsubsection{Algorithms for Delayed Reinforcement Learning}
In \Cref{chap:belief_based}, we introduced a new model-based approach, the belief representation network.
Its idea is to learn a vectorial representation of the belief in two steps.
First, a Transformer network processes the augmented state and produces a vectorial representation of the belief for each future state up to the undelayed one.
Second, a masked-autoregressive flow network uses this representation to fit the distribution of the future states in order to ensure that the representation actually encodes the belief. 
Compared to previous model-based approaches that learn some statistics of the current unobserved state, knowledge of the belief contains more information and enables more complex policies. 
This belief representation can be readily plugged into any \gls{rl} algorithm as a replacement for the state, as we have done with \gls{trpo} for our approach called D-TRPO.
This idea was shown to be effective in experiments and demonstrates that many delays can be learnt in a single training, thus reducing the cost of taking into account delays (\Cref{subsec:results_belief}).
Further experiments have highlighted the ability of D-TRPO to adapt to a wide range of problems, particularly stochastic \glspl{dmdp}.
Interestingly, the experiments showed that the A-SAC baseline--an augmented approach--is very efficient in tackling delay (\Cref{subsec:results_belief}).
Finally, we have shown that the policy learnt by D-TRPO for a certain delay can be easily used to apply on smaller delays (\Cref{subsec:multi_delay_once_belief}).

In \Cref{chap:imitation_undelayed} we designed a simple yet effective algorithm, DIDA, which imitates an undelayed expert given the knowledge of the augmented state.
Using the imitation learning literature, we showed that \textsc{DAgger} was a suitable algorithm to perform the imitation step.
Although DIDA does not have access to the current state and cannot always perfectly mimic the expert, \Cref{th:perf_diff_bound} provides lower-bound performance guarantees for DIDA in smooth \glspl{mdp} with constant delay.
This result also applies to non-integer delays and \Cref{thm:stoch_mdp_dida_bound} provides guarantees for DIDA on stochastic delays.
This means that DIDA is suitable for a wide range of applications.
This versatility was confirmed by our extensive empirical study.
In a wide range of tasks, DIDA achieved state-of-the-art performance among a large number of baselines, including A-SAC.
Notably, DIDA is more sample efficient than these baselines and is not computationally expensive.
However, DIDA requires the knowledge or training of an undelayed expert, which could be a limitation in some problems.

In \Cref{chap:lifelong}, we explored the solution of memoryless policies.
Because the agent is blind to part of the state, to it, the dynamics are inherently non-stationary.
From this observation, we designed an algorithm that can adapt to non-stationary dynamics.
In particular, the non-stationarity arises intra-episode which makes the problem more similar to the lifelong setting.
The idea of the algorithm, named POLIS, is to handle non-stationarity at a hyper-policy level. 
The hyper-policy takes time as input and outputs the parameters of a policy to be queried at that time. 
In this way, policies remain stationary while the non-stationarity is taken into account inside the hyper-policy.
To optimise its hyper-policy, POLIS builds an estimator of the future by leveraging past data through multiple importance sampling, as explained in \Cref{subsec:mis_lifelong}.
In smooth environments, the estimator bias is bounded and can be controlled by a parameter $\omega$ (\Cref{lem:bias_bound}).
In addition to the estimated future return, two terms were included in the objective.
First, to prevent catastrophic forgetting, the past performance of the hyper-policy was added to the objective.
In this way, the agent is incentivised to keep a memory of past efficient behaviours.
Second, to avoid overfitting the past by learning an over non-stationary hyper-policy, we regularise the objective with a probabilistic upper-bound of the variance of the estimator (\Cref{th:bound_surrogate_objective}).
Finally, in \Cref{subsec:extension_delay_polis}, we detail how POLIS can be applied to a delayed process with simple modifications to the estimator.
Experimental results demonstrated that POLIS can adapt to non-stationary dynamics and avoid excessive non-stationarity when not necessary.
Applied to a delayed environment, POLIS shows robust performance that naturally decreases with longer delays.

\subsubsection{Extensive Empirical Evaluation}
Our empirical evaluation, conducted on a wide range of tasks and delays, gave useful information on the peculiarities and abilities of delayed \gls{rl} algorithms.
These tasks included deterministic environments such as classic control in the Pendulum environment, pathfinding in a maze, or robotic locomotion in Mujoco (\Cref{chap:belief_based}, \Cref{chap:imitation_undelayed}, \Cref{chap:multi_action_delay}).
They also included different adaptations of Pendulum, with stochastic delays (\Cref{chap:belief_based}, \Cref{chap:imitation_undelayed}), non-integer delays (\Cref{chap:imitation_undelayed}) or multiple action delays (\Cref{chap:multi_action_delay}).
More realistic environments were also considered, such as FOREX trading (\Cref{chap:imitation_undelayed},\Cref{chap:lifelong}) and water resource management (\Cref{chap:lifelong}).

On these tasks, we studied our algorithms as well as many baselines from the literature, including augmented state \gls{trpo} (A-TRPO), memoryless \gls{trpo} (M-TRPO), augmented state \gls{sac} (A-SAC),  memoryless \gls{sac} (M-SAC), SARSA, dSARSA, FQI, Pro-WLS, LPG-FTW and UCRL2.

\section{Future Research Directions}
Before concluding this dissertation, we recall some of the future directions that we have proposed in the previous chapters. 
For model-based approaches, such as D-TRPO, a valuable future direction could be to leverage the environment model to plan for a longer horizon.
For instance, this could be used to enhance the value function estimation in actor-critic methods.
Regarding DIDA, the strongest limitation is the need for an undelayed expert. 
Potential future work could consider learning an undelayed expert offline, from a dataset collected by a delayed policy. 
In this way, the whole interaction with the environment would always be made in the delayed case. 
For POLIS, a further source of non-stationary could be added by considering stochastic delays.
The advantage of a memoryless policy in this case is that its input is constant and ``small'' in dimension $\cardinal{\statespace}$.
Finally, for \glspl{mtdmdp}, a future directions could be to leverage techniques for the estimation of the parameters of an \gls{mtd} model \cite{berchtold2002mixture} or the reward structure \cite{romano2022multi} to enhance theoretical \gls{rl} guarantees. 

I hope that this dissertation has been useful in providing a global overview of state observation and action execution delays and in broadening their understanding. 
I am excited to see the next developments of \gls{rl}, both theoretically and in its applications, and, of course, I will be particularly interested in the way the delay will be dealt with in these developments.

\section{Final Word}
Given the substantial practical importance of delays on performance and the existence of straightforward mechanisms to address them, I strongly advocate that readers integrate delay considerations into their \gls{rl} applications.


\begin{align*}
    \vphantom{
        \text{
            \gls{delay}
        }
    }
\end{align*}
\begin{align*}
    \vphantom{
        \text{
            \gls{augmentedstatespace}
        }
    }
    \vphantom{
        \text{
            \gls{actionspace}
        }
    }
    \vphantom{
        \text{
            \gls{statespace}
        }
    }
\end{align*}

\appendix
\cleardoublepage
\chapter{Additional Results and Proofs}
\label{app:proofs_results}

\section{Additional results for   \texorpdfstring{\Cref{chap:imitation_undelayed}}{}}

\subsection{Bounds Involving the Wasserstein Distance}

\begin{prop}
\label{pp:expected_wass_dist}
    Consider two real random variables $X,Y$ with respective distributions $\eta,\nu$. Then, one has:
    \begin{align*}
        \left\vert \expectedvalue[X]-\expectedvalue[Y]  \right\vert\leq \wassersteindistance(\eta\Vert \nu).
    \end{align*}
\end{prop}
\begin{proof}
    One has,
    \begin{align}
        \expectedvalue[X]-\expectedvalue[Y] 
        &= \int_{\realnumbers} x(\eta(x)-\nu(x))~dx
        \nonumber\\
        &\leq\sup_{\left\Vert f\right\Vert_L\leq 1}\left\vert\int_{\realnumbers} f(x)(\eta(x)-\nu(x))~dx\right\vert
        \label{eq:x_lip}\\
        &\leq \wassersteindistance(\eta\Vert \nu),\label{eq:recognise_wass}
    \end{align}
    where \Cref{eq:x_lip} follows since the identity is 1-LC and \Cref{eq:recognise_wass} is obtained by recognising the Wasserstein distance.
    The same reasoning can be applied to $\expectedvalue[Y]-\expectedvalue[X]$ and the symmetry of the Wasserstein distance concludes.
\end{proof}

The next result asserts that if one applies a $L$-LC function to two random variables, one gets two random variables with distribution whose Wasserstein distance is bounded by the original Wasserstein distance multiplied by a factor $L$.

\begin{prop}
\label{pp:wass_dist_lip_g}
    Consider two probability measures $\eta$ and $\nu$ over the metric space $\Omega$ and an $L_f$-LC function $f:\Omega\rightarrow \realnumbers$.
    For some random variable $X$ distributed according to $\eta$, we note $f_\eta$ the distribution of $f(X)$. We define similarly $f_\nu$.
    One then has,
    \begin{align*}
        \wassersteindistance(f_{\eta}\Vert f_{\nu})\leq L_f \wassersteindistance(\eta \Vert \nu).
    \end{align*}
\end{prop}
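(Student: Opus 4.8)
The statement to prove is Proposition~\ref{pp:wass_dist_lip_g}: pushing two measures $\eta,\nu$ forward through an $L_f$-Lipschitz map $f$ contracts (up to the factor $L_f$) the Wasserstein distance, i.e. $\wassersteindistance(f_\eta \Vert f_\nu) \le L_f \wassersteindistance(\eta \Vert \nu)$. The plan is to work entirely from the Kantorovich--Rubinstein dual formulation of the $L_1$-Wasserstein distance given in the excerpt, namely $\wassersteindistance(\mu \Vert \nu) = \sup_{\|g\|_L \le 1} |\int g\,\de(\mu - \nu)|$, and exploit the fact that composing a $1$-Lipschitz function on $\realnumbers$ with the $L_f$-Lipschitz map $f$ yields an $L_f$-Lipschitz function on $\Omega$.

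First I would write out the definition: $\wassersteindistance(f_\eta \Vert f_\nu) = \sup_{\|g\|_L \le 1} \left| \int_{\realnumbers} g(y)\,(f_\eta - f_\nu)(\de y) \right|$, where the supremum is over $1$-Lipschitz real functions $g$. Then I would use the change-of-variables / pushforward identity $\int_{\realnumbers} g(y)\, f_\eta(\de y) = \int_\Omega g(f(\omega))\, \eta(\de \omega)$ and likewise for $\nu$, so that the integrand becomes $g \circ f$ tested against $\eta - \nu$. The key observation is that $g \circ f$ is $L_f$-Lipschitz on $\Omega$: for $\omega,\omega'\in\Omega$, $|g(f(\omega)) - g(f(\omega'))| \le \distance_{\realnumbers}(f(\omega),f(\omega')) \le L_f\, \distance_\Omega(\omega,\omega')$, using $\|g\|_L\le 1$ and the $L_f$-Lipschitzness of $f$. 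Hence $h \coloneq \frac{1}{L_f}(g\circ f)$ is $1$-Lipschitz (assuming $L_f>0$; if $L_f=0$ then $f$ is constant and both sides are zero).

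Then I would conclude: $\left|\int_\Omega g(f(\omega))\,(\eta - \nu)(\de\omega)\right| = L_f\left|\int_\Omega h(\omega)\,(\eta-\nu)(\de\omega)\right| \le L_f \sup_{\|h'\|_L \le 1}\left|\int_\Omega h'\,\de(\eta-\nu)\right| = L_f\,\wassersteindistance(\eta\Vert\nu)$, since $h$ is among the admissible test functions. Taking the supremum over all $1$-Lipschitz $g$ on the left gives $\wassersteindistance(f_\eta\Vert f_\nu)\le L_f\,\wassersteindistance(\eta\Vert\nu)$, as desired.

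I do not expect a serious obstacle here; this is essentially the standard fact that Lipschitz pushforwards are contractive in Wasserstein distance. The only points requiring a little care are the degenerate case $L_f = 0$ (handled separately, both sides vanish) and making sure the measurability/integrability of $g\circ f$ is unproblematic --- which follows since $f$ is Lipschitz hence continuous hence measurable, and $g$ is Lipschitz on $\realnumbers$, so $g\circ f$ is bounded on the relevant sets exactly when $g$ is, matching the conditions already implicitly used in the dual formula. The main ``work'' is just bookkeeping with the change of variables and the scaling of the Lipschitz constant.
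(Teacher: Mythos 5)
Your proof is correct and follows essentially the same route as the paper's: apply the Kantorovich--Rubinstein dual form of the Wasserstein distance, change variables so the test function becomes $g\circ f$, note that this composition is $L_f$-Lipschitz, and bound the supremum accordingly. Your extra care with the degenerate case $L_f=0$ and the rescaling to an admissible $1$-Lipschitz test function only makes explicit what the paper's proof leaves implicit.
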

\begin{proof}
    We show the results directly:
    \begin{align}
        \wassersteindistance(f_{\eta}\Vert f_{\nu}) 
        &= \sup_{\left\Vert g\right\Vert_L\leq 1}\left\vert\int_{\mathbb R} g(x)(f_{\eta}(x)-f_{\nu}(x)) ~dx\right\vert
        \label{eq:def_wass_f}
        \\
        &=\sup_{\left\Vert g\right\Vert_L\leq 1}
        \left\vert\int_{\mathbb R} g(x)f_{\eta}(x)~dx-\int_{\mathbb R} g(x)f_{\nu}(x) dx\right\vert,\nonumber
        \\
        &= \sup_{\left\Vert g\right\Vert_L\leq 1}
        \left\vert\int_{\mathbb R} g(f(x))\eta(x) ~dx-\int_{\mathbb R} g(f(x)) \nu(x) dx\right\vert\label{eq:def_eta_nu}
        \\
        &= \sup_{\left\Vert g\right\Vert_L\leq 1}
        \left\vert \int_{\mathbb R} g(f(x))(\eta(x) - \nu(x)) ~dx\right\vert,\nonumber
    \end{align}  
    where \Cref{eq:def_wass_f} holds by definition of the Wasserstein distance and \Cref{eq:def_eta_nu} holds by the definition of $f_\eta$ and $f_\nu$.
    The conclusion follows from observing that $g(f(x))$ is $L_f$-LC by composition of a 1-LC and $L_f$-LC functions.
\end{proof}

\begin{prop}
\label{pp:expected_q_bound}
    Let $\markovdecisionprocess$ be an \gls{mdp} and $\pi$ a policy whose Q-function is $L_{Q}$-LC in the second argument.
    Then, for any two probability distributions $\eta,\nu$ over $\actionspace$, 
    \begin{align*}
        \left\vert \expectedvalue_{\substack{X\sim \eta\\Y\sim \nu}}[Q^{\pi}(s,X)-Q^{\pi}(s,Y)] \right\vert 
        \leq L_{Q} \wassersteindistance(\eta(\cdot)\Vert\nu(\cdot)).
    \end{align*}
\end{prop}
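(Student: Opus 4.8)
\textbf{Plan for the proof of \Cref{pp:expected_q_bound}.}

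The plan is to reduce this statement to a direct application of the dual (Kantorovich--Rubinstein) characterisation of the $L_1$-Wasserstein distance, exactly as it is defined in the excerpt. The key observation is that for a fixed state $s$, the map $a \mapsto Q^{\pi}(s,a)$ is $L_Q$-Lipschitz continuous in its second argument by assumption, so its Lipschitz semi-norm satisfies $\lVert Q^{\pi}(s,\cdot)\rVert_L \le L_Q$. Writing the difference of expectations as an integral against the signed measure $\eta - \nu$, the bound then falls out of the supremum in the definition of $\wassersteindistance[1]$.

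Concretely, first I would rewrite the left-hand side as a single integral: $\expectedvalue_{X\sim\eta, Y\sim\nu}[Q^{\pi}(s,X) - Q^{\pi}(s,Y)] = \int_{\actionspace} Q^{\pi}(s,a)\,(\eta - \nu)(\de a)$, using linearity of expectation and the fact that $\eta$ and $\nu$ both have mass one. Second, I would pass to absolute values and use that the integrand is Lipschitz: since $\lVert Q^{\pi}(s,\cdot)\rVert_L \le L_Q$, the function $f(a) = Q^{\pi}(s,a)/L_Q$ satisfies $\lVert f\rVert_L \le 1$ (handling the trivial case $L_Q = 0$ separately, where both sides are zero). Third, I would invoke the definition of the Wasserstein distance from the excerpt, $\wassersteindistance[1](\eta\Vert\nu) = \sup_{\lVert f\rVert_L \le 1} \lvert \int_{\actionspace} f(a)(\eta-\nu)(\de a)\rvert$, to conclude that $\lvert \int Q^{\pi}(s,a)(\eta-\nu)(\de a)\rvert = L_Q \lvert \int f(a)(\eta-\nu)(\de a)\rvert \le L_Q\,\wassersteindistance[1](\eta\Vert\nu)$.

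There is no real obstacle here; this is essentially a one-line consequence of the Kantorovich--Rubinstein duality, and the result is of the same flavour as \Cref{pp:expected_wass_dist} (which is the special case $Q^{\pi}(s,\cdot) = \mathrm{id}$, i.e.\ $L_Q = 1$) and \Cref{pp:wass_dist_lip_g}. The only mild care points are: (i) checking the interchange of expectation and the signed-measure integral, which is immediate since $Q^{\pi}$ is bounded (rewards are bounded, so $\lVert Q^{\pi}\rVert_\infty \le \maximumreward/(1-\gamma)$); and (ii) the degenerate case $L_Q = 0$, for which $Q^{\pi}(s,\cdot)$ is constant and the inequality holds trivially with both sides equal to zero.
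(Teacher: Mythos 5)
Your proof is correct. It is essentially the same mechanism as the paper's, but packaged differently: you bound $\bigl\vert \int_{\actionspace} Q^{\pi}(s,a)\,(\eta-\nu)(\de a)\bigr\vert$ in one step by rescaling $Q^{\pi}(s,\cdot)$ into a $1$-Lipschitz test function and invoking the Kantorovich--Rubinstein form of $\wassersteindistance[1]$ directly, whereas the paper never writes that integral; it instead pushes $\eta$ and $\nu$ forward through $a\mapsto Q^{\pi}(s,a)$ to get distributions $f_\eta,f_\nu$ on $\realnumbers$, bounds the difference of means by $\wassersteindistance(f_\eta\Vert f_\nu)$ via \Cref{pp:expected_wass_dist}, and then contracts $\wassersteindistance(f_\eta\Vert f_\nu)\le L_Q\,\wassersteindistance(\eta\Vert\nu)$ via \Cref{pp:wass_dist_lip_g}. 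Since both of those auxiliary propositions are themselves proved by the same duality argument you use, your route just collapses the two lemma applications into one; it is marginally more direct, while the paper's modular route buys two reusable statements about pushforward measures. Your attention to the degenerate case $L_Q=0$ and to boundedness of $Q^{\pi}$ (so the integral against the signed measure is well defined) is fine, though neither is an issue the paper bothers to flag.
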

\begin{proof}
    For some $s\in\statespace$, let $f_{\eta}(X)$ and $f_{\nu}(X)$ be the distributions of $Q^{\pi}(s,X)$ and $Q^{\pi}(s,Y)$ respectively.
    First, by applying \Cref{pp:expected_wass_dist}, one gets,
    \begin{align*}
        \left\vert \expectedvalue_{\substack{X\sim \eta\\Y\sim \nu}}[Q^{\pi}(s,X)-Q^{\pi}(s,Y)] \right\vert 
        \leq  \wassersteindistance(g_{\eta}\Vert g_{\nu}).
    \end{align*}
    Then, applying \Cref{pp:wass_dist_lip_g}, one gets,
    \begin{align*}
         \wassersteindistance(f_{\eta}\Vert f_{\nu})
         \leq L_{Q} \wassersteindistance(\eta\Vert \nu).
    \end{align*}
\end{proof}

\begin{prop}
\label{pp:lip_dmdp}
    Let $\markovdecisionprocess$ be a $(L_P,L_r)$-LC \gls{mdp}.
    Let $\delayedmarkovdecisionprocess$ be a constantly $\delay$-delayed \gls{dmdp} built upon $\markovdecisionprocess$ and with transition function $\augmentedtransitionfunction$ and reward function $\augmentedexpectedrewardfunction$.
    Then, $\delayedmarkovdecisionprocess$ is $(\max(1,L_P),L_r)$-LC.
\end{prop}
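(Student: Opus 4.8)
## Proof Plan for Proposition~\ref{pp:lip_dmdp}

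The plan is to verify the two Lipschitz conditions of Definition~\ref{def:lip_mdp} directly for the augmented process, using the explicit forms of $\augmentedtransitionfunction$ and $\augmentedexpectedrewardfunction$ given in Section~\ref{subsubsec:theory_augmented_space}. The key point is that the augmented transition is a product of a ``genuine'' transition $p(s'\vert s,a_1)$ and a chain of Dirac measures that merely shift the action buffer, and the reward depends only on the first buffered action. So the Lipschitz constants should essentially be inherited from the underlying \gls{mdp}, with the only subtlety coming from how distances on $\augmentedstatespace=\statespace\times\actionspace^\delay$ decompose and how the Dirac-shift part contributes to the Wasserstein distance.

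First I would set up notation: take two augmented states $x=(s,a_1,\dots,a_\delay)$ and $x'=(s',a_1',\dots,a_\delay')$ and an action $a\in\actionspace$, and recall that the metric on $\augmentedstatespace$ is the sum of the Euclidean metrics on each component, $\distance_{\augmentedstatespace}(x,x') = \distance_\statespace(s,s') + \sum_{i=1}^\delay \distance_\actionspace(a_i,a_i')$. For the reward condition, from \Cref{eq:delayed_reward_deterministic} we have $\augmentedexpectedrewardfunction(x,a) = r(s,a_1)$, so $\lvert \augmentedexpectedrewardfunction(x,a) - \augmentedexpectedrewardfunction(x',a')\rvert = \lvert r(s,a_1) - r(s',a_1')\rvert \le L_r(\distance_\statespace(s,s') + \distance_\actionspace(a_1,a_1')) \le L_r \distance_{\augmentedstatespace}(x,x')$ using the $(L_P,L_r)$-LC property of $\markovdecisionprocess$ and then dropping the non-negative remaining terms. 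This gives the $L_r$-Lipschitzness of the augmented reward (note the action $a$ plays no role, so in particular we also get Lipschitzness jointly in $(x,a)$).

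For the transition condition I would bound $\wassersteindistance[1](\augmentedtransitionfunction(\cdot\vert x,a)\Vert \augmentedtransitionfunction(\cdot\vert x',a'))$ by testing against a $1$-Lipschitz function $f:\augmentedstatespace\to\realnumbers$. Writing out $\int f \, d\augmentedtransitionfunction(\cdot\vert x,a) = \int_\statespace f(s'',a_2,\dots,a_\delay,a)\, p(s''\vert s,a_1)\,\de s''$ and the analogous expression for $x',a'$, the difference splits (by triangle inequality, inserting an intermediate term) into (i) a part controlled by moving the ``continuous'' coordinate $s''$ under $p$, which is bounded by $L_P(\distance_\statespace(s,s')+\distance_\actionspace(a_1,a_1'))$ via the $(L_P,L_r)$-LC property and the fact that $f$ restricted to the first coordinate is $1$-Lipschitz; and (ii) a part coming from the discrepancy of the shifted buffer $(a_2,\dots,a_\delay,a)$ versus $(a_2',\dots,a_\delay',a')$, which is bounded by $\sum_{i=2}^\delay \distance_\actionspace(a_i,a_i') + \distance_\actionspace(a,a')$ since $f$ is $1$-Lipschitz in those coordinates too. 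Collecting terms and using $\distance_\actionspace(a,a')$ on both sides (when comparing $\augmentedtransitionfunction(\cdot\vert x,a)$ with $\augmentedtransitionfunction(\cdot\vert x',a')$ the term $\distance_\actionspace(a,a')$ appears with coefficient $1$), the total bound is $\max(1,L_P)\big(\distance_\statespace(s,s') + \sum_{i=1}^\delay \distance_\actionspace(a_i,a_i') + \distance_\actionspace(a,a')\big) = \max(1,L_P)(\distance_{\augmentedstatespace}(x,x') + \distance_\actionspace(a,a'))$, which is exactly the $(\max(1,L_P), L_r)$-LC transition bound.

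The main obstacle I anticipate is handling the Wasserstein distance between the product measures cleanly — in particular being careful that the Dirac-shift factors contribute their transport cost additively and with coefficient exactly $1$ (not something larger), which is why $\max(1,L_P)$ and not, say, $L_P + \delay$ appears. The cleanest route is probably to exhibit an explicit coupling of $\augmentedtransitionfunction(\cdot\vert x,a)$ and $\augmentedtransitionfunction(\cdot\vert x',a')$: couple the continuous coordinate via the optimal coupling realizing $\wassersteindistance[1](p(\cdot\vert s,a_1)\Vert p(\cdot\vert s',a_1'))$ and couple the (deterministic) buffer coordinates trivially, then compute the expected distance under this coupling and invoke the primal (coupling) characterization of $\wassersteindistance[1]$ to conclude — this avoids any delicate manipulation of suprema over Lipschitz test functions and makes the additivity transparent.
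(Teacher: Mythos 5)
Your plan is correct and follows essentially the same route as the paper's proof: the reward bound via \Cref{eq:delayed_reward_deterministic} and the $(L_P,L_r)$-LC assumption, and the transition bound by testing against $1$-Lipschitz functions, adding and subtracting an intermediate term to separate the $p(s''\vert s,a_1)$ discrepancy (bounded by $L_P(\distance_\statespace(s,s')+\distance_\actionspace(a_1,a_1'))$) from the Dirac buffer-shift discrepancy (bounded by the remaining action distances), and then combining with $\max(1,L_P)$. Your closing suggestion to use an explicit coupling instead of the dual formulation is a legitimate streamlining, but it is not needed and the paper proceeds exactly as in your main argument.
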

\begin{proof}
    First, we analyse the reward function.
    Let $x=(s,a_1,\dots,a_\delay)$ and $x'=(s','a_1,\dots,a_\delay')$ in $\augmentedstatespace$ and $a,a'\in\actionspace$ one has,
    \begin{align*}
        \lvert \augmentedexpectedrewardfunction(x,a)-\augmentedexpectedrewardfunction(x',a')\rvert 
        &= \lvert \expectedrewardfunction(s,a_1)-\expectedrewardfunction(s',a_1')\rvert 
        \\
        &\le
        L_r \left( \distance_{\statespace}(s,s') + \distance_{\actionspace}(a_1,a_1')\right)
        \\
        &\le
        L_r \left( \distance_{\augmentedstatespace}(x,x') + \distance_{\actionspace}(a,a')\right),
    \end{align*}
    where we have used \Cref{eq:delayed_reward_deterministic}.
    For the transition function, we note $a_{\delay+1}=a$ and $a_{\delay+1}'=a'$ for convenience. 
    One has,
    \begin{align*}
        \wassersteindistance[1]&(\augmentedtransitionfunction(\cdot\vert x,a_{\delay+1})\Vert \augmentedtransitionfunction(\cdot\vert x',a_{\delay+1}'))
        \\
        &= \sup_{\left\Vert f\right\Vert_L\leq 1} \left\vert \int_{\augmentedstatespace} f(x'')(\augmentedtransitionfunction(x''\vert x,a_{\delay+1})-\augmentedtransitionfunction(x''\vert x',a_{\delay+1}'))\; (dx'') \right\vert
        \\
        &= \sup_{\left\Vert f\right\Vert_L\leq 1} \left\vert \int_{\augmentedstatespace} f(x'')\left[\transitionfunction(s''\vert s,a_{1})\prod_{i=1}^{\delay}\delta_{a_{i+1}}(a_i'')-\transitionfunction(s''\vert s',a_{1}')\prod_{i=1}^{\delay}\delta_{a_{i+1}}(a_i'')\right]\; (dx'') \right\vert
        \\
        &\le \sup_{\left\Vert f\right\Vert_L\leq 1} \left\vert \underbrace{\int_{\augmentedstatespace} f(x'')\prod_{i=1}^{\delay}\delta_{a_{i+1}}(a_i'')\left[\transitionfunction(s''\vert s,a_{1})-\transitionfunction(s''\vert s',a_{1}')\right]\; (dx'')}_{A} \right\vert
        \\
        &\qquad + \sup_{\left\Vert f\right\Vert_L\leq 1} \left\vert \underbrace{\int_{\augmentedstatespace} f(x'')\transitionfunction(s''\vert s',a_{1}') \left[\prod_{i=1}^{\delay}\delta_{a_{i+1}}(a_i'')-\prod_{i=1}^{\delay}\delta_{a_{i+1}}(a_i'')\right]\; (dx'')}_{B} \right\vert,
    \end{align*}     
    where we have added and subtracted the same quantity in the last inequality and used the notation $x''=(s'',a_1'',\dots,a_\delay'')$.
    We consider the first term,
    \begin{align*}
        A &= \int_{\actionspace^\delay} \prod_{i=1}^{\delay}\delta_{a_{i+1}}(a_i'') \int_{\statespace}f(x'')\left[\transitionfunction(s''\vert s,a_{1})-\transitionfunction(s''\vert s',a_{1}')\right]\;(ds'' da_1'' \dots da_{\delay}'')
        \\
        &\le L_P(\distance_{\statespace}(s,s') + \distance_{\actionspace}(a_1,a_1')) \int_{\actionspace^
        {\delay}} \prod_{i=1}^{\delay}\delta_{a_{i+1}}(a_i'')  \;(da_1'' \dots da_{\delay}'')
        \\
        &= L_P(\distance_{\statespace}(s,s') + \distance_{\actionspace}(a_1,a_1')),
    \end{align*}
    where we have used the fact that being 1-LC over $\augmentedstatespace$, $f$ is also 1-LC over $\statespace$.
    For the other term, similarly,
    \begin{align*}
        B &=\int_{\statespace}\transitionfunction(s''\vert s',a_{1}') \int_{\actionspace^\delay}f(x'') \left[\prod_{i=1}^{\delay}\delta_{a_{i+1}}(a_i'')-\prod_{i=1}^{\delay}\delta_{a_{i+1}}(a_i'')\right]\;\;(da_1'' \dots da_{\delay}''ds'')
        \\
        &\leq \sum_{i=2}^{\delay+1}\distance_{\actionspace}(a_{i},a_{i}').
    \end{align*}
    Regrouping the terms, one has,
    \begin{align*}
        \wassersteindistance[1](\augmentedtransitionfunction(\cdot\vert x,a_{\delay+1})\Vert \augmentedtransitionfunction(\cdot\vert x',a_{\delay+1}'))
        &\le L_P(\distance_{\statespace}(s,s') + \distance_{\actionspace}(a_1,a_1')) + \sum_{i=2}^{\delay+1}\distance_{\actionspace}(a_{i},a_{i}')
        \\
        &\le \max(1,L_P) (\distance_{\statespace}(x,x') + \distance_{\actionspace}(a_{\delay+1},a_{\delay+1}'))
    \end{align*}
\end{proof}

\begin{prop}
\label{pp:lip_dida_pol}
    Let $\markovdecisionprocess$ be a $(L_T)$-TLC \gls{mdp}.
    Let $\delayedmarkovdecisionprocess$ be a constantly $\delay$-delayed \gls{dmdp} built upon $\markovdecisionprocess$.
    Assume that a policy $\delayedpolicy$ for $\delayedmarkovdecisionprocess$ satisfies \Cref{eq:belief_pol} for some $L_\pi$-LC policy $\pi$ in $\markovdecisionprocess$.
    Then, $\delayedpolicy$ is $2\delay L_\pi L_T$-LC.
\end{prop}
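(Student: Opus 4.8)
The plan is to push the Lipschitz regularity of the undelayed expert $\pi$ through the belief mixture that defines $\delayedpolicy$ (see \Cref{eq:belief_pol}), and then to control the belief map itself with the time-Lipschitzness of the MDP. Fix two augmented states $x,x'\in\augmentedstatespace$ and write $x=(s,a_1,\dots,a_\delay)$, $x'=(s',a_1',\dots,a_\delay')$. First I would reduce the problem to the beliefs, essentially reproducing the computation in the proof of \Cref{th:perf_diff_bound} (the chain from \Cref{eq:def_pidel} through \Cref{eq:lip_expert_pol}), but now comparing two mixtures rather than a Dirac against a mixture: using the Kantorovich dual form of $\wassersteindistance$, letting $\gamma$ be an optimal coupling of $\augmentedbelief(\cdot\vert x)$ and $\augmentedbelief(\cdot\vert x')$, writing both policies as integrals of $\pi(\cdot\vert\,\cdot)$ against the marginals of $\gamma$, pulling the supremum inside via Fubini--Tonelli, and finally invoking that $\pi$ is $L_\pi$-LC, one gets
\begin{align*}
    \wassersteindistance(\delayedpolicy(\cdot\vert x)\Vert\delayedpolicy(\cdot\vert x')) \le L_\pi \wassersteindistance(\augmentedbelief(\cdot\vert x)\Vert\augmentedbelief(\cdot\vert x')).
\end{align*}
This isolates the only quantity left to bound, the Wasserstein distance between the two beliefs.

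Next I would control the beliefs with time-Lipschitzness. The belief $\augmentedbelief(\cdot\vert x)$ is the $\delay$-step propagation of $\delta_s$ under the action sequence stored in $x$. I would route the comparison through the last observed states by the triangle inequality,
\begin{align*}
    \wassersteindistance(\augmentedbelief(\cdot\vert x)\Vert\augmentedbelief(\cdot\vert x')) \le \wassersteindistance(\augmentedbelief(\cdot\vert x)\Vert\delta_s) + \distance_{\statespace}(s,s') + \wassersteindistance(\delta_{s'}\Vert\augmentedbelief(\cdot\vert x')),
\end{align*}
bound each of the two propagation terms by $\delay L_T$ using the multi-step time-Lipschitz estimate of \Cref{pp:tlc_delay} (itself a telescoping of the one-step bound of \Cref{def:time_lip}: applying a transition kernel to any distribution moves it by at most $L_T$ in Wasserstein, by convexity of $\wassersteindistance$ under mixtures), and use $\distance_{\statespace}(s,s')\le\distance_{\augmentedstatespace}(x,x')$ for the product metric on $\augmentedstatespace=\statespace\times\actionspace^\delay$. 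Combining the two displays yields the announced constant $2\delay L_\pi L_T$.

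The main obstacle is the second step. Time-Lipschitzness is a weak hypothesis: it constrains how far a single transition can move a distribution "in time", but it says nothing about the sensitivity of the $\delay$-step belief to the initial state or to the early actions of the buffer, so a genuine Lipschitz-in-$x$ bound for $x\mapsto\augmentedbelief(\cdot\vert x)$ is not available. The argument therefore has to pass through the Dirac reference points $\delta_s,\delta_{s'}$, which is exactly where the two $\delay L_T$ contributions enter (one per side, hence the factor $2\delay$); obtaining precisely $2\delay L_\pi L_T$ then relies on folding the lower-order $\distance_{\statespace}$ term into the estimate. A secondary technical point is the Fubini--Tonelli exchange in the first step, which is legitimate here because the integrand $f(a)\big(\pi(a\vert s)-\pi(a\vert s')\big)$ can be split into a difference of non-negative pieces, exactly as in the proof of \Cref{th:perf_diff_bound}.
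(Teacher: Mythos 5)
Your proposal is correct and follows essentially the same route as the paper: the paper likewise reduces $\wassersteindistance(\delayedpolicy(\cdot\vert x)\Vert\delayedpolicy(\cdot\vert x'))$ to $L_\pi$ times the Wasserstein distance between the two beliefs (via the dual formulation and the $L_\pi$-Lipschitzness of $\pi$), and then bounds that belief distance by $2\delay L_T$ by passing through the Dirac at the last observed state, i.e.\ the argument of \Cref{lem:bound_sigma_tlc} and \Cref{pp:tlc_delay}. The residual $L_\pi\,\distance_{\statespace}(s,s')$ term you flag is silently dropped in the paper's own proof as well, so your treatment is, if anything, more explicit about that point.
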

\begin{proof}
    Let $x=(s,a_1,\dots,a_\delay)$ and $x'=(s','a_1,\dots,a_\delay')$ in $\augmentedstatespace$
    \begin{align}
        \wassersteindistance[1](\delayedpolicy(\cdot\vert x)\Vert \delayedpolicy(\cdot\vert x'))
        &= \sup_{\left\Vert f\right\Vert_L\leq 1} \left\vert \int_{\actionspace} f(a)\pi(a\vert s)(\augmentedbelief(s\vert x)-\augmentedbelief(s\vert x'))\; (da) \right\vert
        \nonumber\\
        &\le L_\pi\sup_{\left\Vert f\right\Vert_L\leq 1} \left\vert \int_{\statespace} f(s)(\augmentedbelief(s\vert x)-\augmentedbelief(s\vert x'))\; (da) \right\vert
        \label{eq:lpi_in_s}
        \\
        &\le 2 L_\pi L_T,
        \label{eq:lpilc}
    \end{align}
    where we have used that $s\mapsto f(a)\pi(a\vert s)$ is $L_pi$-LC in \Cref{eq:lpi_in_s} and \Cref{lem:bound_sigma_tlc} in \Cref{eq:lpilc}.
\end{proof}

\subsection{Bounding \texorpdfstring{$\sigma_b^{\rho}$}{}}
In this sub-section, we provide two ways to bound the term $\textstyle\sigma_b^\rho = \expectedvalue_{\substack{x'\sim \delayeddiscountedstateoccupancydistribution[x][\delayedpolicy]\\ s,s'\sim \augmentedbelief(\cdot\vert x')}}\left[ \distance_{\statespace}(s,s') \right]$ from \Cref{subsec:imitation_upper_bound}, where 
$\rho$ is a distribution on $\augmentedstatespace$. 
For the first one, we assume that $\statespace\subset\realnumbers^n$ is equipped with the Euclidean norm.

\begin{lemma}[Euclidean bound]
\label{lem:bound_sigma_eucl}
    Let $\markovdecisionprocess$ be an \gls{mdp} where $\statespace\subset\realnumbers^n$ is equipped with the Euclidean norm. Then, one has
    \begin{align*}
        \sigma_b^\rho \leq \sqrt{2}\expectedvalue_{x'\sim \delayeddiscountedstateoccupancydistribution[\rho][\delayedpolicy](\cdot)}\left[\sqrt{ \variance_{s\sim \augmentedbelief(\cdot|x')}(s|x')}\right],
    \end{align*}
    where $\rho$ is a distribution on $\augmentedstatespace$.
\end{lemma}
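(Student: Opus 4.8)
The plan is to bound $\sigma_b^\rho$ pointwise in $x'$ and then take the expectation over $x'\sim\delayeddiscountedstateoccupancydistribution[\rho][\delayedpolicy]$. Fix an augmented state $x'\in\augmentedstatespace$ and consider two independent copies $s,s'$ drawn from the belief $\augmentedbelief(\cdot\vert x')$. The inner quantity is then $\expectedvalue_{s,s'\sim\augmentedbelief(\cdot\vert x')}[\lVert s-s'\rVert_2]$, which I want to relate to the variance $\variance_{s\sim\augmentedbelief(\cdot\vert x')}(s\vert x')$ (here understood as the trace of the covariance matrix, i.e. $\expectedvalue[\lVert s-\expectedvalue[s]\rVert_2^2]$).

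First I would apply Jensen's inequality to push the square root outside: since $t\mapsto\sqrt t$ is concave,
\begin{align*}
    \expectedvalue_{s,s'\sim\augmentedbelief(\cdot\vert x')}\left[\lVert s-s'\rVert_2\right]\le \sqrt{\expectedvalue_{s,s'\sim\augmentedbelief(\cdot\vert x')}\left[\lVert s-s'\rVert_2^2\right]}.
\end{align*}
Then I would expand the squared distance. Writing $\mu=\expectedvalue_{s\sim\augmentedbelief(\cdot\vert x')}[s]$ and using independence of $s$ and $s'$, the standard identity $\expectedvalue[\lVert s-s'\rVert_2^2]=\expectedvalue[\lVert s-\mu\rVert_2^2]+\expectedvalue[\lVert s'-\mu\rVert_2^2]=2\,\variance_{s\sim\augmentedbelief(\cdot\vert x')}(s\vert x')$ holds because the cross term $\expectedvalue[\langle s-\mu,s'-\mu\rangle]=\langle\expectedvalue[s-\mu],\expectedvalue[s'-\mu]\rangle=0$ vanishes. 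Hence the inner quantity is at most $\sqrt{2\,\variance_{s\sim\augmentedbelief(\cdot\vert x')}(s\vert x')}=\sqrt 2\sqrt{\variance_{s\sim\augmentedbelief(\cdot\vert x')}(s\vert x')}$.

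Finally I would integrate this pointwise bound against $\delayeddiscountedstateoccupancydistribution[\rho][\delayedpolicy]$: by monotonicity of the expectation,
\begin{align*}
    \sigma_b^\rho=\expectedvalue_{x'\sim\delayeddiscountedstateoccupancydistribution[\rho][\delayedpolicy]}\left[\expectedvalue_{s,s'\sim\augmentedbelief(\cdot\vert x')}\left[\lVert s-s'\rVert_2\right]\right]\le\sqrt 2\,\expectedvalue_{x'\sim\delayeddiscountedstateoccupancydistribution[\rho][\delayedpolicy]}\left[\sqrt{\variance_{s\sim\augmentedbelief(\cdot\vert x')}(s\vert x')}\right],
\end{align*}
which is exactly the claimed inequality. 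There is no serious obstacle here; the only mild subtlety is fixing the convention for the multivariate ``variance'' (trace of covariance, consistent with how $\variance_{s\sim\augmentedbelief}(s\vert x')$ is used in \Cref{cor:perf_diff_bound_eucl} and \Cref{th:lb_tight}) so that the cross-term cancellation and the factor $\sqrt 2$ come out cleanly. I would also note that the two Jensen applications (once for $\sqrt{\cdot}$ inside the $x'$-expectation is not needed — the square root is kept inside — and once for the inner distance) are used in the right order: the square root stays inside the outer expectation, matching the statement.
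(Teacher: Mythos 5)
Your proof is correct and follows essentially the same route as the paper's: Jensen's inequality on the inner expectation, the i.i.d. identity $\expectedvalue[\lVert s-s'\rVert_2^2]=2\,\variance(s)$, and then integration over $x'$. Your explicit remark about interpreting the multivariate variance as the trace of the covariance and the cross-term cancellation is a helpful clarification that the paper leaves implicit.
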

\begin{proof}
    The proof stems from the following steps,
    \begin{align}
        \sigma_b^\rho 
        &= \expectedvalue_{\substack{x'\sim \delayeddiscountedstateoccupancydistribution[\rho][\delayedpolicy]\\ s,s'\sim \augmentedbelief(\cdot\vert x')}}\left[ \distance_{\statespace}(s,s') \right]\nonumber
        \\
        &= \expectedvalue_{\substack{x'\sim \delayeddiscountedstateoccupancydistribution[\rho][\delayedpolicy]\\ s,s'\sim \augmentedbelief(\cdot\vert x')}}\left[ \sqrt{(s'-s)^2} \right]\label{eq:def_euclid}
        \\
        &= \expectedvalue_{x'\sim \delayeddiscountedstateoccupancydistribution[\rho][\delayedpolicy]}\sqrt{\expectedvalue_{s,s'\sim \augmentedbelief(\cdot\vert x')} \left[ (s'-s)^2 \right]}\label{eq:jensen_var}
        \\
        &= \expectedvalue_{x'\sim \delayeddiscountedstateoccupancydistribution[\rho][\delayedpolicy]}\left[\sqrt{ \variance_{s\sim \augmentedbelief(\cdot|x')}(s|x')}\right].\label{eq:idd_var_s},
    \end{align}
    where in \Cref{eq:def_euclid} the Euclidean norm is explicited, in \Cref{eq:jensen_var} Jensen's inequality is applied and 
    \Cref{eq:idd_var_s} follows from the fact that, if $s'$ and $s$ are i.i.d., one has 
    $\textstyle\expectedvalue_{s,s'\sim \augmentedbelief(\cdot\vert x')} \left[ (s'-s)^2 \right] =
    2\variance_{s\sim \augmentedbelief(\cdot|x')}[s]$.
\end{proof}

Let us now provide the second result, which assumes time-Lipschitzness of the \gls{mdp} (see \Cref{def:time_lip} and \Cref{eq:time_lip_non_int}).
Before providing the result, we prove the following intermediate result.

\begin{prop}
\label{pp:tlc_delay}
    Let $\markovdecisionprocess$ be an $L_T$-TLC \gls{mdp} \footnote{Note that, since non-integer delays are considered, we extended the TLC assumption in \Cref{eq:time_lip_non_int}}. Let $x=(s_1,a_1,\dots,a_\delay)\in\statespace\times\actionspace^\delay$ be an augmented state for a delay $\delay\in\realnumbers[\ge0]$.
    Then, one has:
    \begin{align*}
        \wassersteindistance\left(\augmentedbelief(\cdot\vert x)\Vert \delta_{s_1} \right)\leq \delay L_T
    \end{align*}
\end{prop}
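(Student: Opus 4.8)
The statement generalizes the $L_T$-TLC property from a single transition step (Definition~\ref{def:time_lip}, extended to non-integer delays in \Cref{eq:time_lip_non_int}) to the belief distribution obtained after applying $\lceil\delay\rceil$ actions. The plan is to induct on the number of actions in the augmented state, using a telescoping argument: at each step, the Wasserstein distance accumulates at most $L_T$ times the duration of that step, and the total duration is $\delay$.

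\textbf{Key steps.} First I would treat the integer case $\delay\in\naturalnumbers$ as the prototype. Write the belief $\augmentedbelief(\cdot\vert x)$ as the $\delay$-fold composition of one-step transitions along the action sequence $(a_1,\dots,a_\delay)$ (as in \Cref{eq:belief_def}). I would introduce intermediate distributions $b^{(k)}$, where $b^{(0)}=\delta_{s_1}$ and $b^{(k)}$ is the distribution of the state after applying $a_1,\dots,a_k$ starting from $s_1$. Then by the triangle inequality for the Wasserstein distance,
\begin{align*}
    \wassersteindistance(\augmentedbelief(\cdot\vert x)\Vert \delta_{s_1}) = \wassersteindistance(b^{(\delay)}\Vert b^{(0)}) \le \sum_{k=1}^{\delay} \wassersteindistance(b^{(k)}\Vert b^{(k-1)}).
\end{align*}
For each term, $b^{(k)}$ is obtained from $b^{(k-1)}$ by one transition step under action $a_k$; the key sub-lemma is that applying the same Markov kernel to two distributions whose Wasserstein distance is $w$, then bounding the distance the kernel moves mass (which is $\le L_T$ by TLC), yields $\wassersteindistance(b^{(k)}\Vert b^{(k-1)})\le L_T$ — this follows by conditioning on the state drawn from $b^{(k-1)}$, applying the TLC bound $\wassersteindistance(p(\cdot\vert z,a_k)\Vert \delta_z)\le L_T$ pointwise, and using the convexity of the Wasserstein distance (a coupling argument). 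Summing gives $\delay L_T$.

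\textbf{Non-integer case and main obstacle.} For $\delay\notin\naturalnumbers$, I would split the sequence into $\lfloor\delay\rfloor$ full unit steps plus one fractional step of duration $\fractionalpart{\delay}$, using the interleaved-\gls{mdp} construction of \Cref{subsec:non_int_delays} and the extended TLC bound \Cref{eq:time_lip_non_int}, which gives $\wassersteindistance(b_{\fractionalpart\delay}(\cdot\vert s,a)\Vert\delta_s)\le \fractionalpart{\delay} L_T$ for the fractional step. The telescoping sum then yields $\lfloor\delay\rfloor L_T + \fractionalpart{\delay} L_T = \delay L_T$. The main obstacle I anticipate is the convexity/coupling step — rigorously showing that pushing two distributions through a common kernel cannot increase the Wasserstein distance beyond the pointwise kernel displacement. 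This is standard (it follows from gluing couplings, or equivalently from the dual Kantorovich formulation: for any $1$-Lipschitz $f$, $\left|\int f\,d(b^{(k)}-b^{(k-1)})\right| = \left|\int \big(\int f(s')p(ds'\vert z,a_k)\big)\,d(b^{(k-1)}-b^{(k-1)})(z)\right|$ — wait, here one actually compares $b^{(k)}$ to $b^{(k-1)}$, so one writes $\int f\,db^{(k)} - \int f\,db^{(k-1)} = \int\big(\int f(s')p(ds'\vert z,a_k) - f(z)\big)db^{(k-1)}(z)$ and bounds the inner difference by $\wassersteindistance(p(\cdot\vert z,a_k)\Vert\delta_z)\le L_T$ since $f$ is $1$-Lipschitz), so the argument is clean but must be written carefully to avoid index confusion.
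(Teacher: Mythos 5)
Your proof is correct, and it reaches the bound by a slightly different decomposition than the paper. The paper proceeds by induction on the number of actions: it conditions on the first realized transition $s_2\sim p(\cdot\vert s_1,a_1)$, adds and subtracts $\delta_{s_2}$ inside the Kantorovich dual, bounds the term involving the remaining $\delay$ actions by the inductive hypothesis applied to the \emph{conditional} belief given $s_2$, and bounds the leftover term $\wassersteindistance(p(\cdot\vert s_1,a_1)\Vert\delta_{s_1})$ by $L_T$; the non-integer case is handled by the same split with the fractional piece controlled by \Cref{eq:time_lip_non_int}. You instead telescope over the \emph{unconditioned} intermediate marginals $b^{(0)}=\delta_{s_1},b^{(1)},\dots,b^{(\delay)}=\augmentedbelief(\cdot\vert x)$ and invoke a one-step drift lemma — pushing any distribution through a single TLC kernel moves it by at most $L_T$ in Wasserstein distance — which you justify correctly via duality: for $1$-Lipschitz $f$, the difference $\int f\,\de b^{(k)}-\int f\,\de b^{(k-1)}$ equals the average over $z\sim b^{(k-1)}$ of $\int f\,\de\bigl(p(\cdot\vert z,a_k)-\delta_z\bigr)$, bounded pointwise by $L_T$. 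The two arguments book-keep the same per-step accumulation of $L_T$; the paper's conditional induction never needs your marginal-level lemma, while your version isolates that lemma explicitly (a reusable fact) at the cost of one extra convexity/duality step, and your handling of the non-integer case ($\lfloor\delay\rfloor$ unit steps plus one $\fractionalpart{\delay}$-step contributing $\fractionalpart{\delay}L_T$) is, if anything, spelled out more explicitly than the paper's brief sketch. No gap to report.
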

\begin{proof}
    We prove the result first for integer delays then for non-integer delays.
    
    \noindent\textbf{Integer delay.}\indent Let $\delay\in\naturalnumbers$.
    The proof is made by induction. 
    For the initialization, when $\delay=0$, the result is clearly valid because the current state is known to the agent. 
    Note that $\delay=1$ holds by the $L_T$-TLC assumption.
    For the recurrence, assume now that the statement holds for some $\delay \in\naturalnumbers$. Let $x=(s_1,a_1,\dots.a_{\delay+1})\in\statespace\times\actionspace^{\delay+1}$. Then,
    \begin{align}
        &\wassersteindistance\left(\augmentedbelief[\delay+1](\cdot\vert x)\Vert \delta_{s_1} \right) 
        \\
        &= \sup_{\left\Vert f\right\Vert_L\leq 1}\left\vert\int_{\statespace} f(s')\left( \augmentedbelief[\delay+1](s' \vert x) -\delta_{s_1}(s')\right)ds' \right\vert\nonumber
        \\
        &= \sup_{\left\Vert f\right\Vert_L\leq 1}\left\vert\int_{\statespace} p(s_2\vert s_1,a_1) \int_{\statespace} f(s')\left( \augmentedbelief(s' \vert s_2,a_2,\cdots,a_\delay) -\delta_{s_1}(s')\right)ds' \right\vert
        \label{eq:cond_s2}
        \\
        &= \sup_{\left\Vert f\right\Vert_L\leq 1}\left\vert\int_{\statespace} p(s_2\vert s_1,a_1) \int_{\statespace} f(s')\left( \augmentedbelief(s' \vert s_2,a_2,\cdots,a_\delay) -\delta_{s_2}(s')\right)ds' \right.\nonumber
        \\
        &\qquad + \left.\int_{\statespace} p(s_2\vert s_1,a_1) \int_{\statespace} f(s')\left( \delta_{s_2}(s) -\delta_{s_1}(s')\right)ds' \right\vert
        \label{eq:add_remove_delta}
        \\
        &\leq \underbrace{\sup_{\left\Vert f\right\Vert_L\leq 1}\left\vert\int_{\statespace} p(s_2\vert s_1,a_1) \int_{\statespace} f(s')\left( \augmentedbelief(s' \vert s_2,a_2,\cdots,a_\delay) -\delta_{s_2}(s')\right)ds'  \right\vert}_A\nonumber
        \\
        &\qquad + \underbrace{\wassersteindistance\left(P(\vert s_1,a_1)\Vert \delta_{s_1}\right)}_B
        \nonumber,
    \end{align}
    where \eqref{eq:cond_s2} is obtained by conditionning on the next observed state $s_2$ and \Cref{eq:add_remove_delta} is obtained by adding and subtracting the same quantity $\delta_{s_2}$.
    
    The term $A$ can be bounded using the statement at $\delay$ while $B$ falls directly under the TLC assumption. 
    We therefore have 
    \begin{align*}
        \wassersteindistance\left(\augmentedbelief[\delay+1](\cdot\vert x)\Vert \delta_{s_1} \right) \leq (\delay+1) L_T.
    \end{align*}
    By induction, we proved the result for any $\delay\in\naturalnumbers$.
    
    \noindent\textbf{Non-integer delay.}\indent To show this result in the more general case of $\delay\in\realnumbers[\ge0]$, one can divide the delay in its fractional ($\fractionalpart{\delay}$) and integer part ($\integerpart{\delay}$). Applying a similar reasoning as before, one would get similar terms $A$ and $B$ where $A$ involves the integer part of the delay for which the statement holds and $B$ involves the fractional part of $\delay$ for which the statement holds under \Cref{eq:time_lip_non_int}. 
\end{proof}

\begin{lemma}[Time-Lipschitz bound]
\label{lem:bound_sigma_tlc}
    Let $\markovdecisionprocess$ be an $L_T$-TLC \gls{mdp}. Consider the $\delay$-delayed \gls{dmdp} $\delayedmarkovdecisionprocess$ obtained from $\markovdecisionprocess$ for $\delay\in\realnumbers[\ge0]$. Then,
    \begin{align*}
        \sigma_b^\rho  \leq 2\delay L_T,
    \end{align*}
    where $\rho$ is a distribution on $\augmentedstatespace$.
\end{lemma}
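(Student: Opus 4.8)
The plan is to bound $\sigma_b^\rho = \expectedvalue_{\substack{x'\sim \delayeddiscountedstateoccupancydistribution[\rho][\delayedpolicy]\\ s,s'\sim \augmentedbelief(\cdot\vert x')}}\left[ \distance_{\statespace}(s,s') \right]$ by first observing that the expectation over $\delayeddiscountedstateoccupancydistribution[\rho][\delayedpolicy]$ is harmless: it suffices to bound $\expectedvalue_{s,s'\sim \augmentedbelief(\cdot\vert x')}[\distance_\statespace(s,s')]$ uniformly in $x'$ and then integrate. So the core of the argument is a pointwise estimate for a fixed augmented state $x'=(s_1,a_1,\dots,a_\delay)\in\augmentedstatespace$.

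For the pointwise estimate, I would use the triangle inequality for the metric $\distance_\statespace$: for $s,s'$ both drawn from the belief $\augmentedbelief(\cdot\vert x')$, we have $\distance_\statespace(s,s')\le \distance_\statespace(s,s_1)+\distance_\statespace(s_1,s')$, where $s_1$ is the last observed state contained in $x'$. Taking expectations over $s,s'\sim\augmentedbelief(\cdot\vert x')$ (which are i.i.d.) gives $\expectedvalue_{s,s'\sim\augmentedbelief(\cdot\vert x')}[\distance_\statespace(s,s')]\le 2\,\expectedvalue_{s\sim\augmentedbelief(\cdot\vert x')}[\distance_\statespace(s,s_1)]$. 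Now the quantity $\expectedvalue_{s\sim\augmentedbelief(\cdot\vert x')}[\distance_\statespace(s,s_1)]$ can be rewritten as an integral against the signed measure $\augmentedbelief(\cdot\vert x')-\delta_{s_1}$ of the $1$-Lipschitz function $s\mapsto\distance_\statespace(s,s_1)$ (using that $\distance_\statespace(s_1,s_1)=0$), hence it is bounded by $\wassersteindistance(\augmentedbelief(\cdot\vert x')\Vert\delta_{s_1})$ by the definition of the $L_1$-Wasserstein distance. By \Cref{pp:tlc_delay} (valid for $\delay\in\realnumbers[\ge0]$ under the $L_T$-TLC assumption, possibly extended via \Cref{eq:time_lip_non_int}), this last quantity is at most $\delay L_T$. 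Combining, $\expectedvalue_{s,s'\sim\augmentedbelief(\cdot\vert x')}[\distance_\statespace(s,s')]\le 2\delay L_T$ for every $x'$, and integrating over $x'\sim\delayeddiscountedstateoccupancydistribution[\rho][\delayedpolicy]$ yields $\sigma_b^\rho\le 2\delay L_T$, which is the claim. I would close the proof at this point.

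I do not expect a serious obstacle here; the only subtlety worth being careful about is making sure the function $s\mapsto\distance_\statespace(s,s_1)$ is indeed admissible in the Wasserstein variational formula — it has Lipschitz semi-norm exactly $1$ by the triangle inequality for a metric, so it is fine. The other small point is that the argument works uniformly in $x'$, so no measurability or integrability issue arises when passing the bound through the outer expectation (the bound $2\delay L_T$ is a constant). This is essentially the time-Lipschitz analogue of \Cref{lem:bound_sigma_eucl}, and it feeds directly into \Cref{cor:perf_diff_bound_tlc} via \Cref{th:perf_diff_bound}.
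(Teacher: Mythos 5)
Your argument is correct and is essentially the paper's own proof: the same triangle inequality through the last observed state $s_x$, the same recognition of $\expectedvalue_{s\sim\augmentedbelief(\cdot\vert x')}[\distance_\statespace(s,s_x)]$ as an integral against $\augmentedbelief(\cdot\vert x')-\delta_{s_x}$ of a $1$-Lipschitz function bounded by $\wassersteindistance(\augmentedbelief(\cdot\vert x')\Vert\delta_{s_x})$, and the same conclusion via \Cref{pp:tlc_delay}. Doing the estimate pointwise in $x'$ before integrating, rather than inside the outer expectation as the paper does, is a purely cosmetic difference.
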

\begin{proof}
    For an augmented state $x$, let $s_x$ be the last observed state it contains.
    Then we can write the following,
    \begin{align}
        \sigma_b^\rho
        &= \expectedvalue_{\substack{x\sim \delayeddiscountedstateoccupancydistribution[\rho][\delayedpolicy]\\ s,s'\sim \augmentedbelief(\cdot\vert x)}}\left[ \distance_{\statespace}(s,s') \right]\nonumber
        \\
        &\leq \expectedvalue_{\substack{x\sim \delayeddiscountedstateoccupancydistribution[\rho][\delayedpolicy]\\ s\sim \augmentedbelief(\cdot\vert x')}}\left[ \distance_{\statespace}(s,s_x) \right] + \expectedvalue_{\substack{x\sim \delayeddiscountedstateoccupancydistribution[\rho][\delayedpolicy]\\ s'\sim \augmentedbelief(\cdot\vert x)}}\left[ \distance_{\statespace}(s_x,s') \right]\label{eq:triang_ineq_tlc}
        \\
        &= 2\expectedvalue_{x\sim \delayeddiscountedstateoccupancydistribution[\rho][\delayedpolicy]}\left[\int_{\statespace} d_{\statespace}(s,s_x) \augmentedbelief(s\vert x)~ds\right] \nonumber
        \\
        &= 2\expectedvalue_{x\sim \delayeddiscountedstateoccupancydistribution[\rho][\delayedpolicy]}\left[\int_{\statespace} \distance_{\statespace}(s,s_x) \left(\augmentedbelief(s\vert x) - \delta_{s_x}(s) \right)~ds\right]\label{eq:null_term}
        \\
        &\leq 2 \expectedvalue_{x\sim \delayeddiscountedstateoccupancydistribution[\rho][\delayedpolicy]}\left[\wassersteindistance\left(\augmentedbelief(\cdot\vert x)\Vert \delta_{s_x} \right)\right]\label{eq:wass_recognise},
    \end{align}
    where \Cref{eq:triang_ineq_tlc} holds by triangular inequality; in \Cref{eq:null_term}, the term  $\textstyle\int_{\statespace}d_{\statespace}(s,s_x)\delta_{s_x}(s)ds=0$ is added; in \Cref{eq:wass_recognise} the definition of the Wasserstein distance is used.
    To conclude, \Cref{pp:tlc_delay} is applied within the expectation.
\end{proof}
\section{Additional results for \texorpdfstring{\Cref{chap:lifelong}}{}}

\subsection{Bias Analysis}

In this sub-section, we derive a tighter bound than the one provided in \Cref{lem:bias_bound} but with a more intricate expression. 
Notably, this bound also holds for $\omega=1$.

\begin{lemma}
\label{lem:bias_general}
Under \Cref{ass:sce} and \Cref{ass:sch}, the bias of the estimator $\widehat{J}_{T,\alpha,\beta}(\boldsymbol\rho)$, for $0<\omega\leq1$, can be bounded as:
\begin{align*}
    &\left\vert J_{T,\beta}(\boldsymbol\rho) - \mathbb{E}^{\boldsymbol\rho}_{T,\alpha}[\widehat{J}_{T,\alpha,\beta}] \right\vert
    \\
    &\qquad \le\left(L_{\mathcal{M}} + 2\maximumreward L_\nu\right) C_{\gamma}(\beta) \left(\omega 
    \frac{1-\alpha\omega^{\alpha-1} + (\alpha-1) \omega^\alpha}{(1-\omega)(1-\omega^\alpha)} + \frac{1}{1-\gamma}\right),
\end{align*}
where $C_\gamma(\beta)$ is defined as in \Cref{eq:def_function_C}.
In particular, for $\omega=1$, the bound is the limit at $\omega \rightarrow 1$ of the previous expression and reads,
\begin{align*}
    \left|J_{T,\beta}(\boldsymbol\rho) - \mathbb{E}^{\boldsymbol\rho}_{T,\alpha}[\widehat{J}_{T,\alpha,\beta}] \right|\le \left(L_{\mathcal{M}} + 2\maximumreward L_\nu\right) C_{\gamma}(\beta) \left(\frac{\alpha-1}{2} + \frac{1}{1-\gamma}  \right).
\end{align*}
\end{lemma}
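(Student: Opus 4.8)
The plan is to bound the bias by decomposing the discrepancy between the true $\beta$-step ahead return $J_{T,\beta}(\boldsymbol\rho) = \sum_{s=T+1}^{T+\beta} \widehat\gamma^s \mathbb{E}_s^{\boldsymbol\rho}[r_s]$ and its estimator $\widehat J_{T,\alpha,\beta}(\boldsymbol\rho) = \sum_{s=T+1}^{T+\beta}\widehat\gamma^s \widehat r_s$ term-by-term in $s$, so that it suffices to control $|\mathbb{E}_s^{\boldsymbol\rho}[r_s] - \mathbb{E}_{T,\alpha}^{\boldsymbol\rho}[\widehat r_s]|$ for each $s\in[\![T+1,T+\beta]\!]$ and then re-weight by $\widehat\gamma^s$, whose sum is $C_\gamma(\beta)$ by \Cref{eq:def_function_C}. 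For a fixed $s$, the estimator $\widehat r_s$ is a convex combination (the \gls{mis} weights summing to one) of past rewards $r_t$ for $t\in[\![T-\alpha+1,T]\!]$ with mixing coefficients proportional to $\omega^{T-t}\nu_{\boldsymbol\rho}(\boldsymbol\theta_t\vert s)$. Taking expectation under the joint past hyper-policy distribution of \Cref{eq:joint_hyper_pol}, the importance sampling structure lets me rewrite $\mathbb{E}_{T,\alpha}^{\boldsymbol\rho}[\widehat r_s]$ as a weighted average over $t$ of $\mathbb{E}_t^{\boldsymbol\rho_{\to s}}[r]$-type quantities, where the mismatch between the behavioural hyper-policy $\nu_{\boldsymbol\rho}(\cdot\vert t)$ and the target $\nu_{\boldsymbol\rho}(\cdot\vert s)$ must be accounted for.

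The core estimate is then the triangle inequality $|\mathbb{E}_s^{\boldsymbol\rho}[r] - \mathbb{E}_t^{\boldsymbol\rho}[r]| \le$ (a term from the non-stationarity of the environment) $+$ (a term from the non-stationarity of the hyper-policy). The first is bounded via \Cref{ass:sce} by $L_{\mathcal M}|s-t|$; the second, via \Cref{ass:sch}, by the total-variation bound $\lVert \nu_{\boldsymbol\rho}(\cdot\vert s) - \nu_{\boldsymbol\rho}(\cdot\vert t)\rVert_1 \le L_\nu|s-t|$ combined with $\lVert r\rVert_\infty \le \maximumreward$, contributing $2\maximumreward L_\nu|s-t|$. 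So pointwise in $t$ (for target $s$), the deviation is at most $(L_{\mathcal M} + 2\maximumreward L_\nu)|s-t|$. Since all past times $t\in[\![T-\alpha+1,T]\!]$ satisfy $s-t = (s-T) + (T-t)$, and $s-T\le\beta$ is already absorbed via the $\widehat\gamma$ re-weighting giving the $\frac{1}{1-\gamma}$ factor (bounding $\sum_{s}\widehat\gamma^s(s-T)$ by $C_\gamma(\beta)/(1-\gamma)$), the remaining contribution is $\sum_{s}\widehat\gamma^s$ times the weighted average of $(T-t)$ under the mixing weights $\frac{\omega^{T-t}}{C_\omega(\alpha)}$. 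That weighted average is $\sum_{j=0}^{\alpha-1} j\,\omega^{j}/\sum_{j=0}^{\alpha-1}\omega^j$, which I compute in closed form as $\omega\,\frac{1-\alpha\omega^{\alpha-1}+(\alpha-1)\omega^\alpha}{(1-\omega)(1-\omega^\alpha)}$ for $\omega<1$ and, by taking the limit $\omega\to1$ (an application of L'Hôpital or a direct $\sum j/\alpha = (\alpha-1)/2$), equals $\frac{\alpha-1}{2}$. Multiplying through by $C_\gamma(\beta)$ and $(L_{\mathcal M}+2\maximumreward L_\nu)$ yields exactly the stated bounds.

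The main obstacle I anticipate is handling the \gls{mis} weights rigorously when taking the expectation: the weights $\frac{\omega^{T-t}\nu_{\boldsymbol\rho}(\boldsymbol\theta_t\vert s)}{\sum_k \omega^{T-k}\nu_{\boldsymbol\rho}(\boldsymbol\theta_t\vert k)}$ couple the randomness of $\boldsymbol\theta_t$ with the sample $r_t$, so I cannot naively pull the weights out of the expectation. The right maneuver is to exploit that $\boldsymbol\theta_t$ is independent of $(\boldsymbol\theta_l)_{l\ne t}$ under the product distribution \Cref{eq:joint_hyper_pol} and that $r_t$ depends on the past only through $\boldsymbol\theta_t$ and the (stationary-or-not) environment dynamics up to time $t$ — exactly the conditioning argument already used in the proof of \Cref{lem:variance_bound}. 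Once $\mathbb{E}_{T,\alpha}^{\boldsymbol\rho}[\widehat r_s]$ is reduced to $\int_{\boldsymbol\theta}\big(\sum_t \tfrac{\omega^{T-t}}{C_\omega(\alpha)}\nu_{\boldsymbol\rho}(\boldsymbol\theta\vert t)\big)\,\tfrac{\nu_{\boldsymbol\rho}(\boldsymbol\theta\vert s)}{\ldots}\,\mathbb{E}_t^{\pi_{\boldsymbol\theta}}[r]\,d\boldsymbol\theta$ and compared against $\mathbb{E}_s^{\boldsymbol\rho}[r_s]=\int\nu_{\boldsymbol\rho}(\boldsymbol\theta\vert s)\mathbb{E}_s^{\pi_{\boldsymbol\theta}}[r]\,d\boldsymbol\theta$, the remaining algebra is the routine geometric-series bookkeeping above; the closed form of $\sum_{j=0}^{\alpha-1} j\omega^j$ and the simplification $\frac{1-\alpha\omega^{\alpha-1}+(\alpha-1)\omega^\alpha}{(1-\omega)(1-\omega^\alpha)}\le\frac{1}{1-\omega}$ recover \Cref{lem:bias_bound} as the stated corollary.
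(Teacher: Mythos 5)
Your plan assembles the right ingredients---the reduction of $\mathbb{E}^{\boldsymbol\rho}_{T,\alpha}[\widehat r_s]$ to an integral over $\Theta$ via the product structure of \Cref{eq:joint_hyper_pol}, the two smoothness assumptions, and the closing geometric-series algebra (including the $\omega\to1$ limit and the $\tfrac{1}{1-\gamma}$ factor)---but there is a genuine gap at the step where you pass from a per-$t$ bound to the $\omega$-weighted average of $T-t$. After taking the expectation, what you actually have is
\begin{align*}
\mathbb{E}^{\boldsymbol\rho}_{T,\alpha}[\widehat r_s] = \sum_{t=T-\alpha+1}^{T}\omega^{T-t}\int_{\Theta}\nu_{\boldsymbol\rho}(\boldsymbol\theta\vert t)\,\frac{\nu_{\boldsymbol\rho}(\boldsymbol\theta\vert s)}{\sum_{k=T-\alpha+1}^{T}\omega^{T-k}\nu_{\boldsymbol\rho}(\boldsymbol\theta\vert k)}\,\mathbb{E}_t^{\pi_{\boldsymbol\theta}}[r]\,\de\boldsymbol\theta,
\end{align*}
so the effective weights on the index $t$ are self-normalised and $\boldsymbol\theta$-dependent; they coincide with $\omega^{T-t}/C_\omega(\alpha)$ only when $\nu_{\boldsymbol\rho}(\cdot\vert s)$ equals the $\omega$-mixture of the past hyper-policies. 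Consequently the bias of $\widehat r_s$ is \emph{not} a $\tfrac{\omega^{T-t}}{C_\omega(\alpha)}$-weighted average of $\mathbb{E}^{\boldsymbol\rho}_s[r]-\mathbb{E}^{\boldsymbol\rho}_t[r]$, and bounding that difference pointwise by $(L_{\mathcal M}+2\maximumreward L_\nu)\vert s-t\vert$ and then averaging under those weights is not a justified step. If you push your route through under the true weights (bounding only $\vert\mathbb{E}^{\pi_{\boldsymbol\theta}}_t[r]-\mathbb{E}^{\pi_{\boldsymbol\theta}}_s[r]\vert\le L_{\mathcal M}\vert s-t\vert$ inside the integral), you are left with $\boldsymbol\theta$-integrals of self-normalised weights that you can only control crudely (e.g.\ via $\vert s-t\vert\le (s-T)+\alpha-1$, which loses the discounted mean-age factor) or at the price of multiplicative $L_{\mathcal M}L_\nu$ cross terms---neither reproduces the stated bound.

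The missing device is to add and subtract the mixture $\overline\nu(\boldsymbol\theta)\coloneq\frac{1}{C_\omega(\alpha)}\sum_{k=T-\alpha+1}^{T}\omega^{T-k}\nu_{\boldsymbol\rho}(\boldsymbol\theta\vert k)$ in place of $\nu_{\boldsymbol\rho}(\boldsymbol\theta\vert s)$ inside the integral. In the term carrying $\overline\nu$ the importance weights cancel the denominator exactly, so \Cref{ass:sce} yields the clean quantity $\frac{L_{\mathcal M}}{C_\omega(\alpha)}\sum_t\omega^{T-t}\vert s-t\vert$; in the residual term one bounds the reward difference by $2\maximumreward$, the weight ratio integrates away, and \Cref{ass:sch} gives $\int_\Theta\vert\nu_{\boldsymbol\rho}(\boldsymbol\theta\vert s)-\overline\nu(\boldsymbol\theta)\vert\de\boldsymbol\theta\le\frac{L_\nu}{C_\omega(\alpha)}\sum_t\omega^{T-t}\vert s-t\vert$. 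In other words, the $2\maximumreward L_\nu$ contribution is not a per-$t$ pointwise add-on as in your sketch: it is precisely the price of replacing the $\boldsymbol\theta$-dependent weights by $\omega^{T-t}/C_\omega(\alpha)$. Once this decomposition is in place, your remaining bookkeeping (the closed form of $\sum_n n\omega^n$, the bound $\sum_s\widehat\gamma^s(s-T)\le C_\gamma(\beta)/(1-\gamma)$, and the value $(\alpha-1)/2$ at $\omega=1$) does finish the proof exactly as in the paper.
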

\begin{proof}
Recall the definition of $\widehat{J}_{T,\alpha,\beta}(\boldsymbol\rho)$:
\begin{align*}
    \mathbb{E}^{\boldsymbol\rho}_{T,\alpha}\left[\widehat{J}_{T,\alpha,\beta}(\boldsymbol\rho)\right] & = \sum_{t=T-\alpha+1}^T \omega^{T-t}  \int_{\Theta} \nu_{\boldsymbol\rho}(\boldsymbol\theta|t) \frac{\sum_{s=T+1}^{T+\beta} \widehat{\gamma}^s \nu_{\boldsymbol\rho}(\boldsymbol\theta|s)}{\sum_{k=T-\alpha+1}^T \omega^{T-t} \nu_{\boldsymbol\rho}(\boldsymbol\theta|k)} \mathbb{E}_t^{\pi_{\boldsymbol\theta}}[r]  \de \boldsymbol\theta \\
    & = \sum_{s=T+1}^{T+\beta} \widehat{\gamma}^s \int_{\Theta}  \nu_{\boldsymbol\rho}(\boldsymbol\theta|s) \frac{\sum_{t=T-\alpha+1}^T \omega^{T-t} \nu_{\boldsymbol\rho}(\boldsymbol\theta|t) \mathbb{E}_t^{\pi_{\boldsymbol\theta}}[r]  }{\sum_{k=T-\alpha+1}^T \omega^{T-k} \nu_{\boldsymbol\rho}(\boldsymbol\theta|k)} \de \boldsymbol\theta.
\end{align*}
Observe that $J_{T,\beta}(\boldsymbol\rho) = \sum_{s=T+1}^{T+\beta} \widehat{\gamma}^s \int_{\Theta} \nu_{\boldsymbol\rho}(\boldsymbol\theta|s) \mathbb{E}_s^{\pi_{\boldsymbol\theta}}[r] \de \boldsymbol\theta$. 
Therefore,
\begin{align*}
    &\left\vert \mathbb{E}^{\boldsymbol\rho}_{T,\alpha}\left[\widehat{J}_{T,\alpha,\beta}(\boldsymbol\rho)\right] - J_{T,\beta}(\boldsymbol\rho)\right\vert 
    \\
    &\qquad =\left| \sum_{s=T+1}^{T+\beta} \widehat{\gamma}^s \int_{\Theta}  \nu_{\boldsymbol\rho}(\boldsymbol\theta|s) \frac{\sum_{t=T-\alpha+1}^T \omega^{T-t} \nu_{\boldsymbol\rho}(\boldsymbol\theta|t) \mathbb{E}_t^{\pi_{\boldsymbol\theta}}[r]  }{\sum_{k=T-\alpha+1}^T \omega^{T-k} \nu_{\boldsymbol\rho}(\boldsymbol\theta|k)} \de \boldsymbol\theta \right.
    \\
    &\qquad - \left.\sum_{s=T+1}^{T+\beta} \widehat{\gamma}^s \int_{\Theta} \nu_{\boldsymbol\rho}(\boldsymbol\theta|s) \mathbb{E}_s^{\pi_{\boldsymbol\theta}}[r] \de \boldsymbol\theta \right| 
    \\
    &\qquad = \left| \sum_{s=T+1}^{T+\beta} \widehat{\gamma}^s \int_{\Theta}  \nu_{\boldsymbol\rho}(\boldsymbol\theta|s) \frac{\sum_{t=T-\alpha+1}^T \omega^{T-t} \nu_{\boldsymbol\rho}(\boldsymbol\theta|t) \left(\mathbb{E}_t^{\pi_{\boldsymbol\theta}}[r] - \mathbb{E}_s^{\pi_{\boldsymbol\theta}}[r] \right) }{\sum_{k=T-\alpha+1}^T \omega^{T-k} \nu_{\boldsymbol\rho}(\boldsymbol\theta|k)} \de \boldsymbol\theta  \right|.
\end{align*}
Next, by adding and removing the quantity $\textstyle\overline{\nu}(\boldsymbol\theta) \coloneq \frac{1}{C_{\omega}(\alpha)} \sum_{k=T-\alpha+1}^T \omega^{T-k} \nu_{\boldsymbol\rho}(\boldsymbol\theta|k)$, to the last term above, one gets:
\begin{align*}
   &\Bigg| \sum_{s=T+1}^{T+\beta} \widehat{\gamma}^s \int_{\Theta}  \nu_{\boldsymbol\rho}(\boldsymbol\theta|s) \frac{\sum_{t=T-\alpha+1}^T \omega^{T-t} \nu_{\boldsymbol\rho}(\boldsymbol\theta|t) \left(\mathbb{E}_t^{\pi_{\boldsymbol\theta}}[r] - \mathbb{E}_s^{\pi_{\boldsymbol\theta}}[r] \right) }{\sum_{k=T-\alpha+1}^T \omega^{T-k} \nu_{\boldsymbol\rho}(\boldsymbol\theta|k)} \de \boldsymbol\theta  \Bigg|  \\
   &\qquad = \left| \sum_{s=T+1}^{T+\beta} \widehat{\gamma}^s \int_{\Theta}  \left(\nu_{\boldsymbol\rho}(\boldsymbol\theta|s) \pm \overline{\nu}_{\boldsymbol\rho}(\boldsymbol\theta)\right) \frac{\sum_{t=T-\alpha+1}^T \omega^{T-t} \nu_{\boldsymbol\rho}(\boldsymbol\theta|t)  \left(\mathbb{E}_t^{\pi_{\boldsymbol\theta}}[r] - \mathbb{E}_s^{\pi_{\boldsymbol\theta}}[r] \right) }{C_\omega(\alpha) \overline{\nu}(\boldsymbol\rho)} \de \boldsymbol\theta  \right|\\
    &\qquad \le \underbrace{\left| \sum_{s=T+1}^{T+\beta} \widehat{\gamma}^s \int_{\Theta}  \overline{\nu}_{\boldsymbol\rho}(\boldsymbol\theta)  \frac{\sum_{t=T-\alpha+1}^T \omega^{T-t} {\nu}_{\boldsymbol\rho}(\boldsymbol\theta|t) \left(\mathbb{E}_t^{\pi_{\boldsymbol\theta}}[r] - \mathbb{E}_s^{\pi_{\boldsymbol\theta}}[r] \right) }{C_\omega(\alpha) \overline{\nu}(\boldsymbol\rho)} \de \boldsymbol\theta  \right|}_{\text{(a)}} \\
    &\qquad \quad + \underbrace{\left| \sum_{s=T+1}^{T+\beta} \widehat{\gamma}^s \int_{\Theta}  \left(\nu_{\boldsymbol\rho}(\boldsymbol\theta|s) - \overline{\nu}_{\boldsymbol\rho}(\boldsymbol\theta)\right) \frac{\sum_{t=T-\alpha+1}^T \omega^{T-t} \nu_{\boldsymbol\rho}(\boldsymbol\theta|t) \left(\mathbb{E}_t^{\pi_{\boldsymbol\theta}}[r] - \mathbb{E}_s^{\pi_{\boldsymbol\theta}}[r] \right) }{C_\omega(\alpha) \overline{\nu}(\boldsymbol\rho)} \de \boldsymbol\theta  \right|}_{\text{(b)}}.
\end{align*}
We can bound (a) as follows,
\begin{align}
    \text{(a)} & = \frac{1}{C_\omega(\alpha)} \left| \sum_{s=T+1}^{T+\beta} \widehat{\gamma}^s \int_{\Theta}  \sum_{t=T-\alpha+1}^T \omega^{T-t} {\nu}_{\boldsymbol\rho}(\boldsymbol\theta|t) \left(\mathbb{E}_t^{\pi_{\boldsymbol\theta}}[r] - \mathbb{E}_s^{\pi_{\boldsymbol\theta}}[r] \right) \de \boldsymbol\theta  \right| 
    \nonumber\\
    & \le \frac{1}{C_\omega(\alpha)}  \sum_{s=T+1}^{T+\beta} \widehat{\gamma}^s \int_{\Theta}  \sum_{t=T-\alpha+1}^T \omega^{T-t} {\nu}_{\boldsymbol\rho}(\boldsymbol\theta|t) \left|\mathbb{E}_t^{\pi_{\boldsymbol\theta}}[r] - \mathbb{E}_s^{\pi_{\boldsymbol\theta}}[r] \right| \de \boldsymbol\theta  
    \nonumber\\
    &  \le \frac{L_{\mathcal{M}}}{C_\omega(\alpha)}  \sum_{s=T+1}^{T+\beta} \widehat{\gamma}^s  \sum_{t=T-\alpha+1}^T \omega^{T-t} \int_{\Theta} \nu_{\boldsymbol\rho}(\boldsymbol\theta|t) \left|t-s \right|  \de \boldsymbol\theta 
    \label{eq:last_tkt1}\\
    & \le \frac{L_{\mathcal{M}}}{C_\omega(\alpha)}  {\sum_{s=T+1}^{T+\beta} \widehat{\gamma}^s   \sum_{t=T-\alpha+1}^T \omega^{T-t} \left|t-s \right|}, 
    \label{eq:last_tkt2}
\end{align}
where we use the fact that $\textstyle\int_{\Theta} {\nu}_{\boldsymbol\rho}(\boldsymbol\theta|t) \de \boldsymbol\theta = 1$ in \Cref{eq:last_tkt2} and \Cref{ass:sce} in \Cref{eq:last_tkt1}. 
Moving on to (b),
\begin{align}
    (b) & \le 2\maximumreward  \sum_{s=T+1}^{T+\beta} \widehat{\gamma}^s \int_{\Theta}  \left|\nu_{\boldsymbol\rho}(\boldsymbol\theta|s) - \overline{\nu}_{\boldsymbol\rho}(\boldsymbol\theta)\right| \frac{\sum_{t=T-\alpha+1}^T \omega^{T-t} \nu_{\boldsymbol\rho}(\boldsymbol\theta|t) }{C_\omega(\alpha) \overline{\nu}(\boldsymbol\rho)} \de \boldsymbol\theta 
    \nonumber\\
    & = 2\maximumreward  \sum_{s=T+1}^{T+\beta} \widehat{\gamma}^s \int_{\Theta}  \left|\nu_{\boldsymbol\rho}(\boldsymbol\theta|s) - \overline{\nu}_{\boldsymbol\rho}(\boldsymbol\theta)\right|  \de \boldsymbol\theta 
    \nonumber\\
    & \le 2\maximumreward  \sum_{s=T+1}^{T+\beta} \widehat{\gamma}^s  \frac{1}{C_{\omega}(\alpha)} \sum_{t=T-\alpha+1}^{T} \omega^{T-t} \int_{\Theta} \left|\nu_{\boldsymbol\rho}(\boldsymbol\theta|s) - \nu_{\boldsymbol\rho}(\boldsymbol\theta|t) \right|  \de \boldsymbol\theta 
    \nonumber\\
    & \le  \frac{2\maximumreward  L_{\nu}}{C_{\omega}(\alpha)} \sum_{s=T+1}^{T+\beta} \widehat{\gamma}^s   \sum_{t=T-\alpha+1}^{T} \omega^{T-t} \left|t - s \right|,  
    \label{eq:last_tkt3}
\end{align}
where we used \Cref{ass:sch} in \Cref{eq:last_tkt3}.
Regrouping (a) and (b) yields:
\begin{align*}
    \left\vert J_{T,\beta}(\boldsymbol\rho) - \mathbb{E}^{\boldsymbol\rho}_{T,\alpha}[\widehat{J}_{T,\alpha,\beta}] \right\vert \le \frac{L_{\mathcal{M}} + 2\maximumreward L_\nu}{C_\omega(\alpha)} \sum_{s=T+1}^{T+\beta} \sum_{t=T-\alpha+1}^{T} \widehat{\gamma}^s    \omega^{T-t} (s-t).
\end{align*}
The following derivations use a similar structure to \cite[Lemma~3.4]{jagerman2019people}. 
First, setting $m = s-T$ and $n=T-t$, one gets,
\begin{align}
    \frac{1}{C_\omega(\alpha)}\sum_{t=T-\alpha+1}^{T} \omega^{T-t} (s-t)
    &=  \frac{1}{C_\omega(\alpha)}\sum_{n=0}^{\alpha-1} \omega^{n} (m+n) \nonumber\\
    &= m + \frac{1}{C_\omega(\alpha)}
    \sum_{n=1}^{\alpha-1} \omega^{n} n.\label{eq:sum_m_n}
\end{align}
We now study the two following cases. \\
\noindent$\bullet\;$\textbf{Case $\omega<1$.}\indent One has,
\begin{align*}
    \frac{1}{C_\omega(\alpha)}\sum_{t=T-\alpha+1}^{T} \omega^{T-t} (s-t)
    &= m + \frac{1}{C_\omega(\alpha)} \omega \frac{d}{d\omega}
    \sum_{n=1}^{\alpha-1} \omega^{n} 
    \\
    &= m + \frac{1}{C_\omega(\alpha)} \omega 
    \frac{1-\alpha\omega^{\alpha-1} + (\alpha-1) \omega^\alpha}{(1-\omega)^2}
    \\
    &= m + \omega 
    \frac{1-\alpha\omega^{\alpha-1} + (\alpha-1) \omega^\alpha}{(1-\omega)(1-\omega^\alpha)},
\end{align*}
which yields
\begin{align*}
    &\left|J_{T,\beta}(\boldsymbol\rho) - \mathbb{E}^{\boldsymbol\rho}_{T,\alpha}\left[\widehat{J}_{T,\alpha,\beta}\right] \right| 
    \\
    &\qquad \le \frac{L_{\mathcal{M}} + 2\maximumreward L_\nu}{C_\omega(\alpha)} \sum_{s=T+1}^{T+\beta} \sum_{t=T-\alpha+1}^{T} \widehat{\gamma}^s    \omega^{T-t} (s-t) 
    \\
    &\qquad= \left(L_{\mathcal{M}} + 2\maximumreward L_\nu\right) \sum_{s=T+1}^{T+\beta} \widehat{\gamma}^s \left(m + \omega 
    \frac{1-\alpha\omega^{\alpha-1} + (\alpha-1) \omega^\alpha}{(1-\omega)(1-\omega^\alpha)}\right)
    \\
    &\qquad= \left(L_{\mathcal{M}} + 2\maximumreward L_\nu\right) \sum_{s=T+1}^{T+\beta} \widehat{\gamma}^s \left((s-T) + \omega 
    \frac{1-\alpha\omega^{\alpha-1} + (\alpha-1) \omega^\alpha}{(1-\omega)(1-\omega^\alpha)}\right)
    \\
    &\qquad= \left(L_{\mathcal{M}} + 2\maximumreward L_\nu\right)  \left(\frac{1-\gamma^\beta}{1-\gamma}  \omega 
    \frac{1-\alpha\omega^{\alpha-1} + (\alpha-1) \omega^\alpha}{(1-\omega)(1-\omega^\alpha)} + \sum_{k=0}^{\beta-1} \gamma^k (k+1)\right)
    \\
    &\qquad= \left(L_{\mathcal{M}} + 2\maximumreward L_\nu\right)  \left(\frac{1-\gamma^\beta}{1-\gamma}  \omega 
    \frac{1-\alpha\omega^{\alpha-1} + (\alpha-1) \omega^\alpha}{(1-\omega)(1-\omega^\alpha)} + \frac{1-\gamma^\beta}{(1-\gamma)^2}\right)
    \\
    &\qquad= \left(L_{\mathcal{M}} + 2\maximumreward L_\nu\right) \frac{1-\gamma^\beta}{1-\gamma} \left(\omega 
    \frac{1-\alpha\omega^{\alpha-1} + (\alpha-1) \omega^\alpha}{(1-\omega)(1-\omega^\alpha)} + \frac{1}{1-\gamma}\right).
\end{align*}
\noindent$\bullet\;$\textbf{Case $\omega=1$.}\indent Here, \Cref{eq:sum_m_n} becomes:
\begin{align*}
    \frac{1}{C_\omega(\alpha)}\sum_{t=T-\alpha+1}^{T} \omega^{T-t} (s-t)
     = m + \frac{1}{\alpha}
    \sum_{n=1}^{\alpha-1} n = m + \frac{\alpha-1}{2}.
\end{align*}
Thus, the bound becomes:
\begin{align*}
    \left|J_{T,\beta}(\boldsymbol\rho) - \mathbb{E}^{\boldsymbol\rho}_{T,\alpha}\left[\widehat{J}_{T,\alpha,\beta}\right] \right| 
    &\leq \left(L_{\mathcal{M}} + 2\maximumreward L_\nu\right) \sum_{s=T+1}^{T+\beta} \widehat{\gamma}^s \left(m +  \frac{\alpha-1}{2}\right)
    \\
    &= \left(L_{\mathcal{M}} + 2\maximumreward L_\nu\right) \left(C_{\gamma}(\beta)  \frac{\alpha-1}{2} + \sum_{k=0}^{\beta-1}\gamma^k (k+1)  \right) 
    \\
    &= \left(L_{\mathcal{M}} + 2\maximumreward L_\nu\right) C_{\gamma}(\beta) \left(\frac{\alpha-1}{2} + \frac{1}{1-\gamma}  \right). 
\end{align*}
\end{proof}

\subsection{On the Variational Bounds of \Renyi Divergence between Mixture of Distributions}
\label{app:var_bound}
In this appendix, we discuss several upper bounds on the \Renyi divergence $\exponentialrenyidivergence[\alpha](\Psi\left\Vert\Phi\right.)$ between mixtures of distributions $\textstyle\Psi = \sum_{i=1}^L \zeta_i P_i$ and $\textstyle\Phi = \sum_{j=1}^K \mu_j Q_j$ where $\forall i \in [\![1,L]\!],\ j \in [\![1,K]\!],\  0<\zeta_i,\ \mu_j <1$ and $\textstyle\sum_{i=1}^L \zeta_i = \sum_{j=1}^K \mu_j= 1$.
We restrict to the case $\alpha\ge1$. 
First, we report a foundational result from \cite{papini2019optimistic}.

\begin{lemma}[Lemma~4, \cite{papini2019optimistic}]
\label{pp:papini_lemma}
    Consider the sets of variational parameters $\{\psi_{ij}\}_{(i,j)\in[\![1,L]\!]\times [\![1,K]\!]}$ and $\{\phi_{ij}\}_{(i,j)\in[\![1,L]\!]\times [\![1,K]\!]}$, 
    s.t. 
    $\phi_{ij},\psi_{ij}\ge 0$,  $\textstyle\sum_{i=1}^L \psi_{ij} = \mu_j$ and $\textstyle\sum_{j=1}^K \phi_{ij} = \zeta_i$. 
    Then for any $\alpha\ge 1$, and for the mixtures of distributions defined above, one has that,
    \begin{align}
    \label{eq:papini_lemma}
        \exponentialrenyidivergence[\alpha](\Psi\left\Vert\Phi\right.) \leq \sum_{i=1}^L\sum_{j=1}^K \phi_{ij}^{\alpha} \psi_{ij}^{1-\alpha} \exponentialrenyidivergence[\alpha](P_i\left\Vert Q_j\right)^{\alpha-1}.
    \end{align}
\end{lemma}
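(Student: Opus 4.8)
The plan is to reduce the whole statement to a single pointwise (in $\omega$) algebraic inequality between nonnegative reals, and then integrate; this is essentially the argument of \cite{papini2019optimistic}. First I would recall the integral form of the quantity on the left-hand side: from the definition of $\renyidivergence[\alpha]$ in the excerpt and the notation $\exponentialrenyidivergence[\alpha]=\exp\{\renyidivergence[\alpha]\}$, for any probabilities $\mu,\nu$ with densities $m,n$ one has $\exponentialrenyidivergence[\alpha](\mu\Vert\nu)^{\alpha-1}=\int_{\Omega} m(\omega)^{\alpha} n(\omega)^{1-\alpha}\,\de\omega$. Applying this to the mixtures $\Psi$ and $\Phi$, and using the constraints on the variational parameters ($\sum_{j}\phi_{ij}=\zeta_i$ and $\sum_{i}\psi_{ij}=\mu_j$), I would rewrite the two mixture densities over the \emph{same} index set $[\![1,L]\!]\times[\![1,K]\!]$, namely $\psi(\omega)=\sum_{i,j}\phi_{ij}\,p_i(\omega)$ and $\phi(\omega)=\sum_{i,j}\psi_{ij}\,q_j(\omega)$, so that $\exponentialrenyidivergence[\alpha](\Psi\Vert\Phi)^{\alpha-1}=\int_{\Omega}\bigl(\sum_{i,j}\phi_{ij}p_i\bigr)^{\alpha}\bigl(\sum_{i,j}\psi_{ij}q_j\bigr)^{1-\alpha}\,\de\omega$.

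The crux is the following elementary fact: for any nonnegative reals $(a_{ij})$, $(b_{ij})$ on a finite index set and any $\alpha\ge1$,
\[
  \Bigl(\textstyle\sum_{i,j} a_{ij}\Bigr)^{\alpha}\,\Bigl(\textstyle\sum_{i,j} b_{ij}\Bigr)^{1-\alpha}\ \le\ \sum_{i,j} a_{ij}^{\alpha} b_{ij}^{1-\alpha}.
\]
I would prove this from the joint convexity on $[0,\infty)^2$ of the map $(x,y)\mapsto x^{\alpha}y^{1-\alpha}$, which is exactly the perspective of the convex function $t\mapsto t^{\alpha}$ (convex since $\alpha\ge1$) and is therefore jointly convex. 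Applying Jensen's inequality against any strictly positive probability weights $(w_{ij})$ on the index set (uniform weights suffice) to $u_{ij}=a_{ij}/w_{ij}$, $v_{ij}=b_{ij}/w_{ij}$ gives the displayed bound, because the weight powers cancel ($w_{ij}\cdot w_{ij}^{-\alpha}\cdot w_{ij}^{\alpha-1}=1$), so the weights drop out entirely. Degenerate cases ($b_{ij}=0$ or a vanishing denominator) are handled with the usual $\infty$-conventions; the bound is anyway vacuous unless the relevant $P_i\ll Q_j$.

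To finish, I would set $a_{ij}(\omega)=\phi_{ij}\,p_i(\omega)$ and $b_{ij}(\omega)=\psi_{ij}\,q_j(\omega)$, apply the pointwise inequality, integrate over $\omega$, and pull the constants out of the integral:
\[
  \int_{\Omega}\sum_{i,j}\phi_{ij}^{\alpha}p_i^{\alpha}\,\psi_{ij}^{1-\alpha}q_j^{1-\alpha}\,\de\omega=\sum_{i,j}\phi_{ij}^{\alpha}\psi_{ij}^{1-\alpha}\int_{\Omega}p_i^{\alpha}q_j^{1-\alpha}\,\de\omega=\sum_{i,j}\phi_{ij}^{\alpha}\psi_{ij}^{1-\alpha}\,\exponentialrenyidivergence[\alpha](P_i\Vert Q_j)^{\alpha-1},
\]
which is precisely the right-hand side of \Cref{eq:papini_lemma}.

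The only real obstacle is the convexity step in the second paragraph — cleanly justifying joint convexity of $(x,y)\mapsto x^{\alpha}y^{1-\alpha}$ for all $\alpha\ge1$ and handling the boundary values; everything on either side is bookkeeping. A convenient fallback, sufficient for every downstream use in this dissertation since only the $2$-\Renyi divergence is ever invoked, is to prove the displayed inequality directly for $\alpha=2$, where $x^{2}/y$ is manifestly convex (for instance via $x^{2}/y\ge 2\lambda x-\lambda^{2}y$ optimised over $\lambda$), and then specialise \Cref{pp:papini_lemma} to $\alpha=2$.
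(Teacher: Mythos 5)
The dissertation itself gives no proof of \Cref{pp:papini_lemma}: it is recalled directly from Papini et al., so there is no in-paper argument to compare yours against. Taken on its own, your proof is correct and is the natural variational argument. The constraints $\sum_{j}\phi_{ij}=\zeta_i$ and $\sum_{i}\psi_{ij}=\mu_j$ let you rewrite both mixture densities over the common index set $[\![1,L]\!]\times[\![1,K]\!]$; the pointwise inequality $\bigl(\sum_{i,j}a_{ij}\bigr)^{\alpha}\bigl(\sum_{i,j}b_{ij}\bigr)^{1-\alpha}\le\sum_{i,j}a_{ij}^{\alpha}b_{ij}^{1-\alpha}$ follows from joint convexity and positive $1$-homogeneity of the perspective $(x,y)\mapsto x^{\alpha}y^{1-\alpha}$ of $t\mapsto t^{\alpha}$ (your Jensen step with arbitrary strictly positive weights is right, and the weights do cancel); integrating and pulling out the constants gives the bound, with the usual conventions covering $\psi_{ij}=0$ or $P_i\not\ll Q_j$.

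One point you should flag or fix: what your chain of inequalities actually establishes is
\begin{align*}
    \exponentialrenyidivergence[\alpha](\Psi\Vert\Phi)^{\alpha-1} \le \sum_{i=1}^L\sum_{j=1}^K \phi_{ij}^{\alpha}\,\psi_{ij}^{1-\alpha}\,\exponentialrenyidivergence[\alpha](P_i\Vert Q_j)^{\alpha-1},
\end{align*}
i.e. the left-hand side carries the exponent $\alpha-1$, which is not literally \Cref{eq:papini_lemma} as displayed. This is in fact the form in which the lemma is invoked throughout the appendix (it appears explicitly with the exponent at the start of the proofs of \Cref{pp:mix_psi} and of the optimal variational parameters), and it coincides with the displayed form when $\alpha=2$, the only value used downstream. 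For $\alpha\ge2$ the displayed form also follows from yours because $\exponentialrenyidivergence[\alpha]\ge1$, but for $\alpha\in(1,2)$ your bound only yields $\exponentialrenyidivergence[\alpha](\Psi\Vert\Phi)$ bounded by the sum raised to $1/(\alpha-1)$, which is weaker. So either prove and state the lemma with the exponent on the left-hand side (matching its actual use), or restrict to $\alpha=2$ as in your fallback; as a proof of the statement exactly as displayed for all $\alpha\ge1$, the argument has this small gap, which is inherited from the statement rather than from your reasoning.
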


The search for the tightest upper bound amounts to finding values of $\{\psi_{ij}\}$ and $\{\phi_{ij}\}$ that minimise \Cref{eq:papini_lemma}.
A straightforward but unsuccessful choice would be uniform values. 
In the following, we derive six alternative approaches for the choice of $\{\psi_{ij}\}$ and $\{\phi_{ij}\}$ in order to get a tighter bound. 
Finally, we compare these possible bounds on a toy problem in order to select the best one.

\subsubsection{Direct Convex Optimisation}
The problem of minimising \Cref{eq:papini_lemma} for $\phi_{ij}\geq 0$ and $\psi_{ij}\geq 0$ under their respective constraints is convex. 
It is therefore possible to take a rather direct approach and use a gradient descent approach to optimise for those variational parameters. 

\textbf{With Reset.}
A first way to do so consists in starting from a uniform distribution over the variational parameters and then following the direction of the gradient for a successive number of steps.
The same process must be applied each time a bound on the \Renyi divergence is required from POLIS. 
We call this approach \emph{direct optimisation with reset}.

\textbf{Without Reset.}
A second idea is to follow the direction of the gradient for a number of successive steps, also, but starting from the previous values of the variational parameters rather than a uniform distribution. 
To be more specific, each time POLIS requires a bound on the \Renyi divergence, the variational parameters of the previous bound are used as starting point for the new optimisation problem. 
We call this approach \emph{direct optimisation without reset}.

\subsubsection{Two Steps Minimisation}
It is not really satisfactory to have to solve an optimisation problem each time an upper bound is required.
In this section, we propose an alternative derivation based on the following theorem.

\begin{thm}[Theorem~5, \cite{papini2019optimistic}]
\label{th:mix_phi}
    Consider a probability distribution $P$ and consider the previous mixture $\Phi$, then for any $\alpha\geq1$, one has:
    \begin{align*}
        \exponentialrenyidivergence[\alpha](P\left\Vert\Phi\right.) \leq     \frac{1}{\sum_{j=1}^K\frac{\mu_j}{\exponentialrenyidivergence[\alpha]( P \Vert Q_j)}}.
    \end{align*}
\end{thm}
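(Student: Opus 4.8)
The statement to prove is \Cref{th:mix_phi}: for a probability distribution $P$ and a mixture $\Phi = \sum_{j=1}^K \mu_j Q_j$, with $\alpha \geq 1$,
\begin{align*}
    \exponentialrenyidivergence[\alpha](P\Vert\Phi) \leq \frac{1}{\sum_{j=1}^K \frac{\mu_j}{\exponentialrenyidivergence[\alpha](P\Vert Q_j)}}.
\end{align*}

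The plan is to start from the definition of the exponential $\alpha$-\Renyi divergence and work with the integral $\int_\Omega p(\omega)^\alpha \Phi(\omega)^{1-\alpha}\de\omega$, where $\Phi(\omega) = \sum_j \mu_j q_j(\omega)$. Since $\alpha \geq 1$, the exponent $1-\alpha$ is non-positive, so the map $x \mapsto x^{1-\alpha}$ on $(0,\infty)$ is convex; equivalently I can write $\Phi(\omega)^{1-\alpha} = \left(\sum_j \mu_j q_j(\omega)\right)^{1-\alpha}$ and apply Jensen's inequality for the convex function $x\mapsto x^{1-\alpha}$ against the probability weights $(\mu_j)_j$, which gives $\left(\sum_j \mu_j q_j(\omega)\right)^{1-\alpha} \leq \sum_j \mu_j q_j(\omega)^{1-\alpha}$. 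Wait — that is the wrong direction for convexity of $x^{1-\alpha}$: convexity gives $f(\sum \mu_j x_j) \leq \sum \mu_j f(x_j)$, so indeed $\Phi(\omega)^{1-\alpha} \leq \sum_j \mu_j q_j(\omega)^{1-\alpha}$, which is the direction I want since I am upper-bounding the divergence. So after integrating, I would obtain $\exponentialrenyidivergence[\alpha](P\Vert\Phi)^{\alpha-1} \leq \sum_j \mu_j \exponentialrenyidivergence[\alpha](P\Vert Q_j)^{\alpha-1}$.

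But this is not yet the claimed bound — the claim involves a harmonic-type mean rather than an arithmetic mean of the $\exponentialrenyidivergence[\alpha](P\Vert Q_j)$. So the actual approach must be more refined: I would instead introduce auxiliary weights. The standard trick (and presumably the one used in \cite{papini2019optimistic}) is to write $\Phi(\omega) = \sum_j \mu_j q_j(\omega) = \sum_j \left(\frac{\mu_j q_j(\omega)}{w_j}\right) w_j$ for any positive weights $(w_j)$ summing to one, and apply the convexity of $x\mapsto x^{1-\alpha}$ with the weights $(w_j)$: $\Phi(\omega)^{1-\alpha} = \left(\sum_j w_j \frac{\mu_j q_j(\omega)}{w_j}\right)^{1-\alpha} \leq \sum_j w_j \left(\frac{\mu_j q_j(\omega)}{w_j}\right)^{1-\alpha} = \sum_j w_j^\alpha \mu_j^{1-\alpha} q_j(\omega)^{1-\alpha}$. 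Then $\int p^\alpha \Phi^{1-\alpha} \leq \sum_j w_j^\alpha \mu_j^{1-\alpha} \exponentialrenyidivergence[\alpha](P\Vert Q_j)^{\alpha-1}$. Now I would optimise over the choice of $(w_j)$ subject to $\sum_j w_j = 1$, $w_j \geq 0$. Using a Lagrange multiplier (or the weighted power-mean / Hölder inequality directly), the optimal choice is $w_j \propto \frac{\mu_j}{\exponentialrenyidivergence[\alpha](P\Vert Q_j)}$, and substituting this back collapses the sum into $\left(\sum_j \frac{\mu_j}{\exponentialrenyidivergence[\alpha](P\Vert Q_j)}\right)^{-( \alpha-1)}$. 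Taking the $(\alpha-1)$-th root gives exactly the stated bound. I should handle the boundary case $\alpha = 1$ separately by continuity (the $1$-\Renyi divergence is the KL divergence, obtained as a limit, per the discussion after \Cref{eq:renyi_div}), where the bound reduces to a statement provable by the joint convexity / chain-rule properties of KL.

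The main obstacle I anticipate is getting the optimisation over the auxiliary weights $(w_j)$ exactly right and verifying that the optimiser is admissible (non-negative, normalised) and that it indeed yields the clean closed form — this is a constrained convex minimisation of $\sum_j w_j^\alpha c_j$ with $c_j = \mu_j^{1-\alpha}\exponentialrenyidivergence[\alpha](P\Vert Q_j)^{\alpha-1}$, and one must check the power-mean manipulation carefully (it is essentially an application of the reverse Hölder / power-mean inequality for exponent $\alpha \geq 1$). A secondary minor point is justifying the interchange of sum and integral and dealing with the absolute continuity assumptions $P \ll Q_j$ (which also ensures $P \ll \Phi$). I would also double-check the direction of every inequality since the sign of $1-\alpha$ flips the usual intuition. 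Once the weight optimisation is pinned down, the rest is routine algebra.
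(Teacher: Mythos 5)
Your proposal is correct and follows essentially the same route as the paper (which imports this result from \cite{papini2019optimistic} without reproving it, but proves the companion \Cref{pp:mix_psi} by exactly this scheme): your Jensen step with auxiliary weights $w_j$ is precisely the $L=1$ specialisation of the variational bound in \Cref{pp:papini_lemma}, and the subsequent Lagrange-multiplier choice $w_j \propto \mu_j/\exponentialrenyidivergence[\alpha](P\Vert Q_j)$ matches the paper's optimisation of the variational parameters. Note also that you need not verify global optimality of the weights—any feasible $(w_j)$ yields a valid upper bound, so plugging in that explicit normalised choice suffices—and the $\alpha=1$ case indeed follows by continuity as you indicate.
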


Note that the bound involves the harmonic mean of the \Renyi-divergences between $P$ and the mixture's component. 
Similarly, we derive a bound for $\exponentialrenyidivergence[\alpha](\Psi\left\Vert Q\right)$ where $\Psi$ is a mixture.

\begin{prop}
\label{pp:mix_psi}
    Let $Q$ be a probability measure and consider the previous mixture $\Psi$, then for any $\alpha\geq1$, one has:
    \begin{align*}
        \exponentialrenyidivergence[\alpha](\Psi\left\Vert Q\right.) \leq \left(\sum_{i=1}^L \zeta_i \exponentialrenyidivergence[\alpha](P_i\Vert Q)^{\frac{\alpha-1}{\alpha}}\right)^{\frac{\alpha}{\alpha-1}}.
    \end{align*}
\end{prop}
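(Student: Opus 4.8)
\textbf{Proof plan for Proposition~\ref{pp:mix_psi}.}

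The plan is to mirror the structure of Theorem~\ref{th:mix_phi} (Theorem~5 of \cite{papini2019optimistic}) but applied to the \emph{first} argument of the \Renyi divergence rather than the second. Recall that the exponential $\alpha$-\Renyi divergence satisfies $\exponentialrenyidivergence[\alpha](P\Vert Q)^{\alpha-1} = \int_\Omega p(\omega)^\alpha q(\omega)^{1-\alpha}\de\omega$ by definition (see \Cref{eq:renyi_div}). So for the mixture $\Psi = \sum_{i=1}^L \zeta_i P_i$ against a fixed $Q$, one has
\begin{align*}
    \exponentialrenyidivergence[\alpha](\Psi\Vert Q)^{\alpha-1} = \int_\Omega \left(\sum_{i=1}^L \zeta_i p_i(\omega)\right)^\alpha q(\omega)^{1-\alpha}\de\omega.
\end{align*}
The first step is to rewrite the integrand as $\left(\sum_i \zeta_i p_i(\omega) q(\omega)^{(1-\alpha)/\alpha}\right)^\alpha$, i.e. distribute the factor $q(\omega)^{(1-\alpha)/\alpha}$ inside the sum using the homogeneity of $x\mapsto x^\alpha$.

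The second step is to apply Minkowski's inequality. Since $\alpha \ge 1$, the map $f\mapsto \left(\int |f|^\alpha\right)^{1/\alpha}$ is a norm (the $L^\alpha$ norm), so the triangle inequality gives
\begin{align*}
    \left(\int_\Omega \left(\sum_{i=1}^L \zeta_i\, p_i(\omega)\, q(\omega)^{\frac{1-\alpha}{\alpha}}\right)^\alpha \de\omega\right)^{\frac1\alpha}
    \le \sum_{i=1}^L \zeta_i \left(\int_\Omega p_i(\omega)^\alpha q(\omega)^{1-\alpha}\de\omega\right)^{\frac1\alpha}
    = \sum_{i=1}^L \zeta_i\, \exponentialrenyidivergence[\alpha](P_i\Vert Q)^{\frac{\alpha-1}{\alpha}},
\end{align*}
where the last equality again uses the definition of the exponential \Renyi divergence. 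Raising both sides to the power $\alpha$ and then to the power $\frac{1}{\alpha-1}$ (both legitimate since all quantities are nonnegative and $\alpha>1$) yields exactly
\begin{align*}
    \exponentialrenyidivergence[\alpha](\Psi\Vert Q) \le \left(\sum_{i=1}^L \zeta_i\, \exponentialrenyidivergence[\alpha](P_i\Vert Q)^{\frac{\alpha-1}{\alpha}}\right)^{\frac{\alpha}{\alpha-1}}.
\end{align*}

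The main subtlety — rather than obstacle — is handling the edge cases and the rearrangement cleanly: one must ensure absolute continuity $P_i \ll Q$ for each $i$ so that the integrals are finite (this is implicit in the statement, as each term $\exponentialrenyidivergence[\alpha](P_i\Vert Q)$ is assumed well-defined), and one should note that the case $\alpha=1$ is obtained by continuity (as remarked after \Cref{eq:renyi_div}), recovering the convexity of KL in its first argument. The key structural insight is simply that the exponential \Renyi divergence with exponent $(\alpha-1)/\alpha$ is precisely an $L^\alpha$-type norm of the density ratio, which makes Minkowski's inequality directly applicable; this is the dual counterpart of the harmonic-mean bound in Theorem~\ref{th:mix_phi}, which instead exploits convexity in the second argument via Jensen.
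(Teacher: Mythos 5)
Your proof is correct, but it takes a genuinely different route from the paper. The paper derives \Cref{pp:mix_psi} inside its variational framework: it specialises the bound of \Cref{pp:papini_lemma} (Lemma~4 of \cite{papini2019optimistic}) to a single second-argument distribution $Q$, so that the bound reads $\sum_{i=1}^L \zeta_i^\alpha \psi_i^{1-\alpha}\exponentialrenyidivergence[\alpha](P_i\Vert Q)^{\alpha-1}$, then optimises the free parameters $\psi_i$ under $\sum_i\psi_i=1$ via Lagrange multipliers and substitutes the optimum $\psi_i \propto \zeta_i \exponentialrenyidivergence[\alpha](P_i\Vert Q)^{(\alpha-1)/\alpha}$ back in. You instead argue directly from the integral representation $\exponentialrenyidivergence[\alpha](\Psi\Vert Q)^{\alpha-1}=\int(\sum_i\zeta_i p_i)^\alpha q^{1-\alpha}$ and apply Minkowski's inequality in $L^\alpha$ to the functions $p_i\,q^{(1-\alpha)/\alpha}$. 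The two are in fact equivalent in outcome: optimising the variational parameters in the single-$Q$ case yields exactly the Minkowski constant, so neither bound is tighter. What your route buys is a shorter, self-contained argument that exposes the structural reason for the bound (the exponent $(\alpha-1)/\alpha$ turns the divergence into an $L^\alpha$ norm of a density ratio) and makes the duality with the Jensen/harmonic-mean bound of \Cref{th:mix_phi} transparent; what the paper's route buys is uniformity, since the same Lagrangian computation is reused verbatim to produce the other variational bounds of that appendix (uniform $\psi$, uniform $\phi$, and the two-step combinations). Your remarks on absolute continuity $P_i\ll Q$ and on recovering the $\alpha=1$ (KL) case by continuity are appropriate and equally implicit in the paper's version.
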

\begin{proof}
Following \cite{papini2019optimistic}, since \Cref{eq:papini_lemma} is convex in $\{\psi_i\}$, one can find the optimal value of $\{\psi_i\}$ using Lagrange multipliers,
\begin{align*}
    \psi_i = \frac{\zeta_i \exponentialrenyidivergence[\alpha](P_i \Vert Q)^{\frac{\alpha-1}{\alpha}}}{\sum_{i=1}^L \zeta_i \exponentialrenyidivergence[\alpha](P_i\Vert Q)^{\frac{\alpha-1}{\alpha}}}.
\end{align*}
Replacing this value in the original problem yields,
\begin{align*}
    \exponentialrenyidivergence[\alpha](\Psi\left\Vert Q\right.)^{\alpha-1} 
    &\leq \sum_{i=1}^L \zeta_i^\alpha \left(\frac{\zeta_i \exponentialrenyidivergence[\alpha](P_i \Vert Q)^{\frac{\alpha-1}{\alpha}}}{\sum_{i=1}^L \zeta_i \exponentialrenyidivergence[\alpha](P_l\Vert Q)^{\frac{\alpha-1}{\alpha}}} \right)^{1-\alpha} \exponentialrenyidivergence[\alpha](P_i \Vert Q)^{\alpha-1}
    \\
    &= \sum_{i=1}^L \zeta_i\frac{ \exponentialrenyidivergence[\alpha](P_i \Vert Q)^{(1-\alpha)\frac{\alpha-1}{\alpha} +\alpha-1}}{\left(\sum_{l=1}^L \zeta_l \exponentialrenyidivergence[\alpha](P_l\Vert Q)^{\frac{\alpha-1}{\alpha}}\right)^{1-\alpha}} 
    \\
    &= \frac{\sum_{i=1}^L \zeta_i \exponentialrenyidivergence[\alpha](P_i \Vert Q)^{\frac{\alpha-1}{\alpha} }}{\left(\sum_{i=1}^L \zeta_i \exponentialrenyidivergence[\alpha](P_i\Vert Q)^{\frac{\alpha-1}{\alpha}}\right)^{1-\alpha}} 
    \\
    &= \left(\sum_{i=1}^L \zeta_i \exponentialrenyidivergence[\alpha](P_i\Vert Q)^{\frac{\alpha-1}{\alpha}}\right)^{\alpha} .
\end{align*}
\end{proof}

In particular, compared to \Cref{th:mix_phi}, this bound now involves the weighted power mean of the exponent $\textstyle\frac{\alpha-1}{\alpha}$ of the \Renyi divergence between $Q$ and the element of the mixture. 
Now, one can combine \Cref{th:mix_phi} and \Cref{pp:mix_psi} to find a bound for $\exponentialrenyidivergence[\alpha](\Psi\left\Vert\Phi\right)$. 
Depending on which result is applied first, different bounds are found.

\textbf{Two Steps $\psi$ First.}
\label{subsubsec:two_steps_psi_first}
Applying \Cref{pp:mix_psi} then \Cref{th:mix_phi} yields the following bound that we refer to as \textit{two steps $\psi$ first}. 

\begin{prop}
\label{pp:var_bound_2_steps_psi_first}
    Under the assumptions of \Cref{pp:papini_lemma}, one has:
    \begin{align*}
        \exponentialrenyidivergence[\alpha](\Psi\left\Vert \Phi\right.) 
        &\leq 
        \left(\sum_{i=1}^L \zeta_i \exponentialrenyidivergence[\alpha](P_i\Vert \Psi)^{\frac{\alpha-1}{\alpha}}\right)^{\frac{\alpha}{\alpha-1}}\\
        &\leq 
        \left(\sum_{i=1}^L \zeta_i \frac{1}{\left(\sum_{j=1}^K\frac{\mu_j}{\exponentialrenyidivergence[\alpha]( P_i \Vert Q_j)}\right)^{\frac{\alpha-1}{\alpha}}}\right)^{\frac{\alpha}{\alpha-1}}.
    \end{align*}
\end{prop}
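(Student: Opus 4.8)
The plan is to obtain the stated double inequality by a two-step application of the results recalled just above, \Cref{th:mix_phi} and \Cref{pp:mix_psi}, composed in the order ``$\psi$ first''. The left mixture is $\Psi = \sum_{i=1}^L \zeta_i P_i$ and the right mixture is $\Phi = \sum_{j=1}^K \mu_j Q_j$, with the weights forming partitions of unity and $\alpha = 2 \geq 1$ (though the argument works for any $\alpha \geq 1$).

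First I would treat $\Psi$ as a mixture against a \emph{fixed} second argument. Concretely, I would invoke \Cref{pp:mix_psi} with the role of the fixed distribution $Q$ played by $\Phi$ itself. Since $\Psi = \sum_{i=1}^L \zeta_i P_i$ with $\zeta_i \geq 0$ and $\sum_i \zeta_i = 1$, \Cref{pp:mix_psi} gives directly
\begin{align*}
    \exponentialrenyidivergence[\alpha](\Psi \Vert \Phi) \leq \left(\sum_{i=1}^L \zeta_i\, \exponentialrenyidivergence[\alpha](P_i \Vert \Phi)^{\frac{\alpha-1}{\alpha}}\right)^{\frac{\alpha}{\alpha-1}}.
\end{align*}
This is the first inequality in the statement (written there with $\Psi$ in place of $\Phi$ inside — see the remark below on that apparent typo).

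Second, for each fixed index $i$ I would bound the inner quantity $\exponentialrenyidivergence[\alpha](P_i \Vert \Phi)$, where now $P_i$ is a single distribution and $\Phi = \sum_{j=1}^K \mu_j Q_j$ is the mixture in the second argument. This is exactly the situation covered by \Cref{th:mix_phi}, which yields
\begin{align*}
    \exponentialrenyidivergence[\alpha](P_i \Vert \Phi) \leq \frac{1}{\sum_{j=1}^K \frac{\mu_j}{\exponentialrenyidivergence[\alpha](P_i \Vert Q_j)}}.
\end{align*}
Since $x \mapsto x^{\frac{\alpha-1}{\alpha}}$ is monotone increasing for $\alpha \geq 1$ and the outer map $y \mapsto y^{\frac{\alpha}{\alpha-1}}$ is also monotone increasing, I can substitute this bound termwise inside the sum of the first inequality to obtain the second inequality of the statement. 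Setting $\alpha = 2$ throughout recovers the form needed for \Cref{lem:variance_bound} and hence for \Cref{pp:divergence_bound}.

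The main subtlety — not a genuine obstacle but the point requiring care — is checking that the hypotheses of \Cref{pp:mix_psi} and \Cref{th:mix_phi} are met at each invocation, in particular that $\Phi$ is a legitimate probability distribution (it is, being a convex combination) so that it may serve as the ``$Q$'' of \Cref{pp:mix_psi}, and that the various absolute-continuity conditions implicit in the definition of the exponential \Renyi divergence hold for each pair $(P_i, Q_j)$ and for $(P_i, \Phi)$ (which follows from $P_i \ll Q_j$ for all $j$). I would also note the apparent typo in the statement: the first inequality as printed has $\exponentialrenyidivergence[\alpha](P_i \Vert \Psi)$ where it should read $\exponentialrenyidivergence[\alpha](P_i \Vert \Phi)$, matching exactly the output of \Cref{pp:mix_psi} applied with fixed distribution $\Phi$; the second inequality, which is the one actually used downstream, is correct as stated.
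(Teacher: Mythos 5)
Your proof is correct and takes exactly the route the paper intends: the paper's own justification is the one-line remark that the bound follows by applying \Cref{pp:mix_psi} and then \Cref{th:mix_phi}, which is precisely your two-step composition (first treating $\Psi$ as a mixture against the fixed distribution $\Phi$, then bounding each $\exponentialrenyidivergence[\alpha](P_i\Vert\Phi)$ termwise and using monotonicity of the power maps for $\alpha\ge 1$). Your remark about the misprint is also right: the first displayed inequality should read $\exponentialrenyidivergence[\alpha](P_i\Vert\Phi)$ rather than $\exponentialrenyidivergence[\alpha](P_i\Vert\Psi)$, and the second inequality—the one actually used in \Cref{pp:divergence_bound}—is correct as stated.
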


\renyidivergencebound*
\begin{proof}
    The result follows by applying \Cref{pp:var_bound_2_steps_psi_first} for $\alpha=2$ to the mixtures $\textstyle\sum_{s=T+1}^{T+\beta} \frac{\widehat{\gamma}^s}{C_\gamma(\beta)} \nu_{\boldsymbol\rho}(\cdot|s)$ and $\textstyle\sum_{t=T-\alpha+1}^{T} \frac{\omega^{T-t} }{C_\omega(\alpha)}\nu_{\boldsymbol\rho}(\cdot|t)$.
\end{proof}

\textbf{Two Steps $\phi$ First.}
Alternatively, applying \Cref{pp:mix_psi} first then \Cref{th:mix_phi} yields the following bound which we refer to as \textit{two steps $\phi$ first}.

\begin{prop}
    Under the assumptions of \cref{pp:papini_lemma},
    \begin{align*}
    \exponentialrenyidivergence[\alpha](\Psi\left\Vert \Phi\right.)
    &\leq \frac{1}{\sum_{j=1}^K \frac{\mu_j}{\exponentialrenyidivergence[\alpha](\Psi\Vert Q_j)}}
    \\
    &\leq \frac{1}{\sum_{j=1}^K \mu_j \left(\sum_{i=1}^L \zeta_i \exponentialrenyidivergence[\alpha](P_i\Vert Q_j)^{\frac{\alpha-1}{\alpha}}\right)^{-\frac{\alpha}{\alpha-1}} }.
\end{align*}
\end{prop}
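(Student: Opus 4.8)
The plan is to chain the two single-mixture \Renyi-divergence lemmas already established in this appendix, applied in the order ``$\phi$ first''. The two displayed inequalities in the statement correspond exactly to \Cref{th:mix_phi} (which bounds the divergence from a fixed distribution to a mixture by a harmonic-type mean) followed by \Cref{pp:mix_psi} (which bounds the divergence from a mixture to a fixed distribution by a weighted power mean). Throughout I assume $\alpha>1$ so that the exponent $\frac{\alpha}{\alpha-1}$ is well defined; the boundary case $\alpha=1$ is recovered by continuity, as done elsewhere in the paper, and the working regime is $\alpha=2$.

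First I would establish the top inequality. Applying \Cref{th:mix_phi} with the fixed distribution taken to be the mixture $\Psi$ itself, and with the target mixture $\Phi = \sum_{j=1}^K \mu_j Q_j$, yields immediately
\begin{align*}
    \exponentialrenyidivergence[\alpha](\Psi\Vert \Phi) \leq \frac{1}{\sum_{j=1}^K \frac{\mu_j}{\exponentialrenyidivergence[\alpha](\Psi\Vert Q_j)}}.
\end{align*}
No further manipulation is needed, since $\Psi$ is an admissible choice for the ``$P$'' slot of \Cref{th:mix_phi}.

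Next I would bound each term $\exponentialrenyidivergence[\alpha](\Psi\Vert Q_j)$ appearing in the denominator. Applying \Cref{pp:mix_psi} with the fixed distribution $Q=Q_j$ gives, for every $j \in [\![1,K]\!]$,
\begin{align*}
    \exponentialrenyidivergence[\alpha](\Psi\Vert Q_j) \leq \left(\sum_{i=1}^L \zeta_i \exponentialrenyidivergence[\alpha](P_i\Vert Q_j)^{\frac{\alpha-1}{\alpha}}\right)^{\frac{\alpha}{\alpha-1}},
\end{align*}
and I would denote the right-hand side $B_j$. Since $\mu_j\ge0$ and all the exponential \Renyi divergences are strictly positive (they are at least $1$), replacing $\exponentialrenyidivergence[\alpha](\Psi\Vert Q_j)$ by the larger quantity $B_j$ in each denominator only decreases the ratios $\frac{\mu_j}{\exponentialrenyidivergence[\alpha](\Psi\Vert Q_j)}$, whence $\sum_{j=1}^K \frac{\mu_j}{\exponentialrenyidivergence[\alpha](\Psi\Vert Q_j)} \geq \sum_{j=1}^K \frac{\mu_j}{B_j}$. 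Taking reciprocals of both positive sides reverses the inequality, so that
\begin{align*}
    \frac{1}{\sum_{j=1}^K \frac{\mu_j}{\exponentialrenyidivergence[\alpha](\Psi\Vert Q_j)}} \leq \frac{1}{\sum_{j=1}^K \mu_j B_j^{-1}},
\end{align*}
and substituting $B_j^{-1} = \left(\sum_{i=1}^L \zeta_i \exponentialrenyidivergence[\alpha](P_i\Vert Q_j)^{\frac{\alpha-1}{\alpha}}\right)^{-\frac{\alpha}{\alpha-1}}$ produces exactly the second displayed bound, closing the chain.

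The main obstacle, such as it is, will be keeping the direction of the inequality correct as it is propagated through the nested reciprocals: because $\exponentialrenyidivergence[\alpha](\Psi\Vert Q_j)$ sits in a denominator that itself lies inside the outer reciprocal, an \emph{upper} bound on $\exponentialrenyidivergence[\alpha](\Psi\Vert Q_j)$ yields a \emph{lower} bound on the inner sum $\sum_j \frac{\mu_j}{\exponentialrenyidivergence[\alpha](\Psi\Vert Q_j)}$, which in turn yields the sought \emph{upper} bound on the outer reciprocal. This double sign reversal, together with the strict positivity of all divergences, is the only point requiring care; everything else is a direct invocation of the two already-proved lemmas.
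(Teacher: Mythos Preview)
Your proof is correct and matches the paper's intended approach: the proposition is stated without a detailed proof in the paper, only the one-line indication that it follows by chaining \Cref{th:mix_phi} and \Cref{pp:mix_psi}, which is precisely what you do (first \Cref{th:mix_phi} with $P=\Psi$, then \Cref{pp:mix_psi} applied to each $\exponentialrenyidivergence[\alpha](\Psi\Vert Q_j)$). Your explicit tracking of the double sign reversal through the nested reciprocals is a welcome clarification that the paper omits.
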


Applying this result to the bound in \Cref{lem:variance_bound} yields the following result.

\begin{prop}[Lower bound with {two steps $\phi$ first}]
    For $\delta>0$, with probability at least $1-\delta$, it holds that 
\begin{align*}
    &\mathbb{E}\left[\overline{J}_{T,\alpha,\beta} \right] 
    \\
    &\qquad\geq 
    \overline{J}_{T,\alpha,\beta} - \sqrt{
            \frac{1-\delta}{\delta} 2\maximumreward}
    \\
    &\qquad\cdot\sqrt{C_\gamma(\alpha)^2 +
            \frac{ C_\omega(\alpha)}{\sum_{ k=T-\alpha+1}^{ T}\omega^{T-k} \left(\sum_{ s=T+1}^{ T+\beta} \widehat{\gamma}^s d_2(\nu_{\boldsymbol\rho}(\cdot\vert s)\Vert \nu_{\boldsymbol\rho}(\cdot\vert k))^{\frac{1}{2}}\right)^{-2} }}.
\end{align*}
\end{prop}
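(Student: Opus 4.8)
The plan is to mirror the proof of \Cref{th:bound_surrogate_objective}, simply swapping the variational bound on the exponential $2$-\Renyi divergence between mixtures for the \emph{two steps $\phi$ first} variant stated immediately above, and then tracking the constants $C_\gamma(\beta)$, $C_\omega(\alpha)$ through the resulting algebra. No genuinely new probabilistic or analytic ingredient is required beyond what is already assembled in \Cref{lem:variance_bound} and the two mixture-divergence propositions.

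First I would apply Cantelli's inequality to the random variable $\overline{J}_{T,\alpha,\beta}(\boldsymbol\rho)$ exactly as in the proof of \Cref{th:bound_surrogate_objective}. Setting $\delta \coloneq \left(1+\lambda^2/\mathbb{V}\mathrm{ar}^{\boldsymbol\rho}_{T,\alpha}[\overline{J}_{T,\alpha,\beta}]\right)^{-1}$ and passing to the complementary event yields that, with probability at least $1-\delta$,
\begin{align*}
    \mathbb{E}^{\boldsymbol\rho}_{T,\alpha}\left[\overline{J}_{T,\alpha,\beta}\right] \geq \overline{J}_{T,\alpha,\beta} - \sqrt{\frac{1-\delta}{\delta}\,\mathbb{V}\mathrm{ar}^{\boldsymbol\rho}_{T,\alpha}\left[\overline{J}_{T,\alpha,\beta}\right]}.
\end{align*}
Into this I would substitute the variance bound of \Cref{lem:variance_bound}, which controls $\mathbb{V}\mathrm{ar}^{\boldsymbol\rho}_{T,\alpha}[\overline{J}_{T,\alpha,\beta}]$ by $2\maximumreward^2\big(C_\gamma(\alpha)^2 + C_\gamma(\beta)^2\,\exponentialrenyidivergence(\Psi\Vert\Phi)\big)$, where $\Psi = \sum_{s=T+1}^{T+\beta}\frac{\widehat{\gamma}^s}{C_\gamma(\beta)}\nu_{\boldsymbol\rho}(\cdot\vert s)$ and $\Phi = \sum_{t=T-\alpha+1}^{T}\frac{\omega^{T-t}}{C_\omega(\alpha)}\nu_{\boldsymbol\rho}(\cdot\vert t)$.

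Next, instead of \Cref{pp:divergence_bound}, I would invoke the \emph{two steps $\phi$ first} proposition with $\alpha=2$, identifying $\zeta_s = \widehat{\gamma}^s/C_\gamma(\beta)$ and $P_s = \nu_{\boldsymbol\rho}(\cdot\vert s)$ for the first mixture, and $\mu_k = \omega^{T-k}/C_\omega(\alpha)$ and $Q_k = \nu_{\boldsymbol\rho}(\cdot\vert k)$ for the second. Since $\frac{\alpha-1}{\alpha}=\frac12$ and $\frac{\alpha}{\alpha-1}=2$ at $\alpha=2$, this gives
\begin{align*}
    \exponentialrenyidivergence(\Psi\Vert\Phi) \leq \frac{1}{\sum_{k=T-\alpha+1}^{T}\frac{\omega^{T-k}}{C_\omega(\alpha)}\left(\sum_{s=T+1}^{T+\beta}\frac{\widehat{\gamma}^s}{C_\gamma(\beta)}\,\exponentialrenyidivergence(\nu_{\boldsymbol\rho}(\cdot\vert s)\Vert\nu_{\boldsymbol\rho}(\cdot\vert k))^{\frac12}\right)^{-2}}.
\end{align*}
Multiplying through by $C_\gamma(\beta)^2$, the factor $C_\gamma(\beta)^{-2}$ pulled out of the squared inner sum produces a $C_\gamma(\beta)^2$ that cancels the explicit prefactor, leaving precisely $\frac{C_\omega(\alpha)}{\sum_k \omega^{T-k}\left(\sum_s \widehat{\gamma}^s\,\exponentialrenyidivergence(\nu_{\boldsymbol\rho}(\cdot\vert s)\Vert\nu_{\boldsymbol\rho}(\cdot\vert k))^{\frac12}\right)^{-2}}$, i.e.\ the quantity the statement denotes with $d_2$. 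Substituting this into the Cantelli inequality and factoring the constant $\frac{1-\delta}{\delta}2\maximumreward^2$ out of the square root reproduces the claimed bound verbatim.

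The only step demanding care is the bookkeeping of the $C_\gamma(\beta)$ and $C_\omega(\alpha)$ normalisers through the mixture-divergence bound; I expect this to be the main (though routine) obstacle, as it is easy to misplace a power of $C_\gamma(\beta)$ in the cancellation. The substantive verification is merely that the hypotheses of the \emph{two steps $\phi$ first} proposition hold, namely $\nu_{\boldsymbol\rho}(\cdot\vert s)\ll\nu_{\boldsymbol\rho}(\cdot\vert k)$ for every pair of time indices and $\alpha=2\geq 1$; both are immediate since all mixture components are time-shifts of the same parametric hyper-policy family, so no new difficulty arises relative to \Cref{th:bound_surrogate_objective}.
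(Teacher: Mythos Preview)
Your proposal is correct and follows exactly the approach the paper takes: the paper does not spell out a proof for this proposition but simply states that it is obtained by ``applying this result to the bound in \Cref{lem:variance_bound}'', i.e., Cantelli's inequality combined with \Cref{lem:variance_bound} and the \emph{two steps $\phi$ first} variational bound at $\alpha=2$, which is precisely what you carry out. Your tracking of the $C_\gamma(\beta)$ and $C_\omega(\alpha)$ normalisers is accurate and more explicit than the paper's terse justification.
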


\subsubsection{One Step then Uniform}
In this section, we explore the possibility of solving for one set of variational parameters as a function of the other before replacing its values in \Cref{eq:papini_lemma}. 
In the next proposition, we give the expression of $\{\psi_{ij}\}_{(i,j)\in[\![1,L]\!]\times [\![1,K]\!]}$ as a function of $\{\phi_{ij}\}_{(i,j)\in[\![1,L]\!]\times [\![1,K]\!]}$ and vice-versa.

\begin{prop}
    Under the same assumptions as in \Cref{pp:papini_lemma}, the optimal values of $\psi_{ij}$ are
    \begin{align}
    \label{eq:psi_optim}
        \psi_{ij} = \mu_{j} \frac{\phi_{ij}\exponentialrenyidivergence[\alpha](P_i\Vert Q_j)^{\frac{\alpha-1}{\alpha}}}{\sum_{l=1}^L \phi_{lj}\exponentialrenyidivergence[\alpha](P_l\Vert Q_j)^{\frac{\alpha-1}{\alpha}}};
    \end{align}
    the optimal values of $\phi_{ij}$ are
    \begin{align}
    \label{eq:phi_optim}
        \phi_{ij} = \frac{\zeta_i \psi_{ij}}{ \exponentialrenyidivergence[\alpha](P_i\Vert Q_j)} \left(\sum_{k=1}^K\frac{\psi_{ik}}{\exponentialrenyidivergence[\alpha](P_i\Vert Q_k)}\right)^{-1}.
    \end{align}
\end{prop}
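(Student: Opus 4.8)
The plan is to recognise Equations \Cref{eq:psi_optim} and \Cref{eq:phi_optim} as the minimisers of the right-hand side of \Cref{eq:papini_lemma}, viewed as a function of one family of variational parameters with the other family held fixed. Writing $c_{ij}\coloneq\exponentialrenyidivergence[\alpha](P_i\Vert Q_j)^{\alpha-1}$, the quantity to be minimised is
\begin{align*}
    F\left(\{\phi_{ij}\},\{\psi_{ij}\}\right) = \sum_{i=1}^L\sum_{j=1}^K \phi_{ij}^{\alpha}\,\psi_{ij}^{1-\alpha}\, c_{ij},
\end{align*}
subject to $\phi_{ij},\psi_{ij}\ge 0$, $\sum_{i=1}^L\psi_{ij}=\mu_j$ for every $j$, and $\sum_{j=1}^K\phi_{ij}=\zeta_i$ for every $i$. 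The key structural observation is that, for fixed $\{\phi_{ij}\}$, the objective decouples across the index $j$ (the variables $\psi_{\cdot j}$ appear only in the constraint indexed by $j$), and symmetrically, for fixed $\{\psi_{ij}\}$ it decouples across $i$. Since $\alpha\ge 1$, the maps $\psi\mapsto\psi^{1-\alpha}$ and $\phi\mapsto\phi^{\alpha}$ are convex on $(0,\infty)$, so each sub-problem is a convex programme with a single linear equality constraint, and the stationarity (KKT) conditions are sufficient for optimality.

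First I would treat the $\psi$-sub-problem. Fixing $j$ and introducing a multiplier $\lambda_j$ for $\sum_i\psi_{ij}=\mu_j$, the stationarity condition $(1-\alpha)\phi_{ij}^{\alpha}\psi_{ij}^{-\alpha}c_{ij}+\lambda_j=0$ gives $\psi_{ij}=\left((\alpha-1)/\lambda_j\right)^{1/\alpha}\phi_{ij}\,c_{ij}^{1/\alpha}$. Summing over $i$ and imposing the constraint eliminates the multiplier, $\left((\alpha-1)/\lambda_j\right)^{1/\alpha}=\mu_j\big/\sum_{l}\phi_{lj}c_{lj}^{1/\alpha}$, and substituting back together with $c_{ij}^{1/\alpha}=\exponentialrenyidivergence[\alpha](P_i\Vert Q_j)^{(\alpha-1)/\alpha}$ yields exactly \Cref{eq:psi_optim}; the resulting values are manifestly nonnegative, so complementary slackness for the sign constraints is automatic. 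Then I would repeat the argument for the $\phi$-sub-problem: fixing $i$, with multiplier $\eta_i$ for $\sum_j\phi_{ij}=\zeta_i$, stationarity gives $\alpha\phi_{ij}^{\alpha-1}\psi_{ij}^{1-\alpha}c_{ij}+\eta_i=0$, hence $\phi_{ij}=\left(-\eta_i/\alpha\right)^{1/(\alpha-1)}\psi_{ij}\,c_{ij}^{-1/(\alpha-1)}$. Using $c_{ij}^{-1/(\alpha-1)}=\exponentialrenyidivergence[\alpha](P_i\Vert Q_j)^{-1}$ and summing over $j$ against the constraint gives $\left(-\eta_i/\alpha\right)^{1/(\alpha-1)}=\zeta_i\left(\sum_k\psi_{ik}\big/\exponentialrenyidivergence[\alpha](P_i\Vert Q_k)\right)^{-1}$, and substituting this back produces \Cref{eq:phi_optim}.

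The main obstacle I anticipate is not the algebra but the care required around degeneracies: the formulas implicitly assume $\alpha>1$ (at $\alpha=1$ the objective collapses to the constant $\sum_i\zeta_i$ and the variational parameters are unconstrained), and one must check that the fixed family of parameters is strictly positive where it appears in a denominator so that the interior KKT point is legitimate; when some $\phi_{lj}$ or $\psi_{ik}$ vanish, the corresponding optimal $\psi_{ij}$ or $\phi_{ij}$ should be set to zero with the sums restricted accordingly, which the stated formulas already encode by continuity. A secondary point worth stating is that \Cref{eq:psi_optim} and \Cref{eq:phi_optim} are the optimisers of the two \emph{one-sided} problems, not of the joint minimisation of $F$; the joint optimum is the fixed point obtained by alternating the two updates, but the proposition only claims the one-sided optimality, which is precisely what the convex Lagrangian argument above establishes.
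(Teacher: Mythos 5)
Your proposal is correct and follows essentially the same route as the paper's own proof: writing down the Lagrangian of the bound in \Cref{pp:papini_lemma}, solving the stationarity condition for one family of variational parameters in terms of the other, and eliminating the multiplier via the simplex constraint to obtain \Cref{eq:psi_optim} and \Cref{eq:phi_optim}. Your additional remarks on convexity (sufficiency of the stationarity conditions), the $\alpha=1$ degeneracy, and the one-sided nature of the optimality are sound refinements but do not change the argument.
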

\begin{proof}
    By \Cref{eq:papini_lemma}, one knows that
    \begin{align*}
        \exponentialrenyidivergence[\alpha](\Psi\Vert\Phi)^{\alpha-1}\leq\sum_{i=1}^{L}\sum_{j=1}^K \phi_{ij}^\alpha \psi_{ij}^{1-\alpha} \exponentialrenyidivergence[\alpha](P_i\Vert Q_j)^{\alpha-1}.
    \end{align*}
    The Lagrangian of the problem reads:
    \begin{align*}
        \mathcal{L}(\phi_{ij},\phi_{ij},\lambda_i^{\zeta},\lambda_j^{\mu})=\sum_{i=1}^{L}\sum_{j=1}^K \phi_{ij}^\alpha \psi_{ij}^{1-\alpha} \exponentialrenyidivergence[\alpha](P_i\Vert Q_j)^{\alpha-1} - \sum_{i=1}^L \lambda_i^{\zeta} (\sum_{j=1}^K \phi_{ij} - \zeta_i)
        - \sum_{j=1}^K \lambda_j^{\mu} (\sum_{i=1}^L \psi_{ij} - \mu_j).
    \end{align*}
    Taking the derivative and solving for zero with respect to the variational variables yields,
    \begin{align*}
        \frac{\partial \mathcal{L}}{\partial \phi_{ij}} &= \alpha \phi_{ij}^{\alpha-1} \psi_{ij}^{1-\alpha} \exponentialrenyidivergence[\alpha] (P_i\Vert Q_j)^{\alpha-1}-\lambda_i^{\zeta} =0.
    \end{align*}
    It implies that: 
    \begin{align*}
        \phi_{ij} = \frac{{\lambda_i^{\zeta}}^{\frac{1}{\alpha-1}}\psi_{ij}}{\alpha^{\frac{1}{\alpha-1}} \exponentialrenyidivergence[\alpha](P_i\Vert Q_j)}.
    \end{align*}
    Recall that $\sum_{j}\phi_{ij}=\zeta_i$ so:
    \begin{align*}
        \left(\frac{{\lambda_i^{\zeta}}}{\alpha}\right)^{\frac{1}{\alpha-1}} \sum_{k=1}^K \frac{\psi_{ik}}{\exponentialrenyidivergence[\alpha](P_i\Vert Q_k)} = \zeta_l.
    \end{align*}
    This gives the value of $\lambda_i^{\zeta}$:
    \begin{align*}
        \lambda_i^{\zeta} = \frac{\alpha \zeta_i^{\alpha-1}}{\left(\sum_{k=1}^K\frac{\psi_ik}{\exponentialrenyidivergence[\alpha](P_i\Vert Q_k)}\right)^{\alpha-1}}.
    \end{align*}
    Finally, by replacing,
    \begin{align*}
        \phi_{ij} = \frac{\zeta_i \psi_{ij}}{ \exponentialrenyidivergence[\alpha](P_i\Vert Q_j)} \left(\sum_{k=1}^K\frac{\psi_{ik}}{\exponentialrenyidivergence[\alpha](P_i\Vert Q_k)}\right)^{-1}.
    \end{align*}
    Doing the same for $\psi_{ij}$:
    \begin{align*}
        \frac{\partial \mathcal{L}}{\partial \psi_{ij}} &= (1-\alpha) \psi_{ij}^{\alpha} \psi_{ij}^{-\alpha} \exponentialrenyidivergence[\alpha] (P_i\Vert Q_j)^{\alpha-1}-\lambda_j^{\mu} = 0.
    \end{align*}
    This implies that 
    \begin{align*}
        \psi_{ij} = \left(\frac{1-\alpha}{\lambda_j^{\mu}}\right)^{\frac{1}{\alpha}}\phi_{ij}\exponentialrenyidivergence[\alpha](P_i\Vert Q_j)^{\frac{\alpha-1}{\alpha}}.
    \end{align*}
    Recall that $\sum_{i}\psi_{ij}=\beta_j$ so:
    \begin{align*}
        \left(\frac{1-\alpha}{\lambda_j^{\mu}}\right)^{\frac{1}{\alpha}} \sum_{l=1}^L \phi_{lj}\exponentialrenyidivergence[\alpha](P_l\Vert Q_j)^{\frac{\alpha-1}{\alpha}} = \mu_l.
    \end{align*}
    This gives the value of $\lambda_j^{\beta}$:
    \begin{align*}
        \lambda_j^{\beta} = \frac{1-\alpha}{\mu_j^\alpha}\left(\sum_{l=1}^L \phi_{lj}\exponentialrenyidivergence[\alpha](P_l\Vert Q_j)^{\frac{\alpha-1}{\alpha}}\right)^\alpha .
    \end{align*}
    Finally, by replacing,
    \begin{align*}
        \psi_{ij} = \mu_j \frac{\phi_{ij}\exponentialrenyidivergence[\alpha](P_i\Vert Q_j)^{\frac{\alpha-1}{\alpha}}}{\sum_{l=1}^L \phi_{lj}\exponentialrenyidivergence[\alpha](P_l\Vert Q_j)^{\frac{\alpha-1}{\alpha}}}.
    \end{align*}
\end{proof}

\textbf{Uniform $\psi$.}
In this section, we propose an upper bound for \Cref{eq:papini_lemma} by leveraging the optimal value of $\phi_{ij}$ from \Cref{eq:phi_optim} and injecting it inside \Cref{eq:papini_lemma}. We then set the parameters $\psi_{ij}$ to a uniform value. 
The bound that it yields is given below and we refer to it as \textit{uniform $\psi$}. 

\begin{prop}
Under the assumptions of \Cref{pp:papini_lemma}, one has
    \begin{align*}
        \exponentialrenyidivergence[\alpha](\Psi\vert\Phi)
        &\leq \sum_{i=1}^L \frac{\zeta_i^\alpha}{L^{1-\alpha}}
        \left(
            \sum_{j=1}^K \frac{\mu_j}{\exponentialrenyidivergence[\alpha](P_i \Vert Q_j)}
        \right)^{1-\alpha}.
    \end{align*}
\end{prop}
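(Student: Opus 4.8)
The plan is to chain together the two facts already established in this appendix: the variational bound of \Cref{pp:papini_lemma} (i.e.\ \Cref{eq:papini_lemma}) and the closed form for the optimal multipliers $\phi_{ij}$ in \Cref{eq:phi_optim}. Since \Cref{eq:papini_lemma} holds for \emph{every} feasible pair of families $\{\phi_{ij}\}$, $\{\psi_{ij}\}$ (with $\phi_{ij},\psi_{ij}\ge 0$, $\sum_j\phi_{ij}=\zeta_i$, $\sum_i\psi_{ij}=\mu_j$), any valid assignment produces a genuine upper bound on $\exponentialrenyidivergence[\alpha](\Psi\Vert\Phi)$. First I would fix an arbitrary feasible $\{\psi_{ij}\}$ and substitute into \Cref{eq:papini_lemma} the value of $\phi_{ij}$ given by \Cref{eq:phi_optim}, which is the minimiser of the right-hand side over $\{\phi_{ij}\}$ under the constraint $\sum_j\phi_{ij}=\zeta_i$ (this is precisely what the Lagrange-multiplier computation preceding the statement produces, so feasibility and optimality of \Cref{eq:phi_optim} can simply be cited).

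Second, I would carry out the algebraic simplification. Writing $S_i$ for the sum $\sum_{k=1}^K \psi_{ik}/\exponentialrenyidivergence[\alpha](P_i\Vert Q_k)$, the substitution $\phi_{ij} = \zeta_i\psi_{ij}\big/\big(\exponentialrenyidivergence[\alpha](P_i\Vert Q_j)\,S_i\big)$ makes the factor $\psi_{ij}^{\alpha}\psi_{ij}^{1-\alpha}$ collapse to $\psi_{ij}$, and cancels $\exponentialrenyidivergence[\alpha](P_i\Vert Q_j)^{\alpha-1}$ against part of $\exponentialrenyidivergence[\alpha](P_i\Vert Q_j)^{-\alpha}$; summing over $j$ leaves $\sum_j \psi_{ij}/\exponentialrenyidivergence[\alpha](P_i\Vert Q_j)=S_i$ in the numerator, telescoping one power of $S_i^{\alpha}$ in the denominator. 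One is left with
\begin{align*}
    \exponentialrenyidivergence[\alpha](\Psi\Vert\Phi) \le \sum_{i=1}^L \zeta_i^{\alpha}\, S_i^{\,1-\alpha}.
\end{align*}
Third, I would specialise to the uniform assignment $\psi_{ij}=\mu_j/L$, which is feasible since $\sum_{i=1}^L\psi_{ij}=\mu_j$. Then $S_i=\frac1L\sum_{j=1}^K\mu_j/\exponentialrenyidivergence[\alpha](P_i\Vert Q_j)$, hence $S_i^{\,1-\alpha}=L^{\alpha-1}\big(\sum_j\mu_j/\exponentialrenyidivergence[\alpha](P_i\Vert Q_j)\big)^{1-\alpha}$, and pulling $L^{\alpha-1}=L^{-(1-\alpha)}$ out of the sum over $i$ yields exactly the claimed inequality.

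I expect the main obstacle to be careful bookkeeping rather than any conceptual difficulty: the exponents $\alpha$, $1-\alpha$, $\alpha-1$ must be tracked precisely through the substitution, and one must be consistent about the exact form in which \Cref{pp:papini_lemma} is invoked (the proof of \Cref{pp:mix_psi} uses it with the left-hand side raised to the power $\alpha-1$, and the simplification must be performed in the same convention for the final bound to come out clean). Two minor points also deserve a line each: positivity of all quantities, so that $S_i$ is well defined and no division by zero occurs; and the degenerate case $\alpha=1$, where the right-hand side reduces to $\sum_i\zeta_i=1$ and the statement is only meaningful in the limit, so the substantive regime is $\alpha>1$ (consistent with the standing assumption $\alpha\ge1$ of this appendix).
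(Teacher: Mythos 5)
Your proposal is correct and follows essentially the same route as the paper: substitute the optimal $\phi_{ij}$ from \Cref{eq:phi_optim} into \Cref{eq:papini_lemma}, simplify to $\sum_{i}\zeta_i^\alpha\bigl(\sum_j \psi_{ij}/\exponentialrenyidivergence[\alpha](P_i\Vert Q_j)\bigr)^{1-\alpha}$, and then choose the feasible uniform assignment $\psi_{ij}=\mu_j/L$. The algebraic bookkeeping you outline matches the paper's computation exactly.
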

\begin{proof}
    Injecting \Cref{eq:phi_optim} inside \Cref{eq:papini_lemma} yields,
    \begin{align*}
        \exponentialrenyidivergence[\alpha](\Psi\vert\Phi)
        &\leq \sum_{i=1}^L\sum_{j=1}^K \zeta_i^\alpha \frac{\psi_{ij}}{\exponentialrenyidivergence[\alpha](P_i \Vert Q_j)}
        \left(
            \sum_{k=1}^K \frac{\psi_{ik}}{\exponentialrenyidivergence[\alpha](P_i \Vert Q_k)}
        \right)^{-\alpha}
        \\
        &= \sum_{i=1}^L \zeta_i^\alpha
        \left(
            \sum_{j=1}^K \frac{\psi_{ij}}{\exponentialrenyidivergence[\alpha](P_i \Vert Q_j)}
        \right)^{1-\alpha}.
    \end{align*}
    Then, recalling the constraints $\psi_{ij}\geq 0$ and $\textstyle\sum_{i=1}^L \psi_{ij}=\mu_j$, setting $\psi_{ij}=\frac{\mu_j}{L}$ satisfies these constraints and yields the result.
\end{proof}

As for previous bound, applying this result to \Cref{lem:variance_bound} yields the following result.
    
\begin{prop}[Lower bound with {uniform $\psi$}]
    For $\delta>0$, with probability at least $1-\delta$, it holds that 
    \begin{align*}
        &\mathbb{E}\left[\overline{J}_{T,\alpha,\beta} \right] 
        \\
        &\qquad \geq  \overline{J}_{T,\alpha,\beta} - 
        \sqrt{
            \frac{1-\delta}{\delta} 2  \maximumreward  }
        \\
        &\qquad
        \sqrt{
            C_\gamma(\alpha)^2 +
            \beta C_\omega(\alpha) 
            \sum_{s=T+1}^{T+\beta} \widehat{\gamma}^{2s}
            \left(
                \sum_{k=T-\alpha+1}^T\frac{\omega^{T-k}}{ d_2(\nu_{\boldsymbol\rho}(\cdot\vert s)\Vert \nu_{\boldsymbol\rho}(\cdot\vert k))}
            \right)^{-1}.
        }
    \end{align*}
\end{prop}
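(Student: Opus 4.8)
The plan is to mirror the proof of \Cref{th:bound_surrogate_objective} almost verbatim, changing only the last step in which the exponential $2$-\Renyi divergence between the two mixtures of hyper-policies is bounded: instead of the variational bound of \Cref{pp:divergence_bound}, I would use the ``uniform $\psi$'' bound derived immediately above it. First I would apply Cantelli's inequality to $\overline{J}_{T,\alpha,\beta}(\boldsymbol\rho)$ exactly as in \Cref{th:bound_surrogate_objective}: for $\lambda>0$ this gives $\probability\big(\overline{J}_{T,\alpha,\beta}(\boldsymbol\rho) - \mathbb{E}[\overline{J}_{T,\alpha,\beta}(\boldsymbol\rho)] \ge \lambda\big) \le \big(1 + \lambda^2/\mathbb{V}\mathrm{ar}[\overline{J}_{T,\alpha,\beta}(\boldsymbol\rho)]\big)^{-1}$; setting $\delta$ equal to the right-hand side and passing to the complementary event yields, with probability at least $1-\delta$, the inequality $\mathbb{E}[\overline{J}_{T,\alpha,\beta}(\boldsymbol\rho)] \ge \overline{J}_{T,\alpha,\beta}(\boldsymbol\rho) - \sqrt{\frac{1-\delta}{\delta}\,\mathbb{V}\mathrm{ar}[\overline{J}_{T,\alpha,\beta}(\boldsymbol\rho)]}$.

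Then I would control the variance. Plugging in \Cref{lem:variance_bound} bounds $\mathbb{V}\mathrm{ar}[\overline{J}_{T,\alpha,\beta}(\boldsymbol\rho)]$ by $2R_{\max}^2$ times $C_\gamma(\alpha)^2 + C_\gamma(\beta)^2\,\exponentialrenyidivergence\big(\sum_{s=T+1}^{T+\beta}\frac{\widehat{\gamma}^s}{C_\gamma(\beta)}\nu_{\boldsymbol\rho}(\cdot\vert s)\,\big\Vert\,\sum_{t=T-\alpha+1}^{T}\frac{\omega^{T-t}}{C_\omega(\alpha)}\nu_{\boldsymbol\rho}(\cdot\vert t)\big)$. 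I would then invoke the ``uniform $\psi$'' bound with \Renyi order $2$, identifying $\zeta_s = \widehat{\gamma}^s/C_\gamma(\beta)$, $\mu_t = \omega^{T-t}/C_\omega(\alpha)$, $P_s = \nu_{\boldsymbol\rho}(\cdot\vert s)$, $Q_t = \nu_{\boldsymbol\rho}(\cdot\vert t)$, and $L = \beta$ (the target mixture has $\beta$ components); the weights are genuine probability vectors precisely because $C_\gamma(\beta)$ and $C_\omega(\alpha)$ are their normalisers. This gives $\exponentialrenyidivergence(\Psi\Vert\Phi) \le \beta \sum_{s=T+1}^{T+\beta} \zeta_s^2 \big(\sum_{t=T-\alpha+1}^{T} \mu_t/\exponentialrenyidivergence(\nu_{\boldsymbol\rho}(\cdot\vert s)\Vert \nu_{\boldsymbol\rho}(\cdot\vert t))\big)^{-1}$; multiplying by $C_\gamma(\beta)^2$ cancels the $C_\gamma(\beta)^2$ hidden in $\zeta_s^2$ and pulls the $C_\omega(\alpha)$ out of the harmonic-mean term, leaving exactly $\beta\,C_\omega(\alpha) \sum_{s=T+1}^{T+\beta} \widehat{\gamma}^{2s}\big(\sum_{t=T-\alpha+1}^{T} \omega^{T-t}/\exponentialrenyidivergence(\nu_{\boldsymbol\rho}(\cdot\vert s)\Vert \nu_{\boldsymbol\rho}(\cdot\vert t))\big)^{-1}$. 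Substituting this into the variance bound, and that in turn into the Cantelli inequality of the previous paragraph, produces the stated lower bound.

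The argument is a pure substitution of one divergence bound for another, so I anticipate no genuine obstacle; the ``hard part'' is merely routine bookkeeping. The points requiring care are: verifying that the ``uniform $\psi$'' choice $\psi_{ij} = \mu_j/L$ is feasible (non-negative and summing to $\mu_j$); checking that $C_\gamma(\beta) = \sum_{s=T+1}^{T+\beta}\widehat{\gamma}^s$ and $C_\omega(\alpha) = \sum_{t=T-\alpha+1}^{T}\omega^{T-t}$ are indeed the normalising constants of the two mixtures (so the ``uniform $\psi$'' proposition applies); and tracking the powers of $C_\gamma(\beta)$, $C_\omega(\alpha)$, $\widehat{\gamma}^s$ and $\omega^{T-t}$ through the cancellation. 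A likely source of slips, rather than a real difficulty, is the notational clash between the ``steps-behind'' parameter $\alpha$ and the \Renyi order $2$, which must be kept fixed throughout.
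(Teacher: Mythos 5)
Your proposal is correct and follows exactly the route the paper intends: Cantelli's inequality as in \Cref{th:bound_surrogate_objective}, the variance bound of \Cref{lem:variance_bound}, and then the ``uniform $\psi$'' divergence bound with $\alpha=2$, $\zeta_s=\widehat{\gamma}^s/C_\gamma(\beta)$, $\mu_t=\omega^{T-t}/C_\omega(\alpha)$, $L=\beta$, with the bookkeeping of the normalising constants carried out correctly (the paper itself only says ``applying this result to \Cref{lem:variance_bound} yields the result''). The only discrepancy, the factor $2\maximumreward$ versus $2\maximumreward^2$ under the square root, is a notational wobble already present in the paper's own statements and not a flaw in your argument.
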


\textbf{Uniform $\phi$.}

Similarly as for the previous upper bound, we leverage the optimal value of $\psi_{ij}$ from \Cref{eq:phi_optim} and inject it into \Cref{eq:papini_lemma}. We then set the parameters $\phi_{ij}$ to a uniform value. 
The bound that it yields is given below and we refer to it as \textit{uniform $\phi$}. 

\begin{prop}
Under the assumptions of \Cref{pp:papini_lemma}, one has
    \begin{align*}
        \exponentialrenyidivergence[\alpha](\Psi\vert\Phi)
        &\leq \sum_{j=1}^K \frac{\mu_j^{1-\alpha}}{K^{\alpha}}  \left(\sum_{i=1}^L \zeta_i \exponentialrenyidivergence[\alpha](P_i \Vert Q_j)^{\frac{\alpha-1}{\alpha}}
        \right)^{\alpha}.
    \end{align*}
\end{prop}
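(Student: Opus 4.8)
The target is the symmetric counterpart of the ``uniform $\psi$'' bound, so the plan is to mirror the derivation of that proposition with the roles of $\{\phi_{ij}\}$ and $\{\psi_{ij}\}$ interchanged. First I would start from \Cref{eq:papini_lemma} of \Cref{pp:papini_lemma}, which gives
\begin{align*}
    \exponentialrenyidivergence[\alpha](\Psi\Vert\Phi)^{\alpha-1} \le \sum_{i=1}^L\sum_{j=1}^K \phi_{ij}^{\alpha} \psi_{ij}^{1-\alpha} \exponentialrenyidivergence[\alpha](P_i\Vert Q_j)^{\alpha-1}.
\end{align*}
Rather than optimising over $\{\phi_{ij}\}$, I would substitute the \emph{optimal} value of $\{\psi_{ij}\}$ expressed as a function of $\{\phi_{ij}\}$, namely \Cref{eq:psi_optim}:
\begin{align*}
    \psi_{ij} = \mu_{j} \frac{\phi_{ij}\exponentialrenyidivergence[\alpha](P_i\Vert Q_j)^{\frac{\alpha-1}{\alpha}}}{\sum_{l=1}^L \phi_{lj}\exponentialrenyidivergence[\alpha](P_l\Vert Q_j)^{\frac{\alpha-1}{\alpha}}}.
\end{align*}

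Second, I would carry out the algebraic simplification. Plugging this $\psi_{ij}$ into the right-hand side, the factor $\psi_{ij}^{1-\alpha}$ contributes $\mu_j^{1-\alpha}\,\phi_{ij}^{1-\alpha}\,\exponentialrenyidivergence[\alpha](P_i\Vert Q_j)^{-\frac{(\alpha-1)^2}{\alpha}}\big(\sum_l \phi_{lj}\exponentialrenyidivergence[\alpha](P_l\Vert Q_j)^{\frac{\alpha-1}{\alpha}}\big)^{\alpha-1}$. Combining the $\phi_{ij}$ powers gives $\phi_{ij}^{\alpha}\phi_{ij}^{1-\alpha}=\phi_{ij}$, and combining the $\exponentialrenyidivergence$ exponents gives $(\alpha-1)-\frac{(\alpha-1)^2}{\alpha}=\frac{\alpha-1}{\alpha}$. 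Thus the inner double sum collapses, after summing over $i$ of the $\phi_{ij}\exponentialrenyidivergence[\alpha](P_i\Vert Q_j)^{\frac{\alpha-1}{\alpha}}$ factor, to $\sum_{j=1}^K \mu_j^{1-\alpha}\big(\sum_{i=1}^L \phi_{ij}\exponentialrenyidivergence[\alpha](P_i\Vert Q_j)^{\frac{\alpha-1}{\alpha}}\big)^{\alpha}$; this is exactly the analogue of \Cref{th:mix_phi}/\Cref{pp:mix_psi} applied pointwise in $j$. Taking the $\frac{1}{\alpha-1}$ power of both sides then leaves $\exponentialrenyidivergence[\alpha](\Psi\Vert\Phi)$ bounded by $\big(\sum_{j=1}^K \mu_j^{1-\alpha}(\sum_i \phi_{ij}\exponentialrenyidivergence[\alpha](P_i\Vert Q_j)^{\frac{\alpha-1}{\alpha}})^{\alpha}\big)^{\frac{1}{\alpha-1}}$, still a function of the free parameters $\{\phi_{ij}\}$.

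Third, I would impose the uniform choice $\phi_{ij}=\zeta_i/K$, which satisfies the constraints $\phi_{ij}\ge 0$ and $\sum_{j=1}^K \phi_{ij}=\zeta_i$. Substituting, $\sum_i \phi_{ij}\exponentialrenyidivergence[\alpha](P_i\Vert Q_j)^{\frac{\alpha-1}{\alpha}} = \frac1K\sum_i \zeta_i \exponentialrenyidivergence[\alpha](P_i\Vert Q_j)^{\frac{\alpha-1}{\alpha}}$, so the bracket becomes $\sum_{j=1}^K \mu_j^{1-\alpha} K^{-\alpha}\big(\sum_i \zeta_i \exponentialrenyidivergence[\alpha](P_i\Vert Q_j)^{\frac{\alpha-1}{\alpha}}\big)^{\alpha}$, matching the claimed expression (modulo the harmless stylistic choice of stating the bound on $\exponentialrenyidivergence[\alpha]$ directly rather than on its $(\alpha-1)$-power, as is done in the sibling propositions).

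The only genuinely delicate point I anticipate is the bookkeeping in the second step: tracking the exponents of $\phi_{ij}$ and of the inner $\exponentialrenyidivergence[\alpha](P_i\Vert Q_j)$ factors through the substitution, and checking that the normalising denominator $\sum_l \phi_{lj}\exponentialrenyidivergence[\alpha](P_l\Vert Q_j)^{\frac{\alpha-1}{\alpha}}$ raised to the power $\alpha-1$ recombines cleanly with the $\sum_i$ in the numerator to produce the power-$\alpha$ form. Since \Cref{eq:psi_optim} was derived precisely as the Lagrangian optimum, this recombination is guaranteed to work, but it is the step where an arithmetic slip is most likely; everything else is a direct transcription of the ``uniform $\psi$'' argument with $\phi\leftrightarrow\psi$, $\zeta\leftrightarrow\mu$, $L\leftrightarrow K$, and $i\leftrightarrow j$ swapped.
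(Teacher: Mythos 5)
Your proposal is correct and follows essentially the same route as the paper's proof: inject the optimal $\psi_{ij}$ of \Cref{eq:psi_optim} into \Cref{eq:papini_lemma}, simplify the double sum to $\sum_{j=1}^K \mu_j^{1-\alpha}\bigl(\sum_{i=1}^L\phi_{ij}\exponentialrenyidivergence[\alpha](P_i\Vert Q_j)^{\frac{\alpha-1}{\alpha}}\bigr)^{\alpha}$, and then choose $\phi_{ij}=\zeta_i/K$ (the paper's citation of \Cref{eq:phi_optim} at this step is evidently a typo for \Cref{eq:psi_optim}, which is what you correctly use). The only difference is that you start from the version of the variational lemma that bounds $\exponentialrenyidivergence[\alpha](\Psi\Vert\Phi)^{\alpha-1}$, which leaves an outer $\tfrac{1}{\alpha-1}$ power in your final bound, whereas the paper (consistent with its statement of \Cref{pp:papini_lemma}) bounds $\exponentialrenyidivergence[\alpha](\Psi\Vert\Phi)$ directly; this discrepancy is immaterial for the $\alpha=2$ case actually used in the paper.
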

\begin{proof}
    Injecting \Cref{eq:phi_optim} inside \Cref{eq:papini_lemma} yields,
    \begin{align*}
        \exponentialrenyidivergence[\alpha](\Psi\vert\Phi)
        &\leq \sum_{i=1}^L\sum_{j=1}^K \mu_j^{1-\alpha} \frac{\phi_{ij}\exponentialrenyidivergence[\alpha](P_i \Vert Q_j)^{\frac{\alpha-1}{\alpha}}}{\left(\sum_{l=1}^L \phi_{lj}\exponentialrenyidivergence[\alpha](P_l \Vert Q_j)^{\frac{\alpha-1}{\alpha}}\right)^{1-\alpha}}
        \\
        &= \sum_{j=1}^K \mu_j^{1-\alpha} \left(\sum_{i=1}^L \phi_{ij}\exponentialrenyidivergence[\alpha](P_i \Vert Q_j)^{\frac{\alpha-1}{\alpha}}
        \right)^{\alpha}.
    \end{align*}
    Then, recalling the constraints $\phi_{ij}\geq 0$ and $\textstyle\sum_{j=1}^K \phi_{ij}=\zeta_i$, setting $\textstyle\phi_{ij}=\frac{\zeta_i}{K}$ satisfies these constraints and yields the result.
\end{proof}
    
As for previous bound, applying this result to \Cref{lem:variance_bound} yields the following result.
    
\begin{prop}[Lower bound with {uniform $\phi$}]
    For $\delta>0$, with probability at least $1-\delta$, it holds that 
    \begin{align*}
        \mathbb{E}\left[\overline{J}_{T,\alpha,\beta} \right] 
        &\geq  \overline{J}_{T,\alpha,\beta} - 
        \sqrt{
            \frac{1-\delta}{\delta} 2 \maximumreward  }
        \\
        &\sqrt{
            C_\gamma(\alpha)^2 +
            \frac{C_\omega(\alpha)}{\alpha^2}
            \sum_{k=T-\alpha+1}^T \frac{1}{\omega^{T-k}} \left(\sum_{s=T+1}^{T+\beta}\widehat{\gamma}^s d_2(\nu_{\boldsymbol\rho}(\cdot\vert s)\Vert \nu_{\boldsymbol\rho}(\cdot\vert k))^{\frac{1}{2}}
            \right)^{2}
        }.
    \end{align*}
\end{prop}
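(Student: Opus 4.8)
The plan is to follow verbatim the argument used in the proof of \Cref{th:bound_surrogate_objective} and in the two preceding ``Lower bound with \dots'' propositions, changing only the variational estimate applied to the exponential $2$-\Renyi divergence between the two mixtures of hyper-policies.

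First I would apply Cantelli's inequality to the random variable $\overline{J}_{T,\alpha,\beta}(\boldsymbol\rho)$, exactly as in the proof of \Cref{th:bound_surrogate_objective}: passing to the complementary event under the usual reparametrisation by $\delta$ yields, with probability at least $1-\delta$,
\begin{align*}
    \mathbb{E}^{\boldsymbol\rho}_{T,\alpha}\left[\overline{J}_{T,\alpha,\beta}(\boldsymbol\rho)\right] \ge \overline{J}_{T,\alpha,\beta}(\boldsymbol\rho) - \sqrt{\frac{1-\delta}{\delta}\,\mathbb{V}\mathrm{ar}^{\boldsymbol\rho}_{T,\alpha}\left[\overline{J}_{T,\alpha,\beta}(\boldsymbol\rho)\right]}.
\end{align*}
Next I would substitute the variance bound of \Cref{lem:variance_bound}, which reduces the problem to controlling the quantity $\exponentialrenyidivergence\bigl(\sum_{s=T+1}^{T+\beta}\tfrac{\widehat{\gamma}^s}{C_\gamma(\beta)}\nu_{\boldsymbol\rho}(\cdot\vert s)\,\big\Vert\,\sum_{t=T-\alpha+1}^{T}\tfrac{\omega^{T-t}}{C_\omega(\alpha)}\nu_{\boldsymbol\rho}(\cdot\vert t)\bigr)$.

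At this point I would invoke the \emph{uniform $\phi$} bound on the exponential $2$-\Renyi divergence between mixtures (the unlabelled proposition immediately preceding this statement), applied with target mixture $\Psi=\sum_{s=T+1}^{T+\beta}\tfrac{\widehat{\gamma}^s}{C_\gamma(\beta)}\nu_{\boldsymbol\rho}(\cdot\vert s)$, i.e.\ taking $L=\beta$, component weights $\zeta_s=\widehat{\gamma}^s/C_\gamma(\beta)$ and components $\nu_{\boldsymbol\rho}(\cdot\vert s)$, and with behavioural mixture $\Phi=\sum_{t=T-\alpha+1}^{T}\tfrac{\omega^{T-t}}{C_\omega(\alpha)}\nu_{\boldsymbol\rho}(\cdot\vert t)$, i.e.\ $K=\alpha$, weights $\mu_t=\omega^{T-t}/C_\omega(\alpha)$, components $\nu_{\boldsymbol\rho}(\cdot\vert t)$, and with the \Renyi order set to $2$. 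After this substitution the prefactor $C_\gamma(\beta)^2$ that \Cref{lem:variance_bound} carries cancels the $\zeta_s^2=\widehat{\gamma}^{2s}/C_\gamma(\beta)^2$ produced inside the square, the $\mu_t^{-1}$ term produces the $C_\omega(\alpha)/\omega^{T-k}$ factor, and $K^{-2}=\alpha^{-2}$ comes out front; collecting these and plugging the resulting variance estimate back into the Cantelli bound, then factoring $\sqrt{\tfrac{1-\delta}{\delta}\,2R_{\max}^2}$ out of the square root as in \Cref{th:bound_surrogate_objective}, produces the claimed inequality.

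There is no genuine mathematical obstacle: this is a mechanical specialisation, and the only difficulty is bookkeeping. The point most prone to error is the clash of names — the ``uniform $\phi$'' proposition uses $\alpha$ for the \Renyi order and $K$ for the number of behavioural components, whereas here $\alpha$ denotes the number of ``steps behind'' — so one must carefully commit to the identifications ``\Renyi order $\mapsto 2$, $K\mapsto\alpha$, $L\mapsto\beta$'' and keep $d_2$ reserved for the order-$2$ divergence. Tracking the cancellation $C_\gamma(\beta)^2\cdot C_\gamma(\beta)^{-2}=1$ and the reindexing of the two sums over $[\![T+1,T+\beta]\!]$ and $[\![T-\alpha+1,T]\!]$ are the other places where care is needed; everything else is routine.
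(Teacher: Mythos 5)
Your proposal is correct and is exactly the route the paper intends: Cantelli's inequality as in \Cref{th:bound_surrogate_objective}, the variance bound of \Cref{lem:variance_bound}, and then the \emph{uniform $\phi$} divergence bound specialised with \Renyi order $2$, $L\mapsto\beta$, $K\mapsto\alpha$, $\zeta_s=\widehat{\gamma}^s/C_\gamma(\beta)$ and $\mu_k=\omega^{T-k}/C_\omega(\alpha)$, after which $C_\gamma(\beta)^2$ cancels against the $1/C_\gamma(\beta)^2$ factored out of the squared inner sum (not against per-term $\zeta_s^2$ as you phrase it, but the bookkeeping outcome is the same). No gap to report.
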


\subsubsection{Comparison of the Bounds}
We have derived 6 bounds for $\overline{J}_{T,\alpha,\beta}(\boldsymbol\rho)$ with different approaches.  
In this section, we explore which result provides the tightest bound and will therefore be used in practice. 
Recall that the objective of the bound is to regularise the policy to avoid extra non-stationarity. 
Therefore, it is expected of the candidate bound that its optimisation would yield a stationary distribution. 

With this in mind, we design the following test.
We consider the following sinusoidal hyper-policy:
\begin{align*}
    \nu(t) = \theta_t = A \sin(\phi t + \psi) + B.
\end{align*}
This simple hyper-policy offers the advantage of clearly measuring the non-stationarity through the scale parameter $A$.
Therefore, as mentioned previously, optimising our candidate lower bound should drive $A$ to 0.
In this test, we optimise only for the variance bound, and the environment is irrelevant. 
For completeness, however, we report that the environment is a contextual bandit, where the context follows a sinusoidal function as well.

From \Cref{fig:bandit_variational_bound} we see that the most efficient methods when it comes to making the hyper-policy stationary, and thus push $A$ toward 0, are {uniform $\Psi$}, {two steps $\Psi$ first}, \textit{two steps $\Phi$ first} and the \textit{direct optimization with reset}.

From the results reported in \Cref{fig:bandit_variational_bound} one can see that the upper-bound (right plot) is coarser for approaches based on convex optimisation. 
A second observation is that the upper-bound is tighter for {uniform $\Psi$} and {two steps $\Psi$ first} and also offers a faster convergence of the parameter $A$ toward 0. 
These bounds have similarities but ones needs to make a choice to provide a surrogate objective.
We chose {two steps $\Psi$ first} as we believe that it may be more versatile.
Indeed, using a uniform distribution to set some of the variational parameters as is done for uniform $\Psi$ seems to be not robust to any scenario.

\begin{figure}[t]
    \centering
    \includegraphics[labellifelong=variationalbound, width=\linewidth]{img/thesis_plots_chapter_lifelong.pdf}
    \caption{\textbf{Left:} Evolution of the scale parameter for several approaches on the upper-bounds of the variance. Note that the value of A for the \textit{two steps $\Psi$ first} and \textit{uniform $\Psi$} are confounded.\\
    \textbf{Right:} Evolution of the variational upper-bound on the variance for several approaches. Here again, the log upper-bound for the \textit{two steps $\Psi$ first} and \textit{uniform $\Psi$} are confounded.}
    ``CO'' stands for convex optimisation.
    \label{fig:bandit_variational_bound}
\end{figure}

\subsection{Experimental Details}

\subsubsection{Datasets}
\label{subsubsec:datasets_lifelong}
Here, more details are given on the processes involved in the Trading and the Dam tasks. 
For the Dam, the three mean inflows are represented in \Cref{fig:dam_inflows}.
We also provide the weights to compute the total cost in this environment.
For flooding and not meeting demand, the costs for the first inflow profile are respectively 0.3 and 0.7; for the second inflow profile, 0.8 and 0.2; for the third inflow profile, 0.35 and 0.65.
For the Trading, the historical value of the EUR-USD pair is given in \Cref{fig:eurusd_dataset}.

\begin{figure*}[t]
  \centering
  \begin{subfigure}{.29\textwidth}
  \centering
  \includegraphics[labellifelong=inflowdam,width=1.3\linewidth]{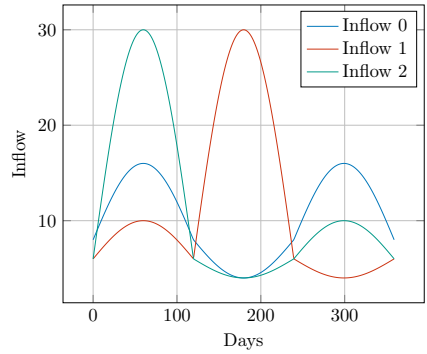}
    \end{subfigure}%
  \caption{The three water inflow profiles considered for the Dam experiment.}
  \label{fig:dam_inflows}
\end{figure*}

\begin{figure*}[t]
\centering
\begin{subfigure}{.29\textwidth}
  \centering
  \includegraphics[labellifelong=forexseed0,width=1.3\linewidth]{img/thesis_plots_chapter_lifelong.pdf}
          \caption*{Dataset 2009-2012.}
\end{subfigure}%
\hfill
\begin{subfigure}{.29\textwidth}
  \centering
  \includegraphics[labellifelong=forexseed1,width=1.3\linewidth]{img/thesis_plots_chapter_lifelong.pdf}
          \caption*{Dataset 2013-2016.}
\end{subfigure}%
\hfill
\begin{subfigure}{.29\textwidth}
  \centering
  \includegraphics[labellifelong=forexseed2,width=1.3\linewidth]{img/thesis_plots_chapter_lifelong.pdf}
    \caption*{Dataset 2017-2020.}
\end{subfigure}%
\captionof{figure}{Historical exchange rates of the EUR-USD pair for the period 2009-2020 and divided into three datasets.}\label{fig:eurusd_dataset}
\end{figure*}

\cleardoublepage
\printglossaries 

\cleardoublepage
\listoffigures
\listoftables


\cleardoublepage
\phantomsection
\addcontentsline{toc}{chapter}{\bibname}
\small
\bibliographystyle{alpha}
\bibliography{bibliography}

\end{document}